%% LyX 2.3.5.2 created this file.  For more info, see http://www.lyx.org/.
%% Do not edit unless you really know what you are doing.
\documentclass[oneside,english,british,12pt,a4]{book}
\usepackage{helvet}

\usepackage[T1]{fontenc}
\setcounter{secnumdepth}{3}
\setcounter{tocdepth}{3}
\usepackage{array}
\usepackage{verbatim}
\usepackage{amssymb}
\usepackage{graphicx}
\usepackage[authoryear]{natbib}
\usepackage[section]{placeins}

% \usepackage{lettrine}
% \usepackage{algorithm,algorithmic}
% \usepackage{adjustbox}
% \usepackage[noend,linesnumbered,ruled,vlined]{algorithm2e}
% \usepackage[linesnumbered,ruled]{algorithm2e}
% \usepackage{algorithm}

% \usepackage[linesnumbered,ruled]{algorithm2e}
% \newcommand\mycommfont[1]{\ttfamily\textcolor{gray}{#1}}
% \SetCommentSty{mycommfont}
% \SetKwInput{KwInput}{Input}                % Set the Input
% \SetKwInput{KwOutput}{Output}              % set the Output
% \SetKwInput{KwInitialization}{Initialization}

\usepackage{amsmath}
\newcommand{\argmax}{\operatorname*{arg\,max}}
\usepackage{cases}

\usepackage{amsfonts,amssymb,amsthm,mathtools}
\allowdisplaybreaks
\DeclareMathOperator*{\argmin}{arg\,min}
\DeclareMathOperator*{\arginf}{arg\,inf}

% \operatorname*{arg\,max}_{x}
% \DeclareMathOperator*{\argmax}{arg\,max} % thin space, limits on side in displays

\DeclarePairedDelimiter\floor{\lfloor}{\rfloor}

\makeatletter

\theoremstyle{definition}
\newtheorem{defn}{Definition}[section]

\newtheorem{assumption}{Assumption}[section]
\newtheorem{thm}{Theorem}[section]
\newtheorem{prop}{Proposition}[section]
\newtheorem{lem}{Lemma}[section]

\theoremstyle{remark}
\newtheorem{rem}{Remark}
\newtheorem{eg}{Example}

\newtheorem*{rem*}{Remark}
\usepackage{longtable}
\usepackage{subcaption}
\usepackage{adjustbox}
\usepackage{tabularx}
%%%%%%%%%%%%%%%%%%%%%%%%%%%%%% LyX specific LaTeX commands.
%% Because html converters don't know tabularnewline
\providecommand{\tabularnewline}{\\}

%%%%%%%%%%%%%%%%%%%%%%%%%%%%%% Textclass specific LaTeX commands.

%%%%%%%%%%%%%%%%%%%%%%%%%%%%%% User specified LaTeX commands.
\usepackage{textcomp}
\usepackage{setspace}
% \usepackage[export]{adjustbox}

% general setting for curtin thesis format
\voffset 0pt

\oddsidemargin 3.93cm \addtolength{\oddsidemargin}{-1in}
\evensidemargin 2.06cm \addtolength{\evensidemargin}{-1in}
\topmargin 2.50cm \addtolength{\topmargin}{-1.2in}
\headsep 0.63cm 
\footskip 0.63cm 
\addtolength{\footskip}{12pt}
\marginparsep 0.35cm \marginparpush 0.25cm \marginparwidth 2.39cm
\footskip .4in
\parindent 0pt
 \addtolength{\parskip}{\baselineskip}

\textwidth 15cm
\textheight 22.5cm

% this is to decide the depth of toc
\setcounter{secnumdepth}{3}
\setcounter{tocdepth}{3}

% extra packages
\usepackage{graphicx}
\usepackage{latexsym}
\usepackage{colortbl}
\usepackage{algolyx}
\usepackage{cancel}

\usepackage{helvet}
\usepackage{flushend}

% this is for multirow in the table
\usepackage{multirow}

% this is for fancy header
\usepackage{fancyhdr}
\lhead{\nouppercase{\textrm{\small \rightmark}}}
\chead{}
\rhead{\thepage}
\lfoot{}
\cfoot{}
\rfoot{}

\usepackage{colortbl}
\definecolor{cobalt}{rgb}{0.0, 0.28, 0.67}
\definecolor{carmine}{rgb}{0.59, 0.0, 0.09}
% this is for hyper links in pdf files
%\usepackage{hyperref}

% \usepackage[colorlinks=true, urlcolor=carmine, citecolor=cobalt, linkcolor=carmine, pdfborder={0 0 0}]{hyperref}
\usepackage[colorlinks=true,linkcolor=blue,citecolor=blue]{hyperref}

% for natbib
%\usepackage{natbib}
\usepackage{lmodern}
\setcitestyle{round}

\usepackage{tikz}

% added by Shiv
%for quotes above chapter heading
\usepackage{epigraph,varwidth}
\setlength\epigraphwidth{.8\textwidth}

%drop caps
\usepackage{type1cm}
\usepackage{lettrine}
\definecolor{mygray}{gray}{0.6}

\makeatother

\usepackage{babel}
\begin{document}
\selectlanguage{english}%
\begin{comment}
Insert your macros in here. Some examples are listed here
\end{comment}

\global\long\def\tr{\mathrm{tr}}%

\global\long\def\realset{\mathbb{R}}%

\global\long\def\E{\mathbb{E}}%

\global\long\def\H{\mathcal{H}}%

\global\long\def\realn{\real^{n}}%

\global\long\def\natset{\integerset}%

\global\long\def\interger{\integerset}%

\global\long\def\integerset{\mathbb{Z}}%

\global\long\def\natn{\natset^{n}}%

\global\long\def\partf{\mathcal{Z}}%

\global\long\def\rational{\mathbb{Q}}%

\global\long\def\realPlusn{\mathbb{R_{+}^{n}}}%

\global\long\def\comp{\complexset}%
 
\global\long\def\complexset{\mathbb{C}}%

\global\long\def\dataset{\mathcal{D}}%

\global\long\def\class{\mathcal{C}}%

\global\long\def\likelihood{\mathcal{L}}%

\global\long\def\normal{\mathcal{N}}%

\global\long\def\argmax#1{\underset{#1}{\text{argmax}}}%

\global\long\def\bphi{\boldsymbol{\phi}}%

\global\long\def\btheta{\boldsymbol{\theta}}%

\global\long\def\bx{\boldsymbol{x}}%

\global\long\def\by{\boldsymbol{y}}%

\global\long\def\bt{\boldsymbol{t}}%

\global\long\def\bf{\boldsymbol{f}}%

\global\long\def\bX{\boldsymbol{X}}%

\global\long\def\bY{\boldsymbol{Y}}%

\global\long\def\bp{\boldsymbol{p}}%

\global\long\def\bP{\boldsymbol{P}}%

\global\long\def\bh{\boldsymbol{h}}%

\global\long\def\bH{\boldsymbol{H}}%

\global\long\def\bv{\boldsymbol{v}}%

\global\long\def\bV{\boldsymbol{V}}%

\global\long\def\btv{\tilde{\boldsymbol{v}}}%

\global\long\def\tv{\tilde{v}}%

\global\long\def\tq{\tilde{q}}%

\global\long\def\baq{\bar{q}}%

\global\long\def\ba{\boldsymbol{a}}%

\global\long\def\bta{\tilde{\boldsymbol{a}}}%

\global\long\def\ta{\tilde{a}}%

\global\long\def\bA{\boldsymbol{A}}%

\global\long\def\bb{\boldsymbol{b}}%

\global\long\def\bB{\boldsymbol{B}}%

\global\long\def\bD{\boldsymbol{D}}%

\global\long\def\ble{\boldsymbol{e}}%

\global\long\def\bE{\boldsymbol{E}}%

\global\long\def\bI{\boldsymbol{I}}%

\global\long\def\bS{\boldsymbol{S}}%

\global\long\def\bw{\boldsymbol{w}}%

\global\long\def\bW{\boldsymbol{W}}%

\global\long\def\bL{\boldsymbol{L}}%

\global\long\def\btW{\tilde{\boldsymbol{W}}}%

\global\long\def\tW{\tilde{W}}%

\global\long\def\bz{\boldsymbol{z}}%

\global\long\def\bu{\boldsymbol{u}}%

\global\long\def\bI{\boldsymbol{I}}%

\global\long\def\bmu{\boldsymbol{\mu}}%

\global\long\def\tmu{\tilde{\mu}}%

\global\long\def\btmu{\tilde{\bmu}}%

\global\long\def\bamu{\bar{\mu}}%

\global\long\def\bbmu{\bar{\bmu}}%

\global\long\def\tlamda{\tilde{\lambda}}%

\global\long\def\balamda{\bar{\lambda}}%

\global\long\def\balpha{\boldsymbol{\alpha}}%

\global\long\def\bbeta{\boldsymbol{\beta}}%

\global\long\def\bSigma{\boldsymbol{\Sigma}}%

\global\long\def\grad{\bigtriangleup}%

\global\long\def\cov{\textnormal{cov}}%

\selectlanguage{british}%
\global\long\def\xb{\boldsymbol{x}}%
\global\long\def\wb{\boldsymbol{w}}%
\global\long\def\ub{\boldsymbol{u}}%
\global\long\def\hb{\boldsymbol{h}}%
\global\long\def\bb{\boldsymbol{b}}%
\global\long\def\zb{\boldsymbol{z}}%

\selectlanguage{english}%
\pagenumbering{roman} 
\setcounter{page}{1}
\pagestyle{empty}

\begin{center}
\textbf{\Large{} On Practical
Reinforcement Learning: Provable Robustness, Scalability, and Statistical Efficiency}{\Large\par}
\par\end{center}

\vspace{1cm}

\vspace{1cm}

\begin{center}
by
\par\end{center}

\vspace{-3cc}

\begin{center}
\textbf{\large{} Thanh Nguyen-Tang }{\large\par} M.S. 
\par\end{center}

\vspace{-3cc}

\begin{center}
% B.Sc
\par\end{center}

\vfill{}

\begin{center}
\textbf{Submitted in fulfilment of the requirements for the degree
of}\\
 \textbf{Doctor of Philosophy}
\par\end{center}

\vspace{2cm}

\begin{center}
\textbf{\large{}Deakin University}{\large\par}
\par\end{center}

\vspace{-3cc}

\begin{center}
\textbf{\large{}June 2021}{\large\par}
\par\end{center}\selectlanguage{british}%

\newpage{}

% \input{declaration.tex}
% \input{access_to_thesis.tex}

% \begin{figure}
%     \centering
%     \includegraphics[width=\textwidth]{}
% \end{figure}

% \begin{figure}
%     \centering
%     \includegraphics[width=\textwidth]{}
% \end{figure}

% \begin{figure}
%     \centering
%     \includegraphics[width=\textwidth]{}
% \end{figure}
% \begin{figure}
%     \centering
%     \includegraphics[width=\textwidth]{}
% \end{figure}
% \begin{figure}
%     \centering
%     \includegraphics[width=\textwidth]{}
% \end{figure}
% \begin{figure}
%     \centering
%     \includegraphics[width=\textwidth]{}
% \end{figure}

\newpage{}

\tableofcontents{}
\listoffigures
\listoftables
\listofalgorithms
% \addtocontents{loa}{\def\string\figurename{Algorithm}}

\newpage{}

\pagestyle{empty}

\chapter*{Abstract}

\addcontentsline{toc}{chapter}{\numberline{}{Abstract}}

\noindent

% \lettrine[lines=2, loversize=0.15]{\color{mygray}S}{\footnotesize{}TABILITY}

A central cognitive capability for many modern artificial intelligence (AI) systems is the ability to learn to make decisions only through interactions with the surrounding environment. Reinforcement learning (RL) is one such a powerful sequential decision making framework for learning to make decisions from interactions, where the primary goal is to maximize the total return via a sequence of strategic decisions. In many real-world scenarios, it is essential for the learner to learn robust and generalizable decisions in practical settings such as the presence of adversaries, the intrinsic randomness of the total reward, and the difficulty of acquiring new data in a high-dimensional space. In this thesis, we present a novel and unifying approach to close the gap of several current RL methods in practical considerations with a chief focus on provable robustness, scalability and statistical efficiency. 

First, we study the problem of robust decision making under the presence of an uncontrollable environmental variable. We study this setting in the framework of Bayesian quadrature optimization, a special instance of RL that aims at maximizing a black-box, expensive quadrature objective function. The environmental variable is uncontrollable by the learner and can mislead the feedback of any decision the learner makes, creating spurious learning signal. The standard methods in Bayesian (quadrature) optimization become ineffective for this setting. We propose a novel solution based on Thompson sampling that provably trade-offs between the accuracy and robustness against the disturbance caused by the uncontrollable environmental variable. In particular, our method quantifies robustness via a $\chi^2$-divergence between the empirical distribution of the environmental variable and its model distribution. We prove that our proposed method efficiently achieves a robust solution in a sublinear time. Our method also outperforms the standard methods in finding a robust solution in this setting in both synthetic and real-world applications. 

Second, we study the problem of distributional learning in RL. This problem is known as distributional RL where the goal is to learn the entire distribution, rather than only the expected value of the total reward. The intrinsic randomness encoded in the total reward is crucial to design risk-sensitive agents in safety-critical applications. Under this practical consideration, while the standard RL methods entirely ignore such the intrinsic randomness, all the predominant distributional RL methods rely on learning a set of \textit{predefined statistics} of the return distribution. This conventional wisdom hinders these methods from effectively exploiting the powerful idea of distributional RL for better learning representation. We propose a novel framework based on statistical hypothesis testing that allows the pseudo-samples of the return distribution to deterministically evolve into any functional form as long as they can simulate the distribution well in terms of a distributional discrepancy known as maximum mean discrepancy. We prove the validity, convergence and stability property of our method. Our method maintains scalability, outperforms the predominant methods and even achieves a new state-of-the-art performance when extending to the large-scale Atari game environments. 

Third, we study the offline RL setting where it is expensive or even prohibited to further interact with the underlying environment but a batch of offline data from previous interactions is available. Especially, we study this under the high-dimensional setting where it is almost impossible to visit every high-dimensional states thus it is necessary to generalize from observed states to unseen ones via function approximation such as deep neural networks. Under the offline setting, the standard online RL methods fail to evaluate or learn a new policy as there is a mismatch between the distribution of the offline data and that of the data generated by the target policy. This so-called \textit{distributional shift} is further exaggerated under the high-dimensional setting. We build a new statistical theory of offline RL with deep ReLU network function approximation. Deep ReLU networks are known for their expressiveness and adaptivity in supervised learning. In offline RL, we show that such benefits retain in offline RL. Our theory provides a tight dependence of the offline RL sup-optimality on the distributional shift, input dimension, and the smoothness of the underlying environment when employing deep neural networks.

% , beyond 
% Learning to make decisions from structural data 
% Reinforcemen learning, Learning by interacting is a crucial cognitive 

% [Practical considerations in RL is important] 

% [This thesis studies this problem]

\newpage{}

\pagestyle{empty}

\chapter*{Acknowledgements}

\addcontentsline{toc}{chapter}{\numberline{}{Acknowledgements}}

I would like to thank my co-authors at A$^2$I$^2$ for the fruitful discussions and collaborations: Sunil Gupta (A/Prof. at Deakin University), Svetha Venkatesh (Prof. at Deakin University), Huong Ha (Lecturer at RMIT), Hung Tran-The (Research Fellow at Deakin University), and Santu Rana (A/Prof. at Deakin University). Especially thanks Sunil for the very detailed comments on my drafts. I also thank all of my friends and colleagues at A$^2$I$^2$ for creating a friendly and fun environment, thank the Australian Research Council and PRaDA Postgraduate Research Scholarship for funding my PhD, and thank Australia for being such a nice place for me to work on my PhD. I do not thank the global pandemic COVID-19 for ruining all my conference travels. 

I would also like to thank other researchers for the interesting discussions and the constructive feedback when I was working in individual chapters: my anonymous reviewers at AISTATS'20, AISTATS'21, NeurIPS'20, NeurIPS'21, AAAI'21, and ICML'21; Will Dabney (Deepmind); Richard Nickl (Prof. at the University of Cambridge); and my friends at the Machine Learning Summer School, Max Planck Institute for Intelligent Systems, Tübingen, Germany, 2020.

Last but not least, I would like to give my special thanks to my parents, Tan Nguyen Tang and Tung Truong Thi, and my partner Le Minh Khue Nguyen. Without their love, trust and support, this thesis would have not been possible. 

% \lettrine[lines=2, loversize=0.15]{\color{mygray}S}{\scriptsize{}HOULDERS}

\newpage{}

\pagestyle{empty}

\selectlanguage{english}%

\chapter*{Relevant Publications}

\addcontentsline{toc}{chapter}{\numberline{}{Relevant Publications}}

Parts of this thesis have been written based on the following manuscripts: 

\textbf{Chapter \ref{chap:three}}
\begin{itemize}
    \item \textbf{T. Nguyen}, S. Gupta, H. Ha, S. Rana, and S. Venkatesh. \textit{Distributionally Robust Bayesian Quadrature Optimization}. Proceedings of the 23rd International Conference on Artificial Intelligence and Statistics (AISTATS), Palermo, Italy, 2020.
\end{itemize}

\textbf{Chapter \ref{chap:four}}
\begin{itemize}
    \item \textbf{T. Nguyen-Tang}, S. Gupta, and S. Venkatesh. \textit{Distributional Reinforcement Learning via Moment Matching}. Proceedings of the 35th AAAI Conference on Artificial Intelligence (AAAI), Vancouver, Canada, Feb. 2-9, 2021.
\end{itemize}

\textbf{Chapter \ref{chap:five}} 
\begin{itemize}
    \item \textbf{T. Nguyen-Tang}, S. Gupta, H. Tran-The, and S. Venkatesh. \textit{Sample Complexity of Offline Reinforcement Learning with Deep ReLU Networks}, Workshop on Reinforcement Learning Theory, ICML 2021.
    
    \item \textbf{T. Nguyen-Tang}, S. Gupta, A.Tuan Nguyen, and S. Venkatesh. \textit{Offline Neural Contextual Bandits: Pessimism, Optimization and Generalization}, ICLR 2022.
\end{itemize}

\newpage{}

\pagestyle{empty}
\section*{Notations} 
\addcontentsline{toc}{chapter}{\numberline{}{Notations}}\medskip{}

\begin{longtable}{l | l } 
% \caption{A list of notations used in this thesis.}
    % \centering
    % \begin{tabular}{l|l}
    \textbf{Notation}   & \textbf{Meaning}  \\
    \hline 
    \hline 
    $\Delta_N$ & the $N$-dimensional simplex \\  
    $\mathcal{P}(\mathcal{X})$ & the set of (Borel) probability measures supported on domain $\mathcal{X}$ \\ 
    $dom(X)$ & the support of the random variable $X$ \\
    $\delta_x$&  the Dirac distribution concentrated at $x$\\  
    $F_{P}$&  the cumulative distribution function of distribution $P$\\   
    $F^{-1}_{P}$& the inverse cumulative distribution function of distribution $P$\\  
    $KL[P \| Q]$& the KL divergence between distribution $P$ and $Q$\\  
    $W_p(P, Q)$& the $p$-Wasserstein metric between distribution $P$ and $Q$\\  
    $1\{A\}$&  an indicator function, i.e., $1\{A\} = 1$ if $A$ occurs and $1\{A\}= 0$ o.w.\\  
    $f_{\#} \mu$& the pushforward measure of measure $\mu$ by function $f$\\  
    $\| \cdot \|_{\infty}$& the $\infty$-norm\\ 
    $x \sim P$&  $x$ is sampled according to distribution $P$\\  
    $\{x_i\}_{i=1}^n \overset{i.i.d.}{\sim} P$&  $\{x_i\}_{i=1}^n$ are i.i.d. samples of $P$\\ 
    $\mathbb{E}[\cdot]$& expectation\\  
    $\mathbb{V}[\cdot]$& variance\\  
    $\tilde{O}(\cdot)$& the big-O notation that hides any log factors\\
    $\lambda_{\max}(A)$& the largest eigenvalue of matrix $A$ \\ 
    $\sigma_{\max}(A)$ &the largest singular value of matrix $A$ \\
    $tr(A)$ & the trace of matrix $A$ \\
    $det(A)$ & the determinant of matrix $A$ \\ 
    $A \succeq B$ & $A - B$ is semi-definite \\
    $A \preceq B$ & $B - A$ is semi-definite \\
    $\mathcal{F} - \mathcal{G}$ &  $\{f - g: f \in \mathcal{F}, g \in \mathcal{G}\}$ \\ 
    $N(\epsilon, \mathcal{F}, \|\cdot\|)$ & the $\epsilon$-covering number of $\mathcal{F}$ w.r.t. $\|\cdot\|$ \\
    $H(\epsilon, \mathcal{F}, \|\cdot\|)$ & entropic number, i.e., $\log N(\epsilon, \mathcal{F}, \|\cdot\|)$ \\
    $N_{[]}(\epsilon, \mathcal{F}, \|\cdot\|)$ & the $\|\cdot\|$-bracketing metric entropy of $\mathcal{F}$\\
    $\mathcal{F} | \{x_i\}_{i=1}^n$ & $\{(f(x_1), ..., f(x_n)) \in \mathbb{R}^n | f \in \mathcal{F}\}$ \\
    $T \mathcal{F}$ & $\{ Tf | f \in \mathcal{F} \}$ \\ 
    $\|f \|_n$ & $\sqrt{\frac{1}{n} \sum_{i=1}^n f(x_i)^2}$ \\ 
    $\|f\|_{p, \mu}$ & $(\int_{\mathcal{X}} |f|^p d\mu)^{1/p}$ \\ 
    $\|f \|_{\mu}$ & $\|f\|_{2, \mu}$ \\ 
    $L^p(\mathcal{X}, \mu)$ & $\{f: \mathcal{X} \rightarrow \mathbb{R} \text{ }|\text{ } \|f\|_{p, \mu} < \infty \}$ \\ 
    $C^0(\mathcal{X})$ & $\{f: \mathcal{X} \rightarrow \mathbb{R} \text{ }|\text{ } f \text{ is continuous and } \|f\|_{\infty}  < \infty \}$ \\ 
    $C^{\alpha}(\mathcal{X})$ & the H\"older space with smoothness parameter $\alpha \in (0, \infty) \backslash \mathbb{N}$ \\ 
    $W^{m}_p(\mathcal{X})$ & the Sobolev space with regularity $m \in \mathbb{N}$ and parameter $p \in [1,\infty]$ \\ 
    $X \hookrightarrow Y$ & \textit{continuous embedding} from a metric space $X$ to a metric space $Y$ \\
    $f(\epsilon, n) \lesssim g(\epsilon,n)$ & $\exists \text{ absolute constant } c$ such that $f(\epsilon, n) \leq c \cdot g(\epsilon, n), \forall \epsilon > 0, n \in \mathbb{N}$ \\ 
    $f(\epsilon, n) \asymp g(\epsilon, n)$ & $f(\epsilon, n) \lesssim g(\epsilon, n)$ and $g(\epsilon, n) \lesssim f(\epsilon, n)$ \\ 
    $f(\epsilon, n) \simeq g(\epsilon, n)$ & $\exists$ absolute constant $c$ such that $f(\epsilon, n) = c \cdot g(\epsilon, n), \forall \epsilon, n$ \\ 
    $a \lor b$ & $\max \{a , b\}$ \\ 
    $a \land b$ & $\min \{a,b\}$ 
    % \end{tabular}
% \label{tab:notation}
\end{longtable}

\newpage{}

\pagestyle{empty}
\section*{Abbreviations}

\addcontentsline{toc}{chapter}{\numberline{}{Abbreviations}}\medskip{}

\begin{tabular}{>{\raggedright}p{0.2\linewidth}>{\raggedright}p{0.8\linewidth}}
\hline 
 \textbf{Abbreviation} & \textbf{Description}\tabularnewline
\hline 
\noalign{\vskip\doublerulesep}
AI & Artificial Intelligence \tabularnewline
ML & Machine learning \tabularnewline
RL & Reinforcement learning \tabularnewline 
MDP & Markov decision process \tabularnewline
DRL & Distributional reinforcement learning \tabularnewline
CDRL & Categorical distributional reinforcement learning \tabularnewline 
QRDRL & Quantile regression distributional reinforcement learning \tabularnewline 
MMDRL & Moment matching distributional reinforcement learning \tabularnewline
MMDQN & Moment matching deep Q-networks \tabularnewline
OPE & Off-policy evaluation \tabularnewline 
OPL & Off-policy learning (offline learning) \tabularnewline
MAB & Multi-armed bandits \tabularnewline
BO & Bayesian optimization \tabularnewline
GP & Gaussian process \tabularnewline
BQO & Bayesian quadrature optimization \tabularnewline
DRBQO & Distributionally robust Bayesian quadrature optimization \tabularnewline 
OFU & Optimism in the Face of Uncertainty\tabularnewline
UCB & Upper confidence bound \tabularnewline 
PS & Posterior (Thompson) sampling \tabularnewline
ReLU & Rectified Linear Unit \tabularnewline 
MMD & Maximum mean discrepancy \tabularnewline 
(S)GD & (Stochastic) gradient descent \tabularnewline 
FQI & Fitted Q-iteration \tabularnewline
LSVI & Least-squares value iteration \tabularnewline
PAC & Probably Approximately Correct \tabularnewline
iff & if and only if \tabularnewline
LHS & left-hand side \tabularnewline
RHS & right-hand side \tabularnewline

\noalign{\vskip\doublerulesep}
\hline 
\end{tabular}

\newpage\pagenumbering{arabic} 
\setcounter{page}{1}
\pagestyle{fancy}

\chapter{Introduction\label{chap:Introduction}}

Learning by interacting with an environment is one of the most integral cognitive abilities of humans. As an infant, we can learn to walk and play from the consequences of our own actions rather than from any explicit teacher. Throughout our lives, much of our knowledge is acquired via learning from our own experiences with the environment around us, from primitive activities such as learning to walk to more intricate ones such as learning to drive a car. Learning from interactions is thus a key learning paradigm, and is one of the main goals that a modern artificial intelligence (AI) system aims to possess. 

Sequential decision making is a framework for learning from interactions that aims at learning the underlying dynamics and making decisions accordingly for maximizing the total reward (a.k.a. long-term return) accumulated along the way. An AI system that can perform such capability finds vast applications in robotics, autonomous driving, personalized medicine, healthcare, material discovery, games, recommender system, marketing, and dynamic pricing \citep{sutton2018reinforcement}. Reinforcement learning (RL) is a sequential decision making framework where the decisions it makes can affect the data it gets and the rewards it obtains. This differs from
supervised learning and unsupervised learning which focus on learning only from fixed datasets. Another learning characteristic of RL that sets it aside from the other two learning frameworks is bandit feedback. That is, an RL agent only receives a feedback
from the action it has taken, not from the other actions; thus it faces a fundamental exploration-exploitation trade-offs where the agent has to decide whether it should gather more data by exploring its ``poorly-understood'' actions or exploiting
its current knowledge to maximize its return. 

Though RL has been extensively studied both theoretically and empirically in the literature, the main challenges in bridging the gap between theory and practice in RL still remain. In particular, empirical works often lack theoretical guarantees and theoretical works often focus on simple models that are relatively disconnected from practical settings. 
% In addition, the approaches to RL in different settings are seemingly discrepant that makes it difficult to leverage the approach in one setting to another. 
It is thus crucial to design an agent that works provably with theoretical guarantees in many practical settings from a unifying perspective. 

In this thesis, we attempt to fill the gap above by addressing several practical considerations of RL problems with {provable theoretical properties}. The practical considerations in this thesis are both common and general enough spanning a large space of modern and fundamental RL problems in practical settings. In particular, we study three main practical considerations in this thesis: (i) \textit{data source}, (ii) \textit{distributional learning}, and (iii) \textit{high dimensionality}. The data source consideration refers to the nature of the data available to an RL agent for its learning and decision-making. The data source includes online data, offline data and a hybrid online-offline data. While online data is a standard data source that a RL agent can actively acquire from the environment, offline data about (part of) the environment is collected a priori from historical interactions and is commonly encountered in real-life applications such as recommender system, autonomous driving, and robotics. We consider all three possible data source scenarios: online data as the sole data source (Chapter \ref{chap:four}), offline data as the sole data source (Chapter \ref{chap:five}), and offline data as a supplementary to the online data (Chapter \ref{chap:three}). The distributional learning consideration (Chapter \ref{chap:four}) refers to learning the entire distribution, as opposed to only the expected value, of the total reward an agent accumulates over time. The return distribution results from the intrinsic randomness of  the underlying environment and the agent policy. Standard value-based RL focuses on learning the expected value of the total reward and ignores this intrinsic randomness. Such intrinsic randomness is however both crucial to claim a significant empirical improvement and design a risk-sensitive policy in RL. Finally, the high dimensionality consideration (Chapter \ref{chap:four}-\ref{chap:five}) refers to the high-dimensional input space of the environment. In practical domains such as robotics, video games and continuous control, most environments are complex with high-dimensional states (and/or actions) that require a RL agent to be able to generalize across different input regions using function approximation such as deep neural networks. 

Beside the main goal of maximizing the total reward in a long run, we place a chief emphasis on three desirable properties for a RL agent to be able to address a decision-making and learning problem in practical considerations: \textit{robustness} (Chapter \ref{chap:three}), \textit{scalability} (Chapter \ref{chap:four}) and \textit{statistical efficiency} (Chapter \ref{chap:five}). Robustness is the ability of an RL agent to continue to perform well even in an adversarially corrupted system. For example, an empirical model constructed from an available dataset can mislead the decision-making and learning of an agent due to the finite data; a robust agent should therefore account for the worst-case scenario depicted through the empirical model. Scalability is the ability to scale efficiently with the problem and data size. In an era of ever increasing datasets and deep learning, scalability might be as important as accuracy for an AI system. Statistical efficiency is a central issue in RL which aims at using a minimum number of samples to learn to behave optimally.

% Despite a significant number of algorithms and methods for reinforcement learning, various challenges remain for reinforcement learning in practical considerations. How can we \textit{robustly} perform decision making under an \textit{uncontrollable} environmental factor that might affect the reward of our decisions? How can we inform ourselves in a data-driven manner which path of a reinforcement learning environment would lead to a higher risk? How can we leverage the offline data from historical interactions with the environment by previous policies to evaluate the value of a new policy or learn the optimal policy especially when the state space is infinitely large? 

To make it  concrete under the practical considerations, we identify and consider the following challenges in this thesis: 

\subsection*{Challenge 1: Robustness for reinforcement learning under adversarial contexts} 
As a concrete example, consider an alloy design application whose goal is to combine several elements into a new composition with enhanced performance. Alloy design is an extremely extensive and costly process as it involves multiple alloying elements and processing steps. Bayesian optimization is an instance of RL that is commonly used in alloy design to explore and optimize for new alloy composition \citep{DBLP:journals/access/GreenhillRGVV20}. In practice, the alloying element powders are often slightly impure as it is hard to find $100\%$ pure element powder. While not under control of the designer, these impurities however also affect the performance of the alloy composition. While Bayesian optimization can effectively apply to standard alloy design, it can fail to search for a robust alloy composition in the presence of impurities as it ignores this intervention. In other words, the uncontrollable environmental variables (a.k.a. adversarial contexts) can mislead the bandit feedback of a chosen action. This misleading is more detrimental than that by noisy feedback corrupted by Gaussian noise as commonly assumed in Bayesian optimization because the former can shift the intended input to a completely new one. Thus, the problem of designing a robust learning algorithm under such adversarial contexts remains open. 
% In addition, the uncontrollable environmental variables are different from the contexts in contextual bandits \citep{DBLP:journals/jmlr/LuPP10} as the latter can be observed by the agent before every decision making step while the former is uncontrollable. 
% This practical setting is not only applicable to alloy design but also to various other real-life applications such as robust control in reinforcement learning \citep{DBLP:conf/aaai/PaulCCMOW18}. 
% This challenge requires a new approach that is provably robust in the face of uncontrollable environmental variables. 

\subsection*{Challenge 2: Scalability for distributional reinforcement learning}
% The goal of reinforcement learning is to learn to maximize the expected value of the total rewards, i.e., the expected value of the return.  The return is a random variable with the randomness imposed by the environment and the policy itself. The intrinsic randomness from the environment is valuable to designing risk-sensitive behaviours for a reinforcement learning agent. 
As a concrete example, consider placing a bet which leads to the winning of $\$10$ with probability $0.95$ and a loss of $\$1000$ with probability $0.05$. The expected reward for playing this bet is $100 \times 0.95 - 1000 \times 0.05 = \$45  $; thus it is a positive gain on average if the agent picks this bet. Although the probability of losing the bet is lower, once it occurs (i.e., losing $\$1000$), the consequence could be detrimental from the agent's perspective. Thus, a risk-averse agent would play safe by avoiding this bet even if it leads to a positive gain on average. To obtain such a risk-sensitive behaviour requires the knowledge of the entire distribution of the total reward, i.e., the return distribution. 
% Estimating the intrinsic randomness of the environment (e.g., the stochasticity of the bet in the example above) allows a reinforcement learning agent to plan for risk-sensitive behaviours. 
Learning return distribution is crucial in not only many safety-critical applications such as medicine, healthcare and autonomous driving where a bad decision might lead to a fatal outcome, but also stabilizing and improving many current algorithms via auxiliary learning. As return distribution is often useful for high-dimensional complex models in practice, a key challenge lies in scalable learning that can readily and flexibly scale well with the complex environment while maintaining reliable estimation of the return distribution. Though prior methods have attempted to address this challenge to some extent, they all suffer from the so-called \textit{curse of predefined statistics} that prevent them from fully exploiting the benefit of learning the return distribution. Thus, the problem of designing a scalable learning algorithm of the return distribution that goes beyond the curse of predefined statistics remains open.

\subsection*{Challenge 3: Statistical efficiency of offline reinforcement learning with function approximation}
% Reinforcement learning is one of the most ubiquitous learning paradigms and the most active research area in machine learning. A reinforcement learning agent interactively and simultaneously learns the underlying dynamics and produces a policy to make decisions in this environment. 
In many practical settings such as robotics, healthcare, educational games, and autonomous driving, an exploratory interaction with the environment is often expensive or even prohibited, making the RL agent difficult to obtain any online data. Instead, an offline dataset from historical interactions (e.g., demonstration data from human experts or data from any previous policies) is largely available. Offline RL \citep{levine2020offline} is the problem of how to leverage any offline data to evaluate or learn new policies. 
% Since the data distribution under the target policy and the offline data distribution are often different, it leads to the \textit{distributional shift} problem which can render many standard online reinforcement learning algorithms sub-optimal.
The offline RL problems are further exacerbated in the high dimensionality consideration where the dimension of the state space can be infinitely large that it is necessary for a RL agent to generalize from observed states to unobserved ones. In such cases, function approximation such as deep neural networks is a typical approach to deal with high-dimensional offline RL. Despite the popularity of the high-dimensional offline RL setting in practice, the statistical efficiency of offline RL under deep neural network function approximation remains to be analyzed and understood.

% However, it is unclear how exactly deep neural network could help in offline reinforcement learning under distribution shift.  

% These practical challenges render standard reinforcement learning methods and analysis insufficient to build a robust solution or provide an adequate understanding of the problem at hand. 

\section{Aims and Approaches}

The goal of this thesis is to address the aforementioned challenges via the following respective aims:
\begin{enumerate}
    \item To design a new algorithm with a theoretical guarantee that is robust to the uncontrollable environmental variables in sequential decision making. We study this setting under a new variant of Bayesian optimization. 
    
    \item To design a novel theoretically grounded method for distributional RL that can scalably learn the return distribution beyond the curse of predefined statistics.
    
    \item To build a statistical theory for high-dimensional offline RL with deep neural network function approximation that can shed light on the statistical efficiency of offline learning under deep neural networks. 
\end{enumerate}

We address these challenges from a unified perspective: a \textit{distributional} perspective. In the distributional perspective, we formulate a learning problem at hand on the basis of learning over a space of probability distributions. On this basis, we lift the original problem into a structural space that allows more information to be learned and leveraged. Especially, we quantify the learning progress in the distributional perspective via distributional discrepancies such as $\chi^2$-divergence (Challenge 1 - Aim 1 - Chapter \ref{chap:three}), maximum mean discrepancy (Challenge 2 - Aim 2 - Chapter \ref{chap:four}) and Radon-Nikodym derivative (Challenge 3 - Aim 3 - Chapter \ref{chap:five}). More specifically, 

\begin{enumerate}
    \item To realize Aim 1, we study a variant of Bayesian optimization which includes an environmental variable in its setup, namely Bayesian quadrature optimization. We consider a practical setting for Bayesian quadrature optimization in which  the environmental variable is not under the control of the agent but affects the reward received by the agent. Especially, the environmental variable is only observed via a limited set of empirical samples obtained via historical interactions. We propose a new method based on Thompson sampling that can trade-off between the regret of the agent and the robustness against the disturbance corrupted by the unknown environmental variable. The robustness is quantified via the $\chi^2$-divergence between the empirical distribution of the environmental variable and its model from a class of distributions. We prove that our method can achieve a provable robustness in sublinear time. We also provide empirical results to verify the effectiveness of our proposed method in both synthetic and real-world experiments.
    
    \item To realize Aim 2, we propose a novel method for distributional RL based on the idea of statistical hypothesis testing. In particular, the return distribution in distributional RL is represented by a set of pseudo-samples which will evolve during the learning process via stochastic gradient descent for minimizing the so-called maximum mean discrepancy between the return distribution and its Bellman target. In contrast to the predominant distributional RL methods, these pseudo-samples in our new framework do not subscribe to any specific functional form, and thus giving them the freedom to simulate the return distributions. Subsequently, we provide a new understanding of distributional RL under our proposed framework including its convergence and stability, and discuss various practical relaxations of our method when it is applied to practical problems. Finally, we propose a novel deep RL agent that outperforms the predominant distributional RL methods and establishes a new state-of-the-art result for the Atari games.

    \item To realize Aim 3, we build a statistical theory of offline RL under deep ReLU network function approximation.
    Specifically, we introduce a new general dynamic condition namely \textit{Besov dynamic closure} that generalizes the dynamic conditions in the prior analyses. We obtain the sample complexity of the offline RL under deep ReLU network in the general dynamic condition and a data-dependent structure that is previously ignored in the prior algorithms and analyses. This is the first general and comprehensive analysis with an improved sample complexity that gives an important insight into the statistical efficiency of offline RL under deep ReLU network function approximation. Technically, this result is established using a uniform-convergence argument and local Rademacher complexities via localization argument. This technique could be of independent interest for studying offline RL with non-linear function approximation. 
    % we analyze the integral part of offline reinforcement learning: off-policy evaluation. We focus on fitted-Q evaluation (FQE) method which uses least square regression over a function space. This method iteratively computes a value estimate at each iteration using the offline data and the value estimates from the previous iterations. This induces a data-dependent structure into the algorithm which makes its theoretical analysis difficult. We present a new theoretical analysis which relies on a uniform-convergence and localization argument to bound the error of off-policy evaluation using deep ReLU networks. Using our proposed analysis, we are able to precisely quantify how the distribution shift of the offline data, the dimension of the input space and the regularity of the environment control the off-policy evaluation estimation error for deep ReLU networks. We show that the expressiveness of deep ReLU networks is also beneficial for offline reinforcement learning. Consequently, we provide insights into the interplay between offline reinforcement learning and deep learning. 
\end{enumerate}
\section{Thesis Contributions}
This thesis has made significant contributions to the respective challenges stated via our aims, both algorithmically, empirically and theoretically. More specifically,  

\begin{itemize}
    \item For the first challenge, we derive a novel algorithm that is provably guaranteed to find a robust solution in a sublinear time. This is the first work of its kind to establish such a guarantee for robustness in the face of uncontrollable environmental variables. Our proof technique may also have its own independent usage when deriving robust regret in decision making frameworks. Our method is experimentally effective in finding a robust solution despite the reward being corrupted by the uncontrollable environmental variables. 
    
    \item For the second challenge, we propose a novel perspective of distributional RL inspired by statistical hypothesis testing. This perspective leads to a novel framework with new understanding of distributional RL which goes beyond the limitations of the predominant methods and results in a scalable algorithm with the state-of-the-art result in Atari games. This is the first work that has changed the predominant approach shared by all the previous distributional RL methods.
    
    \item For the third challenge, we provide the first comprehensive analysis of offline RL under deep ReLU function approximation. Our analysis is established under a new general dynamic condition and a practical algorithmic condition that covers the prior analyses and provides an improved sample complexity for offline RL.

    % we derive a quantitive answer to the posed question about the effectiveness of deep function approximation in offline RL by establishing the sample complexity of off-policy evaluation under deep neural network function approximation. Our theoretical result precisely quantifies how distribution shifts (i.e. the difference between the distribution of the offline data and that of the data generated by the current policy), the dimension of the input space and the regularity of the system control the efficiency of off-policy evaluation under deep neural networks. This is the first work to present a comprehensive analysis for such a problem under a realistic setting (deep neural networks and non-i.i.d. data).

\end{itemize}

\section{Thesis Structure}

\begin{itemize}
    \item Chapter \ref{chap:Background} provides background for the thesis. We first present the mathematical preliminaries helpful to establish our results in this thesis. These include basic linear algebra, reproducing kernel Hilbert spaces, concentration phenomenon, and brief generalization theory. We then present the relevant sequential decision making frameworks including bandits, Bayesian optimization and Markov decision processes followed by their foundations, technical background, and brief review. We then cover the background of distributional RL and offline RL. These areas are also discussed in further details in their respective chapters in this thesis. 
    
    \item Chapter \ref{chap:three} introduces our first work in this thesis, namely distributionally robust Bayesian quadrature optimization (DRBQO). We first motivate for the proposed framework for robust decision making under uncontrollable environmental variables, detail how a practical algorithm is derived and how to implement the proposed method in practice. We also provide our detailed proof to guarantee the robustness of the proposed algorithm under the uncontrollability of the environmental variables. Finally, we provide experimental experiments to verify the effectiveness of our our proposed framework in both synthetic and real-world problems. 
    
    % We also provide a comprehensive discussion on possible future directions based on our framework. 
    
    \item In Chapter \ref{chap:four}, we introduce our novel framework of distributional RL via moment matching. We first review the predominant methods in distributional RL and their limitations. Based on these, we motivate our idea of moment matching and present detailed derivations for our proposed method. We also study the theoretical aspects of distribution RL through the lens of moment matching, providing new insights into distributional RL. Next, we show that our novel idea does not only overcome the fundamental limitations of the predominant methods, but also has a strong scalability to large-scale environments in Atari games. In fact, our proposed method establishes a new state-of-the-art result in the Atari games for non-distributed agents in terms of the best mean normalized scores at the time our work is published. 
    
    % Finally, we discuss various sources of improvements that could be made to even further improve the empirical performance of our proposed algorithm. 
    
    \item In Chapter \ref{chap:five}, we present our statistical theory of offline RL under deep neural network function approximation. We first motivate the offline RL under function approximation and present limitations of the predominant analysis in the setting. We then introduce a new dynamic condition and explain the data-dependent structure that is previously ignored in the prior analyses. We propose our new analysis and theoretical result on the sample efficiency of offline RL under deep ReLU networks. Finally, we interpret our result and its significance in light of the related literature. 
    
    % and discuss several open questions.
    
    \item Chapter \ref{chap:Conclusion} concludes this thesis by a summary of the thesis contributions and the future research directions including various sources of improvements and open questions. 
    
    % \item 
\end{itemize}

% \thanh{Examples for each to demonstrate why it is desirable}

% any learning constraints that an reinforcement learning agent needs to take into account for learning in real-life settings. We focus on two 

% We refer to the word "data-driven" in a general sense where we consider practical learning settings and aim an reinforcement learning agent at utilizing the data, either online data or any prior datasets, in an statistically efficient manner. In particular, in practical settings, we also place a chief emphasis on an algorithm's scalability - the ability to scale efficiently with the problem and data size. 

\newpage{}

\chapter{Background} 
\label{chap:Background}
% \epigraphhead[10]{\epigraph{``Nothing is more practical than a good theory.''}{\textit{Vapnik}}}
% \lettrine{T}
This chapter provides a concise background and literature for the thesis. It starts with mathematical preliminaries. Next, it provides the technical background and a review of the literature related to the aims of this thesis. The chapter ends with a brief discussion of the challenges that will be addressed in this thesis.

\section{Mathematical Preliminaries}
In this section, we review some non-elementary mathematical background including linear algebra, concentration phenomenon, reproducing kernel Hilbert spaces and classical generalization theory. We also briefly review the modern generalization theory for overparameterized models such as deep neural networks. 

% that is helpful for both our results in this thesis in particular and machine learning in general. 
% \subsection{Measure theory} 
% Measure theory is concerned with how to give consistent measures of objects on general topological spaces. For our application purpose, in this section, we briefly provide several important notions relevant for our work. An in-depth treatment of measure theory should be referred to on a coursework or a textbook level , e.g., \cite{tao2013introduction}.
\subsection{Linear Algebra}
Many machine learning tasks such as algorithmic design and analysis require the knowledge about basic linear algebra. Here, we review key concepts of linear algebra that are particularly relevant to machine learning, based on our own preference. For a comprehensive review of linear algebra for machine learning, we refer the readers to \citep{Deisenroth2020}. Note that we only consider real-valued matrices in this thesis. For most cases, we state the basic results without a proof but in some cases that are more convenient or are not so obvious, we provide the proofs. 

\subsubsection*{Matrix Decomposition}
An eigenvector $v$ of a matrix $A$ is a non-zero vector that has invariant direction under the linear transformation by $A$, i.e., $A v = \lambda v $
where $\lambda$ is a scalar called eigenvalue corresponding to eigenvector $v$. If $v$ is an eigenvector with respect to the eigenvalue $\lambda$, $c \cdot v$ is also an eigenvector for $\lambda$ for any scalar $c \neq 0$. The set of all eigenvectors with respect to an eigenvalue $\lambda$ is called  $\lambda$-eigenspace.

A $n \times n$ matrix has at most $n$ eigenvalues, denoted by $(\lambda_i)_{i=1}^n$. If the eigenvalues are distinct, the eigenvectors are linearly independent. If $0$ is an eigenvalue of $A$, $A$ it is not invertible. The eigenvalues of $A$ are the roots of the characteristic equation $\text{det}(A - \lambda I) = 0$. 

An important result in matrix decomposition is eigendecomposition.
% The first important result in linear algebra is eigen decomposition. The eigenvectors $x_1,x_2, ..., x_n$ of a square matrix $A$ associated with distinct eigenvalues $\lambda_1, \lambda_2, ..., \lambda_n$ forms a basis of $\mathbb{R}^n$, where
% \begin{align*}
%     A x_i = \lambda_i x_i, \forall i=1,2,..., n. 
% \end{align*}
\begin{thm}[{Eigendecomposition}]
Let $A \in \mathbb{R}^{n \times n}$ be a matrix with $n$ linearly independent eigenvectors $(q_i)_{i=1}^n$ associated with eigenvalues $(\lambda_i)_{i=1}^n$, and let $Q = [q_1, ..., q_n] \in \mathbb{R}^{n \times n}$ and $\Lambda = \text{diag}(\lambda_1, ..., \lambda_n)$. Then, $A$ can be decomposed as $A = Q \Lambda Q^{-1}$.
% In this case, $A$ is called a \textit{diagonalizable} matrix. 
\end{thm}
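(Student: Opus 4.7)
The plan is to prove the eigendecomposition by first rewriting the $n$ scalar eigenvector equations simultaneously as a single matrix identity, and then inverting $Q$. First I would start from the definition, namely $A q_i = \lambda_i q_i$ for each $i \in \{1,\ldots,n\}$. Stacking these columnwise into the matrix $Q = [q_1,\ldots,q_n]$ and using the block rule for matrix multiplication on the left, one gets $AQ = [Aq_1,\ldots,Aq_n] = [\lambda_1 q_1,\ldots,\lambda_n q_n]$.

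Next I would recognize the right-hand side as $Q\Lambda$: post-multiplying $Q$ by the diagonal matrix $\Lambda = \mathrm{diag}(\lambda_1,\ldots,\lambda_n)$ scales each column $q_i$ by $\lambda_i$, so $Q\Lambda = [\lambda_1 q_1,\ldots,\lambda_n q_n]$. Hence the identity $AQ = Q\Lambda$ follows directly.

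The remaining step is to invert $Q$. Here I would invoke the hypothesis that the eigenvectors $q_1,\ldots,q_n$ are linearly independent: since $Q \in \mathbb{R}^{n\times n}$ has $n$ linearly independent columns, it has full rank and is therefore invertible, so $Q^{-1}$ exists. Right-multiplying $AQ = Q\Lambda$ by $Q^{-1}$ yields $A = Q\Lambda Q^{-1}$, which is the claim.

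I do not anticipate a substantive obstacle: the only non-trivial ingredient is the translation from $n$ individual eigenvalue relations into the single identity $AQ = Q\Lambda$, which is bookkeeping about how left- and right-multiplication by a diagonal matrix act columnwise, together with the standard fact that a square matrix with linearly independent columns is invertible. No additional machinery beyond elementary linear algebra is required, and the proof can be kept to a few lines.
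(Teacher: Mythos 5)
Your argument is correct and complete: stacking the $n$ relations $Aq_i=\lambda_i q_i$ into $AQ=Q\Lambda$, noting that linear independence of the columns makes $Q$ invertible, and right-multiplying by $Q^{-1}$ is exactly the standard proof. The paper states this result without proof (it is listed among the basic linear-algebra facts given without justification), so there is nothing to compare against; your write-up supplies the missing argument correctly and with no gaps.
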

\begin{rem}
Given independent eigenvectors $\{q_i\}_{i=1}^n$ of matrix $A$, we can normalize them  to form an orthonormal basis via Gram-Schmidt: Let
$v_i = q_i - \sum_{j=1}^{i-1} \frac{\langle q_i, v_j \rangle}{\|v_j\|^2} v_j, \forall i$, $w_i = v_i / \|v_i\|, \forall i$, then $\{w_i\}$ is an orthonormal basis of $\text{Span}(q_1,...,q_n)$. Let $Q = [w_1, ..., w_n]$, then $Q$ is an orthonormal matrix and $A = Q \Lambda Q^{-1} = Q \Lambda Q^T$.
\end{rem}
\begin{rem}
If a square matrix $A$ has distinct eigenvalues, it then has an eigendecomposition as distinct eigenvalues imply linearly independent eigenvectors. 
\end{rem}

\begin{rem}
 Any real symmetric matrix has an eigendecomposition. Moreover, any two eigenvectors from different eigenspaces are orthogornal. 
\end{rem}
% We provide several important remarks about the eigendecomposition: 
% \begin{itemize}
%     \item 
%     \item 
%     \item
%     % \item Given a real symmetric matrix $A$, $A$ is positive definite if and only if its smallest eigenvalue is positive if and only if $A$ is invertible.  
% \end{itemize}
We present several useful applications of the eigendecomposition. 
\begin{eg}
If $A$ has an eigendecomposition $A = Q \Lambda Q^{-1}$ where $Q$ is an orthonormal matrix, then $A^n = Q \Lambda^n Q^{-1}$.
\end{eg}

\begin{eg}
Let $\Sigma \in \mathbb{R}^{d \times d}$ be a symmetric matrix and $\lambda > 0$, then $\Sigma + \lambda I$ is symmetric and invertible, and
\begin{align*}
    (\Sigma + \lambda I)^{-1/2} \Sigma (\Sigma + \lambda I)^{-1/2} = (\Sigma + \lambda I)^{-1} \Sigma. 
\end{align*}
\end{eg}

\begin{proof}
Since $\Sigma$ is symmetric, it admits an eigendecomposition $\Sigma = Q \Lambda Q^{-1}$ where $Q$ is an orthonormal matrix, $\Lambda = \text{diag}([\lambda_1, ..., \lambda_d])$, and $(\lambda_i)$ are the eigenvalues. We have 
\begin{align*}
    (\Sigma + \lambda I)^{-1/2} \Sigma (\Sigma + \lambda I)^{-1/2} &=  
    Q (\Sigma + \lambda I)^{-1/2} Q^{-1} Q \Sigma Q^{-1}  Q (\Sigma + \lambda I)^{-1/2} Q^{-1}
    \\
    &= Q (\Lambda + \lambda I)^{-1/2} \Lambda (\Lambda + \lambda I)^{-1/2} Q^{-1} \\
    &= Q \text{diag}([\lambda_1 / (\lambda_1 + \lambda), ...,\lambda_d / (\lambda_d + \lambda) ] ) Q^{-1} \\
    &= (\Sigma + \lambda I)^{-1} \Sigma. 
\end{align*}
\end{proof}

While not any matrix admits an eigendecomposition, all matrices have singular value decomposition (SVD).

\begin{thm}[{SVD theorem}]
Let $A \in \mathbb{R}^{m \times n}$ be a matrix of any rank $r \in [0,  m \wedge n]$, then $A$ can be decomposed as $A = U \Sigma V^T$
where $U = [u_1, ..., u_m] \in \mathbb{R}^{m \times m}$ and $V = [v_1, ..., v_n] \in \mathbb{R}^{n \times n}$ are orthogonal matrices, and $\Sigma \in \mathbb{R}^{m \times n}$ with $\Sigma_{ii} = \sigma_i \geq 0$ and $\Sigma_{i,j} = 0, \forall i \neq j$.
\end{thm}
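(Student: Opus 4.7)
The plan is to reduce the SVD to the eigendecomposition of the symmetric positive semi-definite matrix $A^T A$, which is guaranteed to exist by the remarks preceding the theorem. First I would observe that $A^T A \in \mathbb{R}^{n \times n}$ is symmetric, so by the eigendecomposition result it admits $A^T A = V \Lambda V^T$ with $V = [v_1, \dots, v_n]$ orthogonal and $\Lambda = \mathrm{diag}(\lambda_1, \dots, \lambda_n)$. Moreover each eigenvalue is non-negative because $\lambda_i = \lambda_i \|v_i\|^2 = v_i^T A^T A v_i = \|A v_i\|^2 \geq 0$, so I can write $\sigma_i := \sqrt{\lambda_i}$ and order them $\sigma_1 \geq \dots \geq \sigma_n \geq 0$. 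The rank $r$ of $A$ equals the number of strictly positive $\sigma_i$ (since $\mathrm{rank}(A) = \mathrm{rank}(A^T A)$), so $\sigma_1, \dots, \sigma_r > 0$ and $\sigma_{r+1} = \dots = \sigma_n = 0$.

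Next I would construct the left singular vectors. For $i \leq r$ set $u_i := \sigma_i^{-1} A v_i \in \mathbb{R}^m$; a direct calculation shows orthonormality, namely $u_i^T u_j = (\sigma_i \sigma_j)^{-1} v_i^T A^T A v_j = (\sigma_i \sigma_j)^{-1} \lambda_j v_i^T v_j = \delta_{ij}$. Then I would invoke Gram–Schmidt, exactly as in the remark following the eigendecomposition theorem, to extend $\{u_1, \dots, u_r\}$ to an orthonormal basis $\{u_1, \dots, u_m\}$ of $\mathbb{R}^m$, and assemble $U := [u_1, \dots, u_m]$, which is orthogonal by construction. Define $\Sigma \in \mathbb{R}^{m \times n}$ by $\Sigma_{ii} = \sigma_i$ for $i \leq m \wedge n$ and zero elsewhere.

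It then remains to verify the identity $A V = U \Sigma$ column by column. For $j \leq r$ the definition $u_j = \sigma_j^{-1} A v_j$ gives $A v_j = \sigma_j u_j$, which is exactly the $j$-th column of $U \Sigma$. For $j > r$ the eigenvalue is zero, so $\|A v_j\|^2 = v_j^T A^T A v_j = \lambda_j = 0$, hence $A v_j = 0$, matching the zero column of $U \Sigma$ in position $j$. Right-multiplying $A V = U \Sigma$ by $V^T$ and using $V V^T = I$ yields $A = U \Sigma V^T$.

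No step is particularly hard, but the only mild subtlety is the extension of $\{u_1, \dots, u_r\}$ to a full orthonormal basis of $\mathbb{R}^m$ when $r < m$, since the ``extra'' $u_i$ for $i > r$ are not dictated by $A$ and must be chosen freely (so the decomposition is non-unique). This is handled cleanly by Gram–Schmidt applied to any completion of the $u_i$ to a basis. A secondary bookkeeping point is simply making sure the zero-rows/columns of $\Sigma$ align with the non-square shape $m \times n$, which is automatic once the singular values are placed on the main diagonal.
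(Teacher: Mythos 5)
The paper states this SVD theorem as background material without providing a proof, so there is nothing to compare against; your argument is the standard and correct one. Reducing to the eigendecomposition of $A^TA$, defining $u_i = \sigma_i^{-1}Av_i$ for the nonzero singular values, completing to an orthonormal basis via Gram--Schmidt, and verifying $AV = U\Sigma$ column by column is exactly the canonical proof, and every step you give (non-negativity of the eigenvalues via $\lambda_i = \|Av_i\|^2$, orthonormality of the $u_i$, and $Av_j = 0$ for $j > r$) checks out.
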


\begin{rem}
$\{\sigma_i \}_{i=1}^n$ are called singular values of $A$, $\{u_i\}_{i=1}^n$ are called left-singular vectors and $v_j$ are called right-singular vectors. The singular matrix $\Sigma$ is unique for each matrix.
\end{rem}

\begin{rem}
The rank of $A$ equals the number of non-zero singular values of $A$, i.e., rank($A$) is the number of non-zero diagonal elements of $\Sigma$.
\end{rem}

\subsubsection*{Matrix Norm}
Now we turn to matrix norm and its properties. This background is crucial to exploit linearity structures in machine learning, e.g., in linear and generalized linear models for provably efficient reinforcement learning. Assuming that the readers are familiar with vector norm, here we define matrix norm via vector norm.
\begin{defn}[Matrix norm induced by vector norm]
\begin{align*}
    \|A\|_p := \sup_{x \neq 0} \frac{ \|Ax\|_p  }{\|x\|_p}, \forall p \in [1, \infty].
\end{align*}
\end{defn}
The interesting case is $p=2$ where the matrix norm is called \textit{spectral norm}. In such case, we interchangeably write $\|\cdot\|_{op}$ for $\|\cdot\|_2$. A spectral norm can be alternatively characterized as 
\begin{align*}
    \|A\|_{op} := \sup \{ x^TAy : \|x\|_2 = \|y\|_2 = 1 \}. 
\end{align*}
Alternatively, the spectral norm has a closed form: 
\begin{align*}
    \|A\|_{op} = \sqrt{ \lambda_{\max}(A^T A) } = \sigma_{\max}(A),
\end{align*}
where $\lambda_{max}(X)$ denotes the largest eigenvalue of $X$, and $\sigma_{\max}(X)$ is the largest singular value of $X$.

\begin{rem}
Since $A$ and $A^T$ have the same singular values, $\|A\|_{op} = \|A^T\|_{op}$. Moreover, if $A$ is symmetric, $\|A\|_{op} = \max_{i}|\lambda_i(A)|$.
\end{rem}

We go through several simple yet useful inequalities pertaining to spectral norms: 
\begin{itemize}
    \item Let $A$ be a matrix with rank $k$, we have 
\begin{align*}
    \|A\|_{op} \leq \|A\|_F := \sqrt{\text{ tr}(A^T A) } = \sqrt{\sum_{i,j} A_{i,j}^2} \leq \sqrt{k} \|A\|_{op}.
\end{align*}

\item Let $x \in \mathbb{R}^d$ such that $x^T x \leq a$ for a given $a > 0$. Then, we have 
\begin{align*}
    xx^T \preceq aI. 
\end{align*}
\begin{proof}
Note that $xx^T \preceq aI$ iff $\lambda_{\max}(xx^T) \leq a$, but we have
\begin{align*}
    a \geq x^Tx = \|x^T \|_{op}^2 = \lambda_{\max}(xx^T). 
\end{align*}
\end{proof}

\item For any square matrix $A$, we have 
\begin{align*}
    \sqrt{x^T A x} \leq \| x\|_2 \cdot \sqrt{ \| A\|_{op}}, \text{ } \forall x. 
    % \|A\|_2 \geq \frac{ x^T A x}{ \|x\|_2}, \forall x \in \mathbb{R}^n /\{0\}. 
\end{align*}

\item Let $\Sigma \in \mathbb{R}^{d\times d}$ be a positive definite matrix (thus it is symmetric and invertible), we have 
\begin{align*}
    x^T y \leq \| x \|_{\Sigma} \|y \|_{\Sigma^{-1}}, \forall x,y \in \mathbb{R}^d. 
\end{align*}

\begin{proof}
Let $\lambda_1, ..., \lambda_d$ be the eigenvalues of $\Sigma$. Since $\Sigma$ is a real symmetric matrix, it is diagonalizable by orthogonal matrices, i.e., there exists an orthogonal matrix $P$ such that $\Sigma = P^T D P$ where $D = \text{diag}(\lambda_1, ..., \lambda_d)$. Note that $P$ is orthogonal, thus $P^T = P^{-1}$. 

Since $\Sigma$ is positive definite, $\lambda_i > 0, \forall i$. Let $\Sigma^{1/2} := P^T \text{diag}(\sqrt{\lambda_1}, ..., \sqrt{\lambda_d}) P$. It is easy to verify that $\Sigma = \Sigma^{1/2} \Sigma^{1/2}$. In addition, $\Sigma^{1/2}$ is symmetric. Since $\lambda_i > 0, \forall i$, we have $\Sigma^{-1/2}:= P^T \text{diag}(\lambda_1^{-1/2}, ..., \lambda_d^{-1/2}) P$ exists and is the inverse matrix of $\Sigma^{1/2}$.
It follows from the Cauchy-Schwartz inequality and the condition of $\Sigma$ that
\begin{align*}
    x^T y = x^T \Sigma^{1/2} \Sigma^{-1/2} y \leq \sqrt{ x^T \Sigma^{1/2} \Sigma^{1/2}x} \sqrt{y^T \Sigma^{-1/2} \Sigma^{-1/2} y}.
\end{align*}
\end{proof}

\end{itemize}

% \begin{proof}
% We use an alternative definition of matrix norm (\url{https://en.wikipedia.org/wiki/Matrix_norm}) instead of the original definition where 

% This alternative definition immediately yields the inequality to be proved. 
% \end{proof}

% A positive-definite (p.d.) symmetric matrix $A$ induces an inner product: 
% \begin{align*}
%     \langle x, y \rangle_{A} := x^T A y.  
% \end{align*}

% \noindent \textbf{Remark 1}. In practice, we often encounter $A = \sum_{i=1}^n u_i u_i^T $ where $u_i \in \mathbb{R}^d$. This is Gram matrix and tt is easy to check that such $A$ is a symmetric p.d. matrix . 

% \noindent \textbf{Remark 2}:  It follows from the Cauchy-Schwartz inequality that $\langle x, y \rangle_{A} \leq \sqrt{   \langle x, x \rangle_{A} \langle y, y \rangle_{A} }$.

\subsubsection*{Matrix Determinant}
Another useful operation in linear algebra is matrix determinant. The determinant of a square matrix $A$, denoted by $det(A)$, is the volume of the parallelotope formed by the column vectors of $A$. We briefly iterate through several important identities of matrix determinant: 
\begin{itemize}
    \item $\text{det}(I) = 1$;
    \item $\text{det}(AB) = \text{det}(A) \text{det}(B)$;  
    \item $\text{det}(A) = \prod_{i=1}^n a_{i,i}$ if $A = (a_{i,j})$ is a triangular matrix; 
    \item For $A \in \mathbb{R}^{m \times n}, B \in \mathbb{R}^{n \times m}$ and an invertible $X \in \mathbb{R}^{m \times m}$, we have 
    \begin{align*}
        \text{det}(X + AB) = \text{det}(X)  \text{det}(I_n + B X^{-1} A);
    \end{align*}
    \item For $A \in \mathbb{R}^{n \times n}$ with eigenvalues $(\lambda_i)_{i=1}^n$, we have 
    \begin{align*}
        \text{det}(A) = \prod_{i=1}^n \lambda_i. 
    \end{align*}
\end{itemize}

\subsubsection*{Traces}
The trace of a square matrix $A$, denoted $tr(A)$, is the sum of all the elements of its main diagonal. Traces can be helpful in simplifying matrix multiplication due to its invariance under cyclic permutations. In particular, we briefly go through some basic properties of traces as below where $A,B,C,D$ are any matrices of appropriate sizes such that any matrix multiplication in the properties below is valid:
\begin{itemize}
    \item $\text{tr}(A + B) =\text{ tr}(A) + \text{tr}(B)$;
    \item $\text{tr}(A^T) = \text{tr}(A)$;
    \item $\text{tr}(ABCD) = \text{tr}(BCDA) =\text{ tr}(CDAB) = \text{tr}(DABC)$ (invariance under cyclic permutations);
    \item $\text{tr}(I_n) = n$;
    \item $\text{tr}(A) = \sum_{i=1}^n \lambda_i$ where $\{\lambda_i\}$ are the eigenvalues of $A$;
    \item $\text{tr}(A^T (A^T A)^{-1}A) = \text{rank}(A)$ where $A$ has full column rank. Note that $A^T (A^T A)^{-1}A$ is called a \textit{projection} matrix; 
    \item If $A \succeq B$, then $\text{tr}(A) \geq \text{tr}(B)$.
\end{itemize}

\subsubsection*{Other}
We present the \textit{matrix inversion lemma} that allows to obtain the inverse of a large matrix from the inverse of a smaller matrix. This property is commonly used for both algorithmic design and theoretical analysis. A simplified form of the matrix inversion lemma implies that for any matrix $A$, we have
\begin{align*}
    (A A ^T + I)^{-1} A = A (A^T A + I)^{-1}. 
\end{align*}

Now, using the above properties in linear algebra, we prove a simple yet useful inequality. 

\begin{eg}
Let $A$ and $B$ be square matrices of the same size where $A$ is symmetric and invertible. We have 
\begin{align*}
    \log \text{det} (A + B) \leq \log \text{det}(A) + \langle A^{-1}, B \rangle_{F},
\end{align*}
where $\langle U, V \rangle_F = \text{tr}(U^T V)$.
\end{eg}
\begin{proof}
First, for any square matrix $U$, let $(\lambda_i)$ be its eigenvalues. We have 
\begin{align*}
    \log \text{det}(I + U) = \sum_{i} \log (1 + \lambda_i) \leq \sum_i \lambda_i = \text{tr}(U),
\end{align*}
where we use inequality $\log(1+x ) \leq x, \forall x$.
Using this inequality, we are ready to prove the main inequality. We have 
\begin{align*}
    \log \text{det} (A + B) = \log\text{det}(A) + \log \text{det}(I + A^{-1} B) \leq \log \text{det}(A) + \text{tr}(A^{-1} B).
\end{align*}
\end{proof}

\subsection{Reproducing Kernel Hilbert Space}
A reproducing kernel Hilbert space (RKHS) is an infinite-dimensional generalization of Euclidean space. As an RKHS admits closed-form solutions to many ML problem where an RLHS is used as a hypothesis space, it allows us to derive tractable algorithms and establish theoretical properties more easily. Here, we briefly review some basic concepts and properties of an RKHS. For more detailed presentation and discussion on RKHS, we refer the readers to \citep{Sejdinovic2012WhatIA,ltfp}.

We start with an intuition about the relation of RKHS to other vector spaces. 
\begin{align*}
    \text{vector spaces} &\supset \text{ \textit{normed} vector spaces} \\
    &\supset \text{Banach spaces (\textit{complete} \textit{normed} vector spaces)} \\
    &\supset \text{Hilbert spaces (Banach spaces equipped with \textit{inner product})}  \\ 
    &\supset \text{RKHS (Hilbert spaces with \textit{continuous} evaluation functional) }.
\end{align*}

That is, a RKHS is a complete, normed vector (linear) space equipped with inner product and continuous evaluation functional. The continuous evaluation functional intuitively means that if two functions (infinite-dimensional vectors) of a RKHS are close in the RKHS norm, they are close in the element-wise manner. 

\subsubsection*{Reproducing Kernels}
Let $\mathcal{H}$ be a Hilbert space of real-valued functions defined on a non-empty domain $\mathcal{X}$. A function $k: \mathcal{X} \times \mathcal{X} \rightarrow \mathbb{R}$ is said to be a reproducing kernel of $\mathcal{H}$ if 
\begin{itemize}
    \item $\forall x \in \mathcal{X}, k(\cdot, x) \in \mathcal{H}$,
    \item $\forall x \in \mathcal{X}, \forall f \in \mathcal{H}, \langle f, k(\cdot, x) \rangle_{\mathcal{H}} = f(x)$ where $\langle \cdot, \cdot \rangle_{\mathcal{H}}: \mathcal{H} \times \mathcal{H} \rightarrow \mathbb{R}$ denotes the inner product of $\mathcal{H}$.
\end{itemize}
In particular, we have $\langle k(\cdot, x), k(\cdot, y) \rangle_{\mathcal{H}} = k(x,y)$.
\begin{rem}
If it exists, a reproducing kernel is unique. 
\end{rem}
\begin{rem}
A Hilbert space $\mathcal{H}$ is an RKHS iff it has a reproducing kernel. 
\end{rem}
\begin{rem}
Any kernel $k$ of a RKHS $\mathcal{H}$, there is a feature map $\phi: \mathcal{X} \rightarrow \mathcal{H}$ such that $k(x,y) = \langle \phi(x), \phi(y) \rangle_{\mathcal{H}}, \forall x,y \in \mathcal{X}$. The space $\mathcal{H}$ is referred to as the feature space.
\end{rem}

\subsubsection*{Kernel Ridge Regression} 
We consider a common application of RKHS in regression, namely kernel ridge regression. Let $\mathcal{H}$ be an RKHS defined on $\mathcal{X}$ with kernel function $k: \mathcal{X} \times \mathcal{X} \rightarrow \mathbb{R}$. Let $\langle \cdot, \cdot \rangle_{\mathcal{H}}$, $\cdot \otimes \cdot$ and $\|\cdot\|_{\mathcal{H}}$ be the inner product, the tensor product and the norm on $\mathcal{H}$, respectively. In addition, we also often denote $f^T \cdot$ for $\langle f, \cdot \rangle_{\mathcal{H}}$ and $f g^T$ for $f \otimes g$. Let $\phi: \mathcal{X} \rightarrow \mathcal{H}$ be the feature mapping of $\mathcal{H}$, i.e., $f(x) = \langle f , \phi(x) \rangle_{\mathcal{H}}, \forall x \in \mathcal{X}, f \in \mathcal{H}$, and $k(x,y) = \langle \phi(x), \phi(y) \rangle_{\mathcal{H}}, \forall x,y \in \mathcal{X}$. In kernel ridge regression, we aim at solving the optimization problem: 
\begin{align*}
    \min_{f \in \mathcal{H}} \frac{1}{n} \sum_{i=1}^n (y_i - f(x_i))^2 + \lambda \| f \|^2_{\mathcal{H}},
\end{align*}
where $\mathcal{H}$ is an RKHS, $\lambda > 0$ is a regularization parameter, and $\{(x_i, y_i)_{i=1}^n \}$ are i.i.d samples from an unknown data distribution $P_{\mathcal{D}}(x,y)$.

Let $I_{\mathcal{H}}$ be the identity map on $\mathcal{H}$, $y = [y_1, \ldots, y_n]^T \in \mathbb{R}^n$ be the response vector,  $\Phi = [\phi(x_1)^T, \ldots, \phi(x_n)^T]^T: \mathcal{H} \rightarrow \mathbb{R}^n$ be the data operator, and $\hat{\Lambda} := \frac{1}{n} \sum_{i=1}^n \phi(x_i) \phi(x_i)^T = \frac{1}{n} \Phi^T \Phi: \mathcal{H} \rightarrow \mathcal{H}$ be the empirical covariance self-adjoin operator. Since $\hat{\Lambda} + \lambda I_{\mathcal{H}}$ is positive-definite for $\lambda > 0$, the inverse operator $(\hat{\Lambda} + \lambda I_{\mathcal{H}})^{-1}$ is well-defined. The kernel ridge regression above admits a closed-form solution: 
\begin{align*}
    \hat{f}_{\lambda}(x) &=  \phi(x)^T (\hat{\Lambda} + \lambda I_{\mathcal{H}})^{-1} \frac{1}{n} \Phi^T y =\phi(x)^T (\hat{\Lambda} + \lambda I_{\mathcal{H}})^{-1} \frac{1}{n} \sum_{i=1}^n y_i \phi(x_i).
\end{align*}
The following lemma shows an interesting connection between a posterior variance in Bayesian linear regression and the uncertainty function $\| \phi(x)\|_{(\hat{\Lambda} + \lambda I_{\mathcal{H}})^{-1}}^2$ in $\mathcal{H}$.
\begin{lem}
Let $k^n(x) = [k(x_1, x), ..., k(x_n, x)]^T \in \mathbb{R}^{n}$ and $K^n = [k(x_i, x_j)]_{1 \leq i,j \leq n} \in \mathbb{R}^{n \times n}$. For any $x$, we have 
\begin{align*}
     \lambda \| \phi(x)\|_{(\hat{\Lambda} + \lambda I_{\mathcal{H}})^{-1}}^2 =  k(x,x) - k^n(x)^T (K^n + \lambda n I_n)^{-1} k^n(x). 
\end{align*}
\end{lem}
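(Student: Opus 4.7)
The plan is to reduce both sides to a common expression by exploiting the push-through (matrix inversion) identity, taking advantage of the fact that the empirical covariance $\hat{\Lambda}$ is built from the data operator $\Phi$ whose Gram matrix is exactly $K^n$. The key observation to set up the computation is that $\Phi\phi(x) = k^n(x) \in \mathbb{R}^n$ (by the reproducing property) and $\Phi\Phi^T = K^n$ as an operator on $\mathbb{R}^n$, while $\Phi^T\Phi = n\hat{\Lambda}$ as an operator on $\mathcal{H}$.

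First I would rewrite the left-hand side by factoring $n$ out of $\hat{\Lambda}+\lambda I_{\mathcal{H}} = \tfrac{1}{n}(\Phi^T\Phi + \lambda n I_{\mathcal{H}})$, so that
\begin{align*}
\lambda\|\phi(x)\|_{(\hat{\Lambda}+\lambda I_{\mathcal{H}})^{-1}}^2
= \lambda n \,\phi(x)^T (\Phi^T\Phi + \lambda n I_{\mathcal{H}})^{-1}\phi(x).
\end{align*}
Next I would apply the elementary operator identity $(A+\mu I)^{-1} = \mu^{-1}I - \mu^{-1}(A+\mu I)^{-1}A$ with $A = \Phi^T\Phi$ and $\mu = \lambda n$ to get
\begin{align*}
\phi(x)^T(\Phi^T\Phi + \lambda n I_{\mathcal{H}})^{-1}\phi(x) = \frac{1}{\lambda n}\Big[k(x,x) - \phi(x)^T(\Phi^T\Phi+\lambda n I_{\mathcal{H}})^{-1}\Phi^T\Phi\,\phi(x)\Big],
\end{align*}
using $\phi(x)^T\phi(x)=k(x,x)$.

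The heart of the argument is the push-through identity $(\Phi^T\Phi+\lambda n I_{\mathcal{H}})^{-1}\Phi^T = \Phi^T(\Phi\Phi^T + \lambda n I_n)^{-1}$, which is the direct analogue, in the operator setting, of the matrix inversion lemma stated earlier in the excerpt. Substituting and using $\Phi\Phi^T = K^n$ and $\Phi\phi(x)=k^n(x)$ yields
\begin{align*}
\phi(x)^T(\Phi^T\Phi+\lambda n I_{\mathcal{H}})^{-1}\Phi^T\Phi\,\phi(x) = k^n(x)^T(K^n+\lambda n I_n)^{-1}k^n(x).
\end{align*}
Plugging back and multiplying through by $\lambda n$ gives exactly the right-hand side, completing the proof.

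The only subtle point, and the main thing I would want to justify carefully, is that the push-through identity is valid here even though $\mathcal{H}$ may be infinite-dimensional. This is straightforward: both $\Phi^T\Phi + \lambda n I_{\mathcal{H}}$ and $\Phi\Phi^T + \lambda n I_n$ are boundedly invertible for $\lambda>0$, and multiplying the trivial identity $\Phi^T(\Phi\Phi^T + \lambda n I_n) = (\Phi^T\Phi + \lambda n I_{\mathcal{H}})\Phi^T$ on the left by $(\Phi^T\Phi + \lambda n I_{\mathcal{H}})^{-1}$ and on the right by $(\Phi\Phi^T + \lambda n I_n)^{-1}$ delivers the identity. Everything else is bookkeeping.
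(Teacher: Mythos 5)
Your proof is correct and follows essentially the same route as the paper's: both arguments reduce to splitting $k(x,x)=\phi(x)^T\phi(x)$ into the two terms produced by the resolvent identity for $\Phi^T\Phi+\lambda n I_{\mathcal{H}}$ and then applying the push-through (matrix inversion) identity $(\Phi^T\Phi+\lambda n I_{\mathcal{H}})^{-1}\Phi^T=\Phi^T(\Phi\Phi^T+\lambda n I_n)^{-1}$ to land on the $n\times n$ quantities $K^n$ and $k^n(x)$. Your explicit justification of the push-through identity in the infinite-dimensional setting is a welcome addition that the paper only gestures at.
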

\begin{rem}
The posterior variance in the RHS of the expression above appears again the Subsection \ref{subsect:gp} of this chapter about Gaussian Processes while the uncertainty function (which appears in the LHS of the above expression) is often used to implement the so-called optimism in the face of uncertainty principle in reinforcement learning.
\end{rem}
\begin{proof}
We have 
\begin{align*}
    \phi(x) &= (\frac{1}{n} \Phi ^T \Phi + \lambda I_{\mathcal{H}})^{-1} ( \frac{1}{n}\Phi ^T \Phi + \lambda I_{\mathcal{H}}) \phi(x) \\ 
    &= (\frac{1}{n}\Phi ^T \Phi + \lambda I_{\mathcal{H}})^{-1} \frac{1}{n}\Phi ^T \Phi \phi(x) + \lambda (\frac{1}{n}\Phi ^T \Phi  + \lambda I_{\mathcal{H}})^{-1} \phi(x).
\end{align*}
Thus, we have 
\begin{align*}
    k(x,x) &= \|\phi(x) \|^2_{\mathcal{H}} = \phi(x)^T \phi(x)  \\
    &= \phi(x)^T ( \frac{1}{n}\Phi ^T \Phi + \lambda I_{\mathcal{H}})^{-1} \frac{1}{n} \Phi^T \Phi \phi(x)  + \lambda \phi(x)^T (\frac{1}{n}\Phi ^T \Phi + \lambda I_{\mathcal{H}})^{-1} \phi(x) \\
% \end{align*}
% \begin{align*}
    &= \frac{1}{n}\phi(x^T) \Phi^T (\frac{1}{n}\Phi \Phi^T + \lambda I_n)^{-1} \Phi \phi(x) + \lambda\| \phi(x) \|_{(\hat{\Lambda} + \lambda I_{\mathcal{H}})^{-1}}^2 \\ 
    &=  \frac{1}{n} k^n(x)^T ( \frac{1}{n} K^n + \lambda I_n)^{-1} k^n(x) + + \lambda\| \phi(x) \|_{(\hat{\Lambda} + \lambda I_{\mathcal{H}})^{-1}}^2,
\end{align*}
where the third inequality follows from the inverse matrix lemma which allows the inverse of a infinite-dimensional matrix to be obtained from the inverse of a $n$-dimensional matrix. 
\end{proof}

We refer the readers to \citep[Chapter~7]{ltfp} for an analysis of generalization guarantees in kernel ridge regression, and to \citep{mollenhauer2020singular} for the preliminaries, eigendecomposition, and SVD of operators on RKHS. 
% Finally, the following identity can be helpful when we compute the power of a square matrix: 
% \begin{lem}
% For square matrices $M$ and $A$ where $A$ is invertible, and for any $n \in \mathbb{N}$, we have 
% \begin{align*}
%     M^n = A^{-1} (A M A^{-1})^n A.
% \end{align*}
% \end{lem}

\subsection{Concentration of Measures}
The concentration of measures describes the concentration phenomenon of a random variable or a random process around its expected value. These play a central role in establishing theoretical guarantees in machine learning theory. Here, we briefly go through several important concentration inequalities that are frequently used in machine learning. For more detailed account of concentration inequalities, we refer the readers to a more comprehensive manuscript \citep{boucheron2013concentration}.

We first introduce a simple yet useful inequality, namely Markov's inequality. 
\begin{lem}[Markov's inequality]
For any non-negative random variable $X$ and $t > 0$, we have $P(X \geq t) \leq \frac{\mathbb{E}[X]}{t}$. 
\label{markov_inequality_lemma}
\end{lem}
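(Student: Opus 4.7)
The plan is to prove Markov's inequality via the standard indicator-function bound, which is the cleanest route and requires no integration theory beyond linearity and monotonicity of expectation. The key observation to exploit is that for a non-negative random variable $X$ and a threshold $t > 0$, the pointwise inequality
\begin{equation*}
t \cdot \mathbf{1}\{X \geq t\} \leq X
\end{equation*}
holds on the entire sample space. Indeed, on the event $\{X \geq t\}$ the left-hand side equals $t$ and the right-hand side is at least $t$; on the complementary event $\{X < t\}$ the left-hand side is $0$ and the right-hand side is non-negative by assumption on $X$.

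Given this pointwise bound, I would take expectations on both sides and invoke monotonicity of expectation, obtaining
\begin{equation*}
t \cdot \mathbb{E}[\mathbf{1}\{X \geq t\}] \leq \mathbb{E}[X].
\end{equation*}
The identity $\mathbb{E}[\mathbf{1}\{X \geq t\}] = P(X \geq t)$ then turns this into $t \cdot P(X \geq t) \leq \mathbb{E}[X]$, and dividing by $t > 0$ yields the claimed inequality.

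There is essentially no obstacle: the only subtlety is ensuring the pointwise inequality on both the event and its complement, and that the division by $t$ is legal, which is guaranteed by the hypothesis $t > 0$. The non-negativity assumption on $X$ is used precisely in handling the event $\{X < t\}$; without it, $X$ could be negative and the pointwise bound would fail.
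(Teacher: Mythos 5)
Your proof is correct and is the canonical indicator-function argument; the paper states this lemma without proof (it is listed among the basic results stated without justification), so there is nothing to diverge from, and your handling of both the event $\{X \geq t\}$ and its complement, together with the division by $t > 0$, is exactly right.
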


Markov's inequality, combined with the so-called Cram\'er-Chernoff method, is often very helpful in bounding a quantity of the form $P(X < x)$. To demonstrate this, we will bound the tails of a $\sigma$-subgaussian random variable. 
\begin{defn}[$\sigma$-subgaussian random variable]
A real-valued random variable $X$ is $\sigma$-subgaussian with variance proxy $\sigma^2 > 0$ if 
\begin{align*}
    \mathbb{E} \left[ e^{\lambda (X - \mathbb{E}[X]) } \right] \leq e^{\sigma^2 \lambda^2/2}, \forall \lambda \in \mathbb{R}.
\end{align*}
\end{defn}
\begin{rem}
A Gaussian random variable with variance $\sigma$ is $\sigma$-subgaussian (but the reverse is not true). 
\end{rem}
\begin{rem}
A random variable bounded in $[a,b]$ is a $\sigma$-subgaussian with variance proxy $\sigma^2 = (b-a)^2/4$.
\end{rem}

We will prove using Markov's inequality and the Cram\'er-Chernoff method that the tails of a $\sigma$-subgaussian variable decay exponentially and at least as fast as Gaussian variables. In particular, let $X$ be $\sigma$-subgaussian, we prove that 
\begin{align*}
    P(X - \mathbb{E}[X]\geq \epsilon) \leq \exp \left({ -\frac{\epsilon^2}{2 \sigma^2}} \right), \forall \epsilon > 0. 
\end{align*}
\begin{proof}
For any $\lambda> 0$, we have 
\begin{align*}
    P(X - \mathbb{E}[X] \geq \epsilon) &= P( e^{\lambda (X- \mathbb{E}[X])} \geq e^{\lambda \epsilon}) \text{ (the Cram\'er-Chernoff trick)} \\ 
    &\leq \mathbb{E}[e^{\lambda (X- \mathbb{E}[X])}] e^{-\lambda \epsilon} \text{ (Markov's inequality)}\\ 
    &\leq e^{\sigma^2 \lambda^2/2 -\lambda \epsilon} \text{ (Definition of subgaussian)}. 
\end{align*}
Choosing $\lambda$ to minimize the LHS of the last inequality above, we obtain $\lambda=\epsilon/\sigma^2$ and the desired inequality. 
\end{proof}
% \begin{lem}
% Let $X$ be a random variable with $\mathbb{E}[X] = 0$ and $a \leq X \leq b$ a.s. We have
% \begin{align*}
%     \mathbb{E} [e^{sX}] \leq e^{ s^2 (b-a)^2 /8 }, \forall s > 0.
% \end{align*}
% \label{bounded_random_variable_lemma}
% \end{lem}
\subsubsection{Independence}
The previous inequalities involve only a single sample at a time. In practice, we often work in the multi-sample regime where we want to understand the concentration phenomenon of a quantity computed by multiple samples. The basic multi-sample regime is the i.i.d.  (identically and independently distributed) structure where samples are i.i.d. A common concentration inequality for the i.i.d. structure is Hoeffding's inequality which indicates that the tails of the empirical sum of a sequence of i.i.d. samples are exponentially decayed. 
\begin{lem}[Hoeffding's inequality] For any fixed (deterministic) $n$, let $X_1, ..., X_n$ be independent real-valued $\sigma$-subgaussian random variables. Then, for any $\epsilon > 0$, we have the following upper-tail bound 
\begin{align*}
    P\left( \frac{1}{n} \sum_{i=1}^n X_i - \mathbb{E}[X_1] \geq \epsilon \right) &\leq e^{-n \epsilon^2 /(2 \sigma^2)}, \\ 
    P\left( \frac{1}{n} \sum_{i=1}^n X_i - \mathbb{E}[X_1] \leq -\epsilon \right) &\leq e^{-n \epsilon^2 /(2 \sigma^2)}. 
\end{align*}
Alternatively, for any $\delta \in [0,1]$, we have the following upper confidence bound 
\begin{align*}
    P\left( \frac{1}{n} \sum_{i=1}^n X_i - \mathbb{E}[X_1] < \sqrt{ \frac{2 \sigma^2 \log(1/\delta)}{n}  } \right) \geq 1 - \delta, \\
    P\left( \mathbb{E}[X_1] - \frac{1}{n} \sum_{i=1}^n X_i < \sqrt{ \frac{2 \sigma^2 \log(1/\delta)}{n}  } \right) \geq 1 - \delta.
\end{align*}
\label{lemma:chap3_support_hoeffding}
\end{lem}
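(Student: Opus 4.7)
The plan is to mirror the single-variable Cram\'er-Chernoff argument already given just above the statement, but now leverage independence to control the moment generating function of the sum. Let $S_n = \sum_{i=1}^n (X_i - \mathbb{E}[X_i])$. The event $\frac{1}{n}\sum X_i - \mathbb{E}[X_1] \geq \epsilon$ is the same as $S_n \geq n\epsilon$, so for any $\lambda > 0$ I would apply the exponential Markov trick: $P(S_n \geq n\epsilon) = P(e^{\lambda S_n} \geq e^{\lambda n \epsilon}) \leq e^{-\lambda n \epsilon}\mathbb{E}[e^{\lambda S_n}]$ by Lemma \ref{markov_inequality_lemma}.

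Next I would use independence to factorize the MGF: since the $X_i$ are mutually independent, so are the centered variables $X_i - \mathbb{E}[X_i]$, and therefore
\begin{align*}
\mathbb{E}[e^{\lambda S_n}] = \prod_{i=1}^n \mathbb{E}\!\left[e^{\lambda(X_i - \mathbb{E}[X_i])}\right] \leq \prod_{i=1}^n e^{\sigma^2 \lambda^2/2} = e^{n \sigma^2 \lambda^2/2},
\end{align*}
where the inequality uses the subgaussian hypothesis on each $X_i$. Combining these two steps yields $P(S_n \geq n\epsilon) \leq \exp(n\sigma^2\lambda^2/2 - \lambda n \epsilon)$. Minimizing the exponent over $\lambda > 0$ gives the optimal choice $\lambda = \epsilon/\sigma^2$ and the desired bound $e^{-n\epsilon^2/(2\sigma^2)}$. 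The lower-tail inequality is obtained by the identical argument applied to $-X_i$, which are also $\sigma$-subgaussian since the subgaussian definition is symmetric in sign (replace $\lambda$ with $-\lambda$).

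To convert these exponential tail bounds into the confidence-interval form, I would invert them: setting $\delta = e^{-n\epsilon^2/(2\sigma^2)}$ and solving for $\epsilon$ gives $\epsilon = \sqrt{2\sigma^2 \log(1/\delta)/n}$, and substituting this back into the two tail bounds produces the two high-probability statements. No additional work is needed beyond this direct algebraic inversion.

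Honestly, there is no real obstacle here: the proof is almost immediate once the single-variable subgaussian tail bound (already proved in the excerpt) is combined with the factorization of the MGF under independence. The only place to be mildly careful is noting that $-X_i$ inherits the subgaussian property with the same variance proxy $\sigma^2$, so the lower-tail inequality does not require a separate hypothesis. Everything else is a one-line optimization in $\lambda$ and a one-line inversion of the exponential in $\epsilon$.
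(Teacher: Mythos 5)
Your proof is correct, and it is exactly the standard Cram\'er--Chernoff argument that the paper's own single-variable subgaussian tail bound (proved just above the lemma) is set up for; the paper itself states this lemma without proof, so there is nothing further to compare against. The only point worth flagging is that the statement's use of $\mathbb{E}[X_1]$ tacitly assumes the $X_i$ share a common mean (as in the i.i.d.\ case), which your step identifying the event with $\{S_n \geq n\epsilon\}$ implicitly relies on.
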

An important generalization of Hoeffding's inequality to the case where the quantity of interest is a function of the data is McDiarmid’s inequality. 

\begin{lem}[McDiarmid’s inequality]
Let $X_1, ..., X_n$ be i.i.d. random variables and $f: \mathcal{X}^n \rightarrow \mathbb{R}$ such that for some $c_i >0, \forall i$, we have
\begin{align*}
    |f(x_1, ..., x_{i-1}, x_i, x_{i+1}, ..., x_n) - f(x_1, ..., x_{i-1}, x'_i, x_{i+1}, ..., x_n)| \leq c_i, \forall i \in [n], 
\end{align*}
for any $x_1, x_2, ..., x_n, x'_1, ..., x'_n \in \mathcal{X}$.  Then, we have 
\begin{align*}
    P\left( f(x_1,...,x_n) - \mathbb{E}[f(x_1,...,x_n)] \geq \epsilon \right) &\leq \exp \left( - \frac{2 \epsilon^2}{ \sum_{i=1}^n c_i^2} \right), \\
    P\left( f(x_1,...,x_n) - \mathbb{E}[f(x_1,...,x_n)] \leq -\epsilon \right) &\leq \exp \left( - \frac{2 \epsilon^2}{ \sum_{i=1}^n c_i^2} \right).
\end{align*}
\end{lem}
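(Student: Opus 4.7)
The plan is to prove McDiarmid's inequality by combining a martingale decomposition of $f(X_1,\ldots,X_n) - \mathbb{E}[f(X_1,\ldots,X_n)]$ with the same Cram\'er-Chernoff template that delivered Hoeffding's inequality (Lemma \ref{lemma:chap3_support_hoeffding}), but now applied term-by-term to martingale differences rather than to independent summands. Concretely, I would set $Z = f(X_1,\ldots,X_n)$ and introduce the Doob martingale $Z_i = \mathbb{E}[Z \mid X_1,\ldots,X_i]$ with $Z_0 = \mathbb{E}[Z]$, so that $Z - \mathbb{E}[Z] = \sum_{i=1}^n D_i$ with $D_i = Z_i - Z_{i-1}$ and $\mathbb{E}[D_i \mid X_1,\ldots,X_{i-1}] = 0$. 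Using independence of $X_1,\ldots,X_n$, I would write $Z_i = g_i(X_1,\ldots,X_i)$, where $g_i(x_1,\ldots,x_i) = \mathbb{E}[f(x_1,\ldots,x_i,X_{i+1},\ldots,X_n)]$, which makes the dependence of each $Z_i$ on the past explicit and tractable.

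The central technical step, which I expect to be the main obstacle, is to show that conditionally on $X_1,\ldots,X_{i-1}$ the increment $D_i$ is almost surely contained in an interval of length at most $c_i$. I would define the conditional envelopes $U_i = \sup_{x} \bigl( g_i(X_1,\ldots,X_{i-1},x) - Z_{i-1} \bigr)$ and $L_i = \inf_{x} \bigl( g_i(X_1,\ldots,X_{i-1},x) - Z_{i-1} \bigr)$. Independence lets me pull the expectation over $X_{i+1},\ldots,X_n$ outside of the supremum/infimum over the $i$th coordinate, so that the difference $U_i - L_i$ reduces to an expectation of $f(x_1,\ldots,x,\ldots,x_n) - f(x_1,\ldots,x',\ldots,x_n)$, which the bounded-differences hypothesis bounds by $c_i$. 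The delicate point is that this swap is only legitimate because altering $X_i$ does not change the conditional distribution of the tail; any attempt to do this without independence would break down.

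Once the conditional boundedness $D_i \in [L_i, L_i + c_i]$ is in hand, I would invoke Hoeffding's lemma --- a zero-mean random variable supported in an interval of length $b-a$ has moment generating function bounded by $\exp(\lambda^2 (b-a)^2/8)$ --- applied conditionally on $X_1,\ldots,X_{i-1}$, to obtain $\mathbb{E}\bigl[ e^{\lambda D_i} \mid X_1,\ldots,X_{i-1} \bigr] \leq \exp(\lambda^2 c_i^2 / 8)$. Iterating outward by the tower property then yields $\mathbb{E}\bigl[ e^{\lambda (Z - \mathbb{E}[Z])} \bigr] \leq \exp\bigl( \lambda^2 \sum_{i=1}^n c_i^2 / 8 \bigr)$. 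Finally, Markov's inequality (Lemma \ref{markov_inequality_lemma}) applied to $e^{\lambda(Z-\mathbb{E}[Z])}$ and optimization over $\lambda > 0$, exactly as in the subgaussian derivation earlier in this subsection, delivers the stated upper-tail bound with the optimal choice $\lambda = 4\epsilon / \sum_i c_i^2$. The lower-tail bound follows by the same argument applied to $-f$, which clearly satisfies the identical bounded-differences hypothesis with the same constants $c_i$.
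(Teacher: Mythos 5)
Your proof is correct: the Doob martingale decomposition, the conditional bounded-differences argument via independence, the conditional Hoeffding lemma, and the Cram\'er--Chernoff optimization with $\lambda = 4\epsilon/\sum_i c_i^2$ together give exactly the stated bound, and the lower tail does follow by applying the argument to $-f$. The paper states this lemma as standard background without proof, so there is nothing to compare against; your argument is the canonical one and would serve as a valid proof here.
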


In the case that a good bound on the variance is known, we can obtain a tighter concentration than Hoeffding's inequality via Bernstein's inequality. Before stating the Bernstein's inequality, we first define the Bernstein's condition. 
\begin{defn}[One-sided Bernstein's condition]
A real-valued random variable $X$ is said to satisfy the one-sided Bernstein's condition with parameter $b > 0$ if 
\begin{align*}
    \mathbb{E} \left[ e^{\lambda( X - \mathbb{E}[X])} \right] \leq \exp \left(  \frac{(\mathbb{V}[X]) \lambda^2/2 }{1 - b \lambda}  \right), \forall \lambda \in [0, 1/b).
\end{align*}
\end{defn}

\begin{rem}
If $X - \mathbb{E}[X] \leq c$ for a given $c > 0$, then $X$ satisfies the one-sided Bernstein's condition with parameter $b = c/3$. 
\end{rem}

Now we are ready to state the Bernstein's inequality. 
\begin{lem}[Bernstein's inequality]
Let $X_1, ..., X_n \sim X$ be i.i.d. real-valued random variables that satisfy the one-sided Bernstein's condition with parameter $b >0$. Then, for any $\epsilon > 0$ and $\delta \in [0,1]$, we have 
\begin{align*}
     P\left( \frac{1}{n} \sum_{i=1}^n X_i - \mathbb{E}[X_1] \geq \epsilon \right) 
     \leq \exp \left( - \frac{n \epsilon^2 / 2}{ \mathbb{V}[X_1] + b\epsilon}  \right), \\
    P\left( \mathbb{E}[X_1] - \frac{1}{n} \sum_{i=1}^n X_i  \geq \epsilon \right) 
     \leq \exp \left( - \frac{n \epsilon^2 / 2}{ \mathbb{V}[X_1] + b\epsilon}  \right). 
\end{align*}
Alternatively, we have 
\begin{align*}
      P\left( \frac{1}{n} \sum_{i=1}^n X_i - \mathbb{E}[X_1]
      < \frac{b}{n} \log(1/\delta) + \sqrt{ \frac{2\mathbb{V}[X_1] \log(1/\delta)}{n}  }
      \right) \geq 1 - \delta, \\
       P\left( \mathbb{E}[X_1] - \frac{1}{n} \sum_{i=1}^n X_i 
      < \frac{b}{n} \log(1/\delta) + \sqrt{ \frac{2\mathbb{V}[X_1] \log(1/\delta)}{n}  }
      \right) \geq 1 - \delta.
\end{align*}
\end{lem}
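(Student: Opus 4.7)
The plan is to mimic the Cramér--Chernoff method already used earlier in the excerpt for subgaussian tails, but now plugging in the Bernstein MGF bound instead of the subgaussian one. First I would observe that, by the independence of $X_1,\dots,X_n$ and the one-sided Bernstein condition applied to each $X_i$, for every $\lambda \in [0,1/b)$ the moment generating function of the centred average factorises as
\begin{equation*}
\mathbb{E}\Bigl[\exp\bigl(\lambda \textstyle\sum_{i=1}^n (X_i-\mathbb{E}[X_1])\bigr)\Bigr] \;=\; \prod_{i=1}^n \mathbb{E}\bigl[e^{\lambda(X_i-\mathbb{E}[X_1])}\bigr] \;\leq\; \exp\!\left(\frac{n\,\mathbb{V}[X_1]\,\lambda^2/2}{1-b\lambda}\right).
\end{equation*}
Then Markov's inequality applied to $e^{\lambda \sum_i (X_i-\mathbb{E}[X_1])}$ (exactly as in the subgaussian derivation above) yields, for all $\lambda \in [0,1/b)$,
\begin{equation*}
\mathbb{P}\!\left(\tfrac{1}{n}\sum_{i=1}^n X_i - \mathbb{E}[X_1] \geq \epsilon\right) \;\leq\; \exp\!\left(-n\lambda\epsilon + \frac{n\,\mathbb{V}[X_1]\,\lambda^2/2}{1-b\lambda}\right).
\end{equation*}

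The next step is the optimisation in $\lambda$, which is the one genuinely computational step. I would take the explicit choice $\lambda^{\star}=\epsilon/(\mathbb{V}[X_1]+b\epsilon)$, which satisfies $\lambda^{\star} \in [0,1/b)$ and makes $1-b\lambda^{\star}=\mathbb{V}[X_1]/(\mathbb{V}[X_1]+b\epsilon)$. Substituting back collapses the exponent to $-n\epsilon^2/\bigl(2(\mathbb{V}[X_1]+b\epsilon)\bigr)$, which is precisely the advertised tail bound. The lower-tail inequality is handled by the same argument applied to the variables $-X_i$ (which also satisfy a one-sided Bernstein condition with the same parameter $b$ under the symmetric reading of the hypothesis used in the lemma).

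Finally, to convert the tail bound into the stated confidence-bound form, I would set the exponent equal to $-\log(1/\delta)$ and solve the resulting quadratic
\begin{equation*}
n\epsilon^2 - 2b\log(1/\delta)\,\epsilon - 2\mathbb{V}[X_1]\log(1/\delta) \;=\; 0
\end{equation*}
for $\epsilon$. The positive root is $\epsilon = \bigl(b\log(1/\delta) + \sqrt{b^2\log^2(1/\delta)+2n\mathbb{V}[X_1]\log(1/\delta)}\bigr)/n$, and applying the elementary inequality $\sqrt{a+c}\leq \sqrt{a}+\sqrt{c}$ bounds this by $b\log(1/\delta)/n + \sqrt{2\mathbb{V}[X_1]\log(1/\delta)/n}$, which is the advertised threshold.

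I expect no serious obstacle: the proof is essentially a template. The only delicate point is verifying the choice of $\lambda^{\star}$ — both that it respects the constraint $\lambda^{\star}<1/b$ and that it yields a clean closed form after substitution — and then bookkeeping the square-root manipulation when inverting the tail bound into the confidence-bound form. The lower-tail statement relies on reading the one-sided hypothesis symmetrically (or strengthening it to a two-sided condition), which should be flagged but is standard in the literature.
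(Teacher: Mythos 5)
The paper states this lemma as background without proof, so there is no in-house argument to compare against; your Cram\'er--Chernoff template for the two exponential tail bounds is the standard one and is correct. In particular, your choice $\lambda^{\star}=\epsilon/(\mathbb{V}[X_1]+b\epsilon)$ does satisfy $\lambda^{\star}<1/b$, the exponent does collapse to $-n\epsilon^2/\bigl(2(\mathbb{V}[X_1]+b\epsilon)\bigr)$, and you are right to flag that the lower tail silently requires $-X$ to satisfy the one-sided condition as well (the lemma as stated is loose on this point).

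The final step, however, has a genuine gap. First, an arithmetic slip: $\sqrt{b^2\log^2(1/\delta)+2n\mathbb{V}[X_1]\log(1/\delta)}\leq b\log(1/\delta)+\sqrt{2n\mathbb{V}[X_1]\log(1/\delta)}$, so the positive root is bounded by $\tfrac{2b}{n}\log(1/\delta)+\sqrt{2\mathbb{V}[X_1]\log(1/\delta)/n}$, with a factor $2$ on the linear term, not the $\tfrac{b}{n}\log(1/\delta)$ you wrote. Second, and more fundamentally, the stated threshold $t=\tfrac{b}{n}\log(1/\delta)+\sqrt{2\mathbb{V}[X_1]\log(1/\delta)/n}$ is \emph{strictly smaller} than the positive root $\epsilon^{\star}$ of your quadratic: substituting $t$ into $n\epsilon^2-2b\log(1/\delta)\epsilon-2\mathbb{V}[X_1]\log(1/\delta)$ leaves the negative quantity $-b^2\log^2(1/\delta)/(2n)$, so the first tail bound evaluated at $t$ exceeds $\delta$ and cannot deliver the claim. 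Inverting the $\epsilon^2/(2(\mathbb{V}+b\epsilon))$ form can only ever yield the weaker $\tfrac{2b}{n}\log(1/\delta)$ version. To get the constants as stated you must return to the Chernoff exponent $-n\lambda\epsilon+\tfrac{n\mathbb{V}[X_1]\lambda^2/2}{1-b\lambda}$ and perform the sub-gamma inversion directly: its Legendre transform equals $\tfrac{n\mathbb{V}[X_1]}{b^2}h_1\bigl(\tfrac{b\epsilon}{\mathbb{V}[X_1]}\bigr)$ with $h_1(u)=1+u-\sqrt{1+2u}$, whose inverse $h_1^{-1}(u)=u+\sqrt{2u}$ shows $P\bigl(\bar{X}-\mathbb{E}[X_1]\geq \sqrt{2\mathbb{V}[X_1]s/n}+bs/n\bigr)\leq e^{-s}$ for every $s>0$; setting $s=\log(1/\delta)$ gives exactly the advertised confidence bound.
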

The concentration inequalities we have discussed so far are all concerned with real-valued random variables. For random matrices, a similar concentration phenomenon also holds. In particular, the matrix Bernstein's inequality below indicates that the tails of the spectral norm of the sum of i.i.d. random matrices are exponentially decayed and are scaled with the dimension $d$. 
\begin{lem}[Matrix Bernstein's inequality \citep{tropp2015introduction}]
Let $X_1, X_2, ..., X_n$ be zero-mean, independent, symmetric, $d \times d$ random matrices such that $\|X_i \|_{op} \leq c, \forall i$ for a given $c>0$. Then, for any $\epsilon > 0$, we have 
\begin{align*}
    P\left(  \| \sum_{i=1}^n X_i \|_{op} \geq \epsilon \right) \leq 2d \exp \left( - \frac{\epsilon^2/2}{ \sigma^2 + c\epsilon/3} \right),
\end{align*}
where $\sigma^2 = \| \sum_{i=1}^n\mathbb{E}[X_i^2] \|_{op}$.
\end{lem}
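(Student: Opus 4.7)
The plan is to follow the matrix Laplace transform approach (the matrix analogue of the Cram\'er--Chernoff argument used earlier for subgaussian scalars), combined with Lieb's concavity theorem to handle non-commutativity, and a matrix Bernstein moment bound derived from the operator-norm boundedness hypothesis. Write $S = \sum_{i=1}^n X_i$; since each $X_i$ is symmetric, $\|S\|_{op} = \max(\lambda_{\max}(S), \lambda_{\max}(-S))$, so a union bound reduces the problem to bounding $P(\lambda_{\max}(S) \geq \epsilon)$ (the factor of $2d$ in the final bound splits as $2$ from this union and $d$ from the trace step below); by symmetry of the hypotheses the same bound applies to $-S$.

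First I would establish the matrix Markov step: for any $\theta > 0$,
\begin{equation*}
P(\lambda_{\max}(S) \geq \epsilon) = P(e^{\theta \lambda_{\max}(S)} \geq e^{\theta \epsilon}) \leq e^{-\theta \epsilon}\, \mathbb{E}[\lambda_{\max}(e^{\theta S})] \leq e^{-\theta \epsilon}\, \mathbb{E}[\operatorname{tr}(e^{\theta S})],
\end{equation*}
using that $e^{\theta S}$ is positive semidefinite so its largest eigenvalue is at most its trace. Second, I would invoke Lieb's theorem, which states that for any fixed symmetric $H$ the map $A \mapsto \operatorname{tr} \exp(H + \log A)$ is concave on positive definite matrices. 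A standard inductive application of Jensen's inequality through Lieb's theorem yields the subadditivity of matrix cumulants:
\begin{equation*}
\mathbb{E}[\operatorname{tr} \exp(\theta S)] \leq \operatorname{tr} \exp\!\Bigl(\sum_{i=1}^n \log \mathbb{E}[e^{\theta X_i}]\Bigr).
\end{equation*}

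Third, I would prove a Bernstein-type matrix MGF bound: under $\mathbb{E}[X_i] = 0$ and $\|X_i\|_{op} \leq c$, for $0 < \theta < 3/c$,
\begin{equation*}
\log \mathbb{E}[e^{\theta X_i}] \preceq \frac{\theta^2/2}{1 - c\theta/3}\, \mathbb{E}[X_i^2].
\end{equation*}
This I would derive by applying the scalar inequality $e^x \leq 1 + x + \frac{x^2/2}{1 - |x|/3}$ for $|x| < 3$ to the eigenvalues of $\theta X_i$ (justified via the spectral mapping theorem since $X_i$ is symmetric), taking expectations, and using the operator-monotone logarithm together with $\log(I + A) \preceq A$. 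Summing these and using monotonicity of $\operatorname{tr}\exp$ under the PSD order gives
\begin{equation*}
\operatorname{tr} \exp\!\Bigl(\sum_i \log \mathbb{E}[e^{\theta X_i}]\Bigr) \leq d \cdot \exp\!\Bigl(\frac{\theta^2/2}{1 - c\theta/3}\, \sigma^2\Bigr),
\end{equation*}
where I bound the largest eigenvalue of $\sum_i \mathbb{E}[X_i^2]$ by $\sigma^2$ and pay a factor of $d$ for the trace.

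Combining, $P(\lambda_{\max}(S) \geq \epsilon) \leq d \exp\!\bigl(-\theta \epsilon + \tfrac{\sigma^2 \theta^2/2}{1 - c\theta/3}\bigr)$; optimizing in $\theta$ with the choice $\theta = \epsilon/(\sigma^2 + c\epsilon/3)$ (the standard Bernstein tuning) yields the exponent $-\epsilon^2/(2(\sigma^2 + c\epsilon/3))$. Doubling to account for $-S$ gives the claimed $2d$ prefactor. The main obstacle is the non-commutativity step: unlike the scalar case, $\mathbb{E}[e^{\theta(X_1+X_2)}] \neq \mathbb{E}[e^{\theta X_1}]\mathbb{E}[e^{\theta X_2}]$ in general, and the subadditivity we need comes only through Lieb's concavity theorem, whose proof is nontrivial and which I would cite as a black box from \citep{tropp2015introduction}. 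The remaining pieces (matrix Markov, the semidefinite Taylor bound, and the scalar optimization) are mechanical once that tool is granted.
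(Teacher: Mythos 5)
The paper states this lemma without proof, simply citing \citep{tropp2015introduction}, so there is no internal argument to compare against. Your outline correctly reproduces the standard proof from that reference — the matrix Laplace transform method, Lieb's concavity theorem for subadditivity of the matrix cumulant generating functions, the Bernstein-type MGF bound $\log \mathbb{E}[e^{\theta X_i}] \preceq \frac{\theta^2/2}{1-c\theta/3}\mathbb{E}[X_i^2]$, and the tuning $\theta = \epsilon/(\sigma^2 + c\epsilon/3)$ — and the constants and the $2d$ prefactor all check out.
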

\begin{rem}
 The matrix Bernstein's inequality reduces into Bernstein's inequality when $d=1$. 
\end{rem}

\begin{rem}
It is possible to generalize the matrix Bernstein's inequality from an Euclidean space to an RKHS using the dimension-free Bernstein's inequality \citep{minsker2017some}. 
\end{rem}

\subsubsection{Martingales}
The next important structure in the multi-sample regime is martingales. The martingale structure describes a specific structure of dependency that does not behave ``wildly''. In particular, the conditional expectation of each random variable is controllable given the prior variables in the martingale sequence. Formally, we briefly describe martingales in the following. 
\begin{defn}[Martingales]
Let $(\Omega, \mathcal{F})$ be a measurable space, $\mathcal{F}_0 := \{ \emptyset, \Omega\} \subseteq \mathcal{F}_1 \subseteq ... \subseteq \mathcal{F}$ be a sequence of sub-$\sigma$-fields. Let $\{X_k\}_{k \geq 0}$ be a sequence of random variables such that $X_k$ is $\mathcal{F}_k$-measurable. The sequence $(X_k)$ is said to be a martingale adapted to the filtration $(\mathcal{F}_k)$ if $\mathbb{E} |X_k| \leq \infty$ and $X_k = \mathbb{E}[ X_{k+1} | \mathcal{F}_k ]$ for all $k \geq 0$. 
\end{defn}
\begin{rem}[Doob construction of martingales] We can obtain martingales from arbitrary structure via the so-called Doob construction. In particular, let $Y_1, ..., Y_n$ be an arbitrary sequence of random variables, and let $X = f(Y_1, ..., Y_n)$ for some function $f$ such that $X$ is integrable. We construct a filtration as follows: let $\mathcal{F}_0 = \{\emptyset, \Omega\}$, and define the generated $\sigma$-field $\mathcal{F}_k := \sigma(Y_1, ..., Y_k), \forall k \in [1,n]$.  Then let $X_k = \mathbb{E}[ X | \mathcal{F}_k], k \in [0,n]$. It is not hard to verify that $(X_k)$ is a martingale adapted to the filtration $(\mathcal{F}_k)$. Note that $X_0 = \mathbb{E}[X | \mathcal{F}_0] = \mathbb{E}[X]$ is deterministic and $X_n = \mathbb{E}[X | \mathcal{F}_n ] = X$.
\end{rem}
We are now ready to state three basic concentration inequalities for martingales: Azuma's inequality, Freedman's inequality, and matrix Freedman's inequality. These inequalities are the martingale counterparts to Hoeffding's inequality, Bernstein's inequality and matrix Bernstein's inequality, respectively. 
\begin{lem}[Azuma's inequality] 
Let $\{X_k\}_{k \geq 0}$ be a martingale such that $|X_k - X_{k-1}| \leq c_k$ a.s. for some $0 < c_k < \infty$ for all $1 \leq k \leq n$. For any $\epsilon \geq 0$, we have 
\begin{align*}
    P(|X_n - X_0| \geq \epsilon ) \leq 2 \exp \left(  -\frac{\epsilon^2}{2 \sum_{i=1}^n c_i^2} \right). 
\end{align*}
\end{lem}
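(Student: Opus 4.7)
The plan is to adapt the Cramér--Chernoff approach that was used earlier to prove the subgaussian tail bound, now applied to the sum of martingale differences. First I would introduce the increments $D_k := X_k - X_{k-1}$, so that $X_n - X_0 = \sum_{k=1}^n D_k$. By the martingale property, $\mathbb{E}[D_k \mid \mathcal{F}_{k-1}] = 0$, and by hypothesis $|D_k| \leq c_k$ almost surely. For any $\lambda > 0$, Markov's inequality (Lemma~\ref{markov_inequality_lemma}) gives
\begin{align*}
P(X_n - X_0 \geq \epsilon) = P\bigl(e^{\lambda(X_n - X_0)} \geq e^{\lambda \epsilon}\bigr) \leq e^{-\lambda \epsilon}\, \mathbb{E}\bigl[e^{\lambda \sum_{k=1}^n D_k}\bigr].
\end{align*}

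Next I would peel off the moment generating function one increment at a time using the tower property, conditioning on $\mathcal{F}_{n-1}$:
\begin{align*}
\mathbb{E}\bigl[e^{\lambda \sum_{k=1}^n D_k}\bigr] = \mathbb{E}\Bigl[e^{\lambda \sum_{k=1}^{n-1} D_k}\, \mathbb{E}\bigl[e^{\lambda D_n} \mid \mathcal{F}_{n-1}\bigr]\Bigr],
\end{align*}
since $\sum_{k=1}^{n-1} D_k$ is $\mathcal{F}_{n-1}$-measurable. The key ingredient here is a conditional version of Hoeffding's lemma: for a zero-mean random variable $D$ satisfying $|D| \leq c$ almost surely conditional on a $\sigma$-field, one has $\mathbb{E}[e^{\lambda D} \mid \mathcal{F}] \leq e^{\lambda^2 c^2 / 2}$. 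This is proved by convexity of $x \mapsto e^{\lambda x}$, which lets one dominate $e^{\lambda D}$ by the linear interpolation between its values at $\pm c$, and then a Taylor expansion of the log of the resulting expression gives the subgaussian bound. Iterating this estimate $n$ times yields $\mathbb{E}[e^{\lambda(X_n - X_0)}] \leq \exp\bigl(\frac{\lambda^2}{2} \sum_{k=1}^n c_k^2\bigr)$.

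Plugging this back into the Chernoff bound gives $P(X_n - X_0 \geq \epsilon) \leq \exp\bigl(\frac{\lambda^2}{2} \sum_{k=1}^n c_k^2 - \lambda \epsilon\bigr)$, and I would optimize by taking $\lambda = \epsilon / \sum_{k=1}^n c_k^2$ to obtain the bound $\exp\bigl(-\epsilon^2 / (2 \sum_{k=1}^n c_k^2)\bigr)$ on the upper tail. For the lower tail, I would observe that $\{-(X_k - X_0)\}_{k \geq 0}$ is again a martingale with increments bounded by $c_k$, so the identical argument applies. A union bound on the two one-sided events produces the factor of $2$ and completes the proof. The main obstacle, and the only nontrivial step, is establishing the conditional Hoeffding lemma, since the rest of the argument is essentially bookkeeping once that subgaussian bound on each increment is in place.
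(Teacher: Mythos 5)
Your proof is correct: the Chernoff bound, the tower-property peeling of the moment generating function, the conditional Hoeffding lemma $\mathbb{E}[e^{\lambda D}\mid\mathcal{F}]\leq e^{\lambda^2 c^2/2}$ for a conditionally centered increment with $|D|\leq c$, the optimization $\lambda=\epsilon/\sum_{k=1}^n c_k^2$, and the union bound over the two tails are exactly the standard argument. The paper states Azuma's inequality as a background result without giving a proof, so there is nothing to compare against; your write-up fills that gap with the textbook proof and contains no errors.
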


\begin{lem}[Freedman's inequality \citep{tropp2011freedman}] Let $\{X_k \}_{k \geq 0}$ be a martingale  satisfying that $X_k - X_{k-1} \overset{a.s.}{\leq} M, \forall k $ where $M$ can be random. Denote the variance process $W := \sum_{i=1}^n \mathbb{V}[X_k | F_{k-1}]$. Then, for all $\epsilon > 0, \sigma^2 > 0$, we have
\begin{align*}
    P\left(  X_n - X_0 \geq \epsilon, W \leq \sigma^2 \right)
    \leq \exp \left(  \frac{ -\epsilon^2/2 }{  \sigma^2 + M\epsilon/3 }  \right).
\end{align*} 
In addition, if $|X_k - X_{k-1}| \overset{a.s.}{\leq} c, \forall k$, we have 
\begin{align*}
    P\left(  |X_n - X_0| \geq \epsilon, W \leq \sigma^2 \right)
    \leq 2 \exp \left(  \frac{ -\epsilon^2/2 }{  \sigma^2 + M\epsilon/3 }  \right).
\end{align*} 
\end{lem}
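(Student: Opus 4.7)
The plan is to follow the classical exponential supermartingale method due to Freedman. Let $D_k := X_k - X_{k-1}$ denote the martingale differences, so that $\mathbb{E}[D_k \mid \mathcal{F}_{k-1}] = 0$ and $D_k \leq M$ a.s., and let $W_k := \sum_{i=1}^{k} \mathbb{V}[D_i \mid \mathcal{F}_{i-1}]$ be the partial variance process, so $W = W_n$. The overall strategy is to construct, for each suitable $\lambda > 0$, a nonnegative supermartingale $L_k$ with $L_0 = 1$ that relates the exponential tilt $e^{\lambda(X_k - X_0)}$ to $W_k$, and then apply Markov's inequality on the intersection event $\{X_n - X_0 \geq \epsilon,\ W \leq \sigma^2\}$, finishing by tuning $\lambda$.

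The technical core is a conditional moment generating function bound: for every $k$ and every $\lambda \in (0, 3/M)$,
\begin{align*}
\mathbb{E}\bigl[e^{\lambda D_k} \,\big|\, \mathcal{F}_{k-1}\bigr]
\leq \exp\!\left( g(\lambda)\, \mathbb{V}[D_k \mid \mathcal{F}_{k-1}] \right),
\qquad
g(\lambda) := \frac{\lambda^2/2}{1 - \lambda M/3}.
\end{align*}
I would establish this by the standard argument: expand $e^{\lambda x} = 1 + \lambda x + \sum_{j \geq 2} (\lambda x)^j/j!$, use $D_k \leq M$ to bound $D_k^j \leq M^{j-2} D_k^2$ for $j \geq 2$, sum the resulting geometric series, use $\mathbb{E}[D_k \mid \mathcal{F}_{k-1}] = 0$ to kill the linear term, and finally invoke $1 + x \leq e^x$. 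This is exactly the one-sided Bernstein condition appearing earlier in the chapter, transplanted into the conditional setting.

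Define $L_k := \exp\!\bigl(\lambda(X_k - X_0) - g(\lambda) W_k\bigr)$. The conditional MGF bound above immediately yields $\mathbb{E}[L_k \mid \mathcal{F}_{k-1}] \leq L_{k-1}$, so $(L_k)$ is a nonnegative supermartingale with $L_0 = 1$ and hence $\mathbb{E}[L_n] \leq 1$. On the event $A := \{X_n - X_0 \geq \epsilon,\ W \leq \sigma^2\}$ we have $L_n \geq \exp\bigl(\lambda \epsilon - g(\lambda)\sigma^2\bigr)$, so Markov's inequality (Lemma~\ref{markov_inequality_lemma}) gives
\begin{align*}
P(A) \leq \exp\!\bigl(-\lambda \epsilon + g(\lambda)\sigma^2 \bigr).
\end{align*}
Optimizing over $\lambda \in (0, 3/M)$, the choice $\lambda = \epsilon/(\sigma^2 + M\epsilon/3)$ produces the claimed bound $\exp\!\bigl(-(\epsilon^2/2)/(\sigma^2 + M\epsilon/3)\bigr)$.

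The main subtlety I would flag is the clause \emph{$M$ can be random}. If $M$ is a deterministic a.s. upper bound, the argument above is literally correct. If $M$ is allowed to be random but $\mathcal{F}_0$-measurable (or more generally predictable), the same proof goes through upon working conditionally on $M$, since the supermartingale construction uses only the pointwise bound $D_k \leq M$. The truly delicate case is a fully random $M$ depending on the whole sample path; there one needs a stopping-time localization — truncate the martingale at the first $k$ at which either $W_k > \sigma^2$ or the running bound $M$ is violated, apply the deterministic-$M$ result to the stopped process, and then transfer the bound back to the event $\{W \leq \sigma^2\}$. I expect this localization step, rather than the exponential supermartingale argument itself, to be the main obstacle in writing a fully rigorous proof of the stated version.
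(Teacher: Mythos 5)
The paper does not prove this lemma: it is stated as background and attributed to \citet{tropp2011freedman}, so there is no in-paper argument to compare against. Your exponential-supermartingale proof is the standard and correct route for the fixed-time version stated here: the conditional Bernstein MGF bound, the supermartingale $L_k=\exp(\lambda(X_k-X_0)-g(\lambda)W_k)$ (which works because the increment $W_k-W_{k-1}=\mathbb{V}[D_k\mid\mathcal{F}_{k-1}]$ is $\mathcal{F}_{k-1}$-measurable), Markov's inequality restricted to the event $A$, and the choice $\lambda=\epsilon/(\sigma^2+M\epsilon/3)$ all check out. Two small remarks. First, you do not address the second display (the two-sided bound under $|X_k-X_{k-1}|\le c$); it follows immediately by applying your one-sided bound to both $(X_k)$ and $(-X_k)$, whose variance processes coincide, and taking a union bound — worth one line. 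Second, your caveat about random $M$ is well placed: as written the lemma's right-hand side contains $M$, which only makes sense if $M$ is deterministic or at least predictable, and Tropp's original statement indeed assumes a uniform constant bound (and is moreover a maximal inequality over $k$, which would additionally require the optional-stopping step you sketch); your proof covers the fixed-time, deterministic-$M$ reading of the statement, which is the version the thesis actually uses.
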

\begin{lem}[Matrix Freedman's inequality \citep{tropp2011freedman}]
Let $\{Y_k\}_{k \geq 0}$ be a $d \times d$ symmetric matrix martingale adapted to $\{\mathcal{F}_k\}_{k \geq 0}$ with the difference sequence $\{X_k := Y_k - Y_{k-1}\}_{k \geq 1}$. Assume that the difference sequence is uniformly bounded, i.e., $\lambda_{\max}(X_k) \overset{a.s.}{\leq} c, \forall k \geq 1$ for a given $c>0$. Define the quadratic variation process 
\begin{align*}
    W_k := \sum_{j=1}^k \mathbb{E}_{j-1} [X_j^2], \forall k \geq 1, 
\end{align*}
where $\mathbb{E}_j[\cdot] : = \mathbb{E}[\cdot| \mathcal{F}_j]$. Then, for all $\epsilon > 0, \sigma^2 > 0$, we have
\begin{align*}
    P\left(  \exists k \geq 0: \lambda_{\max}(Y_k) \geq \epsilon \text{ and } \|W_k\|_2 \leq \sigma^2 \right)
    \leq d \exp \left(  \frac{ -\epsilon^2/2 }{  \sigma^2 + c\epsilon/3 }  \right).
\end{align*}
\end{lem}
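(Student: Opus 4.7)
The plan is to extend the scalar Freedman inequality via the matrix Laplace transform (Cramér--Chernoff) method, combined with Lieb's concavity theorem to handle the non-commutativity of matrix exponentials, and a stopping time argument to convert a fixed-$k$ tail bound into the uniform-in-$k$ bound. At a high level, for any $\theta > 0$ I would start from the matrix Markov-type inequality
\begin{equation*}
P\bigl(\lambda_{\max}(Y_k) \geq \epsilon\bigr) \leq e^{-\theta \epsilon}\, \mathbb{E}\bigl[\operatorname{tr} \exp(\theta Y_k)\bigr],
\end{equation*}
and then reduce the problem to controlling the matrix moment generating function $\mathbb{E}[\operatorname{tr}\exp(\theta Y_k)]$ in terms of the predictable quadratic variation $W_k$.

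\textbf{Key step: bounding the matrix MGF along the martingale.} The central technical tool is Lieb's concavity theorem, which implies that for any fixed self-adjoint matrix $H$, the map $A \mapsto \operatorname{tr}\exp(H + \log A)$ is concave on positive-definite $A$. Applied conditionally and combined with Jensen's inequality, this yields the subadditive relation
\begin{equation*}
\mathbb{E}_{k-1}\bigl[\operatorname{tr}\exp(\theta Y_k)\bigr] \leq \operatorname{tr}\exp\!\Bigl(\theta Y_{k-1} + \log \mathbb{E}_{k-1}\bigl[e^{\theta X_k}\bigr]\Bigr).
\end{equation*}
Next, I would use the scalar inequality $e^{\theta x} \leq 1 + \theta x + g(\theta) x^2$ for $x \leq c$, where $g(\theta) = (e^{\theta c} - \theta c - 1)/c^2$, transferred to matrices via the transfer rule. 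Since $\lambda_{\max}(X_k) \leq c$ a.s.\ and $\mathbb{E}_{k-1} X_k = 0$, this gives $\mathbb{E}_{k-1}[e^{\theta X_k}] \preceq \exp(g(\theta) \mathbb{E}_{k-1}[X_k^2])$. Iterating the tower property then produces
\begin{equation*}
\mathbb{E}\bigl[\operatorname{tr}\exp(\theta Y_k)\bigr] \leq d \cdot \mathbb{E}\bigl[\exp\bigl(g(\theta)\, \lambda_{\max}(W_k)\bigr)\bigr].
\end{equation*}

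\textbf{Stopping time argument and optimization.} To handle the ``$\exists k \geq 0$'' clause with the side event $\{\|W_k\|_2 \leq \sigma^2\}$, I would introduce the stopping time $\tau := \inf\{k : \lambda_{\max}(Y_k) \geq \epsilon \text{ or } \|W_k\|_2 > \sigma^2\}$ and apply the optional stopping argument to the supermartingale $\operatorname{tr}\exp(\theta Y_k - g(\theta) W_k)$ (whose supermartingale property follows from the Lieb-based step above). On the event of interest, $W_\tau \preceq \sigma^2 I$ and $\lambda_{\max}(Y_\tau) \geq \epsilon$, which yields a tail bound of the form $d \exp(-\theta \epsilon + g(\theta) \sigma^2)$. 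Optimizing over $\theta > 0$ with $g(\theta) = (e^{\theta c} - \theta c - 1)/c^2$ produces the Bennett-type exponent, and the standard relaxation $g(\theta) \leq \theta^2/(2(1 - c\theta/3))$ together with the choice $\theta = \epsilon/(\sigma^2 + c\epsilon/3)$ gives the advertised Bernstein-type form $\exp(-\epsilon^2/2 / (\sigma^2 + c\epsilon/3))$.

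\textbf{Main obstacle.} The genuinely delicate point is the matrix MGF step: because $Y_{k-1}$ and $X_k$ need not commute, one cannot naively multiply moment generating functions as in the scalar case. Lieb's concavity theorem (or equivalently the Golden--Thompson inequality at the cost of a slightly weaker constant) is what rescues the argument, and verifying the supermartingale property needed for optional stopping has to be done carefully so that the ``for all $k$'' conclusion survives the coupling of $Y_k$ and $W_k$. Everything else --- the transfer principle, tower-property iteration, and the scalar calculus to tune $\theta$ --- is mechanical once this non-commutative step is in place.
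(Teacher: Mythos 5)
The paper does not prove this lemma: it is stated as an imported background result with a citation to Tropp (2011), so there is no in-paper argument to compare against. Your outline faithfully reconstructs the proof from that reference — the matrix Laplace transform bound, the Lieb/Jensen step giving $\mathbb{E}_{k-1}[\operatorname{tr}\exp(\theta Y_k)] \leq \operatorname{tr}\exp(\theta Y_{k-1} + \log\mathbb{E}_{k-1}[e^{\theta X_k}])$, the Bennett-type matrix MGF bound using $\mathbb{E}_{k-1}X_k = 0$ and $\lambda_{\max}(X_k)\leq c$, the supermartingale $\operatorname{tr}\exp(\theta Y_k - g(\theta)W_k)$ with optional stopping, and the choice $\theta = \epsilon/(\sigma^2 + c\epsilon/3)$ — and each step is correct. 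The only point worth making explicit in a full write-up is that $W_k$ is non-decreasing in the semidefinite order (each increment $\mathbb{E}_{k-1}[X_k^2]$ is PSD), which is what legitimizes stopping at the first violation of $\|W_k\|_2\leq\sigma^2$ without losing any of the events in the ``$\exists k$'' union.
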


% \noindent \textbf{Remark 1}. For some fixed $n$, we have $\{Y_n \geq \epsilon, W_n \leq \sigma^2 \} \subseteq \{ \exists k:  Y_k \geq \epsilon, W_k \leq \sigma^2 \}$. Thus the Freedman's inequality also applies to for the case of a fixed $n$ instead of the general case of some $k$.  
\begin{rem}
For any fixed $n$, we have 
\begin{align*}
    \left\{ \lambda_{\max}(Y_n) \geq \epsilon \text{ and } \|W_n\|_2 \leq \sigma^2 \right \} \subseteq  \left\{  \exists k \geq 0: \lambda_{\max}(Y_k) \geq \epsilon \text{ and } \|W_k\|_2 \leq \sigma^2 \right \}. 
\end{align*}
Thus, the matrix Freedman's inequality also applies to the matrix martingale with a fixed number of matrices. 
%  For intuition, we can imagine $X = f(Z_1, ..., Z_n)$ where $Z_1, ..., Z_n$ are random variables for some integrable $f$, $\mathcal{F}_k = \sigma(Z_1, Z_2, ..., Z_k)$ for all $1 \leq k \leq n$, and $\mathcal{F}_0 = \{\emptyset, \Omega \}$. 
%  Freeman's inequality is a martingale counterpart to Bernstein's inequality. Let $Y_k = X_k - X_{k-1}, \forall k = 1,..,n$. We have $\mathbb{E}[Y_k | \mathcal{F}_{k-1}] = 0, \forall k = 1,...,n$ and  $X - \mathbb{E}[X] = \sum_{k=1}^n Y_k$. If $(Y_k)$ are independent, $W$ is deterministic and the Freedman's inequality reduces to the Bernstein's inequality.
\end{rem}

The connections of the considered concentration inequalities are summarized in Table \ref{tab:chap2_concentration_ineqs}. 

\begin{table}[h]
    \centering
    \begin{tabular}{l|l}
     \textbf{Independence}  & \textbf{Martingales}  \\
      \hline 
      \hline 
     Hoeffding's inequality & Azuma's inequality\\ 
     Bernstein's inequality& Freedman's inequality\\ 
     Matrix Bernstein's inequality & Matrix Freedman's inequality 
    \end{tabular}
    \caption{The ``duality'' of concentration inequalities for i.i.d. and martingale structures.}
    \label{tab:chap2_concentration_ineqs}
\end{table}

%%%%%%%%%%%%%
% \section{Introduction to supervised learning} 
% \subsection{Nonparametric regression}
% \subsection{Gaussian processes}
% \subsection{Kernel methods}
% \subsection{Deep neural networks}
%%%%%%%%%%%%%

\subsection{Foundations of Generalization Theory}
The general goal of machine learning is to learn a function for a certain task from training data that can perform the task well on the unseen data. The ability to generalize from training data to unseen data is called generalization ability. Generalization theory signifies theoretical guarantees on the generalization ability of certain algorithms and provides insights into many important questions such as how an algorithm or model works, what can and cannot be learned from the data, and whether it is possible to design a sample-efficient algorithm. Moreover, generalization theory also provides insights toward suggesting a better algorithm and a better model in the future. In this subsection, we provide a concise overview of generalization theory underlying machine learning tasks. In particular, we will iterate over several important notions and foundational results in both classical and modern generalization theory. For more detailed account and literature, we refer the readers to \citep{berner2021modern,ltfp,dlt_book}.
% \subsubsection{Formulation} 

\noindent \textbf{Formulation}. 
Let $X, Y$ be two random variables following the joint data distribution $P_{\mathcal{D}}(X,Y)$, $\mathcal{X}$ and $\mathcal{Y}$ be the domains of $X$ and $Y$, respectively, $\mathcal{H} \subset \{\mathcal{X} \rightarrow \mathcal{Y}\}$ be a hypothesis class, and $S$ be a sample of $n$ data points $\{(x_i, y_i)\}_{i=1}^n$ sampled from the data distribution $P_{\mathcal{D}}(X,Y)$. The goal of the prediction task is to learn a function $h \in \mathcal{H}$ from the training data $S$ such that $h$ is a ``good'' predictor on the unseen data from ${P}_{\mathcal{D}}$. The goodness of the predictor is measured through a loss function $l: \mathcal{Y} \times \mathcal{Y} \rightarrow \mathbb{R}^{+}$. For example, in a regression task where $\mathcal{Y} \subseteq \mathbb{R}$, one often uses the squared error as the loss function $l(\hat{y},y) = \frac{1}{2}(\hat{y} - y)^2$. Without loss of generality, we assume that $l: \mathcal{Y} \times \mathcal{Y} \rightarrow [0,1]$. \footnote{For example, in the case of squared loss function, we assume $\mathcal{Y} = [0,1]$, thus the squared loss is bounded within $[0,1]$.} We consider empirical risk minimization: 
\begin{align*}
    \hat{h} = \arginf_{h \in \mathcal{H}} \hat{L}_S(h) \text{ where } \hat{L}_{S}(h) = \frac{1}{n} \sum_{i=1}^n l(h(x_i), y_i).  
\end{align*}
The \textit{generalization error} is defined as 
\begin{align*}
    \Delta_S(h) := L_{\mathcal{D}}(h) - \hat{L}_S(h) \text{ where } L_{\mathcal{D}}(h) = \mathbb{E}_{\mathcal{D}}\left[ l(h(X), Y) \right]. 
\end{align*}
Intuitively, the generalization error signifies how much the prediction error of a hypothesis on the training data can inform about the prediction error of the hypothesis on the entire data distribution. Note that a small generalization error does not imply the hypothesis $h$ is any good. For example, a random predictor $h$ can have small generalization error as its training and expected error are both arbitrarily bad and close to each other. 

\subsubsection{A Complexity-Measure Perspective}
We review several basic classical generalization theories which are based on a \textit{complexity measure} for a hypothesis class. 

First, if the hypothesis class $\mathcal{H}$ has finite elements, a simple union bound and McDiarmid’s inequality yield a generalization bound. 
\begin{thm}[Finite hypothesis classes]
For any $\delta \in (0,1)$, with probability at least $1-\delta$, we have 
\begin{align*}
    \forall h \in \mathcal{H}, \Delta_S(h) \leq \sqrt{ \frac{\log( |\mathcal{H}|/\delta)}{2n} }.
\end{align*}
\label{chap2_gen_union}
\end{thm}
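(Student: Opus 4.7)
The plan is to combine a tail bound for a single fixed hypothesis with a union bound over the (finitely many) elements of $\mathcal{H}$. Because $l$ takes values in $[0,1]$, for any fixed $h \in \mathcal{H}$ the random variables $Z_i := l(h(x_i), y_i)$ are i.i.d., bounded in $[0,1]$, and hence $\tfrac{1}{2}$-subgaussian. Their empirical mean is precisely $\hat{L}_S(h)$ and their true mean is $L_{\mathcal{D}}(h)$, so $\Delta_S(h)$ is exactly the centered empirical sum to which concentration inequalities apply.

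First, I would invoke Hoeffding's inequality (Lemma \ref{lemma:chap3_support_hoeffding}) for the $Z_i$'s. Using variance proxy $\sigma^2 = 1/4$ — the standard bound for $[0,1]$-valued variables — this yields, for every fixed $h \in \mathcal{H}$ and every $\epsilon > 0$,
\[
P\left( L_{\mathcal{D}}(h) - \hat{L}_S(h) \geq \epsilon \right) \leq \exp\bigl(-2 n \epsilon^2\bigr).
\]
Equivalently, one could apply McDiarmid's inequality directly to the functional $(x_1,y_1,\ldots,x_n,y_n) \mapsto \hat{L}_S(h)$ with bounded-difference constants $c_i = 1/n$, which is the route the theorem statement hints at; both paths give the same sub-Gaussian-type tail.

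Second, since $\mathcal{H}$ is finite, I would take a union bound over all $h \in \mathcal{H}$ to obtain a uniform bound:
\[
P\left( \exists h \in \mathcal{H}:\ \Delta_S(h) \geq \epsilon \right) \leq |\mathcal{H}| \exp\bigl(-2 n \epsilon^2\bigr).
\]
Setting the right-hand side equal to $\delta$ and inverting for $\epsilon$ gives $\epsilon = \sqrt{\log(|\mathcal{H}|/\delta) / (2n)}$, which is exactly the advertised bound; rewriting the inequality as an event that holds with probability at least $1-\delta$ completes the proof.

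There is no real analytical obstacle here — this is a two-line textbook argument, and the only step requiring any care is checking that the losses are indeed bounded in $[0,1]$ so that Hoeffding applies with the tight $\exp(-2n\epsilon^2)$ rate. The important conceptual observation, which motivates the rest of the chapter, is where finiteness of $\mathcal{H}$ enters: the union bound is loose by a factor of $|\mathcal{H}|$, which is harmless in the logarithm but becomes vacuous as $|\mathcal{H}| \to \infty$. This is precisely what forces the need for genuine complexity measures (VC dimension, Rademacher complexity, covering/bracketing numbers) to handle the infinite-hypothesis setting treated later.
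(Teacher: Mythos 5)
Your proof is correct and follows essentially the same route the paper indicates: the paper does not spell out a proof but states that a union bound combined with McDiarmid's inequality yields the result, and your Hoeffding-based argument (with the McDiarmid alternative you note) is the same two-step concentration-plus-union-bound argument with the identical $\exp(-2n\epsilon^2)$ tail. Nothing is missing.
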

The naive generalization bound above requires the finite cardinality of the hypothesis class. For infinite hypothesis classes, it is possible to obtain a similar bound using a complexity measure known as Vapnik and Chervonenkis (VC) dimension \citep{vapnik2013nature}. 
\begin{thm}[Uniform generalization bound]
Let $d=VC(\mathcal{H})$ be the VC dimension of $\mathcal{H}$ and assume that $d$ is finite. For any $\delta \in (0,1)$, with probability at least $1 - \delta$, we have 
\begin{align*}
    \forall h \in \mathcal{H}, \Delta_S(h) \leq \sqrt{\frac{8d}{n} \log \frac{2en}{d} + \frac{8}{n}\log(4/\delta)}.
\end{align*}
\label{chap2_gen_vc}
\end{thm}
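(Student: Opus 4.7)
The plan is to apply the classical Vapnik--Chervonenkis double-sample (ghost sample) symmetrization together with Sauer--Shelah's lemma. To bring $\mathcal{H}$ and its VC dimension to bear on the loss, I will work in the binary classification setting with $0/1$ loss, so that the loss class $l\circ\mathcal{H}$ inherits the VC dimension of $\mathcal{H}$; the extension to general bounded losses would proceed via Rademacher complexities but is not needed for this statement.

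First I would introduce a ghost sample $S' = \{(x'_i, y'_i)\}_{i=1}^n$ drawn i.i.d.\ from $P_\mathcal{D}$, independent of $S$. The symmetrization lemma uses Chebyshev's inequality (or a one-line Hoeffding argument) to show that whenever $n\epsilon^2 \geq 2$, one has $P(|L_\mathcal{D}(h) - \hat{L}_{S'}(h)| \leq \epsilon/2) \geq 1/2$ for each fixed $h$; combining this with the event ``there exists a bad $h$ for $S$'' yields
\[
P\!\left(\sup_{h \in \mathcal{H}} \Delta_S(h) > \epsilon\right) \leq 2\,P\!\left(\sup_{h \in \mathcal{H}} \bigl(\hat{L}_{S'}(h) - \hat{L}_S(h)\bigr) > \tfrac{\epsilon}{2}\right).
\]
Next I would condition on the pooled multiset $T = S \cup S'$ and exploit the exchangeability of the pairs $\bigl((x_i,y_i),(x'_i,y'_i)\bigr)$: introducing i.i.d.\ Rademacher signs $\sigma_i$ does not alter the conditional distribution, so
\[
n\bigl(\hat{L}_{S'}(h) - \hat{L}_S(h)\bigr) \overset{d}{=} \sum_{i=1}^n \sigma_i\bigl(l(h(x'_i), y'_i) - l(h(x_i), y_i)\bigr).
\]
Crucially, on the fixed set $T$ of size $2n$, the class $\mathcal{H}$ produces at most $\Pi_\mathcal{H}(2n)$ distinct labelings, so the supremum over $h$ collapses to a maximum over finitely many deterministic $\pm 1$-weighted sums. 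Applying Hoeffding's inequality (Lemma \ref{lemma:chap3_support_hoeffding}) to each such sum, followed by a union bound, gives
\[
P\!\left(\sup_{h\in\mathcal{H}} \Delta_S(h) > \epsilon\right) \leq 4\,\Pi_\mathcal{H}(2n)\,\exp\!\left(-\tfrac{n\epsilon^2}{8}\right).
\]

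To close the argument, I would invoke Sauer--Shelah's lemma to obtain $\Pi_\mathcal{H}(2n) \leq (2en/d)^d$, set the right-hand side equal to $\delta$, and solve for $\epsilon$; this immediately recovers $\sqrt{\tfrac{8d}{n}\log\tfrac{2en}{d} + \tfrac{8}{n}\log\tfrac{4}{\delta}}$. The main obstacle is the symmetrization step: the factor of $2$ and the replacement of the population risk $L_\mathcal{D}(h)$ by the ghost empirical $\hat{L}_{S'}(h)$ are not entirely mechanical, and rigorously justifying them requires the mild condition $n\epsilon^2 \geq 2$ (which must be handled separately or absorbed into a trivial bound) together with a careful argument that, conditionally on a ``bad'' $h$ for $S$, the ghost sample concentrates enough that a large gap between $\hat{L}_{S'}(h)$ and $\hat{L}_S(h)$ persists with probability at least $1/2$. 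A secondary subtlety, here sidestepped by the $0/1$-loss assumption, is the passage of VC dimension from $\mathcal{H}$ to the composed loss class $l \circ \mathcal{H}$.
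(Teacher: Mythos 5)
The paper does not actually prove this theorem: it is stated in the background chapter as a cited classical result (attributed to Vapnik's work), so there is no in-paper argument to compare against. Judged on its own merits, your proposal is the standard and correct route: ghost-sample symmetrization (with the factor $2$ and the reduction of $L_{\mathcal{D}}$ to $\hat{L}_{S'}$ valid once $n\epsilon^2 \ge 2$, a condition that is in fact automatic here since the target $\epsilon$ satisfies $n\epsilon^2 \ge 8\log(4/\delta) > 2$), conditioning on the pooled sample so that the supremum collapses to at most $\Pi_{\mathcal{H}}(2n)$ labelings, Hoeffding plus a union bound to get $4\,\Pi_{\mathcal{H}}(2n)\exp(-n\epsilon^2/8)$, and Sauer--Shelah to bound $\Pi_{\mathcal{H}}(2n) \le (2en/d)^d$. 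Solving for $\epsilon$ reproduces the stated bound $\sqrt{\tfrac{8d}{n}\log\tfrac{2en}{d} + \tfrac{8}{n}\log\tfrac{4}{\delta}}$ with exactly the right constants. The one caveat you correctly flag is real: the paper's formulation allows a general loss $l:\mathcal{Y}\times\mathcal{Y}\to[0,1]$ while speaking of $VC(\mathcal{H})$, and the growth-function argument only transfers cleanly when the loss class $l\circ\mathcal{H}$ has the same combinatorial complexity as $\mathcal{H}$ (as it does for $0/1$ loss); for general bounded losses one would need a pseudo-dimension or Rademacher-complexity argument instead, which is the looseness in the paper's statement rather than a gap in your proof.
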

The generalization bounds in Theorem \ref{chap2_gen_union} and Theorem \ref{chap2_gen_vc} hold uniformly over the hypothesis class $\mathcal{H}$ and the data distribution. Thus, these bounds are often too conservative to be helpful in many practical settings. A different complexity measure that slightly mitigates the conservativeness of the generalization bounds above is Rademacher complexity \citep{bartlett2002rademacher} which depends on the data distribution. 
% The two apparent drawbacks of the naive generalization bound above are that (i) it does not depend on the data distribution, and (ii) it requires the hypothesis class to have finite cardinality. While the former gives a too conservative bound, the latter does not apply to more complex hypothesis classes with infinite cardinality such as linear models and deep neural networks. The class cardinality is apparently a restrictive complexity measure and it is desirable to have a complexity measure that depends on data and leverages the structure in a structural hypothesis class. 
% Rademacher complexity is a complexity measure that depends on the data distribution. 
\begin{defn}[Rademacher complexity]
Let $S = \{z_1, ..., z_n\}$ be a set of i.i.d. samples drawn from a distribution $P_{Z}$ supported on domain $\mathcal{Z}$. Let $\mathcal{F}$ be a class of functions $\mathcal{Z} \rightarrow \mathbb{R}$. The \textit{empirical Rademacher complexity} of $\mathcal{F}$ is defined as 
\begin{align*}
    \hat{R}_n(\mathcal{F}; S) = \mathbb{E}_{\sigma} \left[ \sup_{f \in \mathcal{F}} \frac{1}{n} \sum_{i=1}^n \sigma_i f(z_i) \right],
\end{align*}
where $\{\sigma_i \}_{i=1}^n$ are i.i.d. samples from the uniform distribution over \{-1,1\}. 
The Rademacher complexity of $\mathcal{F}$ is defined as 
\begin{align*}
R_n(\mathcal{F}) = \mathbb{E}_{\mathcal{S}} \left[ \hat{R}_{n}(\mathcal{F}; S) \right].
\end{align*}
\end{defn}
Intuitively, the Rademacher complexity measures the ability of functions from $\mathcal{F}$ to fit random noises. Using a general Rademacher-based uniform convergence, we can bound the generalization error by the Rademacher complexity of the hypothesis class. 

\begin{thm}[Rademacher complexity-based bound]
For any $\delta \in (0,1)$, with probability at least $1 - \delta$, we have
\begin{align*}
    \forall h \in \mathcal{H}, \Delta_S(h) \leq 2R_n(\mathcal{H}) + \sqrt{ \log(1/\delta) / n}. 
\end{align*}
The generalization error can also be bounded by empirical Rademacher complexity. In particular, with probability at least $1 - \delta$, we have
\begin{align*}
    \forall h \in \mathcal{H}, \Delta_S(h) \leq 2 \hat{R}_n(\mathcal{H}; S) + 3\sqrt{ \log(2/\delta) / n}. 
\end{align*}
\end{thm}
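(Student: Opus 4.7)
The plan is to prove the first inequality in two stages: a concentration step via McDiarmid's inequality followed by a symmetrization step that introduces Rademacher variables. I would first define the one-sided supremum $\Phi(S) := \sup_{h \in \mathcal{H}} \Delta_S(h) = \sup_{h \in \mathcal{H}} (L_{\mathcal{D}}(h) - \hat{L}_S(h))$ and observe that, because $l$ takes values in $[0,1]$, replacing any single sample $(x_i,y_i)$ in $S$ by an independent copy $(x_i',y_i')$ shifts $\hat{L}_S(h)$ by at most $1/n$ uniformly in $h$, hence $\Phi$ satisfies the bounded-differences condition with constants $c_i = 1/n$. Applying McDiarmid with these constants yields, with probability at least $1-\delta$,
\begin{equation*}
\Phi(S) \;\leq\; \mathbb{E}[\Phi(S)] + \sqrt{\tfrac{\log(1/\delta)}{2n}} \;\leq\; \mathbb{E}[\Phi(S)] + \sqrt{\tfrac{\log(1/\delta)}{n}}.
\end{equation*}

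The second step is to bound $\mathbb{E}[\Phi(S)]$ by $2R_n(\mathcal{H})$ via the classical ghost-sample symmetrization. Introducing an independent copy $S' = \{(x_i', y_i')\}_{i=1}^n$ and using $L_{\mathcal{D}}(h) = \mathbb{E}_{S'}[\hat{L}_{S'}(h)]$, Jensen's inequality lets me pull the sup outside the inner expectation to obtain
\begin{equation*}
\mathbb{E}_S[\Phi(S)] \;\leq\; \mathbb{E}_{S,S'}\Bigl[\sup_{h \in \mathcal{H}} \tfrac{1}{n}\sum_{i=1}^n \bigl(l(h(x_i'),y_i') - l(h(x_i),y_i)\bigr)\Bigr].
\end{equation*}
Because each summand is a difference of i.i.d. terms, it has the same distribution as its negation, so I can multiply summand $i$ by an independent Rademacher sign $\sigma_i$ without changing the expectation. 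Splitting the supremum of the sum into a sum of two suprema and noting that both pieces have the same law then gives the factor-of-two symmetrization bound $\mathbb{E}[\Phi(S)] \leq 2 R_n(\mathcal{H})$, where $R_n(\mathcal{H})$ is understood as the Rademacher complexity of the loss class $\{(x,y) \mapsto l(h(x),y) : h \in \mathcal{H}\}$ (the standard convention in the prediction setting). Combining the two steps proves the first claim.

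For the empirical version I would bootstrap: apply McDiarmid once more to $\hat{R}_n(\mathcal{H};S)$ itself. Swapping one sample changes $\hat{R}_n(\mathcal{H};S)$ by at most $1/n$ (the Rademacher-weighted sum is again an average of quantities bounded by $1$), so $|R_n(\mathcal{H}) - \hat{R}_n(\mathcal{H};S)| \leq \sqrt{\log(2/\delta)/(2n)}$ with probability at least $1 - \delta/2$. A union bound with the first inequality applied at confidence $1-\delta/2$, together with the factor $2$ in front of the Rademacher term, absorbs all the $\sqrt{\log(2/\delta)/n}$ contributions into a single coefficient of at most $3$, producing the stated empirical bound.

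The main obstacle I anticipate is the symmetrization step: it is clean but hinges on two moves that must be executed in the correct order, namely (i) using Jensen to move the supremum past $\mathbb{E}_{S'}$ (only this direction is valid) and (ii) justifying the introduction of $\sigma_i$ via the distributional symmetry of i.i.d. differences before splitting the sup. The McDiarmid applications and the final union bound are routine once the loss boundedness is invoked.
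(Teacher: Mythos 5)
The paper states this theorem as background without supplying any proof (it is the classical result of Bartlett and Mendelson), so there is no in-paper argument to compare against. Your proposal is the standard and correct derivation: McDiarmid with bounded differences $c_i = 1/n$ for the one-sided supremum, ghost-sample symmetrization via Jensen and the sign-flip symmetry to get $\mathbb{E}[\Phi(S)] \leq 2R_n$ of the loss class (which is indeed the convention the statement's $R_n(\mathcal{H})$ must be read under, since $l$ is bounded in $[0,1]$), and a second McDiarmid application plus union bound to pass to the empirical complexity, with the constants $1 + \sqrt{2} < 3$ checking out. The only cosmetic remark is that you only need the one-sided deviation $R_n(\mathcal{H}) \leq \hat{R}_n(\mathcal{H};S) + \sqrt{\log(2/\delta)/(2n)}$ rather than the two-sided bound you quote; as stated two-sided at level $\delta/2$ it would cost $\log(4/\delta)$, but this does not affect the final constant.
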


\subsubsection{A Parameter-Dynamic Perspective}
The major limitation of the complexity-measure generalization bounds above is that they rely on uniform convergence and are independent of the training algorithm. Intuitively, they follow the Occam Razor's principle that a model with simpler complexity is more preferred to explain the same observational phenomenon. In other words, it indicates that a model that overfits the training data tends to have large test error (thus large generalization gap). This phenomenon however does not hold in modern practice with deep neural networks where an overparameterized neural network can obtain both a zero training error and small test error \citep{zhang2016understanding}. Such an excellent generalization of deep neural networks has spurred the development of new generalization theories that go beyond the limitations of classical statistical learning theory. Early efforts in understanding the generalization of deep neural networks make an interesting connection between the generalization of deep neural networks with information compression in information theory \citep{tishby2015deep,tang2019markov}. However, this approach is rather conceptual rather than providing a generalization bound. Here, we briefly mention several key concepts that have made some substantial progress in understanding generalization in deep neural networks. These approaches study the evolution of (the distribution of) the neural network parameters and connect it with the generalization ability. In particular, they explain the generalization phenomenon of overparameterized models from 
the perspective of parameter evolution where the dynamics of the neural network parameters over the training is analyzed. For technical details, we prefer the readers to the recent comprehensive manuscripts in deep learning theory \citep{berner2021modern,bartlett2021deep,roberts2021principles,dlt_lec,dlt_princeton}. 

% While the Rademacher complexity -based generalization bounds depend on the data distribution, they are relatively conservative as they still rely on uniformity over the hypothesis class. 

The first concept comes from the classic PAC-Bayes \citep{mcallester1999some,mcallester1999pac} which can mitigate the uniformity over the hypothesis class of Rademacher complexity by allowing distributions over the hypothesis class. 
\begin{thm}[PAC-Bayes bound]
For any prior distribution $P$ over the hypothesis class $\mathcal{H}$, for any $\delta \in (0,1]$, with probability at least $1-\delta$, we have 
\begin{align*}
    \forall Q \in \mathcal{Q}(\mathcal{H}), \mathbb{E}_{h \sim P} \left[ \Delta_S(h)\right] \leq \sqrt{\frac{D_{KL}(Q \| P) + \log(2\sqrt{n}/\delta)}{2n}}. 
\end{align*}
\end{thm}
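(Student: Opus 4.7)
The plan is to prove the standard PAC-Bayes bound via a Donsker--Varadhan change-of-measure argument combined with Markov's inequality. I note in passing that the inequality as stated is most naturally read with $\mathbb{E}_{h \sim Q}$ (rather than $\mathbb{E}_{h \sim P}$) on the left-hand side, since otherwise $Q$ enters only through $D_{KL}(Q\|P)$ and the bound would be trivially optimized at $Q = P$; I shall therefore prove the version with $\mathbb{E}_{h \sim Q}$.

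First, I would invoke the Donsker--Varadhan variational representation of the KL divergence: for any measurable $\phi: \mathcal{H} \to \mathbb{R}$ and probability measures $Q \ll P$ on $\mathcal{H}$,
$$\mathbb{E}_{h \sim Q}[\phi(h)] \leq D_{KL}(Q \| P) + \log \mathbb{E}_{h \sim P}[e^{\phi(h)}].$$
Taking $\phi(h) = 2n \Delta_S(h)^2$ and applying Jensen's inequality (convexity of $x^2$) to the left-hand side yields
$$2n (\mathbb{E}_{h \sim Q}[\Delta_S(h)])^2 \leq D_{KL}(Q\|P) + \log \mathbb{E}_{h \sim P}[e^{2n \Delta_S(h)^2}].$$

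Second, I would control the exponential moment on the right-hand side \emph{in expectation over the training sample}. Since the prior $P$ is independent of $S$, Fubini gives $\mathbb{E}_S \mathbb{E}_{h \sim P}[e^{2n \Delta_S(h)^2}] = \mathbb{E}_{h \sim P} \mathbb{E}_S[e^{2n \Delta_S(h)^2}]$. For each fixed $h$, $\Delta_S(h)$ is the centred mean of $n$ i.i.d.\ losses bounded in $[0,1]$, hence subgaussian with variance proxy $1/(4n)$ by Hoeffding (Lemma \ref{lemma:chap3_support_hoeffding}); a standard tail-integration calculation then yields the sharp estimate $\mathbb{E}_S[e^{2n \Delta_S(h)^2}] \leq 2\sqrt{n}$, and averaging over $h \sim P$ preserves this bound. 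I would then apply Markov's inequality (Lemma \ref{markov_inequality_lemma}) to the non-negative $S$-measurable quantity $\mathbb{E}_{h \sim P}[e^{2n \Delta_S(h)^2}]$: with probability at least $1 - \delta$ over $S$, its value is at most $2\sqrt{n}/\delta$. The crucial observation is that this high-probability event is defined purely in terms of $P$ and $S$ and does not depend on $Q$, so on that single event the Donsker--Varadhan inequality can be instantiated simultaneously for every posterior $Q$. Substituting, dividing by $2n$, and taking square roots then delivers the claimed bound.

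The main obstacle will be the precise exponential moment estimate $\mathbb{E}_S[e^{2n \Delta_S(h)^2}] \leq 2\sqrt{n}$. The exponent $2n$ sits exactly at the boundary of the natural subgaussian integrability range $\lambda < 1/(2\sigma^2) = 2n$, so a crude direct appeal to $\mathbb{E}[e^{\lambda X^2}] \leq (1 - 2\sigma^2 \lambda)^{-1/2}$ diverges; instead one must perform the tail integration $\mathbb{E}[e^{2n \Delta_S^2}] = 1 + \int_0^\infty 4n\epsilon\, e^{2n \epsilon^2}\, P(|\Delta_S(h)| \geq \epsilon)\, d\epsilon$ using Hoeffding's two-sided tail bound, and then tune constants so that precisely the factor $2\sqrt{n}$ (rather than a looser, $n$-dependent prefactor) emerges. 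Matching the $\log(2\sqrt{n}/\delta)$ term exactly as stated is the delicate constant-chasing step of the argument.
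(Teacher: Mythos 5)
The paper does not actually prove this theorem: it is stated as a background result in Chapter 2 and attributed to the PAC-Bayes literature (McAllester), so there is no in-paper proof to compare against. Your observation that the left-hand side should read $\mathbb{E}_{h \sim Q}$ rather than $\mathbb{E}_{h \sim P}$ is correct — as printed the bound is trivially optimized at $Q = P$ — and your overall architecture (Donsker--Varadhan change of measure with $\phi(h) = 2n\,\Delta_S(h)^2$, Jensen, Fubini over the prior, Markov's inequality on the $Q$-free exponential moment, and the remark that the resulting event holds simultaneously for all posteriors) is exactly the standard and correct route to this statement.

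The genuine gap is in the exponential-moment estimate. You claim that a tail integration against Hoeffding's two-sided bound yields $\mathbb{E}_S\bigl[e^{2n\,\Delta_S(h)^2}\bigr] \leq 2\sqrt{n}$, but it does not. Writing $Y = 2n\,\Delta_S(h)^2 \in [0, 2n]$ and using $P(Y \geq t) = P(|\Delta_S(h)| \geq \sqrt{t/(2n)}) \leq 2e^{-t}$, the tail integration gives
\begin{align*}
\mathbb{E}_S\bigl[e^{Y}\bigr] \;\leq\; 1 + \int_0^{2n} e^{t}\cdot 2e^{-t}\,dt \;=\; 1 + 4n,
\end{align*}
because the exponent $2n$ exactly cancels the Hoeffding decay rate and the integrand degenerates to a constant; no amount of "tuning constants" within this scheme recovers a $\sqrt{n}$ prefactor. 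The sharp bound $\mathbb{E}_S[e^{2n\,\Delta_S(h)^2}] \leq 2\sqrt{n}$ is Maurer's lemma: one first shows via Pinsker that $2n\,\Delta_S(h)^2 \leq n\cdot kl(\hat{L}_S(h)\,\|\,L_{\mathcal{D}}(h))$ where $kl$ is the binary KL divergence, then shows that among $[0,1]$-valued losses the Bernoulli case maximizes $\mathbb{E}[e^{n\,kl}]$, and finally bounds $\sum_{k=0}^{n}\binom{n}{k}(k/n)^k(1-k/n)^{n-k} \leq 2\sqrt{n}$ by Stirling-type estimates. This is a genuinely different argument from Hoeffding tail integration, and without it your proof only delivers the weaker (though still valid) numerator $\log\bigl((1+4n)/\delta\bigr)$ in place of $\log(2\sqrt{n}/\delta)$.
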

For example, $P$ could be the uniform distribution over all deep networks with a fixed architecture and $Q$ is the weight distribution obtained after training the deep network on $n$ samples \citep{DBLP:conf/uai/DziugaiteR17}.

Another important concept is neural tangent kernels.  \citet{jacot2018neural} show that the dynamics of  overparamterized neural networks can be described by the so-called neural tangent kernels (NTK) which kernelize the parameter dynamic equation under stochastic gradient descent (SGD). The NTK analysis is able to explain the generalization of deep neural networks under certain training conditions without using capacity-based complexity measure. While the NTK analysis provides a finite-time convergent rate of SGD for ultra-wide neural networks, the guarantee requires that the trained parameter at any time step is not too far from the initialized parameter. This closeness condition does not reflect deep neural network regime in practice. That leads to the third concept related to mean-field analysis. The mean-field analysis does not require such closeness condition to analyze the generalization of deep neural networks. Instead, mean-field analysis studies the parameter evolution from a distributional perspective. In particular, \citet{mei2018mean,chizat2018global} show that the empirical distribution of the two-layer neural network parameters can be described as a Wasserstein gradient flow which converges to the global optimum under certain structural assumptions.  

% \subsubsection{Modern generalization theory}
% \subsection{Deep learning theory} 

% \begin{itemize}
%     \item $\Delta_N$: The $N$-dimensional simplex; 
%     \item $\mathcal{P}(\mathcal{X})$ : The set of probability measures supported on domain $\mathcal{X}$; 
%     \item $\delta_x$: The Dirac distribution concentrated at $x$; 
%     \item $\mathcal{F}_{P}$: The cumulative distribution function of distribution $P$; 
%     \item $\mathcal{F}^{-1}_{P}$: The inverse cumulative distribution function of distribution $P$; 
%     \item $KL[P \| Q]$: The KL divergence between distribution $P$ and $Q$; 
%     \item $W_p(P, Q)$: The $p$-Wasserstein metric between distribution $P$ and $Q$; 
%     \item $1\{A\}$: An indicator function, i.e., $1\{A\} = 1$ if $A$ occurs and $1\{A\}= 0$ otherwise; 
%     \item $f_{\#} \mu$: The pushforward measure of measure $\mu$ by function $f$; 
%     \item $\| \cdot \|_{\infty}$: The $\infty$-norm; 
%     \item $x \sim P$: $x$ is sampled according to distribution $P$; 
%     \item $\{x_i\}_{i=1}^n \overset{i.i.d.}{\sim} P$: $\{x_i\}_{i=1}^n$ are i.i.d. samples of $P$;
%     \item $\mathbb{E}[\cdot]$: expectation; 
%     \item $\mathbb{V}[\cdot]$: variance; 
%     \item $\tilde{O}(\cdot)$: The big-O notation that hides any log factors; 
% \end{itemize}
\section{Introduction to Reinforcement Learning} 
Reinforcement learning (RL) is a learning paradigm that involves interacting with an underlying environment and (strategically) taking actions to maximize a numerical goal. The learner does not receive any supervision about which action to take but instead must learn to take good actions by trying them out and learning from their consequences. In many situations, the consequence of an action affects not only the immediate reward but also the future rewards. In other words, the consequence of an action can be delayed and propagated into the future situations.

Beyond delayed rewards, interactive nature, and no direct supervision, RL is also fundamentally different from supervised learning in the trade-off between \textit{exploration} and \textit{exploitation}. On the one hand, the agent needs to explore different actions it has not tried before to acquire more reliable knowledge about the underlying environment. On the other hand, the agent should exploit its acquired knowledge to obtain higher rewards. Being too exploitative in the early stage of learning when the acquired knowledge is insufficient can get the agent stuck into taking sup-optimal actions. In the same way, being too explorative in the later stage of learning can be highly inefficient when the acquired knowledge is already more reliable. Thus, to strategically balance between exploration and exploitation is highly non-trivial and is in fact one of the fundamental challenges of designing an sample-efficient RL agent. 

The best way to gain good intuition about RL is perhaps through examples and applications: 

\begin{itemize}
    \item \textbf{Autonomous driving}: There are various aspects in autonomous driving that RL can apply. For example, we can automatically learn to park a car, change a lane or overtake when avoiding collision and maintaining a steady speed thereafter. 
    
    \item \textbf{Game playing}: A master Go player makes a move. The move is driven by the goal of winning the game and is informed by immediate judgement of the current move and by speculating possible counter moves by the opponent. 
    
    \item \textbf{Trading and finance}: A learner can compute a numerical reward function based on the loss and gain of every past transactions to inform which action to take (whether to buy, hold or sell at a particular stock price).
    
    \item \textbf{Healthcare}: Given clinical observations and assessment of a patient, a RL agent can determine the best treatment option for the patient at different treatment stages.
    
    \item \textbf{Adaptive experimental design}: A RL agent can adaptively optimize novel processes for manufacturing new materials, from short polymer fibers, alloys to food production. 
\end{itemize}

Besides the agent and the environment, their interaction is made possible by a \textit{policy} pursued by the agent to navigate the environment and a reward function received by the agent from the environment. A policy specifies for the learning agent a way of behaving in the environment. It is a mapping from any state of the environment to an action that should be taken when the agent is in that state. The mapping can also depend on the observations of the historical interactions between the agent and the environment. 

A reward function defines the goal of a RL agent. Intuitively, a reward function maps each state (or a state-action pair) to a numerical value indicating the relevance of that state (or that state-action pair) toward achieving the goal. The reward signal can distinguish between good and bad actions for the agent in the immediate term. However, what is good in the immediate term is not necessarily good in the long term. This holds for the decision problems with both no-delayed rewards and delayed rewards. In a no-delayed reward problem, the reward perceived might be noisy and thus might be a spurious indicator of the best actions when there is lack of sufficient data. In a delayed-reward problem, a state that yields a low immediate reward might in the long run lead to higher rewards as it is followed by other states with higher rewards. To make an analogy, in psychology, a similar phenomenon is the ability to resist the temptation of the immediate pleasure to receive a larger or long-lasting reward later is referred to as \textit{delayed gratification}. To measure the long term reward in RL, a \textit{value} function is used. A value function maps each state (or state-action pair) to the total amount of reward the agent is expected to accumulate over time when starting from that state (or state-action pair) onward and following a particular policy. 

There are three main problem instances of RL. The first instance is multi-armed bandits which is a simplified version of RL where the rewards are not delayed (but could be noisy) and the action space is finite. The second instance is Bayesian optimization which is identical to multi-armed bandit problem except that the action space is continuous. The third instance is a full version of RL with delayed rewards which is modelled by Markov decision processes. We formally introduce these decision-making instances and their technical background in the following subsections. 
\section{Stochastic Bandits}
\label{chap2_section_bandit}
\subsection{Introduction}
There are several models for multi-armed bandits. For the sake of introducing some fundamental aspects of decision making in bandits in particular and in RL in general, it is sufficient to focus on stochastic bandits \citep{lattimore_szepesvari_2020}.  

A stochastic bandit consists of a set of reward distributions $\nu = \{P_a: a \in \mathcal{A} \}$ where $\mathcal{A}$ is the set of all plausible actions with $|\mathcal{A}| = k$. The learner and the environment interact with each other sequentially over $n$ rounds as follows. At each round $t \in [n] := \{1,2,...,n\} $, the learner takes an action $a_t \in \mathcal{A}$ and subsequently observes the reward $r_t \sim P_{a_t}$. Note that the learner knows neither the reward distributions $\{P_a: a \in \mathcal{A}\}$ nor the reward samples of the other actions rather than $a_t$. The interaction induces a sequence of random variables $a_1, r_1, ..., a_n, r_n$. Define the expected reward for each action $a \in \mathcal{A}$ on bandit instance $\nu$ as 
\begin{align*}
    r^{\nu}(a) = \int x P_a(dx). 
\end{align*}
When the context is clear, we often drop the superscript $\nu$ in the expected reward function $r^{\nu}$. We define the sub-optimality gap as 
\begin{align*}
    \Delta_a = \mu(a^*) - \mu(a) \geq 0, 
\end{align*}
where $a^* = \operatorname*{arg\,max}_{a \in \mathcal{A}} r(a)$.

The learner's goal is to find a learning strategy, i.e., a policy, that maximizes the expected value of the total reward $S_n = \sum_{i=1}^n r_t$. Note that the total reward $S_n$ is a random variable that depends on the action sequence of the learner and the reward distributions of the environment. A learning strategy of the learner is a distribution $\pi$ over the action space $\mathcal{A}$ where $\pi(a)$ is the probability of choosing an action $a \in \mathcal{A}$. 
% When the context is clear, we ignore the bandit instance index $\nu$ in our notation.  
\subsection{Performance Metrics}
Supervised learning uses the \textit{excess risk} to evaluate the goodness of a classifier. In bandits (and RL in general), it is common to use \textit{regret} to measure the performance of a policy. 
A common performance metric is the (frequentist) {regret} of a policy $\pi$ on bandit instance $\nu$ which is defined as 
\begin{align*}
    R_n(\pi; \nu) = n r^{\nu}(a^*) - \mathbb{E}_{\pi} \left[ \sum_{i=1}^n r_i \right], 
\end{align*}
where the expectation $\mathbb{E}_{\pi}$ is taken over the randomness of the sequence $\{r_i\}_{i=1}^n$ induced by policy $\pi$ and the reward distributions $\{P_a\}_{a \in \mathcal{A}}$. Note that $R_n(\pi; \nu) \geq 0$ with equality iff $\pi$ selects the optimal action(s) for all rounds $t \in [n]$. In practice, the learner often cannot achieve a zero regret as it does not know the optimal action in advance. Instead, it must learn to select the optimal action from the interaction with the bandit instance. The ignorance of the learner about the optimal action incurs a positive regret. The goal of the learner is to achieve as a small regret as possible given $n$ rounds of interaction. The regret of a good learner at least disappears when it collects an infinite amount of data, i.e., $n \rightarrow \infty$. Formally, a \textit{good} learner on some class of bandit problems $\mathcal{E}$ has a \textit{sublinear} regret on the class, i.e.,
\begin{align*}
    \lim_{n \rightarrow \infty} \frac{R_n(\pi; \nu)}{n} = 0, \forall \nu \in \mathcal{E}. 
\end{align*}
If $\frac{R_n(\pi; \nu)}{n} = \tilde{O}(1/n^{\alpha})$ for some $\alpha > 0$, we say that the \textit{regret rate} of $\pi$ on $\nu$ is $1/n^\alpha$. The larger the $\alpha$, the faster the regret vanishes and the faster $\pi$ can find the optimal action. Often in practice, we refer to $n^{-1/2}$ as a \textit{slow} rate and $n^{-1}$ as a \textit{fast} rate. 

When we have a prior distribution $Q$ of the bandit instances over $\mathcal{E}$, it is often more helpful to use the so-called \textit{Bayesian regret} than the frequentist regret. The Bayesian regret is the average regret with respect to the prior $Q$: 
\begin{align*}
    BR_n(\pi) = \int_{\mathcal{E}} R_n(\pi; \nu) Q(d\nu).
\end{align*}
In this chapter, we focus on analyzing a bandit algorithm based on the frequentist regret. In Chapter \ref{chap:three}, we use the Bayesian regret to analyze our proposed algorithm for robust decision making under an uncontrollable environmental variable. The Bayesian regret is also discussed in Section \ref{chap2_bo} of this chapter. 
% We further discuss the Bayesian regret in Section 
\subsection{Optimism Principle}
\label{chap2_bandit_OFU}
One of the most popular ideas that can handle exploration-exploitation trade-offs to achieve a sublinear regret in bandits is the \textit{optimism in the face of uncertainty} (OFU) principle. OFU dates back to the seminal work \citep{lai1985asymptotically} where the upper confidence bound criterion is used to balance between exploration and exploitation in multi-armed bandits. The basic idea of the principle is that at each round $t$, we construct a confident set for the value estimates, and optimistically choose an estimate from the confidence set and an action such that the predicted reward is maximized.

To demonstrate the benefit of OFU in balancing exploration-exploitation trade-off, we present a concrete algorithm of the OFU principle, namely the upper confidence bound (UCB) algorithm. For convenience, for any $a \in \mathcal{A}$ and $t \in [n]$, let $N_t(a)$ be the number of times up to round $t$ action $a$ is chosen and $\hat{r}_t(a)$ be the empirical mean of action $a$ using the collected data up to time $t$, i.e., 
\begin{align*}
    N_t(a) &:= \sum_{i=1}^t 1\{a_i = a\}, \\ 
    \hat{r}_t(a) &:= 
    \begin{cases}
        \frac{1}{N_t(a)} \sum_{j=1}^t r_i \cdot 1\{a_i = a\} \text{ if } N_t(a) > 0 \\ 
        0 \text{ if } N_t(a) = 0. 
    \end{cases}
\end{align*}
We also define a bonus function $b_t(a) =  \sqrt{\frac{\log(t^{\alpha})}{2 N_t(a)}}$ for some $\alpha > 0$. The upper confidence bound function is defined as 
\begin{align*}
    U_t(a) = \hat{r}_{t-1}(a) + b_{t-1}(a). 
\end{align*}

% For any confidence level $\delta \in (0,1)$, we also define the confidence function as 
% \begin{align*}
%     U_t(a) = 
%     \begin{cases}
%     \infty & \text{ if } T_i(t) = 0 \\ 
%     \hat{r}_{t-1}(a) + b_{t-1}(a) & \text{ if } T_i(t) > 0.
%     \end{cases}
% \end{align*}
% We also define 
% \begin{align*}
%     \hat{\mu}_{i,t} := \frac{1}{t} \sum_{j=1}^t X_{j, i}. 
% \end{align*}
% Note the difference between $\hat{\mu}_i(t)$ and $\hat{\mu}_{i,t}$. Here $\hat{\mu}_i(t)$ is the empirical mean of action $i$ up to time $t$ observed by the learner while $\hat{\mu}_{i,t}$ is the empirical mean of action $i$ up to time $t$ independent of the learner's policy. 

The details are presented in Algorithm \ref{alg:ucb}. The UCB algorithm overestimates the unknown means with high probability. We show that such a simple idea can achieve a sublinear regret (which can in fact be shown to be minimax-optimal). The idea of this proof follows from \citep{Patrick_aFoL}.

\begin{algorithm}
  \caption{UCB$(\alpha)$}
\label{alg:ucb}
\begin{algorithmic}[1]
  \STATE {\bfseries Input:} bandit instance $\nu$, parameter $\alpha > 0$ 
  \FOR{$t=1$ {\bfseries to} $n$}
  \STATE Choose an (optimistic) action $a_t \in \operatorname*{arg\, max}_{a \in \mathcal{A}} U_t(a)$ 
  \STATE Observe reward $r_t \sim P_{a_t}$ 
  \STATE Update $U_t$ using all the data collected so far 
  \ENDFOR
%   \STATE {\bfseries Output:} $v_K = \|Q_K \|_{2, \rho} = \sqrt{ \int Q_K(s,a)^2 \rho(ds,da)  }$
% %   \UNTIL{$noChange$ is $true$}
\end{algorithmic}
\end{algorithm}

To obtain a nontrivial rate, we need to make some structural assumption on the reward. A common and general assumption as such is that the conditional mean of $r_t$ depends only on the current action and that the tails of  $r_t$ are conditional 1-subgaussian, as presented in Assumption \ref{assumption:bandit}.
\begin{assumption} 
$\mathbb{E} \left[ \exp(\lambda(r_t - r(a_t))) | r_1, a_1, ..., r_{t-1}, a_{t-1}, a_t \right] \leq \exp(\lambda^2/2), \forall \lambda$ and $\mathbb{E} \left[r_t | r_1, a_1, ..., r_{t-1}, a_{t-1}, a_t \right] = r(a_t)$.
\label{assumption:bandit}
\end{assumption}
Let $\mathcal{B}_g$ be the set of bandits that satisfy Assumption \ref{assumption:bandit}. We can prove that the UCB algorithm achieves a sublinear regret. 
\begin{thm}
For any bandit instance $\nu \in \mathcal{B}_g$, and $\alpha > 1$,  UCB$(\alpha)$ presented in Algorithm \ref{alg:ucb} achieves a regret of $R_n(\nu) = O(\sqrt{nk \log n})$.
\label{theorem:chap2_bandit_regret}
\end{thm}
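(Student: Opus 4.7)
The plan is to follow the standard regret decomposition and optimism argument for UCB, exploiting the subgaussian tail bound from Lemma \ref{lemma:chap3_support_hoeffding}. First, I would decompose the regret as
\begin{align*}
R_n(\nu) = \sum_{a \in \mathcal{A}} \Delta_a \, \mathbb{E}[N_n(a)],
\end{align*}
so that it suffices to control $\mathbb{E}[N_n(a)]$ for each suboptimal arm $a$ (i.e., $\Delta_a > 0$).

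Next, I would introduce a high-probability ``clean'' event on which all empirical means concentrate around their true values. Concretely, define for each round $t$ and arm $a$ the event that $|\hat r_{t-1}(a) - r(a)| \le b_{t-1}(a)$. Using Assumption \ref{assumption:bandit} together with Hoeffding's inequality (Lemma \ref{lemma:chap3_support_hoeffding}) and a union bound over $a \in \mathcal A$ and over the $O(n)$ possible values of $N_{t-1}(a)$, the probability that \emph{some} confidence interval fails during the $n$ rounds is at most of order $k n^{1-\alpha}$. On the complementary clean event, we simultaneously have optimism, $U_t(a^*) \ge r(a^*)$, and tightness, $U_t(a) \le r(a) + 2b_{t-1}(a)$.

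The core deterministic step is then as follows. If UCB pulls a suboptimal arm $a$ at round $t$, then by the selection rule $U_t(a) \ge U_t(a^*)$, which on the clean event gives $r(a) + 2b_{t-1}(a) \ge r(a^*)$, hence $b_{t-1}(a) \ge \Delta_a/2$. Inverting the definition $b_{t-1}(a) = \sqrt{\alpha \log t / (2 N_{t-1}(a))}$ yields $N_{t-1}(a) \le 2\alpha \log n / \Delta_a^2$. Combining the clean and bad events one obtains the instance-dependent bound
\begin{align*}
\mathbb{E}[N_n(a)] \le \frac{2\alpha \log n}{\Delta_a^2} + C,
\end{align*}
for an absolute constant $C$ coming from $\sum_t n^{-\alpha}$ with $\alpha>1$. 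Substituting back gives $R_n(\nu) \le \sum_{a:\Delta_a>0} \frac{2\alpha \log n}{\Delta_a} + O(k)$.

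Finally, to turn this gap-dependent bound into the advertised gap-free $O(\sqrt{nk\log n})$ rate, I would use the standard threshold trick: for any $\Delta > 0$, arms with $\Delta_a \le \Delta$ contribute at most $n\Delta$ to the regret, while arms with $\Delta_a > \Delta$ contribute at most $(2\alpha k \log n)/\Delta$ by the previous paragraph; choosing $\Delta = \sqrt{k\log n / n}$ balances the two terms and yields $R_n(\nu) = O(\sqrt{nk\log n})$. The main technical delicacy, and where I would be most careful, is the union bound that defines the clean event: because $N_{t-1}(a)$ is itself random, the concentration must be taken uniformly over the possible values of $N_{t-1}(a)$, which is precisely what the logarithmic factor $\log(t^\alpha)$ with $\alpha > 1$ in the bonus $b_t(a)$ is designed to absorb.
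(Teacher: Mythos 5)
Your proposal is correct and reaches the theorem by the same overall skeleton as the paper: decompose $R_n = \sum_a \Delta_a\,\mathbb{E}[N_n(a)]$, show $\mathbb{E}[N_n(a)] \lesssim \alpha\log n/\Delta_a^2 + O(1)$, and then apply the threshold trick with $c \asymp \sqrt{k\log n/n}$ to convert the gap-dependent bound into $O(\sqrt{nk\log n})$ — that last optimization step is verbatim what the paper does. Where you differ is in the middle step. The paper does not introduce a global clean event: it proves a conditional statement (Lemma \ref{lemma:chap2_bandit_conditional_action_count}) that \emph{given} $N_{t-1}(a) \ge 2\log(1/\delta)/\Delta_a^2$, the probability of pulling $a$ again is at most $2\delta$, and then feeds this into a separate deterministic counting lemma (Lemma \ref{lemma:chap2_bandit_expected_action_count}) stating $\mathbb{E}[N_n(a)] \le s_n + \sum_t \mathbb{P}(a_{t+1}=a \mid N_t(a)\ge s_t)$. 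Your route instead posts a uniform concentration event over all rounds, arms, and possible values of $N_{t-1}(a)$, argues deterministically on that event, and pays for the bad event with $\sum_t n^{-\alpha}$. The two are essentially dual ways of handling the same difficulty — that $N_{t-1}(a)$ is random while Hoeffding requires a fixed sample size. The paper handles it by conditioning on the count (and explicitly flags this as the reason for the conditioning), whereas your union bound over the $O(n)$ possible count values is the more explicit and arguably more airtight treatment of that subtlety, at the cost of the extra $\log$-factor bookkeeping you correctly identify as the delicate point. Both arguments yield the same constants up to unimportant factors, and both are valid.
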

We start with two useful lemmas. The first lemma is that if a sup-optimal action has been played for a number of times, the probability that it is played again is small. Intuitively, $N_{t-1}(a)$ should be small as $\Delta_a$ is large because in such case, it is easier to distinguish the sub-optimal action $a$ from the optimal one. In the following lemma, we show that $1/\Delta_a^2$ is a ``right'' scaling for $N_{t-1}(a)$. 

\begin{lem}
For any $\delta > 0$, let $a_t = \operatorname*{arg\,max}_{a \in \mathcal{A}} U_{t-1}(a)$ where $U_{t-1}(a) = \hat{r}_{t-1}(a) + \sqrt{\frac{\log(1/\delta)}{2 N_{t-1}(a)}} $. For any sub-optimal action $a \in \mathcal{A}$, i.e., $\Delta_a > 0$, we have 
\begin{align*}
    \mathbb{P}\left( a_t = a | N_{t-1}(a) \geq 2 \frac{\log(1/\delta)}{\Delta_a^2} \right) \leq 2 \delta. 
\end{align*}
\label{lemma:chap2_bandit_conditional_action_count}
\end{lem}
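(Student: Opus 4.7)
The plan is to reduce the event $\{a_t = a\}$ under the given conditioning to two ``concentration failure'' events and bound each by $\delta$. First I would use the UCB selection rule: $a_t = a$ forces $U_{t-1}(a) \geq U_{t-1}(a^*)$. Under the conditioning $N_{t-1}(a) \geq 2\log(1/\delta)/\Delta_a^2$, the exploration bonus at $a$ shrinks to at most $\Delta_a/2$, since
\begin{align*}
    \sqrt{\frac{\log(1/\delta)}{2 N_{t-1}(a)}} \leq \sqrt{\frac{\log(1/\delta) \Delta_a^2}{4 \log(1/\delta)}} = \frac{\Delta_a}{2}.
\end{align*}
Substituting into $U_{t-1}(a) \geq U_{t-1}(a^*)$ gives $\hat{r}_{t-1}(a) + \Delta_a/2 \geq U_{t-1}(a^*)$.

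Next I would introduce the ``optimism'' event $G = \{U_{t-1}(a^*) \geq r(a^*)\}$. On $G$, the previous inequality becomes $\hat{r}_{t-1}(a) \geq r(a^*) - \Delta_a/2 = r(a) + \Delta_a/2$, i.e.\ the empirical mean of the sub-optimal arm overshoots its true mean by at least $\Delta_a/2$. Hence
\begin{align*}
    \left\{ a_t = a,\ N_{t-1}(a) \geq \tfrac{2 \log(1/\delta)}{\Delta_a^2} \right\} \subseteq G^c \cup \left\{ \hat{r}_{t-1}(a) - r(a) \geq \tfrac{\Delta_a}{2},\ N_{t-1}(a) \geq \tfrac{2\log(1/\delta)}{\Delta_a^2} \right\},
\end{align*}
so it suffices, by a union bound, to bound each of these two events by $\delta$. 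Both are upper-tail bounds on centered averages of conditionally sub-Gaussian increments (by Assumption~\ref{assumption:bandit}), with the bonus $\sqrt{\log(1/\delta)/(2N)}$ calibrated precisely so that a Hoeffding-type inequality produces the target probability $\delta$.

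The main obstacle is that $N_{t-1}(a)$ and $N_{t-1}(a^*)$ are random, so a naive Hoeffding bound (which assumes a fixed sample size) does not apply directly. I would resolve this using the martingale structure implicit in Assumption~\ref{assumption:bandit}: the partial sums $\sum_{i \leq t-1} \mathbf{1}\{a_i = a\}(r_i - r(a))$ are martingales in the natural filtration and $N_{t-1}(a)$ is predictable, so an Azuma--Hoeffding-type argument (or, alternatively, a peeling/union bound over the possible integer values of $N_{t-1}(a)$, with constants adjusted to absorb the extra logarithmic factor) yields the two $\delta$-bounds. Combining them via a union bound gives the desired $2\delta$. The remaining parts of the argument are routine algebra once the correct calibration between the bonus, $\Delta_a$ and the conditioning threshold is made explicit.
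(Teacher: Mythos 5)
Your proposal is correct and follows essentially the same route as the paper: decompose $\{a_t = a\}$ under the conditioning into the failure of optimism at $a^*$ (the paper's event $\mathcal{E}_1^c$) and an overshoot of $\hat{r}_{t-1}(a)$ above $r(a)$ by the calibrated margin (the paper's $\mathcal{E}_2^c$), bound each by $\delta$ via Hoeffding, and union-bound. The only difference is that you treat the randomness of $N_{t-1}(a)$ more carefully (via a martingale or peeling argument), whereas the paper simply conditions on the counts and applies Hoeffding — your added care is warranted, but it does not change the structure of the argument.
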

\begin{proof}
For any $a \in \mathcal{A}$, define the following events 
\begin{align*}
    \mathcal{M} &:= \left\{a_t = a \bigg | N_{t-1}(a) \geq 2 \frac{\log(1/\delta)}{\Delta_a^2} \right\} \\ 
    \mathcal{E}_1 &:= \left\{r(a^*) \leq U_{t-1}(a^*) \bigg | N_{t-1}(a) \geq 2 \frac{\log(1/\delta)}{\Delta_a^2} \right\} \\ 
    \mathcal{E}_2 &:= \left\{U_{t-1}(a) < r(a^*) \bigg | N_{t-1}(a) \geq 2 \frac{\log(1/\delta)}{\Delta_a^2} \right\} \\ 
    \mathcal{E} &:=  \mathcal{E}_1 \cap \mathcal{E}_2 .
\end{align*}
We have
\begin{align*}
    &\left\{ \hat{r}_{t-1}(a) \geq r(a) + \sqrt{\frac{\log(1/\delta) }{2 N_{t-1}(a)}} \bigg | N_{t-1}(a) \geq 2 \frac{\log(1/\delta)}{\Delta_a^2}  \right\} \\
    &= \left\{ r(a^*) - \hat{r}_{t-1}(a) \leq \Delta_a - \sqrt{\frac{\log(1/\delta)}{2 N_{t-1}(a)}}  \bigg | N_{t-1}(a) \geq 2 \frac{\log(1/\delta)}{\Delta_a^2} \right\}\\ 
    &\supseteq  \left\{ r(a^*) - \hat{r}_{t-1}(a) \leq \sqrt{2\frac{\log(1/\delta)}{ N_{t-1}(a)}} - \sqrt{\frac{\log(1/\delta)}{2 N_{t-1}(a)}}  \bigg | N_{t-1}(a) \geq 2 \frac{\log(1/\delta)}{\Delta_a^2} \right\} = \mathcal{E}_2^c. 
\end{align*}
It follows from Hoeffding's inequality in Lemma \ref{lemma:chap3_support_hoeffding} that 
\begin{align*}
    \mathbb{P}(\mathcal{E}_1^c | N_{t-1}(a^*)) &\leq \delta \\ 
    \mathbb{P}(\mathcal{E}_2^c | N_{t-1}(a)) &\leq \mathbb{P}\left( \hat{r}_{t-1}(a) \geq r(a) + \sqrt{\frac{\log(1/\delta) }{2 N_{t-1}(a)}} \bigg | N_{t-1}(a) \geq 2 \frac{\log(1/\delta)}{\Delta_a^2}, N_{t-1}(a)  \right) \\
    &\leq \delta.
\end{align*}
Note that we condition on $N_{t-1}(a^*)$ and $N_{t-1}(a)$ in the inequalities above as $N_{t-1}(a^*)$ and $N_{t-1}(a)$ are random while Hoelfding's inequality is applicable only when the number of samples is deterministic. Thus we have $\mathbb{P}(\mathcal{E}_1^c) \leq \delta$ and $\mathbb{P}(\mathcal{E}_2^c) \leq \delta$.

It is clear that $\mathcal{M} \cap \mathcal{E} = \emptyset$ as on $\mathcal{E}$ we have $U_{t-1}(a^*) > U_{t-1}(a)$. Thus, we have $\mathcal{M} \subseteq \mathcal{E}^c$, or $\mathbb{P}(\mathcal{M}) \leq \mathbb{P}(\mathcal{E}^c) \leq \mathbb{P}(\mathcal{E}_1^c) + \mathbb{P}(\mathcal{E}_2^c) \leq 2 \delta$.
\end{proof}

In the second lemma, we show that the total number of times an action is played up to some time point can be bounded given that the action is played for a certain number of times in the past. 
\begin{lem}
    For any sequence $0 \leq s_1 \leq s_2 \leq ... \leq s_n$, and any $a \in \mathcal{A}$, we have 
    \begin{align*}
        \mathbb{E}[N_n(a)] \leq s_n + \sum_{t=1}^{n-1} \mathbb{P}(a_{t+1}=a| N_t(a) \geq s_t).
    \end{align*}
\label{lemma:chap2_bandit_expected_action_count}
\end{lem}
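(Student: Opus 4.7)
The plan is to split $N_n(a) = \sum_{t=1}^n M_t$ (where $M_t := \mathbf{1}\{a_t = a\}$) according to whether the running count has crossed its threshold, adopting the convention $s_0 := 0$:
\begin{align*}
N_n(a) = \underbrace{\sum_{t=1}^n M_t \mathbf{1}\{N_{t-1}(a) < s_{t-1}\}}_{A} \;+\; \underbrace{\sum_{t=1}^n M_t \mathbf{1}\{N_{t-1}(a) \geq s_{t-1}\}}_{B}.
\end{align*}
Note that under $s_0 = 0$ the $t=1$ contribution to $A$ vanishes while the $t=1$ contribution to $B$ equals $M_1$. Because $(s_t)$ is non-decreasing, $\mathbf{1}\{N_{t-1}(a) < s_{t-1}\} \leq \mathbf{1}\{N_{t-1}(a) < s_n\}$, so $A \leq \sum_{t=1}^n M_t \mathbf{1}\{N_{t-1}(a) < s_n\}$. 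A stopping-time argument identifies the right-hand side as $\min(N_n(a), s_n) \leq s_n$: each time the indicator fires the counter $N_{\cdot}(a)$ advances by exactly one, and once it reaches $s_n$ the indicator shuts off forever. Carefully accounting for the $t=1$ term tightens this to $A \leq s_n - M_1$ whenever $s_n > 0$.

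Combining the two pieces gives $N_n(a) = A + B \leq s_n + (B - M_1) = s_n + \sum_{t=2}^n M_t \mathbf{1}\{N_{t-1}(a) \geq s_{t-1}\}$, the crucial point being that the $M_1$ cancels, shifting the effective range of summation to $t \geq 2$. Taking expectations and using $\mathbb{P}(U \cap V) \leq \mathbb{P}(U \mid V)$ to upgrade joint to conditional probability yields
\begin{align*}
\mathbb{E}[N_n(a)] \leq s_n + \sum_{t=2}^n \mathbb{P}(a_t = a \mid N_{t-1}(a) \geq s_{t-1}),
\end{align*}
and reindexing $t \mapsto t+1$ produces the stated inequality.

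The main obstacle is bookkeeping rather than deep probability: without the convention $s_0 = 0$ and the clean cancellation of $M_1$ between the two halves of the decomposition, one accidentally picks up an extra additive $\mathbb{P}(a_1 = a)$ that does not appear in the lemma's statement. The stopping-time step that bounds $A$ by $s_n$ (rather than a trivial $n$) is otherwise transparent once one notices that $N_{\cdot}(a)$ is monotone and integer-valued with jumps of size at most one.
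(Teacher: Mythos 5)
Your proof is correct and follows essentially the same route as the paper's: you split the plays of $a$ according to whether the running count has crossed the (monotone) threshold, bound the below-threshold part by $s_n$ via the observation that each such play advances the integer-valued counter by one, and pass from the joint to the conditional probability via $\mathbb{P}(U\cap V)\le \mathbb{P}(U\mid V)$ — the paper simply packages the $t=1$ indicator inside its auxiliary quantity $I_n$ rather than cancelling $M_1$ between the two halves, which is pure bookkeeping. The only caveat is that your claim $A\le s_n-M_1$ (like the paper's $I_n\le s_n$) is really a $\lceil s_n\rceil$-type bound when $s_n$ is not an integer, but this looseness is shared with the paper's own argument and is immaterial for the downstream regret calculation.
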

\begin{proof}
Let us define 
\begin{align*}
    I_n := 1\{a_1 = a\} + \sum_{t=1}^{n-1} 1\{a_{t+1} = a, N_t(a) < s_t \} .
\end{align*}
Assume by contradiction that $ I_n > s_n$. Then $I_n \geq \floor{s_n} + 1$ or $\sum_{t=1}^{n-1} 1\{a_{t+1} = a, N_t(a) < s_t\} \geq \floor{s_n}$. It implies that there exists $\floor{s_n}$ times $1\{a_{t+1} = a, N_t(a) < s_t\} = 1$. Let $\bar{t} \in [1,n-1]$ be the $\floor{s_n}$-th time that $1\{a_{\bar{t}+1} = a, N_{\bar{t}}(a) < s_{\bar{t}}\} = 1$. Then, up to time $\bar{t}$, the action $a$ has been played at least $\floor{s_n} + 1$ times but $N_{\bar{t}}(a) < s_{\bar{t}} \leq s_n \leq \floor{s_n} + 1$. This contradiction implies that $I_n \leq s_n$. 

We have 
\begin{align*}
    N_n(a) &= \sum_{t=1}^n 1\{a_t = a\} = 1\{a_1 = a\} + \sum_{t=1}^{n-1} 1\{a_{t+1} = a\} \\
    &= 1\{a_1 = a\} + \sum_{t=1}^{n-1} 1\{a_{t+1} = a, N_t(a) \geq s_t\} + \sum_{t=1}^{n-1} 1\{a_{t+1} = a, N_t(a) < s_t\} \\ 
    &= I_n + \sum_{t=1}^{n-1} 1\{a_{t+1} = a, N_t(a) \geq s_t\}. 
\end{align*}
Thus, 
\begin{align*}
    \mathbb{E}_n[N_n(a)] &= \mathbb{E}[I_n] + \mathbb{E} \left[ \sum_{t=1}^{n-1} 1\{A_{t+1} = a, N_t(a) \geq s_t\} \right] \\
    &\leq s_n + \sum_{t=1}^{n-1} \mathbb{P}(a_{t+1} = a | N_t(a) \geq s_t).
\end{align*}
\end{proof}

Now putting everything together, we are ready to prove Theorem \ref{theorem:chap2_bandit_regret}. 
\begin{proof}[Proof of Theorem \ref{theorem:chap2_bandit_regret}]
% Data: $(a_1, r_1, ..., a_n, r_n)$ where $r_i \sim P_a$ and the expected value of action $a$ is $r(a)$. WLOG, assume that the support of $P_a$ is $[0,1]$ for all $a \in \mathcal{A}$. For simplicity, we assume that $r_i$ are independent. Define 
% \begin{align*}
%     N_t(a) &= \sum_{i=1}^t 1\{a_i = a\} \\ 
%     \hat{r}_t(a) &= \begin{cases}
%     0 \text{ if } N(a) = 0, \\ 
%     \frac{1}{N_t(a)} \sum_{i=1}^t \sum_{i=1}^t r_i \cdot 1\{a_i = a\} \
%      \end{cases} \\ 
%      b_t(a) &= \min\left\{ 1, \beta \sqrt{\frac{\log(f(t))}{2 N_t(a)}} \right\}, 
% \end{align*}
% where $\beta > 0$ and $f(t) > 0$ is an increasing function which will be specified later. 

% At time $t$, UCB selects action 
% \begin{align*}
%     a_t \in \operatorname*{arg\,max}_{a \in \mathcal{A}} \hat{r}_{t-1}(a) + b_{t-1}(a)
% \end{align*}
For any $c > 0$, it follows from Lemma \ref{lemma:chap2_bandit_conditional_action_count} and Lemma \ref{lemma:chap2_bandit_expected_action_count} with $s_t = 2 \frac{\alpha \log(t)}{\Delta_a^2}$ and $\alpha > 1$ that 
\begin{align*}
    R_n &= \sum_{a \in \mathcal{A}} \Delta_a \mathbb{E}\left[ N_n(a) \right] \\ 
    &=  \sum_{a \in \mathcal{A}, \Delta_a < c} \Delta_a \mathbb{E}\left[ N_n(a) \right] + \sum_{a \in \mathcal{A}, \Delta_a \geq c} \Delta_a \mathbb{E}\left[ N_n(a) \right] \\ 
    &\leq c n +  \sum_{a \in \mathcal{A}, \Delta_a \geq c} \Delta_a \mathbb{E}\left[ N_n(a) \right] \\ 
    &\leq c n +  \sum_{a \in \mathcal{A}, \Delta_a \geq c} \Delta_a \cdot 2 \frac{ \alpha \log(n)}{\Delta_a^2} + \sum_{a \in \mathcal{A}} \sum_{t=1}^{n-1} \Delta_a \frac{2}{t^{\epsilon}} \\ 
    &\leq c n + \sum_{a \in \mathcal{A},\Delta_a \geq c} 2 \frac{ \alpha \log(n)}{\Delta_a} + \sum_{a \in \mathcal{A}, \Delta_a \geq c} \Delta_a \sum_{t=1}^{n-1}  \frac{2}{t^{\alpha}} \\ 
    &\leq cn + \frac{2 \alpha k \log n}{c} + \frac{2}{\alpha -1 } \sum_{a \in \mathcal{A}} \Delta_a. 
\end{align*}
The RHS of the inequality above is minimized at $c = \sqrt{2 \alpha k \log n / n}$. Plugging this value into the inequality above noting that $\frac{2}{\alpha -1 } \sum_{a \in \mathcal{A}} \Delta_a$ is an constant, we have 
\begin{align*}
    R_n = O(\sqrt{kn \log n}).
\end{align*}
\end{proof}

% \thanh{Briefly discuss thompson sampling}

% \subsection{Minimax lower bound} 
\section{Bayesian Optimization}
\label{chap2_bo}
\subsection{Introduction}
Bayesian optimization \citep{DBLP:journals/corr/abs-1012-2599} is a simple instance of RL that is concerned with partial feedback but not with long term consequences. Bayesian optimization is a setting for learning to optimize in a continuous action space. The goal in Bayesian optimization is to optimize a noise-corrupted, gradient-absent, expensive-to-evaluate objective function within a constrained budget of function evaluations. Formally, Bayesian optimization is concerned with 
\begin{align*}
    \max_{x \in \mathcal{X}} f(x),
\end{align*}
where the action space $\mathcal{X}$ is usually a subspace of $\mathbb{R}^d$, and $f$ is a black-box, unknown function which can be queried at a high cost. In addition, we often assume the function evaluation is noisy, i.e., upon evaluating $f$ at $x$, we observe an noise-corrupted output $y = f(x) + \epsilon$ where $\epsilon$ is a zero-mean Gaussian noise. A typical example of Bayesian optimization is to sequentially activate sensors using as few sensors as possible to locate which part of a building has the highest temperature. Another typical example is experimental design in which one wants to run experiments of multiple parameters to reasonably approximate a design goal in as few numbers of experiment runs as possible. Another example is tuning the hyper-parameters of a big model (e.g. deep neural networks) for model selection. The objective function in this case is the generalization error of the models and their hyper-parameter settings on a held-out set after training the model. Note that even when the objective function has gradients, traditional optimization methods such as gradient descent are not feasible in this setting. The reason is that gradient descents demand for a high number of observation samples (function evaluations in this setting) while we only have a limited budget of evaluations in the context of Bayesian optimization. By casting function evaluations on a $n$-dimensional input space as running experiments on a specific setting of $n$ experiment parameters, \cite{DBLP:conf/nips/AzimiFF10} provide a context as to why we need Bayesian optimization instead of gradient descent. 

The main idea of Bayesian optimization is straightforward. Bayesian optimization casts the global optimization as sequential decision making with a probabilistic model: (i) we construct a probabilistic model as a surrogate model of the objective function; (ii) we update the probabilistic model as we collect more samples (in the context of Bayesian optimization, samples refer to the inputs at which we evaluate the function and the function values at these inputs). A common probabilistic modeling choice in Bayesian optimization is Gaussian Process (GP) \citep{Rasmussen:2005:GPM:1162254} due to the convenient analytical form of the posterior. To decide which point to evaluate next, Bayesian optimization maximizes a so-called \textit{acquisition function} built upon the previous observations and the probabilistic model, e.g., GP. The idea is that it sequentially selects a new point such that it both reduces the uncertainty about the unknown function and aims at moving toward the global maximum of the function. The generic Bayesian optimization procedure is presented in Algorithm \ref{alg:generic_bo}. At each time step $t$, Bayesian optimization selects next point $x_t$ by maximizing an acquisition function $u(x| \mathcal{D}_{t-1})$ conditioned on the past data $\mathcal{D}_{t-1}$. A noisy function output $y_t$ evaluated at $x_t$ is obtained and the new observation $(x_t,y_t)$ is augmented to the past data $\mathcal{D}_t := \mathcal{D}_{t-1} \cup \{(x_t, y_t)\}$. The procedure repeats in the next step $t+1$.

\begin{algorithm}[t]
  \caption{Generic Bayesian optimization}
\label{alg:generic_bo}
\begin{algorithmic}[1]
%   \STATE {\bfseries Input:} bandit instance $\nu$, parameter $\alpha > 0$ 
  \FOR{$t=1$ {\bfseries to} $T$}
  \STATE Select $x_t$ by maximizing an acquisition function: $$x_t \in \operatorname*{arg\,max}_{x \in \mathcal{X}} u(x|\mathcal{D}_{t-1})$$  
  \STATE Evaluate the function $y_t = f(x_t) + \epsilon_t$
  \STATE Augment the data $\mathcal{D}_t := \mathcal{D}_{t-1} \cup \{(x_t, y_t)\}$ and update the GP. 
  \ENDFOR
%   \STATE {\bfseries Output:} $v_K = \|Q_K \|_{2, \rho} = \sqrt{ \int Q_K(s,a)^2 \rho(ds,da)  }$
% %   \UNTIL{$noChange$ is $true$}
\end{algorithmic}
\end{algorithm}

% Two important ingredients of Bayesian optimization is the probabilistic model that is used to model the unknown objective function and the so-called \textit{acquisition} function that is used to select next point to evaluate based on the previous data. 

% \begin{figure}[h]
%     \centering
%     \includegraphics[scale=0.45]{} 
%     \caption{This example illustrates Bayesian optimization on a toy 1D function. A GP approximates the objective function over four sampled values of the objective function. The lower shaded plots  are acquisition functions which direct where to query the objective function next. The figure is from \citep{DBLP:journals/corr/abs-1012-2599}.}
%     \label{fig:bo_fig}
% \end{figure}

\subsection{Performance Metrics}
Similar to multi-armed bandits discussed in Section \ref{chap2_section_bandit}, we use various notions of \textit{regret} for performance metric in Bayesian optimization. Consider any policy $\pi$ which induces a sequence of actions $\{x_t\}_{t=1}^T$ over $T$ steps. The frequentist regret of policy $\pi$ over $T$ steps for target function $f$ is defined as 
\begin{align*}
    \text{Regret}(T, \pi, f) = T \cdot \max_{x \in \mathcal{X}} f(x) - \sum_{t=1}^T f(x_t)
\end{align*}
A desirable property of a policy is to be \textit{no-regret}, i.e., $\pi$ incurs \textit{sublinear} regret: 
\begin{align*}
    \lim_{T \rightarrow \infty} \frac{\text{Regret}(T,\pi,f)}{T} = 0. 
\end{align*}

Intuitively, a no-regret policy $\pi$ is guaranteed to find the optimum after a sufficiently large number of iterations $T$. In addition, we can use Bayesian regret in Bayesian optimization: 
\begin{align*}
    \text{BayesRegret}(T,\pi) = \mathbb{E} \left[ \text{Regret}(T, \pi, f) \right],
\end{align*}
where the expectation is taken with respect to the prior distribution over the target function $f$. Compared to the frequentist regret, Bayesian regret can allow more elegant analysis and is particularly helpful for analyzing posterior sampling \citep{DBLP:journals/mor/RussoR14}. Specifically, in Chapter \ref{chap:three} we use Bayesian regret to analyze our algorithm that is based on posterior sampling. 

\subsection{Gaussian Processes} 
\label{subsect:gp}
A Gaussian Process (GP) is an infinite-dimension stochastic process where any finite combination of dimensions is a Gaussian distribution. A GP is a distribution over functions and is fully specified by a mean function $\mu$ and a covariance function $k$: 
\begin{align*}
    f(x) \sim \text{GP}(\mu, k). 
\end{align*}
Equivalently, a function  is a sample from $\text{GP}(\mu, k)$ if  
\begin{align*}
    \mathbb{E}[f(x)] &= \mu(x), \forall x \in \mathcal{X} \\ 
    C[f(x), f(x')] &:= \mathbb{E}\left[ (f(x) - \mu(x)) (f(x') - \mu(x'))\right] = k(x,x'), \forall x,x' \in \mathcal{X}.
\end{align*}
A popular choice for the covariance function $k$ is the squared exponential kernel 
\begin{align*}
    k(x,x') = \exp \left(-\frac{1}{2} \|x - x' \|^2 \right).
\end{align*}
Via the squared exponential kernel, we can see that $k(x,x')$ becomes larger and approaches the maximum value of $1$ when $x$ and $x'$ are closer in the $2$-norm. When $x$ and $x'$ become more distant, their kernel value is exponentially small and is approaching $0$. This implements the smoothness assumption that the more distant the two points, the less their function values influence each other. Different kernels impose different smoothness into functions sampled from a corresponding GP, e.g., see Figure \ref{fig:kernel_map}. We refer the readers to \citep[Chapter~4]{Rasmussen:2005:GPM:1162254} for an extensive discussion on various GP kernels and their properties. 

\begin{figure}
    \centering
    \includegraphics[scale=0.75]{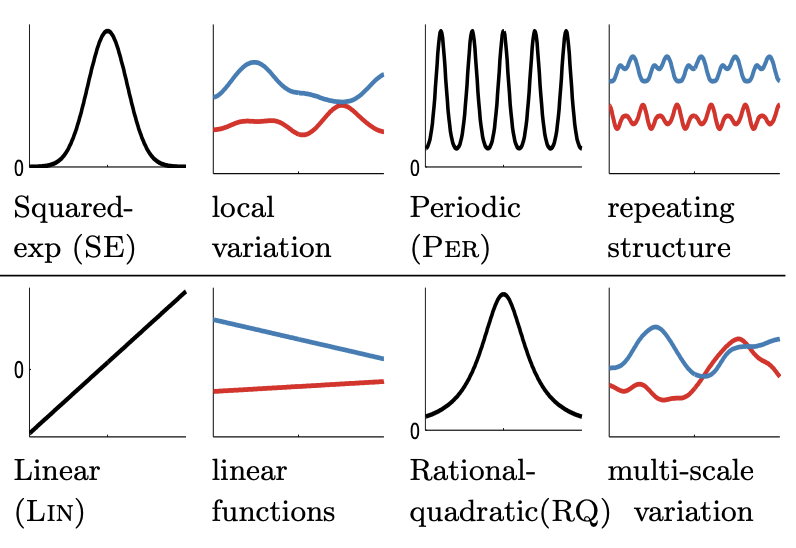}
    \caption{The first and third columns: Base kernels $k(\cdot, 0)$. Second and fourth columns: Function samples from a GP with each respective kernels \citep{duvenaud2013structure}.}
    \label{fig:kernel_map}
\end{figure}

An important issue regarding GPs is how to compute the posterior distribution given the past observations. In particular, assume that $f \sim GP(\mu,k)$ and given the past observations $\mathcal{D}_t = \{ (x_{\tau}, y_{\tau}) \}_{\tau = 1}^t$ where 
\begin{align*}
    y_{\tau} = f(x_{\tau}) + \epsilon_{\tau} \text{ and } \epsilon_{\tau} \sim \mathcal{N}(0, \sigma^2).
\end{align*}
We are interested in computing the posterior distribution of $f$ given the data $\mathcal{D}_t$. 
% with the posterior mean function $\mu_t$ and posterior covariance function $k_t$ 
% \begin{align*}
%     \mu_t(x) &:= \mathbb{E}\left[ f(x) | \mathcal{D}_t \right] \\ 
%     k_t(x,x') &:= C \left[ f(x) \cdot f(x') | \mathcal{D}_t \right] 
% \end{align*}
For any $x \in \mathcal{X}$, we denote 
\begin{align*}
    k(x,x_{1:t}) &:= [k(x,x_1) \text{ } \ldots \text{ }  k(x, x_t)]^T \in \mathbb{R}^t \\ 
    y_{1:t} &:= [y_1 \text{ } \ldots \text{ } y_t]^T \in \mathbb{R}^t \\ 
     k(x_{1:t}, x_{1:t}) &:= 
    \begin{bmatrix}
    k(x_1, x_1) & \ldots & k(x_1, x_t) \\ 
    \vdots & \ddots & \vdots \\ 
    k(x_t, x_1) & \ldots & k(x_t,x_t)
    \end{bmatrix} \in \mathbb{R}^{t \times t}.
\end{align*}
It is a celebrated result \citep{Rasmussen:2005:GPM:1162254} that the posterior distribution of $f|\mathcal{D}_t$ is another GP, namely $ f|\mathcal{D}_t \sim \text{GP}(\mu_t, k_t)$ where 
\begin{align}
    \mu_t(x) &:= \mathbb{E}\left[ f(x) | \mathcal{D}_t \right] = k(x, x_{1:t})^T (k(x_{1:t}, x_{1:t}) + \sigma^2 I_t)^{-1} (y_{1:t} - \mu(x) 1_t)\\ 
    \label{eq:chap2_bo_posterior_mean}
   k_t(x,x') &:= C \left[ f(x) \cdot f(x') | \mathcal{D}_t \right] \nonumber \\
   &= k(x,x') - k(x, x_{1:t})^T (k(x_{1:t}, x_{1:t}) + \sigma^2 I_t)^{-1} k(x', x_{1:t}).
\end{align}
Here, $I_t$ is a $t\times t$ identity matrix and $1_t = [1 \text{ } \ldots \text{ } 1]^T \in \mathbb{R}^t$. 
In addition, we denote by $\sigma^{2}_t(x)$ the posterior variance of $f(x)$ given $\mathcal{D}_t$, i.e., 
\begin{align}
    \sigma_t^2(x) &:= \mathbb{V}\left[ f(x) | \mathcal{D}_t \right] = k_t(x,x) \nonumber \\ 
    &=k(x,x) - k(x, x_{1:t})^T (k(x_{1:t}, x_{1:t}) + \sigma^2 I_t)^{-1} k(x, x_{1:t}). 
    \label{eq:chap2_bo_posterior_var}
\end{align}
The tractable posterior distribution above is perhaps the most attractive characteristic that makes GPs extensively common as a probabilistic model in Bayesian optimization. 

\subsection{Acquisition Functions}
Inherited from the general sequential decision making, one of the key challenges in Bayesian optimization is to balance exploration versus exploitation: either gathering more new data for better estimating the mean payoff function (the underlying objective function) or choosing a greedy optimal action based on the currently gathered data. Bayesian optimization maintains such balance via maximizing an acquisition function. In particular, given the past observations $\mathcal{D}_{t-1} = \{(x_{\tau}, y_{\tau})\}_{\tau=1}^{t-1}$, the next point $x_t$ is selected via 
\begin{align*}
    x_t \in \operatorname*{arg\,max}_{x \in \mathcal{X}} u(x|\mathcal{D}_{t-1}),
\end{align*}
where $u(x|\mathcal{D}_{t-1})$ is an acquisition function based on $\mathcal{D}_{t-1}$. 

There are many choices for acquisition functions in the literature. A key idea to design an acquisition function is to ensure exploitation-exploration trade-off encoded into the acquisition function. Here we discuss some of the most common acquisition functions used in Bayesian optimization. We refer the readers to \citep{DBLP:journals/corr/abs-1012-2599} for the other acquisition functions.  

\subsubsection{Expected Improvement}
Expected improvement (EI) \citep{mockus1978application} selects the next point as the one which maximizes the expected improvement with some improvement margin $\xi \geq 0$: 
\begin{align*}
    u_{EI}(x|\mathcal{D}_{t-1}) = \mathbb{E} \left[\max \{0, \mu_t(x) - \max_{1 \leq \tau \leq t-1}{y_{\tau}} - \xi \} | \mathcal{D}_{t-1} \right], 
\end{align*}
where $\xi \geq 0$ is a constant for the improvement margin, $\mu_t(\cdot)$ is the posterior mean given in Equation (\ref{eq:chap2_bo_posterior_mean}) conditioned on the data $\mathcal{D}_t = \mathcal{D}_{t-1} \cup \{(x, y)\}$ for any $y \in \mathbb{R}$ and the expectation $\mathbb{E}[\cdot]$ above is taken over the randomness of $y \sim \mathcal{N}(\mu_{t-1}(x), \sigma_{t-1}^2(x) + \sigma^2)$. The EI acquisition function above admits an analytical formula: 
\begin{align*}
    &u_{EI}(x|\mathcal{D}_{t-1}) \\
    &= \begin{cases}
    (\mu_{t-1}(x) - \displaystyle \max_{1 \leq \tau \leq t-1}{y_{\tau}} - \xi) \Phi(Z_{t-1}(x)) + \sigma_{t-1}(x) \phi(Z_{t-1}(x)) & \text{if } \sigma_{t-1}(x) > 0 \\
    0 & \text{if } \sigma_{t-1}(x) = 0.
    \end{cases}
\end{align*}
where $\mu_{t-1}$ and $\sigma_{t-1}$ are the posterior mean and variance function given in Equations (\ref{eq:chap2_bo_posterior_mean}) and (\ref{eq:chap2_bo_posterior_var}), $Z_{t-1}(x) := (\mu_{t-1}(x) - \displaystyle \max_{1 \leq \tau \leq t-1}{y_{\tau}} - \xi) \cdot \sigma_{t-1}^{-1}(x)$, and $\Phi(\cdot)$ and $\phi(\cdot)$ denote the CDF and pdf of the standard normal distribution, respectively. 

% While EI performs well in practice, it does not have a theoretical guarantee in 

\subsubsection{Gaussian Process Upper Confidence Bounds}
One of the most popular acquisition functions in Bayesian optimization is based on the principle of optimism in the face as uncertainty as discussed in Subsection \ref{chap2_bandit_OFU} in this chapter. Following this principle, \citep{DBLP:conf/icml/SrinivasKKS10} propose the Gaussian Process Upper Confidence Bound (GP-UCB) acquisition function:  
\begin{align*}
    u_{GP-UCB}(x|\mathcal{D}_{t-1}) = \mu_{t-1}(x) + \sqrt{\beta_t} \sigma_{t-1}(x), 
\end{align*}
where $\beta_t$ is a time-dependent hyperparameter controlling the level of exploration. GP-UCB bears an appealing intuition. Intuitively, The first term $\mu_{t-1}(x)$ encourages exploitation by favoring the points with high posterior means while the second term $\sqrt{\beta_t} \sigma_{t-1}(x)$ encourages exploration by favoring points with high uncertainty (i.e., high posterior variance). The time-dependent hyperparameter $\beta_t$ is chosen such that the true function $f$ is in $[\mu_{t-1}(\cdot) - \sqrt{\beta_{t-1}} \sigma_{t-1}(\cdot), \mu_{t-1}(\cdot) + \sqrt{\beta_t} \sigma_{t-1}(\cdot)]$ for all $t$ with high probability. In particular,  \cite{DBLP:conf/icml/SrinivasKKS10} prove that for $\mathcal{X} \subseteq [0,1]^d$ and for any kernel $k$ such that 
\begin{align*}
    \mathbb{P} \left( \sup_{x \in \mathcal{X}} \bigg | \frac{\partial f}{\partial x_i} \bigg | > L \right) \leq a \exp(- (L/b)^2), \forall i = 1,..., d \text{ where } f \sim \text{GP}(0, k), 
\end{align*} 
for some constants $a,b,L > 0$, GP-UCB incurs sublinear regret with probability at least $1-\delta$ if we choose \begin{align*}
    \beta_t = 2 \log(t^2\pi^2 / (3 \delta)) + 2d \log(t^2 db \log(4 da /\delta)).
\end{align*}

\begin{rem}
As GP-UCB has a well-established theoretical result in Bayesian optimization, it is often effectively adopted to address other problems of Bayesian optimization in different settings. For example, \citet{NIPS2019_9350} modify GP-UCB to address the problem of Bayesian optimization in unknown search spaces. 
\end{rem}

\subsubsection{Posterior Sampling}
The posterior sampling (a.k.a. Thompson sampling) simply samples an action according to the posterior distribution it is optimal. In particular, the acquisition function for posterior sampling is defined as 
\begin{align*}
    u_{PS}(x|\mathcal{D}_{t-1}) = \hat{f}(x) \text{ where } \hat{f} \sim f | \mathcal{D}_{t-1}.
\end{align*}
In practice, we can approximately obtain a posterior function sample $\hat{f}$ using Bayesian linear models with random features \citep{DBLP:conf/nips/Hernandez-LobatoHG14} as shown in Algorithm \ref{alg:ps_gp}.
\begin{algorithm}[t]
  \caption{Posterior sampling for Gaussian Processes}
\label{alg:ps_gp}
\begin{algorithmic}[1]
  \STATE {\bfseries Input:} Shift-invariant kernel $k(x,y)$ (e.g., the squared exponential kernel), data $\mathcal{D}_{t-1} = \{(x_{\tau}, y_{\tau})\}_{\tau=1}^{t-1}$, the number of Monte Carlo samples $m \in \mathbb{N}$. 
\STATE Compute the \textit{spectral density} $p(w) = s(w) /\alpha$ of a shift-invariant kernel $k(x,y)$ where 
\begin{align*}
    s(w) &= \frac{1}{(2\pi)^d} \int e^{i w^T \tau} k(\tau, 0) d \tau \\ 
    \alpha &= \int s(w) dw. 
\end{align*}

\STATE Compute a random $m$-dimensional feature function 
\begin{align*}
    \Phi(x) = \sqrt{\frac{2 \alpha}{m}} cos(Wx + b) 
\end{align*}
where the row vectors in  $W \in \mathbb{R}^{m \times d}$ are i.i.d. samples of $p(w)$ and the elements of $b \in \mathbb{R}^m$ are i.i.d. samples of $U(0, 2 \pi)$.

\STATE Sample $\theta \sim \mathcal{N}(m,V)$ where 
\begin{align*}
    m &= (\Phi^T \Phi + \sigma^2 I)^{-1} \Phi^T y \\
    V &= (\Phi^T \Phi + \sigma^2 I)^{-1} \sigma^2 \\ 
    \Phi &= [\phi(x_1)^T, ... ,\phi(x_n)^T]^T \in \mathbb{R}^{n \times d } \\ 
    y &= [y_1, ..., y_n]^T \in \mathbb{R}^n. 
\end{align*}

\STATE {\bfseries Output:} Obtain a posterior function sample $\hat{f}(x) = \Phi(x)^T \theta \sim f | \mathcal{D}_{t-1}$.
%   \FOR{$t=1$ {\bfseries to} $T$}
%   \STATE Select $x_t$ by maximizing the acquisition function over GP: $$x_t \in \operatorname*{arg\,max}_{x \in \mathcal{X}} u(x|\mathcal{D}_{t-1})$$  
%   \STATE Evaluate the function $y_t = f(x_t) + \epsilon_t$
%   \STATE Augment the data $\mathcal{D}_t := \mathcal{D}_{t-1} \cup \{(x_t, y_t)\}$ and update the GP. 
%   \ENDFOR
%   \STATE {\bfseries Output:} $v_K = \|Q_K \|_{2, \rho} = \sqrt{ \int Q_K(s,a)^2 \rho(ds,da)  }$
% %   \UNTIL{$noChange$ is $true$}
\end{algorithmic}
\end{algorithm}

Posterior sampling offers several advantages over upper confidence bound methods such as GP-UCB. Computing an upper confidence bound is often more complicated than posterior sampling as the former requires either an analytic form of the posterior distribution or an extensive Monte Carlo simulation for each action. Posterior sampling does not require computing the posterior distribution but only sampling from it. In addition, designing a proper form of upper confidence bound often requires a sophisticated tool such as self-normalized martingale processes \citep{DBLP:conf/nips/Abbasi-YadkoriPS11}. For more detailed discussion of posterior sampling, we refer the readers to \citep{DBLP:journals/mor/RussoR14}. 

% To sample efficiently, Bayesian optimization does not simply rely on the prediction of the probabilistic model (a \textit{greedy} action) to decide where to evaluate next because the probabilistic model is not accurate at the beginning; thus the maximizer of the surrogate model does not necessary indicate that of the underlying objective function. Instead, Bayesian optimization relies on an \textit{acquisition} function that encourages both \textit{exploration} and \textit{exploitation} for the search. . In Bayesian optimization, a acquisition function should incorporate such exploration-exploitation trade-off, thus a key design choice to make. Some of the common acquisition functions are expected improvement (EI) \citep{mockus1978application} and Gaussian Process upper confidence bound (GP-UCB) \citep{DBLP:conf/icml/SrinivasKKS10}  

\subsection{Bayesian Quadrature Optimization}
\label{chap2_bqo}

The standard Bayesian optimization uses a canonical form of the unknown black-box objective function. However, in many machine learning problems, the objective function often takes a quadrature form which is an expected value with respect to an environmental variable $w \sim P$ of an expensive, black-box integrand:
\begin{align*}
    \max_{x \in \mathcal{X}} g(x) \text{ where } g(x) := \mathbb{E}_{w \sim P} \left[ f(x,w)  \right]. 
\end{align*}
This problem is known as Bayesian quadrature optimization (BQO) \citep{Toscano_IntegralBO_18}. In BQO, the environmental distribution $P(w)$ is known, either in an analytical form of $P(w)$ or in a sampling procedure $w \sim P(w)$. Even with that prior knowledge about $P(w)$, as $f$ is expensive, given a fixed input $x$, we do not expect to evaluate $f(x,w)$ for all $w$ in the domain of $w$. Thus, it is more desirable to solve the BQO by evaluating $f$ at a small set of selective $w \sim P$ (e.g., one sample of $w$) for any given $x$. 

The prior knowledge about $P(w)$ and the linearity of the expectation operator $\mathbb{E}_{w \sim P}$ allows a seamless adaptation of Gaussian process (GP) modeling from standard BO to BQO \citep{o1991bayes}. In particular, assume that $f(x,w) \sim GP(\mu, k)$. As $g$ is a linear combination of $f$, $g$ also follows a GP: $g(x) \sim GP(\mu^g, k^g)$ where 
\begin{align*}
    \mu^g(x) &= \int \mu(x,w) P(w) dw, \\
     k^g(x,x') &= \int \int k(x,w; x', w') P(w) P(w') dw dw'. 
\end{align*}
Now let $\mathcal{D}_t = \{(x_i, w_i, y_i)\}_{i=1}^t$ be the aggregated data collected after time $t$ where $y_i = f(x_i, w_i) + \epsilon_i$ with $\epsilon_i \sim \mathcal{N}(0, \sigma^2)$, the posterior distribution of $g$, i.e., $g | \mathcal{D}_t$, is also a $GP(\mu^g_t, k^g_t)$, which can be represented using the posterior distribution of $f | \mathcal{D}_t \sim GP(\mu_t, k_t)$
\begin{align*}
    \mu^g_t(x) &= \int \mu_t(x,w) P(w) dw, \\
     k^g_t(x,x') &= \int \int k_t(x,w; x', w') P(w) P(w') dw dw'.
\end{align*}

Given the above formulation, \citet{Toscano_IntegralBO_18} derive a variant of knowledge gradient \citep{frazier2009knowledge} to address the BQO problem. 

\section{Markov Decision Processes}
\subsection{Introduction}

Markov decision processes (MDPs) formally describe an environment for (full-fledged) RL. Almost all RL problems (e.g., optimal control and partially observable problems) can be formulated as MDPs. The bandit problem discussed in Section \ref{chap2_section_bandit} can be considered as MDPs with one state. 

Formally, a MDP is a tuple $(\mathcal{S}, \mathcal{A}, P, R, \gamma)$ where $\mathcal{S}$ is the state space, $\mathcal{A}$ is an action space, $\rho \in \mathcal{P}(\mathcal{S})$ is an initial state distribution, $P: \mathcal{S} \times \mathcal{A} \rightarrow \mathcal{P}(\mathcal{S})$ is a transition operator, $R: \mathcal{S} \times \mathcal{A} \rightarrow \mathcal{P}([0,1])$ is a reward function, and $\gamma \in [0,1]$ is a discount factor. Here $\mathcal{P}(\mathcal{S})$ denotes the space of probability measures supported on $\mathcal{S}$. Without loss of generality, we also assume that the reward distribution $R$ is supported on $[0,1]$.

A policy $\pi: \mathcal{S} \rightarrow \mathcal{P}(\mathcal{A})$ induces a distribution over the action space conditioned on states. A policy $\pi$ that sequentially interacts with the MDP over $T$ steps induces a random sequence of $(s_0, a_0, r_0, ..., s_T, a_T, r_T, s_{T+1})$, as illustrated in Figure \ref{fig:chap2_mdp}. An \textit{absorbing} state is a state in which the MDP terminates. Think of this as the final states of a video game at the point where it is clear that the game is over (i.e., whether you win, lose or draw the game). A terminal state remarks the end of one \textit{episode} at which any further interaction with the MDP restarts the MDP by proceeding with the next episode. Some MDPs do not have absorbing states and we can impose the time horizon $H$, i.e. the maximum number of steps within an episode. In either cases, for unifying notations, we assume that $R(s,a) = \delta_0$ (the Dirac distribution centered at $0$) if $s$ is an absorbing state or $s = s_{H+1}$ in the $H$-horizon setting.  

\begin{figure}
    \centering
    \includegraphics{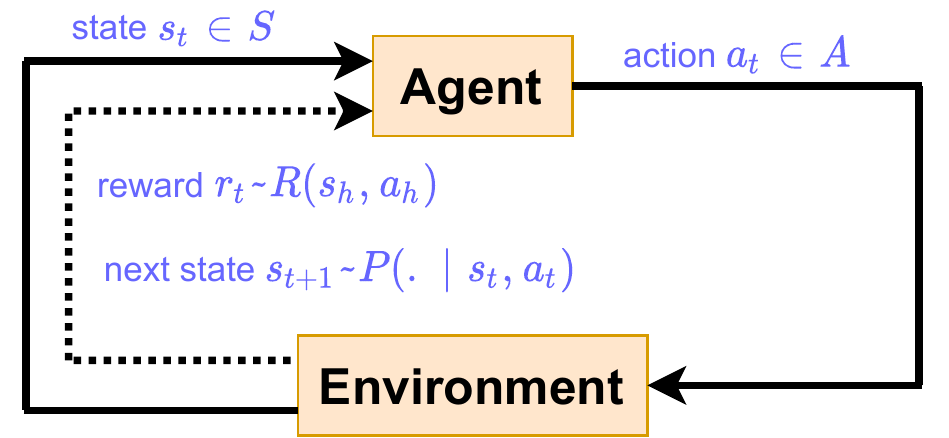}
    \caption{An illustration of an interaction with a Markov decision process.}
    \label{fig:chap2_mdp}
\end{figure}

The $Q$-value function for policy $\pi$ at state-action pair $(s,a)$, denoted by $Q^{\pi}(s,a) \in [0,1]$, is the expected discounted total reward the policy collects if it initially starts in the state-action pair, 
\begin{align*}
    Q^{\pi}(s,a) &:= \mathbb{E}_{\pi} \left[ \sum_{t=0}^{\infty} \gamma^t r_t | s_0 = s, a_0 = a \right], 
\end{align*}
where $r_t \sim R(s_t, a_t), a_t \sim \pi(\cdot|s_t)$, and $s_t \sim P(\cdot|s_{t-1}, a_{t-1})$. The value for a policy $\pi$ is simply $V^{\pi} = \mathbb{E}_{s \sim \rho, a \sim \pi(\cdot|s)} \left[Q^{\pi}(s,a) \right]$, and the optimal value is $V^* = \max_{\pi} V^{\pi}$ where the maximization is taken over all stationary policies. Equivalently, the optimal value $V^*$ can be obtained via the optimal $Q$-function $Q^* = \max_{\pi} Q^{\pi}$ via $V^* = \mathbb{E}_{s \sim \rho, a \sim \pi(\cdot|s)} \left[Q^*(s,a) \right]$. Moreover, the optimal policy is obtained via $\pi^* = \operatorname*{arg\, max}_{\pi} Q^{\pi}$. Denote by $T^{\pi}$ and $T^*$ the Bellman operator and the optimality Bellman operator, i.e., for any $f: \mathcal{S}\times \mathcal{A} \rightarrow \mathbb{R}$
\begin{align}
    [T^{\pi} f](s,a) &= \mathbb{E}_{r \sim R(s,a)}[r] + \gamma \mathbb{E}_{s' \sim P(\cdot|s,a), a' \sim \pi(\cdot|s')} \left[ f(s',a') \right] \\
    \label{chap2_mdp_bellman_eq}
    [T^* f](s,a) &= \mathbb{E}_{r \sim R(s,a)}[r] + \gamma \mathbb{E}_{s' \sim P(\cdot|s,a)} \left[ \max_{a'} f(s',a') \right], 
\end{align}
we have 
\begin{align}
    T^{\pi} Q^{\pi} = Q^{\pi} \text{ and } T^* Q^* = Q^*. 
    \label{chap2_eq_fixed_point}
\end{align}

Another intriguing property of these operators is their contraction in the infinity norm, i.e., 
\begin{align}
    \|T^{\pi} f - T^{\pi} g \| \leq \gamma \| f - g \|_{\infty} \\ 
    \|T^* f - T^* g \| \leq \gamma \| f - g \|_{\infty}, 
\end{align}
for all functions $f,g: \mathcal{S} \times \mathcal{A} \rightarrow \mathbb{R}$. The contraction of $T^{\pi}$ and $T^*$ indicates that these operators have the unique fixed points given in Equation (\ref{chap2_eq_fixed_point}), following from Banach's fixed point theorem \citep{BanachSURLO}. In addition, starting at arbitrary function $f$, $(T^{\pi})^n f$ and $(T^*)^n f$ converges at the fast linear rate to their respective fixed points. 

\subsection{Performance Metrics}
The goal of RL is how to efficiently estimate the optimal policy $\pi^*(s)$. A typical RL algorithm often iteratively produces a sequence of estimates $(V_k)_{k=1}^K$ for the optimal value $V^*$ where $K$ is the number of rounds of interactions. We measure the performance of the algorithm by the total regret at an initial state, 
\begin{align*}
    \text{Regret}(K, s_0) = \sum_{k=1}^K (V^*(s_0) - V_k(s_0)). 
\end{align*}
Similar to bandits and Bayesian optimization, a RL algorithm is said to successfully learn the optimal policy if it incurs a sublinear regret, i.e., 
\begin{align*}
    \lim_{K \rightarrow \infty} \frac{\text{Regret}(K, s_0)}{K} = 0, \forall s_0. 
\end{align*}

\subsection{Reinforcement Learning Approaches}
If the transition operator $T$ and the reward function $R$ are known to the agent, this problem can be efficiently solved by dynamic programming \citep{10.5555/3312046}. The challenge that makes this goal a learning task is that the agent knows neither the transition operator nor the reward function. Instead, the agent only observes transition samples from $P$ and reward samples from $R$ based on its active decision-making process. Though dynamic programming is no longer feasible for the RL setting, its core idea of Bellman backup and the fixed point equations in Equations (\ref{chap2_eq_fixed_point}) lay a foundational step for many RL algorithms. Typically, there are three main approaches to RL: value-based methods, policy gradient methods and model-based methods. While value-based methods learn the optimal $Q$-function and extract the optimal policy thereafter, policy gradient methods directly optimize the value function over the space of policies. Model-based methods, on the other hand, aim at estimating the transition dynamic $P$ and reward function $R$, and apply dynamic programming on the estimated models. Since value-based methods are effective in practice with good theoretical guarantees that also provide foundations to the other approaches, we focus on value-based methods in this thesis. We refer the readers to \citep{10.5555/3312046} for a review on policy gradients and model-based methods. 

\subsubsection{(Deep) Q-learning} 
One of the most common value-based methods and the earliest breakthroughs in RL is Q-learning \citep{Watkins92q-learning}, also known as off-policy temporal-difference learning, with the pseudo-code presented in Algorithm \ref{alg:q_learning}, where we consider a tabular MDP where $|\mathcal{S}| < \infty$ and $|\mathcal{A}| < \infty$. 

% \begin{algorithm}
%   \caption{Q-learning}
% \label{alg:q_learning}
% \begin{algorithmic}[1]
%   \STATE {\bfseries Input:} Learning rate $\alpha > 0$. 
%   \STATE Initialize $Q(s,a)$ arbitrarily for all $(s,a) \in \mathcal{S} \times \mathcal{A}$ 
%   \REPEAT 
%   \STATE Initialize state $s$
%   \REPEAT 
%   \STATE Choose $a$ from $s$ using policy derived from $Q$ (e.g., $\epsilon$-greedy) and observe $r \sim R(s,a)$ and $s' \sim P(\cdot|s,a)$ 
%   \STATE Update $Q(s,a) \leftarrow Q(s,a) + \alpha (r + \gamma \max_{a'} Q(s',a') - Q(s,a))$
%   \label{chap2_q-learning_SA}
%   \UNTIL{the end of the episode}
%   \UNTIL{the maximum number of episodes}
%   \STATE {\bfseries Output:} $\hat{\pi}^*$ where $\hat{\pi}^*(s) = \operatorname*{arg,\max}_{a} Q(s,a)$ 
% \end{algorithmic}
% \end{algorithm}

\begin{algorithm}[t]
  \caption{Q-learning}
\label{alg:q_learning}
\begin{algorithmic}[1]
  \STATE {\bfseries Input:} Learning rate $\{\alpha_t(s,a) > 0: \forall (s,a) \in \mathcal{S} \times \mathcal{A}, \forall t\}$. 
  \STATE Initialize $Q(s,a)$ arbitrarily for all $(s,a) \in \mathcal{S} \times \mathcal{A}$ 
  \STATE Initialize $s_0$
  \FOR{$t=0,1,2 ...$}
  \STATE Choose $a_t$ from $s_t$ using policy derived from $Q$ (e.g., $\epsilon$-greedy) and observe $r_t \sim R(s_t,a_t)$ and $s_{t+1} \sim P(\cdot|s_t,a_t)$ 
  \STATE Update $Q(s_t,a_t) \leftarrow Q(s_t,a_t) + \alpha_t(s,a) (r_t + \gamma \max_{a'} Q(s_{t+1},a') - Q(s_t,a_t))$
    \label{chap2_q-learning_SA}
  \ENDFOR 
  \STATE {\bfseries Output:} $Q(s,a)$ 
\end{algorithmic}
\end{algorithm}

The most important step is the update of $Q$ function at line \ref{chap2_q-learning_SA} which is a stochastic approximation to the optimality Bellman operator with learning rates $\{\alpha_t(s,a) > 0: \forall (s,a) \in \mathcal{S} \times \mathcal{A}, \forall t\}$. This update style is called temporal difference (TD) which allows to learn directly from experience without a model of the environment's dynamics as in dynamic programming or waiting until the end of an episode for an update as in Monte Carlo methods. This is the main advantage of $Q$-learning for solving RL problems. 

If the learning rates satisfy the Robbin-Monro conditions, i.e., 
\begin{align*}
    \sum_{t=0}^{\infty} \alpha_t(s,a) = \infty \text{ and } \sum_{t=0}^{\infty} \alpha_t^2(s,a) < \infty,
\end{align*}
for any $(s,a) \in \mathcal{S} \times \mathcal{A}$, then $Q$ converges to $Q^*$ almost surely. 

In the scenario where the state space is sufficiently large as in Atari games, Q-learning obtains scalability when using with deep neural network as function approximation. As such, the most predominant method is perhaps deep Q-network (DQN) \citep{mnih2015human}. DQN is a breakthrough in RL that achieves a human-level control on a wide ranges of Atari games using only raw pixels and scores as inputs. This empirical success comes from several algorithmic advances including the use of memory buffer and target networks to stabilize the learning. In Chapter \ref{chap:four}, we build an architecture upon DQN to achieve a so-called Moment Matching DQN (MMDQN) which learns the intrinsic randomness of the environment as an auxiliary task and achieves a new state-of-the-art empirical performance in the Atari game suite.

\subsubsection{Optimistic Least-Squares Value Iteration}
Even though the basic Q-learning presented in Algorithm \ref{alg:q_learning} obtains an asymptotic convergence under mild conditions, the asymptotic regime requires the number of samples $n$ to approach $\infty$. Thus, it is unclear whether the algorithm is (sample)-efficient, i.e., how many samples are required to obtain an estimate error within a user-specified precision? A finite-sample analysis is often more helpful to answer such question. In the following, we summarize a celebrated result of such finite-sample analysis for Q-learning using the optimism in the face of uncertainty (OFU) principle. 

The OFU principle has been shown to be effective to obtain sample efficiency in bandits and Bayesain optimization presented in Section \ref{chap2_section_bandit} and \ref{chap2_bo}, respectively. In tabular MDPs, the OFU principle also works effectively. We present a representative algorithm for such cases, namely tabular optimistic least-squares value iteration (LSVI) \citep{jin2018qlearning}. For this result, we consider a tabular episodic time-inhomogeneous MDP$(\mathcal{S}, \mathcal{A}, P, r, H)$ where $H$ is the number of steps in each episode, $P = \{P_h\}_{h=1}^{H}$ is a sequence of transition kernels $P_h: \mathcal{S} \times \mathcal{A} \rightarrow \mathcal{P}(\mathcal{S})$, and $r = \{r_h\}_{h=1}^H$ is a sequence of deterministic reward functions $r_h: \mathcal{S} \times \mathcal{A} \rightarrow [0,1]$. The pseudo-code for tabular optimistic LSVI is presented in Algorithm \ref{alg:tabular_lsvi}. Here $c$ is a constant, $\delta \in (0,1)$ and $\alpha_t = (H+1)/(t+1)$. 
% Denote the optimal value functions and Q-value functions by $V^* = \{V^*_h\}_{h=1}^H$ and $Q^* = \{Q^*_h\}_{h=1}^H$, respectively. 
\begin{algorithm}[t]
  \caption{Tabular optimistic least-squares value iteration \citep{jin2018qlearning}}
\label{alg:tabular_lsvi}
\begin{algorithmic}[1]
  \STATE Initialize $Q_h(s,a) \leftarrow H$, and $N_h(s,a) \leftarrow 0$, $\forall (s,a,h) \in \mathcal{S} \times \mathcal{A} \times [H]$
  \FOR{episode $k=1,2, ..., K$}
  \STATE Receive initial state $s_1$ 
  \FOR{step $h=1,2,...,H$}
  \STATE Take action $a_h \leftarrow \operatorname*{arg\,max}_{a'} Q_h(s,a')$ and observe $s_{h+1}$ and $r_h(s_h, a_h)$ 
  \STATE $t = N_h(s,a) \leftarrow N_h(s,a) + 1$ and  $b_t \leftarrow c \sqrt{H^3 \log(SAKH/\delta) / t}$
  \STATE $Q_h(s_h, a_h) \leftarrow (1-\alpha_t) Q_h(s_h, a_h) + \alpha_t \left[r_h(s_t,a_t) + V_{h+1}(s_{h+1}) + b_t\right]$ 
  \label{tabular_lsvi:optimistic}
  \STATE $V_h(s_h) \leftarrow \min\{H, \max_{a}Q_h(s_h,a)\}$
  \ENDFOR
  \ENDFOR 
%   \STATE {\bfseries Output:} $Q(s,a)$ 
\end{algorithmic}
\end{algorithm}
The key difference between the tabular optimistic LSVI and the basic Q-learning in Algorithm \ref{alg:q_learning} is the update step at line \ref{tabular_lsvi:optimistic} of Algorithm \ref{alg:tabular_lsvi} where a ``bonus'' function $b_t$ is augmented. With a carefully designed bonus function such as a specific one in Algorithm \ref{alg:tabular_lsvi}, for any $h \in [H] = \{1,2,...,H\}$, $Q_h$ is an optimistic estimate of the the optimal $Q^*_h$ in a sense that $Q_h \geq Q^*_h$ with probability at least $1 - \delta$. \citet{jin2018qlearning} prove that the tabular optimistic LSVI obtains a regret of $\tilde{O}( \sqrt{H^4 SAT})$ where $\tilde{O}$ hides log factors and $T = KH$ is the total number of samples. With a more intricate bonus function using Bernstein's inequality, \citet{jin2018qlearning} improve the regret of tabular optimistic LSVI to  $\tilde{O}(\sqrt{H^3 SAT})$, which nearly matches the lower bound of $\Omega(\sqrt{H^2 SAT})$. 

The OFU principle is also an effective and dominant strategy for obtaining sample efficiency (i.e. sublinear regret) in more complex environments beyond tabular MDPs. For such environments, many states may have not been visited even once during the learning, thus a function approximation is required to generalize from observed states to unseen ones. In particular, sample efficiency is attainable with the OFU principle for linear MDPs \citep{DBLP:conf/colt/JinYWJ20}, generalized linear MDPs \citep{DBLP:journals/corr/abs-1912-04136}, MDPs with finite eluder dimension \citep{DBLP:journals/corr/abs-2005-10804}, and MDPs with finite Bellman-eluder dimension \citep{DBLP:journals/corr/abs-2102-00815}. 

\section{Distributional Reinforcement Learning} 
All the value-based methods for RL we have discussed so far share the same characteristic that they learn to optimize the {expected} value of the total (discounted) reward,  $Q^{\pi}(s,a) = \mathbb{E}_{\pi}[Z^{\pi}(s,a)]$ where 
\begin{align}
    Z^{\pi}(s,a) = 
    \begin{cases}
         \sum_{t=0}^{\infty} \gamma^t r_t(s_t, a_t)  | (s_0, a_0) = (s,a)  & \text{ for infinite-horizon MDPs}, \\ 
         \label{chap2:distributional_return}
         \sum_{h=1}^{H} r_h(s_h,a_h)  | (s_1, a_1) = (s,a)  & \text{ for finite-horizon MDPs}. 
    \end{cases}
\end{align}
Here, $\mathbb{E}_{\pi}$ is the expectation taken over the randomness of the trajectory induced by policy $\pi$ and $Z^{\pi}$ is called the random return. Focusing only on the expected value of the random return discards the other valuable information of $Z^{\pi}$ which could be useful for both learning and decision-making. For example, the other values rather than the expectation such as quantile values are important to design risk-sensitive RL algorithms \citep{DBLP:journals/corr/ShenTSO13}. 

Distributional RL \citep{DBLP:conf/icml/BellemareDM17} is a recent RL paradigm which aims at learning the entire distribution, instead of just the expected value of the random return. For empirical performance, distributional RL has achieved impressive empirical successes in the Atari game benchmark  \citep{DBLP:conf/icml/BellemareDM17,DBLP:conf/aaai/DabneyRBM18,DBLP:conf/icml/DabneyOSM18,yang2019fully,Nguyen-Tang_Gupta_Venkatesh_2021}.

For notational simplicity, we focus on infinite-horizon MDPs. Let $\mu^{\pi} = law(Z^{\pi})$ be the law, i.e., the distribution of the random return $Z^{\pi}$ defined in Equation (\ref{chap2:distributional_return}). Similar to the standard expected RL with the Bellman operator defined in Equation (\ref{chap2_mdp_bellman_eq}), distribution RL has a so-called distributional Bellman operator $\mathcal{T}^{\pi}: \mathcal{P}([0,1])^{\mathcal{S} \times \mathcal{A}} \rightarrow \mathcal{P}([0,1])^{\mathcal{S} \times \mathcal{A}}$, defined as 
\begin{align}
    [\mathcal{T}^{\pi} \mu](s,a) := \int_{\mathcal{S}} \int_{\mathcal{A}} \int_{[0,1]} (f_{\gamma,r})_{\#} \mu(s',a') R(dr|s,a) \pi(d a'|s') P(ds'|s,a),
    \label{chap2:distributional_bellman_operator}
\end{align}
for any $\mu \in \mathcal{P}([0,1])$, where $f_{\gamma,r}(z) := r + \gamma z, \forall z$ and $(f_{\gamma,r})_{\#} \mu(s',a')$ is the pushforward measure of $\mu(s',a')$ by $f_{\gamma,r}$. Note that by definition, $\mu^{\pi}$ is the fixed point of $\mathcal{T}^{\pi}$, i.e., $\mathcal{T}^{\pi} \mu^{\pi} = \mu^{\pi}$. The key difference between the Bellman operator $T^{\pi}$ defined in Equation (\ref{chap2_mdp_bellman_eq}) and the distributional Bellman operator $\mathcal{T}^{\pi}$ defined in Equation (\ref{chap2:distributional_bellman_operator}) is that the latter operates on the space of probability measures instead of the function space, thus is more complicated. 

The contraction of $\mathcal{T}^{\pi}$ is relative to specific distribution discrepancies employed. In particular, $\mathcal{T}^{\pi}$ is a contraction in the $p$-Wasserstein metric \citep{DBLP:conf/icml/BellemareDM17} and Cr\'amer distance \citep{DBLP:conf/aistats/RowlandBDMT18}; but it is not a contraction in total variation distance \citep{Chung1987DiscountedMD}, Kullback-Leibler divergence and Komogorov-Smirnov distance \citep{DBLP:conf/icml/BellemareDM17}.

On the algorithmic side, it is necessary to approximate the return distributions to learn a distributional Bellman operator for distributional RL, as probability measures are infinite-dimensional objects. In what follows, we briefly review predominant distributional RL methods in the literature. 

\subsection{Categorical Distributional Reinforcement Learning} 
The main idea of categorical distributional reinforcement learning (CDRL) \citep{DBLP:conf/icml/BellemareDM17} is to approximate a distribution $\eta$ by a categorical distribution $\hat{\eta} = \sum_{i=1}^N \theta_i \delta_{z_i}$ where $z_1 \leq z_2 \leq ... \leq z_N $ is a set of fixed supports and $\{\theta_i\}_{i=1}^N$ are learnable probabilities. The optimal categorical distribution $\hat{\eta}$ in CDRL are defined via the projection of $\eta$ onto $\Theta := \{ \sum_{i=1}^N p_i \delta_{z_i}: \{p_i\}_{i=1}^N \in \Delta_N \}$ with respect to the Cr\'amer distance \citep{DBLP:conf/aistats/RowlandBDMT18}. In particular, 
\begin{align*}
    \hat{\eta} = \Pi_{C}(\eta) =: \argmin_{\iota \in \Theta} d_C(\eta, \iota), 
\end{align*}
where $d_C$ is the the Cr\'amer distance defined by 
\begin{align*}
    d_C(\nu_1, \nu_2) := \sqrt{\int (F_{\nu_1}(x) - F_{\nu_2}(x))^2 dx},  
\end{align*}
for any two distributions $\nu_1$ and $\nu_2$ with cumulative distribution functions $F_{\nu_1}$ and $F_{\nu_2}$, respectively. \citet{DBLP:conf/aistats/RowlandBDMT18} prove that such Cra\'mer projection above is equivalent to the heuristic projection in the original distributional RL \citep{DBLP:conf/icml/BellemareDM17}.  

The intriguing property of categorical distribution with Cra\'mer projection is that all the distributions are projected such that their images have the same supports. On the same supports, the KL divergence can be employed to minimize the discrepancy between two distributions. This is beneficial in practice because KL divergence is a relatively well-understood and effective loss function used in the context of deep learning. In particular, the main update in CDRL is presented in Algorithm \ref{alg:cdrl}. The main idea for this algorithm is that a Bellman target distribution $[\hat{\mathcal{T}} \hat{\eta}](s_t, a_t)$ is computed and then projected back to ensure that the resulting distribution $\nu$ is supported on $\{z_i\}_{i=1}^N$. Then, the KL divergence between $\hat{\eta}(s_t, a_t)$ and $\nu$ is minimized as an attempt to find the stationary distribution of the distributional Bellman operator. 

\begin{algorithm}
  \caption{Categorical distributional RL}
\label{alg:cdrl}
\begin{algorithmic}[1]
  \STATE {\bfseries Input:} A transition sample $(s_t, a_t, r_t, s_{t+1})$, current return distribution estimate $\hat{\eta}(s, a) = \sum_{i=1}^N \delta_{z_i} \theta_i(s, a)$ where $\{z_i\}_{i=1}^N$ is the fixed supports and $\theta(s,a) = [\theta_1(s,a), ..., \theta_N(s,a)] \in \Delta_N$. 
  \STATE $a^* \leftarrow \operatorname*{arg\,max}_{a'} \sum_{i=1}^n z_i \theta_i(s_{t+1}, a')$
  \STATE Compute an empirical Bellman target distributions: $$[\hat{\mathcal{T}} \hat{\eta}](s_t, a_t) := (f_{\gamma, r_t})_{\#} \hat{\eta}(s_{t+1}, a^*) = \sum_{i=1}^n \delta_{\gamma z_i  + r_t} \theta_i(s_{t+1}, a^*)$$ 
  \STATE Project $\hat{\mathcal{T}} \hat{\eta}$ back onto a distribution supported on $\{z_i\}_{i=1}^n$: 
  $$\nu \leftarrow \Pi_C([\hat{\mathcal{T}} \hat{\eta}](s_t, a_t))$$
  \STATE {\bfseries Output:} Compute the KL divergence loss 
  $$\mathcal{L}(\theta) = KL[\hat{\eta}(s_t, a_t) \| \nu ]$$
\end{algorithmic}
\end{algorithm}

In practice, the probabilities $\{\theta_i(s,a): 1 \leq i \leq N, (s,a) \in \mathcal{S} \times \mathcal{A}\}$ can be effectively represented by neural networks. In particular, with such neural parameterization, \citet{DBLP:conf/icml/BellemareDM17} show a successful variant of CDRL, namely C51 where $N=51$, which improves significantly over DQN in terms of empirical performance in Atari games. 

\subsection{Quantile Distributional Reinforcement Learning} 
A drawback of CDRL is that the use of fixed supports and KL divergence require a Cr\'amer projection step which is highly non-trivial and involved. This can be avoided by using a mixture of Dirac distributions, instead of a categorical distribution, to approximate a return distribution.

Quantile distributional RL or Quantile regression distributional RL (QRDRL) \citep{DBLP:conf/aaai/DabneyRBM18} approximates a distribution $\eta$ by a mixture of Diracs $\hat{\eta} = \frac{1}{N} \sum_{i=1}^N \delta_{\theta_i}$ which is the projection of $\eta$ on $\{  \frac{1}{N} \sum_{i=1}^N \delta_{z_i}: \{z_i\}_{i=1}^N \in \mathbb{R}^N \}$ with respect to the 1-Wasserstein distance. \citet{DBLP:conf/aaai/DabneyRBM18} show that the projection results into $\theta_i = F_{\eta}^{-1}( \frac{2i-1}{2N} )$ where $F_{\eta}^{-1}$ is the inverse cumulative distribution function of $\eta$. Since the quantile values $\{F_{\eta}^{-1}( \frac{2i-1}{2N})\}$ at the fixed quantiles $\{\frac{2i-1}{2N}\}$ are a minimizer of an asymmetric quantile loss from quantile regression literature and the quantile loss is compatible with stochastic gradient descent (SGD), the quantile loss is used for QRDRL in practice. The pseudo-code of QRDRL update is presented in Algorithm \ref{alg:qdrl}. 

\begin{algorithm}[t]
  \caption{Quantile regression distributional RL}
\label{alg:qdrl}
\begin{algorithmic}[1]
  \STATE {\bfseries Input:} A transition sample $(s_t, a_t, r_t, s_{t+1})$, current return distribution estimate $\hat{\eta}(s, a) = \sum_{i=1}^N \frac{1}{N}\delta_{\theta
  _i(s,a)} $. 
  \STATE $a^* \leftarrow \operatorname*{arg\,max}_{a'} \sum_{i=1}^n \frac{1}{N} \theta_i(s_{t+1}, a')$
  \STATE Compute an empirical Bellman target distribution: $$[\hat{\mathcal{T}} \hat{\eta}](s_t, a_t) := (f_{\gamma, r_t})_{\#} \hat{\eta}(s_{t+1}, a^*) = \sum_{i=1}^n \frac{1}{N} \delta_{\gamma \theta_i(s_{t+1}, a^*)  + r_t} $$ 
%   \STATE Project $\hat{\mathcal{T}} \hat{\eta}$ back onto a distribution supported on $\{z_i\}_{i=1}^n$: 
%   $$\nu \leftarrow \Pi_C([\hat{\mathcal{T}} \hat{\eta}](s_t, a_t))$$
%   \STATE Compute the KL divergence 
%   $$KL[\hat{\eta}(s_t, a_t) \| \nu ]$$
  
  \STATE {\bfseries Output:} Quantile regression loss 
  $$
  \mathcal{L}(\theta) = \sum_{i=1}^N \sum_{j=1}^N \frac{1}{N} \rho_{\frac{2i-1}{2N}} \left( \gamma \theta_j^{-}(s_{t+1}, a^*)  + r_t - \theta_i(s_t, a_t)\right),
  $$
  where $\theta^{-}$ is a copy of $\theta$, and $\rho_{\tau}(x) = x(\tau - 1\{x < 0\})$.
\end{algorithmic}
\end{algorithm}

In practice, a variant of QDRL with Huber loss, namely QR-DQN-1 \citep{DBLP:conf/aaai/DabneyRBM18}, can be used with deep neural networks and achieves a significant improvement in the Atari games over CDRL. 

\subsection{Implicit Distributional Reinforcement Learning} 
A drawback of QRDRL is that the projection space $\{  \frac{1}{N} \sum_{i=1}^N \delta_{z_i}: \{z_i\}_{i=1}^N \in \mathbb{R}^N \}$ have fixed probabilities $1/N$ for all support points $\{z_i\}_{i=1}^N$. Instead, the probabilities could be made implicit and learnable by deep neural networks. This modeling improvement is the main idea of implicit distributional RL \citep{DBLP:conf/icml/DabneyOSM18,yang2019fully}. In practice, this additional flexibility transfers into an empirical improvement over QRDRL in the Atari games in \citep{DBLP:conf/icml/DabneyOSM18,yang2019fully}.

\section{Offline Reinforcement Learning}
\subsection{Motivation}
The RL methods discussed so far work on the basis that they can interact with the underlying environment to acquire more data. Such interaction is however limited in many practical settings. In safety-related domains such as healthcare, medicine, and autonomous driving, it is very expensive and even unethical to try out a policy in the environment for collecting more data. Instead, the historical data collected a priori is often available, and it is desirable to evaluate a new policy or learn the optimal policy purely from the offline data. Offline RL \citep{levine2020offline} is a practical paradigm that considers such offline setting. 

There are two main tasks in offline RL: off-policy evaluation (OPE) and off-policy learning (a.k.a. offline learning) (OPL) . OPE aims at evaluating the value of a fixed target policy given the offline data generated by different policies. OPL is, on the other hand, learning an optimal policy using the offline data without any further interaction with the environment. The key conceptual differences between OPE, OPL and online learning are summarized and illustrated in Figure \ref{fig:chap2_diag_ope_offline}.

\subsection{Performance Metrics}

For offline RL, we measure the performance by the sub-optimality gaps. There are various ways to define sub-optimality gaps and there is not significant difference in a statistical sense among these ways. Here we present several of such definitions. In any concrete context, we elaborate which sub-optimality metric we use.

\noindent \textbf{OPE}. 
Given a fixed target policy $\pi$, for any value estimate $\hat{V}$ computed from the offline data, the sub-optimality of OPE is defined as
\begin{align*}
    \text{SubOpt}(\hat{V}; s) = |V^{\pi}(s) - \hat{V}(s)|, \forall s \in \mathcal{S}, 
\end{align*}
where $V^{\pi}$ is the value function for $\pi$. 

Another way to define the sub-optimality for OPE is 
\begin{align*}
    \text{SubOpt}(\hat{V}; \pi) = |\mathbb{E}_{s \sim \rho}[V^{\pi}(s)] - \mathbb{E}_{s \sim \rho}[\hat{V}(s)]|,
\end{align*}
where $\rho$ is the initial state distribution. 

\noindent \textbf{OPL}. 
Let $V^*$ be the optimal value function and $Q^*$ be the optimal $Q$-value function, for any policy $\pi$, its sub-optimality is defined as 
\begin{align*}
    \text{SubOpt}(\pi; s) = | V^*(s) - \mathbb{E}_{a \sim \pi}[Q^*(s,a)]|, \forall s \in \mathcal{S}.
\end{align*}

Alternatively, we can define the sup-optimality gap in OPL as 
\begin{align*}
    \text{SubOpt}(\pi) = \mathbb{E}_{s \sim \rho}[V^*(s)] - \mathbb{E}_{a \sim \pi(s), s \sim \rho}[Q^*(s,a)].
\end{align*}
% SubOpt(Vˆ ; π) = |V
% π − Vˆ |.
% For OPL. For any estimate πˆ of the optimal policy π
% ∗
% that is learned from the offline data D, we
% define the sup-optimality of OPL as
% SubOpt(ˆπ) = Eρ [V
% ∗
% (s) − Q
% ∗
% (s, πˆ(s))] ,
% where Eρ is the expectation with respect to (w.r.t.) s ∼ ρ.
\subsection{Approaches}
A key question in offline RL tasks (including both OPE and OPL) is about sample efficiency: how well we can efficiently leverage the previous data for the OPE and offline learning tasks? We briefly summarize recent approaches in offline RL. 

\begin{figure}
    \centering
    \includegraphics[scale=0.35]{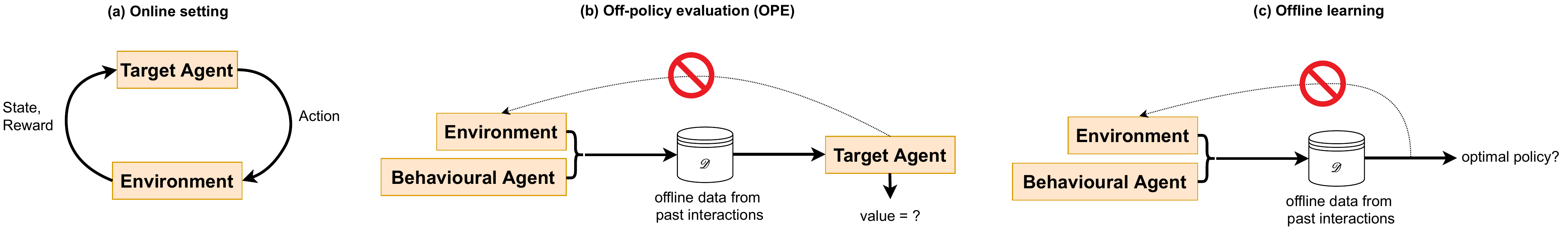}
    \caption{A diagram for online RL setting, off-policy evaluation setting and offline learning setting.}
    \label{fig:chap2_diag_ope_offline}
\end{figure}

Algorithmically, there are three main approaches to offline RL problems. Direct methods aim at learning a model of the system (e.g., the value functions, transition kernels or reward functions) and use this model to estimate the performance of the evaluation policy or learn an optimal policy. This has been studied in the tabular case in \citep{DBLP:conf/icml/MannorSST04}. 
However, in practice, the state space of MDPs is  often infinite or continuous, thus function approximation is often deployed in approximate dynamic programming such as fitted Q-iteration, least squared policy iteration \citep{bertsekas1995neuro,DBLP:conf/atal/JongS07,DBLP:journals/jmlr/LagoudakisP03,DBLP:conf/icml/GrunewalderLBPG12,DBLP:conf/icml/Munos03,DBLP:journals/jmlr/MunosS08,10.1007/s10994-007-5038-2,DBLP:conf/icml/TosattoPDR17}, and fitted Q-evaluation (FQE) \citep{DBLP:conf/icml/0002VY19}. Another popular approach uses importance sampling (IS) to obtain an unbiased value estimate of new policies by re-weighing sample rewards \citep{first_is}. Hybrid methods such as doubly robust estimations combine IS with model-based estimators to reduce the high variance while keeping the estimate unbiased \citep{dudik2011doubly,jiang2015doubly,thomas2016data,farajtabar2018more,kallus2019double}. The IS-based methods often suffer from high variance in long-horizon problems. To mitigate this problem, \citet{DBLP:conf/nips/LiuLTZ18} propose to directly estimate the stationary state visitation distribution instead of the cumulative importance ratio to break the curse of horizon, i.e., an excessively high variance in long horizon problems. Following this, many works reformulated the problem as a density ratio estimation between two stationary state visitation distributions with samples only from the behaviour distribution \citep{nachum2019dualdice,Zhang2020GenDICEGO,Zhang2020GradientDICERG,Nachum2019AlgaeDICEPG}.

% This method usually has low variance but its bias is difficult to quantify for a function class.

Empirically, direct methods tend to perform better than IS variants and hybrid methods when encountering long horizons, policy mismatch, poor estimation of unknown behaviour policy and environment stochasticity. In particular, a recent comprehensive study \citep{voloshin2019empirical} shows that FQE tends to be more data-efficient than the other direct methods in applications with limited data. With powerful function approximation such as neural networks, FQE can be very competitive with hybrid methods 
\citep[see Figure~13]{voloshin2019empirical}. However, function approximation with insufficient representation power can hurt direct methods such as FQE. 

% Our theoretical result in this work is consistent with the empirical observations in \citep{voloshin2019empirical} that FQE with deep ReLU function approximations can be data-efficient if one can leverage the expressive power of the networks. 

Theoretically, several sample efficiency results in offline RL in tabular and linear MDPs are obtained. In particular,
the sample efficiency guarantees are established in offline tabular RL \citep{xie2019optimal,DBLP:conf/aistats/YinW20,DBLP:conf/aistats/YinBW21,yin2021characterizing}. A Cramer-Rao lower bound for discrete-tree MDP is derived in \citep{DBLP:conf/icml/JiangL16}. While most existing theoretical results apply only to tabular MDP without function approximation, \citet{DBLP:journals/corr/abs-2002-09516} provide a minimax-optimal error bound for OPE with linear function approximation. Beyond linear function approximation, \citet{DBLP:conf/icml/0002VY19} provide an error bound of offline RL with general function approximation; however, they ignore the data-dependent structure in their analysis, dodging an important feature of offline RL in practice.  \citet{DBLP:journals/corr/abs-1901-00137} consider deep neural network approximation but they focus on analyzing deep Q-learning using a fresh batch of data for each iteration. Using such new batch of data is not efficient in offline RL as it scales the number of samples with the number of iterations which is arbitrarily large in practice. 

A recent line of work has focused on studying the pessimism principle in offline RL \citep{buckman2020importance}. The pessimism principle aims at penalizing erroneous out-of-distribution extrapolation when learning an optimal policy from offline data. Intuitively, if an estimator of the optimal value at some region is highly uncertain due to lack of (offline) data covering that region, the pessimism principle takes a conservative approach by tending to largely penalize the selection of actions from the region. Pessimism is provably efficient for offline RL in tabular settings \citep{rashidinejad2021bridging} and linear settings \citep{jin2020pessimism}. However, it is unclear whether sample efficiency in offline RL for more complex models is possible.

\section{Conclusion}
In this chapter, we have presented technical background and a brief literature for reinforcement learning. We close this chapter with several open challenges which we address in this thesis. 

\noindent \textbf{Bayesian Quadrature Optimization}. 
The BQO formulation in Subsection \ref{chap2_bqo} and its approach above rely on the assumption that the distribution $P(w)$ is known. In practice, however, this knowledge is not available. Instead we only have an access to a set of empirical samples of $P(w)$ obtained via previous interactions. The BQO approach in this practical setting can fail to obtain an robust solution to the original optimization problem with respect to the unknown of the distribution $P(w)$. In Chapter \ref{chap:three}, we address the above problem in a novel framework namely distributionally robust Bayesian quadrature optimization. 

\noindent \textbf{Distributional RL}. Though many distributional RL algorithms have been proposed, they \textit{all} suffer from the so-called curse of predefined statistics where they rely on a set of predefined statistics of the return distribution to approximate it. This is limited in both representation and learning as it requires a statistic constraint for such predefined statistics. Maintaining such statistic constraint is highly involved and difficult in practice. In Chapter \ref{chap:four}, we address this problem via a novel framework from statistical hypothesis testing and present a new understanding of distributional RL in such framework. We establish a novel scalable algorithm with a new state-of-the-art empirical performance in our framework. 

\noindent \textbf{Offline RL}. 
One of the main challenges of offline RL is to understand its feasibility in practical settings, in particular in high-dimensional complex models where it is necessary to use powerful function approximation such as deep neural networks for generalization. An analysis of offline RL in such situation that truly covers the offline learning setting remains to be studied. In Chapter \ref{chap:five}, we fill out this gap by analyzing sample complexity of offline RL with deep ReLU network function approximation under a \textit{new} dynamic condition and data-dependent structure. These conditions are more general than the conditions considered in prior analyses and allow improved sample complexity, rendering our work the first comprehensive analysis of offline RL with deep ReLU network function approximation.

% \setlength{\algomargin}{10pt}

% \begin{algorithm}[H]
%   \SetAlgoLined
%   \KwData{this text}
%   \KwResult{how to write algorithm with \LaTeX2e }
%   initialization\;
%   \While{not at end of this document}{
%     read current\;
%     \eIf{understand}{
%       go to next section\;
%       current section becomes this one\;
%     }{
%       go back to the beginning of current section\;
%     }
%   }
%   \caption{How to write algorithms}
% \end{algorithm}

% \subsection{Bayesian optimization} 
% \subsection{Markov decision processes}
% \subsubsection{Distributional reinforcement learning}
% \subsubsection{Offline reinforcement learning} 

\newpage{}

\chapter{Distributionally Robust Bayesian Quadrature Optimization \label{chap:three}}

% \huong{I just wonder if we can start the abstract with the motivation, i.e. the specific problem? If we start with a technique, people will know that we are motivated by a technique, not a problem itself?}
% This chapter is our approach to \textbf{Aim 1} which is an extension of our published work \citep{Nguyen20}. 

In this chapter, we consider the first challenge of this thesis about learning and decision making in the face of an uncontrollable environmental variable. In particular, we study this problem in an instance of RL namely Bayesian quadrature optimization which aims at sequentially maximizing a quadrature objective. We propose a novel framework for this problem with provable robustness against the adversary of the uncontrollable environmental variable. Our proposed method is shown to be effective in both synthetic and real-world experiments. This chapter is based on our AISTAT'20 paper \citep{pmlr-v108-nguyen20a}. Our implementation for the proposed algorithm is available at \url{https://github.com/thanhnguyentang/drbqo}.

\section{Introduction}
\label{section:introduction}
% Bayesian optimization (BO) provides an elegant surrogate model approach to the optimization of an expensive oracle that depends on a set of input parameters. Given the expensive cost of evaluating the oracle, the goal is to effectively select the input parameters that optimize the oracle using as few evaluations as possible. 
% Bayesian quadrature optimization is a newly emerging 
Making robust decisions in the face of parameter uncertainty is critical to many real-world decision problems in machine learning, engineering and economics. {Besides the uncertainty that is inherent in data, a further difficulty arises due to the uncertainty in the context. A common example is hyperparameter selection of machine learning algorithms where cross-validation is performed using a small to medium sized validation set. Due to limited size of validation set, the variance across different folds might be high. 
% As an example, a processing plant may use different sets of control variables if the supply of power varies.}  
Ignoring this uncertainty results in sub-optimal and non-robust decisions. {The problem of uncertain contexts can be further exacerbated as the outcome measurements may be noisy and the black-box function itself is expensive to evaluate.} 

% Being risk-averse is critical in such settings. 
% \st{This uncertainty may be due to limited availability of data and noisy measurements} \huong{it seems to not clear about the relationship between `robust decisions' and 'uncertainty', there is a gap between the 2 sentences.}. 

%  This problem becomes even \huong{remove 'even'} more severe when there are more sources of uncertainty such as black-box target function which is \huong{being, not 'which is'} noisy and expensive to evaluate. 
{One way to capture the uncertainty in context is through a probability distribution. In this work, we consider the task of stochastic black-box optimization that is distributionally robust to the uncertainty in context. We formulate the problem as}
% In this work, we consider the following stochastic black-box optimization problem:
\begin{align}
    \label{eq:stochastic_opt}
    \max_{x \in \mathcal{X} \subset \mathbb{R}^d} g(x) := \max_{x \in \mathcal{X}} \mathbb{E}_{
    P_0(w)} [f(x, w)],
\end{align}
% \huong{The '$\mathbb{E}_{P_0(w)} [f(x, w)]$' is a notation (it's actually your notation), not a correct full mathematical formula, you need to write the formula with the integral}
where $f$ is an expensive black-box function and $P_0$ is a distribution over context $w$. We assume distributional uncertainty in which the distribution $P_0$ is known only through a {limited} set of its i.i.d samples  $S_n = \{w_1, ..., w_n\}$. This is equivalent to the scenario in which we are able to evaluate $f$ only on $\mathcal{X} \times S_n$ during optimization.
% We further assume that $f$ can only be queried at any point to obtain a corresponding observation possibly corrupted by an independent normally distributed noise, and that $P_0$ is observable only via a fixed set of i.i.d samples during optimization. 

% \thanh{Fix: we assume that we only able to evaluate $f$ at any $x$ and but only at $w \in S_n$. This would make more sense}

% Since $f$ is expensive, another constraint is sample efficiency which uses a limited number of function queries to find the optimum. 
% \huong{You haven't introduced that $f$ is an expensive function. Need to state that assumption before this sentence, and after introducing the stochastic black-box optimization problem. And need to state why it needs to be expensive, i.e. under which circumstances.}
% \huong{This sentence is too long, better to split it into 2 sentences. And don't need to write $\mathcal{X} \in \mathbb{R}^d$, instead put this in the definition of $f$.}.
% In many settings, the objective function has a special structure. A common structure that has recently gained great interest is the stochastic structure in which we wish to optimize:
% \begin{align}
%     \label{eq:stochastic_opt}
%     \max_{\pmb{x} \in \mathcal{X}} g(\pmb{x}) := \max_{\pmb{x} \in \mathcal{X}} \mathbb{E}_{P_{\xi}}\left[ f(\pmb{x}, \pmb{\xi}) \right]
% \end{align}

In the case that $P_0$ is known (e.g., $P_0$ is either available in an analytical form or easy to evaluate), a standard solution to the problem in Equation (\ref{eq:stochastic_opt}) is based on Bayesian quadrature \citep{o1991bayes,nipsRasmussenG02,oates2016probabilistic,Oates19}. The main idea in this approach is that we {can} build a Gaussian Process (GP) model of $f$ and use the known relationship in the integral to imply a second GP model of $g$. This is possible because integration is a linear operator. 
% In general, this approach assumes $P_0$ {is either available in an analytical form or easy to evaluate}, e.g., \cite{Toscano_IntegralBO_18}. 
% \huong{Probably makes it clear why these two cases are hard to satisfy in practice, which is the motivation of your work. Maybe something like 'This approach requires an analytical form ...., which do not occur in practice.'}
% \huong{Perhaps dont need to write this sentence. Just need to write that 'Then traditional Bayesian Optimization methods can be employed to solve Eq. (1)'.}

Given the distributional uncertainty in which $P_0$ is only known through a limited set of its samples, a naive approach to the problem in Equation (\ref{eq:stochastic_opt}) is to maximize its Monte Carlo estimate:
\begin{align}
    \label{eq:mc}
    g_{mc}(x) :=  \mathbb{E}_{\hat{P}_n(w)}[f(x,w)],
\end{align}
where $\hat{P}_n(w) = \frac{1}{n} \sum_{i=1}^n \delta(w - w_i)$ and $\delta(.)$ is the Dirac distribution. When $n$ is sufficiently large, $g_{mc}(x)$ approximates $g(x)$ reasonably well as guaranteed by the weak law of large numbers; thus, the optimal solution of $g_{mc}(x)$ represents that of $g(x)$. In contrast, when $n$ is small, the optimal solution of $g_{mc}(x)$ might be sub-optimal to $g(x)$. {Since we are considering distributional perturbation, we cannot} guarantee the Monte Carlo estimate $g_{mc}(x)$ to be a good surrogate objective. 
% In application, we would not have much data to guarantee the Monte Carlo estimate $g_{mc}(x)$ to be a good surrogate objective.
% In this work, we consider the setting that we do not have any knowledge about $P_0(w)$ except a set of its i.i.d samples $S_n = \{w_1, ..., w_n\}$. This is equivalent to the scenario in which we are able to evaluate $f$ only on $\mathcal{X} \times S_n$ during optimization. A naive approach to this problem is to maximize

A more {conservative} approach from statistical learning is to maximize the variance-regularized objective:
\begin{align}
    \label{eq:variance_reg}
    g_{bv}(x) := \mathbb{E}_{\hat{P}_n}[f(x, w)] - C_1 \sqrt{ \mathbb{V}_{\hat{P}_n}[ f(x, w) ] /n },
\end{align}
where $\mathbb{V}_{\hat{P}_n}$ denotes the empirical variance and $C_1$ is a constant determining the trade-off between bias and variance. {Thus}, given the context of {limited} samples, it is logical to use $g_{bv}(x)$ instead of $g_{mc}(x)$ as a surrogate objective for maximizing $g(x)$. 
However, unlike $g_{mc}(x)$, the variance term in $g_{bv}(x)$ breaks the linear relationship with respect to $f$. As a result, {though} $f$ is a Gaussian Process, $g_{bv}(x)$ {need} not be \citep{o1991bayes}. 
{Alternatively, we approach the distributional uncertainty problem above by formulating the distributionally robust Bayesian quadrature optimization}. In the face of the uncertainty about $P_0$, we seek to find a distributionally robust solution under the most adversarial distribution. Our approach is based on solving a surrogate distributionally robust optimization problem generated by posterior sampling at each time step. The surrogate optimization is solved efficiently via bisection search through any optimization. We demonstrate the efficiency of our algorithm in both synthetic and real-world problems. Our contributions in this chapter are:
\begin{itemize}
    % \item Formulating the distributionally robust Bayesian quadrature optimization problem (Section \ref{section:setup}), 
    
    \item Demonstrating the limitations of standard Bayesian quadrature optimization algorithms under distributional uncertainty (Section \ref{section:setup}), and introducing a new algorithm, namely DRBQO, that overcomes these limitations (Section \ref{section:alogirthm}); 
    % and; 
    \item  Introducing the concept of $\rho$-regret for measuring algorithmic performance in this formulation  (Section \ref{section:setup}), and characterizing the theoretical convergence of our proposed algorithm in sublinear Bayesian regret (Section \ref{section:theory}); 
    % \item Demonstrating the \hl{limitations} of standard Bayesian quadrature optimization algorithms and introducing a new algorithm DRBQO that overcomes these limitations; 
    \item Demonstrating the efficiency of DRBQO in finding distributionally robust solutions in both synthetic and real-world problems (Section \ref{section:experiment}).
    %  where several baseline methods fail
    % \item Demonstrating our algorithm on both synthetic and real-world problems ($n$-\hl{fold hyperparameter tuning}) and 
\end{itemize}

\section{Related Work}
\label{section:related}
\noindent \textbf{Bayesian Quadrature Optimization}. Our work is related to Bayesian quadrature optimization whose goal is to perform black-box global optimization of an expected objective {of the form} $\int f(x,w) P(w) dw$. This problem is known with various names such as optimization of integrated response functions \citep{Williams:2000:SDC:932015}, multi-task Bayesian optimization \citep{DBLP:conf/nips/SwerskySA13}, and optimization with expensive integrands \citep{Toscano_IntegralBO_18}. This direction approaches the problem by evaluating $f(x,w)$ at one or {several} values of $w$ given $x$. This ameliorates the need of evaluating $f(x,w)$ at all the values of $w$ and can outperform  methods that evaluate the full objective via numerical quadrature \citep{FrazierBOtut18,Toscano_IntegralBO_18}. {All the previous approaches assume the knowledge of the distribution in the expected function. The distinction of our formulation is that we are interested in the distributional uncertainty scenario in which the underlying distribution is unknown except its empirical estimate.} 

% However, the distinction of our problem is that while all the previous problems assume the knowledge of the distribution in the expected function, . 

% \huong{don't use negative words, just write 'shares' instead of 'bears'}
\noindent \textbf{Distributionally Robust Optimization}. 
Our work also {shares}  similarity with the distributionally robust optimization (DRO) literature \citep{DRO_review}. This problem setup considers the parameter uncertainty in real-world decision making problems. The uncertainty may be due to limited data and noisy measurements. DRO takes into account this uncertainty and approaches the problem by taking the worst-case of the underlying distribution within an uncertainty set of distributions. DRO variants distinguish each other in design choices of the {distributional uncertainty set} and in problem contexts. Regarding the design of uncertainty sets, common designs specify the set of distributions with respect to the nominal distribution via  distributional discrepancy such as $\chi^2$ divergence \citep{NamkoongD16}, Wasserstein distance \citep{WasserstainDRO19}, and Maximum Mean Discrepancy \citep{MMD_DRO19}.  Regarding studying DRO in different problem contexts, the following contexts have been investigated: robust optimization \citep{DBLP:journals/mansci/Ben-TalHWMR13}, robust risk minimization \citep{NamkoongD16}, sub-modular maximization \citep{DBLP:conf/aistats/StaibWJ19}, boosting algorithms \citep{DROBoosting}, graphical models \citep{DBLP:conf/nips/FathonyRBZZ18}, games \citep{sun2018distributional,DBLP:conf/aistats/Zhu0WGY19}, fairness in machine learning \citep{DBLP:conf/icml/HashimotoSNL18}, Markov Decision Process \citep{nipsXuM10}, reinforcement learning \citep{DRRL}, and model-agnostic meta-learning \citep{collins2020distributionagnostic}. The distinction of our work is in terms of the problem context where we study DRO in Bayesian quadrature optimization.

\section{Main Framework}

\subsection{Problem Setup} 

\label{section:setup}
\textbf{Model}. Let $f: \mathcal{X} \times \Omega \rightarrow \mathbb{R}$ be an element of a reproducing kernel Hilbert space (RKHS) $\mathcal{H}_k$ where $k$: $\mathcal{X} \times \Omega \times \mathcal{X} \times \Omega \rightarrow \mathbb{R}$ is a positive-definite kernel, and $\mathcal{X}$ and $\Omega$ are, unless explicitly mentioned otherwise, compact domains in $\mathbb{R}^d$ and $\mathbb{R}^m$ for some {dimensions} $d$ and $m$, respectively. We further assume that $k$ is continuous and bounded from above by $1$, and that $\| f \|_k = \sqrt{\langle f, f \rangle_k } \leq B$ for some $B > 0$.  Two commonly used kernels are Squared Exponential (SE) and Mat\'ern \citep{RasmussenW06} which are similarly defined on $\mathcal{X} \times \Omega$ as follows: 
\begin{align*}
    &k_{SE}(.,.; ., .) = \exp ( - d^2_{\theta, \psi} (.,.; ., .)), \\
% \end{align*}
% \begin{align*}
    &k_{\nu}(.,.; ., .) = \frac{2^{1-\nu}}{\Gamma(\nu)} \sqrt{2\nu} d_{\theta, \psi}(.,.; ., .) J_{\nu}(\sqrt{2\nu} d_{\theta, \psi}(.,.; ., .)), 
\end{align*}
where $\theta$ and $\psi$ are the length scales, $\nu >0$ defines the smoothness in the Mat\'ern kernel, $J(\nu)$ and $\Gamma(\nu)$ define the Bessel function and the gamma function, respectively, and $d^2_{\theta, \psi}(x, w; x', w') = \sum_{i=1}^{d} (x_i - x'_i)^2 / \theta^2_i +  \sum_{j=1}^{m}  (w_j - w'_j)^2 / \psi^2_j$. 

Let $P_0$ be a distribution on $\Omega$, and $S_n = \{w_1, ..., w_n\}$ be a fixed set of samples drawn from $P_0$. Though $f$ is defined on $\mathcal{X} \times \Omega$, we are interested in the distributional uncertainty scenario in which we can query $f$ only on $\mathcal{X} \times S_n$ during optimization. At time $t$, we query $f$ at $(x_t, w_t) \in \mathcal{X} \times S_n$ and observe a noisy reward $y_t = f(x_t, w_t) + \epsilon_t$, where $\epsilon_t \sim \mathcal{N}(0,\sigma^2)$. Our goal is to find a robust solution point $x \in \mathcal{X}$ such that $ \mathbb{E}_{P(w)}[f(x,w)]$ remains high even under the most adversarial realization of the unknown distribution $P_0$.  

Given a sequence of noisy observations $(x_i, w_i, y_i)_{i=1}^t$, the posterior distribution under a GP(0, $k(.,.,;.,.)$) prior is also a GP with the following posterior mean and covariance: 
\begin{align*}
    \mu_t(x,w) &= k_t(x,w)^T(K_t + \sigma^2 I)^{-1} y_{1:t},  \\ 
    C_t(x,w; x', w') &= k(x,w; x', w') - k_t(x,w)^T (K_t + \sigma^2 I)^{-1} k_t(x', w'), 
\end{align*}
where  $y_{1:t}=(y_1, ..., y_t)$, $k_t(x,w) = [k(x_i, w_i; x, w)]_{i=1}^t$, and $K_t = [k(x_i, w_i; x_j, w_j)]_{1 \leq i,j \leq t}$ is the kernel matrix.

We define the quadrature functional as
\begin{align}
    \label{eq:g_functional}
    g(f, x, P) := \int P(w|x) f(x,w) dw,  
\end{align}
for any conditional distribution $P(.|x)$ on $\Omega$ for all $x \in \mathcal{X}$, i.e., $P \in \mathcal{P}_{n,\rho} \times \mathcal{X}$. 
As an extended result of Bayesian quadrature \citep{o1991bayes}, for any conditional distribution $P \in \mathcal{P}_{n,\rho} \times \mathcal{X}$, $g(f,x,P)$ also follows a GP with the following mean and variance: 
\begin{align}
    \label{eq:quadrature_mu}
    &\mu_t(x, P) := \mathbb{E}_t[g(f,x, P)] = \int P(w|x) \mu_t(x,w) dw \\
% \end{align}
% \begin{align}
    &\sigma^2_t(x, P) := \mathbb{V}_t[g(f, x, P)] 
% \end{align}
% \begin{align}
    \label{eq:quadrature_sigma}
= \int \int P(w|x) P(w'|x) C_t(x,w; x, w') dw dw'. 
\end{align}
% Note Equation (\ref{eq:quadrature_mu}) and (\ref{eq:quadrature_sigma}) also hold 
% Unless explicitly mentioned otherwise, $\mathcal{X}$ is a compact domain in $\mathbb{R}^d$, and $S_n = \{w_1, ..., w_n\}$ is a fixed set of $p$. 
\textbf{Optimization goal}. We seek to optimize a quadrature function under the most adversarial distribution over a distributional uncertainty set $\mathcal{P}_{n,\rho} := \{ P | D(P, \hat{P}_n) \leq \rho \}$ : 
\begin{align}
    \label{eq:drbqo}
    \max_{x \in \mathcal{X}} \min_{P \in \mathcal{P}_{n,\rho}} \mathbb{E}_{P(w)} [f(x,w)], 
\end{align}
where $\hat{P}_n(w) = \frac{1}{n} \sum_{i=1}^n \delta(w-w_i)$ is the empirical distribution, $\rho \geq 0$ is the confidence radius around the empirical distribution with respect to a distribution divergence $D(.,.)$ such as Wasserstein distance, maximum mean discrepancy, and $\phi$-divergence.  We can interpret $\mathcal{P}_{n,\rho}$ as the set of {perturbed distributions with respect to} the empirical distribution $\hat{P}_n$ within a confidence radius $\rho$. We then seek a robust solution in the face of {adversarial distributional perturbation} within $\mathcal{P}_{n,\rho}$. 

{For any} distribution divergence choice $D$, we define a $\rho$-robust point to be any $x^*_{\rho}$ such that 
\begin{align}
    x^*_{\rho} \in \operatorname*{arg\,max}_{x \in \mathcal{X}} \min_{P \in \mathcal{P}_{n,\rho}}  \mathbb{E}_{P(w)}[f(x,w)]. 
\end{align}
Our goal is to report after time $t$ a distributionally robust point $x_t$ in the sense that it has {small} $\rho$-\textbf{regret}, {which} is defined as
\begin{align}
    r_{\rho}(x)= g(f, x^*_{\rho}, P^*) - g(f, x, P^*),
    \label{eq:rho_regret}
\end{align}
where $P^*(.|x) = \argmin_{P \in \mathcal{P}_{n,\rho}} \sum_{w}P(w|x) f(x,w), \forall x$.
% \begin{align}
%     r_{\rho}(x)= \min_{P \in \mathcal{P}_{n,\rho}}  \mathbb{E}_{P}[f(x^*_{\rho},w)] -  \min_{P \in \mathcal{P}_{n,\rho}}  \mathbb{E}_{P}[f(x,w)].
%     \label{eq:rho_regret}
% \end{align}

While our framework in this work can be adopted to various distribution divergences, we focus on the specific case when $D$ is $\chi^2$-divergence: 
$D(P, Q) = \frac{1}{2}\int_{\Omega} (\frac{dP}{dQ} -1)^2 dQ, \forall P,Q$. From here on, we {refer} $\mathcal{P}_{n,\rho}$ as the $\chi^2$ ball {with} $D$ being $\chi^2$-divergence. In particular, the distributionally robust optimization problem in Equation (\ref{eq:drbqo}) is equivalent to the variance-regularized optimization in Equation (\ref{eq:variance_reg}) when the variance is sufficiently high, as justified by the following theorem:
% \huong{Not sure what you want the readers to know when writing this sentence.}. 
\begin{thm}[Modified from \citep{nipsNamkoongD17}]
Let $Z \in [M_0, M_1]$ be a random variable (e.g., $Z = f(x,w)$ for any fixed $x$), $\rho \geq 0$, $M = M_1 - M_0$, $s_n^2 = \mathbb{V}_{\hat{P}_n}[Z]$, $s^2 = \mathbb{V}[Z]$, and $OPT = \inf_{P} \left\{  \mathbb{E}_{P}[Z]:  P \in  \mathcal{P}_{n,\rho} \right\}$. Then $\max \left\{ \sqrt{2\rho s^2_n} - 2M\rho, 0 \right\}
    \leq \mathbb{E}_{\hat{P}_n}[Z] - OPT 
    \leq \sqrt{2\rho s^2_n}.$
% \begin{align*}
%     &\max \left\{ \sqrt{2\rho s^2_n} - 2M\rho, 0 \right\}
%     \leq \mathbb{E}_{\hat{P}_n}[Z] - OPT 
%     \leq \sqrt{2\rho s^2_n}.
% \end{align*}
Especially if $s^2 \geq \max\{ 24 \rho, \frac{16}{n},  \frac{1}{n s^2} \} \cdot M^2$, then $OPT = \mathbb{E}_{\hat{P}_n}[Z] -  \sqrt{2\rho s^2_n}$ 
with probability at least $1 - \exp(- \frac{n s^2}{36 M^2})$. 
\end{thm}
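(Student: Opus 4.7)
The proof naturally splits into a deterministic two-sided bound and a high-probability upgrade to exact equality. First I would reparameterize the $\chi^2$-ball via likelihood ratios $r_i = n P(w_i)$, turning $\mathcal{P}_{n,\rho}$ into $\mathcal R = \{r \geq 0,\ \sum_i r_i = n,\ \|r - \mathbf 1\|_2^2 \leq 2n\rho\}$ and $\bar Z - OPT$ into $\sup_{r \in \mathcal R} \tfrac{1}{n}\sum_i r_i(\bar Z - Z_i)$, where $\bar Z := \mathbb{E}_{\hat P_n}[Z]$. This recasts the DRO problem as a quadratically constrained linear program whose relaxation without the nonnegativity constraint admits a clean Lagrangian closed form.

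For the upper bound I would drop the constraint $r \geq 0$. Writing out the KKT conditions for the resulting equality-plus-radius program gives the unique optimizer $r_i^\star = 1 - \lambda^\star(Z_i - \bar Z)$ with $\lambda^\star = \sqrt{2\rho/s_n^2}$, attaining value $\sqrt{2\rho s_n^2}$. Enlarging the feasible set can only lower $OPT$, so $\bar Z - OPT \leq \sqrt{2\rho s_n^2}$ holds unconditionally.

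For the lower bound I would exhibit an explicit primal-feasible $r$ by \emph{truncating the dual variable} rather than the primal: take $\lambda = \min\{\sqrt{2\rho/s_n^2},\, 1/M\}$ and set $r_i = 1 - \lambda(Z_i - \bar Z)$. Since $\sum_i(Z_i - \bar Z) = 0$, $|Z_i - \bar Z| \leq M$, and $\lambda \leq 1/M$, this $r$ satisfies $\sum_i r_i = n$, $r \geq 0$, and $\|r - \mathbf 1\|_2^2 = \lambda^2 n s_n^2 \leq 2n\rho$, and achieves objective value $\lambda s_n^2$. In the easy regime $s_n^2 \geq 2\rho M^2$ this already equals $\sqrt{2\rho s_n^2}$ and the two bounds coincide; in the complementary regime $s_n^2 < 2\rho M^2$ it equals $s_n^2/M$, and expanding $(s_n^2/M + 2M\rho)^2 \geq 2\rho s_n^2$ yields $s_n^2/M \geq \sqrt{2\rho s_n^2} - 2M\rho$. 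Both cases combined deliver $\bar Z - OPT \geq \max\{\sqrt{2\rho s_n^2} - 2M\rho,\, 0\}$.

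The previous step shows that equality $\bar Z - OPT = \sqrt{2\rho s_n^2}$ holds on the event $\{s_n^2 \geq 2\rho M^2\}$, so the high-probability claim reduces to lower-bounding $\mathbb{P}(s_n^2 \geq 2\rho M^2)$ under $s^2 \geq \max\{24\rho,\, 16/n,\, 1/(ns^2)\}M^2$. Applying a one-sided Bernstein inequality to $s_n^2$ (whose summands $(Z_i - \mu)^2$ live in $[0, M^2]$), I would tune the deviation $t = s^2 - 2\rho M^2 \geq \tfrac{11}{12} s^2$ and then use the two technical conditions $s^2 \geq 16M^2/n$ and $ns^4 \geq M^2$ to absorb the bias from swapping the population mean $\mu$ for $\bar Z$ and to force the exponent to simplify to $-ns^2/(36M^2)$. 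The delicate step throughout is the lower bound above: both the $\chi^2$ radius and the nonnegativity constraint must be enforced while the gap to the unconstrained relaxation stays within $2M\rho$, and truncating the dual $\lambda$ rather than clipping the primal $r$ is what keeps both constraints verifiable by inspection and produces the $-2M\rho$ slack in closed form.
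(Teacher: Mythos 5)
Your argument is correct, but there is nothing in the thesis to compare it against: the paper states this theorem as imported from \citet{nipsNamkoongD17} and offers only a one-sentence intuition (``the $\chi^2$ ball and the variance penalty are both quadratic''), with no proof of its own. Your route is essentially the one in that cited source. The deterministic two-sided bound is complete and checks out: the likelihood-ratio reparameterization correctly turns the problem into maximizing $\frac{1}{n}\sum_i (r_i-1)(\bar Z - Z_i)$ over $\{\sum_i(r_i-1)=0,\ \|r-\mathbf 1\|_2^2\le 2n\rho,\ r\ge 0\}$; Cauchy--Schwarz on the relaxation gives the upper bound $\sqrt{2\rho s_n^2}$ exactly (the degenerate case $s_n^2=0$ is trivial); and your truncated-dual point $r_i = 1-\lambda(Z_i-\bar Z)$ with $\lambda=\min\{\sqrt{2\rho/s_n^2},1/M\}$ is feasible and yields $\lambda s_n^2$, which in the two regimes gives $\sqrt{2\rho s_n^2}$ and $s_n^2/M \ge \sqrt{2\rho s_n^2}-2M\rho$ respectively, exactly matching the stated lower bound and showing equality on the event $\{s_n^2\ge 2\rho M^2\}$. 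The only part you have sketched rather than proved is the concentration step: you correctly reduce it to $\mathbb{P}(s_n^2 < s^2/12)$ via $24\rho M^2\le s^2$ and identify that Bernstein on $\frac{1}{n}\sum_i(Z_i-\mu)^2$ plus control of the $(\bar Z-\mu)^2$ bias (using $s^2\ge 16M^2/n$ and $ns^4\ge M^2$) must produce the exponent $-ns^2/(36M^2)$, but you do not carry out the constant bookkeeping; a rough calculation shows Bernstein at deviation $\tfrac{11}{12}s^2$ already gives an exponent around $0.32\,ns^2/M^2$, so there is ample slack and the claim goes through as you outline.
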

The intuition for this equivalence is that the $\chi^2$ ball and the variance penalty term in Equation (\ref{eq:variance_reg}) are both quadratic \citep{DBLP:conf/aistats/StaibWJ19}. Figure \ref{fig:2d_simplex} illustrates $\chi^2$ balls with various radii on the $3$-dimensional simplex. %\huong{then what does this imply?}

\begin{figure}[t]
    \centering
    \includegraphics[scale=0.45]{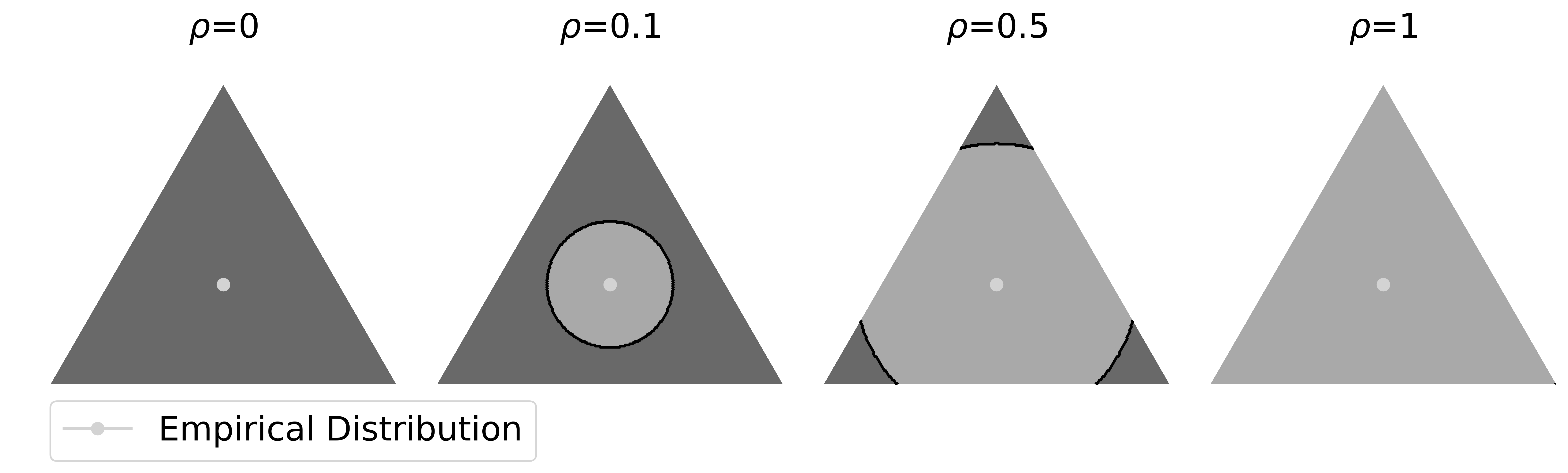}
    \caption{The $\chi^2$ balls with various radii $\rho$ on the $n$-dimensional simplex ($n=3$ in this example). The simplex, the $\chi^2$ balls and the empirical distribution are represented in dim gray, dark gray and light gray color, respectively. The $\chi^2$ ball with $\rho = 0$ reduces to a singleton containing only the empirical distribution while the ball becomes the entire simplex for $\rho \geq \frac{n-1}{2}$.}
    \label{fig:2d_simplex}
\end{figure}

\textbf{Failure of standard methods}.
% \huong{Parts of this section seems to be repetitive with the introduction section. You might still keep this section, but saying something like 'As mentioned in the Introduction section, ...'} 
Various methods have been developed for achieving small regret in maximizing $g(f,x, P_0) = \mathbb{E}_{P_0(w)} [f(x, w)]$ for some distribution $P_0(w|x) = P_0(w)$ \citep{Williams:2000:SDC:932015,DBLP:conf/nips/SwerskySA13,Toscano_IntegralBO_18}. These methods leverage the relationships in Equation (\ref{eq:quadrature_mu}) and (\ref{eq:quadrature_sigma}) to infer the posterior mean and variance of the expected function $g(f,x, P_0)$ from those of $f$. The inferred posterior mean and variance for $g(f,x, P_0)$ are then used in certain ways to acquire new points. While this is useful in the standard setting when we know $P_0$, {it is not useful when} we only have the empirical distribution $\hat{P}_n$. Specifically, an optimal solution found by these methods in the problem associated with the empirical distribution may be sub-optimal to that associated with the true distribution $P_0$. 
% Furthermore, these standard methods are not set out to achieve small $\rho$ regret.  

An illustrative example is depicted in Figure \ref{fig:figure1} where the averaged trajectories of our proposed DRBQO (detailed in Section \ref{section:alogirthm}) and a standard BQO baseline (detailed in Section \ref{section:experiment}) are also shown. Due to a limited number of samples of $P_0$, the Monte Carlo estimate $\mathbb{E}_{\hat{P}_n(w)}[f(x,w)]$ results in a spurious expected objective in this case. By resorting to the empirical distribution $\hat{P}_n$ constructed from the limited set of samples, the standard BQO baseline ignores the distributional uncertainty and converges to the optimum of the spurious expected objective. The same limitation applies to the standard BQO optimization methods, e.g., \citep{Williams:2000:SDC:932015,DBLP:conf/nips/SwerskySA13,Toscano_IntegralBO_18,DBLP:conf/wsc/PearceB17a} whose goal is to find a global non-robust maximum. 
\begin{figure}[t]
    \centering
    \includegraphics[scale=1]{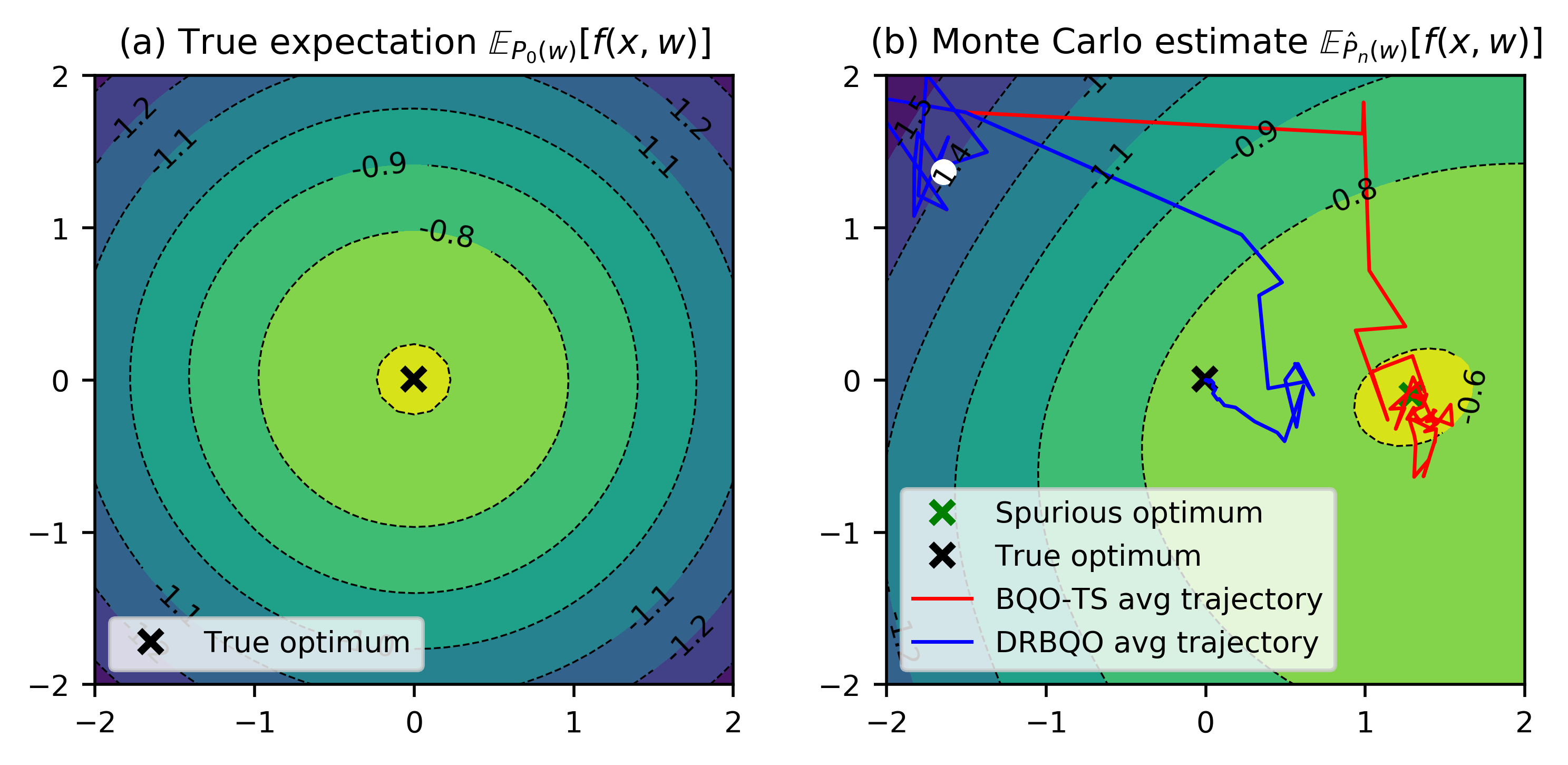}
    \caption{(a) The expected logistic function $g(x) = \mathbb{E}_{\mathcal{N}(w; 0, I)}[-\log(1 + e^{x^T w})]$ and (b) its Monte Carlo estimate using $10$ samples of $w$, and the averaged trajectories of our proposed algorithm DRBQO (detailed in Section \ref{section:alogirthm}) and a standard Bayesian Quadrature Optimization (BQO) baseline. Though being unbiased, Monte Carlo estimates can suffer from high variance given limited samples, resulting in spurious function estimates. Our proposed algorithm DRBQO approaches this mismatch problem by {finding} the distributionally robust solution under the most adversarial distribution over a $\chi^2$ distributional ball.}
    % In this example, the distributionally robust solutions with {the selected} values of $\rho$ resemble the optimum of the true expectation.}
    % to the optimum of the true expectation even given only a fixed set of $w$ samples while the other methods either find the spurious optimum or at least deviate from the true optimum. }
    %  The radius $\rho$ of the $\chi^2$ distributional ball specifies the confidence against the unknown of the distribution. Though the BQO baseline methods (detailed descriptions are in Section \ref{section:experiment}) achieve higher expectation value with respect to the empirical distribution, DRBQO converges to the distributionally robust solution which are more robust in the sense of variance regularization (c). In this example, the distributionally robust solutions with \hl{the selected} values of $\rho$ resemble the optimum of the true expectation.
    \label{fig:figure1}
\end{figure}
\subsection{Algorithmic Approach}
\label{section:alogirthm}
Our main proposed algorithm is presented in Algorithm \ref{pseudo_code:drbqo}.
% \huong{Normally this sentence is put at the end, after finishing presenting the algorithm?}. 
In the standard Bayesian quadrature problem in Equation (\ref{eq:mc}), we can easily adopt standard Bayesian optimization algorithms such as expected improvement (EI) \citep{mockus1978application} and an upper confidence bound (UCB) (e.g., GP-UCB \citep{DBLP:conf/icml/SrinivasKKS10}) using quadrature relationships in Equation (\ref{eq:quadrature_mu}) and (\ref{eq:quadrature_sigma}) \citep{DBLP:conf/nips/SwerskySA13}. However, like $g_{bv}(x)$ in Equation (\ref{eq:variance_reg}), $\min_{P \in \mathcal{P}_{n,\rho}} \mathbb{E}_{P(w)} [f(x,w)]$ {does not follow} a GP if $f$ follows a GP. This difficulty hinders the adoption of EI-like and UCB-like algorithms to our setting. 
% \huong{Here you mean for our problem, it is not easy/trivial to use EI/GP-UCB as acquisition function? Then make it more clearer. Don't write like a story book, state the problem more specific at the beginning, and explain why. Then how we overcome the problem. Here your problem is not described clearly.} 
We overcome this problem using posterior sampling \citep{DBLP:journals/mor/RussoR14}. 

\begin{algorithm}[t]
  \caption{Distributionally robust Bayesian quadrature optimization}
\label{pseudo_code:drbqo}
\begin{algorithmic}[1]
  \STATE {\bfseries Input:}  Prior \texttt{GP}($\mu_0$, $k$), horizon $T$, sample set $S_n$,  confidence radius $\rho \geq 0$
  \STATE Set $C_0 \leftarrow k$ 
  \FOR{$t=1$ {\bfseries to} $T$}
  \STATE Sample $\tilde{f}_t \sim $ \texttt{GP}$\left(\mu_{t-1}, C_{t-1}\right)$  %\tcc{Posterior sampling}
  \label{alg:line:ps}
  \STATE Choose $\displaystyle x_t \in \operatorname*{arg\,max}_{x \in \mathcal{X}} \min_{P \in \mathcal{P}_{n,\rho}}  \mathbb{E}_{P} [\tilde{f}_t(x,w)]$ %\tcc{A surrogate DR opt}
  \label{alg:line:dro}
  \STATE Choose $ \displaystyle w_t = \operatorname*{arg\,max}_{w \in S_n} C_{t-1}(x_t, w; x_t, w)$ %\tcc{Maximum posterior variance}
  \label{alg:line:w}
  \STATE Observe reward $\hat{y}_t \leftarrow f(x_t, w_t) + \epsilon_t$
  \STATE Perform update \texttt{GP} to get $\mu_{t}$ and $C_{t}$
  \ENDFOR
%   \STATE {\bfseries Output:} $v_K = \|Q_K \|_{2, \rho} = \sqrt{ \int Q_K(s,a)^2 \rho(ds,da)  }$
% %   \UNTIL{$noChange$ is $true$}
\STATE {\bfseries Output:} $\displaystyle \operatorname*{arg\,max}_{ x \in \{x_1, ..., x_T\}} \min_{P \in \mathcal{P}_{n,\rho}}  \mathbb{E}_{P} [\mu_T(x,w)]$
\end{algorithmic}
\end{algorithm} 
The main idea of our algorithm is to sample and solve a surrogate distributionally robust optimization problem at each step guided by posterior sampling (lines \ref{alg:line:ps} and \ref{alg:line:dro} in Algorithm \ref{pseudo_code:drbqo}). In practice, we follow \cite{DBLP:conf/nips/Hernandez-LobatoHG14} to perform posterior sampling (line \ref{alg:line:ps} in Algorithm \ref{pseudo_code:drbqo}), with an explicit pseudo-code presented in Algorithm \ref{alg:ps_gp}. 
% and include an $\epsilon$-greedy action in posterior sampling, as a standard practice to improve .
% \huong{Don't write that you follow some papers to do something, it sounds really passive. Write that we use Thompson sampling, cite the corresponding paper, then describe how you do Thompson sampling for your problem (like in the next sentence).}. 
Similar to the way posterior sampling is applied to standard Bayesian optimization problem \citep{DBLP:conf/nips/Hernandez-LobatoHG14}, a new point is selected according to the probability it is optimal in the sense of distributional robustness. One of the advantages of posterior sampling is that it avoids the need for confidence bound such as UCB. This is useful for our setting because the non-Gaussian nature of the distributionally robust objective makes it difficult to construct a deterministic confidence upper bound. 
% \huong{Might combine 2 sentences, state clearly first This choice of Thompson sampling is useful for our setting, and then state the reason.}.

Due to the convexity of the expectation with respect to a distribution, we can efficiently compute the value (therefore the gradients) of the inner minimization in line \ref{alg:line:dro} of Algorithm \ref{pseudo_code:drbqo} in an analytical form via Lagrangian multipliers, {as presented in Proposition} \ref{prop:kkt}. 
% \huong{Introduce/Explain why you need the following proposition. Is this because what you state in the previous sentence? If so then connect that sentence with this proposition.}

\begin{prop}
\label{prop:kkt}
Let $l = (l_1, ..., l_n) \in \mathbb{R}^n$ (e.g., $l = \left( \tilde{f}_t(x,w_1), ..., \tilde{f}_t(x,w_n) \right)$ in line \ref{alg:line:dro} of Algorithm \ref{pseudo_code:drbqo}), $\hat{P}_n = (\frac{1}{n}, ..., \frac{1}{n})$ {being} the weights of the empirical distribution, $\Delta_n$ being the $n$-dimensional simplex,
% \begin{align*}
    $\mathcal{P}_{n,\rho} = \bigg\{P \in \Delta_n  \bigg| \frac{1}{2}\int_{\Omega} (\frac{dP}{d\hat{P}_n} -1)^2 d\hat{P}_n \leq \rho \bigg\}$
% \end{align*}
being the $\chi^2$-ball around the empirical distribution with radius $\rho$. 
% Then, the value and gradients with respect to any parameters $\phi$ of 
Then, the optimal weights $p = (p_1, ..., p_n) = \argmin_{q \in \mathcal{P}_{n,\rho}} q^T l$ satisfy the systems of {relations} with variables $(p, \lambda, \eta)$:
% Comment: in our case, minimax is not necessarily equal to maximin but that does not affect the result in our work. \cite{NamkoongD16} applies to solve for convex $f$ but in our case we solve for any function $f$ and compute the exact inner min value and its gradients. This inner min is smooth and differentiable (?) [but even not, we still can use DIRECT?]
% \begin{align}
\begin{numcases}{}
\label{kkt:compute_pi}
\lambda p_i = \frac{1}{n} \max \{-l_i - \eta, 0\}, \forall 1 \leq i \leq n \nonumber \\
% n \lambda &= \sum_{i=1}^n \max\{-l_i - \eta, 0\} \\ 
\label{kkt:compute_eta}
\eta |A| + n \lambda = -\sum_{i \in A} l_i \text{ where } A = \{i: l_i \leq -\eta \} \nonumber \\
\label{kkt:rho_equation}
\lambda \left(2\rho + 1 - n \| p\|_2^2 \right)  = 0 \\ 
n \| p\|_2^2 \leq 2\rho + 1, \text{ and } \gamma \geq 0. \nonumber 
\label{kkt:dual_1}
% \lambda \geq 0 \nonumber.
\end{numcases}
% \end{align}
% and the following system of equations with the gradient variables $(\frac{\partial p}{\partial \phi}, \frac{\partial \lambda}{\partial \phi}, \frac{\partial \eta}{\partial \phi})$
% \begin{align*}
% \begin{cases}
%  \frac{\partial p_i}{\partial \phi}  = \frac{-1}{n \lambda^2} (-l_i - \eta) \frac{\partial \lambda}{\partial \phi}  + \frac{1}{n \lambda}(-\frac{\partial l_i}{\partial \phi} -\frac{\partial \eta}{\partial \phi}) \\
% |A| \frac{\partial \eta}{\partial \phi} =  -\sum_{i=1}^n \frac{\partial l_i}{\partial \phi} - n \frac{\partial \lambda}{\partial \phi} \\ 
% 0 =\sum_{i =1}^n p_i \frac{\partial p_i}{\partial \phi}.
% \end{cases}
% \end{align*}

%  \begin{align}
%  \begin{cases}
%     \label{eq:dr_gradient}
%     \frac{\partial p_i}{\partial \psi}  = \frac{-1}{n \lambda^2} (-l_i - \eta) \frac{\partial \lambda}{\partial \psi}  + \frac{1}{n \lambda}(-\frac{\partial l_i}{\partial \psi} -\frac{\partial \eta}{\partial \psi}) \\
%     |A| \frac{\partial \eta}{\partial \psi} =  -\sum_{i \in S} \frac{\partial l_i}{\partial \psi} - n \frac{\partial \lambda}{\partial \psi} \\ 
%     \sum_{i \in S} p_i \frac{\partial p_i}{\partial \psi} = 0.      
%  \end{cases}
%  \end{align}
\end{prop}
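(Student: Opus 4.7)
The plan is to recognize the problem as a convex quadratically constrained linear program and derive the stated system as its KKT conditions. First I would rewrite the $\chi^2$-ball constraint in a more convenient algebraic form. Since $dP/d\hat{P}_n$ evaluated at $w_i$ equals $np_i$, we get
\[
D(P,\hat{P}_n)=\frac{1}{2}\sum_{i=1}^{n}(np_i-1)^2\cdot\frac{1}{n}=\frac{n}{2}\|p\|_2^{2}-\frac{1}{2},
\]
so $D(P,\hat{P}_n)\le\rho$ is equivalent to $n\|p\|_2^{2}\le 2\rho+1$, which already explains the last two listed conditions. This reformulation turns the problem into $\min_p p^{T}l$ subject to the simplex constraints $p\ge 0$, $\mathbf{1}^{T}p=1$, and $n\|p\|_2^{2}\le 2\rho+1$, a convex program with a strictly feasible point (e.g.\ $\hat{P}_n$ when $\rho>0$, and trivially the unique feasible point $\hat{P}_n$ when $\rho=0$), so Slater's condition holds and the KKT conditions are both necessary and sufficient.

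Next I would form the Lagrangian
\[
\mathcal{L}(p,\lambda,\eta,\mu)=p^{T}l+\tfrac{\lambda}{2}\bigl(n\|p\|_2^{2}-2\rho-1\bigr)+\eta(\mathbf{1}^{T}p-1)-\mu^{T}p,
\]
with dual variables $\lambda\ge 0$, $\mu\ge 0$, and $\eta\in\mathbb{R}$. Stationarity in $p_i$ yields $l_i+n\lambda p_i+\eta-\mu_i=0$, and combining with the complementary slackness $p_i\mu_i=0$ together with $p_i\ge 0$ and $\mu_i\ge 0$ gives exactly $n\lambda p_i=\max\{-l_i-\eta,0\}$, which is the first equation after dividing by $n$. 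Complementary slackness on the $\chi^2$ constraint, $\lambda(n\|p\|_2^{2}-2\rho-1)=0$, is the third equation, and primal feasibility supplies $n\|p\|_2^{2}\le 2\rho+1$ and $\lambda\ge 0$.

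To obtain the second equation I would sum the stationarity relation over the active index set $A=\{i:l_i\le -\eta\}=\{i:p_i>0\}$ (the equality of these two sets follows from the stationarity formula). Using $\sum_{i\in A}p_i=\sum_i p_i=1$ from the simplex constraint, summation produces $n\lambda=\sum_{i\in A}(-l_i-\eta)=-\sum_{i\in A}l_i-\eta|A|$, i.e.\ $\eta|A|+n\lambda=-\sum_{i\in A}l_i$, as claimed. Sufficiency then follows from the convexity of the program together with the KKT conditions.

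The main obstacle, and it is mild, is handling the degenerate cases carefully: when $\lambda=0$ the stationarity formula is undetermined as written and one must argue directly (the ball is inactive, so $p$ minimizes $p^{T}l$ on the simplex and concentrates on $\arg\min_i l_i$, which can be shown consistent with the listed system by an appropriate choice of $\eta$); and when $\rho=0$, feasibility forces $p=\hat{P}_n$, which again must be verified to satisfy the system. Apart from these boundary situations, the argument is a direct KKT derivation enabled by the rewrite of the $\chi^2$ constraint into a Euclidean norm bound.
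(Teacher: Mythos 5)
Your proposal is correct and follows essentially the same route as the paper: rewrite the $\chi^2$-ball as the Euclidean constraint $n\|p\|_2^2 \leq 2\rho+1$, invoke Slater's condition so the KKT conditions are necessary and sufficient, and read off the three stated relations from stationarity, the complementary slackness conditions, and a summation over the active set using $\sum_i p_i = 1$. Your explicit treatment of the degenerate cases $\lambda=0$ and $\rho=0$ is a small point of extra care beyond what the paper records, but the core argument is identical.
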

\begin{proof}
The constrained minimization $\min_{p \in \mathcal{P}_{n,\rho}} p^T l$  is a convex optimization problem which forms the Lagrangian: $L(p, \lambda, \eta, \zeta) = p^Tl - \lambda \left(\rho - \frac{1}{2n} \sum_{i=0}^n (n p_i - 1)^2 \right)  -\eta (1 - \sum_{i=1}^n p_i) - \sum_{i=1}^n \zeta_i p_i$
% \begin{align*}
%     L(p, \lambda, \eta, \zeta) &= p^Tl - \lambda \left(\rho - \frac{1}{2n} \sum_{i=0}^n (n p_i - 1)^2 \right)  \nonumber \\
%     &-\eta (1 - \sum_{i=1}^n p_i) - \sum_{i=1}^n \zeta_i p_i, 
% \end{align*}
where $p \in \mathbb{R}^n, \lambda \geq 0$, $\eta \in \mathbb{R}$, and $\zeta \in \mathbb{R}^n_{+}$. The system of linear equations in the proposition {emerges from}  Karush-Kuhn-Tucker (KKT) conditions and simple rearrangements. Note that since the primal problem is convex, the duality gap is zero and the KKT conditions are the sufficient and necessary conditions for the primal problem. 

% Notice the first two equations 
% % from Equations (\ref{kkt:compute_pi})-(\ref{kkt:compute_eta}) 
% that we can compute $p_i$ in terms of $\lambda$. These $p_i = p_i(\lambda)$ are then substituted into Equation (\ref{kkt:rho_equation}) to solve for $\lambda$. 
In particular, consider the constrained optimization problem 
\begin{align}
    \label{eq:rho_quadrate_weights}
    \min_{p \in \mathcal{P}_{n,\rho}} \sum_{i=1}^n p_i l_i. 
\end{align}

This is a convex optimization problem which forms the Lagrangian: 
\begin{align}
    L(p, \lambda, \eta, \zeta) &= p^Tl - \lambda \left(\rho - \frac{1}{2n} \sum_{i=0}^n (n p_i - 1)^2 \right)  \nonumber -\eta (1 - \sum_{i=1}^n p_i) - \sum_{i=1}^n \zeta_i p_i, 
\end{align}
where $p \in \mathbb{R}^n, \lambda \geq 0$, $\eta \in \mathbb{R}$, and $\zeta \in \mathbb{R}^n_{+}$. The KKT conditions for the primal problem in Equation (\ref{eq:rho_quadrate_weights}) are: 
% \begin{align}
\begin{numcases}{}
\label{kkt:stationary}
l_i + \lambda (np_i - 1) + \eta - \zeta_i = 0, \forall 1 \leq i \leq n \\ 
\label{kkt:slack_1}
\lambda \left(2\rho + 1 - n \| p\|_2^2 \right)  = 0 \\ 
\label{kkt:primal_1}
n \| p\|_2^2 \leq 2\rho + 1  \\ 
\label{kkt:dual_1}
\lambda \geq 0 \\ 
\label{kkt:primal_2}
\sum_{i=1}^n p_i = 1 \\ 
\label{kkt:slack_2}
\zeta_i p_i = 0,  \forall 1 \leq i \leq n \\ 
\label{kkt:dual_2}
\zeta_i \geq 0, \forall 1 \leq i \leq n.
\end{numcases}  
% \end{align}

We can see that the strong duality holds because the primal problem in Equation (\ref{eq:rho_quadrate_weights}) satisfies the Slater's condition; therefore the KKT conditions are the necessary and sufficient conditions for the primal optimal solution. It follows from Equations (\ref{kkt:stationary}), (\ref{kkt:slack_2}), and (\ref{kkt:dual_2}) that:
\begin{align}
    \label{eq:p}
    \lambda n p_i = (-l_i -\eta)_{+} := \max \{ -l_i -\eta,0 \},
\end{align}
which, combined with Equation (\ref{kkt:primal_2}), implies that: 
\begin{align}
    \label{eq:eta_to_lam}
    n \lambda = \sum_{i=1}^n (-l_i - \eta)_{+}.
\end{align}

From Equation (\ref{eq:eta_to_lam}), we have:
\begin{align}
    \label{eq:lam_to_eta}
    \eta &= \frac{-\sum_{i \in A} l_i - n \lambda}{ |A|},
\end{align}
where $A = \{ i: l_i + \eta \leq 0  \} $. Note that $|A| \geq 1$ because otherwise $p_i =0, \forall 1 \leq i \leq n$ which contradicts Equation (\ref{kkt:primal_2}). We then plug Equation (\ref{eq:lam_to_eta}) and (\ref{eq:p}) into Equation (\ref{kkt:primal_1}) to solve for $\lambda$. Note that $\|p\|_2^2$ is decreasing in $\lambda$, thus we can bisect to find the optimal $\lambda$ within its bound. We can easily obtain a bound on $\lambda$ from Equation (\ref{kkt:primal_1}): 
\begin{align*}
    0 \leq \lambda &\leq \max \left\{  \frac{-l_{min} + \sum_{i=1}^n l_i  }{\sqrt{1 + 2 \rho} - 1}, \frac{-l_{min} + l_{max}}{ \sqrt{1 + 2 \rho} }   \right \}, 
\end{align*}
where $l_{min} = \min_{1 \leq i \leq n} l_i$, and  $l_{max} = \max_{1 \leq i \leq n} l_i$.

The optimal distribution $\argmin_{p \in \mathcal{P}_{n,\rho}} \sum_{i=1}^n p_i l_i $ is not constant, but rather a function of $l$. Thus, its gradients with respect to some parameter $\psi$ must be computed from those of $l$. This becomes straightforward when we have solved $(p_i, \lambda, \eta)$ in terms of $l$ as in the results above:
% and the following system of equations with the gradient variables $(\frac{\partial p}{\partial \phi}, \frac{\partial \lambda}{\partial \phi}, \frac{\partial \eta}{\partial \phi})$
% \begin{align*}
% \begin{cases}
%  \frac{\partial p_i}{\partial \phi}  = \frac{-1}{n \lambda^2} (-l_i - \eta) \frac{\partial \lambda}{\partial \phi}  + \frac{1}{n \lambda}(-\frac{\partial l_i}{\partial \phi} -\frac{\partial \eta}{\partial \phi}) \\
% |A| \frac{\partial \eta}{\partial \phi} =  -\sum_{i=1}^n \frac{\partial l_i}{\partial \phi} - n \frac{\partial \lambda}{\partial \phi} \\ 
% 0 =\sum_{i =1}^n p_i \frac{\partial p_i}{\partial \phi}.
% \end{cases}
% \end{align*}
 \begin{align*}
 \begin{cases}
    \label{eq:dr_gradient}
    \frac{\partial p_i}{\partial \psi}  = \frac{-1}{n \lambda^2} (-l_i - \eta) \frac{\partial \lambda}{\partial \psi}  + \frac{1}{n \lambda}(-\frac{\partial l_i}{\partial \psi} -\frac{\partial \eta}{\partial \psi}) \\
    |A| \frac{\partial \eta}{\partial \psi} =  -\sum_{i \in S} \frac{\partial l_i}{\partial \psi} - n \frac{\partial \lambda}{\partial \psi} \\ 
    \sum_{i \in S} p_i \frac{\partial p_i}{\partial \psi} = 0.      
 \end{cases}
 \end{align*}

\end{proof}

In practice, we can use bisection search \citep{NamkoongD16} to solve for $\lambda$ satisfying  Equation (\ref{kkt:rho_equation}), as presented in Algorithm \ref{alg:bisect}.

\begin{algorithm}[t]
\begin{algorithmic}[1]
\STATE \textbf{Input}: $ p = p(\lambda)$ computed Proposition \ref{prop:kkt}, $\epsilon \geq 0$
%   \KwInitialization{$\theta$}
    
\STATE Set $\lambda_{min} \leftarrow 0$ 
    
\STATE Set $\lambda_{max} \leftarrow \max \left\{  \frac{-l_{min} + \sum_{i=1}^n l_i  }{\sqrt{1 + 2 \rho} - 1}, \frac{-l_{min} + l_{max}}{ \sqrt{1 + 2 \rho} }   \right \}$ 
    
\STATE Set $\lambda \leftarrow \lambda_{min}$ 

\WHILE{$\lambda_{max} - \lambda_{min} > \epsilon$}
\STATE $\lambda = \frac{1}{2}(\lambda_{max} + \lambda_{min})$ 
    
\IF{$n \| p(\lambda)\|_2^2 > 2\rho + 1$}
    \STATE $\lambda_{min} = \lambda$ 
    \ELSE
        \STATE $\lambda_{max} = \lambda$
\ENDIF
\ENDWHILE
\STATE \textbf{Output}: $\lambda, p(\lambda)$
\caption{Bisection search}
\label{alg:bisect}
\end{algorithmic}
\end{algorithm}
% \ref{alg:bisect}. 

\subsection{Theoretical Analysis}
\label{section:theory}

For the sake of analysis, we adopt the definition of the $T$-period regret and Bayesian regret from \citep{DBLP:journals/mor/RussoR14} to our setting, as also discussed in Section \ref{chap2_section_bandit} of Chapter \ref{chap:Background}. In particular, we define a policy $\pi$ as a mapping from the history $H_t = (x_1, w_1, P_1, ..., x_{t-1}, w_{t-1}, P_{t-1})$ to $(x_t,w_t, P_t)$ where $P_i \in \mathcal{P}_{n,\rho} \times \mathcal{X}, \forall i$. 

\begin{defn}[$T$-period regret] The $T$-period regret of a policy $\pi$ is defined by
\begin{align*}
    Regret(T, \pi, f) = \sum_{t=1}^T \mathbb{E} \left[ g(f, x^*, P^*) - g(f, x_t, P_t) | f \right],
\end{align*}
where $T \in \mathbb{N}$, and 
\begin{align*}
    x^* &\in \operatorname*{arg\,max}_{x \in \mathcal{X}} \min_{P \in \mathcal{P}_{n,\rho}} \mathbb{E}_{P(w)}[f(x,w)], \\
    P^*(.|x) &= \argmin_{P \in \mathcal{P}_{n,\rho}}\mathbb{E}_{P(w)}[f(x,w)], \forall x \in \mathcal{X}. 
\end{align*}

\end{defn}
\begin{defn}[$T$-period Bayesian regret]
The $T$-period Bayesian regret of a policy $\pi$ is the expectation of the regret with respect to the prior over $f$, 
\begin{align}
    \text{BayesRegret}(T,\pi) &= \mathbb{E} [\text{Regret}(T, \pi, f)].
    % = \sum_{t=1}^T \mathbb{E} \left[ r_{\rho}(x_t) \right]. 
\end{align}
% for all $T \in $
% for all $T \in \mathbb{N}$. 
\end{defn}
For simplicity, we focus our analysis on the case where $\mathcal{X}$ is finite and $\mathcal{P}_{n,\rho}$ is a finite subset of the $\chi^2$ ball of radius $\rho$. Similar to \citep{DBLP:conf/icml/SrinivasKKS10}, the results can be extended to infinite sets $\mathcal{X}$ and the entire $\chi^2$ ball using discretization trick of \citep{DBLP:conf/icml/SrinivasKKS10} as long as a smoothness condition (i.e., the partial derivatives of $f$ are bounded with high probability) is satisfied \citep[Theorem~2]{DBLP:conf/icml/SrinivasKKS10}.  

\begin{thm} 
\label{thm:bayesregret}
Assume $\mathcal{X}$ is a finite subset of $\mathbb{R}^d$, and $\mathcal{P}_{n,\rho}$ is a finite subset of the $\chi^2$ ball of radius $\rho$. Let $\pi^{DRBQO}$ be the DRBQO policy presented in Algorithm \ref{pseudo_code:drbqo}, $\gamma_T$ be the maximum information gain defined in \cite{DBLP:conf/icml/SrinivasKKS10}, then for all $T \in \mathbb{N}$, 
\begin{align*}
    \text{BayesRegret}(T, \pi^{DRBQO}) &\leq 1  + \frac{(\sqrt{2\log \frac{(1+ T^2) |\mathcal{X}| |\mathcal{P}_{n,\rho}|}{ \sqrt{2\pi}} } +B) \sqrt{2\pi}}{|\mathcal{X}| |\mathcal{P}_{n,\rho}|} + \frac{2 \gamma_T \sqrt{(1 + 2\rho)n}}{1 + \sigma^{-2}} \\ 
    &+ 2\sqrt{ T \gamma_T (1 + \sigma^{-2})^{-1} \log \frac{(1+ T^2) |\mathcal{X}| |\mathcal{P}_{n,\rho}|}{ \sqrt{2\pi}}  }.
\end{align*}

\end{thm}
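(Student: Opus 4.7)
The plan is to combine the information-theoretic Thompson sampling argument of Russo and Van Roy with the maximum-information-gain analysis of Srinivas et al., adapted to the distributionally robust quadrature setting. Since $\tilde{f}_t$ drawn in line~\ref{alg:line:ps} is a posterior sample of $f$ given the history $H_t$, the pair $(x_t,P_t)$ produced in line~\ref{alg:line:dro} is itself a posterior sample of the distributionally robust pair $(x^*,P^*)$. Consequently, for any $H_t$-measurable functional $U_t:\mathcal{X}\times\mathcal{P}_{n,\rho}\to\mathbb{R}$, one has $\mathbb{E}[U_t(x^*,P^*)\mid H_t]=\mathbb{E}[U_t(x_t,P_t)\mid H_t]$, and each instantaneous regret admits the familiar Russo--Van Roy decomposition
\[
\mathbb{E}\bigl[g(f,x^*,P^*)-g(f,x_t,P_t)\bigr]=\mathbb{E}\bigl[g(f,x^*,P^*)-U_t(x^*,P^*)\bigr]+\mathbb{E}\bigl[U_t(x_t,P_t)-g(f,x_t,P_t)\bigr].
\]

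I would instantiate $U_t(x,P)=\mu_{t-1}(x,P)+\beta_t^{1/2}\sigma_{t-1}(x,P)$ with $\beta_t=2\log\bigl((1+t^2)|\mathcal{X}||\mathcal{P}_{n,\rho}|/\sqrt{2\pi}\bigr)$. Because the quadrature identities \eqref{eq:quadrature_mu}--\eqref{eq:quadrature_sigma} imply $g(f,x,P)\mid H_t\sim\mathcal{N}(\mu_{t-1}(x,P),\sigma_{t-1}^2(x,P))$, a Gaussian tail inequality combined with a union bound over the finite set $\mathcal{X}\times\mathcal{P}_{n,\rho}$ certifies $U_t$ as a valid upper bound, with failure probability at most $1/((1+t^2)|\mathcal{X}||\mathcal{P}_{n,\rho}|)$ at each $t$. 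On the failure event the first term is controlled by $\|g(f,\cdot,\cdot)\|_\infty\le\|f\|_k\le B$, while on its complement a Mills-ratio estimate $\mathbb{E}(Z-\beta_t^{1/2})^+\le\varphi(\beta_t^{1/2})/\beta_t^{1/2}$ together with $\sum_t 1/(1+t^2)<\pi$ assembles the additive $1+(\sqrt{2\log(\cdot)}+B)\sqrt{2\pi}/(|\mathcal{X}||\mathcal{P}_{n,\rho}|)$ contribution that appears in the statement.

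The core technical step, and where I expect the main obstacle, is translating the adversarial quadrature standard deviation $\sigma_{t-1}(x_t,P_t)$, which governs the second term via $U_t(x_t,P_t)-g(f,x_t,P_t)\le 2\beta_t^{1/2}\sigma_{t-1}(x_t,P_t)$ on the good event, into the pointwise posterior standard deviation $\sigma_{t-1}(x_t,w_t)$ that the maximum information gain $\gamma_T$ can absorb. Here the two design choices of DRBQO cooperate subtly: $w_t$ is chosen in line~\ref{alg:line:w} to maximise $\sigma_{t-1}(x_t,\cdot)$ over $S_n$, while $P_t$ is constrained to the $\chi^2$ ball whose elements satisfy $n\|P_t\|_2^2\le 1+2\rho$. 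Writing $\sigma_{t-1}^2(x_t,P_t)=P_t^{\top}\mathbf{C}_{t-1}P_t$ for the PSD Gram matrix $\mathbf{C}_{t-1}=[C_{t-1}(x_t,w_i;x_t,w_j)]_{i,j}$, bounding $\|\mathbf{C}_{t-1}\|_{\mathrm{op}}\le n\max_i\sigma_{t-1}^2(x_t,w_i)=n\sigma_{t-1}^2(x_t,w_t)$, and combining with $\|P_t\|_2^2\le(1+2\rho)/n$ produces the key coupling $\sigma_{t-1}(x_t,P_t)\le\sqrt{(1+2\rho)n}\,\sigma_{t-1}(x_t,w_t)$. Without the explicit max-variance rule in line~\ref{alg:line:w}, an adversarial $P_t$ could place mass on poorly sampled contexts and sever the link between regret and information gain.

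Aggregation then proceeds by the standard inequality $\sigma_{t-1}^2(x_t,w_t)\le\log(1+\sigma^{-2}\sigma_{t-1}^2(x_t,w_t))/\log(1+\sigma^{-2})$ together with the definition $\gamma_T=\tfrac{1}{2}\sum_{t=1}^T\log(1+\sigma^{-2}\sigma_{t-1}^2(x_t,w_t))$, yielding $\sum_{t=1}^T\sigma_{t-1}^2(x_t,w_t)\le 2\gamma_T/\log(1+\sigma^{-2})$. Cauchy--Schwarz then converts this into a $\sqrt{T\gamma_T/\log(1+\sigma^{-2})}$-rate, and the weaker bound $\log(1+x)\ge x/(1+x)$ for $x\in[0,1]$ upgrades $1/\log(1+\sigma^{-2})$ into the $(1+\sigma^{-2})^{-1}$ form in the statement; the additive $2\gamma_T\sqrt{(1+2\rho)n}/(1+\sigma^{-2})$ term absorbs the $(1+2\rho)n$ inflation from Step~3 before the Cauchy--Schwarz step is applied. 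Summing the two terms of the decomposition with the $2\beta_T^{1/2}$ prefactor then produces the advertised bound.
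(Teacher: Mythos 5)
Your high-level architecture matches the paper's: the Russo--Van Roy posterior-sampling decomposition with a deterministic surrogate $U_t(x,P)=\mu_{t-1}(x,P)+\beta_t^{1/2}\sigma_{t-1}(x,P)$ and the same $\beta_t$, Gaussian-tail control of the optimistic term over the finite set $\mathcal{X}\times\mathcal{P}_{n,\rho}$, and an information-gain aggregation $\sum_t\sigma_{t-1}^2(x_t,w_t)\le 2(1+\sigma^{-2})^{-1}\gamma_T$ enabled by the max-variance choice of $w_t$. These pieces correspond to the paper's Lemmas on regret decomposition, the second-term bound, and the covariance inequality.

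The genuine gap is in how the $\sqrt{(1+2\rho)n}$ factor enters. Your coupling $\sigma_{t-1}(x_t,P_t)\le\sqrt{(1+2\rho)n}\,\sigma_{t-1}(x_t,w_t)$ is applied \emph{multiplicatively} to the per-round regret $2\beta_t^{1/2}\sigma_{t-1}(x_t,P_t)$; after Cauchy--Schwarz this inevitably yields a bound of the form $\sqrt{(1+2\rho)n}\cdot\sqrt{T\gamma_T\beta_T}$, i.e.\ the inflation multiplies the leading $\sqrt{T}$ term, whereas the theorem keeps the leading term inflation-free and isolates the $(1+2\rho)n$ dependence in a separate additive $O(\gamma_T)$ term. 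Your sentence that the additive term ``absorbs the inflation before Cauchy--Schwarz'' names the desired outcome but supplies no mechanism. (Two side remarks: your own operator-norm computation actually gives $\sigma_{t-1}(x_t,P_t)\le\sqrt{1+2\rho}\,\sigma_{t-1}(x_t,w_t)$, not $\sqrt{(1+2\rho)n}$; and the paper's covariance lemma shows the inflation-free bound $\sigma_{t-1}(x_t,P)\le\sigma_{t-1}(x_t,w_t)$ holds for \emph{any} probability vector $P$, since every entry $C_{t-1}(x_t,w;x_t,w')$ is at most $\max_w\sigma_{t-1}^2(x_t,w)$ by Cauchy--Schwarz, so the $\chi^2$ constraint is not what makes the variance coupling work.) The paper achieves the additive separation by a different device: it partitions the range of $f$ into the confidence band $A_t$ and its complement, introduces an auxiliary distribution $P_t^*$ maximizing $U_t-g$ over $A_t$ so that the in-band contribution reduces to $\beta_t^{1/2}\sigma_{t-1}(x_t,P_t^*)\le\beta_t^{1/2}\sigma_{t-1}(x_t,w_t)$ with no $\rho$-dependence, and bounds the out-of-band contribution by Cauchy--Schwarz on $\sum_w P_t(w)(\mu_{t-1}-f)$ together with $\|P_t\|_2^2\le(1+2\rho)/n$ and the Gaussian conditional-mean (Mills-ratio) formula; this is what produces the additive $\sqrt{n(1+2\rho)}\,\sigma_{t-1}^2(x_t,w_t)$ per round (note: variance squared, so it sums directly to $2\gamma_T\sqrt{(1+2\rho)n}/(1+\sigma^{-2})$ without a Cauchy--Schwarz loss). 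The $B$-dependent constant likewise arises from this out-of-band analysis via the RKHS-norm bound on the posterior mean, not from the optimistic term as your bookkeeping suggests. Without this three-way decomposition your argument proves a sublinear but structurally different bound from the one stated.
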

Note that $\gamma_T$ can be bounded for three common kernels: linear, SE and Mat\'ern kernels in \citep{DBLP:conf/icml/SrinivasKKS10}, which is summarized in Table \ref{table:information_gain}.
\begin{table}[h]
\begin{center}
\begin{tabular}{  l | l } 
\textbf{Kernel type} & \textbf{Information gain} $\gamma_T$ \\ 
\hline 
\hline 
Linear & $\mathcal{O}(d \log T)$  \\ 
\hline
Squared exponential & $\mathcal{O}(\log T)^{d+1})$  \\ 
\hline
Mat\'ern with $\nu > 1$ & $ \mathcal{O}( T^{d(d+1) / (2 \nu + d(d+1))} \log T ) $  \\ 
\end{tabular}
\end{center}
\caption{The upper bounds for the information gains $\gamma_T$ for various types of kernels. Here $d \in \mathbb{N}$ is the dimension of the search domain.}
\label{table:information_gain}
\end{table}
Using these bounds, Theorem \ref{thm:bayesregret} suggests that DRBQO has \textit{sublinear} Bayesian regret for common kernels such as linear, SE and Mat\'ern kernels.

Our Bayesian regret bound of DRBQO is of order $\sqrt{T \gamma_T \log( (1 + T^2) |\mathcal{X}| |\mathcal{P}_{n, \rho}| )}$ which matches the standard upper bounds (up to an extra log constant $\log |\mathcal{P}_{n, \rho}| )$ established in \citep{DBLP:journals/mor/RussoR14} and \citep{DBLP:conf/icml/SrinivasKKS10}. The extra log constant in our bound accounts for an additional decision space $\mathcal{P}_{n, \rho}$ for the parameter distribution in our problem. To our knowledge, the standard bound above is one of the best known upper bounds for GP optimization. \citet{scarlett2017lower} establish a lower bound for GP optimization suggesting that the standard bound above is near-optimal (w.r.t. the established lower bound) for the square exponential kernel.

\begin{proof}[Proof sketch]
We leverage two proof techniques from \citep{DBLP:journals/mor/RussoR14} to derive this bound including posterior sampling regret decomposition and the connection between posterior sampling and UCB. However, an extension from the Bayesian regret bound to our case is non-trivial. The main difficulty is that the $\rho$-robust quadrature distributions $\argmin_{P \in \mathcal{P}_{n,\rho}} \mathbb{E}_{P(w)}[f(x,w)]$ are random variables and the resulting quadrature $\min_{P \in \mathcal{P}_{n,\rho}} \mathbb{E}_{P(w)}[f(x,w)]$ does not follow a GP. We overcome this difficulty by decomposing the range $\mathbb{R}$ of $f(x,w)$ into a set of carefully designed disjoint subsets, using several concentration inequalities for Gaussian distributions, and leveraging the mild assumptions of $f$ from the problem setup. The detailed proof is presented in Section \ref{chap3_sec:proof}. 
\end{proof}

\begin{rem}[Extensions to other divergence measures beyond $\chi^2$ divergence]
We have focused on $\chi^2$ divergence mainly for simplicity. Our algorithmic and theoretical results can be potentially extended to $f$-divergence (including $\chi^2$, KL and Hellinger) that requires the involved distribution to have the same support as the nominal distribution $\hat{P}$. Regarding the algorithmic extension for $f$-divergence, since $f$ in $f$-divergence is convex, the surrogate DRO still reduces to convenient KKT conditions (as the strong duality still holds). Regarding the theoretical extension for $f$-divergence, the sublinear convergence rate in Theorem \ref{thm:bayesregret} remains valid because in our analysis the distribution-dependent term $\sum_i p_i^2$ is always bounded above by $1$ (though in the case of $\chi^2$ divergence, this bound can be tighter as shown in our proof of Theorem \ref{thm:bayesregret}). The current form of our framework cannot however be extended to divergences that are defined for distributions of continuous support such as Wasserstein because our analysis relies on the assumption of finite support for the distributional uncertainty set. This assumption is however very mild in practice because if one of the involved distributions is not discrete, computing the Wasserstein distance becomes intractable even with the simplest scenario where one distribution is uniform while the other is discrete with two atoms. In practice, we can usually avoid this intractability by discretizing the support via discrete distributions for the distributional uncertainty set, and thus can leverage our analytical insights.

\end{rem}
\section{Experiment}
\label{section:experiment}
In this section, we empirically validate the performance of DRBQO by comparing against several baselines in synthetic and $n$-fold cross-validation hyperparameter tuning experiments.  

We focus on the BQO baselines that directly substitute the inferred posterior mean $\mu_t(x, \hat{P}_n)$ (in Equation (\ref{eq:quadrature_mu})) and variance $\sigma_t^2(x, \hat{P}_n)$ (in Equation (\ref{eq:quadrature_sigma})) of  $g(f,x, \hat{P}_n)$ into any standard acquisition (e.g., EI and GP-UCB) to achieve small regret in maximizing $g(f,x, \hat{P}_n)$. More advanced BQO baseline methods, e.g.,  \citep{Toscano_IntegralBO_18} are expected to perform poorly in the distributional uncertainty setting because they are not set out to account for the robust solutions. There is a distinction between sampled points and report points by each baseline algorithm. A sampled point is a suggested point regarding where to sample next while a report point is chosen from all the sampled points (up to any iteration) based on the objective function that an algorithm aims at optimizing. In standard noiseless Bayesian optimization, sampled points and report points are identical. However, this is not necessarily the case in BQO where the objective function has expectation form and is not directly queried.  In particular, we consider the following baselines:   

% The considered baselines might be different in the way a final report point is chosen among all the sampled points $x_{1:t}$ up to time $t$. For DRBQO, the report point is a surrogate of the distributionally $\rho$-robust point among all the sampled points selected so far $\argmax_{ x \in x_{1:t}} \min_{P \in \mathcal{P}_{n,\rho}}  \mathbb{E}_{P(w)} [\mu_t(x,w)]$. In addition, we consider the following baselines: 
% such as the one in \cite{DBLP:conf/nips/SwerskySA13}. 

% While there are more intricate methods for point selection in the standard Bayesian quadrature optimization using the inferred posterior mean and variance, e.g., \cite{Toscano_IntegralBO_18}, these methods are set out to achieve lower $\rho$-regret with $\rho=0$ (i.e., the standard regret) than the representative method class. Thus, given our distributional uncertainty setting with the goal of achieving low $\rho$-regret with general $\rho > 0$, considering only the representative method class does not weaken our result. In fact, any standard DQO methods that ignore the distributional uncertainty expect to suffer from exactly the same drawbacks in our context of distributional uncertainty. 
\begin{itemize}
    \item MTBO: Multi-task Bayesian optimization  \citep{DBLP:conf/nips/SwerskySA13} is a typical BQO algorithm in which  the inferred posterior mean and variance are plugged into the standard EI acquisition to select $x_t$. In addition, each $w_t$ in this case represents a task and MTBO uses multi-task kernels to model the task covariance. Conditioned on $x_t$,  $w_t$ is selected such that the corresponding task yields the highest EI. We include MTBO only in the cross-validation hyperparameter tuning experiments.
    \item BQO-EI: This algorithm is similar to MTBO except for two distinctions. First,  $w_t$ is selected such that it yields the highest posterior variance on $f$, similar to our algorithm (see line \ref{alg:line:w} in Algorithm
    \ref{pseudo_code:drbqo}). Second, this uses kernels defined on the Cartesian product space $\mathcal{X} \times \Omega$ instead of the multi-task kernels as in MTBO. In addition, the report point at time $t$ is $\operatorname*{arg\,max}_{ x \in x_{1:t}}  \mathbb{E}_{\hat{P}_n(w)} [\mu_t(x,w)]$. 
    % We consider BQO-EI to be a good representative class of standard Bayesian quadrature optimization. This method uses the empirical distribution to infer the posterior mean and variance for the empirical expectation like MTBO. The only difference is the way we select $w_t$ at each time step. In MTBO, $w_t$ is selected such that it yields the highest single-task expected improvement while in BQO-EI, $w_t$ is selected such that it yields the highest posterior variance on $f$, similar to Line 4 in Algorithm \ref{pseudo_code:drbqo}. This selection is to improve the exploration in $f$. 

    \item Maximin-BQO-EI: This method is the same as BQO-EI except that the report point is $\operatorname*{arg\,max}_{ x \in x_{1:t}} \min_{P \in \mathcal{P}_{n,\rho}}  \mathbb{E}_{P(w)} [\mu_t(x,w)]$. 
    % the final point is the rthe report point. In Maximin-BQO-EI, the report point is instead computed similar to that in DRBQO. 
    
    \item BQO-TS: This method is a non-robust version of our proposed DRBQO. The only distinctions between BQO-TS and DRBQO are in the way $x_t$ is selected (line \ref{alg:line:dro} of Algorithm \ref{pseudo_code:drbqo}) and the way a report point is chosen. In BQO-TS, $x_t$ is selected with respect to the empirical distribution as follows: $x_t \in \operatorname*{arg\,max}_{x \in \mathcal{X}} \mathbb{E}_{\hat{P}_n(w)} [\tilde{f}_t(x,w)] $, and the report point at time $t$ is chosen as $\operatorname*{arg\,max}_{ x \in x_{1:t}}  \mathbb{E}_{\hat{P}_n(w)} [\mu_t(x,w)]$. 
    
    % We further consider a baseline that also uses posterior sampling as in DRBQO. The only difference is that BQO-TS is a distributionally non-robust version, i.e., it uses the empirical distribution and report the final point the same way as in BQO-EI. 
    
    \item Maximin-BQO-TS: This is the same as BQO-TS except that the final report point is $\operatorname*{arg\,max}_{ x \in x_{1:t}} \min_{P \in \mathcal{P}_{n,\rho}}  \mathbb{E}_{P(w)} [\mu_t(x,w)]$. 
    
    \item Emp-DRBQO: This is the same as DRBQO except that the report point is chosen as $\operatorname*{arg\,max}_{ x \in x_{1:t}}  \mathbb{E}_{\hat{P}_n(w)} [\mu_t(x,w)]$.
\end{itemize}

% In this example, and the rest of this work, we focus on a representative method class of the standard Bayesian quadrature optimization. In this representative method class, we prefer any method that directly plugs the inferred posterior mean and variance of  $g(f,x, \hat{P}_n)$ into any standard acquisition (e.g., Expected Improvement (EI) and GP-UCB \cite{DBLP:conf/icml/SrinivasKKS10}) to achieve low regret in maximizing $g(f,x, \hat{P}_n)$, such as the one in \cite{DBLP:conf/nips/SwerskySA13}. While there are more intricate methods for point selection in the standard Bayesian quadrature optimization using the inferred posterior mean and variance, e.g., \cite{Toscano_IntegralBO_18}, these methods are set out to achieve lower $\rho$-regret with $\rho=0$ (i.e., the standard regret) than the representative method class. Thus, given our distributional uncertainty setting with the goal of achieving low $\rho$-regret with general $\rho > 0$, considering only the representative method class does not weaken our result. In fact, any standard DQO methods that ignore the distributional uncertainty expect to suffer from exactly the same drawbacks in our context of distributional uncertainty. 

\begin{figure}[t]
    \centering
    \includegraphics[scale=0.7]{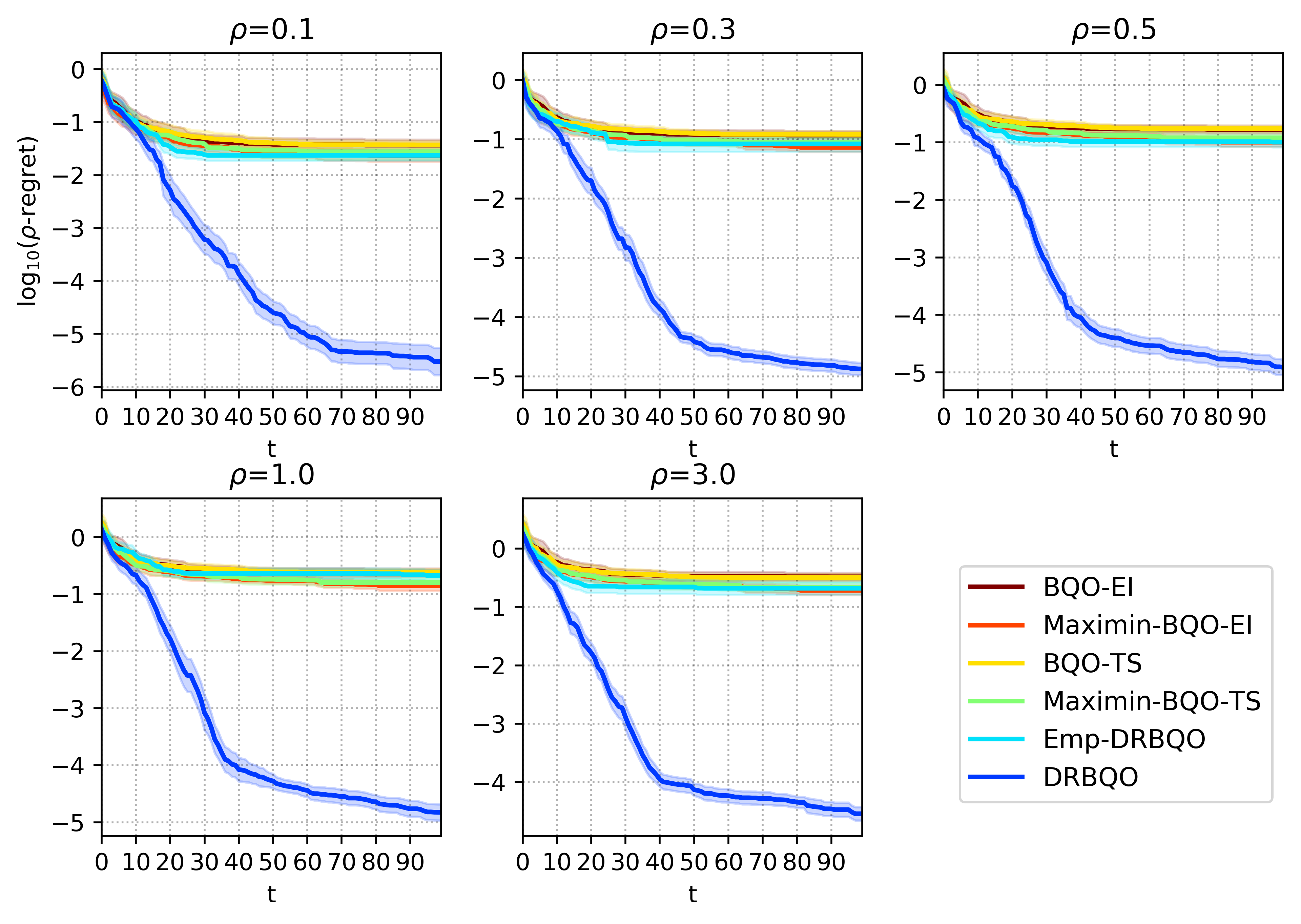}

    \caption{The best so-far $\rho$-regret values (plotted on the $\log_{10}$ scale) of the baseline BQO methods and our proposed method DRBQO for the synthetic function in Section \ref{section:experiment}. DRBQO significantly outperforms the baselines with respect to the $\rho$-regret in this experiment. {The larger the value of} $\rho$ (i.e., {the more conservative against the adversarial distributional perturbation), the higher is the} $\rho$-regret of the non-robust baselines.}
    \label{fig:plot_rho_regret_logistic}
\end{figure}

\subsection{Synthetic Functions}
% Due to the nature of this distributional uncertainty problem, it is not straightforward to design proper experiments to test our algorithm. Specifically, 
The distributional uncertainty problem is more pronounced when $f(x,w)$ is more significantly distinct across different values of $w \in S_n$, i.e., $f(x,w)$ experiences high variance along the dimension of $w$. Inspired by the logistic regression and the experimental evaluation from the original variance-based regularization work \citep{NamkoongD16}, we use a logistic form for synthetic function: $f(x,w) = -\log(1 + \exp(x^T w))$, where $x,w \in \mathbb{R}^d$. The true distribution $P_0$ is the standard Gaussian $\mathcal{N}(0,I)$. In this example, we use $d=2$ for better visualization. We sample $n=10$ values of $w$ from $\mathcal{N}(w; 0,I)$ and fix this set for the empirical distribution $\hat{P}_n(w) = \frac{1}{n} \sum_{i=1}^n \delta(w-w_i)$. The true expected function $\mathbb{E}_{P_0(w)}[f(x,w)]$ and the empirical (Monte Carlo) estimate function $\mathbb{E}_{\hat{P}_n(w)}[f(x,w)]$ are illustrated in Figure \ref{fig:figure1} (a) and (b), respectively. In this illustration, the Monte Carlo estimate function catastrophically shifts the true optimum to a spurious point due to the limited data in estimating $P_0$.

\begin{figure}
    \centering
    \includegraphics[scale=0.9]{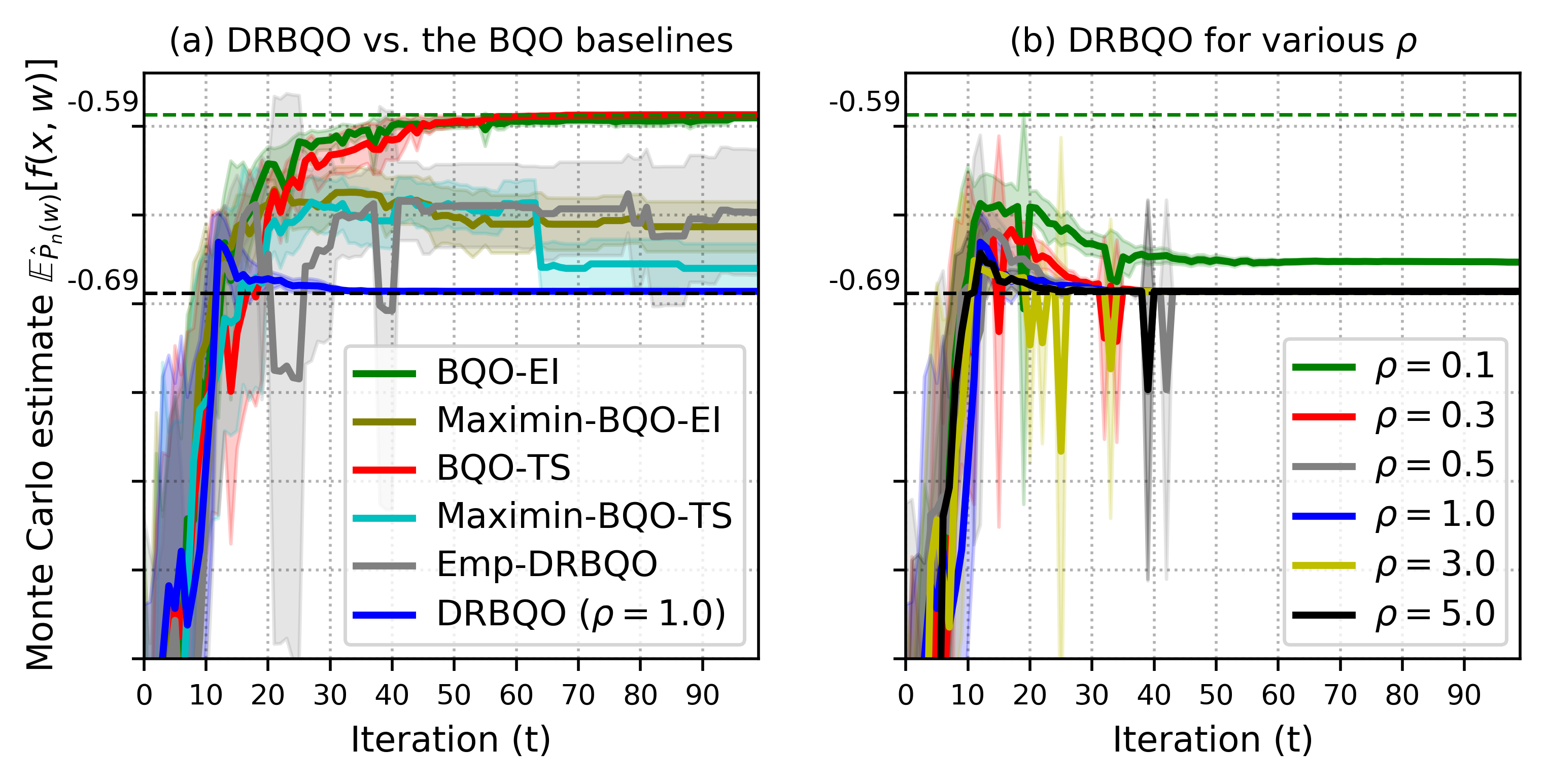}
    \caption{The empirical expected function $\mathbb{E}_{\hat{P}_n(w)}[f(x,w)]$ evaluated at each point $x$ reported at time $t$ by DRBQO and the standard BQO baselines (a), and by DRBQO for various values of $\rho$ (b). In this example, $\mathbb{E}_{\hat{P}_n(w)}[f(x,w)]$ has a maximum value of $-0.59$ while it has a value of $-0.69$ evaluated at the optimum of the true expected function $\mathbb{E}_{P_0(w)}[f(x,w)]$. 
    % the maximum value of $\mathbb{E}_{\hat{P}_n(w)}[f(x,w)]$ is  while the value of the empirical expected function evaluated at the optimum of the true expected function $\mathbb{E}_{P_0(w)}[f(x,w)]$ is $-0.69$. 
    The BQO baselines achieve higher values of the empirical expected function than DRBQO but DRBQO converges to the distributionally robust solutions.}
    % which resemble the optimum of the true expectation in this example.
    % are more robust in the sense of variance regularization. The distributionally robust solutions with \hl{the selected} values of $\rho$ resemble the optimum of the true expectation in this example. }
    \label{fig:mc_values}
\end{figure}

We initialize the comparative algorithms by selecting $12$ uniformly random inputs $(x,w) \in \mathcal{X}\times S_n$, and {we} keep these initial points the same for all the algorithms. We use the squared exponential kernel $k_{SE}$ defined on the Cartesian product space of $x$ and $w$. We normalize the input and output values to the unit cube, and resort to marginal maximum likelihood to learn the GP hyperparameters \citep{RasmussenW06} every time we acquire a new observation. The time horizon for all the algorithms is $T=100$. We report the results using two evaluation metrics: the $\rho$-regret as defined in Equation (\ref{eq:rho_regret}) and the value of the empirical expected function $\mathbb{E}_{\hat{P}_n(w)}[f(x,w)]$ evaluated at point $x$ reported by an algorithm at time $t$. The former metric quantifies how close a certain point is to the distributionally robust solution while the latter measures the performance of each algorithm from a perspective of the empirical distribution. We repeat the experiment $30$ times and report the average mean and the $96\%$ confidence interval for each evaluation metric. 
% We report in two evaluation metrics in this experiment. 

The first results are presented in Figure \ref{fig:plot_rho_regret_logistic}. We report over a range of $\rho$ values $\{0.1, 0.3, 0.5, 1.0, 3.0\}$ capturing the degree of conservativeness against the distributional uncertainty. Note that if $\rho > \frac{n-1}{2} = 4.5$, it represents the most conservative case as the $\chi^2$ ball covers the entire $n$-dimensional simplex. We observe from Figure \ref{fig:plot_rho_regret_logistic} that DRBQO significantly outperforms the baselines in this experiment. Also notice that when we increase the conservativeness requirement (i.e., increasing the values of $\rho$), the standard BQO baselines have higher $\rho$-regret. This is because the standard BQO baselines are rigid and do not allow for any conservativeness in the optimization. Therefore, these algorithms converge to the optimum of the spurious Monte Carlo estimate function. 

% implying that they fail more easily at meeting the conservativeness requirement in this experiment. Over time, the baselines report points that are close to the optimum of the spurious Monte Carlo estimate function, which is sup-optimal in terms of the $\rho$-regret.

% Notice that for small values of $\rho$, e.g., $\rho=0.1$ in this example, the performance gap between DRBQO and the baselines is small because the $\chi^2$ covers the distributions that are close to the empirical one. 

% In the later time steps $t$,   

We highlight the comparative algorithms in the second metric in Figure \ref{fig:mc_values} where we report the value of the empirical expected function $\mathbb{E}_{\hat{P}_n(w)}[f(x,w)]$ at each point $x$ reported by each algorithm at time $t$. Since the BQO baselines are set out to maximize the Monte Carlo estimate function, they achieve higher values in this metric than DRBQO. However, the non-robust solutions returned by the BQO baselines are sub-optimal with respect to the $\rho$-regret in this case, as seen from the corresponding results in Figure \ref{fig:plot_rho_regret_logistic}.

% and converges to a spurious optimum. DRBQO achieves lower values in this metric by preferring distributionally robust solutions. 
In addition, we evaluate the effectiveness of the selection of $w$ at line \ref{alg:line:w} in Algorithm \ref{pseudo_code:drbqo}. Currently, $w_t$ is selected such that it yields the highest posterior mean given $x_t$. This is to improve exploration in $f$. We compare this selection strategy with the random strategy in which $w_t$ is uniformly selected from $S_n$ regardless of $x_t$. The result is reported in Figure \ref{fig:w_selection}. In this figure, the post-fix RandW denotes the random selection of $w_t$. We observe that random selection of $w_t$ can hurt the convergence of both the standard BQO baselines and DRBQO. Furthermore, the selection of $w_t$ for the maxium posterior variance (line \ref{alg:line:w} of Algorithm \ref{pseudo_code:drbqo}) in DRBQO is also meaningful in proving Theorem \ref{thm:bayesregret}. 

% More empirical evaluations in other synthetic functions are presented in Section \ref{chap3_section:additional_experiment}. 

\begin{figure}
    \centering
    \includegraphics[scale=0.9]{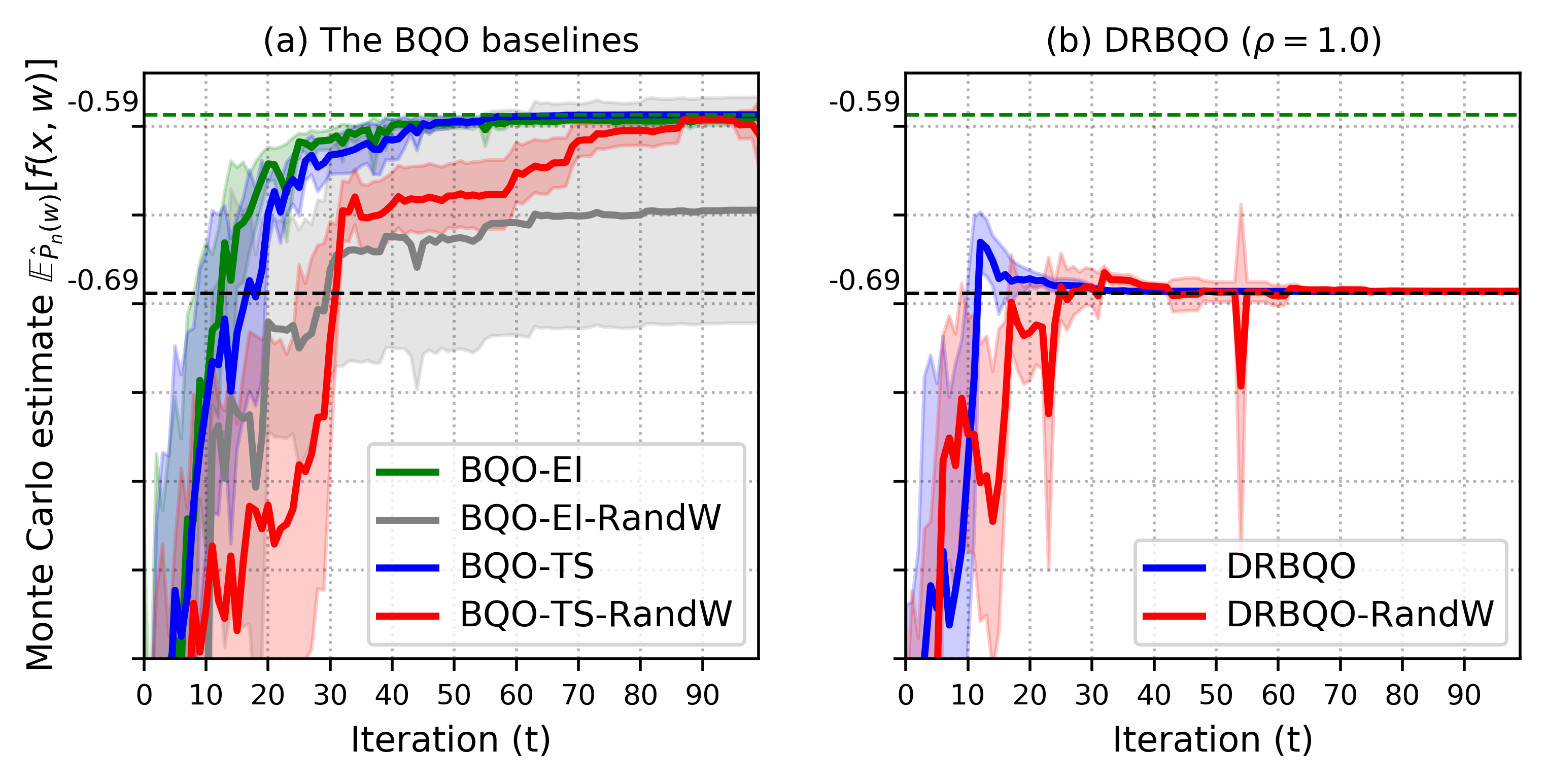}
    \caption{{The effect of different methods of selecting} $w_t$ on the performance of the BQO baselines (a) and DRBQO (b). We observe that random selection of $w_t$ can either slow down or prevent the convergence of both the standard BQO baselines and DRBQO in this experiment.}
    \label{fig:w_selection}
\end{figure}

% In this section, we provide some more experimental results of DRBQO on synthetic and real-world problems. 

% \setcounter{figure}{5}
We provide additional experimental evaluations in synthetic functions. The task in this experiment is to maximize $\mathbb{E}_{w \in \mathcal{N}(0,1)}[f(x,w)]$ where $f$ is a standard synthetic function such as Beale, Eggholder, Hartmann and Levy, $x$ is normalized to the unit cube and $f(x,w) := f(x+w)$. The performance metric used in this experiment is the $\rho$-robust values $\min_{P \in \mathcal{P}_{n,\rho}} \mathbb{E}_{P(w)}[f(x,w)]$. Here we use $n=10$ and $\rho=1.0$. We repeat the experiment $30$ times and report the average mean and the $96\%$ confidence interval for each evaluation
metric. The result is presented in Figure \ref{fig:synthetic_various}. The result shows that DRBQO achieves higher $\rho$-robust values than the baseline methods in all these functions except that in EggHolder function, DRBQO is compatible with BQO-EI but outperforms the other algorithms.

\begin{figure}[h]
    \centering
    \includegraphics[scale=1]{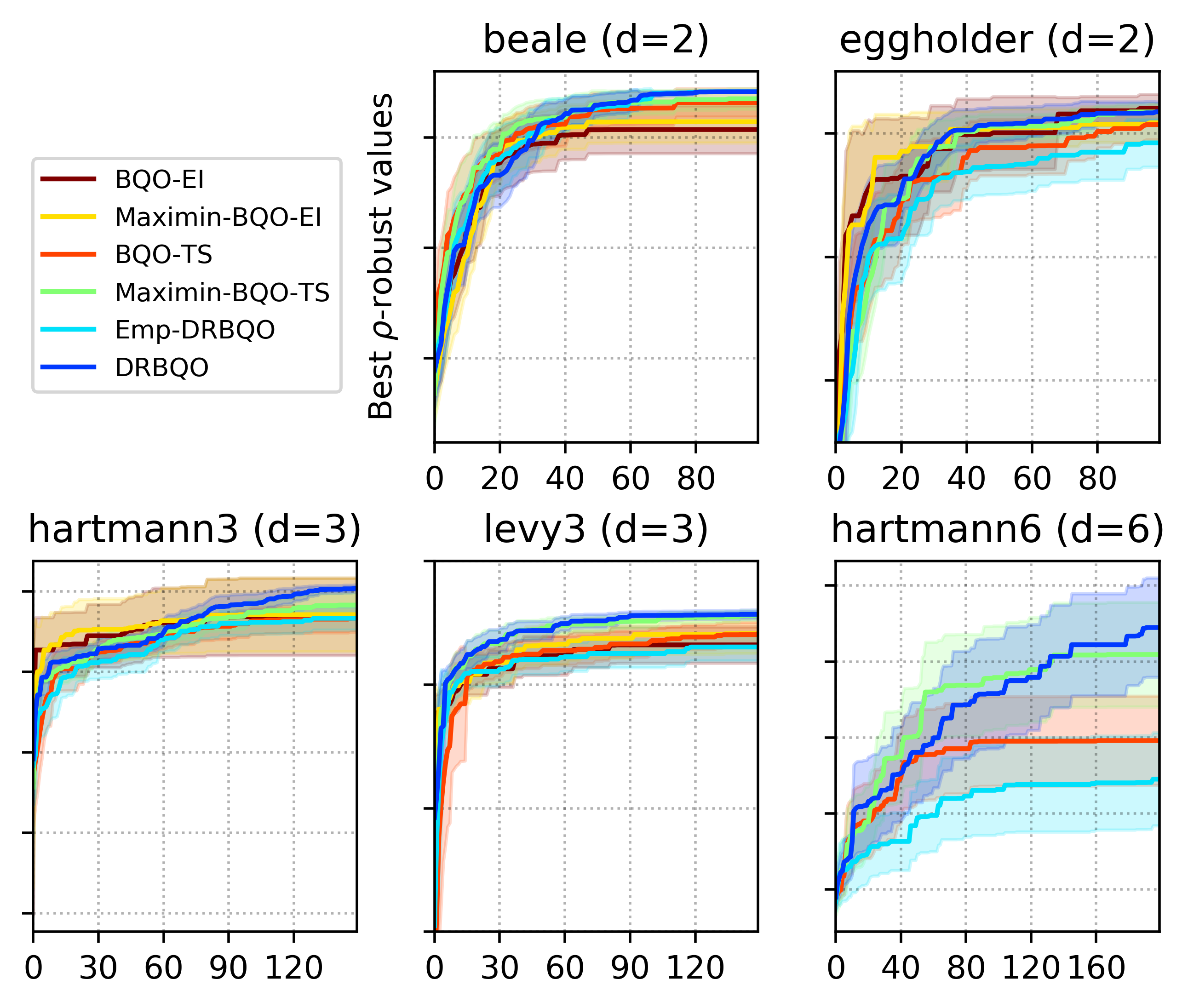}
    \caption{The performance of DRBQO and the baselines on the expected reformulation of various synthetic functions. Here we use $n=10$ and the best $\rho$ values are calculated with $\rho=1.0$. DRBQO achieves higher $\rho$-robust values than the BQO baselines in almost all the tested functions.}
    \label{fig:synthetic_various}
\end{figure}

\subsection{Cross-Validation Hyperparameter Tuning}
A typical real-world problem that possesses the quadrature structure of Equation ($\ref{eq:stochastic_opt}$) is $n$-fold cross-validation hyperparameter tuning. The $n$-fold cross-validation performance can be thought of as a Monte Carlo approximate of the true model performance. Given a fixed learning algorithm associated with a set of hyperparameter $x$, let $f(x,w)$ be an approximate model performance trained on $\mathcal{D} \backslash w$ and evaluated on the validation set $w$ where $\mathcal{D}$ denotes the training data set, $w$ denotes a subset of training points sampled from $\mathcal{D}$, and $\mathcal{D} \backslash w$ denotes everything in $\mathcal{D}$ but not in $w$. Increasing the number of folds reduces the variance in the model performance estimate, but it is expensive to evaluate the cross-validation performance for a large value of $n$. Therefore, a class of Bayesian quadrature optimization methods is beneficial in this case in which we actively select both the algorithm's hyperparameters $x_t$ and a fold $w_t$ to evaluate without the need of training the model in all $n$ folds \citep{DBLP:conf/nips/SwerskySA13}. 

However, the standard BQO methods assume the empirical distribution for each fold and are set out to maximize the average $n$-fold values.  In practice, the average $n$-fold value can be a spurious measure for model performance when there is sufficient discrepancy of the model performance across different folds. This scenario fits well into our distributional uncertainty problem in Equation (\ref{eq:stochastic_opt}) where the fold distribution $P_0(w)$ is unknown in practice. 
% However, because in this application $w$ represents a fold indicator while in our problem setup we assume $w$ to live in a continuous space, it is not immediately straightforward to apply our method to this application. We overcome this problem by a simple modeling trick using $1$-hot vector. 
In addition, we use a one-hot $n$-dimensional vector to represent each of the $n$ folds. This offers two main advantages: (i) it allows us to leverage the standard kernel such as $k_{SE}$ on the product space $\mathcal{X} \times \Omega$; (ii) it is able to model different covariance between different pairs of folds. For example, the covariance between fold 1 and fold 3 is not necessary the same as that between fold 8 and fold 10 though the fold indicator difference are the same ($2 = 10 - 8 = 3 -1$ in this example). 

We evaluate this experiment on two common machine learning models using the MNIST dataset \citep{LeCun98}: ElasticNet and Convolutional Neural Network (CNN). For ElasticNet, we tune the $l_1$ and $l_2$ regularization hyperparamters, and use the SGDClassificer implementation from the scikit-learn package \citep{pedregosa2012scikitlearn}. For CNN, we use the standard architecture with 2 convolutional layers.
% published on the official GitHub repository of Tensorflow .\footnote{\url{https://github.com/tensorflow/tensorflow}} 
In CNN, we optimize over three following hyperparamters: the learning rate $l$ and the dropout rates in the first and second pooling layers. We used Adam optimizer \citep{DBLP:journals/corr/KingmaB14} in $20$ epochs with the batch size of $128$. 

\begin{table}
\begin{center}
\begin{tabular}{lccc}
\textbf{Methods}  &\textbf{ElasticNet} &\textbf{CNN} \\
\hline
\hline
% EI-holdout  &  $8.498 \pm 0.027$    & $1.865 \pm 0.010$  & $1.243 \pm 0.041$\\
% TS-holdout  &   $8.488 \pm 0.027$   & $1.891 \pm 0.011$ & $1.210 \pm 0.028$  \\  
% \hline 
MTBO       & $8.576 \pm 0.080$      & $1.712 \pm 0.263$ \\
BQO-EI      & $9.166 \pm 0.433$      & $1.634 \pm 0.157$\\
BQO-TS      & $8.625 \pm 0.116$      & $1.820 \pm 0.227$\\ 
\hline
DRBQO($\rho=0.1$)   & $\pmb{8.450} \pm 0.022$   &  $1.968 \pm 0.310$ \\
DRBQO($\rho=0.3$)   &  $\pmb{8.505} \pm 0.082$    & $\pmb{1.495} \pm 0.106$ \\ 
DRBQO($\rho=0.5$)   & $\pmb{8.515} \pm 0.075$   & $1.869 \pm 0.232$\\ 
DRBQO($\rho=1$)   &  $\pmb{8.526} \pm 0.065$  & $\pmb{1.444} \pm 0.071$ \\ 
DRBQO($\rho=3$)   & $\pmb{8.387} \pm 0.013$ &  $\pmb{1.374} \pm 0.066$ \\ 
DRBQO($\rho=5$)   &  $\pmb{8.380} \pm 0.022$ &  $\pmb{1.321} \pm 0.061$ \\ 
\end{tabular}
\end{center}
\caption{Classification error (\%) of ElasticNet and CNN on the MNIST test set tuned by different algorithms. Each bold number in the DRBQO group denotes the classification error that is smaller than any corresponding number in the baseline group. }
\label{table:cls_err}
\end{table}

In addition to the previous baselines in the synthetic experiment, we also consider the multi-task Bayesian optimization (MTBO) \citep{DBLP:conf/nips/SwerskySA13} baseline for this application. MTBO is a standard method for cross-validation hyperparameter tuning. 
% in which it uses task kernel to model the correlation among the folds.  

In this experiment, we also use $k_{SE}$ kernel defined on the Cartesian product space $\mathcal{X} \times \Omega$ of $x$ and $w$ for all the methods except for MTBO which uses task kernel on the domain of $w$. We initialize $6$ (respectively $9$) initial points and keep these initial points the same for all the algorithms in ElasticNet (respectively CNN). Each of the algorithms are run for $T=60$ (respectively $T=90$) iterations in ElasticNet (respectively CNN). We repeat the experiment $20$ times and report the average and standard deviation values of an evaluation metric. We split the training data into $n=10$ folds and keep these folds the same for all algorithms. We compare DRBQO against the baselines via a practical metric: the classification error in the test set evaluated at the final set of hyperparameters reported by each algorithm at the final step $T$. This metric is a simple but practical measure of the robustness of the hyperparameters over the unknown data distribution $P_0$. The result is reported in Table \ref{table:cls_err}. We observe that DRBQO outperforms the baselines for most of the considered values of $\rho$, especially for large values of $\rho$ (i.e., $\rho \in \{1, 3, 5\}$ in this case).

We present more experimental results for the case of Support Vector Machine (SVM). We use glass and connectionist bench classification datasets from UCI machine learning repository. \footnote{http://archive.ics.uci.edu/ml} The glass dataset contains $214$ samples describing glass properties in $10$ features. The task associated with the glass dataset is to classify an example into one of $7$ classes. The connectionist bench dataset contains $208$ samples each of which has $60$ attributes. The task in the connectionist bench dataset is to classify whether sonar signals bounced off a metal cylinder or a roughly cylindrical rock. Each of the datasets is split into the training and test sets with the ratio of $80:20$. The training set is further split into $n=5$ folds for this experiment. 

Support vector machine (SVM) is a simple machine learning algorithm for classification problems. SVMs with RBF kernels have two hyperparameters: the misclassification trade-off $C$ and the RBF hyperparameter $\gamma$. We tuned these two hyperparameters in this example.  

The performance metric for this experiment is the classification accuracy of SVM in the test set.  We repeat the experiment $30$ times and report the average mean and the $96\%$ confidence interval for each evaluation
metric. The result is presented in Figure \ref{fig:svm}. In this example, DRBQO outperforms the baselines. 

\begin{figure}[h]
    \centering
    \includegraphics[scale=1]{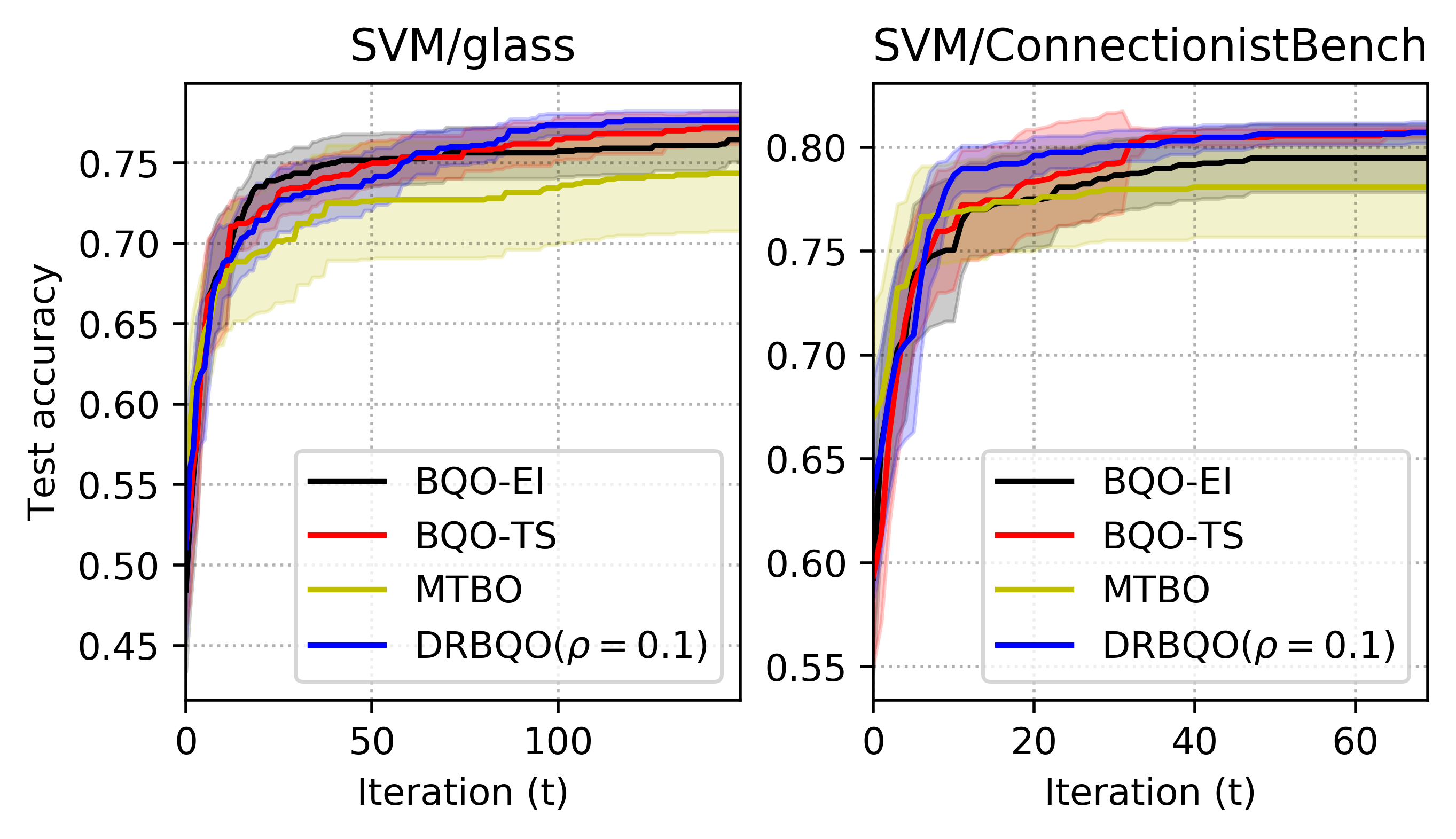}
    \caption{The test classification accuracy of SVM on glass and connectionist bench dataset tuned by DRBQO and the BQO baselines. In this example, we use $n=5$. }
    \label{fig:svm}
\end{figure}

\section{Conclusion}
\label{section:discussion}
% In this work, we have addressed {the problem of distributionally robust Bayesian optimization through Bayesian quadrature framework}. 
In this chapter, we have proposed a posterior sampling based algorithm, namely DRBQO, that efficiently seeks for the robust solutions under the distributional uncertainty in Bayesian quadrature optimization. Compared to the standard BQO algorithms, DRBQO provides a flexibility to control the conservativeness against distributional perturbation. We have demonstrated the empirical effectiveness and characterized the theoretical convergence of DRBQO in sublinear Bayesian regret.

\section{Proofs}
\label{chap3_sec:proof}
In this section, we provide a detailed proof for Theorem \ref{thm:bayesregret} about a sublinear Bayesian regret of the DRBQO algorithm. For simplicity, we focus on the case where the decision space $\mathcal{X}$ and the distributional uncertainty set $\mathcal{P}_{n,\rho}$ are finite. The results can be extended to infinite sets using the discretization trick as in \citep{DBLP:conf/icml/SrinivasKKS10}. We present a series of lemmas that will culminate into the main theorem. 

\textbf{Notations and conventions}. Unless explicitly specified otherwise, we denote  a conditional distribution $P(. | x) \in \mathcal{P}_{n,\rho}, \forall x \in \mathcal{X}$ by $P$, i.e., $P \in  \mathcal{P}_{n,\rho} \times \mathcal{X}$. Recall the definition of the quadrature functional in the main text as
% Given the target function $f$ defined in our problem setup, we further define
\begin{align*}
    g(f,x,P) = \int P(w|x) f(x,w) dw,
\end{align*}
for any $x \in \mathcal{X}$ and $P \in \mathcal{P}_{n,\rho} \times \mathcal{X}$.
Let $x^* \in \operatorname*{arg\,max}_{x \in \mathcal{X}} \min_{P \in \mathcal{P}_{n,\rho}} \mathbb{E}_{P(w)}[f(x,w)] $, and $P^*(.|x) = \argmin_{P \in \mathcal{P}_{n,\rho}}\mathbb{E}_{P(w)}[f(x,w)], \forall x \in \mathcal{X}$. Since $f$ is a stochastic process (a GP in our case), $x^*$ and $P^*$ are also random variables. The DRBQO algorithm $\pi^{DRBQO}$ maps at a time step $t$ the history $H_t = (x_1, w_1, P_1, ..., x_{t-1},w_{t-1}, P_{t-1})$ to a new decision $(x_t,w_t) \in \mathcal{X} \times S_n$ and conditional distribution $P_t \in \mathcal{P}_{n,\rho} \times \mathcal{X}$ as presented in line \ref{alg:line:dro}-\ref{alg:line:w} of Algorithm \ref{pseudo_code:drbqo}. The practical implementation of Algorithm \ref{pseudo_code:drbqo} samples $(x_t, P_t)$ as follows: 
$x_t \in \operatorname*{arg\,max}_{x \in \mathcal{X}} \min_{P \in \mathcal{P}_{n,\rho}} \mathbb{E}_{P(w)}[\tilde{f}_t(x,w)] $, and $P_t(.|x) = \argmin_{P \in \mathcal{P}_{n,\rho}}\mathbb{E}_{P(w)}[\tilde{f}_t(x,w)], \forall x \in \mathcal{X}$ where $\tilde{f}_t$ is a function sample of $f$ at time $t$ {from its posterior GP}.

\begin{lem}
\label{lem:ps_decompose}
For any sequence of deterministic functions $\{ U_t: \mathcal{X} \times \mathcal{P}_{n,\rho} \times \mathcal{X} \rightarrow \mathbb{R} | t \in \mathbb{N} \}$, 
\begin{align*}
    &\text{BayesRegret}(T, \pi^{DRBQO}) \\
    &= \mathbb{E} \sum_{t=1}^T \left[ U_t(x_t, P_t) - g(f, x_t, P_t)  \right] +\mathbb{E} \sum_{t=1}^T \left[ g(f, x^*, P^*) - U_t(x^*, P^*)    \right],
\end{align*}
for all $T \in \mathbb{N}$. 
\end{lem}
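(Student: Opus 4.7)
The plan is to use the classic posterior sampling regret decomposition technique of Russo and Van Roy, appropriately adapted to our distributionally robust setting. Starting from the definition of the Bayesian regret,
\begin{align*}
\text{BayesRegret}(T, \pi^{DRBQO}) = \mathbb{E} \sum_{t=1}^{T} \left[ g(f, x^*, P^*) - g(f, x_t, P_t) \right],
\end{align*}
I would add and subtract $U_t(x^*, P^*)$ and $U_t(x_t, P_t)$ term-by-term to obtain the tautological identity
\begin{align*}
g(f, x^*, P^*) - g(f, x_t, P_t) = \bigl[ g(f, x^*, P^*) - U_t(x^*, P^*) \bigr] + \bigl[ U_t(x^*, P^*) - U_t(x_t, P_t) \bigr] + \bigl[ U_t(x_t, P_t) - g(f, x_t, P_t) \bigr].
\end{align*}
The lemma then reduces to showing that the middle term has zero expectation for each $t$.

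The crux of the argument is the posterior sampling identity: conditional on the history $H_t$, the pair $(x_t, P_t)$ produced by DRBQO has the same distribution as the unknown optimal pair $(x^*, P^*)$. This follows because $(x^*, P^*)$ is a fixed (though possibly set-valued, resolved by some deterministic tie-breaking rule) measurable functional of the random function $f$, namely the outer $\arg\max$ and inner $\arg\min$ of the distributionally robust objective; and by construction in lines \ref{alg:line:ps}--\ref{alg:line:dro} of Algorithm \ref{pseudo_code:drbqo}, the pair $(x_t, P_t)$ is the exact same functional applied to $\tilde f_t$, where $\tilde f_t$ is drawn from the posterior of $f$ given $H_t$. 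Since posterior sampling guarantees $\tilde f_t \mid H_t \stackrel{d}{=} f \mid H_t$, pushing this equality through the functional yields $(x_t, P_t) \mid H_t \stackrel{d}{=} (x^*, P^*) \mid H_t$. Therefore, for any $U_t$ that is $\sigma(H_t)$-measurable (and in particular for any deterministic $U_t$ in the sense of the lemma hypothesis), we have $\mathbb{E}[U_t(x_t, P_t) \mid H_t] = \mathbb{E}[U_t(x^*, P^*) \mid H_t]$, and taking a further expectation gives $\mathbb{E}[U_t(x^*, P^*) - U_t(x_t, P_t)] = 0$.

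Substituting this back into the decomposition and summing over $t = 1, \ldots, T$ yields exactly the statement of the lemma. The main obstacle I anticipate is the justification of the posterior sampling identity at the level of the pair $(x_t, P_t)$: one must argue that the outer-max/inner-min operator is a bona fide measurable mapping so that pushforward under equal-in-distribution inputs preserves equality in distribution, and one must pin down a measurable selection rule (e.g.\ lexicographic tie-breaking on the finite sets $\mathcal{X}$ and $\mathcal{P}_{n,\rho}$) to handle non-uniqueness of the $\arg\max$ and $\arg\min$. Once this measurability and selection issue is handled—which is straightforward because $\mathcal{X}$ and $\mathcal{P}_{n,\rho}$ are finite in our analysis—the decomposition follows by a line of elementary algebra combined with Fubini's theorem to interchange expectations and finite sums.
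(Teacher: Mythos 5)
Your proposal is correct and follows essentially the same route as the paper: both rest on the Russo--Van Roy posterior-sampling identity that $(x_t,P_t)\mid H_t$ and $(x^*,P^*)\mid H_t$ are equal in distribution, so that $\mathbb{E}[U_t(x_t,P_t)\mid H_t]=\mathbb{E}[U_t(x^*,P^*)\mid H_t]$ for $H_t$-measurable $U_t$, which makes the middle term of your three-way decomposition vanish and yields the two-term identity after the tower property. The paper simply performs the swap of $U_t(x^*,P^*)$ for $U_t(x_t,P_t)$ directly inside the conditional expectation rather than writing out the telescoping explicitly, and it does not belabor the measurable-selection point you raise (which, as you note, is immediate since $\mathcal{X}$ and $\mathcal{P}_{n,\rho}$ are taken finite).
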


\begin{proof}[Proof of Lemma \ref{lem:ps_decompose}]
Given $H_t$, $\pi^{DRBQO}$ samples $(x_t, P_t)$ according to the probability they are optimal, i.e., $(x_t, P_t) \sim Pr( x^*, P^* | H_t)$. Thus, conditioned on $H_t$, $(x^*, P^*)$ and $(x_t, P_t)$ are identically distributed. As a result, given a deterministic function $U_t$, we have $\mathbb{E}[U_t(x^*, P^*)] = \mathbb{E}[ U_t(x_t, P_t)]$. Therefore, 
\begin{align*}
% &BayesRegret(T, \pi^{DRBQO})  = \\ 
    &\mathbb{E} \left[ g(f, x^*, P^*) - g(f, x_t, P_t) \right] =  \mathbb{E} \left[ 
    \mathbb{E} \left[ g(f, x^*, P^*) - g(f, x_t, P_t) \right] | H_t
    \right] \\
    &= \mathbb{E} \left[ 
    \mathbb{E} \left[  U_t(x_t, P_t) - g(x_t, P_t) \right] | H_t
    \right]  + \mathbb{E} \left[ 
    \mathbb{E} \left[   g(f, x^*, P^*) - U_t(x^*, P^*)  \right] | H_t
    \right] \\
    &= \mathbb{E} \left[  U_t(x_t, P_t) - g(f, x_t, P_t) \right] 
% \end{align*}
% \begin{align*}
    + \mathbb{E} \left[   g(f, x^*, P^*) - U_t(x^*, P^*)  \right].
\end{align*}
\end{proof}
\begin{lem}
\label{lem:lem2}
Let $X \sim \mathcal{N}(\mu, \sigma^2)$.  
\begin{enumerate}
    \item For all $\beta \geq 0$, we have 
    \begin{align*}
    Pr\{ |X - \mu| > \beta^{1/2} \sigma \} \leq e^{-\beta/2}. 
\end{align*}

\item If $\mu \leq 0$, then 
\begin{align*}
    \mathbb{E}[ \max\{X,0\} ] = \frac{\sigma}{\sqrt{2\pi}} e^{ \frac{-\mu^2}{2 \sigma^2} }.
\end{align*}

\item For all $a \leq b$, we have 
\begin{align*}
    \mathbb{E}[ X | a < X < b] = \mu - \sigma^2 \frac{p(a) - p(b)}{\phi(a) - \phi(b)}, 
\end{align*}
where $p(x)$ and $\phi(x)$ denote the density function and cumulative distribution function of $X$, respectively. 
\end{enumerate}
\end{lem}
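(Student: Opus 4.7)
The plan is to handle each of the three parts separately via standard techniques from Gaussian calculus, since each can be reduced to a one-dimensional integral in closed form.

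For Part 1, I would invoke the Cram\'er--Chernoff method as demonstrated earlier in the chapter (in the proof of the subgaussian tail bound). Since a Gaussian $X\sim\mathcal{N}(\mu,\sigma^2)$ is $\sigma$-subgaussian, it already satisfies $\mathbb{E}[e^{\lambda(X-\mu)}] = e^{\sigma^2\lambda^2/2}$ for all $\lambda\in\mathbb{R}$. Applying Markov's inequality to $e^{\lambda(X-\mu)}$ and optimizing over $\lambda>0$ at $\lambda=\beta^{1/2}/\sigma$ yields $\Pr(X-\mu>\beta^{1/2}\sigma)\leq e^{-\beta/2}$, and symmetrically for the lower tail. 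A union bound over the two tails actually gives $2e^{-\beta/2}$; I expect the stated single-tail factor $e^{-\beta/2}$ is intended either in a one-sided form or via the sharper mills-ratio-type bound $\Pr(|Z|>t)\leq e^{-t^2/2}$ valid for the standard normal. I would simply present the Chernoff version as the cleanest route.

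For Part 2, the plan is direct computation. Writing $\mathbb{E}[\max\{X,0\}]=\int_0^\infty x\,p(x)\,dx$ and substituting $u=(x-\mu)/\sigma$, the integral splits as
\begin{equation*}
\sigma\int_{-\mu/\sigma}^{\infty}\frac{u}{\sqrt{2\pi}}e^{-u^2/2}\,du+\mu\int_{-\mu/\sigma}^{\infty}\frac{1}{\sqrt{2\pi}}e^{-u^2/2}\,du=\frac{\sigma}{\sqrt{2\pi}}e^{-\mu^2/(2\sigma^2)}+\mu\,\Phi(\mu/\sigma).
\end{equation*}
The first term is exactly the claimed quantity (obtained by the antiderivative $-\frac{1}{\sqrt{2\pi}}e^{-u^2/2}$), and since $\mu\leq 0$ gives $\mu\,\Phi(\mu/\sigma)\leq 0$, the whole expression is bounded above by $\frac{\sigma}{\sqrt{2\pi}}e^{-\mu^2/(2\sigma^2)}$. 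I would interpret the ``$=$'' in the statement as ``$\leq$'' in the regime $\mu\leq 0$ (which is how it will be used downstream), and present both the equality decomposition and the resulting upper bound.

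For Part 3, the key identity to exploit is $(x-\mu)p(x)=-\sigma^2 p'(x)$, which follows from differentiating the Gaussian density. Then $\int_a^b x\,p(x)\,dx=\mu\int_a^b p(x)\,dx-\sigma^2\int_a^b p'(x)\,dx=\mu\bigl(\Phi(b)-\Phi(a)\bigr)+\sigma^2\bigl(p(a)-p(b)\bigr)$. Dividing by $\Pr(a<X<b)=\Phi(b)-\Phi(a)$ produces
\begin{equation*}
\mathbb{E}[X\mid a<X<b]=\mu+\sigma^2\frac{p(a)-p(b)}{\Phi(b)-\Phi(a)}=\mu-\sigma^2\frac{p(a)-p(b)}{\Phi(a)-\Phi(b)},
\end{equation*}
which matches the displayed formula. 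No real obstacle arises; the main thing to be careful about is the sign convention in the denominator and verifying the antiderivative identity cleanly. All three parts are classical Gaussian facts, so the proof is essentially bookkeeping rather than requiring any new idea.
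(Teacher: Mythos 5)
Your derivations are correct and, since the paper's own proof of this lemma consists only of the sentence ``These are simple properties of normal distributions,'' your write-up actually supplies what the paper omits. Two of your observations are worth making explicit. First, for Part 2 you are right that the stated equality is not literally true for $\mu<0$: the exact value is $\mathbb{E}[\max\{X,0\}]=\frac{\sigma}{\sqrt{2\pi}}e^{-\mu^2/(2\sigma^2)}+\mu\,\Phi(\mu/\sigma)$ (with $\Phi$ the standard normal CDF), so the displayed quantity is an upper bound, with equality only at $\mu=0$. This is harmless for the paper because the lemma is invoked in the proof of Lemma \ref{lemma:second_term} only to upper-bound $\mathbb{E}[\max\{g-U_t,0\}\mid H_t]$, but the ``$=$'' should be a ``$\leq$''. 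Second, for Part 1 you should commit to the sharper route rather than the Chernoff one: the Chernoff bound plus a union over the two tails yields only $2e^{-\beta/2}$, whereas the stated constant requires the Mills-ratio-type inequality $1-\Phi(t)\leq\frac{1}{2}e^{-t^2/2}$ for $t\geq 0$ (provable by checking that $g(t)=\frac{1}{2}e^{-t^2/2}-(1-\Phi(t))$ vanishes at $t=0$ and $t\to\infty$ and has a single interior maximum). As you present it, your Part 1 argument establishes the bound only up to a factor of $2$; either state that weaker constant (which would still suffice for the downstream regret bound up to constants) or include the sharper inequality. Part 3 is exactly the standard computation via $(x-\mu)p(x)=-\sigma^2 p'(x)$ and matches the paper's sign conventions, where $\phi$ in the lemma statement denotes the CDF.
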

\begin{proof}[Proof of Lemma \ref{lem:lem2}]
These are simple properties of normal distributions. 
\end{proof}

\begin{lem}
\label{lem:cov_inequality}
Given $H_{t}, \forall t \in \mathbb{N}$, let $\sigma_{t}^2(x,w) := C_{t}(x,w; x,w)$  be the variance of $f(x,w)$. Then, for all $P$, all $x$ and for $w^* = \operatorname*{arg\,max}_{w \in S_n} \sigma^2_{t}(x,w)$, we have
\begin{align*}
    \sigma^2_{t}(x, P) = \mathbb{V}[g(f, x, P) | H_t] \leq  \sigma_{t}^2(x, w^*). 
\end{align*}
\end{lem}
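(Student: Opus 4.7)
The plan is to expand the variance of the quadrature functional as a double (finite) sum, bound it via a standard covariance/Cauchy--Schwarz step, and then use the maximality of $w^\ast$ to finish.

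First, I would use that $P(\cdot\mid x)\in\mathcal{P}_{n,\rho}$ is supported on the finite set $S_n$, so
\begin{align*}
    g(f,x,P)=\sum_{w\in S_n}P(w\mid x)\,f(x,w),
\end{align*}
and consequently, by bilinearity of conditional covariance,
\begin{align*}
    \sigma_t^2(x,P)=\sum_{w,w'\in S_n}P(w\mid x)P(w'\mid x)\,C_t(x,w;x,w').
\end{align*}

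Next, I would apply the Cauchy--Schwarz inequality for covariances, giving $C_t(x,w;x,w')\le\sigma_t(x,w)\,\sigma_t(x,w')$, so that
\begin{align*}
    \sigma_t^2(x,P)\le\Bigl(\sum_{w\in S_n}P(w\mid x)\,\sigma_t(x,w)\Bigr)^{2}.
\end{align*}
(An equivalent, slightly cleaner route is to invoke Minkowski's inequality for the conditional $L^2$ norm directly on $g(f,x,P)$ viewed as a convex combination of $f(x,w)$'s.)

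Finally, because $\sigma_t(x,w)\le\sigma_t(x,w^\ast)$ for every $w\in S_n$ by the definition of $w^\ast$, and $\sum_{w}P(w\mid x)=1$, the right-hand side is bounded by $\sigma_t^2(x,w^\ast)$, yielding the claim. There is no real obstacle here: the only thing to be careful about is that the distributional uncertainty set $\mathcal{P}_{n,\rho}$ consists of discrete distributions on $S_n$, which is exactly what justifies the reduction from an integral to a finite sum; the rest is Cauchy--Schwarz plus maximality of $w^\ast$.
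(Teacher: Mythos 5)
Your proof is correct and follows essentially the same route as the paper: expand $\sigma_t^2(x,P)$ as a double sum over $S_n$, bound the cross-covariance terms, and invoke the maximality of $w^\ast$. If anything, your Cauchy--Schwarz step $C_t(x,w;x,w')\le\sigma_t(x,w)\,\sigma_t(x,w')$ is the cleaner and fully rigorous version of the intermediate bound the paper states more loosely.
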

\begin{proof}[Proof of Lemma \ref{lem:cov_inequality}]
It follows from a simple property of posterior covariance that 
\begin{align*}
  \sigma^2_{t}(x, P) &= \sum_{w, w'} P(w|x) P(w'|x) C_{t}(x, w; x, w') \leq \sum_{w, w'} P(w|x) P(w'|x) C_{t}(x, w; x, w) \nonumber \\ 
% \end{align}
% \begin{align}
    & \leq \sum_{w, w'} P(w|x) P_t(w'|x) \sigma_{t-1}^2(x, w^*) \nonumber = \sigma_{t}^2(x, w^*). 
\end{align*}
% where the first inequality follows from the property of posterior covariance. 
\end{proof}

\begin{lem}
\label{lemma:second_term}
If $U_t(x,P) = \mu_{t-1}(x,P) + \sqrt{\beta_t} \sigma_{t-1}(x,P) $ where 
\begin{align*}
    &\mu_{t-1}(x,P) := \int P(w|x) \mu_{t-1} (x,w) dw, \\ 
    &\sigma^2_{t-1}(x,P) := \int \int C_{t-1}(x,w; x,w') P(w|x) P(w'|x) dw dw', 
\end{align*}
and $\beta_t = 2 \log \frac{(t^2 + 1)|\mathcal{X}| |\mathcal{P}_{n,\rho}|}{\sqrt{2\pi}}$, then 
\begin{align*}
    \mathbb{E} \sum_{t=1}^T [g(f, x^*, P^*) - U_t(x^*, P^*)] \leq 1,
\end{align*}
for all $T \in \mathbb{N}$.
\end{lem}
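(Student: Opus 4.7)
The basic idea is to show that $U_t$ acts as a uniform high-probability upper confidence bound on $g(f,\cdot,\cdot)$, so that the slack $g(f,x^*,P^*) - U_t(x^*,P^*)$ is small in expectation. I will first pass to a uniform bound over the finite product $\mathcal{X}\times\mathcal{P}_{n,\rho}$ via the positive-part trick:
\begin{equation*}
g(f,x^*,P^*) - U_t(x^*,P^*) \;\leq\; \sum_{(x,P)\in\mathcal{X}\times\mathcal{P}_{n,\rho}} \bigl[\, g(f,x,P) - U_t(x,P)\,\bigr]_{+},
\end{equation*}
which is valid since $(x^*,P^*)\in\mathcal{X}\times\mathcal{P}_{n,\rho}$ and every other term in the sum is non-negative. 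Taking expectations it then suffices to bound each summand.

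The key observation is that, given $H_t$, the random variable $g(f,x,P)$ is Gaussian. Indeed, since $\mathcal{P}_{n,\rho}$ is supported on the finite set $S_n=\{w_1,\dots,w_n\}$, we may write $g(f,x,P) = \sum_{i=1}^n P(w_i|x)\,f(x,w_i)$, which is a linear combination of the jointly Gaussian values $f(x,w_i)\mid H_t$. Hence $g(f,x,P)\mid H_t \sim \mathcal{N}\bigl(\mu_{t-1}(x,P),\, \sigma_{t-1}^2(x,P)\bigr)$, and therefore $g(f,x,P) - U_t(x,P)\mid H_t \sim \mathcal{N}\bigl(-\sqrt{\beta_t}\,\sigma_{t-1}(x,P),\, \sigma_{t-1}^2(x,P)\bigr)$, a Gaussian with non-positive mean. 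Applying part (2) of Lemma \ref{lem:lem2} gives
\begin{equation*}
\mathbb{E}\bigl[\,[g(f,x,P)-U_t(x,P)]_{+}\,\big|\,H_t\bigr] \;=\; \frac{\sigma_{t-1}(x,P)}{\sqrt{2\pi}}\,\exp\!\left(-\tfrac{\beta_t}{2}\right).
\end{equation*}

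Next I need to control $\sigma_{t-1}(x,P)$ by an absolute constant. Since posterior variances are dominated by prior variances, $\sigma_{t-1}^2(x,P) \leq \sigma_0^2(x,P) = \int\!\!\int P(w|x)P(w'|x)\,k(x,w;x,w')\,dw\,dw'$. Combining the RKHS Cauchy--Schwarz inequality $k(x,w;x,w')\leq \sqrt{k(x,w;x,w)}\sqrt{k(x,w';x,w')}\leq 1$ (which uses the standing assumption $k\leq 1$) with the fact that $P(\cdot|x)$ is a probability measure yields $\sigma_{t-1}(x,P)\leq 1$. Substituting the prescribed choice $\beta_t = 2\log\!\bigl((1+t^2)|\mathcal{X}||\mathcal{P}_{n,\rho}|/\sqrt{2\pi}\bigr)$ gives $\exp(-\beta_t/2) = \sqrt{2\pi}/\bigl((1+t^2)|\mathcal{X}||\mathcal{P}_{n,\rho}|\bigr)$, and hence
\begin{equation*}
\mathbb{E}\bigl[\,[g(f,x,P)-U_t(x,P)]_{+}\bigr] \;\leq\; \frac{1}{(1+t^2)\,|\mathcal{X}|\,|\mathcal{P}_{n,\rho}|}.
\end{equation*}

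Finally I sum over $(x,P)\in\mathcal{X}\times\mathcal{P}_{n,\rho}$ and then over $t=1,\dots,T$: the first summation cancels the factor $|\mathcal{X}||\mathcal{P}_{n,\rho}|$ and leaves $1/(1+t^2)$; the second summation is bounded by $\sum_{t=1}^{\infty}1/(1+t^2) \leq 1$ (this is the step that dictates the precise constants in the definition of $\beta_t$). I expect the main bookkeeping obstacle to be exactly this last step: matching the residual series bound to the target constant $1$ forces the specific form of $\beta_t$ used in the statement, and any slack in the kernel bound $\sigma_{t-1}(x,P)\leq 1$ or in the union bound over $\mathcal{X}\times\mathcal{P}_{n,\rho}$ must be absorbed into $\beta_t$ rather than into the final constant. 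Everything else is routine once the Gaussianity of $g(f,x,P)\mid H_t$ and Lemma \ref{lem:lem2}(2) are in place.
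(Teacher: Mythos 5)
Your proposal is correct and follows essentially the same route as the paper's proof: conditional Gaussianity of $g(f,x,P)$ given $H_t$, Lemma \ref{lem:lem2}(2) for the expected positive part, the uniform bound $\sigma_{t-1}(x,P)\leq 1$, and a union over the finite set $\mathcal{X}\times\mathcal{P}_{n,\rho}$ followed by summation over $t$. The only differences are cosmetic — you bound $\sigma_{t-1}(x,P)\leq 1$ directly from $k\leq 1$ rather than via Lemma \ref{lem:cov_inequality}, and your closing series $\sum_{t\geq 1}(1+t^2)^{-1}\approx 1.08$ in fact slightly exceeds $1$, but this slack is present in the paper's own proof as well and is absorbable by a trivial adjustment of $\beta_t$.
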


\begin{proof}[Proof of Lemma \ref{lemma:second_term}]
The trick is to concentrate on the non-negative terms of the expectation. These non-negative terms can be bounded due to the specific choice of upper confidence bound $U_t$. 

% We first observe that for all $P$, all $x$ and for $w^* = \argmax_{w \in S_n} \sigma^2_{t-1}(x,w)$, we have
% \begin{align}
%     &\sigma^2_{t-1}(x, P) = \sum_{w, w'} P(w|x) P(w'|x) C_{t-1}(x, w; x, w') \nonumber \\
% % \end{align*}
% % \begin{align*}
%     & \leq \sum_{w, w'} P(w|x) P_t(w'|x) C_{t-1}(x, w; x, w) \nonumber
% \end{align}
% \begin{align}
%     & \leq \sum_{w, w'} P(w|x) P_t(w'|x) \sigma_{t-1}^2(x, w^*) \nonumber\\
%     \label{eq:cov_inequality}
%     & = \sigma_{t-1}^2(x, w^*), 
% \end{align}
% where the first inequality follows from the property of posterior covariance and the final inequality follows from the selection of $w_t$ for the highest posterior variance in the DRBQO algorithm (Algorithm 1 in the main text).

Note that for any deterministic conditional distribution $P \in \mathcal{P}_{n,\rho} \times \mathcal{X}$, we have $g(f,x,P) \sim \mathcal{N}(\mu_{t-1}(x,P), \sigma_{t-1}^2(x,P))$, i.e., 
\begin{align*}
    g(f, x,P) - U_t(x,P) \sim \mathcal{N}(-\sqrt{\beta_t} \sigma_{t-1}(x,P),  \sigma_{t-1}^2(x,P) ). 
\end{align*}
It thus follows from Lemma \ref{lem:lem2} that: 
\begin{align*}
    \mathbb{E}[ \max\{  g(f,x,P) - U_t(x,P), 0 \} | H_t]  &= \frac{\sigma_{t-1}(x,P)}{\sqrt{2\pi}} \exp(\frac{-\beta_t}{2}) = \frac{\sigma_{t-1}(x,P)}{(t^2 +1) |\mathcal{X}| |\mathcal{P}_{n,\rho}|}\\
    &\leq \frac{1}{(t^2 +1) |\mathcal{X}| |\mathcal{P}_{n,\rho}|}.
\end{align*}

The final inequality above follows from Lemma \ref{lem:cov_inequality} and from the assumption that $\sigma_0(x,w) \leq 1, \forall x,w$, i.e.,  
\begin{align*}
    \sigma_{t-1}(x,P) \leq \sigma_{t-1}(x,w^*) \leq \sigma_0 (x,w^*) \leq 1, 
\end{align*}
where $w^* = \operatorname*{arg\,max}_{w} C_{t-1}(x,w; x, w)$. 

Therefore, we have 
\begin{align*}
    \mathbb{E} \sum_{t=1}^T [g(f, x^*, P^*) - U_t(x^*, P^*)] &\leq  \mathbb{E} \sum_{t=1}^T \mathbb{E} [ \max \{g(f, x^*, P^*) - U_t(x^*, P^*), 0 \} | H_t] \\ 
    &\leq \mathbb{E} \sum_{t=1}^T \sum_{x \in \mathcal{X}} \sum_{P \in \mathcal{P}_{n,\rho}}  \mathbb{E} [ \max \{g(f, x, P) - U_t(x, P), 0 \}] \\ 
    &\leq \sum_{t=1}^{\infty} \sum_{x \in \mathcal{X}} \sum_{P \in \mathcal{P}_{n,\rho}} \frac{1}{(t^2 +1) |\mathcal{X}| |\mathcal{P}_{n,\rho}|} = 1. 
\end{align*}
\end{proof}

\begin{lem} 
\label{lem:first_term}
{Given} the definition of the maximum information gain $\gamma_T$ as defined in \citep{DBLP:conf/icml/SrinivasKKS10}, we have
\begin{align*}
    \mathbb{E} \sum_{t=1}^T \left[ U_t(x_t, P_t) - g(f,x_t, P_t)  \right]  &\leq \frac{(\sqrt{\beta_T}+B) \sqrt{2\pi}}{|\mathcal{X}| |\mathcal{P}_{n,\rho}|} + 2 \gamma_T \sqrt{(1 + 2\rho)n} (1 + \sigma^{-2})^{-1} \\ 
    &+ 2\sqrt{ T \gamma_T (1 + \sigma^{-2})^{-1} \log \frac{(1+ T^2) |\mathcal{X}| |\mathcal{P}_{n,\rho}|}{ \sqrt{2\pi}}  } ,
\end{align*}
for all $T \in \mathbb{N}$. 
\end{lem}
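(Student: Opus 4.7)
I would start by splitting the integrand into
\[
U_t(x_t,P_t)-g(f,x_t,P_t) \;=\; \underbrace{\bigl(\mu_{t-1}(x_t,P_t)-g(f,x_t,P_t)\bigr)}_{\text{(I)}} \;+\; \underbrace{\sqrt{\beta_t}\,\sigma_{t-1}(x_t,P_t)}_{\text{(II)}},
\]
and control (I) and (II) separately. The key structural fact I would invoke repeatedly is that under posterior sampling $(x_t,P_t)$ is a function of $(H_t,\tilde f_t)$ where $\tilde f_t$ is independent of $f$ given $H_t$; consequently, conditional on $(H_t,x_t,P_t)$ the quadrature $g(f,x_t,P_t)$ is Gaussian with mean $\mu_{t-1}(x_t,P_t)$ and variance $\sigma^2_{t-1}(x_t,P_t)$, and $\sigma_{t-1}(x_t,P_t)\le 1$ since the kernel and $P_t$'s total mass are both bounded by one.

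For term (I), I would introduce the good event $A_t=\{|\mu_{t-1}(x_t,P_t)-g(f,x_t,P_t)|\le \sqrt{\beta_t}\,\sigma_{t-1}(x_t,P_t)\}$. Applying Lemma~\ref{lem:lem2}(1) \emph{conditionally} on $(H_t,x_t,P_t)$ gives $\Pr[A_t^c]\le e^{-\beta_t/2}=\sqrt{2\pi}/((t^2+1)|\mathcal X||\mathcal P_{n,\rho}|)$; note that the conditioning on the realized $(x_t,P_t)$ bypasses a union bound over $\mathcal X\times\mathcal P_{n,\rho}$, which is precisely why a $1/(|\mathcal X||\mathcal P_{n,\rho}|)$ factor survives. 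On $A_t$, (I) is absorbed into twice (II); on $A_t^c$, I use the crude bound $|\mu_{t-1}-g|\le 2B$ available from $\|f\|_k\le B$. Summing the residual $(\sqrt{\beta_t}+B)\Pr[A_t^c]$ against $\sum_{t\ge 1}1/(t^2+1)\le 1$ yields the first summand $(\sqrt{\beta_T}+B)\sqrt{2\pi}/(|\mathcal X||\mathcal P_{n,\rho}|)$ of the claimed bound.

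For term (II), I would use monotonicity of $\beta_t$ to pull out $\sqrt{\beta_T}$, then invoke Lemma~\ref{lem:cov_inequality} together with the DRBQO rule $w_t\in\arg\max_{w\in S_n}\sigma^2_{t-1}(x_t,w)$ (line~\ref{alg:line:w} of Algorithm~\ref{pseudo_code:drbqo}) to obtain $\sigma_{t-1}(x_t,P_t)\le\sigma_{t-1}(x_t,w_t)$. The Srinivas--Kakade--Krause telescoping then furnishes $\sum_{t=1}^T \sigma^2_{t-1}(x_t,w_t)\le 2\gamma_T(1+\sigma^{-2})^{-1}$ via $x\le (1+\sigma^{-2})^{-1}\log(1+\sigma^{-2}x)$ for $x\in[0,1]$, and Cauchy--Schwarz hands me $\sum_t\sigma_{t-1}(x_t,w_t)\le\sqrt{2T\gamma_T(1+\sigma^{-2})^{-1}}$; multiplying by $\sqrt{\beta_T}$ and plugging in $\beta_T$ reproduces the third summand.

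\textbf{Main obstacle.} The hard part will be identifying where the middle summand $2\gamma_T\sqrt{(1+2\rho)n}\,(1+\sigma^{-2})^{-1}$ enters, since the naive route above skips it. I expect it arises from a second, coarser, pointwise bound that exploits the $\chi^2$-constraint: from $|C_{t-1}(x,w;x,w')|\le\sigma_{t-1}(x,w)\sigma_{t-1}(x,w')$ one gets
\[
\sigma_{t-1}(x_t,P_t)\;\le\;\sum_w P_t(w\mid x_t)\,\sigma_{t-1}(x_t,w)\;\le\;\|P_t\|_2\sqrt{\textstyle\sum_w \sigma^2_{t-1}(x_t,w)}\;\le\;\sqrt{(1+2\rho)/n}\cdot\sqrt{n},
\]
because $\|P_t\|_2^2\le(1+2\rho)/n$ on the $\chi^2$-ball and each $\sigma_{t-1}\le 1$. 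Combining this $\sqrt{1+2\rho}$-bound with the telescoped information-gain inequality $\sigma^2_{t-1}(x_t,w_t)\le (1+\sigma^{-2})^{-1}\log(1+\sigma^{-2}\sigma^2_{t-1}(x_t,w_t))$ and summing the log-terms directly into $2\gamma_T$ should produce exactly the $2\gamma_T\sqrt{(1+2\rho)n}(1+\sigma^{-2})^{-1}$ correction; the $\sqrt{n}$ records the number of atoms of $S_n$ entering through $\sum_w\sigma^2_{t-1}(x_t,w)\le n$. Balancing this coarser bound against the sharper Cauchy--Schwarz bound of the preceding paragraph, in a way that accounts for both low- and high-variance regimes of $\sigma_{t-1}(x_t,P_t)$, is the single non-routine step; everything else is an assembly of Gaussian concentration, Lemma~\ref{lem:cov_inequality}, and standard GP information-theoretic inequalities.
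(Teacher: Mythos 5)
Your decomposition into (I) and (II) with the good event $A_t$ is genuinely different from the paper's route (the paper partitions the range of $f(x_t,w)$ per atom into three regions $A_t(w),B_t(w),C_t(w)$, introduces an auxiliary distribution $P^*_t$, and splits the conditional expectation into $J_1+J_2+J_3$), and your key structural observation is correct: since $(x_t,P_t)$ is a function of $\tilde f_t$, which is independent of $f$ given $H_t$, the quadrature $g(f,x_t,P_t)$ conditioned on $(H_t,x_t,P_t)$ is exactly $\mathcal{N}(\mu_{t-1}(x_t,P_t),\sigma^2_{t-1}(x_t,P_t))$. That observation delivers your first and third summands (modulo constant factors of $2$ that do not quite match the statement: on $A_t^c$ your crude bound $|\mu_{t-1}-g|\le 2B$ gives $2B+\sqrt{\beta_t}$ rather than $\sqrt{\beta_t}+B$, and absorbing (I) into twice (II) doubles the leading term relative to the stated $2\sqrt{T\gamma_T(1+\sigma^{-2})^{-1}\log(\cdot)}$).

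The genuine gap is the middle summand, and the mechanism you propose for it does not work. The chain $\sigma_{t-1}(x_t,P_t)\le\|P_t\|_2\sqrt{\sum_w\sigma^2_{t-1}(x_t,w)}\le\sqrt{(1+2\rho)/n}\cdot\sqrt{n}=\sqrt{1+2\rho}$ is a per-round \emph{constant}; summed over $t$ it gives $T\sqrt{1+2\rho}$, and no rearrangement of the information-gain telescoping converts a constant per-round contribution into one proportional to $\gamma_T$ — the telescoping $\sum_{t}\sigma^2_{t-1}(x_t,w_t)\le 2(1+\sigma^{-2})^{-1}\gamma_T$ applies only to sums of \emph{squared} posterior standard deviations at the max-variance atom $w_t$. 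In the paper the middle term comes from a place your split never visits: inside $J_2$, the one-sided deviation $\mathbb{E}_{f\in B_t}\left[\sum_w P_t(w)\left(\mu_{t-1}(x_t,w)-f(x_t,w)\right)\right]$ on the event that $f$ falls below the confidence band. There, Cauchy--Schwarz against $\|P_t\|_2\le\sqrt{(1+2\rho)/n}$, the truncated-Gaussian mean identity of Lemma~\ref{lem:lem2} (which yields a factor $\sigma^2_{t-1}(x_t,w)$ per atom), and $\sum_w\sigma^2_{t-1}(x_t,w)\le n\,\sigma^2_{t-1}(x_t,w_t)$ combine into a per-round term $\sqrt{n(1+2\rho)}\,\sigma^2_{t-1}(x_t,w_t)$, which then telescopes into $2\gamma_T\sqrt{n(1+2\rho)}(1+\sigma^{-2})^{-1}$. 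Without producing a per-round bound proportional to $\sigma^2_{t-1}(x_t,w_t)$, you cannot recover that term. (Incidentally, if you push your conditional-independence observation through the tower property, $\mathbb{E}[\mu_{t-1}(x_t,P_t)-g(f,x_t,P_t)]=0$, so term (I) vanishes in expectation and no middle summand is needed — but that establishes a different, cleaner inequality rather than the one stated, so it does not close the gap in a proof of this lemma as written.)
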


\begin{proof}[Proof of Lemma \ref{lem:first_term}]
Now we bound the first term 
\begin{align*}
    L &:= \mathbb{E} \sum_{t=1}^T \left[ U_t(x_t, P_t) - g(f, x_t, P_t)  \right] = \mathbb{E} \sum_{t=1}^T \mathbb{E} [J(x_t, H_t) | x_t, H_t], 
\end{align*}
where
\begin{align*}
    J(x_t, H_t) = \mathbb{E}[  U_t(x_t, P_t) - g(f, x_t, P_t) | H_t, x_t ]. 
\end{align*}

While the second term of the Bayesian regret of DRBQO can be bounded as in Lemma \ref{lemma:second_term} by adopting the techniques from \citep{DBLP:journals/mor/RussoR14}, bounding $L$ in DRBQO is non-trivial. This is because $P_t(.|x)$ is also a random process on the simplex given $H_t$. Thus, $g(f, x_t, P_t) | H_t$ does not follow a GP as in the standard Quadrature formulae. In addition, we do not have a closed form of $\mathbb{E}[ g(f, x_t, P_t) | H_t ]$. We overcome this difficulty by decomposing $J$ into several terms that can be bounded more easily and leveraging the mild assumptions of $f$ in the problem setup. 

Given $(H_t, x_t)$, we are interested in bounding $J(x_t, H_t)$. The main idea for bounding this term is that we decompose the range $\mathbb{R}$ of the random variable $f(x_t,w), \forall w$ into three disjoint sets: 
\begin{align*}
    A_t(w) &= \bigg \{ f(x_t, w) \bigg| |f(x_t, w) - \mu_{t-1}(x_t, w)| \leq \sqrt{\beta_t} \sigma_{t-1}(x_t, w) \bigg\} \\ 
    B_t(w) &= \bigg \{ f(x_t, w) \bigg|  \mu_{t-1}(x_t, w) - f(x,w) >  \sqrt{\beta_t} \sigma_{t-1}(x_t, w) \bigg\} \\
    C_t(w) &= \bigg \{ f(x_t, w) \bigg|  \mu_{t-1}(x_t, w) - f(x,w) <  -\sqrt{\beta_t} \sigma_{t-1}(x_t, w) \bigg\}, 
\end{align*}
for all $w \in \Omega$. Note that $A_t(w) \cup B_t(w) \cup C_t(w) = \mathbb{R},  \forall w$.  We also denote $\bar{A}_t(w) = \mathbb{R} \backslash A_t(w) = B_t(w) \cup C_t(w), \forall w$. 

Since $f$ is bounded on $A_t$, there exists $P^*_t$ such that
\begin{align*}
    P^*_t(.|x) = \operatorname*{arg\,max}_{P \in \mathcal{P}_{n,\rho}} \{ U_t(x,P) - g(f, x,P)| f \in A_t \}, 
\end{align*}
for all $x \in \mathcal{X}$. 

Using the equation above, we decompose $J(x_t, H_t)$ as
\begin{align*}
    &J(x_t, H_t) = \mathbb{E}[  U_t(x_t, P_t) - g(f, x_t, P_t)| H_t, x_t] \\
    &=  \mathbb{E}_{f \in A_t}[  U_t(x_t, P_t) - g(f, x_t, P_t)| H_t, x_t] + \mathbb{E}_{f \in \bar{A}_t}[  U_t(x_t, P_t) - g(f, x_t, P_t)| H_t, x_t] \\
    &\leq \mathbb{E}_{f \in A_t}[  U_t(x_t, P^*_t) - g(f, x_t, P^*_t)| H_t, x_t] 
% \end{align*}
% \begin{align*}
    + \mathbb{E}_{f \in \bar{A}_t}[  U_t(x_t, P_t) - g(f, x_t, P_t)| H_t, x_t] \\
    &= \mathbb{E}[  U_t(x_t, P^*_t) - g(f, x_t, P^*_t)| H_t, x_t] + \mathbb{E}_{f \in \bar{A}_t}[  U_t(x_t, P_t) - g(f, x_t, P_t)| H_t, x_t] \\
    &- \mathbb{E}_{f \in \bar{A}_t}[  U_t(x_t, P^*_t) - g(f, x_t, P^*_t)| H_t, x_t] \\ 
    &= J_1 + J_2 + J_3,
\end{align*}
where 
\begin{align*}
    J_1 &= \mathbb{E}[  U_t(x_t, P^*_t) - g(f, x_t, P^*_t)| H_t, x_t], \\
    J_2 &= \mathbb{E}_{f \in \bar{A}_t}[  U_t(x_t, P_t) - g(f, x_t, P_t)| H_t, x_t], \\ 
    J_3 &= \mathbb{E}_{f \in \bar{A}_t}[  g(f, x_t, P^*_t) - U_t(x_t, P^*_t)| H_t, x_t]. 
\end{align*}

It follows from Lemma \ref{lem:cov_inequality} and from the selection of $w_t$ for the highest posterior variance in the DRBQO algorithm (Algorithm \ref{pseudo_code:drbqo}) that for all $P$, we have
\begin{align*}
    \sigma^2_{t-1}(x_t, P) &= \sum_{w, w'} P(w|x) P(w'|x) C_{t-1}(x_t, w; x_t, w') 
% % \end{align*}
% % \begin{align*}
%     & \leq \sum_{w, w'} P(w|x) P_t(w'|x) C_{t-1}(x_t, w; x_t, w) 
% \end{align*}
% \begin{align*}
%     & \leq \sum_{w, w'} P(w|x) P_t(w'|x) \sigma_{t-1}^2(x_t, w_t) \\
    \leq \sigma_{t-1}^2(x_t, w_t). 
\end{align*}
% where the first inequality follows from the property of posterior covariance and the final inequality follows from the selection of $w_t$ for the highest posterior variance in the DRBQO algorithm (Algorithm 1 in the main text).  
Note that given $(H_t, x_t)$, $w_t$ is deterministic.  

For $J_1$, we have
\begin{align*}
    J_1 &= \mathbb{E}[  U_t(x_t, P^*_t) - g(f, x_t, P^*_t)| H_t, x_t] \\ 
    &= U_t(x_t, P^*_t) - \mathbb{E}[ g(f, x_t, P^*_t)| H_t, x_t] \\ 
    &= U_t(x_t, P^*_t) - \mu_{t-1}(x_t, P^*_t) \\ 
    &= \sqrt{\beta_t} \sigma_{t-1}(x_t, P^*_t) \\ 
    &\leq \sqrt{\beta_t} \sigma_{t-1}(x_t, w_t).
\end{align*}

For $J_2$, we have
\begin{align*}
    J_2 &= \mathbb{E}_{f \in \bar{A}_t}[  U_t(x_t, P_t) - g(f, x_t, P_t)| H_t, x_t] \\ 
    &=  \mathbb{E}_{f \in \bar{A}_t} [ \sqrt{\beta_t} \sigma_{t-1}(x_t, P_t)| H_t, x_t] +  \mathbb{E}_{f \in B_t} [ \sum_{w} P_t(w) (\mu_{t-1}(x_t, w) - f(x_t, w) | H_t, x_t] \\ 
    &+  \mathbb{E}_{f \in C_t} [ \sum_{w} P_t(w) (\mu_{t-1}(x_t, w) - f(x_t, w)   | H_t, x_t] \\ 
    &\leq \mathbb{E}_{f \in \bar{A}_t}[ \sqrt{\beta_t} \sigma_{t-1}(x_t, w_t) ] +  \mathbb{E}_{f \in B_t} [ \sum_{w} P_t(w) (\mu_{t-1}(x_t, w) - f(x_t, w)    | H_t, x_t] \\ 
    &\leq \sqrt{\beta_t} \sigma_{t-1}(x_t, w_t) e^{-\beta_t/2} +  \mathbb{E}_{f \in B_t} [ \sum_{w} P_t(w) (\mu_{t-1}(x_t, w) - f(x_t, w) | H_t, x_t] \\
    &\leq \sqrt{\beta_t} \sigma_{t-1}(x_t, w_t) e^{-\beta_t/2} \\
% \end{align*}
% \begin{align*}
    &+ \mathbb{E}_{f \in B_t} \sqrt{\sum_{w}  (\mu_{t-1}(x_t, w) - f(x_t, w))^2 \sum_{w} P_t^2(w)} \\ 
    &\leq \sqrt{\beta_t} \sigma_{t-1}(x_t, w_t) e^{-\beta_t/2} + \mathbb{E}_{f \in B_t} \sqrt{\sum_{w}  (\mu_{t-1}(x_t, w) - f(x_t, w))^2 \frac{1+2\rho}{n}} \\ 
    &\leq \sqrt{\beta_t} \sigma_{t-1}(x_t, w_t) e^{-\beta_t/2} + \sqrt{\frac{1+2\rho}{n}} \mathbb{E}_{f \in B_t} \sum_{w}  (\mu_{t-1}(x_t, w) - f(x_t, w))  \\ 
    &= \sqrt{\beta_t} \sigma_{t-1}(x_t, w_t) e^{-\beta_t/2} + \sqrt{\frac{1+2\rho}{n}} \sum_{w} (\mu_{t-1}(x_t, w) - \mathbb{E}_{f \in B_t}[f(x_t, w)] )  \\
    &= \sqrt{\beta_t} \sigma_{t-1}(x_t, w_t) e^{-\beta_t/2} + \sqrt{\frac{1+2\rho}{n}} \sum_{w} \sigma^2_{t-1}(a_t,w)  \kappa(x_t, w)  \\
    &\leq \sqrt{\beta_t} \sigma_{t-1}(x_t, w_t) e^{-\beta_t/2} + \sqrt{\frac{1+2\rho}{n}} \sum_{w} \sigma^2_{t-1}(a_t,w) \\ 
    &\leq \sqrt{\beta_t} \sigma_{t-1}(x_t, w_t) e^{-\beta_t/2} + \sqrt{n(1+2\rho)} \sigma^2_{t-1}(a_t,w_t),  \\ 
\end{align*}
where 
\begin{align*}
    \kappa(x_t, w) := \frac{p(\mu_{t_1}(x_t,w) - \sqrt{\beta_t}  \sigma_{t-1}(x_t, w) )}{\phi(\mu_{t_1}(x_t,w) - \sqrt{\beta_t}  \sigma_{t-1}(x_t, w) )} \leq 1,
\end{align*}
and $p(.)$ and $\phi(.)$ denote the density function and the cumulative distribution function of the Gaussian distribution $\mathcal{N}(\mu_{t-1}(x_t, w), \sigma^2_{t-1}(x_t, w)), \forall w$. Here, the third inequality follows from the Cauchy-Schwartz inequality; the fourth inequality follows from the bound of the $\chi^2$ ball on the distributions in it; the fifth inequality follows from that fact that $\mu_{t-1}(x,w) - f(x,w) \geq \sqrt{\beta_t} \sigma_{t-1}(x,w) \geq 0$; and the final equation follows from Lemma \ref{lem:lem2}.

For $J_3$, we have 
\begin{align*}
    J_3 &= \mathbb{E}_{f \in \bar{A}_t}[  g(f, x_t, P^*_t) - U_t(x_t, P^*_t)| H_t, x_t] \\ 
    &= \mathbb{E}_{f \in \bar{A}_t}[  g(f, x_t, P^*_t) | H_t, x_t] + \mathbb{E}_{f \in \bar{A}_t} [ -U_t(x_t, P^*_t)] \\ 
    & = \mathbb{E}_{f \in \bar{A}_t}[ -U_t(x_t, P^*_t)] \\ 
    &=  \mathbb{E}_{f \in \bar{A}_t} [- \mu_{t-1}(x_t, P^*_t) - \sqrt{\beta_t} \sigma_{t-1}(x_t, P^*_t) ]\\ 
    &\leq \mathbb{E}_{f \in \bar{A}_t} [- \mu_{t-1}(x_t, P^*_t)]\\
% \end{align*}
% \begin{align*}
    & \leq \mathbb{E}_{f \in \bar{A}_t} [B] \\
    & \leq B e^{-\beta_t/2}. 
\end{align*}

Here, the second equation follows from the property that $\mathbb{E}_{f\in\bar{A}_t}[f(x_t,w)] = 0$ since $f(x_t,w) \sim \mathcal{N}(\mu_{t-1}(x_t, w), \sigma^2_{t-1}(x_t,w)), \forall w$, and $\bar{A}_t(w)$ is a symmetric region in $\mathbb{R}$ with respect to (but not including) the line $x = \mu_{t-1}(x_t,w), \forall w$; the first inequality follows the non-negativity of the posterior variance $\sigma_{t-1}(x_t, P^*_t)$; the second inequality follows from that the posterior mean $\mu_{t-1}(x, w)$ of a GP is in the RKHS associated with kernel $k$ of the GP, thus is bounded above by $B$ by the mild assumption in the problem setup; and the final inequality follows from Lemma \ref{lem:lem2}.

Combining these results, we can finally bound the first term of the Bayesian regret of DRBQO,
\begin{align*}
    L &= \mathbb{E} \sum_{t=1}^T \mathbb{E} [J(x_t, H_t) | x_t, H_t] \\
    &\leq \mathbb{E} \sum_{t=1}^T \sqrt{\beta_t} \sigma_{t-1}(x_t, w_t) + \mathbb{E} \sum_{t=1}^T B e^{-\beta_t /2 } + \mathbb{E} \sum_{t=1}^T \sqrt{\beta_t} \sigma_{t-1}(x_t, w_t) e^{-\beta_t/2} \\
    &+ \mathbb{E} \sum_{t=1}^T \sqrt{n(1+2\rho)} \sigma^2_{t-1}(a_t,w_t) \\
    &\leq \mathbb{E} \sqrt{T \beta_T} \sqrt{\sum_{t=1}^T \sigma^2_{t-1}(x_t, w_t)} 
    + (B+\sqrt{\beta_T})\sum_{t=1}^{\infty} \frac{\sqrt{2\pi}}{(1+t^2) |\mathcal{X}| |\mathcal{P}_{n,\rho}|} \\
% \end{align*}
% \begin{align*}
    &+  \sqrt{n(1+2\rho)} \mathbb{E} \sum_{t=1}^T \sigma^2_{t-1}(a_t,w_t) \\
    &\leq \sqrt{T \beta_T} \sqrt{2(1+\sigma^{-2})^{-1} \gamma_T } + \frac{(\sqrt{\beta_T}+B) \sqrt{2\pi}}{|\mathcal{X}| |\mathcal{P}_{n,\rho}|} + \sqrt{n(1 + 2\rho)} 2(1+\sigma^{-2})^{-1} \gamma_T, 
\end{align*}
where $\gamma_T$ is the maximum information gain defined in \citep{DBLP:conf/icml/SrinivasKKS10}, and we also use the following inequality of the maximum information gain 
\begin{align*}
    \sum_{t=1}^T \sigma_{t-1}^2(x_t, w_t) \leq 2(1 + \sigma^{-2})^{-1} \gamma_T.  
\end{align*}
\end{proof}

\begin{proof}[Proof of Theorem \ref{thm:bayesregret}]
Theorem \ref{thm:bayesregret} is a direct consequence of Lemma \ref{lem:ps_decompose}, Lemma \ref{lemma:second_term} and Lemma \ref{lem:first_term}. 
\end{proof}

\newpage 

\chapter{Distributional Reinforcement Learning via Moment Matching \label{chap:four}} 
% In this chapter, we consider the distributional RL problem where we aim at learning the entire distribution, instead of only the expected value, of the total return. In particular, we propose a novel framework for distributional RL using the idea from statistical hypothesis testing, namely Distributional RL via Moment Matching. Our proposed framework is the first to get rid of the predefined statistic curse shared by all the predominant distributional reinforcement learning methods. We provide a scalable algorithm for our framework that is built upon deep Q-network and achieves a new state-of-the-art performance in the Atari game benchmark when compared to the compatible distributional RL methods. Our model is also orthogonal to the recent modelling improvements in distributional RL and can be readily incorporated into these models. This chapter is largely based on our AAAI'21 work \citep{nguyen2020distributional}. The official code for this work is available at \url{https://github.com/thanhnguyentang/mmdrl}.
In this chapter, we consider the second challenge of this thesis about scalable distributional learning in RL. For this, we design a novel theoretically grounded method for distributional RL that eschews the curse of predefined statistics, effectively learns the return distribution in deep RL setting, and is orthogonal to the recent modeling improvements in distributional RL. In particular, we consider the problem of learning a set of probability distributions from the empirical Bellman dynamics in distributional RL, a class of state-of-the-art methods that estimate the distribution, as opposed to only the expectation of the total return. We formulate a method that learns a finite set of statistics from each return distribution via neural networks, as in the distributional RL literature. Existing distributional RL methods however constrain the learned statistics to \textit{predefined} functional forms of the return distribution which is both restrictive in representation and difficult in maintaining the predefined statistics. Instead, we learn \textit{unrestricted} statistics, i.e., deterministic (pseudo-)samples, of the return distribution by leveraging a technique from hypothesis testing known as maximum mean discrepancy (MMD), which leads to a simpler objective amenable to backpropagation.  Our method can be interpreted as implicitly matching all orders of moments between a return distribution and its Bellman target. We establish sufficient conditions for the contraction of the distributional Bellman operator and provide finite-sample analysis for the deterministic samples in distribution approximation. Experiments on the suite of Atari games show that our method outperforms the distributional RL baselines and sets a new record in the Atari games for non-distributed agents. Our framework is also orthogonal to the recent modelling improvements in distributional RL and can be readily incorporated into these models. This chapter is largely based on our AAAI'21 work \citep{Nguyen-Tang_Gupta_Venkatesh_2021}. Our implementation for this work is available at \url{https://github.com/thanhnguyentang/mmdrl}.

\section{Introduction}

A fundamental aspect in reinforcement learning (RL) is the value of an action in a state which is formulated as the expected value of the \textit{return}, i.e., the expected value of the discounted sum of rewards when the agent follows a policy starting in that state and executes that action \cite{sutton1998introduction}.  Learning this expected action-value via Bellman's equation \citep{Bellman1957} is central to value-based RL such as temporal-difference (TD) learning \citep{DBLP:journals/ml/Sutton88}, SARSA \citep{Rummery94on-lineq-learning}, and Q-learning \citep{Watkins92q-learning}. Recently, however, approaches known as \textit{distributional} RL that aim at learning the distribution of the return have shown to be highly effective in practice \citep{DBLP:conf/uai/MorimuraSKHT10,DBLP:conf/icml/MorimuraSKHT10,DBLP:conf/icml/BellemareDM17,DBLP:conf/aaai/DabneyRBM18,DBLP:conf/icml/DabneyOSM18,yang2019fully}.

Despite many algorithmic variants with impressive practical performance \citep{DBLP:conf/icml/BellemareDM17,DBLP:conf/aaai/DabneyRBM18,DBLP:conf/icml/DabneyOSM18,yang2019fully}, they all share the same characteristic that they explicitly learn a set of statistics of \textit{predefined} functional forms to approximate a return distribution. Using predefined statistics can limit the learning due to the statistic constraints it imposes and the difficulty to maintain such predefined statistics. In this chapter, we propose to address these limitations by instead learning a set of \textit{unrestricted} statistics, i.e., deterministic (pseudo-)samples, of a return distribution that can be evolved into any functional form. We observe that the deterministic samples can be deterministically learned to simulate a return distribution by utilizing an idea from statistical hypothesis testing known as maximum mean discrepancy (MMD). This novel perspective requires a careful design of algorithm and a further understanding of distributional RL associated with MMD. 

Leveraging this perspective, we are able to provide a novel algorithm to eschew the predefined statistic limitations in distributional RL and give theoretical understanding of distributional RL within this perspective. Our approach is also conceptually amenable for natural extension along the lines of recent modelling improvements to distributional RL brought by Implicit Quantile Networks (IQN) \citep{DBLP:conf/icml/DabneyOSM18}, and Fully parameterized Quantile Function (FQF) \citep{yang2019fully}.
% Our approach is also orthogonal to the recent modelling improvements to distributional RL that an extension from such modelling improvements to further improve modelling capacity in our framework is conceptually natural. 
Our key contributions in this chapter are 

\begin{enumerate}
    \item We provide a novel approach to distributional RL using pseudo-samples via MMD that addresses the limitations in the existing distributional RL;

    \item We provide theoretical understanding of distributional RL within our framework, specifically the contraction property of the distributional Bellman operator and the non-asymptotic convergence of the approximate distribution from deterministic samples; 
    
    \item We demonstrate the scalability and the practical effectiveness of our framework in both tabular RL and large-scale experiments where 
    our method outperforms the standard distributional RL methods and even establishes a new record in the Atari games for non-distributed agents. 
    % we can achieve a new state of the art with a simple modelling choice. 
\end{enumerate}

% \noindent \textbf{Outline}. After carefully reviewing relevant background and related works, and discussing their limitations, we present our novel algorithmic approach to address these issues followed by theoretical analysis. We then present the experiments to confirm the effectiveness of our approach empirically and conclude our work. 

% The paper is organized as follows. We first carefully review relevant background material and related works, and their limitations. We then propose to overcome the limitations by providing a novel algorithm followed by theoretical analysis. The experiments for tabular case and large-scale Atari games confirm the effectiveness of our approach empirically which are followed by a conclusion.

\section{Background and Related Work}

\subsection*{Expected RL}
In a standard RL setting, an agent interacts with an environment via a Markov Decision Process $(\mathcal{S}, \mathcal{A}, R, P, \gamma)$ \citep{puterman2014markov} where $\mathcal{S}$ and $\mathcal{A}$ denote state and action spaces, resp., $\mathcal{R}$ the reward measure, $P(\cdot| s,a)$ the transition kernel measure, and $\gamma \in [0,1)$ a discount factor. A policy $\pi(\cdot|s)$ maps a state to a distribution over the action space. 

Given a policy $\pi$, the discounted sum of future rewards following policy $\pi$ is the random variable 
\begin{align}
    Z^{\pi}(s,a) = \sum_{t=0}^{\infty} \gamma^{t} R(s_t, a_t),
    \label{eq:return_rv}
\end{align}
where $s_0 = s, a_0 = a, s_t \sim P(\cdot | s_{t-1}, a_{t-1})$, $a_t \sim \pi(\cdot | s_t)$, and $R(s_t,a_t) \sim \mathcal{R}(\cdot|s_t, a_t)$. The goal in expected RL is to find an optimal policy $\pi^*$ that maximizes the action-value function $Q^{\pi}(s,a) := \mathbb{E}[Z^{\pi}(s,a)]$. A common approach is to find the unique fixed point $Q^{*} = Q^{\pi^{*}}$ of the Bellman optimality operator \citep{Bellman1957} $T: \mathbb{R}^{\mathcal{S} \times \mathcal{A}} \rightarrow \mathbb{R}^{\mathcal{S} \times \mathcal{A}}$ defined by 
\begin{align*}
    T Q(s,a) := \mathbb{E}[R(s,a)] + \gamma \mathbb{E}_{P} [\max_{a'} Q(s', a')], \forall (s,a).
\end{align*}
A standard approach to this end is Q-learning \citep{Watkins92q-learning} which maintains an estimate $Q_{\theta}$ of the optimal action-value function $Q^{*}$ and iteratively improves the estimation via the Bellman backup 
\begin{align*}
    Q_{\theta}(s,a) \leftarrow \mathbb{E}[R(s,a)] + \gamma \mathbb{E}_{P} [\max_{a'} Q_{\theta}(s', a')].
\end{align*}
Deep Q-Network (DQN) \citep{mnih2015human} achieves human-level performance on the Atari benchmark by leveraging a convolutional neural network to represent $Q_{\theta}$ while using a replay buffer and a target network to update $Q_{\theta}$.

\subsection*{Additional Notations} 
Let $\mathcal{X} \subseteq \mathbb{R}^d$ be an open set. Let $\mathcal{P}(\mathcal{X})$ be the set of Borel probability measures on $\mathcal{X}$. Let $\mathcal{P}(\mathcal{X})^{\mathcal{S} \times \mathcal{A}}$ be the Cartesian product of $\mathcal{P}(\mathcal{X})$ indexed by $\mathcal{S} \times \mathcal{A}$. For any $\alpha \geq 0$, let $\mathcal{P}_{\alpha}(\mathcal{X}) := \{p \in \mathcal{P}(\mathcal{X}): \int_{\mathcal{X}} \|x\|^{\alpha} p(dx) < \infty  \}$. When $d=1$, let $m_n(p) := \int_{\mathcal{X}} x^n p(dx)$ be the $n$-th order moment of a distribution $p \in \mathcal{P}(\mathcal{X})$, and let  
\begin{align*}
    \mathcal{P}_{*}(\mathcal{X}) = \bigg\{p \in \mathcal{P}(\mathcal{X}):  \limsup_{n \rightarrow \infty} \frac{|m_n(p)|^{1/n}}{n} =0 \bigg\}. 
\end{align*}
Note that if $\mathcal{X}$ is a bounded domain in $\mathbb{R}$, then $\mathcal{P}_{*}(\mathcal{X}) = \mathcal{P}(\mathcal{X})$. Denote by $\delta_z$ the Dirac measure, i.e., the point mass, at $z$. Denote by $\Delta_n$ the $n$-dimensional simplex. 

% The action-value function estimate $Q_{\theta}$ can be represented by e.g. a neural network and trained by minimizing the squared temporal difference (TD) error $\delta_t^2 = \left(r_t + \gamma \max_{a'} Q_{\theta}(s_{t+1}, a') - Q_{\theta}(s_t, a_t) \right)^2$
% \begin{align*}
%     \delta_t^2 = \left(r_t + \gamma \max_{a'} Q_{\theta}(x_{t+1}, a') - Q_{\theta}(x_t, a_t) \right)^2
% \end{align*}
% over a transition sample $(s_t, a_t, r_t, s_{t+1})$ while following $\epsilon$-greedy policy over $Q_{\theta}$.  

\subsection*{Distributional RL}
Instead of estimating only the expectation $Q^{\pi}$ of $Z^{\pi}$, distributional RL methods \citep{DBLP:conf/icml/BellemareDM17,DBLP:conf/aaai/DabneyRBM18,DBLP:conf/icml/DabneyOSM18,DBLP:conf/aistats/RowlandBDMT18,yang2019fully} explicitly estimate the return distribution $\mu^{\pi}=\text{law}(Z^{\pi})$ as an auxiliary task. Empirically, this auxiliary task has been shown to significantly improve the performance in the Atari benchmark. Theoretically, in the policy evaluation setting, the distributional version of the Bellman operator is a contraction in the $p$-Wasserstein metric \citep{DBLP:conf/icml/BellemareDM17} and Cr\'amer distance \citep{DBLP:conf/aistats/RowlandBDMT18} (but not in total variation distance \citep{Chung1987DiscountedMD}, Kullback-Leibler divergence and Komogorov-Smirnov distance \citep{DBLP:conf/icml/BellemareDM17}). The contraction implies the uniqueness of the fixed point of the distributional Bellman operator. In control settings with tabular function approximations, distributional RL has a well-behaved asymptotic convergence in Cr\'amer distance when the return distributions are parameterized by categorical distributions \citep{DBLP:conf/aistats/RowlandBDMT18}. \citet{bellemare2019distributional} establish the asymptotic convergence of distributional RL in policy evaluation in linear function approximations. \citet{DBLP:conf/aaai/LyleBC19} examine behavioural differences between distributional RL and  expected RL, aligning the success of the former with non-linear function approximations. 

\subsubsection*{Categorical Distributional RL (CDRL)}
CDRL \citep{DBLP:conf/icml/BellemareDM17} approximates a distribution $\eta$ by a categorical distribution $\hat{\eta} = \sum_{i=1}^N \theta_i \delta_{z_i}$ where $z_1 \leq z_2 \leq ... \leq z_N $ is a set of fixed supports and $\{\theta_i\}_{i=1}^N$ are learnable probabilities. The learnable probabilities $\{\theta_i\}_{i=1}^N$ are found in such way that $\hat{\eta}$ is a projection of $\eta$ onto $\{ \sum_{i=1}^N p_i \delta_{z_i}: \{p_i\}_{i=1}^N \in \Delta_N \}$ w.r.t. the Cr\'amer distance \citep{DBLP:conf/aistats/RowlandBDMT18}. In practice, C51 \citep{DBLP:conf/icml/BellemareDM17}, an instance of CDRL with $N=51$, has shown to perform favorably in Atari games. 

\subsubsection*{Quantile Regression Distributional RL (QRDRL)}
QRDRL \citep{DBLP:conf/aaai/DabneyRBM18} approximates a distribution $\eta$ by a mixture of Diracs $\hat{\eta} = \frac{1}{N} \sum_{i=1}^N \delta_{\theta_i}$ where $\{\theta_i\}_{i=1}^N$ are learnable in such a way that $\hat{\eta}$ is a projection of $\eta$ on $\{  \frac{1}{N} \sum_{i=1}^N \delta_{z_i}: \{z_i\}_{i=1}^N \in \mathbb{R}^N \}$ w.r.t. to the 1-Wasserstein distance. Consequently, $\theta_i = F_{\eta}^{-1}( \frac{2i-1}{2N} )$ where $F_{\eta}^{-1}$ is the inverse cumulative distribution function of $\eta$. Since the quantile values $\{F_{\eta}^{-1}( \frac{2i-1}{2N})\}$ at the fixed quantiles $\{\frac{2i-1}{2N}\}$ is a minimizer of an asymmetric quantile loss from quantile regression literature (thus the name QRDRL) and the quantile loss is compatible with stochastic gradient descent (SGD), the quantile loss is used for QRDRL in practice. QR-DQN-1 \citep{DBLP:conf/aaai/DabneyRBM18}, an instance of QRDRL with Huber loss, performs favorably empirically in Atari games. 

\subsubsection*{Implicit Distributional RL} 
Some recent distributional RL methods have made modelling improvements to QRDRL. Two typical improvements are from Implicit Quantile Networks (IQN) \citep{DBLP:conf/icml/DabneyOSM18}, and Fully parameterized Quantile Function (FQF) \citep{yang2019fully}. IQN uses implicit models to represent the quantile values $\{\theta_i\}$ in QRDRL, i.e., instead of being represented by fixed network outputs, $\{\theta_i\}$ are the outputs of a differentiable function (e.g., neural networks) on the samples from a base sampling distribution (e.g., uniform). FQF further improves IQN by optimizing the locations of the base samples for IQN, instead of using random base samples as in IQN, i.e., both quantiles and quantile values are learnable in FQF. 

% Since these modeling improvements are orthogonal to the predefined statistic principle (presented below) in CDRL and QRDRL, we consider CDRL and QRDRL as the fundamental distributional RL methods which we will directly improve upon in our work. In other words, the modeling improvements from IQN/FQF also straightforwardly apply to our work similar to the way they apply to CDRL/QRDRL. 

% However, the quantitative question of how much distributional RL improves against standard RL, e.g. in sample efficiency, remains open. 

\subsection*{Predefined Statistic Principle}
Formally, a statistic is any functional $\zeta: \mathcal{P}(\mathcal{X}) \rightarrow \mathbb{R}$ that maps a distribution $p \in \mathcal{P}(\mathcal{X})$ to a scalar $\zeta(p)$, e.g., the expectation $\zeta(p) = \int_{\mathcal{X}} x p(dx)$ is a common statistic in RL. Here, we formally refer to a \textit{predefined statistic} as the one whose functional form is specified before the statistic is learned. In contrast, an \textit{unrestricted statistic} does not subscribe to any specific functional form (e.g., the median of a distribution $\eta$ is a predefined statistic as its functional form is predefined via $F^{-1}_{\eta}(\frac{1}{2})$ while any empirical sample $z \sim \eta$ can be considered an unrestricted statistic of $\eta$). 

Though CDRL and QRDRL are two different variants of distributional RL methodology, they share a unifying characteristic that they both explicitly learn a finite set of predefined statistics, i.e., statistics of \textit{predefined} functional forms \citep{DBLP:conf/icml/RowlandDKMBD19}. We refer to this as predefined statistic principle. This is clear for QRDRL as the statistics to be learned about a distribution $\eta$ are  $\{\zeta_1, ..., \zeta_N\}$ where 
\begin{align*}
    \zeta_i(\eta) := F^{-1}_{\eta}(\frac{2i-1}{N}), \forall i \in \{1,...,N\}.
\end{align*}
It is a bit more subtle for CDRL. It can be shown in \citep{DBLP:conf/icml/RowlandDKMBD19} that CDRL is equivalent to learning the statistics $\{\zeta_1, ..., \zeta_{N-1}\}$ where 
\begin{align*}
    \zeta_i(\eta) := \mathbb{E}_{Z \sim \eta} \left[ 1_{\{Z < z_i\}} + 1_{\{z_i \leq Z < z_{i+1}\}} \frac{z_{i+1}-Z}{z_{i+1} -z_i} \right], \forall i. 
\end{align*}

Learning predefined statistics as in CDRL and QRDRL however can suffer two limitations in (i) statistic representation and (ii) difficulty in maintaining the predefined statistics. Regarding (i), given the same fixed budget of $N$ statistics to approximate a return distribution $\eta$, CDRL restrictively associates the statistic budget to $N$ fixed supports $\{z_i\}_{i=1}^N$ while QRDRL constrains the budget to $N$ quantile values at specific quantiles. Instead, the statistic budget should be freely learned into any form as long as it could simulate the target distribution $\eta$ sensibly.  Regarding (ii), the fixed supports in CDRL require a highly involved projection step to be able to use KL divergence as the Bellman backup changes the distribution supports; QRDRL requires that the statistics must satisfy the constraints for valid quantile values at specific quantiles, e.g., the statistics to be learned are order statistics. In fact, QR-DQN  \citep{DBLP:conf/aaai/DabneyRBM18}, a typical instance of QRDRL, implicitly maintains the order statistics via an asymmetric quantile loss but still does not guarantee the monotonicity of the obtained quantile estimates. A further notice regarding (ii) recognized in \citep{DBLP:conf/icml/RowlandDKMBD19} is that since in practice we do not observe the environment dynamic but only samples of it, a naive update to learn the predefined statistics using such samples can collapse the approximate distribution due to the different natures of samples and statistics (in fact, \citet{DBLP:conf/icml/RowlandDKMBD19} proposes imputation strategies to overcome this problem). Instead, the statistics to be learned should be free of all such difficulties to reduce the learning burden. 

One might say that IQN/FQF (discussed in the previous subsection) can help QRDRL overcome these limitations. While the modeling improvements in IQN/FQF are practically effective, IQN/FQF however still embrace the predefined statistic principle above as they built upon QRDRL with an improved modelling capacity. In this work we propose an alternative approach to distributional RL that directly eschews the predefined statistic principle used in the prior distributional RL methods, i.e., the finite set of statistics in our approach can be evolved into any functional form and thus also reduces the need to maintain any statistic constraints. If we informally view improvements to CDRL/QRDRL into two dimensions: either modelling dimension or statistic dimension, IQN/FQF lie in the modelling dimension while our work belongs to the statistic dimension. We notice that this does not necessarily mean one approach is better than the other, but rather two orthogonal approaches where the modelling improvements in IQN/FQF can naturally apply to our work to further improve the modeling capacity. We leave these modeling extensions to the future work and focus the present work only on unrestricted statistics with the simplest modelling choice as possible. 

\begin{rem} We further clarify that a specified functional form in the predefined statistic principle section means that the parametric form $\zeta$, which possibly depends on some real parameters, is fully specified, e.g., quantile values $F^{-1}_{\eta}(\tau)$ (which depends on quantile level $\tau$) are predefined statistics. In this sense, FQF/IQN still have predefined statistics as they use quantile values to approximate a distribution. We emphasize that the flexibility of $\tau$ in FQF/IQN does not solve the problem of predefined statistics but rather only gives a finer-grained approximation by using more $\tau$ which in turn increases the budget of $N$ statistics. We remark that a flexible increase of $N$ (which can give a finer approximation in both predefined and unrestricted statistics) is not our focus in this chapter; thus, we find it useful to keep the same fixed budget of $N$ statistics when comparing predefined and unrestricted statistics. We also remark that a general empirical sample can be considered an unrestricted statistic as it does not subscribe to any predefined parametric functional form. As a concrete example, let us consider the task of approximating a distribution with only one (learnable) statistic: One approach targets the median, and the other uses an unrestricted statistic. Eventually, the former approach should converge its statistic to the median as it subscribes to this predefined functional form while the latter approach can evolve its statistic into any sample equally (not necessarily the median) as long as the sample can simulate the distribution in a certain sense (in our case, to match the moments of the empirical distribution with those of the target distribution). 
\end{rem}

% We notice that our work is not another modelling improvement as in IQN/FQF

% The modelling improvement in IQN/FQN can straightforwardly apply to our alternative framework. 

% In practical setting, all distributional RL algorithms \citep{DBLP:conf/icml/BellemareDM17,DBLP:conf/aaai/DabneyRBM18,DBLP:conf/icml/DabneyOSM18,DBLP:conf/aistats/RowlandBDMT18,yang2019fully} must approximate return distributions by a finite set of statistics as the space of Borel distributions is infinite-dimensional.

% [What predefined statistics]

% [Why maintain it is difficult: (i) projection, (ii) mess up statistics with samples]. 

\section{Main Framework}
\subsection{Maximum Mean Discrepancy} 
% Given two set of samples $(z_i)_{i=1}^N \sim p$ and $(w_j)_{j=1}^M \sim q$, the two-sample hypothesis testing problem is concerned with the question of determining whether these two sets of samples come from the same distribution, i.e., whether $p = q$ given only their samples. As an approach to the two-sample hypothesis testing problem, maximum mean discrepancy (MMD) \citep{DBLP:journals/jmlr/GrettonBRSS12} is defined via the embedding of probability measures into a reproducing kernel Hilbert space (RKHS). 
Let $\mathcal{F}$ be a reproducing kernel Hilbert space (RKHS) associated with a continuous kernel $k(\cdot, \cdot)$ on $\mathcal{X}$. Consider $p, q \in \mathcal{P}(\mathcal{X})$, and let $Z$ and $W$ be two random variables with distributions $p$ and $q$, respectively. The maximum mean discrepancy (MMD) \citep{DBLP:journals/jmlr/GrettonBRSS12} between $p$ and $q$ is defined as 
\begin{align*}
    \text{MMD}(p, q; \mathcal{F}) &:= \sup_{f \in \mathcal{F}: \|f\|_{\mathcal{F}} \leq 1} \left(\mathbb{E} [f(Z)] - \mathbb{E} [f(W)] \right) =  \| \psi_{p} - \psi_q \|_{\mathcal{F}} \\ 
    &= \bigg(  \mathbb{E} [k(Z,Z')] + \mathbb{E}[k(W,W')] - 2 \mathbb{E} [k(Z,W)]  \bigg)^{1/2}
\end{align*}
where $\psi_p := \int_{\mathcal{X}} k(x, \cdot) p(dx)$ is the Bochner integral, i.e., the mean embedding of $p$ into $\mathcal{F}$ \citep{DBLP:conf/alt/SmolaGSS07}, and $Z'$ (resp. $W'$) is a random variable with distribution $p$ (resp. $q$) and is independent of $Z$ (resp. $W$). In sequel, we interchangeably refer to MMD by $\text{MMD}(p,q; \mathcal{F}), \text{MMD}(p,q;k)$, or $\text{MMD}(p,q)$ if the context is clear. 

% \begin{prop}[\citep{DBLP:conf/nips/FukumizuGSS07,DBLP:journals/jmlr/GrettonBRSS12}]
% If the Bochner integral $\psi_p$ induced by $k(\cdot, \cdot)$ is injective, then MMD is a metric in $\mathcal{P}(\mathbb{X})$.
% \label{prop:mmd_is_a_metric}
% \end{prop}
% \noindent \textbf{Remark}. \textit{Characteristic} kernels (e.g., Gaussian and Laplace kernels) induce an injective Bochner integral. Any \textit{universal} kernel is also characteristic.

% Besides the popular Gaussian kernels, we also consider here rather less popular ones defined by $k_{\alpha}(x,y) := -\|x - y \|^{\alpha}, \forall \alpha \in \mathbb{R}, \forall x,y \in \mathcal{X}$. We refer to $k_{\alpha}$ as an \textit{unrectified} kernel with order $\alpha$ as in the special case when $\alpha=1$, $k_{\alpha}$ is the unrectified triangular kernel \citep{fleuret2003scale}. The unrectified kernel with order $\alpha \in (0,2)$ also induces a proper metric as formally stated in the following proposition.

% \begin{prop}[\citep{szekely2003statistics}]
% Let $k_{\alpha}(x,y) := -\|x - y \|^{\alpha}, \forall \alpha \in \mathbb{R}, \forall x,y \in \mathcal{X}$. $\text{MMD}(\cdot, \cdot; k_{\alpha})$, which is also called a generalized energy distance in this case, is a metric on $\mathcal{P}_{\alpha}(\mathcal{X})$ for all $\alpha \in (0,2)$ but not a metric for $\alpha=2$. 
% \end{prop}

\subsubsection*{Empirical Approximation}
Given empirical samples $\{z_i\}_{i=1}^N \sim p$ and $\{w_i\}_{i=1}^M \sim q$, MMD admits a simple empirical estimate as  
\begin{align*}
    &\text{MMD}_{b}^2(\{z_i\}, \{w_i\}; k) = \frac{1}{N^2} \sum_{i,j} k(z_i, z_j) + \frac{1}{M^2} \sum_{i,j} k(w_i, w_j) - \frac{2}{N M} \sum_{i,j} k(z_i, w_j). 
    % \label{eq:empirical_mmd}
\end{align*}
Though there is also a simple unbiased estimate of $\text{MMD}$, the \emph{biased estimate} $\text{MMD}_{b}$ has smaller variance in practice and thus is adopted in our work. 

% It follows from the Mercer's theorem \citep{MercerFunctionsOP} that any continuous, symmetric, non-negative definite kernel $k$ can be expressed in feature dot product, i.e., there exits a feature map $\phi(\cdot)$ such that $k(x,y)= \langle \phi(x), \phi(y) \rangle$ for all $x,y \in \mathbb{X}$. Thus, the squared MMD in Equation (\ref{eq:population_mmd}) can be written as the mean difference in the feature space, i.e.,  $\texttt{MMD}^2(\mathcal{F}, p, q) = \| \mathbb{E}[\phi(Z)] - \mathbb{E}[\phi(W)] \|^2$. Minimizing the MMD distance becomes matching the means of two distributions if the feature map $\phi$ is the identity function; in other cases such as the Gaussian RBF kernel $k(x,y) = \exp(-\frac{1}{h} (x - y)^2)$, it is equivalent to match the moments of all orders between two distributions as the Taylor expansion of the Gaussian kernel covers all orders of moments. 

\subsection{Problem Setting}
Consider $d=1$. For any policy $\pi$, let $\mu^{\pi}=\text{law}(Z^{\pi})$ be the law (distribution) of the return r.v. $Z^{\pi}$ as defined in Equation (\ref{eq:return_rv}). The distributional Bellman operator $\mathcal{T}^{\pi}$ \citep{DBLP:conf/icml/BellemareDM17} specifies the relation of different return distributions across state-action pairs along the Bellman dynamic; that is, for any $\mu \in \mathcal{P}(\mathcal{X})^{\mathcal{S} \times \mathcal{A}}$, and any $(s,a) \in \mathcal{S} \times \mathcal{A}$,
% The distributional Bellman equation in policy evaluation specifies the relation of the return random variables across state-action pairs: 
\begin{align*}
    \mathcal{T}^{\pi} \mu(s,a) := \nonumber \int_{\mathcal{S}} \int_{\mathcal{A}} \int_{\mathcal{X}} (f_{\gamma,r})_{\#} \mu(s',a') \mathcal{R}(dr|s,a) \pi(d a'|s') P(ds'|s,a),
    % \label{eq:distributional_bellman_operator}
\end{align*}
where $f_{\gamma,r}(z) := r + \gamma z, \forall z$ and $(f_{\gamma,r})_{\#} \mu(s',a')$ is the pushforward measure of $\mu(s',a')$ by $f_{\gamma,r}$. Note that $\mu^{\pi}$ is the fixed point of $\mathcal{T}^{\pi}$, i.e., $\mathcal{T}^{\pi} \mu^{\pi} = \mu^{\pi}$. We are interested in the  problem of learning $\mu^{\pi} \in \mathcal{P}(\mathcal{X})^{\mathcal{S} \times \mathcal{A}}$ via the distributional Bellman operator $\mathcal{T}^{\pi}$. 

\subsection{Algorithmic Approach}
In practical settings, we must approximate the return distribution via a finite set of statistics as the space of Borel probability measures is infinite-dimensional. Let $Z_{\theta}(s,a) := \{Z_{\theta}(s,a)_i\}_{i=1}^N$ be a set of parameterized statistics of $\mu^{\pi}(s,a)$ where $\theta$ represents the parameters of the model, e.g., neural networks. Instead of restricting $Z_{\theta}(s,a)$ to predefined statistic functionals, we model unrestricted statistics, i.e., deterministic samples where each $Z_{\theta}(s,a)_i$ can be evolved into any form of statistics and we use the Dirac mixture $\hat{\mu}_{\theta}{(s,a)} = \frac{1}{N} \sum_{i=1}^N \delta_{Z_{\theta}(s,a)_i}$ to approximate $\mu^{\pi}(s,a)$. We refer to the deterministic samples $Z_{\theta}(s,a)$ as particles, and our goal is reduced into learning the particles $Z_{\theta}(s,a)$ to approximate $\mu^{\pi}(s,a)$. To this end, the particles $Z_{\theta}(s,a)$ is deterministically evolved to minimize the MMD distance between the approximate distribution and its distributional Bellman target.
Algorithm \ref{alg:mmd-drl} below presents the generic update in our approach, namely MMDRL. 

\begin{algorithm}[t]
\caption{Generic moment matching distributional RL}
\label{alg:mmd-drl}
\begin{algorithmic}[1]
\STATE {\bfseries Input:} Number of particles $N$, kernel $k$, discount factor $\gamma \in [0,1]$, sample transition $(s, a, r, s')$.
%   \KwInitialization{$\theta$}

% $Q(x', a') = \frac{1}{N}\sum_{i=1}^N Z_{\theta}(x', a')_i, \forall a' \in \mathcal{A}$ \; 
\IF{policy evaluation}
\STATE $a^* \sim \pi(\cdot|s')$

\ELSIF{control setting}
% \tcc{Greedy w.r.t. the value function estimate}
\STATE $a^* \leftarrow \operatorname*{arg\,max}_{a' \in \mathcal{A}}  \frac{1}{N}\sum_{i=1}^N Z_{\theta}(s', a')_i$ 

\ENDIF
% \tcc{Compute an empirical Bellman target measure}

\STATE $\hat{T} Z_i \leftarrow r + \gamma Z_{\theta^{-}}(s', a^*)_i, \forall 1 \leq i \leq N$

\STATE {\bfseries Output:} $\text{MMD}_{b}^2\left(\{Z_{\theta}(s,a)_i\}_{i=1}^N, \{\hat{T} Z_i\}_{i=1}^N; k \right)$.
\end{algorithmic}
\end{algorithm}

\subsubsection*{Intuition} The MMDRL reduces into the standard TD or Q-learning when $N=1$. For $N > 1$, the objective $\text{MMD}_{b}$ when used with SGD and Gaussian kernels $k(x,y)=\exp(-|x-y|^2/h)$ contributes in two ways: (i) The term  $\frac{1}{N^2} \sum_{i,j} k(Z_{\theta}(s,a)_i, Z_{\theta}(s,a)_j)$ serves as a repulsive force that pushes the particles $\{Z_{\theta}(s,a)_i\}$ away from each other, preventing them from collapsing into a single mode, with force proportional to $\frac{2}{h}e^{-(Z_{\theta}(s,a)_i - Z_{\theta}(s,a)_j)^2/h}|Z_{\theta}(s,a)_i - Z_{\theta}(s,a)_j|$; (ii) the term $-\frac{2}{N^2} \sum_{i,j} k(Z_{\theta}(s,a)_i,\hat{T}Z_j)$ acts as an attractive force which pulls the particles $\{Z_{\theta}(s,a)_i\}$ closer to their target particles $\{\hat{T} Z_i\}$. This can also be intuitively viewed as a two-sample counterpart to Stein point variational inference \citep{DBLP:conf/nips/LiuW16,DBLP:conf/icml/ChenMGBO18}. 

\subsubsection*{Particle Representation}
We can easily extend MMDRL to DQN-like architecture to create a novel deep RL, namely MMDQN. In this work, we explicitly represent the particles $\{Z_{\theta}(s,a)_i\}_{i}^N$ in MMDQN via fixed $N$ network outputs as in QR-DQN \citep{DBLP:conf/aaai/DabneyRBM18} for simplicity. 

We emphasize that modeling improvements from IQN \citep{DBLP:conf/icml/DabneyOSM18} and FQF \citep{yang2019fully} can be naturally applied to MMDQN: we can implicitly generate $\{Z_{\theta}(s,a)_i\}_{i}^N$ via applying a neural network function to $N$ samples of a base sampling distribution (e.g., normal or uniform distribution) as in IQN, or we can use the proposal network in FQF to learn the weights of each Dirac components in MMDQN instead of using equal weights $1/N$.

\subsection{Theoretical Analysis}

% Since we are not working with one but a set of probability measures in $\mathcal{P}(\mathcal{X})^{\mathcal{S} \times \mathcal{A}}$, we are interested in the supremum form of MMD defined below. 
Here we provide theoretical understanding of MMDRL. Before that, we define the notion of supremum MMD, a MMD counterpart to the supremum Wasserstein in \citep{DBLP:conf/icml/BellemareDM17}, to work on $\mathcal{P}(\mathcal{X})^{\mathcal{S} \times \mathcal{A}}$. 
\begin{defn}
Supremum MMD is a functional $\mathcal{P}(\mathcal{X})^{\mathcal{S} \times \mathcal{A}} \times \mathcal{P}(\mathcal{X})^{\mathcal{S} \times \mathcal{A}} \rightarrow \mathbb{R}$ defined by 
\begin{align*}
    \text{MMD}_{\infty}(\mu, \nu; k) := \sup_{(s,a) \in \mathcal{S} \times \mathcal{A}} \text{MMD}(\mu(s,a), \nu(s,a); k)
\end{align*}
for any $\mu, \nu \in \mathcal{P}(\mathcal{X})^{\mathcal{S} \times \mathcal{A}}$. 
\end{defn}

We are concerned with the following questions:
\begin{enumerate}
    \item \textbf{Metric property}: When does $\text{MMD}_{\infty}$ induce a metric on $\mathcal{P}(\mathcal{X})^{\mathcal{S} \times \mathcal{A}}$?
    \item \textbf{Contraction property}: When is $\mathcal{T}^{\pi}$ a contraction in $\text{MMD}_{\infty}$? 
    
    \item \textbf{Convergence property}: How fast do the particles returned by minimizing MMD approach the target distribution it approximates?
\end{enumerate}
The metric property in the first question ensures that $\text{MMD}_{\infty}$ is a meaningful test to distinguish two return distributions on $\mathcal{P}(\mathcal{X})^{\mathcal{S} \times \mathcal{A}}$.
The contraction property in the second question guarantees that following from Banach's fixed point theorem \citep{BanachSURLO}, $\mathcal{T}^{\pi}$ has a unique fixed point which is $\mu^{\pi}$. In addition, starting with an arbitrary point $\mu_0 \in \mathcal{P}(\mathcal{X})^{\mathcal{X} \times \mathcal{A}}$, $\mathcal{T}^{\pi} \circ \mathcal{T}^{\pi} \circ ... \circ \mathcal{T}^{\pi} \mu_{0}$ converges at an exponential rate to $\mu^{\pi}$ in $\text{MMD}_{\infty}$. We provide sufficient conditions to answer the first two questions and derive the convergence rate of the optimal particles in approximating a target distribution for the third question. In short, the first two properties highly depend on the underlying kernel $k$, and the particles returned by minimizing MMD enjoy a rate $O(1/\sqrt{n})$ regardless of the dimension $d$ of the underlying space $\mathcal{X}$. 

\subsubsection*{Metric Property}
\begin{prop}
Let $\tilde{\mathcal{P}}(\mathcal{S}) \subseteq \mathcal{P}(\mathcal{S})$ be some (Borel) subset of the space of the Borel probability measures. If $\text{MMD}$ is a metric on $\tilde{\mathcal{P}}(\mathcal{X})$, then $\text{MMD}_{\infty}$ is also a metric on $\tilde{\mathcal{P}}(\mathcal{X})^{\mathcal{S} \times \mathcal{A}}$. 
% If the Bochner integral $\psi_p$ induced by $k(\cdot, \cdot)$ is injective, $\text{MMD}_{\infty}$ is a metric in $\mathcal{P}(\mathbb{X})^{\mathcal{X} \times \mathcal{A}}$.
\label{prop:metric_over_state_action_space}
\end{prop}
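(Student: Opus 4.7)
The plan is to verify each of the four defining axioms of a metric for $\text{MMD}_{\infty}$ on $\tilde{\mathcal{P}}(\mathcal{X})^{\mathcal{S} \times \mathcal{A}}$, leveraging the hypothesis that $\text{MMD}$ is already a metric on $\tilde{\mathcal{P}}(\mathcal{X})$. Since $\text{MMD}_{\infty}$ is defined pointwise as a supremum over $(s,a)$ of the underlying $\text{MMD}$, the strategy is essentially to push the metric properties through the supremum.

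First I would dispatch non-negativity and symmetry: non-negativity of $\text{MMD}_{\infty}(\mu,\nu;k)$ follows immediately because every term $\text{MMD}(\mu(s,a),\nu(s,a);k)$ inside the supremum is non-negative, and symmetry follows because $\text{MMD}(\mu(s,a),\nu(s,a);k) = \text{MMD}(\nu(s,a),\mu(s,a);k)$ for every $(s,a)$, so the two suprema coincide. Next, I would handle the identity of indiscernibles. The forward direction is trivial: if $\mu = \nu$ as elements of $\tilde{\mathcal{P}}(\mathcal{X})^{\mathcal{S}\times\mathcal{A}}$, then $\mu(s,a) = \nu(s,a)$ for all $(s,a)$, so each MMD evaluates to $0$. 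For the converse, suppose $\text{MMD}_{\infty}(\mu,\nu;k) = 0$; by non-negativity each pointwise term must vanish, and since $\text{MMD}$ is a metric on $\tilde{\mathcal{P}}(\mathcal{X})$ by hypothesis, this forces $\mu(s,a) = \nu(s,a)$ for every $(s,a)$, i.e., $\mu = \nu$.

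The remaining step is the triangle inequality. For any $\mu,\nu,\lambda \in \tilde{\mathcal{P}}(\mathcal{X})^{\mathcal{S}\times\mathcal{A}}$ and any fixed $(s,a)$, the triangle inequality for $\text{MMD}$ on $\tilde{\mathcal{P}}(\mathcal{X})$ yields
\begin{align*}
\text{MMD}(\mu(s,a),\nu(s,a);k) &\leq \text{MMD}(\mu(s,a),\lambda(s,a);k) + \text{MMD}(\lambda(s,a),\nu(s,a);k) \\
&\leq \text{MMD}_{\infty}(\mu,\lambda;k) + \text{MMD}_{\infty}(\lambda,\nu;k),
\end{align*}
where the second inequality bounds each pointwise term by its supremum. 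Taking the supremum over $(s,a)$ on the left-hand side then delivers $\text{MMD}_{\infty}(\mu,\nu;k) \leq \text{MMD}_{\infty}(\mu,\lambda;k) + \text{MMD}_{\infty}(\lambda,\nu;k)$.

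There is no real obstacle here: every metric axiom for $\text{MMD}_{\infty}$ reduces to the corresponding axiom for $\text{MMD}$ applied pointwise at each $(s,a)$, together with the order-preserving property of the supremum. The only point that warrants any care is the converse direction of the identity of indiscernibles, which crucially uses the full metric hypothesis on $\text{MMD}$ (not merely a pseudo-metric property); absent that, one could only conclude equality of mean embeddings rather than equality of measures. This is exactly why the proposition is stated relative to a subset $\tilde{\mathcal{P}}(\mathcal{X})$ on which $\text{MMD}$ is known to separate points, a condition that depends on the underlying kernel $k$ and will be addressed in subsequent results.
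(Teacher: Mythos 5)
Your proof is correct and follows essentially the same route as the paper's: the triangle inequality is obtained by applying the pointwise MMD triangle inequality and then passing the bound through the supremum (the paper phrases this as $\sup(A+B) \leq \sup A + \sup B$), with the remaining axioms dispatched as immediate pointwise consequences. Your treatment is slightly more explicit about symmetry and the identity of indiscernibles, which the paper calls obvious, but the substance is the same.
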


% \begin{proof}
% See Appendix A.1. 
% \end{proof}

Theorem \ref{theorem:metric_property} below provides \emph{sufficient conditions} for $\text{MMD}$ to induce a metric on $\tilde{\mathcal{P}}(\mathcal{X})$.

% For universal kernels such as the Gaussian kernel $k(x,y) = \exp(-\frac{1}{h} (x - y)^2)$, minimizing the MMD is equivalent to minimizing the distance between moments of all orders of $p$ and $q$. 

% \subsection{Sample-based distributional RL}
% We build up on the DQN architecture \citep{mnih2015human} with minimal changes to the base architecture. Specifcally, we use the identical neural network architecture of DQN except that we change the output layer size to $\mathcal{A} \times N$, as opposed to $\mathcal{A}$ in DQN. 
% We approach the distribution estimation problem in distributional RL from a two-sample perspective. 

\begin{thm}
We have 
\begin{enumerate}
    \item 
    If the underlying kernel $k$ is \textbf{characteristic} (i.e., the induced Bochner integral $\psi_p$ is injective), e.g., Gaussian kernels, then MMD is a metric on $\mathcal{P}(\mathcal{X})$ \citep{DBLP:conf/nips/FukumizuGSS07,DBLP:journals/jmlr/GrettonBRSS12}. 

    \item 
    Define \textbf{unrectified} kernels $k_{\alpha}(x,y) := -\|x - y \|^{\alpha}, \forall \alpha \in \mathbb{R}, \forall x,y \in \mathcal{X}$. Then $\text{MMD}(\cdot, \cdot; k_{\alpha})$ is a metric on $\mathcal{P}_{\alpha}(\mathcal{X})$ for all $\alpha \in (0,2)$ but not a metric for $\alpha=2$ \citep{szekely2003statistics}. 
    
     \item MMD associated with the so-called \textbf{exp-prod} kernel $k(x,y) = \exp( \frac{xy}{\sigma^2})$ for any $\sigma > 0$ is a metric on $\mathcal{P}_*(\mathcal{X})$.  
    
    % \item \textbf{Moment matching kernel}. Let $\sigma > 0$. The kernel $k(x,y) = \phi(x)^T \phi(y)$ where $\phi(x) = [ a_0, a_1, ..., a_n, ... ]^T$ with $a_n := \frac{1}{\sqrt{n!}} (\frac{x}{\sigma})^n$ is called a \textit{moment matching} kernel (for a reason that should be clear in the proof). Then, the MMD associated with a moment matching kernel is a metric on $\mathcal{P}_*(\mathcal{X})$.  
\end{enumerate}
\label{theorem:metric_property}
\end{thm}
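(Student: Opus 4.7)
The overall strategy is to exploit the general RKHS fact that $\mathrm{MMD}(p,q;k) = \|\psi_p - \psi_q\|_{\mathcal{F}}$ is automatically a pseudo-metric on any class of measures for which $\psi_p \in \mathcal{F}$ is well defined, so symmetry, non-negativity and the triangle inequality come for free in all three cases. In each part the task therefore reduces to proving definiteness: $\mathrm{MMD}(p,q)=0 \Rightarrow p=q$ on the stated class of measures.

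Parts 1 and 2 can be dispatched by appealing to existing literature. For Part 1, definiteness on $\mathcal{P}(\mathcal{X})$ is, by construction, the definition of a characteristic kernel, so the claim is essentially verbatim \citep{DBLP:conf/nips/FukumizuGSS07,DBLP:journals/jmlr/GrettonBRSS12}. For the positive half of Part 2 I will invoke Sz\'ekely's identification of $\mathrm{MMD}^2(\cdot,\cdot;k_\alpha)$ for $\alpha \in (0,2)$ with the energy distance (up to a positive constant) and cite its Schoenberg-embedding-based characterisation of equality in distribution on $\mathcal{P}_\alpha$ \citep{szekely2003statistics}. For the negative half at $\alpha=2$ I will expand $k_2(x,y) = -\|x-y\|^2$ directly in the definition of $\mathrm{MMD}^2$; the self- and cross-terms collapse to $2\|\mathbb{E}_p[X] - \mathbb{E}_q[X]\|^2$, so any two distinct distributions with equal means are a counterexample.

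Part 3 is the genuinely new piece and the main target. The plan is: (i) verify $\psi_p \in \mathcal{F}$ for every $p \in \mathcal{P}_*(\mathcal{X})$ by bounding $\int \sqrt{k(x,x)}\,p(dx) = \int e^{x^2/(2\sigma^2)}\,p(dx)$ through the Taylor expansion of the exponential and the sub-factorial moment-growth condition defining $\mathcal{P}_*$; (ii) assuming $\mathrm{MMD}(p,q;k)=0$, use the reproducing property to get $\int e^{xy/\sigma^2}\,p(dx) = \int e^{xy/\sigma^2}\,q(dx)$ for every $y \in \mathcal{X}$; (iii) expand the exponential as a power series in $y$ and justify exchanging sum and integral by Fubini--Tonelli using the same moment bound; (iv) match coefficients of the two resulting entire power series to conclude $m_n(p) = m_n(q)$ for every $n \ge 0$.

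The hard part, and the place where the precise form of $\mathcal{P}_*(\mathcal{X})$ is indispensable, is the closing step of converting equality of all moments into $p = q$. For this I will show that $\limsup_n |m_n(p)|^{1/n}/n = 0$ implies Carleman's criterion $\sum_n m_{2n}(p)^{-1/(2n)} = \infty$, so by Carleman's theorem the Hamburger moment problem for $p$ is determinate; the same applies to $q$, which forces $p=q$. The two places that will need the most careful bookkeeping are the Fubini interchange (where the $\mathcal{P}_*$ growth hypothesis has to dominate the tails of the series uniformly in $y$ on a neighbourhood of $0$) and the passage from the $\mathcal{P}_*$ condition to Carleman's condition. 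Combined with Proposition \ref{prop:metric_over_state_action_space}, the three metric properties then lift immediately to $\mathrm{MMD}_\infty$ on $\mathcal{P}(\mathcal{X})^{\mathcal{S}\times\mathcal{A}}$ whenever that is needed downstream.
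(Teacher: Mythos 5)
Your proposal follows the same overall skeleton as the paper's proof: parts 1 and 2 are delegated to the cited references (your explicit computation that $\text{MMD}^2(p,q;k_2)=2\|\mathbb{E}_p[X]-\mathbb{E}_q[X]\|^2$ is a concrete version of the negative claim at $\alpha=2$ that the paper leaves entirely to \citet{szekely2003statistics}), and for part 3 both arguments reduce definiteness to (a) equality of all moments and (b) moment determinacy on $\mathcal{P}_*(\mathcal{X})$. The implementations of (a) and (b) differ. For (a), the paper writes down the explicit feature map $a_n(x)=x^n/(\sigma^n\sqrt{n!})$ and obtains $\text{MMD}^2(\mu,\nu;k)=\sum_{n\ge 0}\frac{1}{\sigma^{2n}n!}\left(m_n(\mu)-m_n(\nu)\right)^2$, so vanishing MMD forces each nonnegative summand to vanish; no Fubini interchange or coefficient matching is needed. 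Your route via $\int e^{xy/\sigma^2}dp(x)=\int e^{xy/\sigma^2}dq(x)$ for all $y$, followed by a power-series expansion in $y$ and identification of coefficients, reaches the same conclusion with strictly more bookkeeping. For (b), you invoke Carleman's criterion, which does follow from the $\mathcal{P}_*$ condition since $m_{2n}^{1/(2n)}=o(n)$ gives $\sum_n m_{2n}^{-1/(2n)}=\infty$; the paper instead shows (via Stirling) that the Taylor series of the characteristic function has infinite radius of convergence, so the characteristic function is entire and determined by the moments. Both are standard sufficient conditions for determinacy and both are valid here.

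One step of your plan would fail as written: in (i) you propose to verify Bochner integrability of $\psi_p$ by bounding $\int e^{x^2/(2\sigma^2)}\,dp(x)$ using only the $\mathcal{P}_*$ growth condition. That implication is false in general: a density on $\mathbb{R}$ proportional to $e^{-|x|^{1+\delta}}$ with $0<\delta<1$ has $m_n^{1/n}\asymp n^{1/(1+\delta)}=o(n)$, hence lies in $\mathcal{P}_*(\mathbb{R})$, yet $\int e^{x^2/(2\sigma^2)}e^{-|x|^{1+\delta}}dx=\infty$, so the mean embedding (and the closed-form $\mathbb{E}[k(X,X')]$) need not exist on unbounded domains. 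This is a well-definedness issue with the theorem as stated rather than with your definiteness argument, and the paper's proof silently assumes it away; it disappears when $\mathcal{X}$ is bounded (the case relevant to the return distributions, where moreover $\mathcal{P}_*(\mathcal{X})=\mathcal{P}(\mathcal{X})$). You should either restrict to bounded $\mathcal{X}$ or assume integrability of $e^{x^2/(2\sigma^2)}$ outright rather than trying to derive it from the moment condition.
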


% \begin{proof}
% See Appendix A.2. 
% \end{proof}

\subsubsection*{Contraction Property}
We analyze the contraction of $\mathcal{T}^{\pi}$ for several important classes of kernels. One such class is shift invariant and scale sensitive kernels. A kernel $k(\cdot, \cdot)$ is said to be \textit{shift invariant} if $k(x+c, y + c) = k(x,y), \forall x,y,c \in \mathcal{X}$; it is said to be 
 \textit{scale sensitive} with order 
$\alpha >0$ if $k(c x, c y) = |c|^{\alpha} k(x,y), \forall x,y \in \mathcal{X}$ and $c \in \mathbb{R}$. For example, the unrectified kernel $k_{\alpha}$ considered in Theorem \ref{theorem:metric_property} is both shift invariant and scale sensitive with order $\alpha$ while Gaussian kernels are only shift invariant. 

\begin{thm} We have 
\begin{enumerate}
    \item If the underlying kernel is $k = \sum_{i \in I} c_i k_i$ where each component kernel $k_i$ is both shift invariant and scale sensitive with order $\alpha_i > 0$, $c_i \geq 0$, and $I$ is a (possibly infinite) index set, then $\mathcal{T}^{\pi}$ is a ${\gamma}^{ \alpha_* /2 }$-contraction in $\text{MMD}_{\infty}$ where $\alpha_* := \min_{i \in I} \alpha_i$.  
    
    \item $\mathcal{T}^{\pi}$ is \textbf{not} a contraction in $\text{MMD}_{\infty}$ associated with either Gaussian kernels or exp-prod kernels $k(x,y) = \exp( \frac{xy}{\sigma^2})$. 
    
    % \item If the underlying kernel is a moment matching kernel, then $\mathcal{T}^{\pi}$ is a $\sqrt{\gamma}$-contraction in $\text{MMD}_{\infty}$. 
    
    % \item If the underlying kernel is a Gaussian kernel, $\mathcal{T}^{\pi}$ is not a a contraction in $\text{MMD}_{\infty}$. 
\end{enumerate}
\label{theorem:contraction_property}
\end{thm}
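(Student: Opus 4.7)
My plan for claim (1) is a coupling argument carried out component-by-component after decomposing $\text{MMD}^2$ over the mixture kernel, and my plan for claim (2) is to exhibit tailored counterexamples on simple MDPs. For claim (1), I will first exploit the linearity of expected kernel evaluations in $k$ to write $\text{MMD}^2(p, q; k) = \sum_{i \in I} c_i\, \text{MMD}^2(p, q; k_i)$, reducing the task to a single shift-invariant and scale-sensitive kernel $k_i$ of order $\alpha_i$. For a fixed $(s, a)$, I will realize $X, X' \sim \mathcal{T}^{\pi}\mu(s,a)$ and $Y, Y' \sim \mathcal{T}^{\pi}\nu(s,a)$ on a common probability space by sharing the MDP randomness $(R, S', A')$ within each pair and sampling $Z, Z' \sim \mu(S', A')$ and $W, W' \sim \nu(S', A')$; this is permitted because $\text{MMD}$ depends only on marginals. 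Shift invariance erases the shared $R$ and scale sensitivity pulls out $\gamma^{\alpha_i}$, giving the per-component identity
\begin{align*}
\text{MMD}^2(\mathcal{T}^{\pi}\mu(s,a), \mathcal{T}^{\pi}\nu(s,a); k_i) \;=\; \gamma^{\alpha_i}\, \mathbb{E}_{S', A'}\!\left[\text{MMD}^2(\mu(S', A'), \nu(S', A'); k_i)\right].
\end{align*}
Summing over $i$ weighted by $c_i$, using $\gamma^{\alpha_i} \leq \gamma^{\alpha_*}$ (valid because $\gamma \in [0, 1)$ makes $\alpha \mapsto \gamma^{\alpha}$ decreasing), bounding $\mathbb{E}_{S', A'}$ by the supremum, and finally taking $\sup_{(s,a)}$ on the left and a square root will yield the advertised $\gamma^{\alpha_*/2}$-contraction.

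For claim (2), I will exhibit explicit counterexamples. For the Gaussian kernel $k(x,y) = \exp(-|x-y|^2/h)$, consider a trivial MDP with one state-action pair, zero reward and deterministic self-transition, so $\mathcal{T}^{\pi}\delta_z = \delta_{\gamma z}$. Choosing $\mu = \delta_{z_1}$ and $\nu = \delta_{z_2}$ gives
\begin{align*}
\frac{\text{MMD}^2(\mathcal{T}^{\pi}\mu, \mathcal{T}^{\pi}\nu; k)}{\text{MMD}^2(\mu, \nu; k)} \;=\; \frac{1 - \exp(-\gamma^2 d^2 / h)}{1 - \exp(-d^2 / h)},
\end{align*}
with $d = z_1 - z_2$; this ratio tends to $1$ as $|d| \to \infty$, so no constant below $1$ can serve as a contraction factor. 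For the exp-prod kernel $k(x,y) = \exp(xy/\sigma^2)$, the zero-reward construction would only yield a ratio of at most $\gamma^2$, so I will instead use a deterministic reward $r$. Taking $\mu = \delta_0$ and $\nu = \delta_z$, the post-operator $\text{MMD}^2$ scales like $\exp((r + \gamma z)^2 / \sigma^2)$ while the pre-operator $\text{MMD}^2$ is the fixed finite constant $\exp(z^2/\sigma^2) - 1$, so letting $r \to \infty$ makes the ratio diverge, ruling out contraction.

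The main obstacle is handling the mixture kernel in claim (1): distinct components contract with distinct factors $\gamma^{\alpha_i}$, and they must be simultaneously absorbed into the supremum $\text{MMD}_{\infty}$ under the composite $k$. This relies on both the additive structure of $\text{MMD}^2$ in $k$ and on the monotonicity of $\gamma^{\alpha}$ in $\alpha$ on $[0, 1)$; neither would hold for general non-convex kernel combinations. A minor but essential subtlety is the freedom to share MDP randomness between the two marginals when evaluating cross-terms such as $\mathbb{E}[k_i(X, Y)]$, since it is this coupling that turns scale sensitivity of $k_i$ into a contraction factor rather than a blow-up.
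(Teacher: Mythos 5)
Your part (1) follows the same skeleton as the paper's proof---linearity of $\text{MMD}^2$ in the kernel, pulling $\gamma^{\alpha_i}$ out of the pushforward by $z \mapsto r + \gamma z$ via shift invariance and scale sensitivity, the bound $\gamma^{\alpha_i} \leq \gamma^{\alpha_*}$, and a final supremum---but the central step is wrong as stated. The claimed per-component \emph{identity} $\text{MMD}^2(\mathcal{T}^{\pi}\mu(s,a), \mathcal{T}^{\pi}\nu(s,a); k_i) = \gamma^{\alpha_i}\,\mathbb{E}_{S',A'}\left[\text{MMD}^2(\mu(S',A'),\nu(S',A'); k_i)\right]$ is false, and the justification that ``MMD depends only on marginals'' does not license the coupling. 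The expansion $\text{MMD}^2(p,q) = \mathbb{E}[k(X,X')] + \mathbb{E}[k(Y,Y')] - 2\mathbb{E}[k(X,Y)]$ requires $X,X',Y,Y'$ to be mutually independent; once you force $X = R+\gamma Z$ and $X' = R+\gamma Z'$ to share the same $(R,S',A')$, the pair $(X,X')$ is no longer an independent pair from $\mathcal{T}^{\pi}\mu(s,a)$, so $\mathbb{E}[k(X,X')]$ under your coupling is a different number from the term in the MMD formula. What your coupled computation actually produces is the mixture of squared MMDs, $\gamma^{\alpha_i}\,\mathbb{E}_{S',A'}[\text{MMD}^2(\mu(S',A'),\nu(S',A'); k_i)]$, and this \emph{upper-bounds} the true $\text{MMD}^2(\mathcal{T}^{\pi}\mu(s,a),\mathcal{T}^{\pi}\nu(s,a);k_i)$ because the mean embedding of a mixture is the mixture of mean embeddings and the squared RKHS norm is convex---this is exactly the paper's Lemma \ref{lemma:mixture_of_measures_scale_down_mmd}, proved by Cauchy--Schwarz. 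Your final bound therefore survives, but only because the error points in the favorable direction; the step must be recorded as an inequality justified by convexity, not an identity justified by coupling.

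Your part (2) is correct and takes a genuinely different, and cleaner, route than the paper. The paper builds a two-state MDP with an $n$-atom reward distribution, tunes $\epsilon$ so that $\sum_i p_{r_i}^2 = \gamma^{2\alpha}$, and verifies a numerical parameter table; your one-state, zero-reward Gaussian example yields the ratio $(1-e^{-\gamma^2 d^2/h})/(1-e^{-d^2/h}) \to 1$ as $|d|\to\infty$ in one line, and your diverging-reward exp-prod example is similarly immediate (your observation that the zero-reward construction caps the exp-prod ratio at $\gamma^2$, forcing the nonzero reward, is the right catch). The one caveat: both of your counterexamples need the support separation $|d|$, respectively the reward $r$, to grow without bound, so they require $\mathcal{X}$ unbounded. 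That is consistent with the chapter's setting of an arbitrary open $\mathcal{X}\subseteq\mathbb{R}^d$, but the paper's construction deliberately keeps all supports inside $[0,1]$, which is the regime relevant for bounded rewards; if you want the stronger statement on bounded domains you would need something closer to the paper's mixture-over-rewards construction.
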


We present the detailed proofs for Theorem \ref{theorem:metric_property} and Theorem \ref{theorem:contraction_property} in Section \ref{chap4_section_proof}. 

% \begin{proof}
% See Appendix A.3. 
% \end{proof}

\subsubsection*{Practical Consideration}
Theorem \ref{theorem:contraction_property} provides a negative result for the commonly used Gaussian kernel. In practice, however, we found that Gaussian kernels can promote to match the moments between two distributions and  have better empirical performance as compared to the other kernels analyzed in this section. In fact, MMD associated with Gaussian kernels $k(x,y) = \exp( -(x-y)^2 /(2 \sigma^2))$ can be decomposed into 
\begin{align*}
    \text{MMD}^2(\mu, \nu; k) = \sum_{n=0}^{\infty} \frac{1}{\sigma^{2n} n!} \left( \tilde{m}_n(\mu) - \tilde{m}_n(\nu) \right)^2
\end{align*}
where $\tilde{m}_n(\mu) =  \mathbb{E}_{x \sim \mu} \left[ e^{-x^2 /(2 \sigma^2 )} x^n \right]$, and similarly for $\tilde{m}_n(\nu)$. This indicates that MMD associated with Gaussian kernels approximately performs moment matching (scaled with a factor $e^{-x^2 /(2 \sigma^2 )}$ for each moment term).

\subsubsection*{Convergence Rate of Distribution Approximation} 
We justify the goodness of the particles obtained via minimizing MMD in terms of approximating a target distribution. 

% We prove that the deterministic points in our algorithm can simulate a target distribution. In addition, we also provide a convergence rate of distribution approximation by the deterministic points obtained via minimization of MMD.  
\begin{thm}
Let $\mathcal{X} \subseteq \mathbb{R}^d$ and $P \in \mathcal{P}(\mathcal{X})$. For any $n \in \mathbb{N}$, let $\{x_i\}_{i=1}^n \subset \mathcal{X}$ be a set of $n$ deterministic points such that $\{x_i\}_{i=1}^n \in \arginf_{\{\tilde{x}_i\}_{i=1}^n} \text{MMD}(\frac{1}{n} \sum_{i=1}^n \delta_{\tilde{x}_i}, P; \mathcal{F})$. Then, $P_n := \frac{1}{n} \sum_{i=1}^n \delta_{x_i}$ converges to $P$ at a rate of $O(1/\sqrt{n})$ in the sense that for any function $h$ in the unit ball of $\mathcal{F}$, we have 
\begin{align*}
    \bigg|\int_{\mathcal{X}} h(x) dP_n(x) - \int_{\mathcal{X}} h(x) dP(x)\bigg| = O(1/\sqrt{n}).
\end{align*}
\label{theorem:convergence_rate}
\end{thm}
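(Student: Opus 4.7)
The plan rests on the variational characterization of MMD recalled earlier in the paper, namely $\mathrm{MMD}(P_n,P;\mathcal{F}) = \sup_{f\in\mathcal{F},\,\|f\|_\mathcal{F}\le 1}\bigl(\mathbb{E}_{P_n}[f] - \mathbb{E}_P[f]\bigr)$. Thus for every $h$ in the unit ball of $\mathcal{F}$,
\begin{equation*}
\Bigl|\int h\,dP_n - \int h\,dP\Bigr| \;\le\; \mathrm{MMD}(P_n,P;\mathcal{F}),
\end{equation*}
so the conclusion of the theorem reduces to showing $\mathrm{MMD}(P_n,P;\mathcal{F}) = O(1/\sqrt{n})$ for the MMD-optimal deterministic particle configuration $P_n$.

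The key idea is a comparison with i.i.d.\ empirical measures. Let $X_1,\dots,X_n \stackrel{i.i.d.}{\sim} P$ and set $\widehat{P}_n := \frac{1}{n}\sum_{i=1}^n \delta_{X_i}$. Since $P_n$ is, by assumption, a minimizer over \emph{all} $n$-atomic uniform-weighted measures, we have the pointwise bound $\mathrm{MMD}(P_n,P;\mathcal{F}) \le \mathrm{MMD}(\widehat{P}_n,P;\mathcal{F})$ for every realization of the i.i.d.\ sample. Therefore it suffices to produce \emph{one} realization (equivalently, to control the expectation) of $\mathrm{MMD}(\widehat{P}_n,P;\mathcal{F})$ at the rate $O(1/\sqrt{n})$.

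To control the expected squared MMD I will expand it using the mean-embedding formula: with $X,X' \stackrel{i.i.d.}{\sim} P$ independent of the sample,
\begin{equation*}
\mathbb{E}\bigl[\mathrm{MMD}^2(\widehat{P}_n,P;\mathcal{F})\bigr] \;=\; \frac{1}{n}\Bigl(\mathbb{E}[k(X,X)] - \mathbb{E}[k(X,X')]\Bigr) \;\le\; \frac{C_k}{n},
\end{equation*}
where $C_k := \sup_{x\in\mathcal{X}} k(x,x) < \infty$ is the standard boundedness assumption on the kernel (already used in the paper when defining $\psi_p$ as a Bochner integral). Applying Jensen's inequality then gives $\mathbb{E}[\mathrm{MMD}(\widehat{P}_n,P;\mathcal{F})] \le \sqrt{C_k/n}$, and in particular there exists a realization of the i.i.d.\ sample achieving $\mathrm{MMD}(\widehat{P}_n,P;\mathcal{F}) \le \sqrt{C_k/n}$. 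Combined with the minimality of $P_n$ and the variational bound on $|\int h\,dP_n-\int h\,dP|$, this yields the desired $O(1/\sqrt{n})$ rate, uniformly over the unit ball of $\mathcal{F}$ and crucially independent of the ambient dimension $d$.

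There is no real obstacle of depth here; the only technical point to be careful about is ensuring the minimizer $P_n$ is well defined (the $\arginf$ hypothesis handles this, though one could equivalently work with a near-minimizer achieving the infimum up to $1/\sqrt{n}$) and making explicit the mild kernel-boundedness hypothesis $\sup_x k(x,x)<\infty$ that is implicit in the MMD framework. The proof is thus short and the dimension-free rate is inherited directly from the dimension-free concentration of mean embeddings in Hilbert space.
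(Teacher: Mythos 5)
Your proposal is correct, and while it shares the paper's overall skeleton, the key step is handled by a genuinely different and in fact leaner argument. Both proofs first reduce the claim to bounding $\mathrm{MMD}(P_n,P;\mathcal{F})$ via the reproducing property and Cauchy--Schwarz (your ``variational characterization'' is the same inequality), and both then exploit minimality of the deterministic particle configuration to compare against the i.i.d.\ empirical measure $\widehat{P}_n$. Where you diverge is in how the i.i.d.\ MMD is controlled. The paper proves $\mathrm{MMD}(\widehat{P}_n,P;\mathcal{F})=O_p(1/\sqrt{n})$ via symmetrization, a Rademacher-complexity bound on the unit ball of $\mathcal{F}$, and McDiarmid's inequality, and then needs an auxiliary lemma to convert an $O_p$ bound on the random majorant into an $O$ bound on the deterministic minorant sequence. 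You instead compute $\mathbb{E}\bigl[\mathrm{MMD}^2(\widehat{P}_n,P;\mathcal{F})\bigr]=\frac{1}{n}\bigl(\mathbb{E}[k(X,X)]-\mathbb{E}[k(X,X')]\bigr)\le C_k/n$ in closed form (the computation checks out, using $\mathbb{E}[k(X,X')]=\|\psi_P\|_{\mathcal{F}}^2\ge 0$), apply Jensen, and invoke the first-moment method: some realization of the sample attains $\mathrm{MMD}(\widehat{P}_n,P;\mathcal{F})\le\sqrt{C_k/n}$, and the deterministic minimizer does at least as well. This sidesteps both the concentration step and the $O_p$-to-$O$ conversion entirely, yields a cleaner constant, and makes explicit that only the expectation (not a high-probability bound) is ever needed once minimality is in play. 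The paper's route does buy a high-probability statement for the i.i.d.\ empirical MMD itself, which is of independent interest, but for the theorem as stated your argument is shorter and equally rigorous, resting on the same kernel-boundedness hypothesis the paper already uses.
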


\begin{rem}
MMD enjoys a convergence rate of $O(n^{-1/2})$ regardless of the underlying dimension $d$ while $1$-Wasserstein distance has a convergence rate of $O(n^{-1/d})$ (if $d > 2$) \citep{fournier2015rate}, which is slower for large $d$.  
\end{rem}
% \footnote{In fact, the convergence rate can be improved to $O(1/n)$ using kernel herding technique \citep{DBLP:conf/uai/ChenWS10}.}

\begin{rem}
Theorem \ref{theorem:convergence_rate} is concerned with the convergence of the optimal deterministic particles uniquely arisen in our problem setting where we deterministically evolve a set of particles to approximate a distribution in MMD. In other words, Theorem \ref{theorem:convergence_rate} does not fully analyze Algorithm \ref{alg:mmd-drl} but addresses one relevant yet important aspect: if a set of deterministic particles are evolved to simulate a distribution in MMD, how good is the approximation. This is different from the conventional setting in MMD which are often concerned with the convergence of empirical MMD derived from i.i.d. samples of each component distribution \citep{DBLP:journals/jmlr/GrettonBRSS12} (though we leverage similar proof techniques). We also remark that in Theorem \ref{theorem:convergence_rate}, we assume the attainability of the infimum but do not specify a practical algorithm to solve this infimum. While in practice, we use neural networks to represent the deterministic particles and use SGD to solve this optimization problem (as in MMDQN), the related literature of kernel herding can in fact provide a different approach with an improved analysis. Herding \citep{DBLP:conf/icml/Welling09} is a method that generates pseudo-samples (i.e., deterministic samples) from a distribution such that nonlinear moments of the sample set closely match those of the target distribution. A greedy selection of pseudo-samples can achieve a convergence rate of $O(1/n)$ \citep{DBLP:conf/uai/ChenWS10}. Different from greedy herding, MMDQN collectively find the set of pseudo-samples using SGD at each learning step. This collective herding by SGD is more effective in the distributional RL context than greedy herding as in distributional RL we need to perform herding for multiple distributions which themselves also evolve over learning steps. 
\end{rem}

% The empirical distribution $P_n$ obtained via MMD minimization converges to the true distribution $P$ at a rate of $n^{-1/2}$. Note that this convergence rate does not depend on the dimension $d$ of the underlying space $\mathcal{X}$ while $1$-Wasserstein distance has a convergence rate of $O(n^{-1/d})$, which is slower for large $d$.

% obtained by the $n$ deterministic points $\{x_i\}_{i=1}^n$ as a minimizer of the MMD distance 

\begin{proof}
% The main idea is that the $n$ deterministic samples construct an empirical distribution which is closer in the MMD distance to the true distribution than the empirical distribution from any $n$ i.i.d. samples from $P$. 

We first present two relevant results below (whose detailed proofs are deferred to Section \ref{chap4_section_proof}) from which the theorem can follow. 

\begin{prop}
Let $(X_i)_{i=1}^n$ be $n$ i.i.d. samples of some distribution $P$. We have 
\begin{align*}
    \text{MMD}\left(\frac{1}{n} \sum_{i=1}^n \delta_{X_i}, P; \mathcal{F} \right)
    = O_p(1/\sqrt{n}),
\end{align*}
where $O_p$ denotes big-O in probability. 
\label{mmd_convergence_rate}
\end{prop}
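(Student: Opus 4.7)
The plan is to exploit the mean embedding formulation of MMD to express the quantity of interest as the RKHS norm of an empirical mean of i.i.d.\ Hilbert-space-valued random elements, compute its expected square exactly, and then convert this second-moment control into an $O_p(1/\sqrt{n})$ rate via Markov's inequality.

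First, I would rewrite the empirical MMD in its embedding form. Setting $P_n := \frac{1}{n}\sum_{i=1}^n \delta_{X_i}$, linearity of the Bochner integral gives $\psi_{P_n} = \frac{1}{n}\sum_{i=1}^n k(X_i,\cdot)$, so that
\[
\text{MMD}(P_n,P;\mathcal{F}) \;=\; \Bigl\| \frac{1}{n}\sum_{i=1}^n \bigl(k(X_i,\cdot) - \psi_P \bigr) \Bigr\|_{\mathcal{F}}.
\]
The random elements $Y_i := k(X_i,\cdot) - \psi_P \in \mathcal{F}$ are i.i.d.\ with mean zero in $\mathcal{F}$, which is the key structural observation that reduces the problem to a basic variance computation for an empirical mean in a Hilbert space.

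Second, I would compute $\mathbb{E}\bigl[\text{MMD}^2(P_n,P;\mathcal{F})\bigr]$ directly. Expanding the squared norm and using independence of $(X_i)$ to kill cross terms gives
\[
\mathbb{E}\bigl[\text{MMD}^2(P_n,P;\mathcal{F})\bigr] \;=\; \frac{1}{n^2}\sum_{i=1}^n \mathbb{E}\|Y_i\|_{\mathcal{F}}^2 \;=\; \frac{1}{n}\bigl(\mathbb{E}[k(X,X)] - \mathbb{E}[k(X,X')]\bigr),
\]
where $X,X'$ are independent copies from $P$ and the last equality uses the reproducing property $\langle k(x,\cdot),k(y,\cdot)\rangle_{\mathcal{F}} = k(x,y)$. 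Under the standard boundedness assumption on $k$ maintained throughout the paper (e.g.\ $k\le 1$), this is $O(1/n)$.

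Finally, I would apply Markov's inequality: for any $C>0$,
\[
\mathbb{P}\Bigl(\text{MMD}(P_n,P;\mathcal{F}) > C/\sqrt{n}\Bigr) \;\le\; \frac{n\,\mathbb{E}[\text{MMD}^2(P_n,P;\mathcal{F})]}{C^2} \;\le\; \frac{\mathbb{E}[k(X,X)] - \mathbb{E}[k(X,X')]}{C^2},
\]
which can be made arbitrarily small by choosing $C$ large; this is precisely the definition of $\text{MMD}(P_n,P;\mathcal{F}) = O_p(1/\sqrt{n})$. The only real obstacle is making sure the second-moment computation is legitimate, which requires $\mathbb{E}[k(X,X)] < \infty$; since the paper assumes a continuous kernel bounded by $1$, this is immediate, and no heavier machinery (e.g.\ McDiarmid) is needed for this weak-convergence rate, although one could upgrade to an exponential tail bound using McDiarmid's inequality on the bounded-differences functional $(X_1,\ldots,X_n)\mapsto \text{MMD}(P_n,P;\mathcal{F})$ if desired.
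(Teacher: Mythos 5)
Your proof is correct, but it takes a genuinely different and more elementary route than the paper. The paper proves the stronger Proposition \ref{prop:mmd_final_bound}: it first controls the deviation of $\xi(X_1,\dots,X_n)=\text{MMD}(P_n,P;\mathcal{F})$ from its expectation via McDiarmid's inequality (using the bounded-differences property $\|\delta_i\xi\|_\infty\le 2\sqrt{B}/n$), and then bounds $\mathbb{E}[\xi]$ by $2\sqrt{B/n}$ through symmetrization and a Rademacher-complexity calculation (Lemma \ref{lemma:bound_in_expectation}); the $O_p(1/\sqrt{n})$ statement is read off by inverting the resulting exponential tail. You instead observe that $\text{MMD}(P_n,P;\mathcal{F})=\bigl\|\frac1n\sum_i Y_i\bigr\|_{\mathcal{F}}$ with $Y_i=k(X_i,\cdot)-\psi_P$ i.i.d.\ and mean zero, compute $\mathbb{E}[\text{MMD}^2]=\frac1n\bigl(\mathbb{E}[k(X,X)]-\mathbb{E}[k(X,X')]\bigr)$ exactly (the cross terms vanish by independence), and finish with Markov/Chebyshev. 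Both arguments are valid; what each buys is different. Your second-moment route is shorter, yields the exact constant, and only needs $\mathbb{E}[k(X,X)]<\infty$ rather than uniform boundedness of the kernel, so it is in fact slightly more general for the $O_p$ claim as stated. The paper's route is heavier but delivers a sub-Gaussian high-probability bound, which is the form actually reused elsewhere in the paper's localization and uniform-convergence arguments; a plain in-probability rate would not suffice for those downstream purposes. Your closing remark that one could upgrade to an exponential tail via McDiarmid on the bounded-differences functional is exactly the step the paper takes.
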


\begin{lem}
Let $(a_n)_{n \in \mathbb{N}} \subset \mathbb{R}$ and $(X_n)_{n \in \mathbb{N}} \subset \mathbb{R}$ be sequences of deterministic variables and of random variables, respectively, such that for all $n$, $|a_n| \leq |X_n|$ almost surely (a.s.). Then, if $X_n = O_p(f(n))$ for some function $f(n) >0$, we have $a_n = O(f(n))$. 
\label{probability_bound_equals_standard_bound}
\end{lem}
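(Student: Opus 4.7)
The plan is to unpack the definition of $O_p$ and use the fact that when one side of the a.s. inequality $|a_n| \le |X_n|$ is a constant, any violation of a deterministic bound on $|a_n|$ forces a violation of the corresponding probabilistic bound on $|X_n|$ with probability one. First, recall the definition: $X_n = O_p(f(n))$ means that for every $\epsilon > 0$ there exist $M_\epsilon > 0$ and $N_\epsilon \in \mathbb{N}$ such that $\mathbb{P}(|X_n| > M_\epsilon f(n)) < \epsilon$ for all $n \ge N_\epsilon$. Our target is the non-probabilistic statement that there exist $M > 0$ and $N$ with $|a_n| \le M f(n)$ for all $n \ge N$.

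Next I would fix a single convenient tolerance, say $\epsilon = 1/2$, and extract the associated constants $M$ and $N$ from the definition of $O_p$, giving $\mathbb{P}(|X_n| > M f(n)) < 1/2$ for all $n \ge N$. The central step is then a one-line contradiction argument: suppose, toward a contradiction, that for some $n \ge N$ we had $|a_n| > M f(n)$. Since $a_n$ is deterministic, the event $\{|X_n| \ge |a_n|\}$ has probability $1$ by hypothesis, so the event $\{|X_n| > M f(n)\}$ contains $\{|X_n| \ge |a_n|\}$ and hence also has probability $1$. This directly contradicts $\mathbb{P}(|X_n| > M f(n)) < 1/2$.

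Therefore $|a_n| \le M f(n)$ for all $n \ge N$, which is exactly $a_n = O(f(n))$ (we may absorb the finitely many indices $n < N$ into the constant if one prefers a bound holding for all $n$, using $M' = \max\{M, \max_{n < N} |a_n|/f(n)\}$, provided $f(n) > 0$ for all $n$ as assumed). The only mildly delicate point — which is really not an obstacle — is to notice that the a.s. inequality $|a_n| \le |X_n|$ transfers cleanly to the deterministic inequality between $|a_n|$ and any number $c$ for which $\mathbb{P}(|X_n| > c) < 1$: if $|a_n| > c$, the inclusion of events forces $\mathbb{P}(|X_n| > c) = 1$, which is the whole engine of the proof. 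No further machinery (tightness, Borel–Cantelli, etc.) is needed.
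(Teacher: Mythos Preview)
Your proof is correct and follows essentially the same approach as the paper's: both exploit the fact that for deterministic $a_n$ the event $\{|a_n|/f(n) \le M\}$ has probability $0$ or $1$, and use the a.s.\ inequality to compare this with the $O_p$ bound on $|X_n|/f(n)$; the paper argues directly via the event inclusion $\{|X_n|/f(n) \le M_\epsilon\} \subseteq \{|a_n|/f(n) \le M_\epsilon\}$ and picks any $\epsilon \in (0,1)$, while you take the contrapositive with $\epsilon = 1/2$, which is the same argument in different clothing.
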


It follows from the Cauchy-Schwartz inequality in $\mathcal{F}$ that for any function $h$ in the unit ball of $\mathcal{F}$, we have
\begin{align}
    \bigg|\int_{\mathcal{X}} h(x) dP_n(x) - \int_{\mathcal{X}} h(x) dP(x)\bigg| \nonumber &\leq \| h\|_{\mathcal{F}} \cdot \bigg \|\int_{\mathcal{X}} k(x, \cdot) dP_n(x) - \int_{\mathcal{X}} k(x, \cdot) dP(x)\bigg \|_{\mathcal{F}} \nonumber \\
    &\leq \text{MMD}\left(\frac{1}{n} \sum_{i=1}^n \delta_{x_i}, P; \mathcal{F} \right). 
    \label{eq:moment_bounded_by_mmd}
\end{align}

Now by letting $X_n = \text{MMD}(\frac{1}{n} \sum_{i=1}^n \delta_{\tilde{x}_i}, P; \mathcal{F})$, $a_n = \text{MMD}(\frac{1}{n} \sum_{i=1}^n \delta_{x_i}, P; \mathcal{F})$, and $f(n) = 1/\sqrt{n}$, and noting that $a_n \leq X_n, \forall n$, Proposition \ref{mmd_convergence_rate}, Lemma \ref{probability_bound_equals_standard_bound} and Equation (\ref{eq:moment_bounded_by_mmd}) immediately imply Theorem \ref{theorem:convergence_rate}. 
\end{proof}

\begin{figure*}
\centering
\includegraphics[scale=0.52]{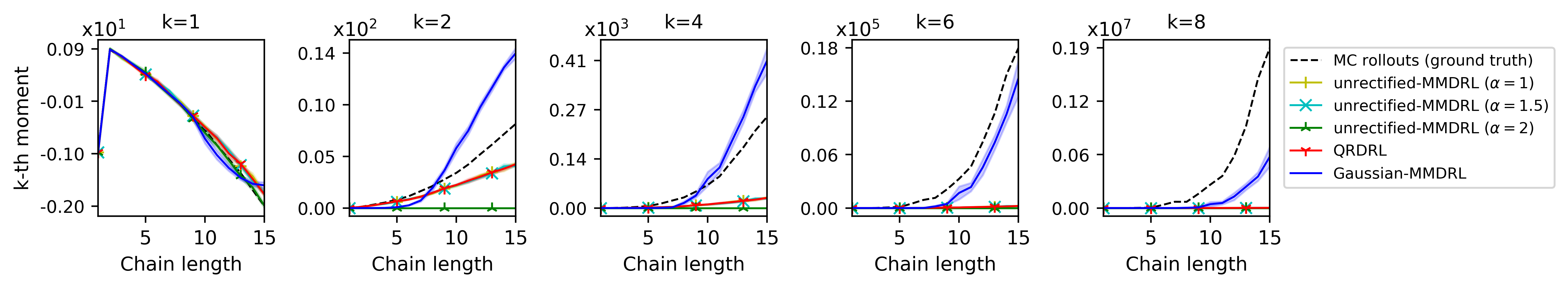}
\caption{Performance of different methods in approximating the optimal policy's return distribution at the initial state in the chain environment of various chain lengths $K = \{1,2,...,15\}$. The distribution approximation is evaluated in terms of how well a method can approximate the $k$-th central moment (except $k=1$ means the expectation) of the target distribution. $95\%$ C.I. with $30$ seeds. A variant (Gaussian-MMDRL) of our proposed MMDRL matches the MC rollouts (representing ground truth) much better than QRDRL.}
\label{fig:chain_experiment_result}
\end{figure*}

% \begin{figure}[h!]
%     \centering
%     \includegraphics[scale=0.5]{LaTeX/figs/chain.pdf}
%     \caption{An illustration of a variant of the classic chain MDP with the chain length $K$.}
%     \label{fig:chain_mdp}
% \end{figure}

\section{Experiment}

We first present results with a tabular version of MMDRL to illustrate its behaviour in the distribution approximation task. We then combine the MMDRL update to the DQN-style architecture to create a novel deep RL algorithm namely MMDQN, and evaluate it on the Atari-57 games. 
% We give full details of the architectures and hyperparameters used in the experiments in Section \ref{Chap4_section_alg_detail}.

\subsection{Tabular Policy Evaluation}

We empirically evaluate that MMDRL with Gaussian kernels ({Gaussian-MMDRL}) can approximately learn the moments of a policy's return distribution as compared to the MMDRL with unrectified kernels ({unrectified-MMDRL}) and the baseline QRDRL. 

We use a variant of the classic chain environment \citep{DBLP:conf/icml/RowlandDKMBD19} . The chain environment of length $K$ is a chain of $K$ states $s_0, ..., s_{K-1}$ where $s_0$ is the initial state and $s_{K-1}$ is the terminal state (see Figure \ref{fig:chain_mdp}). In each state, there are only two possible actions: (i) forward, which moves the agent one step to the right with probability $0.9$ and to $s_0$ with probability $0.1$, or (ii) backward, which transitions the agent to $s_0$ with probability $0.9$ and one step to the right with probability $0.1$. The agent receives reward $-1$ when transitioning to the initial state $s_0$, reward $1$ when reaching the terminal state $s_{K-1}$, and $0$ otherwise. The discount factor is $\gamma = 0.9$. We estimate $\mu^*_0$ the return distribution at the initial state of the optimal policy $\pi^*$ which selects forward action in every state. The longer the chain length $K$, the more stochastic the optimal policy's return distribution at $s_0$. We use $10,000$ Monte Carlo rollouts under policy $\pi^*$ to compute the central moments of  $\mu^*_0$ as ground truth values. Each method uses only $N=30$ samples to approximate the target distribution $\mu^*_0$. The algorithm details are presented in Algorithm \ref{alg:tabular_policy_evaluation}.

\begin{algorithm}[t]
\begin{algorithmic}[1]
\STATE \textbf{Input}: Number of particles $N$, kernel $k$, discount factor $\gamma \in [0,1]$, evaluation policy $\pi$, learning rate $\alpha_t$, tabular particles $\{\theta_i(s,a)\}_{i=1}^N$

\STATE \textbf{Initialization}: initial particles $\theta$, initial copy particles $\theta^{-} \leftarrow \theta$, and initial state $s_0$

\FOR{t=1,2,...}
\STATE Take action $a_t = \pi(s_t)$ and observe $s_{t+1} \sim P(\cdot|s_t, a_t)$ and $r_t \sim \mathcal{R}(s_t, a_t)$. 

\STATE Compute Bellman target particles 
\begin{align*}
    \hat{T} \theta_i^{-} \leftarrow r_t + \gamma \theta^{-}_i(s_{t+1}, \pi(s_{t+1})), \forall i \in \{1,...,N\}.  
\end{align*}

\STATE Compute TD gradient: 

For MMDRL: 
\begin{align*}
    g_i \leftarrow \frac{\partial}{ \partial \theta_i(s_t,a_t)} \text{MMD}_b^2( \{\theta_i(s_t,a_t)\}_{i=1}^N,  \{\hat{T} \theta_i^{-}\}_{i=1}^N; k), \forall i \in \{1,...,N\} 
\end{align*}
For QRDRL: 
\begin{align*}
   g_i \leftarrow \frac{\partial}{ \partial \theta_i(s_t,a_t)} \frac{1}{N} \sum_{j=1}^N \left( \hat{T} \theta_j^{-} - \theta_i(s_t,a_t) \right) \left( \frac{2i -1}{2N} - 1_{\{ \hat{T}\theta_j^{-} <  \theta_i(s_t,a_t)\}}  \right), \forall i
\end{align*}

\STATE Update 
\begin{align*}
    \theta_i(s_t, a_t) &\leftarrow \theta_i(s_t, a_t) - \alpha_t g_i,\forall i \in \{1,...,N\}\\ 
    \theta^{-} &\leftarrow \theta 
\end{align*}

\ENDFOR

\STATE \textbf{Output}: Approximate distribution $\mu(s,a) = \frac{1}{N} \sum_{i=1}^N \delta_{\theta_i(s,a)}$

\caption{Tabular policy evaluation}
\label{alg:tabular_policy_evaluation}
\end{algorithmic}
\end{algorithm}

In the tabular MDP, the particles $Z_{\theta}(s,a)$ reduces to tabular values $\sloppy \{ \theta_i(s,a)\}_{i=1}^N$, so a return distribution is represented as a mixture of Diracs $\mu(s,a) = \frac{1}{N} \sum_{i=1}^N \delta_{\theta_i(s,a)}$. 
The algorithm details used for tabular policy evaluation are presented in Algorithm \ref{alg:tabular_policy_evaluation}. In both tabular MMDRL and QRDRL cases, the TD gradients have a closed-form expression which we explicitly used in our tabular experiment. The detailed values of each (hyper-)parameters of the algorithms used in our experiment are reported in Table \ref{tab:tabular_policy_evaluation_params}. 
\begin{table}
    \centering
    \begin{tabular}{l|l} 
        \textbf{(Hyper-)Parameters} & \textbf{Values} \\
        \hline
        \hline
        Learning rate schedule & $\alpha_t = \frac{1}{t^{0.2}}$  \\
        Particle initialization & $\mathcal{N}(-1, 0.08)$ \\ 
        Number of episodes per iteration & $100$ \\ 
        Number of iterations & $15$ \\ 
        Number of particles $N$ & $30$ \\ 
        Number of MC rollouts & $10,000$ \\ 
        Kernel bandwidth $h$ (MMDRL only) & $\{8,10,12\}$ \\ 
        Quantiles (QRDRL only) & $\{ \frac{2i-1}{2N}: 1 \leq i \leq N \}$
    \end{tabular}
    \caption{The (hyper-)parameters of the algorithms used in our tabular policy evaluation experiment with the Chain MDP.}
    \label{tab:tabular_policy_evaluation_params}
\end{table}
% While QR assigns the $N$-sample budget to learn the quantile values of $\mu^*_0$ at fixed quantile location $\frac{2i-1}{2N}, 1 \leq i \leq N$, MMD learns $N$ unrestricted statistics which are more flexible. 

The result is presented in Figure \ref{fig:chain_experiment_result}. While all the methods approximate the expectation of the target distribution well, their approximation qualities differentiate greatly when it comes to higher order moments. Gaussian-MMDRL, though with only $N=30$ particles, can approximate higher order moments more reasonably in this example whereas the rest highly suffer from underestimation.  We also experimented with the kernel considered in Theorem \ref{theorem:metric_property} in this tabular experiment and the Atari game experiment (next part) but found that it is highly inferior to the other kernel choices (even though it has an exact moment matching form as compared to Gaussian kernels) thus we did not include it (we speculate that the shift invariance of Gaussian kernels seems effective when interacting with transition samples from the Bellman dynamics). 

To demonstrate the effectiveness of MMDRL at scale, we combine the MMDRL in Algorithm \ref{alg:mmd-drl} with DQN-like architecture to obtain a deep RL agent namely MMDQN which is presented in details in Algorithm \ref{alg:mmd-dqn}. Specifically in this work, we used the same architecture as QR-DQN \citep{DBLP:conf/aaai/DabneyRBM18} for simplicity but more advanced modeling improvements from IQN \citep{DBLP:conf/icml/DabneyOSM18} and FQF \citep{yang2019fully} can naturally be used in combination to our framework. We use the same architecture of DQN except that we change the last layer to the size of $N \times |\mathcal{A}|$, instead of the size $|\mathcal{A}|$. In addition, we replace the squared loss in DQN by the empirical MMD loss. 

\begin{algorithm}[t]
\begin{algorithmic}[1]
\STATE \textbf{Input}: Number of particles $N$, kernel $k$ (e.g., Gaussian kernel), discount factor $\gamma \in [0,1]$, learning rate $\alpha$, replay buffer $\mathcal{M}$, main network $Z_{\theta}$, target network $Z_{\theta^{-}}$, and a policy $\pi$ (e.g., $\epsilon$-greedy policy w.r.t. $Q_{\theta}(s,a) =  \frac{1}{N}\sum_{i=1}^N Z_{\theta}(s, a)_i, \forall s,a$). 
% \KwInput{Sample transition $(x, a, r, x')$} 
%   \KwInitialization{$\theta$}

\STATE Initialize $\theta$ and $\theta^{-} \leftarrow \theta$  \; 

\FOR{t = 1,2,...}

\STATE Take action $a_t \sim \pi(\cdot | s_t; \theta)$, receive reward $r_t \sim \mathcal{R}(\cdot| s_t, a_t)$, and observe $s_{t+1} \sim P(\cdot| s_t, a_t)$  

\STATE Store $(s_t, a_t, r_t, s_{t+1})$ to the replay buffer $\mathcal{M}$ 

\STATE Randomly draw a batch of transition samples $(s,a,r,s')$ from $\mathcal{M}$ 

\STATE Compute a greedy action 
\begin{align*}
    a^* \leftarrow \operatorname*{arg\,max}_{a' \in \mathcal{A}}  \frac{1}{N}\sum_{i=1}^N Z_{\theta^{-}}(s', a')_i
\end{align*} 

\STATE Compute the empirical Bellman target measure 
\begin{align*}
    \hat{T} Z_i^{-} \leftarrow r + \gamma Z_{\theta^{-}}(s', a^*)_i, \forall i \in \{1,...,N\}
\end{align*}

\STATE Update the main network 
\begin{align*}
    \theta \leftarrow \theta - \alpha \Delta_{\theta} \text{MMD}_b \left(\{Z_{\theta}(s,a)_i\}_{i=1}^N, \{\hat{T} Z_i^{-}\}_{i=1}^N; k \right)
\end{align*}
\; 

\STATE Periodically update the target network $\theta^{-} \leftarrow \theta$ 

\ENDFOR

\caption{Moment matching deep Q-networks}
\label{alg:mmd-dqn}
\end{algorithmic}
\end{algorithm}

We expect that our framework would also benefit from recent orthogonal improvements to DQN such as double-DQN \citep{DBLP:conf/aaai/HasseltGS16}, the dueling architecture \citep{DBLP:conf/icml/WangSHHLF16} and prioritized replay \citep{DBLP:journals/corr/SchaulQAS15} but did not include these for simplicity. In Table \ref{tab:hyperparameters}, we provide the hyperparameter details of QR-DQN and MMDQN used in the Atari games. The hyperparameters in MMDQN that share with QR-DQN are intentionally set the same to allow for fair comparison.

\begin{table}[h]
    \centering
    % \setlength{\extrarowheight}{4pt}
    % \begin{adjustbox}{width=0.48\textwidth}
    \begin{tabular}{l|l|l}
    \textbf{Hyperparameters}     &  \textbf{QR-DQN} & \textbf{MMDQN} \\
    \hline 
    \hline
    Learning rate       & $0.00005$ & $0.00005$ \\ 
    Optimizer & Adam & Adam \\ 
    $\epsilon_{ADAM}$ & $0.0003125$ & $0.0003125$ \\ 
    $N$ & $200$ & $200$ \\ 
    Quantiles & $\{ \frac{2i-1}{2N}: 1 \leq i \leq N \}$ & N/A \\
    Kernel bandwidth & N/A & $\{1,2,...,9,10\}$ \\
    \end{tabular}
    % \end{adjustbox}
    \caption{The MMDQN hyperparameters as compared to those of QR-DQN.}
    \label{tab:hyperparameters}
\end{table}

% \subsubsection{Evaluation protocol} We evaluated our algorithm on 55  Atari 2600 games \citep{Bellemare_2013} following the standard training and evaluation procedures \citep{mnih2015human,DBLP:conf/aaai/HasseltGS16} (the full details are in appendix B). We computed human normalized scores for each agent per game.
% We extend MMDRL to DQN-like architecture to create a novel deep RL, namely MMDQN. 
% \setcounter{table}{1}
 We evaluated our algorithm on 55 \footnote{We failed to include Defender and Surround games using OpenAI and Dopamine framework.} Atari 2600 games \citep{Bellemare_2013} following the standard training and evaluation procedures \citep{mnih2015human,DBLP:conf/aaai/HasseltGS16}. For every 1M training steps in the environment, we computed the average scores of the agent by freezing the learning and evaluating the latest agent for 500K frames. We truncated episodes at 108K frames (equivalent to 30 minutes of game playing). We used the 30 no-op evaluation settings where we play a random number (up to 30) of no-op actions at the beginning of each episode during evaluation. We report the best score for a game by an algorithm which is the algorithm’s highest evaluation score in that game across all evaluation iterations during the training course (given the same hyperparameters are shared for all games). % (in both training and evaluation). 

The human normalized scores of an agent per game is the agent's normalized scores such that 0\% corresponds to a random agent and 100\% corresponds to the average score of a human expert. The human-normalized scores used in the chapter are explicitly defined by 
\begin{align*}
    score = \frac{agent - random}{human - random},
\end{align*}
where $agent, human, random$ denotes the raw scores (undiscounted returns) for the given agent, the reference human player and the random player \citep{mnih2015human}, resp.,  in each game. 

From the human normalized scores for an agent across all games, we extracted three statistics for the agent's performance: the median, the mean and the number of games where the agent's performance is above the human expert's performance. 

The per-game percentage improvement (PI) of MMDQN over QR-DQN is computed as follow 
\begin{align*}
    PI = \frac{ score_{MMDQN} - score_{QR-DQN}  }{score_{QR-DQN}} \times 100 \%,
\end{align*}
where $ score_{MMDQN}$ and $score_{QR-DQN}$ are the best raw evaluation score of MMDQN and QR-DQN in the considered game. The log-scaled percentage improvement is computed as 
\begin{align*}
    log\_percentage\_improvement = 1_{\{ PI \geq 0 \}} \log(|PI| + 1).
\end{align*}

\subsubsection*{Baselines}
We categorize the baselines into two groups. The first group contains \textit{comparable} methods: DQN, PRIOR., C51, and QR-DQN-1, where DQN \citep{mnih2015human} and PRIOR. (prioritized experience replay \citep{DBLP:journals/corr/SchaulQAS15}) are classic baselines. The second group includes \textit{reference} methods: RAINBOW \citep{DBLP:conf/aaai/HesselMHSODHPAS18}, IQN, and FQF, which contain algorithmic/modeling improvements orthogonal to MMDQN: RAINBOW combines C51 with prioritized replay and $n$-step update while IQN and FQF contain modeling improvements as described in the related work section. Since in this work we used the same architecture as QR-DQN and C51 for MMDQN, we directly compare MMDQN with the first group while including the second group for reference. 

% We directly compare our MMDQN to QR-DQN-1 \citep{DBLP:conf/aaai/DabneyRBM18} and C51 \citep{Bellemare_2013} as they all share the same network architecture. We also compare with prioritized experience replay \citep{DBLP:journals/corr/SchaulQAS15}. We include as reference methods IQN, Rainbow \citep{DBLP:conf/aaai/HesselMHSODHPAS18}, and FQF. These reference methods include several algorithmic/modeling advances: IQN uses implicit models to transform from the random samples of a base sampling distribution, Rainbow combines C51 with prioritized replay and $n$-step update, and FQF is similar to IQN but with optimized base samples instead of random ones. Since these improvements are orthogonal to our MMDQN, we consider IQN, Rainbow, FQF as reference methods rather than direct comparisons. 

% These reference methods include several algorithmic/modeling advances: IQN uses implicit models to transform from the random samples of a base sampling distribution, Rainbow combines C51 with prioritized replay and $n$-step update, and FQF is similar to IQN but with optimized base samples instead of random ones. 

% which uses implicit models to generate quantiles of the return distributions, Rainbow \citep{DBLP:conf/aaai/HesselMHSODHPAS18} which combines C51 with prioritized replay and n-step updates, and Fully parameterized Quantile Function (FQF) \citep{yang2019fully} which combines C51 with IQN. 

\subsubsection*{Hyperparameter Setting}
For fair comparison with QR-DQN, we used the same hyperparameters: $N=200$, Adam optimizer \citep{DBLP:journals/corr/KingmaB14} with learning rate $lr=0.00005$ and tolerance parameter $\epsilon_{ADAM} = 0.01/32$. We used $\epsilon$-greedy policy with $\epsilon$ being decayed at the same rate as in DQN but to a lower value $\epsilon=0.01$ as commonly used by the distributional RL methods. We used a target network to compute the distributional Bellman target as with DQN. Our implementation is based on OpenAI Gym \citep{brockman2016openai} and the Dopamine framework \citep{DBLP:journals/corr/abs-1812-06110}.

% and an optimal kernel selection is an active problem
\subsubsection*{Kernel Selection}
We used Gaussian kernels $k_h(x,y) = \exp \left( - (x-y)^2 /h \right)$ where $h>0$. The kernel bandwidth $h$ is crucial to the statistical quality of MMD: overestimated bandwidth results in a flat kernel while underestimated one makes the decision boundary highly irregular. 
% If the bandwidth is overestimated, the kernel is almost flat and the projection into the high-dimensional space becomes almost useless. If underestimated, the decision boundary becomes irregular and highly sensitive to noisy training data.  
We utilize the kernel mixture trick in \citep{DBLP:conf/icml/LiSZ15} which is a mixture of $K$ kernels covering a range of bandwidths $k(x,y) = \sum_{i=1}^K k_{h_i}(x,y)$. The Gaussian kernel with a bandwidth mixture yields much a better performance than that with individual bandwidth and unrectified kernels in 6 tuning games: Breakout, Assault, Asterix, MsPacman, Qbert, and BeamRider (see Figure \ref{fig:hyperparameter_tunning} (a)). Figure \ref{fig:hyperparameter_tunning} (b) shows the sensitivity of MMDQN in terms of the number of particles $N$ in the 6 tuning games where too small $N$ adversely affects the performance.  

\begin{figure}
    \centering
    \includegraphics[scale=0.8]{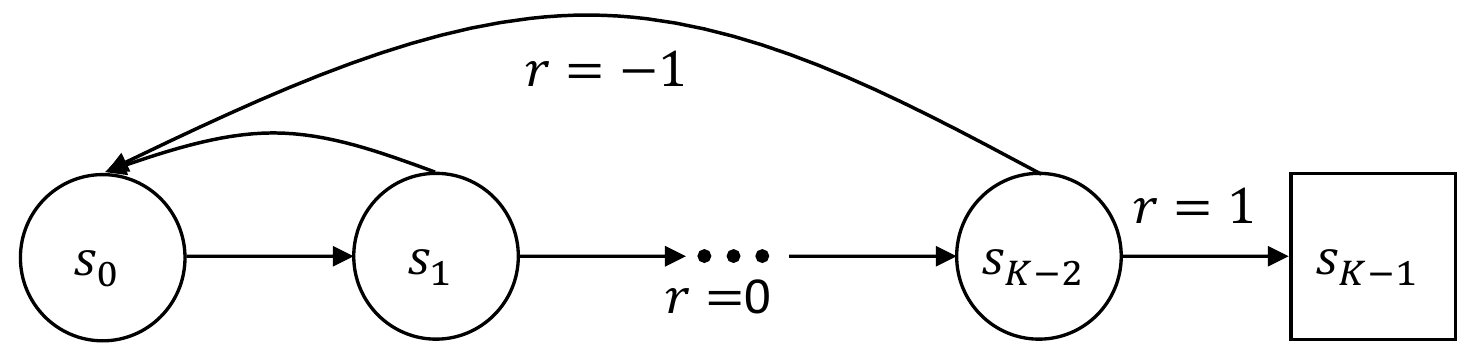}
    \caption{An illustration of a variant of the classic chain MDP with the chain length $K$.}
    \label{fig:chain_mdp}
\end{figure}

\subsection{Atari Games}
\begin{figure*}[h]
    \centering
    \begin{minipage}[t]{0.53\textwidth}
        \centering
        \includegraphics[scale=0.5]{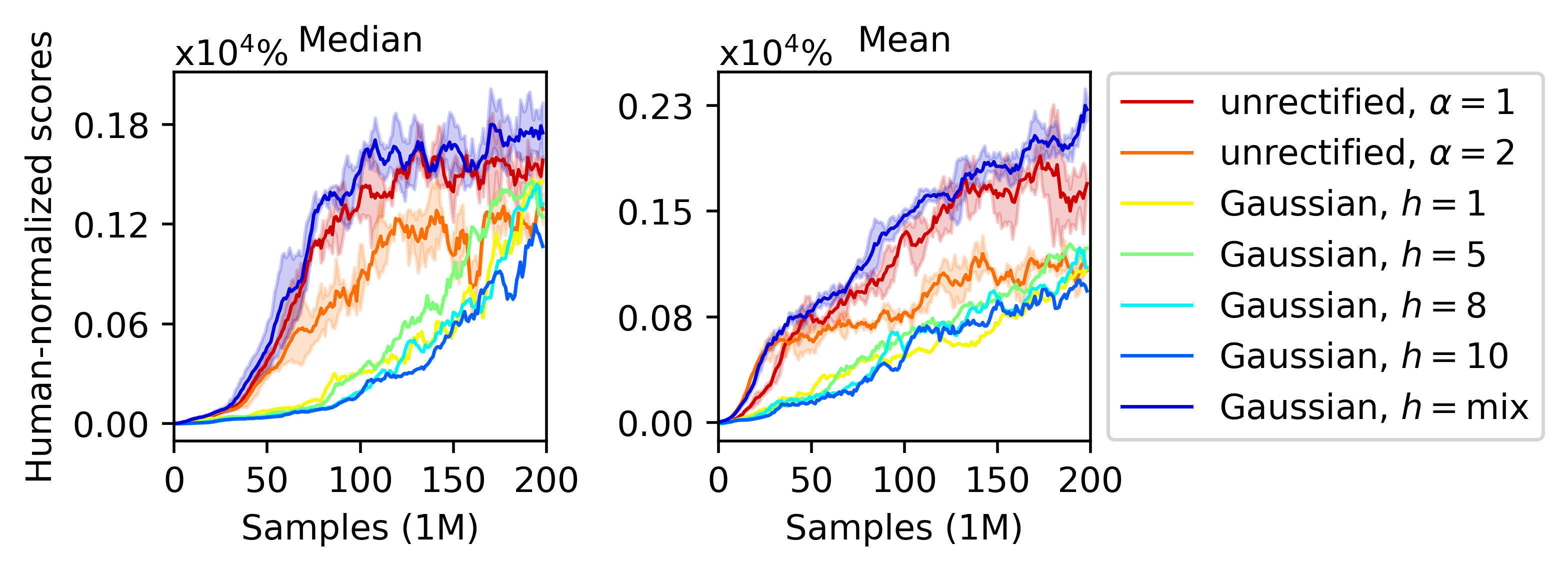} % first figure itself
        \subcaption{Different kernel bandwidths (at $N=200$).}
    \end{minipage}\hfill
    \begin{minipage}[t]{0.46\textwidth}
        \centering
        \includegraphics[scale=0.5]{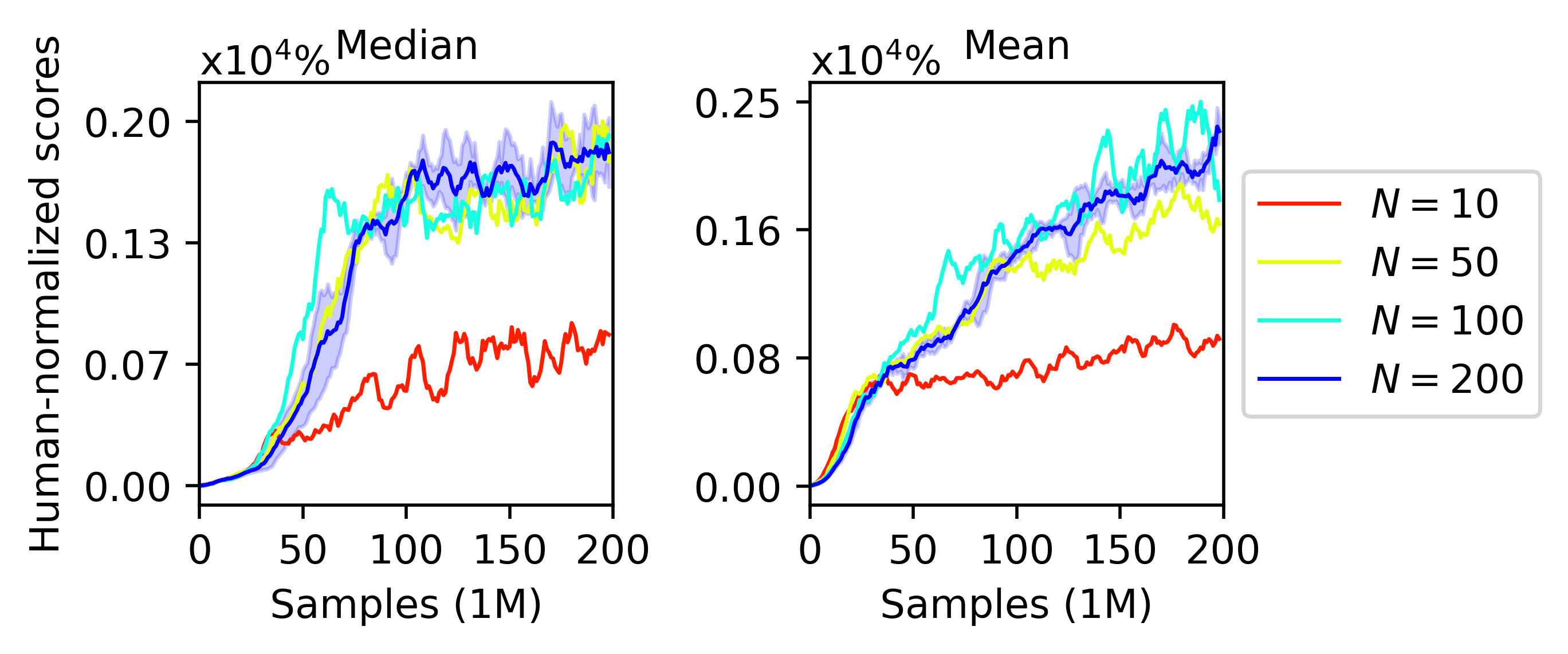} % second figure itself
        \subcaption{Different values of $N$ (at $h=mix$).}
    \end{minipage}
    \caption{The sensitivity of (the human-normalized scores of) MMDQN in the 6 tuning games with respect to: (a) the kernel choices (Gaussian kernels with different bandwidths $h$ and unrectified kernels), and (b) the number of particles $N$. Here $h=mix$ indicates the mixture of bandwidth values in $\{1,2,...,10\}$. All curves are smoothed over $5$ consecutive iterations. $95\%$ C.I. for the $h=mix$, $N=200$, and unrectified kernel curves ($3$ seeds) and $1$ seed for the other curves.}
    
    \label{fig:hyperparameter_tunning}
\end{figure*}

% In practice, the kernel bandwidth is heavily domain-dependent. For example, a common heuristic for the kernel bandwidth in hypothesis testing is the median trick  $h = 2 \texttt{med}^2$ where $\texttt{med}$ is the median distance of the aggregated points  \citep{DBLP:journals/jmlr/GrettonBRSS12}. \citeauthor{DBLP:conf/nips/SriperumbudurFGLS09} maximize the test statistic MMD over a family of kernels to effectively distinguish two distributions. 
% In \cite{DBLP:conf/nips/GrettonSSSBPF12}, the goal is to optimize the performance of the hypothesis test, thus they minimize the Type II error given an upper bound on Type I error. 
% In our setting, we instead use a mixture of $K$ kernels covering a range of bandwidths $k(x,y) = \sum_{i=1}^K k_{h_i}(x,y)$ . In practice, we informally searched on several bandwidth ranges and observed that the simple bandwidth range $\{1,2,..., 10\}$  yields good results. 

% Figure \ref{fig:hyperparameter_tunning} shows the sensitivity of MMDQN's performance with the number of particles $N$ and kernel choices in 6 tuning games: Breakout, Assault, Asterix, MsPacman, Qbert, and BeamRider.
\begin{table}
    \centering
    \setlength{\extrarowheight}{1.5pt}
    \begin{tabular}{l|l|l|l|l}
         & \textbf{Mean} & \textbf{Median} & $>$\textbf{Human} & $>$\textbf{DQN}  \\
    \hline 
    \hline
    DQN & 221\% & 79\% & 24 & 0 \\ 
    PRIOR. & 580\% & 124\% & 39 & 48\\ 
    C51 & 701\% & 178\% & 40 & 50 \\ 
    QR-DQN-1 & 902\% & 193\% & 41 & 54 \\ 
    \hline 
    RAINBOW & 1213\% & 227\% & 42 & 52 \\ 
    IQN & 1112\% & 218\% & 39 & 54 \\
    FQF & 1426\% & 272\% & 44 & 54  \\
    \hline 
    \textbf{MMDQN} & \textbf{1969}\% & 213\% & 41 & \textbf{55}
    \end{tabular}
    \caption{Mean and median of \textit{best} human-normalized scores across 55 Atari 2600 games. The results for MMDQN are averaged over 3 seeds and the reference results are from \citep{yang2019fully}.}
    % We compare MMDQN mainly against the first group. We also include the second group (RAINBOW, IQN, FQF) which use orthogonal improvements, implicit quantile networks or combine different distributional RL methods together
    \label{tab:main_table}
\end{table} 
The main empirical result is provided in Table \ref{tab:main_table} where we compute the mean and median of \textit{best} human normalized scores across 55 Atari games in the 30 no-op evaluation setting. The table shows that MMDQN significantly outperforms the comparable methods in the first group (DQN, PRIOR., C51 and QR-DQN-1) in all metrics though it shares the same network architecture with C51 and QR-DQN-1. Although we did not include any orthogonal algorithmic/modelling improvements from the reference methods to MMDQN, MMDQN still performs comparably with these methods and even achieve a state-of-the-art mean human-normalized score.  In Figure \ref{fig:percentage_improvement} we also provide the percentage improvement per-game of MMDQN over QR-DQN-1 where MMDQN offers significant gains over QR-DQN-1 in a large array of games. 

\begin{figure}
    \centering
    \includegraphics[scale=1]{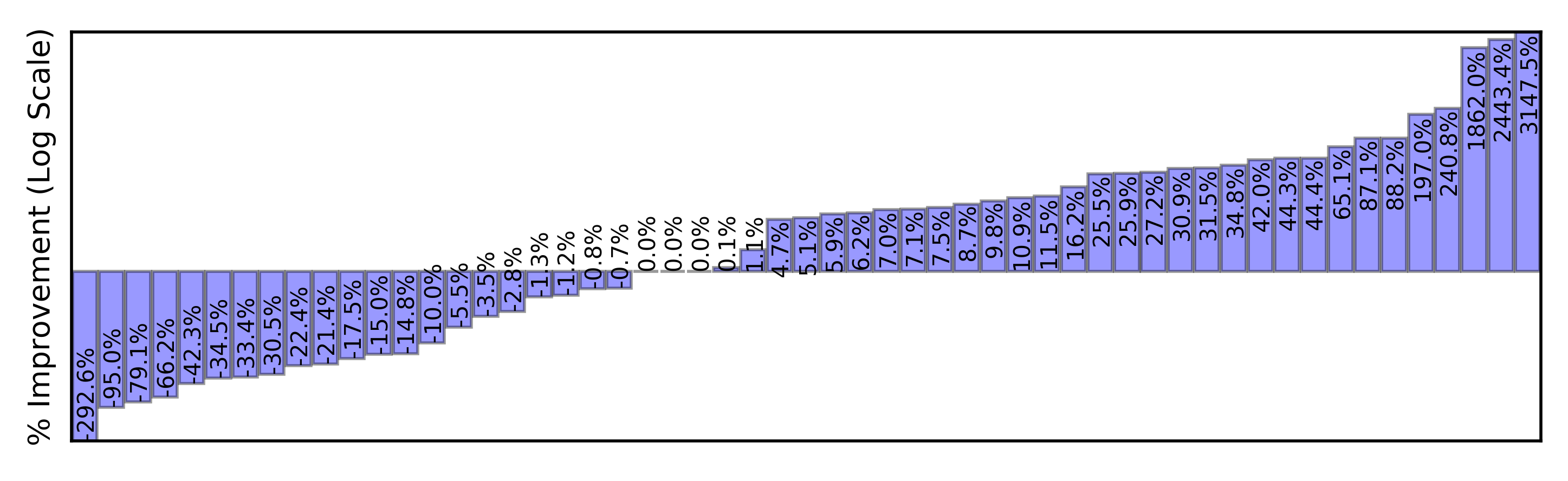}
    \caption{Percentage improvement per-game of MMDQN over QR-DQN-1.}
    \label{fig:percentage_improvement}
\end{figure}

% \subsection{Additional experiment}

In Figure \ref{fig:vis_mmd_dqn_breakout} we visualize the behaviour of our MMDQN in the Breakout game. Three rows correspond to  3 consecutive frames of the Breakout game accompanied by the approximate return distributions learnt by MMDQN. Since the particles learnt by MMDQN represent empirical samples of the return distributions, we can visualize the return distributions via the learnt particles by plotting the histogram (with $17$ bins in this example) of these particles. The learnt particles in MMDQN can maintain diversity in approximating the return distributions even though there is no order statistics in MMDQN as in the existing distributional RL methods such as QR-DQN. The 3 consecutive frames illustrate that the ball is moving away from the left to the right. In response, MMDQN also moves the paddle away from the left by gradually placing the probability mass of the return for the LEFT action towards smaller values. In particular, in the first frame where the ball is still far away from the ground, the MMDQN agent does not make a significant difference between actions. As the ball is moving closer the ground from the left (the second and third frame), the agent becomes clearer that the LEFT action is not beneficial, thus placing the action's probability mass to smaller values.  

\begin{figure}[!htb]
    \centering
    \includegraphics[scale=0.85]{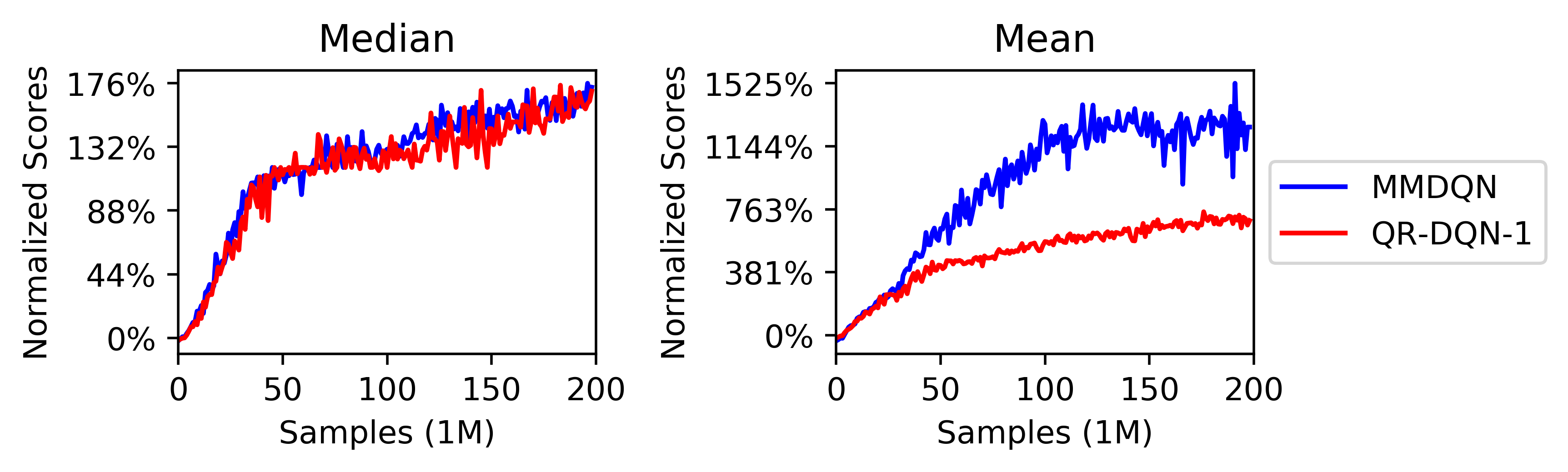}
    \caption{Median and mean of the test human-normalized scores across 55 Atari games for MMDQN (averaged over 3 seeds) and QR-DQN-1 (averaged over 2 seeds). }
    \label{fig:med_mean_55_games}
\end{figure}

\begin{figure}[t]
\centering
\subfloat{%
  \includegraphics[scale=0.9]{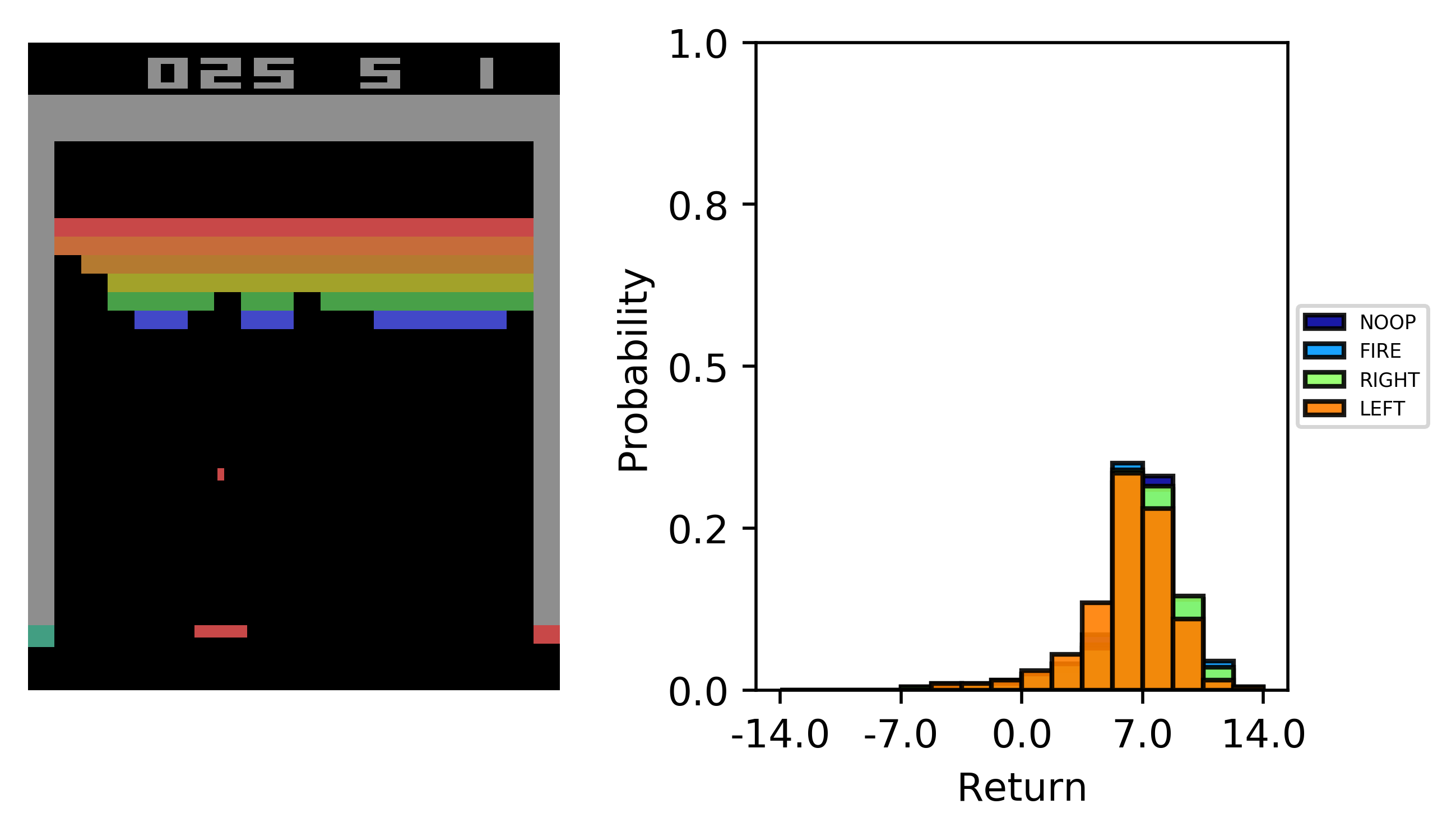}%
}

\subfloat{%
  \includegraphics[scale=0.9]{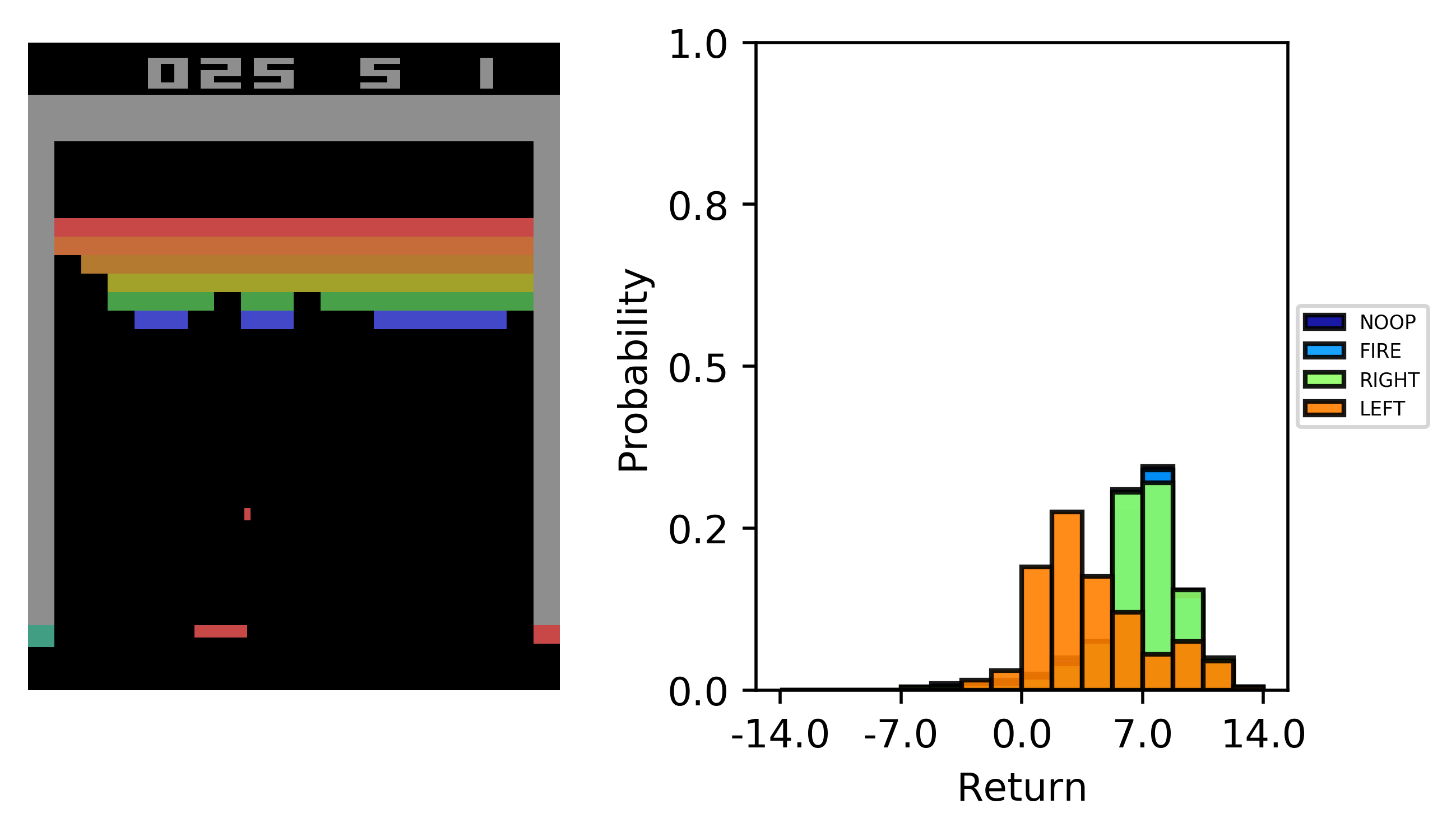}%
}

\subfloat{%
  \includegraphics[scale=0.9]{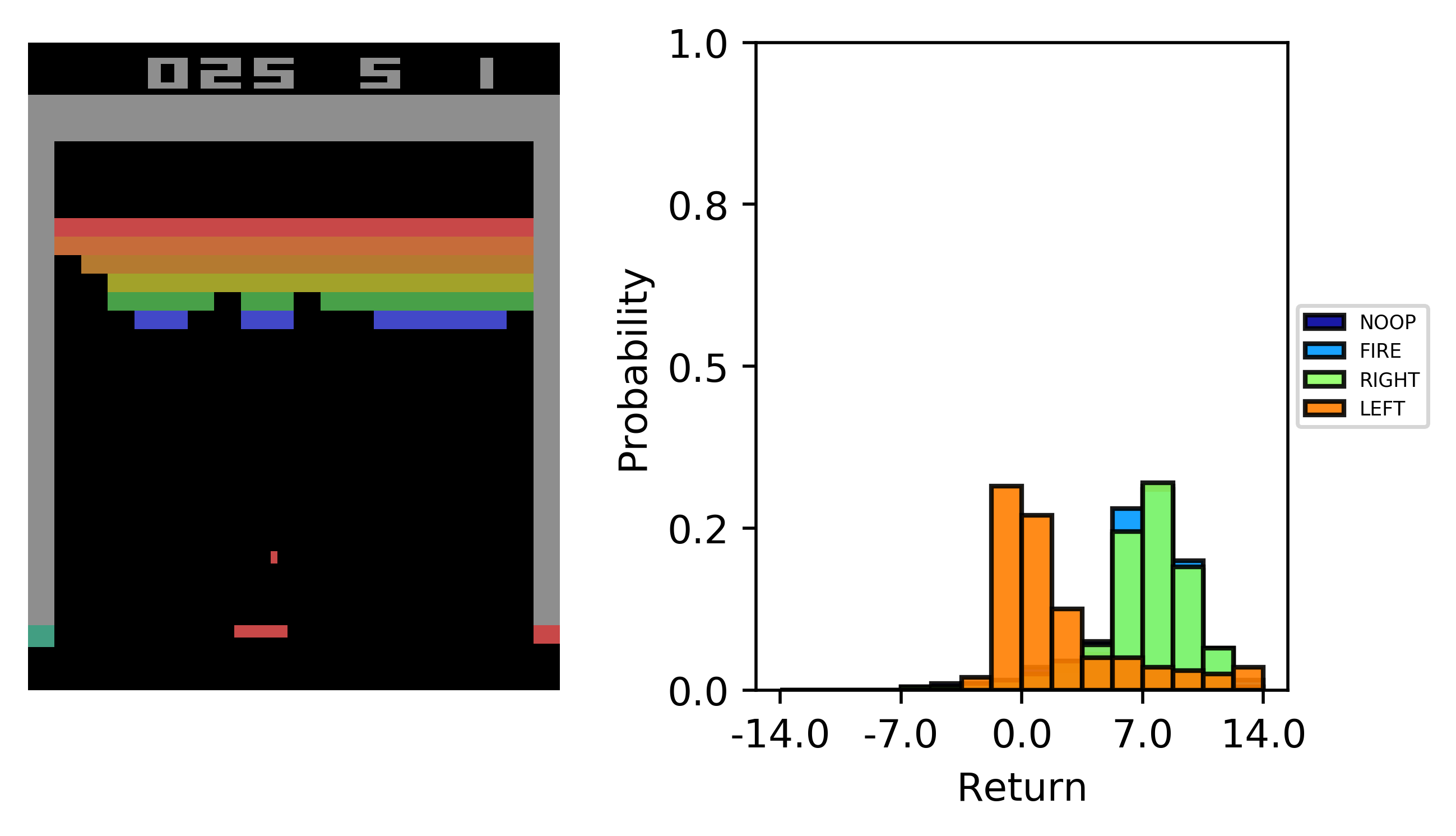}%
}
\caption{This example shows 3 consecutive frames and the approximate return distributions for all the actions in the Breakout game played by our MMDQN. The approximate return distributions plotted here are the histograms with $17$ bins constructed from the learnt particles by MMDQN.}
\label{fig:vis_mmd_dqn_breakout}
\end{figure}
We also include the recorded videos of the moves and approximate return distributions learnt by MMDQN for other Atari games (the exact video addresses are shown in Table \ref{tab:vid_url}).

\begin{table}[t]
    \centering
    \setlength{\extrarowheight}{4pt}
    \begin{tabular}{l|l}
         \textbf{Games} & \textbf{Video address}   \\
         \hline 
         \hline
         Breakout & \url{https://youtu.be/7P4oeJWJ6oE } \\ %\href{https://www.dropbox.com/s/sz9fds0b3rlpawg/breakout.mp4?dl=0}{\underline{Link}}  \\
         BeamRider & \url{https://youtu.be/e6VQTynnbR8} \\ %\href{https://www.dropbox.com/s/9gv084wztm2yixt/beamrider.mp4?dl=0}{\underline{Link}}\\ 
         BattleZone & \url{https://youtu.be/eXLs2pZJPCk} \\ %\href{https://www.dropbox.com/s/5q1qy96o1zvld34/battlezone.mp4?dl=0}{\underline{Link}}\\ 
         Qbert &  \url{https://youtu.be/64uHpoAPIvM} \\ %\href{https://www.dropbox.com/s/9qmztb77us9j8op/qbert.mp4?dl=0}{\underline{Link}} \\ 
         Pong & \url{https://youtu.be/NX5kXT59oJ4} \\ %\href{https://www.dropbox.com/s/yqgw0drc99k99td/pong.mp4?dl=0}{\underline{Link}}\\ 
    \end{tabular}
    \caption{The recorded videos of the moves and approximate return distributions learnt by MMDQN.}
    \label{tab:vid_url}
\end{table}

% \begin{table}[h]
%     \centering
%     \setlength{\extrarowheight}{4pt}
%     \begin{tabular}{l|l}
%          \textbf{Games} & \textbf{Video address}   \\
%          \hline 
%          Breakout & \url{/videos/breakout.mp4} \\ %\href{https://www.dropbox.com/s/sz9fds0b3rlpawg/breakout.mp4?dl=0}{\underline{Link}}  \\
%          BeamRider & \url{/videos/beamrider.mp4} \\ %\href{https://www.dropbox.com/s/9gv084wztm2yixt/beamrider.mp4?dl=0}{\underline{Link}}\\ 
%          BattleZone & \url{/videos/battlezone.mp4} \\ %\href{https://www.dropbox.com/s/5q1qy96o1zvld34/battlezone.mp4?dl=0}{\underline{Link}}\\ 
%          Qbert & \url{/videos/qbeart.mp4} \\ %\href{https://www.dropbox.com/s/9qmztb77us9j8op/qbert.mp4?dl=0}{\underline{Link}} \\ 
%          Pong & \url{/videos/pong.mp4} \\ %\href{https://www.dropbox.com/s/yqgw0drc99k99td/pong.mp4?dl=0}{\underline{Link}}\\ 
%     \end{tabular}
%     \caption{Addresses to videos of the moves and approximate return distributions learnt by MMDQN.}
%     \label{tab:vid_url}
% \end{table}

We show the median and mean of the test human-normalized scores across all $55$ Atari games in Figure \ref{fig:med_mean_55_games}, the online learning curves in all the 55 Atari games in Figure \ref{fig:learing_curve_55_games} and provide the full raw scores of MMDQN in Table \ref{tab:raw_score_full}.

\begin{figure*}[t]
    \centering
    \includegraphics[scale=0.33]{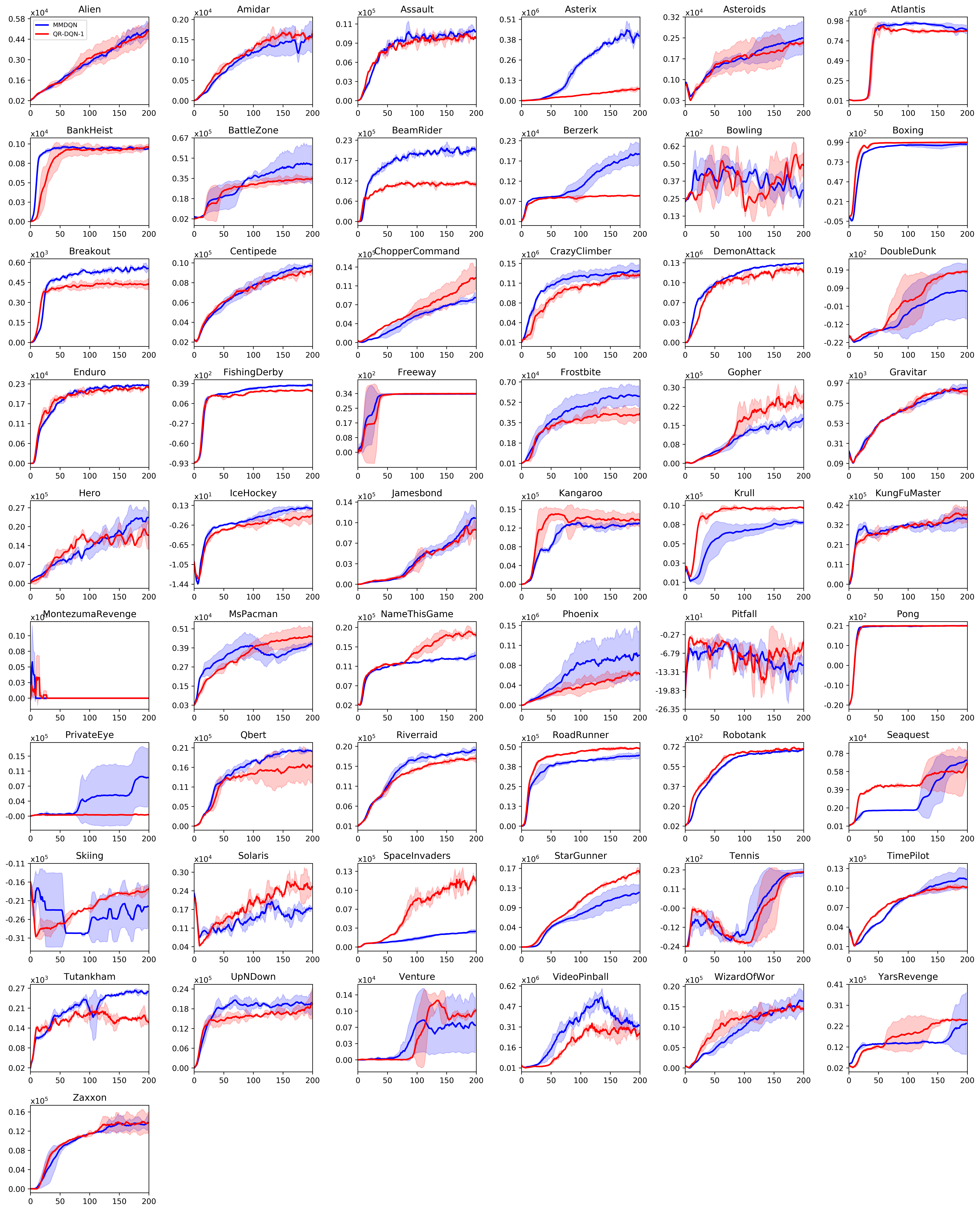}
    \caption{Online training curves for MMDQN (3 seeds) and QR-DQN-1 (2 seeds) on all 55 Atari 2600 games. Curves are averaged over the seeds and smoothed over a sliding window of 5 iterations. 95\% C.I. Reference values are from \citep{DBLP:conf/aaai/DabneyRBM18}.}
    \label{fig:learing_curve_55_games}
\end{figure*}

\begin{table*}[t]
    \centering
    \setlength{\extrarowheight}{3pt}
    \begin{adjustbox}{width=0.95\textwidth}
    \begin{tabular}{l|r|r|r|r|r|r|r}
    % \resizebox{\textwidth}{!}{\begin{tabular}{l|r|r|r|r|r|r|r}
    % & 
    % & 
    % & 
    % & 
    % \multirow{3}{*}{\textbf{PRIOR.}} & 
    % & 
    % & \\
    % & 
    % & 
    % & 
    % & 
    % & 
    % & 
    % & \\
    \textbf{GAMES} & 
    \textbf{RANDOM} & 
    \textbf{HUMAN} & 
    \textbf{DQN} & 
    \textbf{PRIOR. DUEL.} & 
    \textbf{C51} & 
    \textbf{QR-DQN-1} & 
    \textbf{MMDQN} \\
    \hline
        Alien & 227.8 & 7,127.7 & 1,620.0 & 3,941.0 & 3,166 & 4,871 & \textbf{6,918.8}\\
        Amidar & 5.8 & 1,719.5 & 978.0 & 2,296.8 & 1,735 & 1,641 & \textbf{2,370.1}\\
        Assault & 222.4 & 742.0 & 4,280.4 & 11,477.0 & 7,203 & \textbf{22,012} & 19,804.7\\
        Asterix & 210.0 & 8,503.3 & 4,359.0 & 375,080.0 & 406,211 & 261,025 & \textbf{775,250.9}\\
        Asteroids & 719.1 & 47,388.7 & 1.364.5 & 1,192.7 & 1,516 & \textbf{4,226} & 3,321.3 \\ 
        Atlantis & 12,850.0 & 29,028.1 & 279,987.0 & 841,075 & 395,762.0 & 971,850 & \textbf{1,017,813.3}\\ 
        BlankHeist & 14.2 & 753.1 & 455.0 & \textbf{1,503.1} & 976 & 1,249 & 1,326.6\\ 
        BattleZone & 2,360.0 & 37,187.5 & 29,900.0 & 35,520.0 & 28,742 & 39,268 & \textbf{64,839.8}\\ 
        BeamRider & 363.9 & 16,926.5 & 8,627.5 & 30,276.5 & 14,074 & \textbf{34,821} & 34,396.2 \\
        Berzerk &  123.7 & 2,630.4 & 585.6 & \textbf{3,409.0} & 1,645 & 3,117 & 2,946.1\\ 
        Bowling & 23.1 & 160.7 & 50.4 & 46.7 & \textbf{81.8} & 77.2 & 65.8\\ 
        Boxing & 0.1 & 12.1 & 88.0 & 98.9 & 97.8 & \textbf{99.9} & 99.2 \\
        Breakout & 1.7 & 30.5 & 385.5 & 366.0 & 748 & 742 & \textbf{823.1} \\
        Centipede & 2,090.9 & 12,017.0 & 4,657.7 & 7,687.5 & 9,646 & 12,447 & \textbf{13,180.9}\\
        ChopperCommand & 811.0 & 7,387.8 & 6,126.0 & 13.185.0 & {15,600} & 14,667 & \textbf{15,687.9} \\ 
        CrazyClimber & 10,780.5 & 35,829.4 & 110,763.0 & 162,224.0 & \textbf{179,877} & 161,196 & 169,462.0 \\ 
        DemonAttack & 152.1 & 1,971.0 & 12,149.4 & 72,878.6 & 130,955 & 121,551 & \textbf{135,588.7} \\
        DoubleDunk & -18.6 & -16.4 & -6.6 & -12.5 & 2.5 & \textbf{21.9} & {12.6} \\
        Enduro & 0.0 & 860.5 & 729.0 & 2,306.4 & \textbf{3,454} & 2,355 & {2,358.5} \\
        FishingDerby & -91.7 & -38.7 & -4.9 & 41.3 & 8.9 & 39.7 & \textbf{49.6} \\ 
        Freeway & 0.0 & 29.6 & 30.8 & 33.0 & 33.9 & \textbf{34} & 33.7 \\
        Frostbite & 65.2 & 4,334.7 & 797.4 & {7,413.0} & 3,965 & 4,384 & \textbf{8,251.4} \\ 
        Gopher & 257.6 & 2,412.5 & 8,777.4 & 104,368.2 & 33,641 & \textbf{113,585} & 38,448.1\\
        Gravitar & 173.0 & 3,351.4 & 473.0 & 238.0 & 440 & 995 & \textbf{1,092.5}\\
        Hero & 1,027.0 & 30,826.4 & 20,437.8 & 21,036.5 & \textbf{38,874} & 21,395 & 28,830.7\\
        IceHockey & -11.2 & 0.9 & -1.9 & -0.4 & -3.5 & -1.7 & \textbf{3.3} \\
        JamesBond & 29.0 & 302.8 & 768.5 & 812.0 & 1,909 & 4,703 & \textbf{16,028.9} \\
        Kangaroo &  52.0 & 3,035.0 & 7,259.0 & 1,792.0 & 12,853 & \textbf{15,356} & 15,154.2 \\
        Krull & 1,598.0 & 2,665.5 & 8,422.3 & 10,374.4 & 9,735 & \textbf{11,447} & 9,447.0\\
        KungFuMaster & 258.5 & 22,736.3 & 26,059.0 & 48,375.0 & 48,192 & \textbf{76,642} & 51,011.3\\
        MontezumaRevenge & 0.0 & 4,753.3 & 0.0 & 0.0 & 0.0 & 0.0 & 0.0 \\
        MsPacman &  307.3 & 6,951.6 & 3,085.6 & 3,327.3 & 3,415 & 5,821 & \textbf{6,762.8}\\
        NameThisGame & 2,292.3 & 8,049.0 & 8,207.8 & 15,572.5 & 12,542 & \textbf{21,890} & 15,221.2  \\
        Phoenix & 761.4 & 7,242.6 & 8,485.2 & 70,324.3 & 17,490 & 16,585 & \textbf{325,395.5} \\
        Pitfall & -229.4 & 6,463.7 & -286.1 & 0.0 & 0.0 & 0.0 & 0.0 \\
        Pong & -20.7 & 14.6 & 19.5 & 20.9 & 20.9 & 21.0 & 21.0 \\
        PrivateEye & 24.9 & 69,571.3 & 146.7 & 206.0 & \textbf{15,095} & 350 & {11,366.4}\\
        QBert & 163.9 & 13,455.0 & 13,117.3 & 18,760.3 & 23,784 & \textbf{572,510} & 28,448.0 \\
        Riverraid & 1,338.5 & 17,118.0 & 7,377.6 & 20,607.6 & 17,322 & 17,571 & \textbf{23000.0}\\
        RoadRunner & 11.5 & 7,845.0 & 39,544.0 & 62,151.0 & 55,839 &\textbf{ 64,262} & 54,606.8\\
        Robotank & 2.2 & 11.9 & 63.9 & 27.5 & 52.3 & 59.4 & \textbf{74.8} \\
        Seaquest & 68.4 & 42,054.7 & 5,860.6 & 931.6 & 266,434 & \textbf{8,268} & {7,979.3} \\
        Skiing & -17,098.1 & -4,336.9 & -13,062.3 & -19,949.9 & -13,901 & \textbf{-9,324} & {-9,425.3} \\
        Solaris & 1,236.3 & 12,326.7 & 3,482.8 & 133.4 & \textbf{8,342 }& 6,740 & 4,416.5 \\
        SpaceInvaders & 148.0 & 1,668.7 & 1,692.3 & 15,311.5 & 5,747 & \textbf{20,972} & 4,387.6 \\
        StarGunner & 664.0 & 10,250.0 & 54,282.0 & 125,117.0 & 49,095 & 77,495 & \textbf{144,983.7} \\
        Tennis & -23.8 & -8.3 & 12.2 & 0.0 & 23.1 & \textbf{23.6} & 23.0 \\
        TimePilot & 3,568.0 & 5,229.2 & 4,870.0 & 7,553.0 & 8,329 & 10,345 & \textbf{14,925.3} \\
        Tutankham & 11.4 & 167.6 & 68.1 & 245.9 & 280 & 297 & \textbf{319.4} \\
        UpNDown & 533.4 & 11,693.2 & 9,989.9 & 33,879.1 & 15,612 & \textbf{71,260} & 55,309.9 \\
        Venture & 0.0 & 1,187.5 & 163.0 & 48.0 & \textbf{1,520} & 43.9 & {1,116.6} \\
        VideoPinball & 16,256.9 & 17,667.9 & 196,760.4 & 479,197.0 & \textbf{949,604} & 705,662 & 756,101.8 \\
        WizardOfWor & 563.5 & 4,756.5 & 2,704.0 & 12,352.0 & 9,300 & 25,061 & \textbf{31,446.9}\\
        YarsRevenge & 3,092.9 & 54,576.9 & 18,098.9 & \textbf{69,618.1} & 35,050 & 26,447 & 28,745.7\\
        Zaxxon & 32.5 & 9,173.3 & 5,363.0 & 13,886.0 & 10,513 & 13,112 & \textbf{17,237.9} \\
    \end{tabular}
    \end{adjustbox}
    \caption{Raw scores of MMDQN  (averaged over 3 seeds) across all 55 Atari games starting with 30 no-op actions. Reference values are from  \citep{DBLP:conf/aaai/DabneyRBM18}. }
    % \cite{DBLP:conf/icml/WangSHHLF16}, \cite{DBLP:conf/icml/BellemareDM17}, and
    \label{tab:raw_score_full}
\end{table*}

% For reference methods in the second group (RAINBOW, IQN, FQF) which contains several algorithmic/modelling advances orthogonal to MMDQN, MMDQN does not only still performs comparably, and but also achieves a state-of-the-art mean human-normalized score.   

% The table shows that MMDQN significantly outperforms the standard distributional RL baselines QR-DQN-1 and C51 while enjoying several advantages such as simple modeling, no projection required, scalability and flexibility in statistical efficiency via the number of particles $N$. We also establish a new record on the mean human normalized score metric for non-distributed agents.  

% MMDQN outperforms the standard distributional RL baselines while enjoying several important advantages: simple modelling, no projection required, computational effectiveness, and flexibility in statistical efficiency via the number of particles $N$. In particular, MMDQN uses the same architecture of QR-DQN without any extra computational cost. 

% If the particles for each state-action pair are initialized to be identical, MMDQN is effectively the standard DQN. Fortunately this is not the case in practice due to randomized initialization in neural networks. We found that simple randomized initialization already gives diverse particles. 

% Tuning $N \in \{10, 50, 100, 200, 300\}$. Tuning in 7 games: Breakout, SpaceInvaders, Seaquest, ChopperCommand, StarGunner, MsPacMan, and CrazyClimber

\section{Conclusion}
We have introduced a novel approach for distributional RL that eschews the predefined statistic principle used in the prior distributional RL. Our method deterministically evolves the (pseudo-)samples of a return distribution to approximately match moments of the resulting approximate distribution with those of the return distribution. We have also provided theoretical understanding of distributional RL within this framework. Our experimental results show that MMDQN, a combination of our approach with DQN-like architecture, achieves significant improvement in the Atari benchmark. 

% In what follow, we further discuss our results and the future work. 

\section{Proofs}
\label{chap4_section_proof}

\subsection*{Proof of Proposition \ref{prop:metric_over_state_action_space}}
\begin{proof}
If MMD is a metric in $\tilde{\mathcal{P}}(\mathcal{X})$. Then, it is obvious to see that $\text{MMD}_{\infty}(\mu,\nu) \geq 0, \forall \mu, \nu$ and that $\text{MMD}_{\infty}(\mu,\nu) = 0$ implies $\mu = \nu$. We now prove that $\texttt{MMD}_{\infty}$ satisfies the triangle inequality. Indeed, for any $\mu,\nu,\eta \in \tilde{\mathcal{P}}(\mathcal{X})^{\mathcal{S} \times \mathcal{A}}$, we have 
\begin{align*}
    \text{MMD}_{\infty}(\mu, \nu) &=  \sup_{(s,a) \in \mathcal{S} \times \mathcal{A}} \text{MMD}(\mu(s,a), \nu(s,a)) \\
    &\overset{(a)}{\leq} \sup_{(s,a) \in \mathcal{S} \times \mathcal{A}} \bigg\{ \text{MMD}(\mu(s,a), \eta(s,a)) + \text{MMD}(\eta(s,a), \nu(s,a)) \bigg\} \\ 
    &\overset{(b)}{\leq}  \sup_{(s,a) \in \mathcal{S} \times \mathcal{A}} \text{MMD}(\mu(s,a), \eta(s,a)) + \sup_{(s,a) \in \mathcal{S} \times \mathcal{A}} \text{MMD}(\eta(s,a), \nu(s,a)) \\
    &=  \text{MMD}_{\infty}(\mu, \eta) +  \text{MMD}_{\infty}(\eta, \nu),
\end{align*}
where $(a)$ follows from the triangle inequality for MMD and $(b)$ follows from that $ \sup (A + B) \leq \sup A + \sup B$ for any two sets $A$ and $B$ where $A+B := \{a + b: a \in A, b \in B\}$. 

\end{proof}

\subsection*{Proof of Theorem \ref{theorem:metric_property}}
We first present a relevant result for the proof. 
\begin{lem}
Let $\phi$ be the feature vector of $k$, i.e., $k(x,y) = \phi(x)^T \phi(y)$. Then, for any $\mu, \nu \in \mathcal{P}(\mathcal{X})$, we have 
\begin{align*}
    \text{MMD}(\mu; \nu; k) = \| u-v \|_2
\end{align*}
where $u = \mathbb{E}_{x \sim \mu} \phi(x)$ and $v = \mathbb{E}_{y \sim \nu} \phi(y)$.
\label{lemma:mmd_as_feature_mean_difference}
\end{lem}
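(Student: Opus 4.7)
The plan is to unfold the definition of MMD given at the start of the chapter, substitute the feature-map identity $k(x,y)=\phi(x)^{T}\phi(y)$ into each of the three expectation terms, and then recognise the resulting expression as the squared Euclidean norm $\|u-v\|_{2}^{2}$. Since the conclusion is an equality between non-negative quantities, it suffices to work with $\text{MMD}^{2}$ throughout and take square roots at the end.

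Concretely, let $Z,Z'\sim \mu$ be independent and $W,W'\sim \nu$ be independent. From the stated formula,
\[
\text{MMD}^{2}(\mu,\nu;k) = \mathbb{E}[k(Z,Z')] + \mathbb{E}[k(W,W')] - 2\,\mathbb{E}[k(Z,W)].
\]
Substituting $k(x,y)=\phi(x)^{T}\phi(y)$ into each term and invoking linearity of expectation together with the independence of $Z$ from $Z'$, of $W$ from $W'$, and of $Z$ from $W$, I would write
\[
\mathbb{E}[\phi(Z)^{T}\phi(Z')] = \mathbb{E}[\phi(Z)]^{T}\,\mathbb{E}[\phi(Z')] = u^{T}u,
\]
and analogously $\mathbb{E}[k(W,W')] = v^{T}v$ and $\mathbb{E}[k(Z,W)] = u^{T}v$. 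Combining these three identities gives $\text{MMD}^{2}(\mu,\nu;k) = \|u\|_{2}^{2} + \|v\|_{2}^{2} - 2u^{T}v = \|u-v\|_{2}^{2}$, and taking square roots yields the claim.

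The only subtlety, and the place where one must be a little careful, is the interchange of expectation and inner product in the step $\mathbb{E}[\phi(Z)^{T}\phi(Z')]=\mathbb{E}[\phi(Z)]^{T}\mathbb{E}[\phi(Z')]$. This is the Bochner-integral version of Fubini's theorem and is justified provided $\phi(Z)$ is Bochner integrable, i.e.\ $\mathbb{E}\|\phi(Z)\|<\infty$; equivalently $\mathbb{E}[k(Z,Z)]^{1/2}<\infty$. This integrability is already implicit in the problem setup, because the three expectations appearing in the MMD formula must be finite for MMD to be defined, and $|\mathbb{E}[k(Z,Z')]|\le\mathbb{E}\|\phi(Z)\|\cdot\mathbb{E}\|\phi(Z')\|$ by Cauchy--Schwarz. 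So this is a minor bookkeeping point rather than a genuine obstacle; the proof is essentially a one-line algebraic expansion once the independence structure is exploited.
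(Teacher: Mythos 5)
Your proof is correct and follows essentially the same route as the paper's: expand $\text{MMD}^2$ into the three kernel expectations, substitute $k(x,y)=\phi(x)^T\phi(y)$, factor the expectations using independence, and recognise $u^Tu+v^Tv-2u^Tv=\|u-v\|_2^2$. The additional remark on Bochner integrability is a reasonable bookkeeping point that the paper leaves implicit, but it does not change the argument.
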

\begin{proof}
Let $X, X' \overset{\text{i.i.d.}}{\sim} \mu$, $Y, Y' \overset{\text{i.i.d.}}{\sim} \nu$, and $X,X',Y,Y'$ are mutually independent. We have 
\begin{align*}
    \text{MMD}^2(\mu; \nu; k) &= \mathbb{E} \left[ k(X,X') \right] + \mathbb{E} \left[ k(Y,Y') \right] - 2 \left[ \mathbb{E} k(X,Y) \right] \\ 
    &= \mathbb{E} \left[ \phi(X)^T \phi(X') \right] + \mathbb{E} \left[ \phi(Y)^T \phi(Y') \right] - 2 \mathbb{E} \left[ \phi(X)^T \phi(Y) \right] \\ 
    &= u^T u + v^T v - 2 u^T v = \|u - v \|^2. 
\end{align*}
\end{proof}

\begin{proof}[Proof of Theorem \ref{theorem:metric_property}]
We prove only the third part of the theorem, as a proof for the first two parts can be found in  \citep{DBLP:conf/nips/FukumizuGSS07,DBLP:journals/jmlr/GrettonBRSS12} and \citep[c.f. Proposition 2]{szekely2003statistics}, respectively. For some $\sigma >0$, let $k(x,y) = \exp(xy/\sigma^2)$. Let $\phi(x) = [a_0(x), a_1(x), ..., a_n(x), ...]$ where $a_n(x) = \frac{1}{\sqrt{n!}} \frac{x^n}{\sigma^n}$. The Taylor expansion of $k$ yields $k(x,y) = \phi(x)^T \phi(y)$. It follows from Lemma \ref{lemma:mmd_as_feature_mean_difference} that 
\begin{align*}
    \text{MMD}^2(\mu, \nu; k) &= \| \mathbb{E} \phi(X) - \mathbb{E} \phi(Y) \|^2 = \sum_{n=0}^{\infty} \frac{1}{\sigma^{2n} n!} \left( \mathbb{E}[X^n] - \mathbb{E}[Y^n] \right)^2, 
\end{align*}
for any $\mu, \nu \in \mathcal{P}(\mathcal{X})$ where $X \sim \mu$ and $Y \sim \nu$. It is easy to see that $\text{MMD}(\mu, \nu; k) \geq 0$ and it satisfies the triangle inequality. We only need to prove that for any $\mu, \nu \in \mathcal{P}_*(\mathcal{X})$, if $\text{MMD}(\mu, \nu; k) = 0$, then $\mu = \nu$. Indeed, assume $\text{MMD}(\mu, \nu; k) = 0$, then $\mu$ and $\nu$ have equal moments of all orders. Note that a distribution $\mu$ is uniquely determined by its characteristic function $g_{\mu}(t) = \mathbb{E} \left[ \exp(itX) \right], \forall t$. Let $m_n(\mu) = \mathbb{E}[X^n]$ be the $n$-th moment of $\mu$. Taylor expansion of $g_{\mu}$ yields 
\begin{align*}
    g_{\mu}(t) = \sum_{n=0}^{\infty} \frac{ i^n t^n m_n(\mu) }{n!},
\end{align*}
a power series which is valid only within its radius of convergence. The radius of convergence of this power series is 
\begin{align*}
    r = \frac{1}{\limsup_{n \rightarrow \infty} \bigg|\frac{ m_n(\mu) }{ n!} \bigg|^{1/n}}. 
\end{align*}
Since $\mu \in \mathcal{P}_*(\mathcal{X})$, we have 
\begin{align*}
    \limsup_{n \rightarrow \infty} \frac{ |m_n(\mu)|^{1/n} }{ n} = 0. 
\end{align*}
Using Stirling's formular, this indicates that $\limsup_{n \rightarrow \infty} \bigg|\frac{ m_n(\mu) }{ n!} \bigg|^{1/n} = 0$, or $r = \infty$. Hence, the set of all moments of a distribution on $\mathcal{P}_*(\mathcal{X})$ uniquely determines the distribution. This concludes our proof. 
\end{proof}

\subsection*{Proof of Theorem \ref{theorem:contraction_property}}

\subsubsection*{Proof of the first part of Theorem \ref{theorem:contraction_property}}
\begin{lem}
Let $(\mu_i)_{i \in I}$ and $(\nu_i)_{i \in I}$ be two sets of Borel probability measures in $\mathcal{X}$ over some indices $I$. Let $p$ be any distribution induced over $I$, then we have
\begin{align*}
    \text{MMD}^2 \left( \sum_i p_i \mu_i, \sum_i p_i \nu_i \right) \leq \sum_{i} p_i \text{MMD}^2(\mu_i, \nu_i)
    % \left \| \psi_{\sum_i p_i \mu_i} -  \psi_{\sum_i p_i \nu_i} \right \|^2_{\mathcal{H}} \leq \sum_{i} p_i  \left \| \psi_{\mu_i} - \psi_{\nu_i} \right \|^2_{\mathcal{H}}. 
    % % \label{eq:embedding_of_sum}
\end{align*}
\label{lemma:mixture_of_measures_scale_down_mmd}
\end{lem}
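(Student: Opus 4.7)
The plan is to reduce the statement to a convexity (Jensen) inequality in the RKHS via the mean-embedding characterization of MMD already established in Lemma \ref{lemma:mmd_as_feature_mean_difference}. Specifically, I would let $\phi$ denote the canonical feature map satisfying $k(x,y) = \langle \phi(x), \phi(y) \rangle_{\mathcal{F}}$, and for each index $i \in I$ set $U_i := \mathbb{E}_{x \sim \mu_i}[\phi(x)]$ and $V_i := \mathbb{E}_{y \sim \nu_i}[\phi(y)]$, both viewed as Bochner integrals in $\mathcal{F}$.

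The first key observation is that the mean-embedding operator $\mu \mapsto \mathbb{E}_{x \sim \mu}[\phi(x)]$ is linear in $\mu$, so the embedding of the mixture $\sum_i p_i \mu_i$ equals $\sum_i p_i U_i$, and similarly $\sum_i p_i V_i$ is the embedding of $\sum_i p_i \nu_i$. Invoking Lemma \ref{lemma:mmd_as_feature_mean_difference} then gives
\begin{align*}
\text{MMD}^2\!\left(\sum_i p_i \mu_i, \, \sum_i p_i \nu_i\right) = \Bigl\| \sum_i p_i (U_i - V_i) \Bigr\|_{\mathcal{F}}^2 .
\end{align*}

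The second step is to apply Jensen's inequality to the convex functional $w \mapsto \|w\|_{\mathcal{F}}^2$ with weights $(p_i)$ on the vectors $w_i := U_i - V_i \in \mathcal{F}$, yielding
\begin{align*}
\Bigl\| \sum_i p_i (U_i - V_i) \Bigr\|_{\mathcal{F}}^2 \leq \sum_i p_i \, \| U_i - V_i \|_{\mathcal{F}}^2 = \sum_i p_i \, \text{MMD}^2(\mu_i, \nu_i),
\end{align*}
where the final equality again uses Lemma \ref{lemma:mmd_as_feature_mean_difference}. Chaining these two displays delivers the claim.

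I do not expect any serious obstacle: once the mean-embedding representation is in hand, the argument collapses to one application of Jensen. The only minor technicality is ensuring that the Bochner integrals $U_i, V_i$ and the mixture embeddings are well defined when $I$ is countably (or uncountably) infinite; this follows under the standing assumption that $k$ is continuous and bounded, which guarantees $\|\phi(x)\|_{\mathcal{F}} = \sqrt{k(x,x)}$ is uniformly bounded and hence all relevant Bochner integrals exist and linearity of expectation applies.
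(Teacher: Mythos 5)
Your proposal is correct and follows essentially the same route as the paper: both proofs use the linearity of the mean embedding to write the MMD of the mixtures as $\bigl\| \sum_i p_i (\psi_{\mu_i} - \psi_{\nu_i}) \bigr\|_{\mathcal{F}}^2$ and then bound this by $\sum_i p_i \|\psi_{\mu_i} - \psi_{\nu_i}\|_{\mathcal{F}}^2$, the only cosmetic difference being that you invoke Jensen's inequality for the convex functional $w \mapsto \|w\|_{\mathcal{F}}^2$ directly, whereas the paper derives the same bound by expanding the square and applying Cauchy--Schwarz twice.
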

\begin{proof}
Denoting  $g_i = \psi_{\mu_i} - \psi_{\nu_i}, \forall i$, we have
\begin{align*}
    \text{MMD}^2 \left( \sum_i p_i \mu_i, \sum_i p_i \nu_i \right) &= \left\| \psi_{\sum_i p_i \mu_i} -  \psi_{\sum_i p_i \nu_i} \right \|^2_{\mathcal{F}} \\
    &\overset{(a)}{=} \left \|  \sum_i p_i (\psi_{\mu_i} - \psi_{\nu_i}) \right \|^2_{\mathcal{F}} \\
    &= \sum_i \langle p_i g_i, p_i g_i \rangle_{\mathcal{F}} + 2 \sum_{i \neq j} \langle p_i g_i, p_j g_j \rangle_{\mathcal{F}} \\
     &= \sum_i p_i^2 \langle g_i, g_i \rangle_{\mathcal{F}} + 2 \sum_{i \neq j} p_i p_j \langle  g_i, g_j \rangle_{\mathcal{F}} \\
     &\overset{(b)}{\leq} \sum_i p_i^2 \|g_i\|^2_{\mathcal{F}} + 2 \sum_{i \neq j} p_i p_j \|g_i \|_{\mathcal{F}} \|g_j\|_{\mathcal{F}} \\ 
    %  &= \left \|  \sum_i p_i g_i \right \|^2_{\mathcal{H}} \\ 
     &=  \left( \sum_i \sqrt{p_i} \sqrt{p_i} \|g_i\|_{\mathcal{F}} \right)^2 \\
     &\overset{(c)}{\leq} (\sum_i p_i) \sum_{i} p_i  \left \|g_i \right \|^2_{\mathcal{F}} \\
     &= \sum_{i} p_i  \left \| \psi_{\mu_i} - \psi_{\nu_i} \right \|^2_{\mathcal{F}} = \sum_{i} p_i \text{MMD}^2(\mu_i, \nu_i),
\end{align*}
where (a) follows from that the Bochner integral is linear (w.r.t. the probability measure argument), i.e., $\psi_{\sum_i p_i \mu_i} = \sum_i p_i \psi_{\mu_i}$, and both inequalities (b) and (c) follow from Cauchy-Schwartz inequality. 
\end{proof}

\begin{lem}
If $k$ is shift invariant and  scale sensitive with order $\alpha >0$. Then for any $\mu, \nu \in \mathcal{P}(\mathcal{X})$ and any $r, \gamma \in \mathbb{R}$, we have 
\begin{align*}
    \text{MMD}^2 \bigg((f_{r,\gamma})_{\#} \mu, (f_{r,\gamma})_{\#} \nu; k \bigg) = |\gamma|^{\alpha} \text{MMD}^2(\mu,\nu; k),
\end{align*}
where $f_{r,\gamma}(z) := r + \gamma z$ and $\#$ denotes pushforward operator. 
\label{lemma:take_out_shift_invariant_and_scale_sensitive}
\end{lem}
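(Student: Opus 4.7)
The plan is to prove this by directly expanding the squared MMD using its closed-form expression in terms of kernel expectations, and then exploiting the two structural properties of $k$ separately.

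First I would recall the identity, valid for any Borel probability measures $P, Q$ on $\mathcal{X}$ and any bounded measurable kernel $k$,
\begin{align*}
    \text{MMD}^2(P,Q;k) = \mathbb{E}[k(U,U')] + \mathbb{E}[k(V,V')] - 2\, \mathbb{E}[k(U,V)],
\end{align*}
where $U, U' \overset{\text{i.i.d.}}{\sim} P$, $V, V' \overset{\text{i.i.d.}}{\sim} Q$, with $(U,U')$ independent of $(V,V')$. Applying this to the pushed-forward measures and using the change-of-variables identity for pushforwards (i.e.\ if $X \sim \mu$ then $f_{r,\gamma}(X) \sim (f_{r,\gamma})_{\#}\mu$), I would rewrite the left-hand side in terms of the original samples $X, X' \overset{\text{i.i.d.}}{\sim} \mu$ and $Y, Y' \overset{\text{i.i.d.}}{\sim} \nu$:
\begin{align*}
    \text{MMD}^2\bigl((f_{r,\gamma})_{\#}\mu, (f_{r,\gamma})_{\#}\nu; k\bigr)
    &= \mathbb{E}[k(r+\gamma X, r+\gamma X')] + \mathbb{E}[k(r+\gamma Y, r+\gamma Y')] \\
    &\quad - 2\, \mathbb{E}[k(r+\gamma X, r+\gamma Y)].
\end{align*}

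Next I would apply shift invariance pointwise inside each expectation, with common shift $c = -r$, yielding $k(r+\gamma X, r+\gamma X') = k(\gamma X, \gamma X')$, and likewise for the other two terms. Then I would invoke the scale-sensitivity assumption of order $\alpha$ to pull out $|\gamma|^{\alpha}$, giving $k(\gamma X, \gamma X') = |\gamma|^{\alpha} k(X, X')$ and similarly for the remaining summands. Finally, factoring out the common scalar $|\gamma|^{\alpha}$ and re-identifying the bracketed quantity as $\text{MMD}^2(\mu,\nu;k)$ yields the claim.

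The argument is essentially bookkeeping; I do not anticipate a substantive obstacle, though one small care point is to ensure that shift invariance and scale sensitivity may be applied inside the expectations without integrability issues. Boundedness of the kernel on bounded sets (which is implicit throughout the chapter, where $k$ is assumed continuous and $\mathcal{X} \subseteq \mathbb{R}$) together with Fubini justifies swapping the pointwise identity with the expectation, so no additional hypothesis is needed beyond what is already in force.
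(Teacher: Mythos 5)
Your proposal is correct and follows essentially the same route as the paper's proof: expand the squared MMD via its closed-form double-integral (expectation) expression, push the measures through $f_{r,\gamma}$ by change of variables, and then apply shift invariance and scale sensitivity to each kernel term to factor out $|\gamma|^{\alpha}$. The only cosmetic difference is that you separate the shift and scale steps and add a brief integrability remark, whereas the paper applies both properties in a single step.
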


\begin{proof}
It follows from the closed-form expression of MMD distance that we have 
\begin{align*}
    \text{MMD}^2 \bigg((f_{r,\gamma})_{\#} \mu, (f_{r,\gamma})_{\#} \nu; k\bigg) 
    &= \int \int k(z,z') (f_{r,\gamma})_{\#} \mu(dz) (f_{r,\gamma})_{\#} \mu(dz') \\
    &+ \int \int k(w,w') (f_{r,\gamma})_{\#} \nu(dw) (f_{r,\gamma})_{\#} \nu(dw') \\
    &-2 \int \int k(z,w) (f_{r,\gamma})_{\#} \mu(dz) (f_{r,\gamma})_{\#} \nu(dw) \\
    &= \int \int k(r+ \gamma z,r + \gamma z')  \mu(dz) \mu(dz') \\
    &+ \int \int k( w,r + \gamma w') \nu(dw) \nu(dw') \\
    &-2 \int \int k(r + \gamma z,r + \gamma w)  \mu(dz) \nu(dw) \\
% \end{align*}
% \begin{align*}
    &= |\gamma|^{\alpha}  \int \int k(z,z')  \mu(dz) \mu(dz') \\
    &+ |\gamma|^{\alpha} \int \int k(w,w') \nu(dw) \nu(dw') \\
    &-2 |\gamma|^{\alpha} \int \int k(z,w)  \mu(dz) \nu(dw) \\ 
    &= |\gamma|^{\alpha} \text{MMD}^2(\mu,\nu; k). 
\end{align*}
\end{proof}

\begin{lem}
For any $\mu, \nu \in \mathcal{P}(\mathcal{X})$, $(\beta_i)_{i \in I} \subset \mathbb{R}$ for some indices $I$, we have 
\begin{align*}
    \text{MMD}^2(\mu, \nu; \sum_{i \in I} \beta_i k_i) = \sum_{i \in I} \beta_i \text{MMD}^2(\mu, \nu; k_i)
\end{align*}
\label{lemma:take_out_mixture_of_kernels}
\end{lem}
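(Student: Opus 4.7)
The plan is to apply the closed-form quadratic expression for $\text{MMD}^2$ in terms of kernel expectations and then exploit linearity. Specifically, let $X, X' \overset{\text{i.i.d.}}{\sim} \mu$ and $Y, Y' \overset{\text{i.i.d.}}{\sim} \nu$, all mutually independent. Recall that for any kernel $k$,
\begin{align*}
    \text{MMD}^2(\mu, \nu; k) = \mathbb{E}[k(X,X')] + \mathbb{E}[k(Y,Y')] - 2\,\mathbb{E}[k(X,Y)].
\end{align*}
This expression is linear in $k$. Substituting $k = \sum_{i \in I} \beta_i k_i$ and pulling the (finite, or absolutely convergent) sum outside each expectation by linearity of expectation yields
\begin{align*}
    \text{MMD}^2\Big(\mu, \nu; \sum_{i \in I} \beta_i k_i\Big)
    &= \sum_{i \in I} \beta_i \Big( \mathbb{E}[k_i(X,X')] + \mathbb{E}[k_i(Y,Y')] - 2\,\mathbb{E}[k_i(X,Y)] \Big) \\
    &= \sum_{i \in I} \beta_i \, \text{MMD}^2(\mu, \nu; k_i).
\end{align*}

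The only subtlety, and thus the single potential obstacle, is justifying the interchange of the (possibly infinite) sum $\sum_{i \in I}$ with the expectations $\mathbb{E}[\cdot]$. For the downstream application in Theorem \ref{theorem:contraction_property} this is used only with non-negative coefficients $c_i \geq 0$ and non-negative kernel contributions in absolute value bounded by integrable dominators (each $k_i$ is continuous and the measures $\mu,\nu$ are Borel probability measures on the same domain), so Fubini--Tonelli (or the monotone/dominated convergence theorem) applies termwise and the exchange is valid. For a finite index set $I$, the interchange is immediate and no measure-theoretic care is needed.

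Thus the lemma follows directly from the bilinear (indeed linear-in-kernel) structure of $\text{MMD}^2$, with no further machinery beyond the closed-form expression of MMD in terms of kernel expectations and linearity of expectation. This is the shortest of the three supporting lemmas for Theorem \ref{theorem:contraction_property}, and will be combined with Lemma \ref{lemma:mixture_of_measures_scale_down_mmd} and Lemma \ref{lemma:take_out_shift_invariant_and_scale_sensitive} to extract the contraction factor $\gamma^{\alpha_*/2}$.
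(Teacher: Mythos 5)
Your proof is correct and follows essentially the same route as the paper's: expand $\text{MMD}^2$ via its closed-form quadratic expression in the kernel and exchange the sum with the integrals/expectations by linearity. Your extra remark justifying the interchange for infinite $I$ (via Fubini--Tonelli with non-negative coefficients) is a small refinement the paper leaves implicit, but it does not change the argument.
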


\begin{proof}
We have 
\begin{align*}
    \text{MMD}^2(\mu, \nu; \sum_{i \in I} \beta_i k_i) 
    &= \int \int \sum_{i \in I} \beta_i k_i(z,z') \mu(dz) \mu(dz') + \int \int \sum_{i \in I} \beta_i k_i(w,w') \nu(dw) \nu(dw') \\
    &-2 \int \int \sum_{i \in I} \beta_i k_i(z,w) \mu(dz) \nu(dw)\\
% \end{align*}
% \begin{align*}
    &= \sum_{i \in I} \beta_i \bigg ( 
    \int \int   k_i(z,z') \mu(dz) \mu(dz') + \int \int  k_i(w,w') \nu(dw) \nu(dw') \\
    &-2 \int \int  k_i(z,w) \mu(dz) \nu(dw) \bigg ) \\
    &=  \sum_{i \in I} \beta_i \text{MMD}^2(\mu, \nu; k_i).
\end{align*}
\end{proof}

\begin{lem}
If $k$ is shift invariant and  scale sensitive with order $\alpha >0$, then 
\begin{align*}
    \text{MMD}_{\infty}(\mathcal{T}^{\pi} \mu, \mathcal{T}^{\pi} \nu; k) \leq \gamma^{\alpha/2} \text{MMD}_{\infty}(\mu, \nu; k), 
\end{align*}
for any $\mu,\nu \in \mathcal{P}(\mathcal{X})$ and any (stationary) policy $\pi$.
\label{lemma:shift_invariant_and_scale_sensitive_kernels_are_contractive}
\end{lem}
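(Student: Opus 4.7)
The plan is to bound $\mathrm{MMD}^2(\mathcal{T}^\pi\mu(s,a), \mathcal{T}^\pi\nu(s,a); k)$ pointwise in $(s,a)$ and then take the supremum. By the definition of the distributional Bellman operator, $\mathcal{T}^\pi\mu(s,a)$ and $\mathcal{T}^\pi\nu(s,a)$ are both mixtures over the same index measure $R(dr|s,a)\,\pi(da'|s')\,P(ds'|s,a)$ of pushforward measures $(f_{\gamma,r})_{\#}\mu(s',a')$ and $(f_{\gamma,r})_{\#}\nu(s',a')$ respectively. This is precisely the structure that Lemma \ref{lemma:mixture_of_measures_scale_down_mmd} is designed to exploit.

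First I would apply the (continuous-index version of) Lemma \ref{lemma:mixture_of_measures_scale_down_mmd} to obtain
\begin{align*}
\mathrm{MMD}^2\bigl(\mathcal{T}^\pi\mu(s,a), \mathcal{T}^\pi\nu(s,a); k\bigr)
\leq \int\!\!\int\!\!\int \mathrm{MMD}^2\bigl((f_{\gamma,r})_{\#}\mu(s',a'), (f_{\gamma,r})_{\#}\nu(s',a'); k\bigr)\, R(dr|s,a)\pi(da'|s')P(ds'|s,a).
\end{align*}
Next I would apply Lemma \ref{lemma:take_out_shift_invariant_and_scale_sensitive}, which uses the shift-invariance and scale-sensitivity of $k$ to pull out the factor $|\gamma|^\alpha$, giving
\begin{align*}
\int\!\!\int\!\!\int |\gamma|^{\alpha}\,\mathrm{MMD}^2\bigl(\mu(s',a'), \nu(s',a'); k\bigr)\, R(dr|s,a)\pi(da'|s')P(ds'|s,a).
\end{align*}
Since $\mathrm{MMD}^2(\mu(s',a'),\nu(s',a'); k) \leq \mathrm{MMD}_\infty^2(\mu,\nu;k)$ uniformly in $(s',a')$, and the index measure integrates to one, the right-hand side is at most $\gamma^{\alpha}\,\mathrm{MMD}_\infty^2(\mu,\nu;k)$. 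Taking the square root and then the supremum over $(s,a)$ yields the claimed contraction with factor $\gamma^{\alpha/2}$.

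The only real subtlety is the first step: Lemma \ref{lemma:mixture_of_measures_scale_down_mmd} is stated for discrete mixtures, but the Bellman mixture here is over a (possibly) continuous index set. The extension is routine via the linearity of the Bochner mean embedding $\psi_{\int \mu_\omega\, d\mathbb{P}(\omega)} = \int \psi_{\mu_\omega}\, d\mathbb{P}(\omega)$ in the RKHS, followed by Jensen's inequality applied to the convex functional $g \mapsto \|g\|_{\mathcal F}^2$. This is the only place where a measure-theoretic care is needed; everything else is a direct chaining of the previously established lemmas.
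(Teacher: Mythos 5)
Your proposal is correct and follows essentially the same route as the paper's proof: bound the pointwise $\mathrm{MMD}^2$ of the Bellman mixtures via Lemma \ref{lemma:mixture_of_measures_scale_down_mmd}, pull out $\gamma^{\alpha}$ via Lemma \ref{lemma:take_out_shift_invariant_and_scale_sensitive}, and then pass to the supremum. Your explicit remark about extending the mixture lemma from discrete to continuous index sets (via linearity of the mean embedding and Jensen's inequality) is a point the paper glosses over, but it does not change the argument.
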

\begin{proof}
We have 
\begin{align*}
    &\text{MMD}^2 \bigg(\mathcal{T}^{\pi} \mu(s,a), \mathcal{T}^{\pi} \nu(s,a); k \bigg)\\ 
    &=\text{MMD}^2 \bigg(
    \int (f_{r,\gamma})_{\#} \mu(s',a') \pi(da'|s') P(ds'|s,a) \mathcal{R}(dr|s,a), \\
    &\int (f_{r,\gamma})_{\#} \nu(s',a') \pi(da'|s') P(ds'|s,a) \mathcal{R}(dr|s,a); k
    \bigg )\\
    &\overset{(a)}{\leq} \int \text{MMD}^2 \bigg( (f_{r,\gamma})_{\#} \mu(s',a'), (f_{r,\gamma})_{\#} \nu(s',a'); k \bigg) \pi(da'|s') P(ds'|s,a) \mathcal{R}(dr|s,a) \\ 
    &\overset{(b)}{=} \gamma^{\alpha} \int \text{MMD}^2 (  \mu(s',a'),  \nu(s',a') ) \pi(da'|s') P(ds'|s,a) \mathcal{R}(dr|s,a) \\ 
    &\leq \gamma^{\alpha} \text{MMD}^2_{\infty}(\mu, \nu). 
\end{align*}
Here $(a)$ follows from Lemma \ref{lemma:mixture_of_measures_scale_down_mmd}, and $(b)$ follows from Lemma \ref{lemma:take_out_shift_invariant_and_scale_sensitive}. Finally, the last inequality concludes the lemma by the definition of supremum MMD. 
\end{proof}

\begin{proof}[Proof of Theorem 2.1]
We have 
\begin{align*}
     \text{MMD}^2 \bigg(\mathcal{T}^{\pi} \mu(s,a), \mathcal{T}^{\pi} \nu(s,a); \sum_{i \in I} c_i k_i \bigg) 
     &\overset{(a)}{=} \sum_{i \in I} c_i \text{MMD}^2 \bigg(\mathcal{T}^{\pi} \mu(s,a), \mathcal{T}^{\pi} \nu(s,a); k_i \bigg) \\
     &\overset{(b)}{\leq} \sum_{i \in I} c_i \gamma^{\alpha_i} \text{MMD}^2 \bigg(\mu(s,a), \nu(s,a); k_i \bigg) \\ 
     &\overset{(c)}{\leq} \gamma^{\alpha_*} \sum_{i \in I} c_i  \text{MMD}^2 \bigg(\mu(s,a), \nu(s,a); k_i \bigg) \\  
     &\overset{(d)}{=} \gamma^{\alpha_*} \text{MMD}^2 \bigg(\mu(s,a), \nu(s,a); \sum_{i \in I} c_i k_i \bigg) \\
     &\leq \gamma^{\alpha_*} \text{MMD}^2_{\infty} \bigg(\mu, \nu; \sum_{i \in I} c_i k_i \bigg).
\end{align*}
Here, $(a)$ and $(d)$ follow from Lemma \ref{lemma:take_out_mixture_of_kernels}, $(b)$ follows from Lemma \ref{lemma:shift_invariant_and_scale_sensitive_kernels_are_contractive}, and $(c)$ follows from that $\gamma \leq 1$ and $c_i \geq 0$. Finally, the last inequality above concludes the proof by the definition of supremum MMD.
\end{proof}

\subsubsection*{Proof of the second part of Theorem \ref{theorem:contraction_property}}
\begin{proof}[Proof of the second part of Theorem \ref{theorem:contraction_property}]
We give a proof for two different cases. 

\textbf{Case 1}: Gaussian kernels $k(x,y) = \exp( -(x-y)^2 /(2 \sigma^2))$.  
\newline 

We will prove that $\text{MMD}_{\infty}$ associated with Gaussian kernels $k(x,y) = \exp( -(x-y)^2 /(2 \sigma^2))$ for some  $\sigma >0$ is not a contraction by contradiction and counterexamples. Assume by contradiction that there exists some $\alpha > 0$ such that 
\begin{align}
    \text{MMD}_{\infty}(\mathcal{T}^{\pi} \mu, \mathcal{T}^{\pi} \nu; k) \leq \gamma^{\alpha} \text{MMD}_{\infty}(\mu, \nu;k), 
    \label{eq:contraction_contradiction_assumption}
\end{align}
for all $\mu, \nu \in \mathcal{P}(\mathcal{X})$. If $\alpha \geq 1$, $\gamma^{\alpha} \leq \gamma^{\alpha'}, \forall \alpha' \in (0,1]$, thus without loss of generality, assume that $\alpha \in (0,1]$. 

\begin{figure}[t]
    \centering
    \includegraphics[scale=0.6]{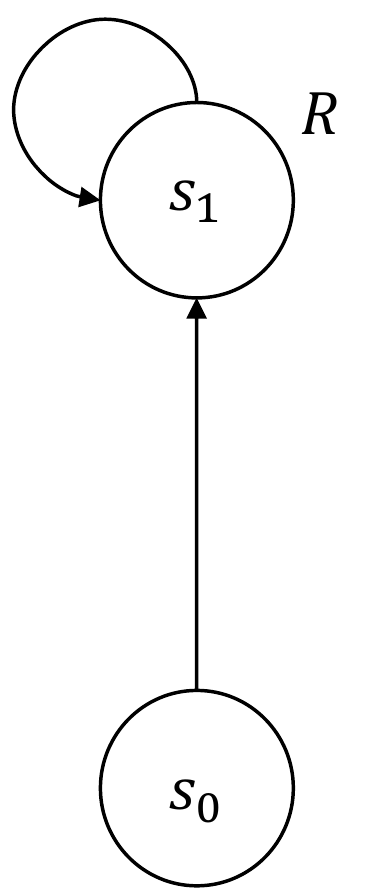}
    \caption{A simple MDP with 2 states: initial state $s_0$ and absorbing state $s_1$. Any agent receives a reward $r \sim R$ whenever it reaches state $s_1$.}
    \label{fig:simple_MDP}
\end{figure}

We provide a counterexample that contradicts Equation (\ref{eq:contraction_contradiction_assumption}). Consider a simple MDP with only 2 states: initial state $s_0$ and absorbing state $s_1$ where an agent receives a reward $r \sim R$ whenever it reaches $s_1$ (see Figure \ref{fig:simple_MDP}). Assume that the reward distribution $R$ has $dom(R) = \{r_1, ..., r_n\}$ with respective probabilities $\{p_{r_1}=\epsilon, p_{r_2}=\epsilon, ..., p_{r_{n-1}}=\epsilon, p_{r_n} = 1 - (n-1) \epsilon\}$ for some $\epsilon \in (0, \frac{1}{n-1})$ to be chosen later. Let $\mu, \nu \in \mathcal{P}(dom(R))^{\mathcal{S}}$ where $\mathcal{S} = \{s_0, s_1\}$ such that $\mu(s_0) = \mu(s_1) = p := \sum_{i=1}^n p_i \delta_{r_i}$ (for $\{p_i\}$ to be chosen later) and $\nu(s_0) = \nu(s_1) = q: = \sum_{i=1}^n q_i \delta_{r_i}$ (for $\{q_i\}$ to be chosen later). It is easy to verify that for any policy $\pi$, we have 
\begin{align*}
    \mathcal{T}^{\pi} \mu(s_0) &= \mathcal{T}^{\pi} \mu(s_1) = \mathbb{E}_{r \sim R} \left[ (f_{r,\gamma})_{\#} p \right] \\
    \mathcal{T}^{\pi} \nu(s_0) &= \mathcal{T}^{\pi} \nu(s_1) = \mathbb{E}_{r \sim R} \left[ (f_{r,\gamma})_{\#} q  \right],
\end{align*}
where $f_{r,\gamma}(z) := r + \gamma z, \forall z$ and $\#$ denotes pushforward operation. Note that $(f_{r,\gamma})_{\#} \mu(s_1)$ assigns probabilities $\{p_1, ..., p_n\}$ respectively to $r + \gamma dom(R) := \{r + \gamma r_1, ..., r + \gamma r_n\}$, and similarly for  $(f_{r,\gamma})_{\#} \nu(s_1)$. We have 
\begin{align*}
    \sum_{i=1}^n p_{r_i}^2 = (n-1) \epsilon^2 + (1 - (n-1) \epsilon)^2 \in [\frac{1}{n}, 1). 
\end{align*}
Note that since $\alpha \in (0,1]$, we have $\gamma^{2 \alpha} \in [\gamma^2, 1)$. Now, choose $\gamma \in (0,1)$ such that $\gamma^2 \geq \frac{1}{n}$, then there exists $\epsilon \in (0, \frac{1}{n-1})$ such that $ \sum_{i=1}^n p_{r_i}^2 = \gamma^{2 \alpha}$. Define \begin{align*}
    \eta_r &:= (f_{r,\gamma})_{\#} p = \sum_{i=1}^n p_i \delta_{r + \gamma r_i}, \\ 
    \kappa_r &:= (f_{r,\gamma})_{\#} q = \sum_{i=1}^n q_i \delta_{r + \gamma r_i}.
\end{align*}
It follows from the closed form of MMD that 
\begin{align}
    \text{MMD}^2(\mathcal{T}^{\pi} \mu(s_0), \mathcal{T}^{\pi} \nu(s_0); k) 
    &= \text{MMD}^2(\mathbb{E}_r[\eta_r] , \mathbb{E}_r[\kappa_r] ; k) \nonumber \\
    &= \sum_{r,r' \in dom(R)} p_r p_{r'} \text{MMD}^2(\eta_r, \kappa_{r'}; k)  \nonumber \\ 
    \label{eq:true_up_to_here_for_contradiction_examples}
    &> \sum_{i=1}^n p_{r_i}^2  \text{MMD}^2(\eta_{r_i}, \kappa_{r_i}; k) \\
    &\overset{(a)}{=} (\sum_{i=1}^n p_{r_i}^2) \text{MMD}^2(\eta_0, \kappa_0; k) \nonumber \\
    &\overset{(b)}{=} \gamma^{2 \alpha} \text{MMD}^2(\eta_0, \kappa_0; k) \nonumber.
\end{align}
Here $(a)$ follows from that Gaussian kernels are shift invariant and $(b)$ follows from the particular choice of $\epsilon$ such that $ \sum_{i=1}^n p_{r_i}^2 = \gamma^{2 \alpha}$. Similarly, we have 
\begin{align*}
    \text{MMD}^2(\mathcal{T}^{\pi} \mu(s_1), \mathcal{T}^{\pi} \nu(s_1); k) > \gamma^{2 \alpha} \text{MMD}^2(\eta_0, \kappa_0; k).
\end{align*}
It remains to choose particular values of $\sigma, n, p, q, \{r_i\}, \gamma$ such that $\gamma^2 \geq \frac{1}{n}$ and $\text{MMD}^2(\eta_0, \kappa_0; k) \geq \text{MMD}^2(p, q; k)$. It is indeed possible by choosing the values as in Table \ref{tab:realization_of_parameters_for_counterexample}. 

\begin{table}[]
    \centering
    \begin{tabular}{l|l}
        \textbf{Parameters} & \textbf{Values} \\
        \hline 
        $n$ & $5$ \\
        $\gamma$ & $0.8$ (or $0.9$) \\ 
        $\sigma$ & $0.1$ \\ 
        $\{p_i\}$ & $[0.4, 0.3, 0.2, 0.1, 0]$ \\ 
        $\{q_i \}$ & $[0, 0.1, 0.2, 0.3, 0.4]$ \\ 
        $\{r_i\}$ & $[0, 0.25, 0.5, 0.75, 1.]$
    \end{tabular}
    \caption{A realization of the parameters in the counterexample for proving Theorem 2.2}
    \label{tab:realization_of_parameters_for_counterexample}
\end{table}

So far, we have constructed a particular instance such that 
\begin{align*}
    \text{MMD}(\mathcal{T}^{\pi} \mu(s_0), \mathcal{T}^{\pi} \nu(s_0); k) &> \gamma^{\alpha} \text{MMD}(\mu(s_0), \nu(s_0); k) \\ 
    \text{MMD}(\mathcal{T}^{\pi} \mu(s_1), \mathcal{T}^{\pi} \nu(s_1); k) &> \gamma^{\alpha} \text{MMD}(\mu(s_1), \nu(s_1); k). 
\end{align*}
Thus, we have 
\begin{align*}
    \text{MMD}_{\infty}(\mathcal{T}^{\pi} \mu, \mathcal{T}^{\pi} \nu; k) > \gamma^{\alpha} \text{MMD}_{\infty}(\mu, \nu;k), 
\end{align*}
which contradicts Equation (\ref{eq:contraction_contradiction_assumption}). 

\noindent \textbf{Case 2}: For exp-prod kernels $k(x,y) = \exp( xy / \sigma^2)$. 

We follows the same procedure as in Case 1 but only up to Equation (\ref{eq:true_up_to_here_for_contradiction_examples}) as the exp-prodkernel is not shift invariant. Instead, define 
\begin{align*}
    r^* = \argmin_{r \in \{r_1, ..., r_n\}} \text{MMD}(\eta_{r}, \kappa_{r}; k).
\end{align*}
Then, we have 
\begin{align*}
    \text{MMD}^2(\mathcal{T}^{\pi} \mu(s_0), \mathcal{T}^{\pi} \nu(s_0); k)
    &> \sum_{i=1}^n p_{r_i}^2  \text{MMD}^2(\eta_{r_i}, \kappa_{r_i}; k) \\
    &\geq (\sum_{i=1}^n p_{r_i}^2) \text{MMD}^2(\eta_{r^*}, \kappa_{r^*}; k) \\
    &= \gamma^{2 \alpha} \text{MMD}^2(\eta_{r^*}, \kappa_{r^*}; k).
\end{align*}

Now it remains to choose particular values of $\sigma, n, p, q, \{r_i\}, \gamma$ such that $\gamma^2 \geq \frac{1}{n}$ and $\text{MMD}^2(\eta_{r^*}, \kappa_{r^*}; k) \geq \text{MMD}^2(p, q; k)$. In fact, the values chosen for Case 1 as in Table \ref{tab:realization_of_parameters_for_counterexample} already yields the previous inequalities for Case 2.

\end{proof}

\subsection*{Proofs of Lemma \ref{probability_bound_equals_standard_bound} and Proposition \ref{mmd_convergence_rate}}

\subsubsection*{Proof of Lemma \ref{probability_bound_equals_standard_bound}}
\begin{proof}
For all $n$, we have 
\begin{align}
    \frac{|a_n|}{f(n)} \leq \frac{|X_n|}{f(n)} \text{ a.s. }
    \label{eq:an_leq_Xn}
\end{align}
Denote by $P$ the probability measure of the underlying measurable space defining the random variables $X_n$. Since $X_n = O_p(f(n))$, for any $\epsilon > 0$, there exists $M_{\epsilon} > 0, N_{\epsilon} > 0$ such that 
\begin{align*}
    P\left( \frac{|X_n|}{f(n)} > M_{\epsilon} \right) \leq \epsilon, \forall n > N_{\epsilon}.
\end{align*}
It follows from Equation (\ref{eq:an_leq_Xn}) that $\forall n, \left\{ \frac{|a_n|}{f(n)} \leq M_{\epsilon} \right\} \supseteq \left\{ \frac{|X_n|}{f(n)} \leq M_{\epsilon} \right\}$ which implies that for all $n > N_{\epsilon}$, we have
\begin{align*}
    1_{\left\{ \frac{|a_n|}{f(n)} \leq M_{\epsilon} \right\}} &= P\left( \frac{|a_n|}{f(n)} \leq M_{\epsilon} \right) \geq P\left( \frac{|X_n|}{f(n)} \leq M_{\epsilon} \right) \geq 1 - \epsilon, 
\end{align*}
where the equality is due to that $a_n$ are deterministic. Picking any $\epsilon \in (0,1)$, we have 
\begin{align*}
    \frac{|a_n|}{f(n)} \leq M_{\epsilon}, \forall n > M_{\epsilon},
\end{align*}
which implies $a_n = O(f(n))$. 
\end{proof}

\subsubsection*{Proof of Proposition \ref{mmd_convergence_rate}}
First, we state the following proposition. 
\begin{prop}
Assume that $\sup_{x,y} k(x,y) \leq B$. Let $X_1, ..., X_n$ be $n$ i.i.d. samples of $P$ and denote $\xi(X_1, ..., X_n) := \text{MMD}\left(\frac{1}{n} \sum_{i=1}^n \delta_{X_i}, P; \mathcal{H} \right)$. For any $\epsilon > 0$, we have 
\begin{align*}
    P\left(\xi(X_1, ..., X_n) > \epsilon + 2 \sqrt{ \frac{B}{n} } \right) \leq \exp \left(  -\frac{\epsilon^2 n}{2B} \right).
\end{align*}
\label{prop:mmd_final_bound}
\end{prop}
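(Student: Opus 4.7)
The plan is to combine McDiarmid's inequality (giving concentration of $\xi$ around its mean at the required exponential rate) with a symmetrization-style bound $\mathbb{E}[\xi] \leq 2\sqrt{B/n}$ on the mean; together these immediately yield the claimed tail inequality.

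First, I would cast $\xi$ in the RKHS geometry. By Lemma \ref{lemma:mmd_as_feature_mean_difference}, writing $\hat{\mu}_n := \frac{1}{n}\sum_{i=1}^n \phi(X_i)$ and $\mu_P := \mathbb{E}_{X \sim P}[\phi(X)]$, we have $\xi(X_1,\ldots,X_n) = \|\hat{\mu}_n - \mu_P\|_{\mathcal{F}}$. This representation makes the bounded-differences property transparent: replacing a single coordinate $X_i$ by an independent copy $X_i'$ perturbs $\hat{\mu}_n$ only by $\frac{1}{n}(\phi(X_i') - \phi(X_i))$, so by the reverse triangle inequality together with the uniform bound $\|\phi(x)\|_{\mathcal{F}} = \sqrt{k(x,x)} \leq \sqrt{B}$, the value of $\xi$ changes by at most $c_i = 2\sqrt{B}/n$. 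Plugging $\sum_i c_i^2 = 4B/n$ into McDiarmid's inequality yields
\[
P(\xi - \mathbb{E}[\xi] > \epsilon) \leq \exp\!\left(-\frac{2\epsilon^2}{4B/n}\right) = \exp\!\left(-\frac{\epsilon^2 n}{2B}\right),
\]
which already matches the exponent in the target bound.

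Next, I would control $\mathbb{E}[\xi]$ via a ghost-sample/symmetrization argument. Introducing an independent copy $\{X_i'\}_{i=1}^n \sim P$ and Rademacher variables $\{\sigma_i\}_{i=1}^n$ independent of everything else, Jensen's inequality (pulling $\mu_P = \mathbb{E}[\frac{1}{n}\sum_i \phi(X_i')]$ inside the norm) followed by symmetrization and a second Jensen step give
\[
\mathbb{E}[\xi] \leq \mathbb{E}\!\left\|\tfrac{1}{n}\!\sum_i (\phi(X_i) - \phi(X_i'))\right\|_{\mathcal{F}} \leq 2\,\mathbb{E}\!\left\|\tfrac{1}{n}\!\sum_i \sigma_i \phi(X_i)\right\|_{\mathcal{F}} \leq 2\sqrt{\tfrac{\mathbb{E}[k(X,X)]}{n}} \leq 2\sqrt{\tfrac{B}{n}}.
\]
Combining this with the McDiarmid step gives $P(\xi > \epsilon + 2\sqrt{B/n}) \leq P(\xi - \mathbb{E}[\xi] > \epsilon) \leq \exp(-\epsilon^2 n/(2B))$, as required.

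I do not foresee a substantive obstacle: both ingredients are classical, and the RKHS representation of MMD makes the bounded-differences constant and the variance-type bound essentially immediate. The only minor choice is the route used for $\mathbb{E}[\xi]$ — one could alternatively expand $\mathbb{E}[\xi^2] = \frac{1}{n}(\mathbb{E}[k(X,X)] - \|\mu_P\|_{\mathcal{F}}^2)$ directly using independence of the $X_i$ and then apply Jensen, obtaining the sharper $\sqrt{B/n}$; I opt for symmetrization so that the constant matches the $2\sqrt{B/n}$ written in the statement.
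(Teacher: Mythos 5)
Your proof is correct and follows essentially the same route as the paper's: a bounded-differences/McDiarmid step with $c_i = 2\sqrt{B}/n$ (via the mean-embedding representation of MMD) for the concentration around the mean, plus a symmetrization and Jensen/Cauchy--Schwarz argument bounding $\mathbb{E}[\xi]$ by twice a Rademacher-type quantity, which is at most $\sqrt{B/n}$. Your exponent computation $\exp(-\epsilon^2 n/(2B))$ is the correct one and matches the proposition as stated.
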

Note that by setting $t = \exp \left(  -\frac{\epsilon^2 n}{2B} \right)$ in Proposition \ref{prop:mmd_final_bound}, we have 
\begin{align*}
     P\left(\xi(X_1, ..., X_n) /(1/\sqrt{n}) > M_t \right) \leq t,
\end{align*}
where $M_t := \sqrt{2B \log(1/t)} + 2 \sqrt{B}$. Thus, $\xi(X_1,...,X_n) = O_p(1/\sqrt{n})$, which concludes Proposition \ref{mmd_convergence_rate}.

Now, we only need to prove Proposition \ref{prop:mmd_final_bound}. The proof follows a standard procedure to bound an empirical process $\xi(X_1, ..., X_n) $ where we first bound it in probability w.r.t. its expectation using concentration inequalities and then we bound its expectation via a complexity notation of the witness function class $\mathcal{H}$.  

\noindent \textbf{Preliminaries}. Before proving Proposition \ref{prop:mmd_final_bound}, we present some relevant notations and preliminary results from which we combine to derive a proof for Proposition \ref{prop:mmd_final_bound}. For any function $g: \mathcal{X}^n \rightarrow \mathbb{R}$, denote 
\begin{align*}
    \delta_i g(x_1, ...,x_n) &:= \sup_{z} g(x_1, ...,x_{i-1}, z, x_{i+1}, ...,x_n) - \inf_{z} g(x_1, ...,x_{i-1}, z, x_{i+1}, ...,x_n), \\
    \|\delta_i g\|_{\infty} &:= \sup_{x_1,...,x_n} |\delta_i g(x_1, ...,x_n)|. 
\end{align*}
Denote by $\mathcal{H} := \{f \in \mathcal{F}: \|f\|_{\mathcal{F}} \leq 1\}$ the unit ball of the RKHS $\mathcal{F}$. For any $\{x_i\}_{i=1}^n \in \mathcal{X}^n$, we denote $\mathcal{H} \circ \{x_1, ..., x_n\} := \{ (f(x_1), ..., f(x_n)) \in \mathbb{R}^n: f \in \mathcal{H} \}$. 

% \begin{lem}[\textbf{McDiarmid's inequality} \citep{afol}]
% Let $X_1, ..., X_n$ be independent random variables in $\mathcal{X}$. Let $g: \mathcal{X}^n \rightarrow \mathbb{R}$ be any function such that $\|\delta_i g\|_{\infty} < \infty, \forall i$. For any $\epsilon > 0$, we have 
% \begin{align*}
%     P\left( g(X_1, ...,X_n) - \mathbb{E} g(X_1,...,X_n) \geq \epsilon \right) \leq \exp\left(  -\frac{2 \epsilon^2}{ \sum_{i=1}^n \|\delta_i g\|_{\infty}^2 } \right). 
% \end{align*}
% \end{lem}

\begin{defn}
The Rademacher complexity of a set $\mathcal{T} \subseteq \mathbb{R}^n$ is defined as 
\begin{align*}
    Rad(\mathcal{T}): = \mathbb{E} \sup_{t \in \mathcal{T}} \frac{1}{n} \sum_{i=1}^n \Omega_i t_i,
\end{align*}
where $\Omega_1, ..., \Omega_n \in \{-1,1\}$ are independent Rademacher random variables, i.e., $P(\Omega_i = 1) = P(\Omega_i = -1) = 1/2, \forall i$.
\end{defn}

\begin{lem}
We have 
\begin{align*}
    \mathbb{E} \left[\xi(X_1, ..., X_n) \right] \leq 2 \mathbb{E} Rad( \mathcal{H} \circ \{X_1, ..., X_n\} ).
\end{align*}
In addition, we have 
\begin{align*}
    \mathbb{E} Rad( \mathcal{H} \circ \{X_1, ..., X_n\} ) \leq \sqrt{ \frac{B}{n} }.
\end{align*}
\label{lemma:bound_in_expectation}
\end{lem}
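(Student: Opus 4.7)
The plan is to prove the two inequalities separately, both by standard empirical-process techniques adapted to the RKHS setting; the main observation that makes everything work is the reproducing property combined with the uniform bound $k(x,y)\leq B$.

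For the first bound, I would start by rewriting $\xi(X_1,\ldots,X_n)$ in its variational form. Since $\mathcal{H}$ is symmetric (the unit ball of $\mathcal{F}$), the MMD coincides with the one-sided supremum
\[
\xi(X_1,\ldots,X_n)=\sup_{f\in\mathcal{H}}\Big(\tfrac{1}{n}\sum_{i=1}^n f(X_i)-\mathbb{E}_{X\sim P}[f(X)]\Big).
\]
Then I would apply the standard symmetrization trick. Introducing an independent ghost sample $X_1',\ldots,X_n'\stackrel{\text{i.i.d.}}{\sim}P$, I replace $\mathbb{E}[f(X)]$ by $\mathbb{E}[\tfrac{1}{n}\sum_i f(X_i')]$ and pull the expectation outside the supremum via Jensen's inequality. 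The sequence $f(X_i)-f(X_i')$ is symmetric in distribution, so I can multiply each term by a Rademacher variable $\Omega_i$ without changing the law; a triangle inequality then splits the sup into two identical copies, yielding the factor~$2$ and the desired Rademacher bound.

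For the second bound, the key step is to exploit the reproducing property $f(X_i)=\langle f,k(X_i,\cdot)\rangle_{\mathcal{F}}$. This lets me write
\[
\sup_{f\in\mathcal{H}}\tfrac{1}{n}\sum_{i=1}^n\Omega_i f(X_i)
=\sup_{f\in\mathcal{H}}\Big\langle f,\tfrac{1}{n}\sum_{i=1}^n\Omega_i k(X_i,\cdot)\Big\rangle_{\mathcal{F}}
=\tfrac{1}{n}\Big\|\sum_{i=1}^n\Omega_i k(X_i,\cdot)\Big\|_{\mathcal{F}},
\]
by Cauchy--Schwarz (saturated on the unit ball). Taking expectation over $(\Omega_i,X_i)$, pulling the square root outside via Jensen, and expanding the squared norm using $\langle k(X_i,\cdot),k(X_j,\cdot)\rangle_{\mathcal{F}}=k(X_i,X_j)$, the cross terms vanish because the $\Omega_i$'s are independent and zero-mean, leaving only the diagonal $\mathbb{E}\sum_i k(X_i,X_i)\le nB$. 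The claimed rate $\sqrt{B/n}$ falls out immediately.

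There is no genuine obstacle here: both steps are textbook applications of symmetrization and kernel mean embedding arguments. The only point requiring care is checking that the one-sided supremum in the definition of MMD suffices (so that the symmetrization bound produces the constant $2$ rather than something worse), which is handled by the symmetry of $\mathcal{H}$. No additional assumptions beyond $\sup_{x,y}k(x,y)\le B$ are needed.
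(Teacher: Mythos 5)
Your proposal is correct and follows essentially the same route as the paper's proof: symmetrization with a ghost sample plus Jensen's inequality for the factor-of-two Rademacher bound, then the reproducing property, Cauchy--Schwarz, and Jensen to reduce the Rademacher complexity to the diagonal sum $\mathbb{E}\sum_i k(X_i,X_i)\le nB$. The only cosmetic difference is the orientation of the supremum in the variational form of the MMD, which is immaterial since $\mathcal{H}$ is symmetric.
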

\begin{proof}
Let $\tilde{X}_1, ..., \tilde{X}_n$ be new independent samples from $P$ and independent of $S := \{X_1, ..., X_n\}$.
We have 
\begin{align*}
    \mathbb{E} \left[\xi(X_1, ..., X_n) \right] &= \mathbb{E} \left[ \text{MMD}\left(\frac{1}{n} \sum_{i=1}^n \delta_{X_i}, P; \mathcal{F} \right) \right] \\
    &= \mathbb{E} \sup_{f \in \mathcal{H}} \left( \mathbb{E} f(X) - \frac{1}{n} \sum_{i=1}^n f(X_i) \right) \\
    &= \mathbb{E} \sup_{f \in \mathcal{H}} \frac{1}{n} \sum_{i=1}^n \mathbb{E} \left[ f(\tilde{X}_i) - f(X_i) \bigg | S \right] \\ 
    &\overset{(a)}{\leq} \mathbb{E} \mathbb{E}  \sup_{f \in \mathcal{H}} \frac{1}{n} \sum_{i=1}^n  \left( f(\tilde{X}_i) - f(X_i) \right) \\ 
    &= \mathbb{E}  \sup_{f \in \mathcal{H}} \frac{1}{n} \sum_{i=1}^n \left( f(\tilde{X}_i) - f(X_i) \right) \\ 
    &\overset{(b)}{=} \mathbb{E}  \sup_{f \in \mathcal{H}} \frac{1}{n} \sum_{i=1}^n \Omega_i \left( f(\tilde{X}_i) - f(X_i) \right) \\ 
    &\leq \mathbb{E}  \left[ \sup_{f \in \mathcal{H}} \frac{1}{n} \sum_{i=1}^n \Omega_i f(\tilde{X}_i) + \sup_{f \in \mathcal{H}} \frac{1}{n} \sum_{i=1}^n (-\Omega_i) f(X_i) \right] \\
    &= 2 \mathbb{E}  \sup_{f \in \mathcal{H}} \frac{1}{n} \sum_{i=1}^n \Omega_i f(X_i) \\
    &= 2 \mathbb{E} Rad(\mathcal{H} \circ \{X_1, ..., X_n\}). 
\end{align*}
Here $(a)$ follows from Jensen's inequality for convex function $sup$ and $(b)$ follows from $\left(f(\tilde{X}_i) - f(X_i) \right)_{1 \leq i \leq n}$ has the same distribution as $\left(\Omega_i (f(\tilde{X}_i) - f(X_i)) \right)_{1 \leq i \leq n}$. 

In addition, we have 
\begin{align*}
    \mathbb{E} Rad(\mathcal{H} \circ \{X_1, ..., X_n\}) &= \mathbb{E}  \sup_{f \in \mathcal{H}} \frac{1}{n} \sum_{i=1}^n \Omega_i f(X_i) \\ &= \mathbb{E} \sup_{f \in \mathcal{H}} \bigg \langle f, \frac{1}{n} \sum_{i=1}^n \Omega_i k(X_i, \cdot) \bigg \rangle_{\mathcal{F}} \\ 
    &\leq \mathbb{E} \sup_{f \in \mathcal{H}} \|f\|_{\mathcal{F}} \times \bigg\|  \frac{1}{n} \sum_{i=1}^n \Omega_i k(X_i, \cdot) \bigg \|_{\mathcal{F}} \text{(Cauchy-Schwartz ineq.)}\\
    &\leq \mathbb{E} \bigg\|  \frac{1}{n} \sum_{i=1}^n \Omega_i k(X_i, \cdot) \bigg \|_{\mathcal{F}} \\ 
    &= \frac{1}{n} \mathbb{E} \sqrt{  \sum_{i,j} \Omega_i \Omega_j k(X_i, X_j)  } \\
% \end{align*}
% \begin{align*}
    &\leq \frac{1}{n} \sqrt{ \mathbb{E} \sum_{i,j} \Omega_i \Omega_j k(X_i, X_j)  } \text{ (Jensen's inequality)}\\ 
    &= \frac{1}{n} \sqrt{ \mathbb{E} \sum_{i=1}^n k(X_i,X_i) } \leq \frac{1}{n} \sqrt{ nB} = \sqrt{\frac{B}{n}}.
\end{align*}
The last equality is due to that $\Omega_i^2 = 1, \forall i$ and $\mathbb{E} \left[ \Omega_i \Omega_j \right] = 0, \forall i \neq j$.
\end{proof}

\begin{proof}[Proof of Proposition \ref{prop:mmd_final_bound}]
Now, we are ready to prove Proposition \ref{prop:mmd_final_bound}. 
For any $\{x_i\}_{i=1}^n \in \mathcal{X}^n$, we have 
\begin{align*}
    \delta_i \xi(x_1, ..., x_n) 
    &\leq \sup_{z} \bigg \| \frac{1}{n} \sum_{j=1, j \neq i}^n k(x_j, \cdot) + \frac{k(z, \cdot)}{n} - \int k(x, \cdot) dP(x) \bigg \|_{\mathcal{F}} \\ 
    &- \inf_{z} \bigg \| \frac{1}{n} \sum_{j=1, j \neq i}^n k(x_j, \cdot) +  \frac{k(z, \cdot)}{n} - \int k(x, \cdot) dP(x) \bigg \|_{\mathcal{F}} \\ 
    &\leq \sup_{z,z'} \bigg \| \frac{k(z, \cdot)}{n} - \frac{k(z', \cdot)}{n} \bigg\|_{\mathcal{F}} \leq \frac{2\sqrt{B}}{n}. 
\end{align*}
Thus, it follows from McDiarmid's inequality that 
\begin{align}
    P( \xi(X_1, ...,X_n) - \mathbb{E} \xi(X_1,...,X_n) \geq \epsilon) \leq \exp \left( - \frac{2 n \epsilon^2}{B} \right). 
    \label{eq:bound_in_probability}
\end{align}
Equation (\ref{eq:bound_in_probability}) and Lemma \ref{lemma:bound_in_expectation} immediately yield Proposition \ref{prop:mmd_final_bound}.
\end{proof}
% \section{Algorithm detail}
% \label{Chap4_section_alg_detail}
% We present the algorithm details for the tabular and Atari experiments presented in the main section. 

% \subsection{Tabular experiment}

% \subsection{Atari game experiment}

% \section{Additional experiment} 

\newpage 

\chapter{Offline Reinforcement Learning with Deep ReLU Networks}
\label{chap:five}
In this chapter, we consider the third challenge of this thesis about statistical efficiency of offline RL with infinitely large state spaces. In particular, we study the statistical theory of offline reinforcement learning (RL) with deep ReLU network function approximation. We analyze a variant of fitted-Q iteration (FQI) algorithm under a new dynamic condition that we call Besov dynamic closure, which encompasses the conditions from prior analyses for deep neural network function approximation. Under Besov dynamic closure, we prove that the FQI-type algorithm enjoys the sample complexity of $\tilde{\mathcal{O}}\left(  \kappa^{1 + d/\alpha} \cdot \epsilon^{-2 - 2d/\alpha} \right)$ where $\kappa$ is a distribution shift measure, $d$ is the dimensionality of the state-action space, $\alpha$ is the (possibly fractional) smoothness parameter of the underlying MDP, and $\epsilon$ is a user-specified precision. This is an improvement over the  sample complexity of $\tilde{\mathcal{O}}\left( K \cdot \kappa^{2 + d/\alpha} \cdot \epsilon^{-2 - d/\alpha}  \right)$ in the prior result \citep{DBLP:journals/corr/abs-1901-00137} where $K$ is an algorithmic iteration number which is arbitrarily large in practice. Importantly, our sample complexity is obtained under the new general dynamic condition and a data-dependent structure where the latter is either ignored in prior algorithms or improperly handled by prior analyses. This is the first comprehensive analysis for offline RL with deep ReLU network function approximation under a general setting. This chapter is based on our paper \citep{nguyentang2021sample}. 

\section{Introduction}
Offline reinforcement learning \citep{levine2020offline} is a practical paradigm of reinforcement learning (RL) where logged experiences are abundant but a new interaction with the environment is limited or even prohibited. The fundamental offline RL problems are how well previous experiences could be used to evaluate a new target policy, known as off-policy evaluation (OPE) problem, or to learn the optimal policy, known as off-policy learning (OPL) problem. We study these offline RL problems with infinitely large state spaces, where the agent must use function approximation such as deep neural networks to generalize across states from an offline dataset without any further exploration. Such problems form the core of modern RL in practical settings, but relatively few work provide a comprehensive and adequate analysis of the statistical efficiency for the problems. 

On the theoretical side, predominant sample efficiency results in offline RL focus on tabular environments with small finite state spaces \citep{DBLP:conf/aistats/YinW20,DBLP:conf/aistats/YinBW21,yin2021characterizing}, but as these methods scale with the number of states, they are infeasible for infinitely large state space settings. While this tabular setting has been extended to large state spaces via {linear} environments \citep{DBLP:journals/corr/abs-2002-09516,tran2021combining}, the linearity assumption often does not hold for many RL problems in practice. More relevant theoretical progress has been achieved for more complex environments with general and deep neural network function approximations, but these results are either inadequate or relatively disconnected from practical settings \citep{DBLP:journals/jmlr/MunosS08,DBLP:journals/corr/abs-1901-00137,DBLP:conf/icml/0002VY19}. In particular, their finite-sample results either (i) depend on a so-called inherent Bellman error \citep{DBLP:journals/jmlr/MunosS08,DBLP:conf/icml/0002VY19}, which could be arbitrarily large or uncontrollable in practice, (ii) 
 avoid the data-dependent structure in their algorithms at the cost of losing sample efficiency \citep{DBLP:journals/corr/abs-1901-00137} or improperly ignore it in their analysis \citep{DBLP:conf/icml/0002VY19}, or (iii) rely on relatively strong dynamics assumption  \citep{DBLP:journals/corr/abs-1901-00137}.

% \thanh{Most theoretical works focus on tabular and linear settings} 

% \thanh{Few focus on general function approximation and deep neural network but the analysis either limited and disconnect from practice and worsen}

% \cite{DBLP:conf/icml/0002VY19,DBLP:journals/jmlr/MunosS08} provide theoretical analysis for offline RL with general function approximation but their error analysis involves a so-called inherent Bellman error term which can be arbitrarily large or uncontrollable in practice.
% Previous works are disconnect from practice as they ignore the data-dependent structure in either their algorithm \citep{DBLP:journals/corr/abs-1901-00137} or inadequately in their analysis \cite{DBLP:conf/icml/0002VY19}, 

% Offline reinforcement learning \citep{levine2020offline} is a practical paradigm of reinforcement learning (RL) where logged experiences are abundant but a new interaction with the environment is limited or often prohibited. A fundamental question in this setting is how well previous experiences could be used to evaluate and improve the performance of new policies. At the core of this problem is the off-policy evaluation (OPE) where the goal is to evaluate the value of a new policy given only fixed datasets logged by different (possibly unknown) \textit{behaviour} policies. 

In this chapter, we study a variation of fitted-Q iteration (FQI) \citep{bertsekas1995dynamic,introRL} for the offline RL problems where we approximate the target $Q$-function from an offline data using a deep ReLU network. The algorithm is appealingly simple: it iteratively estimates the target $Q$-function via regression on the offline data and the previous estimate. This procedure, which intuitively does the best it could with the available offline data, forms the core of many current offline RL methods. With linear function approximation, \cite{DBLP:journals/corr/abs-2002-09516} show that this procedure yields a minimax-optimal sample efficient algorithm, provided the environment dynamics satisfy certain linear properties. While their assumptions generalize the tabular settings, they are restrictive for more complex environment dynamics where non-linear function approximation is required. Moreover, as they highly exploit the linearity structure, it is unclear how their analysis can accommodate non-linear function approximation such as deep ReLU networks. 

In this chapter, we provide the statistical theory of a FQI-type algorithm for both OPE and OPL problems with deep ReLU networks. In particular, we provide the first comprehensive analysis for offline RL under deep ReLU network function approximation. We achieve this generality in our result via two novel considerations. First, we introduce \textit{Besov dynamic closure} which is, to our knowledge, the most general assumption that encompasses the previous dynamic assumptions in offline RL.In particular, our Besov dynamic closure reduces into H\"older smoothness and Sobolev smoothness conditions as special cases. Moreover, the MDP under the Besov dynamic closure needs not be continuous, differentiable or spatially homogeneous in smoothness. 
% Unlike the previous spaces considered in offline RL such as H\"older and Sobolev spaces that permit only integer smoothness, Besov spaces allow fractional smoothness that describes the regularity of a function more precisely and generalizes the previous smoothness notions. It only requires a very general notion of local oscillations of the underlying MDP to be bounded where the MDP could be discontinuous, non-differentiable, or having spatially inhomogeneous smoothness. These intriguing properties were not possible to be considered in  prior results. 
Second, as each value estimate in a regression-based offline RL algorithm depends on the previous estimates and the entire offline dataset, a complicated data-dependent structure is induced. This data-dependent structure plays a central role in the statistical efficiency of the algorithm. While prior results ignore the data-dependent structure, either in their algorithm or their analysis, resulting a loss of sample efficiency or improper analysis, respectively, we consider it in a FQI-type algorithm and effectively handle it in our analysis (this is discussed further in Section \ref{section:background_and_related_literature} and \ref{section:main_results}). Under these considerations, we establish the sample complexity of offline RL with deep ReLU network function approximation that is both more general and more sample-efficient than the prior results, as summarized in Table \ref{tab:compare_literature} which will be discussed in details in Subsection \ref{subsect:main_result}. Moreover, as our technical proof combining a uniform-convergence analysis and local Rademacher complexities with a localization argument is sufficiently general and effective in handling complex function approximations, our proof could be of independent interest for other offline RL methods with non-linear function approximation. Our contributions for this chapter can be summarized as 
\begin{itemize}
    \item Introduce a new dynamic condition, namely Besov dynamic closure, that encompasses the dynamic conditions in the prior works; 
    \item Provide the first comprehensive analysis of sample complexity of offline RL with deep ReLU network function approximation under a data-dependent structure and the Besov dynamic closure.
\end{itemize}

\section{Related Work}
% This section presents the necessary background of OPE and the related literature for our work. 
\label{section:background_and_related_literature}
The majority of the theoretical results for offline RL focus on tabular settings and mostly on OPE task where the state space is finite and an importance sampling -related approach is possible \citep{first_is,dudik2011doubly,jiang2015doubly,thomas2016data,farajtabar2018more,kallus2019double}. The main drawback of the importance sampling -based approach is that it suffers high variance in long horizon problems. The high variance problem is later mitigated by the idea of formulating the OPE problem as a density ratio estimation problem  \citep{DBLP:conf/nips/LiuLTZ18,nachum2019dualdice,Zhang2020GenDICEGO,Zhang2020GradientDICERG,Nachum2019AlgaeDICEPG} but these results do not provide sample complexity guarantees. The sample efficiency guarantees for offline RL are obtained in tabular settings in \citep{xie2019optimal,DBLP:conf/aistats/YinW20,DBLP:conf/aistats/YinBW21,yin2021characterizing}. A lower bound for tabular offline RL is obtained in \citep{DBLP:conf/icml/JiangL16} which in particular show a Cramer-Rao lower bound for discrete-tree MDPs. 

For the function approximation setting, as the state space of MDPs is often infinite or continuous, some form of function approximation is deployed in approximate dynamic programming such as fitted Q-iteration, least squared policy iteration \citep{bertsekas1995neuro,DBLP:conf/atal/JongS07,DBLP:journals/jmlr/LagoudakisP03,DBLP:conf/icml/GrunewalderLBPG12,DBLP:conf/icml/Munos03,DBLP:journals/jmlr/MunosS08,10.1007/s10994-007-5038-2,DBLP:conf/icml/TosattoPDR17}, and fitted Q-evaluation (FQE) \citep{DBLP:conf/icml/0002VY19}. A recent line of work studies offline RL in non-linear function approximation (e.g, general function approximation and deep neural network function approximation) \citep{DBLP:conf/icml/0002VY19,DBLP:journals/corr/abs-1901-00137}. In particular, \citet{DBLP:conf/icml/0002VY19} provide an error bound of OPE and OPL with general function approximation but they ignore the data-dependent structure in the FQI-type algorithm, resulting in an improper analysis. 
% That is, the proof in \citep{DBLP:conf/icml/0002VY19} inappropriately assume that the empirical risk minimizer $Q^k$ is deterministic and independent of $\{(s_i, a_i)\}_{i=1}^n$.
Moreover, their error bounds depend on the inherent Bellman error that can be large and controllable in practical settings. More closely related to our work is \citep{DBLP:journals/corr/abs-1901-00137} which also considers deep neural network approximation. In particular, \citet{DBLP:journals/corr/abs-1901-00137} focus on analyzing deep Q-learning using a fresh batch of data for each iteration. Such approach is considerably sample-inefficient in offline RL as it undesirably does not leverage the past data. As a result, their sample complexity scales with the number of iterations $K$ which is very large in practice. In addition, they rely on a relatively restricted smoothness assumption of the underlying MDPs that hinders their results from being widely applicable in more practical settings. We summarize the key differences between our work and the prior results in Table \ref{tab:compare_literature} which will be elaborated further in Subsection \ref{subsect:main_result}. 

% We summarize the key differences between our work and the related literature in Table \ref{tab:compare_literature}. In this table, the second column specifies the function approximation setting in OPE while the third column shows the regularity condition for the function space of the Bellman target functions (the resulting function space by the Bellman operator). The sample complexity in Table \ref{tab:compare_literature} will be further discussed in Section \ref{section:main_results}. 

Since the initial version of this chapter appeared, a concurrent work studies offline RL with general function approximation via local Rademacher complexities \citep{duan2021risk}. While both works independently have the same idea of using local Rademacher complexities as a tool to study sample complexities in offline RL, our work differs from \citep{duan2021risk} in three main aspects. First, we focus on infinite-horizon MDPs while \citep{duan2021risk} work in finite-horizon MDPs. Second, 
we focus on a practical setting of deep neural network function approximation with an explicit sample complexity while the sample complexity in \citep{duan2021risk} depends on the critical radius of local Rademacher complexity. Bounding the critical radius for a complex model under the data-dependent structure is highly non-trivial. \citet{duan2021risk} provide the specialized sample complexity for finite classes, linear classes, kernel spaces and sparse linear spaces but it is unclear how their result applies to more complex models such as a deep ReLU network. Importantly, we propose a new Besov dynamic closure and a uniform-convergence argument which appear absent in \citet{duan2021risk}.

\section{Preliminaries}
\label{section:preliminaries}
We consider reinforcement learning in an infinite-horizon discounted Markov decision process (MDP) with possibly infinitely large state space $\mathcal{S}$, continuous action space $\mathcal{A}$, initial state distribution $\rho \in \mathcal{P}(\mathcal{S})$, transition operator $P: \mathcal{S} \times \mathcal{A} \rightarrow \mathcal{P}(\mathcal{S})$, reward distribution $R: \mathcal{S} \times \mathcal{A} \rightarrow \mathcal{P}([0,1])$, and a discount factor $\gamma \in [0,1)$. Here we denote by $\mathcal{P}(\Omega)$ the set of probability measures supported in domain $\Omega$. For notational simplicity, we assume that $\mathcal{X} := \mathcal{S} \times \mathcal{A} \subseteq [0,1]^d$ but our main conclusions do not change when $\mathcal{A}$ is finite. 

A policy $\pi: \mathcal{S} \rightarrow \mathcal{P}(\mathcal{A})$ induces a distribution over the action space conditioned on states. The $Q$-value function for policy $\pi$ at state-action pair $(s,a)$, denoted by $Q^{\pi}(s,a) \in [0,1]$, is the expected discounted total reward the policy collects if it initially starts in the state-action pair, 
\begin{align*}
    Q^{\pi}(s,a) &:= \mathbb{E}_{\pi} \left[ \sum_{t=0}^{\infty} \gamma^t r_t | s_0 = s, a_0 = a \right], 
\end{align*}
where $r_t \sim R(s_t, a_t), a_t \sim \pi(\cdot|s_t)$, and $s_t \sim P(\cdot|s_{t-1}, a_{t-1})$. The value for a policy $\pi$ is simply $V^{\pi} = \mathbb{E}_{s \sim \rho, a \sim \pi(\cdot|s)} \left[Q^{\pi}(s,a) \right]$, and the optimal value is $V^* = \max_{\pi} V^{\pi}$ where the maximization is taken over all stationary policies. Alternatively, the optimal value $V^*$ can be obtained via the optimal $Q$-function $Q^* = \max_{\pi} Q^{\pi}$ as $V^* = \mathbb{E}_{s \sim \rho} \left[ \max_{a} Q^*(s,a) \right]$. Denote by $T^{\pi}$ and $T^*$ the Bellman operator and the optimality Bellman operator, i.e., for any $f: \mathcal{S}\times \mathcal{A} \rightarrow \mathbb{R}$
\begin{align*}
    [T^{\pi} f](s,a) &= \mathbb{E}_{r \sim R(s,a)}[r] + \gamma \mathbb{E}_{s' \sim P(\cdot|s,a), a' \sim \pi(\cdot|s')} \left[ f(s',a') \right] \\
    [T^* f](s,a) &= \mathbb{E}_{r \sim R(s,a)}[r] + \gamma \mathbb{E}_{s' \sim P(\cdot|s,a)} \left[ \max_{a'} f(s',a') \right], 
\end{align*}
we have $T^{\pi} Q^{\pi} = Q^{\pi}$ and $T^* Q^* = Q^*$.

We consider the offline RL setting where the agent cannot explore further the environment but has access to a fixed logged data $\mathcal{D} = \{(s_i, a_i, s'_i, r_i)\}_{i=1}^n$ collected a priori by certain behaviour policy $\eta$ where $(s_i, a_i) \overset{i.i.d.}{\sim} \mu(\cdot, \cdot) := \frac{1}{1 - \gamma} \sum_{t=0}^{\infty} \gamma^t P(s_t = \cdot, a_t = \cdot| \rho, \eta), s'_i \sim P(\cdot| s_i, a_i)$ and $r_i \sim R(s_i, a_i)$. Here $\mu$ is the (sampling) state-action visitation distribution. The goal of OPE and OPL are to estimate $V^{\pi}$ and $V^*$, respectively from $\mathcal{D}$, and in this chapter we measure performance by sub-optimality gaps. 

\noindent \textbf{For OPE}. Given a fixed target policy $\pi$, for any value estimate $\hat{V}$ computed from the offline data $\mathcal{D}$, the sub-optimality of OPE is defined as  
\begin{align*}
    \text{SubOpt}(\hat{V}; \pi) = |V^{\pi} - \hat{V}|. 
\end{align*}

\noindent \textbf{For OPL}. For any estimate $\hat{\pi}$ of the optimal policy $\pi^*$ that is learned from the offline data $\mathcal{D}$, we define the sup-optimality of OPL as 
\begin{align*}
    \text{SubOpt}(\hat{\pi}) = \mathbb{E}_{\rho} \left[ V^*(s) - Q^*(s, \hat{\pi}(s)) \right], 
\end{align*}
where $\mathbb{E}_{\rho}$ is the expectation with respect to $s \sim \rho$. 

\subsection{Deep ReLU Networks as Function Approximation}
\label{subsection:deep_relu_net}
In practice, the state space is often very large and complex, and thus function approximation is required to ensure generalization across different states. Deep networks with the ReLU activation offer a rich class of parameterized functions with differentiable parameters. Deep ReLU networks are state-of-the-art in many applications, e.g., \citep{krizhevsky2012imagenet,mnih2015human}, including offline RL with deep ReLU networks that can yield superior empirical performance \citep{voloshin2019empirical}. In this section, we describe the architecture of deep ReLU networks and the associated function space which we directly work on. A $L$-height, $m$-width ReLU network on $\mathbb{R}^d$ takes the form of 
\begin{align*}
    f_{\theta}^{L,m}(x) = W^{(L)} \sigma \left(  W^{(L-1)} \sigma \left( \hdots \sigma \left(W^{(1)} \sigma(x) + b^{(1)} \right) \hdots \right) + b^{(L-1)} \right) + b^{(L)},
\end{align*}
where $W^{(L)} \in \mathbb{R}^{1 \times m}, b^{(L)} \in \mathbb{R}, W^{(1)} \in \mathbb{R}^{m \times d}, b^{(1)} \in \mathbb{R}^m$, $W^{(l)} \in \mathbb{R}^{m \times m}, b^{(l)} \in \mathbb{R}^m, \forall 1 < l < L$, $\theta = \{W^{(l)}, b^{(l)}\}_{1 \leq l \leq L}$, and $\sigma(x) = \max\{x, 0\}$ is the (element-wise) ReLU activation. We define $\Phi(L, m, S,B)$ as the space of $L$-height, $m$-width ReLU functions $f_{\theta}^{L,m}(x)$ with sparsity constraint $S$, and norm constraint $B$, i.e., $\sum_{l=1}^L (\|W^{(l)}\|_0 + \| b^{(l)} \|_0) \leq S, \max_{1 \leq l \leq L} \|W^{(l)} \|_{\infty} \lor \|b^{(l)} \|_{\infty} \leq B$ where $\|\cdot\|_0$ is the $0$-norm, i.e., the number of non-zero elements, and $a \lor b = \max\{a,b\}$. Finally, for some $L,m \in \mathbb{N}$ and $S,B \in (0,\infty)$, we define the unit ball of ReLU network function space $\mathcal{F}_{NN}$ as 
\begin{align*}
    \mathcal{F}_{NN} := \bigg\{ f \in \Phi(L,m,S,B):  \| f \|_{\infty} \leq 1 \bigg\}. 
\end{align*}
We further write $\mathcal{F}_{NN}(\mathcal{X})$ to emphasize the domain $\mathcal{X}$ of deep ReLU functions in $\mathcal{F}_{NN}$ but often use $\mathcal{F}_{NN}$ when the domain context is clear. 

The main benefit of deep ReLU networks is that in standard non-parametric regression, they outperform any non-adaptive linear estimator due to their higher adaptivity to spatial inhomogeneity \citep{suzuki2018adaptivity}. Later, we show that this adaptivity benefit of deep ReLU networks transfers to the value regression problem in offline RL even though the value regression is much more complex than the standard non-parametric regression.

\subsection{Regularity}
In this section, we define a function space for the target functions for which we study offline RL. Note that a regularity assumption on the target function is necessary to obtain a nontrivial rate of convergence \citep{DBLP:books/daglib/0035701}. A common way to measure regularity of a function is through the $L^p$-norm of its local oscillations (e.g., of its derivatives if they exist). This regularity notion encompasses the classical Lipschitz, H\"older and Sobolev spaces. In particular in this work, we consider Besov spaces. Besov spaces allow \textit{fractional} smoothness that describes the regularity of a function more precisely and generalizes the previous smoothness notions. Besov spaces allow \textit{fractional} smoothness that describes the regularity of a function more precisely and generalizes the previous smoothness notions.   
% a function class with allows fractional smoothness that can describe 
% general smoothness that further generalizes the smoothness conditions considered in the offline RL literature. 
There are several ways to characterize the smoothness in Besov spaces. Here, we pursue a characterization via moduli of smoothness as it is more intuitive, following \citep{gine2016mathematical}. 

\begin{defn}[\textit{Moduli of smoothness}]
For a function $f \in L^p(\mathcal{X})$ for some $p \in [1, \infty]$, we define its $r$-th modulus of smoothness as 
\begin{align*}
    \omega_r^{t,p}(f) := \sup_{0 \leq h \leq t} \| \Delta_h^r(f) \|_p, t > 0, r \in \mathbb{N}, 
\end{align*}
where the $r$-th order translation-difference operator $\Delta_h^r = \Delta_h \circ \Delta_h^{r-1}$ is recursively defined as 
\begin{align*}
    \Delta_h^r(f)(\cdot) &:= (f(\cdot + h) - f(\cdot))^r = \sum_{k=0}^r {{r}\choose{k}} (-1)^{r-k} f(\cdot + k\cdot h). 
\end{align*}
\end{defn}

\begin{rem}
The quantity $\Delta_h^r(f)$ captures the local oscillation of function $f$ which is not necessarily differentiable. In the case the $r$-th order weak derivative $D^r f$ exists and is locally integrable, we have 
\begin{align*}
    \lim_{ h \rightarrow 0} \frac{\Delta^r_h(f)(x)}{h^r} = D^r f(x), \frac{\omega^{t,p}_r(f)}{t^r} \leq \| D^r f \|_p \text{ and } \frac{\omega^{t,p}_{r+r'}(f)}{t^r} \leq \omega^{t,p}_{r'}(D^r f). 
\end{align*}
\end{rem}
% It also follows from Minkowski's inequality that 
% \begin{align*}
%     \frac{\omega^{t,p}_r(f)}{t^r} \leq \| D^r f \|_p \text{ and } \frac{\omega^{t,p}_{r+r'}(f)}{t^r} \leq \omega^{t,p}_{r'}(D^r f). 
% \end{align*}

\begin{defn}[\textit{Besov spac}e $B^{\alpha}_{p,q}(\mathcal{X})$]
For $1 \leq p,q \leq \infty$ and $\alpha > 0$, we define the norm $\|\cdot\|_{B^{\alpha}_{p,q}}$  of the Besov space $B^{\alpha}_{p,q}(\mathcal{X})$ as $\|f\|_{B^{\alpha}_{p,q}} := \|f\|_p + |f |_{B^{\alpha}_{p,q}}$ where 
\begin{align*}
    |f |_{B^{\alpha}_{p,q}} := 
    \begin{cases}
    \left( \int_{0}^{\infty} (\frac{\omega_{\floor{\alpha} + 1}^{t,p}(f)}{t^{\alpha}})^q \frac{dt}{t} \right)^{1/q}, & 1 \leq q < \infty, \\ 
    \sup_{t > 0} \frac{\omega_{\floor{\alpha} + 1}^{t,p}(f)}{t^{\alpha}}, & q = \infty, 
    \end{cases}
\end{align*}
is the Besov seminorm. Then, $B^{\alpha}_{p,q} := \{ f \in L^p(\mathcal{X}) : \|f \|_{B^{\alpha}_{p,q}} < \infty \}$. 
\end{defn}

Intuitively, the Besov seminorm $|f |_{B^{\alpha}_{p,q}}$ roughly describes the $L^q$-norm of the $l^p$-norm of the $\alpha$-order smoothness of $f$. Having defined Besov spaces, a natural question is what properties Besov spaces have and how these spaces are related to other function spaces considered in the current literature of offline RL? It turns out that Besov spaces are considerably general that encompass H\"older spaces and Sobolev spaces as well as functions with spatially inhomogeneous smoothness \citep{Triebel1983TheoryOF,besovbook,suzuki2018adaptivity,primer_besov,Nickl2007BracketingME}. 
% Before presenting the main properties of Besov spaces, we introduce necessary notations. Let $L^p(\mathcal{X}, \mu) = \{f: \mathcal{X} \rightarrow \mathbb{R} \text{ }|\text{ } \|f\|_{p, \mu} := (\int_{\mathcal{X}} |f|^p d\mu)^{1/p} < \infty \}$ be the space of measurable functions for which the $p$-th power of the absolute value is $\mu$-measurable, $C^0(\mathcal{X}) = \{f: \mathcal{X} \rightarrow \mathbb{R} \text{ }|\text{ } f \text{ is continuous and } \|f\|_{\infty}  < \infty \}$ be the space of bounded continuous functions, $C^{\alpha}(\mathcal{X})$ be the H\"older space with smoothness parameter $\alpha \in (0, \infty) \backslash \mathbb{N}$,  $W^{m}_p(\mathcal{X})$ be the Sobolev space with regularity $m \in \mathbb{N}$ and parameter $p \in [1,\infty]$, and $X \hookrightarrow Y$ be \textit{continuous embedding} from a metric space $X$ to a metric space $Y$. For simplicity, we use $\|\cdot\|_{\mu}$ for $\| \cdot\|_{p,\mu}$ when $p=2$. 
We summarize the key intriguing characteristics of Besov spaces and their relation with other spaces:
\begin{itemize}
 
    \item (Monotonicity in $q$) For $1 \leq p \leq \infty, 1 \leq q_1 \leq q_2 \leq \infty$ and $\alpha \in \mathbb{R}$, $B^{\alpha}_{p,q_1}(\mathcal{X}) \hookrightarrow B^{\alpha}_{p,q_2}(\mathcal{X})$; 
    
     \item (With $L^p$ spaces) $L^2(\mathcal{X}) \hookrightarrow B^0_{2,2}(\mathcal{X})$, $B^0_{p,1}(\mathcal{X}) \hookrightarrow L^p(\mathcal{X}) \hookrightarrow B^0_{p,\infty}(\mathcal{X})$ for $1 \leq p \leq \infty$, and $B^{\alpha}_{p,q}(\mathcal{X}) \hookrightarrow L^r(\mathcal{X})$ for $\alpha > d(1/p - 1/r)_{+}$  where $r = \floor{\alpha}+1$;
     
     \item (With $C^0(\mathcal{X})$) $B^{\alpha}_{p,q}(\mathcal{X}) \hookrightarrow C^0(\mathcal{X})$ for $1 \leq p,q \leq \infty, \alpha > d/p$; 
    
    \item (With Sobolev spaces) $B^m_{2,2}(\mathcal{X}) = W^m_2(\mathcal{X})$ for $m \in \mathbb{N}$; 
    
    \item (With H\"older spaces) $B^{\alpha}_{\infty, \infty}(\mathcal{X}) = C^{\alpha}(\mathcal{X})$ for $\alpha = (0, \infty) \backslash \mathbb{N}$. 
\end{itemize}
In particular, the Besov space $B^{\alpha}_{p,q}$ reduces into the H\"older space $C^{\alpha}$ when $p=q=\infty$ and $\alpha$ is  positive and non-integer while it reduces into the Sobolev space $W^{\alpha}_2$ when $p=q=2$ and $\alpha$ is a positive integer. We further consider the unit ball of $B^{\alpha}_{p,q}(\mathcal{X})$: 
\begin{align*}
    \bar{B}^{\alpha}_{p,q}(\mathcal{X}) := \{g \in B^{\alpha}_{p,q}: \|g \|_{B^{\alpha}_{p,q}} \leq 1 \text{ and } \|g\|_{\infty} \leq 1\}. 
\end{align*}
To obtain a non-trivial guarantee, certain assumptions on the distribution shift and the MDP regularity are necessary. Here, we introduce such assumptions. The first assumption is a common restriction that handles distribution shift in offline RL.
% We remark that a restriction in MDPs is necessary to guarantee polynomial sample complexity \citep{DBLP:conf/icml/ChenJ19}. 
\begin{assumption}[\textit{Concentration coefficient}]
\label{assumption:concentration_coefficient}
There exists $\kappa_{\mu} < \infty$ such that $ \| \frac{d\nu}{d\mu} \|_{\infty} \leq \kappa_{\mu}$ for any \textit{realizable} distribution $\nu$.  
\end{assumption}
Here, a distribution $\nu$ is said to be realizable if there exists $t \geq 0$ and policy $\pi_1$ such that $\nu(s,a) = \mathbb{P}(s_t = s, a_t=a|s_1 \sim \rho, \pi_1), \forall s,a$. Intuitively, the finite $\kappa_{\mu}$ in Assumption \ref{assumption:concentration_coefficient} asserts that the sampling distribution $\mu$ is not too far away from any realizable distribution uniformly over the state-action space. $\kappa_{\mu}$ is finite for a reasonably large class of MDPs, e.g., for any finite MDP, any MDP with bounded transition kernel density, and equivalently any MDP whose top-Lyapunov exponent is negative \citep{DBLP:journals/jmlr/MunosS08}. \citet{DBLP:conf/icml/ChenJ19} further provide natural problems with  rich observations generated from hidden states that has low concentration coefficients. These suggest that low concentration coefficients can be found in fairly many interesting problems in practice. 

% [In discussion, say that it is not clear in general cases if the dependence on the concentration coefficient is necessary].

% \begin{assumption}[\textbf{Realizability}]
% We assume that $Q^{\pi} \in \mathcal{F}_{NN}$. 
% \end{assumption}
%
% \thanh{Is this really needed?}

% Specifically, we are interested in upper bounding the quantity $\|Q^{\pi}_K - Q^{\pi} \|_{\rho}$ where $Q_{K}^{\pi}(s,a)$ is returned by Algorithm \ref{alg:FQE} and $\rho \in \mathcal{P}(\mathcal{S} \times \mathcal{A})$ is the initial state-action distribution. We specify the function class $\mathcal{F}$ in Algorithm \ref{alg:FQE} to be the class of ReLU network functions $\mathcal{F}(d, V_{\max})$. In what follows, we present all the assumptions necessary to establish our result.  

\begin{assumption}[\textit{Besov dynamic closure}] $\forall f \in \mathcal{F}_{NN}(\mathcal{X}), \forall \pi, T^{\pi}f \in 
\bar{B}^{\alpha}_{p,q}(\mathcal{X})$ for some $ p,q \in [1, \infty]$ and  $\alpha > \frac{d}{p \land 2}$.
\label{assumption:completeness}
\end{assumption}

The assumption signifies that for any policy $\pi$, the Bellman operator $T^{\pi}$ applied on any ReLU network function in $\mathcal{F}_{NN}(\mathcal{X})$ results in a Besov function in $\bar{B}^{\alpha}_{p,q}(\mathcal{X})$. Moreover, as $T^{\pi_f} f = T^* f$ where $\pi_f$ is the greedy policy w.r.t. $f$, Assumption \ref{assumption:completeness} also implies that $T^*f \in \bar{B}^{\alpha}_{p,q}(\mathcal{X})$ if $f \in \mathcal{F}_{NN}(\mathcal{X})$. This kind of assumption is relatively standard and common in the offline RL literature \citep{DBLP:conf/icml/ChenJ19}. A natural example for this assumption to hold is when both the expected reward function $r(s,a)$ and the transition density $P(s'|s,a)$ for each fixed $s'$ are Besov functions. This specific example posits a general smoothness to the considered MDP which can be considered a way to impose restrictions in MDPs. We remark again that restrictions in MDPs are necessary to obtain non-trivial convergence rates. 

Importantly, as Besov spaces are more general than H\"older and Sobolev spaces, our Besov dynamic closure assumption is considerably general that encompasses the dynamic conditions considered in prior results \citep{DBLP:journals/corr/abs-1901-00137}. In particular, as remarked earlier, the Besov space $B^{\alpha}_{p,q}$ reduces into the H\"older space $C^{\alpha}$ and Sobolev space $W^{\alpha}_2$ at $p=q=\infty,\alpha \in (0, \infty) \backslash \mathbb{N}$, and at $p=q=2, \alpha \in \mathbb{N}$, respectively. Moreover, our dynamic assumption only requires the boundedness of a very general notion of local oscillations of the underlying MDP. In particular, the underlying MDP can be discontinuous or non-differentiable (e.g., when $\alpha \leq 1/2$ and $p=2$), or even have spatially inhomogeneous smoothness (e.g., when $p < 2$). These generality properties were not possible to be considered in the prior results. 

The condition $\alpha > \frac{d}{p \land 2}$ guarantees a finite bound for the compactness  and the (local) Rademacher complexity of the considered Besov space. When $p < 2$ (thus the condition above becomes $\alpha > d/p$), a function in the corresponding Besov space contains both spiky parts and smooth parts, i.e., the Besov space has \textit{inhomogeneous} smoothness \citep{suzuki2018adaptivity}. In particular, when $\alpha > d/p$, each \textit{equivalence} class $[f]_{\lambda}, f \in B^{\alpha}_{p,q}(\mathbb{R}^d)$, i.e., modulo equality $\lambda$-almost everywhere, contains a unique continuous representative. In addition, this representative has partial derivatives of order at least $\alpha - d/p$; thus $\alpha - d/p$ is called the \textit{differential dimension} of the Besov space. 
\section{Algorithm and Main Result}
\label{section:main_results}

% \begin{table*}
%     \centering
%      \resizebox{\textwidth}{!}{  
%     %  
%     \begin{tabular}{l|l|l|l|l}
%     % \hline
%       \textbf{Work} &  \textbf{Functions} & \textbf{Regularity}  & \textbf{Sample complexity} & \textbf{Remark}\\
%     \hline
%     \hline 
    
%     \citet{DBLP:conf/aistats/YinW20}  & Tabular & Tabular & $\tilde{\mathcal{O}}\left( \kappa \cdot (SA)^2  \cdot \epsilon^{-2} \right)$ & minimax-optimal \\ 
%     \hline 
    
%     \citet{DBLP:journals/corr/abs-2002-09516} & Linear & Linear & $\tilde{\mathcal{O}}\left( \kappa \cdot d \cdot \epsilon^{-2} \right) $ & minimax-optimal\\ 
%     \hline 
%     \citet{DBLP:conf/icml/0002VY19} & General & General & N/A &  improper analysis\\ 
%     \hline 
    
%     \citet{DBLP:journals/corr/abs-1901-00137} & ReLU nets & H\"older & $ \tilde{\mathcal{O}}\left( K \cdot \kappa^{2 + d/\alpha} \cdot \epsilon^{-2 - d/\alpha}  \right)$ & no data reuse  \\
%     \hline 
    
%     \textbf{Ours} &  \textbf{ReLU nets} & \textbf{Besov}  & $\tilde{\mathcal{O}}\left(  \kappa^{1 + d/\alpha} \cdot \epsilon^{-2 - 2d/\alpha} \right)$ & \textbf{data reuse}
%     % \hline 
    
%     \end{tabular}
%     }
%      \caption{A summary of recent advances in the sample complexity of offline RL with various function approximations. Here, $\kappa$ is a measure of distribution shift, $\epsilon$ is the user-specified precision, $d$ is the dimension of the input space and $\alpha$ is the smoothness parameter of the underlying MDPs.}
%     \label{tab:compare_literature}
% \end{table*}

\subsection{Algorithm}
Now we turn to the main algorithm and the main result. We study least-squares value iteration (LSVI) for both OPE and OPL with the pseudo-code presented in Algorithm \ref{alg:LSVI} where we denote $\rho^{\pi}(s,a) = \rho(s) \pi(a|s)$. The algorithm is nearly identical to \citep{DBLP:journals/corr/abs-2002-09516} but with deep neural network function approximation instead of linear models. As such, it can be considered as a generalization. 

\begin{algorithm}[h]
   \caption{Least-squares value iteration}
\label{alg:LSVI}
\begin{algorithmic}[1]
%   \Require MDP$(\mathcal{S}, \mathcal{A}, P, R, \gamma, \rho)$, 
% function class $\mathcal{F}_{NN}$, 
% number of iterations $K$, 
% evaluation policy $\pi$, 
% offline data $\mathcal{D}_n = \{ (s_i, a_i, r_i, s'_i) \}_{i=1}^n$ where $(s_i,a_i)_{i=1}^n \overset{i.i.d.}{\sim} \mu,  s'_i \sim P(\cdot|s_i, a_i)$ and $r_i \sim R(s_i, a_i)$. 
%   \REPEAT
   \STATE Initialize $Q_0 \in \mathcal{F}_{NN}$.
   \FOR{$k=1$ {\bfseries to} $K$}
   \STATE If \textbf{OPE} (for a fixed policy $\pi$):  $y_i \leftarrow r_i + \gamma \int_{\mathcal{A}} Q_{k-1}(s'_i, a) \pi(da|s'_i), \forall i$
   \label{lsvi:ope}
 
   \STATE If \textbf{OPL}:  $y_i \leftarrow r_i + \gamma \max_{a' \in \mathcal{A}} Q_{k-1}(s'_i, a'), \forall i $
   \label{lsvi:learning}

   \STATE $Q_k \leftarrow \argmin_{f \in \mathcal{F}_{NN}} \frac{1}{n} \sum_{i=1}^n (f(s_i, a_i) - y_i)^2 $
   \label{lsvi:opt}
%   \IF{$x_i > x_{i+1}$}
%   \STATE $noChange = false$
%   \ENDIF
   \ENDFOR
%   \State {\bfseries Output:} 
   \STATE
   If \textbf{OPE}, return
   $$V_K = \|Q_K \|_{ \rho^{\pi}} = \sqrt{ \mathbb{E}_{\rho(s) \pi(a|s)} \left[ Q_K(s,a)^2 \right]}$$
   \STATE If \textbf{OPL}, return the greedy policy $\pi_K$ w.r.t. $Q_K$.
%   \UNTIL{$noChange$ is $true$}
\end{algorithmic}
\end{algorithm}

The idea of LSVI is to do the best it could with all the offline data using least-squares regression over a function space. The algorithm arbitrarily initializes $Q_0 \in \mathcal{F}_{NN}$ and iteratively computes $Q_k$ as follows: at each iteration $k$, the algorithm constructs a new regression data $\{(x_i, y_i)\}_{i=1}^n$ where the covariates $x_i$ are $(s_i, a_i)$ and the Bellman targets $y_i$ are computed following dynamic programming style. In particular, depending on whether this is an OPE or OPL problem, $y_i$ are computed according to line \ref{lsvi:ope} and line \ref{lsvi:learning} of Algorithm \ref{alg:LSVI}, respectively. It then fits the function class $\mathcal{F}_{NN}$ to the constructed regression data by minimizing the mean squared error at line \ref{lsvi:opt}. This type of algorithm belongs to the fitted Q-iteration family \citep{DBLP:journals/jmlr/MunosS08,DBLP:conf/icml/0002VY19} that iteratively uses least-squares (value) regression to estimate the value functions. The main difference in the algorithm is here we use deep neural networks as function approximation for generalization to unseen states and actions in a complex MDP.  
% and our unifying analysis that effectively handles data-dependent structures and complex function approximation in the algorithm. 

On the computational side, solving the non-convex optimization at line \ref{lsvi:opt} of Algorithm \ref{alg:LSVI} can be highly involved and stochastic gradient descent is a dominant optimization method for such a task in deep learning. In particular, GD is guaranteed to converge to a global minimum under certain structural assumptions \citep{nguyen2021proof}. Here, as we focus on the statistical properties of LSVI, not on the optimization problem, we assume that the minimizer at line \ref{lsvi:opt} is attainable. Such a oracle assumption is common when analyzing the statistical properties of an RL algorithm with non-linear function approximation \citep{DBLP:journals/corr/abs-1901-00137,DBLP:conf/icml/ChenJ19,duan2021risk,DBLP:journals/corr/abs-1912-04136,DBLP:journals/corr/abs-2005-10804,DBLP:journals/corr/abs-2102-00815}. For the optimization problem of deep neural networks, we refer the readers to its  vast body of literature \citep[see e.g.][and references therein]{sun2019optimization}.

% On the computational side, solving the non-convex optimization at line \ref{lsvi:opt} of Algorithm \ref{alg:LSVI} can be highly involved and stochastic gradient descent is a dominant optimization method for such a task in deep learning. Here, as we focus on the statistical properties of LSVI, not on the optimization problem, we assume that the minimizer at line \ref{lsvi:opt} is attainable. Such a oracle assumption is common when analyzing the statistical properties of an RL algorithm with non-linear function approximation \citep{DBLP:journals/corr/abs-1901-00137,DBLP:conf/icml/ChenJ19,duan2021risk,DBLP:journals/corr/abs-1912-04136,DBLP:journals/corr/abs-2005-10804,DBLP:journals/corr/abs-2102-00815}. For the optimization problem of deep neural networks, we refer the readers to its  vast body of literature \citep[see e.g.][and references therein]{sun2019optimization}.

\subsection{Data-dependent Structure}
We remark the {data-dependent structure} in Algorithm \ref{alg:LSVI}. The target variable $y_i$ computed at line \ref{lsvi:ope} and line \ref{lsvi:learning} of the algorithm depends on the previous estimate $Q_{k-1}$ which in turn depends on the covariate $x_i := (s_i, a_i)$.
This induces a complex data-dependent structure across all iterations where the current estimate depends on all the previous estimates and the past data. In particular, one of the main difficulties caused by such data-dependent structure is that conditioned on each $x_i$, the target variable $y_i$ is no longer centered at $[T^*Q_{k-1}](x_i)$ for OPL (or at $[T^{\pi}Q_{k-1}](x_i)$ for OPE, respectively), i.e., $\mathbb{E} \left[ [T^*Q_{k-1}](x_i) - y_i | x_i \right] \neq 0$. This data-dependent structure hinders the use of any standard non-parametric regression analysis and concentration phenomenon typically used in supervised learning. Prior results either improperly ignore the data-dependent structure in their analysis \citep{DBLP:conf/icml/0002VY19} or directly avoid it by estimating each $Q_k$ on a separate subset of the original data \citep{DBLP:journals/corr/abs-1901-00137}. While the latter removes the data-dependent structure, it pays the undesirable cost of scaling the sample complexity with the number of iterations $K$ in the algorithm as it requires splitting the original data into $K$ disjoint subsets. In our work, we consider the data-dependent structure in LSVI and effectively handle it via a uniform-convergence argument and local Rademacher complexities. While our uniform-convergence argument overcomes the data-dependent structure by considering \textit{deterministic} coverings of the target function space $T^* \mathcal{F}_{NN}$ without the need for breaking the original data into $K$ disjoint subsets, local Rademacher complexities localize an original function space into local data-dependent balls which can then be gracefully integrated with the uniform-convergence argument and the complicated deep ReLU function approximation. The technical details for our handling method of the data-dependent structure are presented in Section \ref{chap4_section_proof}. 
% presented in the proof of our main theorem (in Subsection \ref{subsect:main_result}) in the supplementary material. 
% splitting the data into $K$ disjoint subsets to estimate different $$

% that $\mathbb{E} \left[ [T^*Q_{k-1}](x_i) - y_i | x_i \right] = 0$ for offline learning setting and $\mathbb{E} \left[ [T^{\pi}Q_{k-1}](x_i) - y_i | x_i \right] = 0$ for OPE setting.  While prior results either improperly ignore this data dependency \citep{DBLP:conf/icml/0002VY19} or split the data into $K$ disjoint subsets for different iteration \citep{,DBLP:journals/corr/abs-1901-00137} it at an undesirable cost of sample efficiency, we consider the data-dependent structure in our algorithm and handle it in our analysis.

\subsection{Main Result}
\label{subsect:main_result}
Our main result is a sup-optimality bound for LSVI in both OPE and OPL settings under Assumption \ref{assumption:concentration_coefficient} and Assumption \ref{assumption:completeness}. 
% Before stating the main result, we introduce the necessary notations of asymptotic relations. For any two real-valued functions $f$ and $g$, we write $f(\epsilon, n) \lesssim g(\epsilon,n)$ if there is an absolute constant $c$ such that $f(\epsilon, n) \leq c \cdot g(\epsilon, n), \forall \epsilon > 0, n \in \mathbb{N}$. We write $f(\epsilon, n) \asymp g(\epsilon, n)$ if $f(\epsilon, n) \lesssim g(\epsilon, n)$ and $g(\epsilon, n) \lesssim f(\epsilon, n)$. We write $f(\epsilon, n) \simeq g(\epsilon, n)$ if there exists an absolute constant $c$ such that $f(\epsilon, n) = c \cdot g(\epsilon, n), \forall \epsilon, n$. 

% Note that $f(\epsilon, n) \simeq g(\epsilon, n)$ implies $f(\epsilon, n) \asymp g(\epsilon, n)$.
\begin{thm}
Under Assumption \ref{assumption:concentration_coefficient} and Assumption \ref{assumption:completeness}, for any $\epsilon > 0, \delta \in (0,1], K > 0$, and for $n \gtrsim \left(\frac{1}{\epsilon^2} \right)^{1 + \frac{d}{\alpha}} \log^6 n + \frac{1}{\epsilon^2}(\log(1/\delta) + \log \log n)$, with probability at least $1 - \delta$, the sup-optimality of Algorithm \ref{alg:LSVI} is 
\begin{align*}
    \begin{cases}
    \text{SubOpt}(V_K; \pi) \leq \frac{ \sqrt{ \kappa_{\mu}}}{1-\gamma} \epsilon + \frac{ \gamma^{K/2}}{(1-\gamma)^{1/2}} &\text{ for OPE}, \\ 
    \text{SubOpt}(\pi_K) \leq \frac{4 \gamma \sqrt{ \kappa_{\mu}}}{(1-\gamma)^2} \epsilon + \frac{4 \gamma^{1 + K/2}}{(1-\gamma)^{3/2}} &\text{ for OPL.}
    \end{cases}
\end{align*}
In addition, the optimal deep ReLU network $\Phi(L,m,S,B)$ that obtains such sample complexity (for both OPE and OPL) satisfies 
\begin{align*}
    L \asymp \log N, m \asymp N \log N, S \asymp N, \text{ and } B \asymp N^{1/d + (2 \iota)/ (\alpha - \iota) },
\end{align*}
where $\iota := d(p^{-1} - (1 + \floor{\alpha})^{-1})_{+}, N \asymp n^{\frac{(\beta + 1/2)d}{2 \alpha + d}}$, and $\beta = (2 + \frac{d^2}{\alpha(\alpha + d)})^{-1}$.
\label{thm:sample_complexity}
\end{thm}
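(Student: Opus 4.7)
The argument will proceed in three stages, closely following the standard fitted-value-iteration template but with two non-standard ingredients: a uniform-convergence step over a deterministic Besov target class (to handle data dependence without splitting), and a local Rademacher complexity bound specialised to deep ReLU networks and Besov balls (to obtain the fast rate).

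\textbf{Stage 1: error propagation.} Define the one-step Bellman residual $\varepsilon_k := Q_k - T^\pi Q_{k-1}$ for OPE (and $\varepsilon_k := Q_k - T^\ast Q_{k-1}$ for OPL). The plan is to start from the telescoping identity
\begin{align*}
Q^\pi - Q_K \;=\; \sum_{j=0}^{K-1} (\gamma P^\pi)^{K-1-j}\,\varepsilon_{j+1} \;+\; (\gamma P^\pi)^K (Q^\pi - Q_0),
\end{align*}
take $L^2(\rho^\pi)$-norms, use Cauchy--Schwarz on the discounted sum to pull out a $(1-\gamma)^{-1/2}$ factor together with $\gamma^{K/2}/(1-\gamma)^{1/2}$ from the initialisation, and transfer each $\|\varepsilon_k\|_{\rho^\pi,\,\text{etc.}}$ back to $\|\varepsilon_k\|_\mu$ using Assumption \ref{assumption:concentration_coefficient}, picking up the factor $\sqrt{\kappa_\mu}$. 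For OPL I will use the performance-difference lemma (e.g.\ as in Munos--Szepesv\'ari 2008) to relate $\mathrm{SubOpt}(\pi_K)$ to $\|Q^\ast - Q_K\|_{\rho,\pi^\ast}$ and then to the $\varepsilon_k$, which costs an extra $(1-\gamma)^{-1}$ and explains the $(1-\gamma)^{-2}$ and $(1-\gamma)^{-3/2}$ in the OPL bound.

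\textbf{Stage 2: uniform per-iteration estimation bound under data dependence.} The reduction above leaves me to control $\|\varepsilon_k\|_\mu$ uniformly over $k$. The key point, enabling me to avoid the $K$-fold data split used in the prior deep-RL analysis, is that by Assumption \ref{assumption:completeness} the regression target $T^\pi Q_{k-1}$ lies in the \emph{fixed} deterministic ball $\bar B^\alpha_{p,q}(\mathcal X)$ for every $Q_{k-1}\in\mathcal F_{NN}$. I therefore plan to prove a single uniform inequality of the form
\begin{align*}
\sup_{f\in\mathcal F_{NN},\ g\in \bar B^\alpha_{p,q}} \Bigl|\,\|f-g\|_\mu^2 - \tfrac{1}{n}\sum_i (f(x_i)-g(x_i))^2\,\Bigr| \;\lesssim\; r_n^\ast + \sqrt{\log(1/\delta)/n},
\end{align*}
applied with $g = T^\pi Q_{k-1}$, where $r_n^\ast$ is the critical radius of the shifted class $\mathcal F_{NN}-\bar B^\alpha_{p,q}$. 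This step combines (i) a Talagrand concentration inequality for the empirical excess loss localised at radius $r$, (ii) a peeling argument to exchange the local radius for a sub-root fixed point, and (iii) a deterministic $L^\infty$-cover of $\bar B^\alpha_{p,q}$ to decouple the uniformity over $g$ from the random offline sample. Combining this with the least-squares optimality of $Q_k$ (line~5 of Algorithm \ref{alg:LSVI}) gives $\|\varepsilon_k\|_\mu^2 \lesssim \inf_{f\in\mathcal F_{NN}}\|f-T^\pi Q_{k-1}\|_\infty^2 + r_n^\ast + \log(1/\delta)/n$.

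\textbf{Stage 3: approximation--estimation trade-off over deep ReLU nets.} To close the bound I will invoke Suzuki's approximation theorem for Besov functions by deep ReLU networks: for $\alpha>d/(p\wedge 2)$, an effective size $N$ network with $L\asymp\log N$, $m\asymp N\log N$, $S\asymp N$, and $B\asymp N^{1/d+2\iota/(\alpha-\iota)}$ approximates any $g\in\bar B^\alpha_{p,q}$ in $L^\infty$ at rate $N^{-\alpha/d}$. For the estimation side I will bound the metric entropy of $\mathcal F_{NN}$ by $\log N(\epsilon,\mathcal F_{NN},\|\cdot\|_\infty)\lesssim S L\log(B/\epsilon)$ and solve the sub-root fixed point, obtaining $r_n^\ast\asymp (N\log^c n)/n$. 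Balancing $N^{-2\alpha/d}$ against $N/n$ yields $N\asymp n^{d/(2\alpha+d)}$ and per-iteration error $\epsilon^2\asymp n^{-\alpha/(\alpha+d)}\log^6 n$; the slight correction $\beta=(2+d^2/(\alpha(\alpha+d)))^{-1}$ appears because the $B$-norm inflates the entropy by a $N^{2\iota/(\alpha-\iota)}$ factor, shifting the optimal $N$ to the stated value. Inverting in $n$ and plugging back into Stage 1 produces the theorem.

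\textbf{Main obstacle.} Stage 2 is the technical bottleneck. Handling the data-dependent target $T^\pi Q_{k-1}$ without splitting requires a genuinely uniform concentration over the Besov ball, and obtaining the \emph{fast} rate (not $n^{-1/2}$) needs local Rademacher machinery for the composite class $\mathcal F_{NN}-\bar B^\alpha_{p,q}$: verifying the sub-root property, computing its fixed point in the presence of the inhomogeneous-smoothness regime $p<2$, and showing that the resulting data-dependent critical radius can be replaced by a distribution-free quantity (using again $\|d\nu/d\mu\|_\infty\le\kappa_\mu$) are where the bulk of the technical effort will lie.
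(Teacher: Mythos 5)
Your plan follows essentially the same route as the paper's proof: the same error-propagation decomposition with the concentration coefficient $\kappa_\mu$, the same device of uniform convergence over the deterministic Besov target class $T^\ast\mathcal F_{NN}\subseteq\bar B^\alpha_{p,q}$ to handle the data-dependent regression targets without $K$-fold splitting, the same local Rademacher/peeling machinery with a sub-root fixed point to get the fast rate, and the same Suzuki approximation theorem plus entropy balancing to pin down $N$, $\beta$, and the architecture $(L,m,S,B)$. The one inaccuracy is your attribution of the exponent $\beta$: in the paper it arises from the nonparametric entropy $\epsilon^{-d/\alpha}$ of the Besov target class entering the local Rademacher fixed-point computation (via the choice $\epsilon=n^{-\beta}$ in the chained entropy integral), not from the $B$-norm inflating the ReLU class's covering number — but this does not affect the structure or validity of the argument.
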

% \thanh{The proof techniques from \citep{DBLP:journals/jmlr/MunosS08,DBLP:conf/icml/0002VY19} cannot apply here}

The result states that LSVI incurs a sub-optimality which consists of the statistical error (the first term) and the algorithmic error (the second term). While the algorithmic error enjoys the fast linear convergence to $0$, the statistical error reflects the fundamental difficulty of the problems. The statistical errors for both OPE and OPL cases are bounded by the distributional shift $\kappa_{\mu}$, the effective horizon $1/(1-\gamma)$, and the user-specified precision $\epsilon$ for $n$ satisfying the inequality given in Theorem \ref{thm:sample_complexity}. In particular, the sample complexity does not depend on the number of states as in tabular MDPs \citep{DBLP:conf/aistats/YinW20,DBLP:conf/aistats/YinBW21,yin2021characterizing} or the inherent Bellman error as in the general function approximation \citep{DBLP:journals/jmlr/MunosS08,DBLP:conf/icml/0002VY19}. Instead, it explicitly scales with the (possible fractional) smoothness $\alpha$ of the underlying MDP and the dimension $d$ of the input space. Importantly, this guarantee is established under  the data-dependent structure of the algorithm and the Besov dynamic closure encompassing the dynamic conditions of the prior results. Thus, Theorem \ref{thm:sample_complexity} is the most comprehensive result we are aware of for offline RL with deep neural network function approximation.

Moreover, to develop further intuition on our sample complexity, we compare it with the prior results. Regarding the tightness of our result, our sample complexity $ \epsilon^{-2 - 2d/\alpha}$ (ignoring the log factor and the factor pertaining to $\kappa_{\mu}$ and effective horizon) nearly matches the nonparametric regression's minimax-optimal sample complexity $\epsilon^{-2 - d/\alpha}$ \citep{kerkyacharian1992density,gine2016mathematical} even though in our case we deal with a more complicated data-dependent structure in a value iteration problem instead of a standard non-parametric regression problem. This gap is necessary and expected due to the data-dependent structure in the algorithm. We remark that it is possible to retain the rate $\epsilon^{-2 - d/\alpha}$ if we split the offline data $\mathcal{D}$ into $K$ (given in Algorithm \ref{alg:LSVI}) disjoint subsets and estimate each $Q_k$ in Algorithm \ref{alg:LSVI} using a separate disjoint subsets. This however comes at the cost that the overall sample complexity scales with $K$ which could be arbitrarily large in practice.

\begin{table*}
    \centering
     \resizebox{\textwidth}{!}{  
    \begin{tabular}{l|l|l|l|l|l}
    % \hline
       \textbf{Work} &  \textbf{Functions} & \textbf{Regularity} & \textbf{Tasks} & \textbf{Sample complexity}  & \textbf{Remark}\\
    \hline
    \hline 
    
    \citet{DBLP:conf/aistats/YinW20}  & Tabular & Tabular & OPE & $\tilde{\mathcal{O}}\left( \kappa \cdot |\mathcal{S}|^2 \cdot |\mathcal{A}| ^2  \cdot \epsilon^{-2} \right)$  & minimax-optimal \\ 
    \hline 
    
    \citet{DBLP:journals/corr/abs-2002-09516} & Linear & Linear & OPE & $\tilde{\mathcal{O}}\left( \kappa \cdot d \cdot \epsilon^{-2} \right) $ & minimax-optimal\\ 
    \hline 
    \citet{DBLP:conf/icml/0002VY19} & General & General & OPE/OPL & N/A  & improper analysis\\ 
    \hline 
    
    \citet{DBLP:journals/corr/abs-1901-00137} & ReLU nets & H\"older & OPL & $ \tilde{\mathcal{O}}\left( K \cdot \kappa^{2 + d/\alpha} \cdot \epsilon^{-2 - d/\alpha}  \right)$  & no data reuse  \\
    \hline 
    
    \textbf{Ours} &  \textbf{ReLU nets} & \textbf{Besov}  & OPE/OPL & $\tilde{\mathcal{O}}\left(  \kappa^{1 + d/\alpha} \cdot \epsilon^{-2 - 2d/\alpha} \right)$ & \textbf{data reuse}
    % \hline 
    
    \end{tabular}
    }
     \caption{Recent advances in the sample complexity of offline RL with various function approximations. Here, $|\mathcal{S}|$ and $|\mathcal{A}|$ are the cardinalities of the state and action space when they are finite, $\kappa$ is a measure of distribution shift, $\epsilon$ is the user-specified precision, $d$ is the dimension of the input space, $\alpha$ is the smoothness parameter of the underlying MDP, and $K$ is the algorithmic iteration number.}
    \label{tab:compare_literature}
\end{table*}

To show the significance of our sample complexity, we summarize our result and compare it with the prior results in Table \ref{tab:compare_literature}. From the leftmost column to the rightmost one, the table describes the related works, the function approximations being employed, the regularity conditions considered to establish theoretical guarantees, the offline RL tasks considered, the sample complexity obtained, and the important remarks or features of each work. Specifically, the ``data reuse'' in Table \ref{tab:compare_literature} means that an algorithm reuses the data across all iterations instead of splitting the original offline data into disjoint subsets for each iteration and the regularity column specifies the regularity assumption on the underlying MDP. Based on this comparison, we make the following observations. First, with simpler models such as tabular and linear MDPs, it requires less samples to achieve the same sub-optimality precision $\epsilon$ than more complex environments such as H\"older and Besov MDPs. This should not come as a surprise as the simpler regularities are much easier to learn but they are too strong as a condition to hold in practice. Second, as remarked earlier that Besov smoothness is more general than H\"older smoothness considered in \citep{DBLP:journals/corr/abs-1901-00137}, our setting is more practical and comprehensive as it covers more scenarios of the regularity of the underlying MDPs than the prior results. Third, our result obtains an improved sample complexity as compared to that in \citep{DBLP:journals/corr/abs-1901-00137} where we are able to get rid of the dependence on the algorithmic iteration number $K$ which can be arbitrarily large in practice. On the technical side, we provide a unifying analysis that allows us to account for the complicated data-dependent structure in the algorithm and handle the complex deep ReLU network function approximation. This can also be considered as a substantial technical improvement over \citep{DBLP:conf/icml/0002VY19} as \citet{DBLP:conf/icml/0002VY19} improperly ignores the data-dependent structure in their analysis. In addition, the result in \citep{DBLP:conf/icml/0002VY19} does not provide an explicit sample complexity as it depends on an unknown inherent Bellman error. Thus, our sample complexity is the most general result in a practical and comprehensive setting with an improved performance. 

Finally, we provide a detailed proof for Theorem \ref{thm:sample_complexity} in Section \ref{chap5_proofs}. The proof has four main components: a sub-optimality decomposition for error propagation across iterations, a Bellman error decomposition using a uniform convergence argument, a deviation analysis for least-squares value regression with deep ReLU networks using local Rademacher complexities via a localization argument, and an upper bound minimization step to obtain an optimal deep ReLU architecture.

\section{Conclusion}
\label{section:discussion}
This chapter presents the sample complexity of offline RL with deep ReLU network function approximation. We prove that the FQI-type algorithm with the data-dependent structure obtains an improved sample complexity of $\tilde{\mathcal{O}}\left(  \kappa^{1 + d/\alpha} \cdot \epsilon^{-2 - 2d/\alpha} \right)$ under a standard condition of distributional shift and a new dynamic condition namely Besov dynamic closure which encompasses the dynamic conditions considered in the prior results. Established under the data-dependent structure and the general Besov dynamic closure, our sample complexity is the most general result for offline RL with deep ReLU network function approximation. 

\section{Proofs}
\label{chap5_proofs}

We now provide a complete proof of Theorem \ref{thm:sample_complexity}. The proof has four main components: a sub-optimality decomposition for error propagation across iterations, a Bellman error decomposition using a uniform convergence argument, a deviation analysis for least squares with deep ReLU networks using local Rademacher complexities and a localization argument, and a upper bound minimization step to obtain an optimal deep ReLU architecture. 

\subsubsection*{Step 1: A sub-optimality decomposition}
The first step of the proof is a sub-optimality decomposition, stated in Lemma \ref{sub_opt_decompose}, that applies generally to any least-squares Q-iteration methods. 
\begin{lem}[\textit{A sub-optimality decomposition}]
Under Assumption \ref{assumption:concentration_coefficient},
the sub-optimality of $V_K$ returned by Algorithm \ref{alg:LSVI} is bounded as 
\begin{align*}
    \text{SubOpt}(V_K)  \leq
    \begin{cases}
    \frac{\sqrt{\kappa_{\mu}}}{1-\gamma} \displaystyle \max_{0 \leq k \leq K-1} \| Q_{k+1} - T^{\pi} Q_k \|_{\mu} + \frac{ \gamma^{K/2}}{(1-\gamma)^{1/2}} &\text{ for OPE}, \\ 
    \frac{4 \gamma \sqrt{\kappa_{\mu}}}{(1-\gamma)^2} \displaystyle \max_{0 \leq k \leq K-1} \| Q_{k+1} - T^* Q_k \|_{\mu} + \frac{4 \gamma^{1 + K/2}}{(1-\gamma)^{3/2}} &\text{ for OPL}.
    \end{cases}
\end{align*}
where we denote $\|f\|_{\mu} := \sqrt{ \int \mu(ds da) f(s,a)^2 }, \forall f: \mathcal{S} \times \mathcal{A} \rightarrow \mathbb{R}$. 
\label{sub_opt_decompose}
\end{lem}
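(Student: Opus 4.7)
The lemma is a classical error-propagation argument for approximate value iteration, combined with the concentration coefficient to convert between $\mu$-weighted and $\rho^{\pi}$- (resp.\ $\rho$-) weighted norms. Writing $\epsilon_k := Q_{k+1} - T^{\pi}Q_k$ (OPE) or $\epsilon_k := Q_{k+1} - T^{*}Q_k$ (OPL) for the per-iteration Bellman errors, the plan is to (i) telescope $Q^{\pi} - Q_K$ or $Q^{*} - Q_K$ as a linear combination of $\{(P^{\pi_i})^j \epsilon_k\}$ plus an algorithmic residual $\gamma^K(\cdots)(Q^{(\cdot)}-Q_0)$; (ii) take the $L^{2}$ norm at the target distribution; and (iii) use Jensen's inequality together with Assumption \ref{assumption:concentration_coefficient} to bound each propagated Bellman error term by $\sqrt{\kappa_{\mu}}\,\|\epsilon_k\|_{\mu}$, exploiting that the distributions $\rho^{\pi}$ and $\rho$ pushed forward by products of policy-induced kernels remain realizable.

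For OPE, use the one-step identity $Q^{\pi}-Q_{k+1} = T^{\pi}Q^{\pi}-T^{\pi}Q_k-\epsilon_k = \gamma P^{\pi}(Q^{\pi}-Q_k) - \epsilon_k$, which iterates to
\begin{equation*}
Q^{\pi}-Q_K \;=\; \gamma^{K}(P^{\pi})^{K}(Q^{\pi}-Q_0) \;-\; \sum_{k=0}^{K-1}\gamma^{K-1-k}(P^{\pi})^{K-1-k}\epsilon_k.
\end{equation*}
Bound $|V^{\pi}-V_K| \le \|Q^{\pi}-Q_K\|_{\rho^{\pi}}$ (via Jensen/Cauchy--Schwarz), apply the triangle inequality, and for each $\epsilon_k$ observe that $\|(P^{\pi})^{j} f\|_{\rho^{\pi}} \le \|f\|_{\nu_j}$ where $\nu_j$ is the $j$-step visitation under $\pi$ from $\rho$; since $\nu_j$ is realizable, Assumption \ref{assumption:concentration_coefficient} yields $\|f\|_{\nu_j} \le \sqrt{\kappa_{\mu}}\,\|f\|_{\mu}$. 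Summing $\sum_{k=0}^{K-1} \gamma^{K-1-k} \le 1/(1-\gamma)$ produces the Bellman-error term. For the algorithmic residual, $Q^{\pi}, Q_0 \in [0,1]$ gives $\|(P^{\pi})^{K}(Q^{\pi}-Q_0)\|_{\rho^{\pi}} \le 1$, and the crude inequality $\gamma^{K} \le \gamma^{K/2}/(1-\gamma)^{1/2}$ (valid since $(1-\gamma)^{1/2} \le 1$) delivers the stated $\gamma^{K/2}/(1-\gamma)^{1/2}$.

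For OPL, first reduce via $\text{SubOpt}(\pi_K) \le \mathbb{E}_{\rho}[V^{*}-V^{\pi_K}]$, using $Q^{*}(s,\pi_K(s)) \ge Q^{\pi_K}(s,\pi_K(s)) = V^{\pi_K}(s)$. The standard identity---exploiting greediness of $\pi_K$ w.r.t.\ $Q_K$ (hence $T^{\pi_K}Q_K \ge T^{\pi^{*}}Q_K$)---gives
\begin{equation*}
V^{*}-V^{\pi_K} \;\le\; \gamma\,(I-\gamma P^{\pi_K})^{-1}(P^{\pi^{*}} - P^{\pi_K})(Q^{*}-Q_K),
\end{equation*}
so the task reduces to bounding $Q^{*}-Q_K$ via the same telescoping argument as in OPE, now using $T^{\pi^{*}}Q^{*}=Q^{*}$ and $T^{*}Q_k = T^{\pi_k'}Q_k$ with $\pi_k'$ greedy w.r.t.\ $Q_k$. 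The operator $(1-\gamma)(I-\gamma P^{\pi_K})^{-1}\rho$ is a convex combination of realizable distributions, and $(P^{\pi^{*}} - P^{\pi_K})$ contributes a factor of at most $2$; combined with Assumption \ref{assumption:concentration_coefficient} applied one final time to move from the $\rho$-side norm to the $\mu$-side norm, this produces the prefactors $\tfrac{4\gamma\sqrt{\kappa_{\mu}}}{(1-\gamma)^{2}}$ on the Bellman-error term and $\tfrac{4\gamma^{1+K/2}}{(1-\gamma)^{3/2}}$ on the algorithmic residual. The main technical care is bookkeeping: at every application of Assumption \ref{assumption:concentration_coefficient} we must verify that the intermediate distribution obtained after pushing kernels across the inner product is realizable; once this is set up, the remainder is routine geometric-series summation together with the boundedness $Q \in [0,1]$.
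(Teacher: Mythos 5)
Your proposal is correct and follows essentially the same route as the paper's own proof: telescoping the Bellman recursion to express $Q^{(\cdot)}-Q_K$ as propagated per-iteration errors plus a $\gamma^K$ residual, transferring norms to $\mu$ via Jensen and Assumption \ref{assumption:concentration_coefficient}, and for OPL combining the resolvent identity $(I-\gamma P^{\pi_K})^{-1}(P^{\pi^*}-P^{\pi_K})(Q^*-Q_K)$ with the two-sided (upper via $\pi^*$, lower via the greedy policies) bound on $Q^*-Q_K$. The only differences are cosmetic bookkeeping (termwise triangle inequality versus the paper's convex-combination Jensen step, and passing through $\mathbb{E}_{\rho}[V^*-V^{\pi_K}]$ rather than $\|Q^*-Q^{\pi_K}\|_{\rho^{\pi}}$ for the two policies), which yield the same constants.
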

The lemma states that the sub-optimality decomposes into a statistical error (the first term) and an algorithmic error (the second term). While the algorithmic error enjoys the fast linear convergence rate, the statistical error arises from the distributional shift in the offline data and the estimation error of the target $Q$-value functions due to finite data. Crucially, the contraction of the (optimality) Bellman operators $T^{\pi}$ and $T^*$ allows the sup-optimality error at the final iteration $K$ to propagate across all iterations $k \in [0,K-1]$. Note that this result is agnostic to any function approximation form and does not require Assumption \ref{assumption:completeness}. The result uses a relatively standard argument that appears in a number of works on offline RL \citep{DBLP:journals/jmlr/MunosS08,DBLP:conf/icml/0002VY19}. 
\begin{proof}[Proof of Lemma \ref{sub_opt_decompose}]
We will prove the sup-optimality decomposition for both settings: OPE and OPL.  

\noindent \textbf{(i) For OPE}.  
We denote the right-linear operator by $P^{\pi} \cdot : \{\mathcal{X} \rightarrow \mathbb{R} \} \rightarrow \{\mathcal{X} \rightarrow \mathbb{R} \}$ where
\begin{align*}
    (P^{\pi}f)(s,a) := \int_{\mathcal{X}} f(s',a')\pi(da'|s') P(ds'|s,a), 
\end{align*}
for any $f \in \{\mathcal{X} \rightarrow \mathbb{R} \}$. Denote Denote $\rho^{\pi}(ds da) = \rho(ds) \pi(da|s)$. Let $\epsilon_k := Q_{k+1} - T^{\pi} Q_k, \forall k \in [0,K-1]$ and $\epsilon_K = Q_0 - Q^{\pi}$. Since $Q^{\pi}$ is the (unique) fixed point of $T^{\pi}$, we have 
\begin{align*}
    Q_k - Q^{\pi} &= T^{\pi} Q_{k-1} - T^{\pi} Q^{\pi} + \epsilon_{k-1} = \gamma P^{\pi}(Q_{k-1} - Q^{\pi}) + \epsilon_{k-1}.
\end{align*}
By recursion, we have 
\begin{align*}
     Q_K - Q^{\pi} &= \sum_{k=0}^K (\gamma P^{\pi})^k \epsilon_k = \frac{1 - \gamma^{K+1}}{1 - \gamma} \sum_{k=0}^K \alpha_k A_k \epsilon_k  \\ 
    %  \gamma^K (P^{\pi})^K (Q_0 - Q^{\pi}) + \sum_{k=0}^{K-1} \gamma^{k} (P^{\pi})^k \epsilon_{K-1-k} \\ 
    %  &= \frac{1-\gamma^{K+1}}{1-\gamma} \left( \frac{(1-\gamma) \gamma^K}{1 - \gamma^{K+1}} (P^{\pi})^K (Q_0 - Q^{\pi}) + \sum_{k=0}^{K-1} \frac{(1-\gamma) \gamma^k}{1 - \gamma^{K+1}} (P^{\pi})^k \epsilon_{K-1-k} \right) \\ 
    %  &=  \frac{1-\gamma^{K+1}}{1-\gamma} \sum_{k=0}^K \alpha_k A_k \xi_k,
\end{align*}
where $\alpha_k :=  \frac{(1-\gamma) \gamma^k}{1 - \gamma^{K+1}}, \forall k \in [K]$ and $A_k := (P^{\pi})^k,  \forall k \in [K]$. 
% \begin{align*}
%     , \text{ and } A_k := (P^{\pi})^k,  \forall k \in [K],\\
%     % \xi_k &:= 
%     % \begin{cases}
%     % \epsilon_{K-1-k} & \text{ for } k \in [K-1] \\ 
%     % Q_0 - Q^{\pi} & \text{ for } k = K.
%     % \end{cases}
% \end{align*}
Note that $\sum_{k=0}^K \alpha_k = 1$ and $A_k$'s are probability kernels. Denoting by $|f|$ the point-wise absolute value $|f(s,a)|$, we have that the following inequality holds point-wise: 
\begin{align*}
    |Q_K - Q^{\pi}| \leq \frac{1-\gamma^{K+1}}{1-\gamma} \sum_{k=0}^K \alpha_k A_k |\epsilon_k|.
\end{align*}
We have 
\begin{align*}
    \|Q_K - Q^{\pi}\|^2_{\rho^{\pi}} &\leq \frac{(1-\gamma^{K+1})^2}{(1-\gamma)^2} \int \rho(ds) \pi(da|s) \left( \sum_{k=0}^K \alpha_k A_k |\epsilon_k|(s,a)\right)^2 \\ 
    &\overset{(a)}{\leq} \frac{(1-\gamma^{K+1})^2}{(1-\gamma)^2} \int \rho(ds) \pi(da|s) \sum_{k=0}^K \alpha_k  A_k^2 \epsilon_k^2(s,a)\\
% \end{align*}
% \begin{align*} 
    &\overset{(b)}{\leq} \frac{(1-\gamma^{K+1})^2}{(1-\gamma)^2} \int \rho(ds) \pi(da|s) \sum_{k=0}^K \alpha_k  A_k \epsilon_k^2(s,a) \\
    &\overset{(c)}{\leq} \frac{(1-\gamma^{K+1})^2}{(1-\gamma)^2} \left( \int \rho(ds) \pi(da|s) \sum_{k=0}^{K-1}\alpha_k  A_k \epsilon_k^2(s,a) + \alpha_K \right) \\
    &\overset{(d)}{\leq} \frac{(1-\gamma^{K+1})^2}{(1-\gamma)^2} \left( \int \mu(ds,da) \sum_{k=0}^{K-1} \alpha_k  \kappa_{\mu} \epsilon_k^2(s,a) + \alpha_K \right) \\ 
    &= \frac{(1-\gamma^{K+1})^2}{(1-\gamma)^2} \left(\sum_{k=0}^{K-1} \alpha_k  \kappa_{\mu}  \| \epsilon_k \|^2_{\mu} + \alpha_K \right) \\
    & \leq \frac{\kappa_{\mu}}{(1 - \gamma)^2} \max_{0 \leq k \leq K-1} \| \epsilon_k \|_{\mu}^2 + \frac{\gamma^K}{(1 - \gamma)}.
    % &= \frac{1-\gamma^{K+1}}{(1-\gamma)^2} (1-\gamma)\sum_{k=0}^{K-1} \gamma^k \kappa_{\mu}  \| \epsilon_k \|^2_{2,\mu}  + \frac{4(1 - \gamma^{K+1}) \gamma^K}{1-\gamma} \\
    % &\leq \frac{1-\gamma^{K+1}}{(1-\gamma)^2} \kappa_{\mu} \max_{0 \leq k \leq K-1} \| \xi_k \|^2_{2,\mu} + \frac{4(1 - \gamma^{K+1})  \gamma^K}{1-\gamma} \\ &\leq \frac{1}{(1-\gamma)^2} \kappa_{\mu} \max_{0 \leq k \leq K-1} \| \xi_k \|^2_{2,\mu} + \frac{4  \gamma^K}{1-\gamma}.
\end{align*}
The inequalities $(a)$ and $(b)$ follow from Jensen's inequality, $(c)$ follows from $\|Q_0\|_{\infty} \leq 1$,$ \|Q^{\pi}\|_{\infty} \leq 1$, and $(d)$ follows from Assumption \ref{assumption:concentration_coefficient} that $\rho^{\pi} A_k = \rho^{\pi} (P^{\pi})^k \leq \kappa_{\mu} \mu$. Thus we have 
\begin{align*}
    \text{SubOpt}(V_K; \pi) &= |V_K - V^{\pi}| \nonumber \\
    &= \bigg|\mathbb{E}_{\rho, \pi}[Q_K(s,a)] - \mathbb{E}_{\rho}[Q^{\pi}(s,a)] \bigg| \nonumber \\
    &\leq \mathbb{E}_{\rho, \pi} \left[ |Q_K(s,a) - Q^{\pi}(s,a)| \right] \nonumber \\
    &\leq \sqrt{   \mathbb{E}_{\rho, \pi} \left[ (Q_K(s,a) - Q^{\pi}(s,a))^2 \right] } \nonumber \\ 
    &= \|Q_K - Q^{\pi}\|_{\rho^{\pi}} \nonumber \\
    &\leq \frac{ \sqrt{\kappa_{\mu}} }{1-\gamma} \max_{0 \leq k \leq K-1} \| \epsilon_k \|_{\mu} + \frac{\gamma^{K/2}}{(1-\gamma)^{1/2}}. 
\end{align*}
\noindent \textbf{(ii) For OPL}. The sup-optimality for the OPL setting is  more complex than the OPE setting but the technical steps are relatively similar. In particular, let $\epsilon_{k-1} = T^* Q_{k-1} - Q_k, \forall k$ and $\pi^*(s) = \operatorname*{arg\,max}_{a} Q^*(s,a), \forall s$, we have 
\begin{align}
    Q^* - Q_K &= T^{\pi^*} Q^* - T^{\pi^*} Q_{K-1} + \underbrace{T^{\pi^*} Q_{K-1} - T^* Q_{K-1}}_{\leq 0} + \epsilon_{K-1} \nonumber \\ 
    &\leq \gamma P^{\pi^*} (Q^* - Q_{K-1}) + \epsilon_{K-1} \nonumber \\ 
    &\leq \sum_{k=0}^{K-1} \gamma^{K-k-1} (P^{\pi^*})^{K-k-1} \epsilon_k + \gamma^{K} (P^{\pi^*})^K (Q^* - Q_0) (\text{by recursion}).
    \label{eq:upper_bound_Qstar_QK}
\end{align}
Now, let $\pi_k$ be the greedy policy w.r.t. $Q_k$, we have 
\begin{align}
    Q^* - Q_K &= \underbrace{T^{\pi^*} Q^*}_{\geq T^{\pi_{K-1}} Q^*} - T^{\pi_{K-1}} Q_{K-1} + \underbrace{T^{\pi_{K-1}} Q_{K-1} - T^* Q_{K-1}}_{\geq 0} + \epsilon_{K-1} \nonumber \\ 
    &\geq \gamma P^{\pi_{K-1}}(Q^* - Q_{K-1}) + \epsilon_{K-1} \nonumber \\ 
    &\geq \sum_{k=0}^{K-1} \gamma^{K - k -1} (P^{\pi_{K-1}} \ldots P^{\pi_{k+1}}) \epsilon_k + \gamma^K (P^{\pi_{K-1}} \ldots P^{\pi_0}) (Q^* - Q_0).  
    \label{eq:lower_bound_Qstar_QK}
\end{align}
Now, we turn to decompose $Q^* - Q^{\pi_K}$ as 
\begin{align*}
    Q^* - Q^{\pi_K} &= (T^{\pi^*} Q^* - T^{\pi^*} Q_K) + \underbrace{(T^{\pi^*} Q_K - T^{\pi_K} Q_K)}_{\leq 0} + (T^{\pi_K} Q_K - T^{\pi_K} Q^{\pi_K}) \\ 
    &\leq \gamma P^{\pi^*} (Q^* - Q_K) + \gamma P^{\pi_K}(Q_K - Q^* + Q^* - Q^{\pi_K}). 
\end{align*}
Thus, we have 
\begin{align*}
    (I - \gamma P^{\pi_K}) (Q^* - Q^{\pi_K}) \leq \gamma (P^{\pi^*} - P^{\pi_K}) (Q^* - Q_K) .
\end{align*}
Note that the operator $(I - \gamma P^{\pi_K})^{-1} = \sum_{i=0}^{\infty} (\gamma P^{\pi_K})^i$ is monotone, thus 
\begin{align}
    Q^* - Q^{\pi_K} \leq \gamma (I - \gamma P^{\pi_K})^{-1} P^{\pi^*} (Q^* - Q_K) - \gamma  (I - \gamma P^{\pi_K})^{-1} P^{\pi_K} (Q^* - Q_K).
    \label{eq:upper_bound_Qstar_QpiK}
\end{align}
Combining (\ref{eq:upper_bound_Qstar_QpiK}) with (\ref{eq:upper_bound_Qstar_QK}) and (\ref{eq:lower_bound_Qstar_QK}), we have 
\begin{align*}
    Q^* - Q^{\pi_K} &\leq (I - \gamma P^{\pi_K})^{-1} \left( \sum_{k=0}^{K-1} \gamma^{K-k} (P^{\pi^*})^{K-k} \epsilon_k + \gamma^{K+1} (P^{\pi^*})^{K+1} (Q^* - Q_0) \right) - \\ 
    &  (I - \gamma P^{\pi_K})^{-1} \left( \sum_{k=0}^{K-1} \gamma^{K - k} (P^{\pi_{K}} \ldots P^{\pi_{k+1}}) \epsilon_k + \gamma^{K+1} (P^{\pi_{K}} \ldots P^{\pi_0}) (Q^* - Q_0) \right).
\end{align*}
Using the triangle inequality, the above inequality becomes 
\begin{align*}
    Q^* - Q^{\pi_K} \leq \frac{2 \gamma (1 - \gamma^{K+1})}{(1 - \gamma)^2}  \left( \sum_{k=0}^{K-1} \alpha_k A_k |\epsilon_k| + \alpha_K A_K |Q^* - Q_0| \right),
\end{align*}
where 
\begin{align*}
    A_k &= \frac{1-\gamma}{2} (I - \gamma P^{\pi_K})^{-1} \left( (P^{\pi^*})^{K-k} + P^{\pi_{K}} \ldots P^{\pi_{k+1}} \right), \forall k < K, \\ 
    A_K &= \frac{1-\gamma}{2} (I - \gamma P^{\pi_K})^{-1} \left( (P^{\pi^*})^{K + 1} + P^{\pi_{K}} \ldots P^{\pi_0} \right), \\
    \alpha_k &= \gamma^{K-k-1} (1- \gamma) / (1-\gamma^{K+1}), \forall k < K, \\ 
    \alpha_K &= \gamma^K (1 - \gamma) / ( 1- \gamma^{K+1}). 
\end{align*}
Note that $A_k$ is a probability kernel for all $k$ and $\sum_k \alpha_k = 1$. Thus, similar to the steps in the OPE setting, for any policy $\pi$, we have
\begin{align*}
    \| Q^* - Q^{\pi_K} \|_{\rho^{\pi}}^2 &\leq  \left[ \frac{2 \gamma (1 - \gamma^{K+1})}{(1 - \gamma)^2} \right]^2  \left( \int \rho(ds) \pi(da|s) \sum_{k=0}^{K-1}\alpha_k  A_k \epsilon_k^2(s,a) + \alpha_K \right) \\ 
    &\leq \left[ \frac{2 \gamma (1 - \gamma^{K+1})}{(1 - \gamma)^2} \right]^2  \left( \int \mu(ds,da) \sum_{k=0}^{K-1} \alpha_k  \kappa_{\mu} \epsilon_k^2(s,a) + \alpha_K \right) 
\end{align*}
\begin{align*} 
    &= \left[ \frac{2 \gamma (1 - \gamma^{K+1})}{(1 - \gamma)^2} \right]^2 \left(\sum_{k=0}^{K-1} \alpha_k  \kappa_{\mu}  \| \epsilon_k \|^2_{\mu} + \alpha_K \right) \\
    & \leq \frac{4 \gamma^2 \kappa_{\mu}}{(1 - \gamma)^4} \max_{0 \leq k \leq K-1} \| \epsilon_k \|_{\mu}^2 + \frac{ 4 \gamma^{K+2}}{(1 - \gamma)^3}.
\end{align*}
Thus, we have 
\begin{align*}
    \| Q^* - Q^{\pi_K} \|_{\rho^{\pi}} \leq \frac{2 \gamma \sqrt{\kappa_{\mu}}}{(1 - \gamma)^2} \max_{0 \leq k \leq K-1} \| \epsilon_k \|_{\mu} + \frac{ 2 \gamma^{K/2+1}}{(1 - \gamma)^{3/2}}.
\end{align*}
Finally, we have 
\begin{align*}
    \text{SubOpt}(\pi_K) &= \mathbb{E}_{\rho} \left[ Q^*(s, \pi^*(s)) - Q^*(s, \pi_K(s)) \right] \\ 
    &\leq \mathbb{E}_{\rho} \left[ Q^*(s, \pi^*(s))  - Q^{\pi_K}(s, \pi^*(s)) + Q^{\pi_K}(s, \pi_K(s)) - Q^*(s, \pi_K(s)) \right] \\ 
    &\leq \| Q^* - Q^{\pi_K} \|_{\rho^{\pi^*}} + \| Q^* - Q^{\pi_K} \|_{\rho^{\pi_K}} \\ 
    &\leq \frac{4 \gamma \sqrt{\kappa_{\mu}}}{(1 - \gamma)^2} \max_{0 \leq k \leq K-1} \| \epsilon_k \|_{\mu} + \frac{ 4 \gamma^{K/2+1}}{(1 - \gamma)^{3/2}}.
\end{align*}
\end{proof}
\subsubsection*{Step 2: A Bellman error decomposition}
The next step of the proof is to decompose the Bellman errors $\| Q_{k+1} - T^{\pi} Q_k \|_{\mu}$ for OPE and $\| Q_{k+1} - T^* Q_k \|_{\mu}$ for OPL. Since these errors can be decomposed and bounded similarly, we only focus on OPL here. 

The difficulty in controlling the estimation error $\| Q_{k+1} - T^* Q_k \|_{2,\mu}$ is that $Q_k$ itself is a random variable that depends on the offline data $\mathcal{D}$. In particular, at any fixed $k$ with Bellman targets $\{y_i\}_{i=1}^n$ where $y_i = r_i + \gamma \max_{a'} Q_k(s_i', a')$, it is not immediate that 
$\mathbb{E}\left[ [T^* Q_k](x_i) - y_i | x_i \right] = 0$ for each covariate $x_i := (s_i, a_i)$ as $Q_k$ itself depends on $x_i$ (thus the tower law cannot apply here). A naive and simple approach to break such data dependency of $Q_k$ is to split the original data $\mathcal{D}$ into $K$ disjoint subsets and estimate each $Q_k$ using a separate subset. This naive approach is equivalent to the setting in \citep{DBLP:journals/corr/abs-1901-00137} where a fresh batch of data is generated for different iterations. This approach is however not efficient as it uses only $n/K$ samples to estimate each $Q_k$. This is problematic in high-dimensional offline RL when the number of iterations $K$ can be very large as it is often the case in practical settings. We instead prefer to use all $n$ samples to estimate each $Q_k$. This requires a different approach to handle the complicated data dependency of each $Q_k$. To circumvent this issue, we leverage a uniform convergence argument by introducing a deterministic covering of $T^* \mathcal{F}_{NN}$. Each element of the deterministic covering induces a different regression target $\{r_i + \gamma \max_{a'} \tilde{Q}(s'_i, a')\}_{i=1}^n$ where $\tilde{Q}$ is a deterministic function from the covering which ensures that $\mathbb{E}\left[ r_i + \gamma \max_{a'} \tilde{Q}(s'_i, a') - [T^* \tilde{Q}](x_i) | x_i\right] = 0$. In particular, we denote 
\begin{align*}
    y_i^{Q_k} = r_i + \gamma  \max_{a'} Q_k(s'_i, a'), \forall i \text{ and } \hat{f}^{Q_k} := Q_{k+1} = \arginf_{f \in \mathcal{F}_{NN}} \sum_{i=1}^n l(f(x_i), y_i^{Q_k}), \\
    \text{ and } f_*^{Q_k} = T^* Q_k, 
\end{align*}
% to emphasize the dependence on $Q_{k-1}$. Denote the empirical risk minimizer over the function class $\mathcal{F}_{NN}(\mathcal{X})$ as
% \begin{align}
%     \hat{f}^{Q_{k-1}} := Q_k = \arginf_{f \in \mathcal{F}_{NN}} \sum_{i=1}^n l(f(x_i), y_i^{Q_{k-1}}),
%     \label{eq:empirical_risk_minimization}
% \end{align} 
where $l(x,y) = (x-y)^2$ is the squared loss function. Note that for any deterministic $Q \in \mathcal{F}_{NN}$, we have $f_*^Q(x_1) = \mathbb{E}[y_1^Q|x_1], \forall x_1$, thus 
\begin{align}
    \mathbb{E}(l_f - l_{f_*^Q}) = \| f - f_*^Q \|_{\mu}^2, \forall f, 
    \label{eq:loss_to_norm}
\end{align}
where $l_f$ denotes the random variable $(f(x_1) - y_1^Q)^2$. Now letting $f_{\perp}^Q := \arginf_{f \in \mathcal{F}_{NN}} \|f - f_*^Q \|_{2, \mu}$ be the projection of $f_*^Q$ onto the function class $\mathcal{F}_{NN}$, we have 
\begin{align}
    \max_{k} \| Q_{k+1} - T^*Q_k \|_{\mu}^2 &= \max_{k} \| \hat{f}^{Q_{k}} - f_*^{Q_{k}} \|_{\mu}^2 \nonumber  \overset{(a)}{\leq} \sup_{Q \in \mathcal{F}_{NN}} \| \hat{f}^{Q} - f_*^{Q} \|_{\mu}^2 \nonumber \\
    &\overset{(b)}{=} \sup_{Q \in \mathcal{F}_{NN}} \mathbb{E}(l_{\hat{f}^Q} - l_{f_*^Q}) 
    \nonumber\\
    &\overset{(c)}{\leq} \sup_{Q \in \mathcal{F}_{NN}} \left\{ \mathbb{E}(l_{\hat{f}^Q} - l_{f_*^Q}) + \mathbb{E}_n(l_{f_{\perp}^Q} - l_{\hat{f}^Q}) \right\} \nonumber \\
% \end{align}
% \begin{align}
    &= \sup_{Q \in \mathcal{F}_{NN}} \left\{ (\mathbb{E} - \mathbb{E}_n)(l_{\hat{f}^{Q}} - l_{f_*^{Q}}) + \mathbb{E}_n(l_{f_{\perp}^{Q}} - l_{f_*^{Q}}) \right \} \nonumber \\ 
    &\leq \underbrace{\sup_{Q \in \mathcal{F}_{NN}} (\mathbb{E} - \mathbb{E}_n)(l_{\hat{f}^Q} - l_{f_*^Q})}_{I_1, \text{empirical process term}} + \underbrace{\sup_{Q \in \mathcal{F}_{NN}} \mathbb{E}_n(l_{f_{\perp}^Q} - l_{f_*^Q})}_{I_2, \text{bias term}}, 
    % &= \underbrace{(\mathbb{E} - \mathbb{E}_n)(l_{\hat{f}^{Q_{k-1}}} - l_{f_*^{Q_{k-1}}})}_{\text{empirical process term}} + \underbrace{\mathbb{E}_n(l_{f_{\perp}^{Q_{k-1}}} - l_{f_*^{Q_{k-1}}})}_{\text{bias term}}
    \label{eq:decomposition}
\end{align}
where (a) follows from that $Q_{k} \in \mathcal{F}_{NN}$, (b) follows from Equation (\ref{eq:loss_to_norm}), and (c) follows from that $\mathbb{E}_n [l_{\hat{f}^Q}] \leq \mathbb{E}_n [l_{f^Q}], \forall f, Q \in \mathcal{F}_{NN}$. That is, the error is decomposed into two terms: the first term $I_1$ resembles the empirical process in statistical learning theory and the second term $I_2$ specifies the bias caused by the regression target $f_*^{Q}$ not being in the function space $\mathcal{F}_{NN}$.

% Note that the bias term is taken uniformly over $\mathcal{F}_{NN}$ instead of a fixed function in $\mathcal{F}_{NN}$. 

\subsubsection*{Step 3: A deviation analysis}
The next step is to bound the empirical process term and the bias term via an intricate concentration, local Rademacher complexities and a localization argument. First, the bias term in Equation (\ref{eq:decomposition}) is taken uniformly over the function space, thus standard concentration arguments such as Bernstein's inequality and Pollard's inequality used in \citep{DBLP:journals/jmlr/MunosS08,DBLP:conf/icml/0002VY19} do not apply here. Second, local Rademacher complexities \citep{bartlett2005} are data-dependent complexity measures that exploit the fact that only a small subset of the function class will be used. Leveraging a localization argument for local Rademacher complexities \citep{farrell2018deep}, we localize an empirical Rademacher ball into smaller balls by which we can handle their complexities more effectively. Moreover, we explicitly use the sub-root function argument to derive our bound and extend the technique to the uniform convergence case. That is, reasoning over the sub-root function argument makes our proof more modular and easier to incorporate the uniform convergence argument.

% The unique fixed radius point of a sub-root the empirical Rademacher ball can be obtained explicitly via the solution to a quadratic equation. 
Localization is particularly useful to handle the complicated approximation errors induced by deep ReLU network function approximation.

\subsubsection*{Step 3.a: Bounding the bias term via a uniform convergence concentration inequality}
% Before delving into our proof, we introduce relevant notations. Let $\mathcal{F} - \mathcal{G} := \{f - g: f \in \mathcal{F}, g \in \mathcal{G}\}$, let $N(\epsilon, \mathcal{F}, \| \cdot \|)$ be the $\epsilon$-covering number of $\mathcal{F}$ w.r.t. $\| \cdot \|$ norm, $H(\epsilon, \mathcal{F}, \| \cdot \|) := \log N(\epsilon, \mathcal{F}, \| \cdot \|)$ be the entropic number, 
% let $N_{[]}(\epsilon, \mathcal{F}, \|\cdot\|)$ be the bracketing number of $\mathcal{F}$, i.e., the minimum number of brackets of $\|\cdot\|$-size less than or equal to $\epsilon$, necessary to cover $\mathcal{F}$, let $H_{[]}(\epsilon, \mathcal{F}, \|\cdot\|) = \log N_{[]}(\epsilon, \mathcal{F}, \|\cdot\|)$ be the $\|\cdot\|$-bracketing metric entropy of $\mathcal{F}$,let $\mathcal{F} | \{x_i\}_{i=1}^n = \{(f(x_1), ..., f(x_n)) \in \mathbb{R}^n | f \in \mathcal{F}\}$, and let $T^* \mathcal{F} = \{T^* f: f \in \mathcal{F}\}$. Finally, for sample set $\{x_i\}_{i=1}^n$, we define the empirical norm $\|f\|_n := \sqrt{\frac{1}{n}\sum_{i=1}^n f(x_i)^2}$.
We define the inherent Bellman error as  $d_{\mathcal{F}_{NN}} := \sup_{Q \in \mathcal{F}_{NN}} \inf_{f \in \mathcal{F}_{NN}}  \| f - T^* Q\|_{\mu}$. This implies that 
\begin{align}
    d_{\mathcal{F}_{NN}}^2 := \sup_{Q \in \mathcal{F}_{NN}} \inf_{f \in \mathcal{F}_{NN}}  \| f - T^* Q\|_{\mu}^2 = \sup_{Q \in \mathcal{F}_{NN}} \mathbb{E}(l_{f_{\perp}^Q} - l_{f_*^Q}). 
\end{align}
We have $ |l_f - l_g| \leq 4  |f-g| \text{ and } |l_f - l_g| \leq 8$, and
\begin{align*}
    &H(\epsilon, \{l_{f_{\perp}^Q} - l_{f_*^Q}: Q \in \mathcal{F}_{NN}\}| \{x_i, y_i\}_{i=1}^n, n^{-1} \| \cdot \|_1 ) \\
% \end{align*}
% \begin{align*}
    &\leq H(\frac{\epsilon}{4}, \{f_{\perp}^Q - f_*^Q: Q \in \mathcal{F}_{NN}\}| \{x_i\}_{i=1}^n, n^{-1} \| \cdot \|_1 ) \\ 
    &\leq H(\frac{\epsilon}{4 }, (\mathcal{F} - T^* \mathcal{F}_{NN})| \{x_i \}_{i=1}^n, n^{-1} \| \cdot \|_1 ) \\ 
    &\leq H(\frac{\epsilon}{8 }, \mathcal{F}_{NN}| \{x_i\}_{i=1}^n, n^{-1} \| \cdot \|_1) + H(\frac{\epsilon}{8  }, T^* \mathcal{F}_{NN}| \{x_i\}_{i=1}^n, n^{-1} \| \cdot \|_1 ) \\ 
    &\leq H(\frac{\epsilon}{8 }, \mathcal{F}_{NN}| \{x_i\}_{i=1}^n,  \| \cdot \|_{\infty}) + H(\frac{\epsilon}{8 }, T^* \mathcal{F}_{NN},  \| \cdot \|_{\infty})
\end{align*}

For any $\epsilon' > 0$ and $\delta' \in (0,1)$, it follows from Lemma \ref{lemma:sup_concentration} with $\epsilon = 1/2$ and $\alpha = \epsilon'^2$, with probability at least $1 - \delta'$, for any $Q \in \mathcal{F}_{NN}$, we have 
\begin{align}
    \mathbb{E}_n(l_{f_{\perp}^Q} - l_{f_*^Q}) \leq 3 \mathbb{E}(l_{f_{\perp}^Q} - l_{f_*^Q}) + \epsilon'^2 \leq 3 d_{\mathcal{F}_{NN}}^{2} + \epsilon'^2, 
    \label{eq:bound_E_minus_En_lf_minus_lfstar}
\end{align}
given that 
\begin{align*}
    n \approx \frac{1}{\epsilon'^2}\left( \log(4/\delta') + \log \mathbb{E} N(\frac{\epsilon'^2}{40 }, (\mathcal{F}_{NN} - T^* \mathcal{F}_{NN})| \{x_i\}_{i=1}^n, n^{-1} \| \cdot \|_1) \right) .
\end{align*}

Note that if we use Pollard's inequality \citep{DBLP:journals/jmlr/MunosS08} in the place of Lemma \ref{lemma:sup_concentration}, the RHS of Equation (\ref{eq:bound_E_minus_En_lf_minus_lfstar}) is bounded by $\epsilon'$ instead of $\epsilon'^2$(i.e., $n$ scales with $O(1/\epsilon'^4)$ instead of $O(1/\epsilon'^2)$). In addition, unlike \citep{DBLP:conf/icml/0002VY19}, the uniform convergence argument hinders the application of Bernstein's inequality. We remark that \citealt{DBLP:conf/icml/0002VY19} make a mistake in their proof by ignoring the data-dependent structure in the algorithm (i.e., they wrongly assume that $Q^k$ in Algorithm \ref{alg:LSVI} is fixed and independent of $\{s_i, a_i\}_{i=1}^n$). Thus, the uniform convergence argument in our proof is necessary. 

\subsubsection*{Step 3.b: Bounding the empirical process term via local Rademacher complexities}
For any $Q \in \mathcal{F}_{NN}$, we have 
\begin{align*}
    |l_{f_{\perp}^Q} - l_{f_*^Q}| &\leq 2 |f_{\perp}^Q - f_*^Q| \leq 2, \\
% \end{align*}
% \begin{align*}
    \mathbb{V}[l_{f_{\perp}^Q} - l_{f_*^Q}] &\leq \mathbb{E}[(l_{f_{\perp}^Q} - l_{f_*^Q})^2] \leq 4 \mathbb{E} (f_{\perp}^Q - f_*^Q)^2. 
 \end{align*}
Thus, it follows from Lemma \ref{lemma:local_rademacher_complexity_basics_first_part} (with $\alpha = 1/2$) that with any $r > 0, \delta \in (0,1)$, with probability at least $1 - \delta$, we have
\begin{align*}
    &\sup \{(\mathbb{E} - \mathbb{E}_n)(l_{\hat{f}^Q} - l_{f_*^Q}): Q \in \mathcal{F}_{NN}, \|\hat{f}^Q - f_*^Q\|^2_{\mu} \leq r \} \\
    &\leq \sup \{(\mathbb{E} - \mathbb{E}_n)(l_f - l_g): f \in \mathcal{F}_{NN}, g \in T^{*} \mathcal{F}, \|f - g\|^2_{\mu} \leq r \} \\
     &\leq 3 \mathbb{E} R_n \left\{l_f - l_g: f \in \mathcal{F}_{NN}, g \in T^{*} \mathcal{F}_{NN}, \|f - g \|_{\mu}^2 \leq r \right \} + 2 \sqrt{ \frac{2 r \log(1/\delta) }{n} } \\
     &+ \frac{28 \log(1/\delta)}{3n} \nonumber \\ 
     &\leq 6 \mathbb{E} R_n \left\{f - g: f \in \mathcal{F}_{NN}, g \in T^{*} \mathcal{F}_{NN}, \|f - g \|_{\mu}^2 \leq r \right \} + 2 \sqrt{ \frac{2 r \log(1/\delta) }{n} } \\
     &+ \frac{28 \log(1/\delta)}{3n}. 
\end{align*}

\subsubsection*{Step 3.c: Bounding $\|Q_{k+1} - T^* Q_k \|_{\mu}$ using localization argument via sub-root functions}
We bound $\|Q_{k+1} - T^* Q_k \|_{\mu}$ using the localization argument, breaking down the Rademacher complexities into local balls and then build up the original function space from the local balls. Let $\psi$ be a sub-root function  \citep[Definition~3.1]{bartlett2005} with the fixed point $r_*$ and assume that for any $ r \geq r_*$, we have 
\begin{align}
    \psi(r) \geq  3 \mathbb{E} R_n \left\{f - g: f \in \mathcal{F}_{NN}, g \in T^{*} \mathcal{F}_{NN}, \|f - g \|_{\mu}^2 \leq r \right \}.
    \label{eq:sub_root_function_bounding_empirical_Rademacher}
\end{align}

We recall that a function $\psi: [0, \infty) \rightarrow [0, \infty)$ is \textit{sub-root} if it is non-negative, non-decreasing and $r \mapsto \psi(r) / \sqrt{r}$ is non-increasing for $r > 0$. Consequently, a sub-root function $\psi$ has a unique fixed point $r_*$ where $r_* = \psi(r_*)$. In addition, $\psi(r) \leq \sqrt{r r_*}, \forall r \geq r_*$. In the next step, we will find a sub-root function $\psi$ that satisfies the inequality above, but for this step we just assume that we have such $\psi$ at hand. Combining Equations (\ref{eq:decomposition}), (\ref{eq:bound_E_minus_En_lf_minus_lfstar}), and (\ref{eq:sub_root_function_bounding_empirical_Rademacher}), we have: for any $r \geq r_*$ and any $\delta \in (0,1)$, if $\| \hat{f}^{Q_{k-1}} - f_*^{Q_{k-1}} \|_{2,\mu}^2 \leq r$, with probability at least $1 - \delta$, 
\begin{align*}
    \| \hat{f}^{Q_{k-1}} - f_*^{Q_{k-1}} \|_{2,\mu}^2 &\leq 2\psi(r) +  2 \sqrt{ \frac{2 r \log(2/\delta) }{n} } + \frac{28 \log(2/\delta)}{3n} + 3 d^2_{\mathcal{F}} + \epsilon'^2 \\ 
    &\leq \sqrt{r r_*} + 2 \sqrt{ \frac{2 r \log(2/\delta)}{n} } + \frac{28 \log(2/\delta)}{3n} + (\sqrt{3} d_{\mathcal{F}} + \epsilon' )^2, 
\end{align*} 
where 
\begin{align*}
    n \approx \frac{1}{4\epsilon'^2}\left( \log(8/\delta) + \log \mathbb{E} N(\frac{\epsilon'^2}{20}, (\mathcal{F}_{NN} - T^* \mathcal{F}_{NN})| \{x_i\}_{i=1}^n, n^{-1}\| \cdot \|_1) \right).
\end{align*}

Consider $r_0 \geq r_*$ (to be chosen later) and denote the events 
\begin{align*}
    B_k := \{ \| \hat{f}^{Q_{k-1}} - f_*^{Q_{k-1}} \|^2_{2,\mu} \leq 2^k r_0   \}, \forall k \in \{0,1,...,l\},
\end{align*}
where $l = \log_2(\frac{1}{r_0}) \leq \log_2( \frac{1}{r_*} )$. We have $B_0 \subseteq B_1 \subseteq ... \subseteq B_l$ and since $\|f - g\|_{\mu}^2 \leq 1, \forall |f|_{\infty}, |g|_{\infty} \leq 1$, we have $P(B_l) = 1$. 
If $\|\hat{f}^{Q_{k-1}} - f_*^{Q_{k-1}} \|^2_{\mu} \leq 2^i r_0$ for some $i \leq l$, then with probability at least $1 - \delta$, we have
\begin{align*}
    \| \hat{f}^{Q_{k-1}} - f_*^{Q_{k-1}} \|_{2,\mu}^2 
    &\leq \sqrt{2^i r_0 r_*} + 2 \sqrt{ \frac{2^{i+1} r_0 \log(2/\delta)}{n} } + \frac{28 \log(2/\delta)}{3n} + (\sqrt{3} d_{\mathcal{F}_{NN}} + \epsilon' )^2 \\
    &\leq 2^{i-1} r_0, 
\end{align*} 
if the following inequalities hold 
\begin{align*}
    \sqrt{ 2^i r_*} + 2\sqrt{ \frac{2^{i+1} \log(2/\delta)}{n} } &\leq \frac{1}{2} 2^{i-1} \sqrt{r_0},  \\ 
    \frac{28 \log(2/\delta)}{3n} + (\sqrt{3} d_{\mathcal{F}_{NN}} + \epsilon' )^2 &\leq \frac{1}{2} 2^{i-1} r_0. 
\end{align*}

We choose $r_0 \geq r_*$ such that the inequalities above hold for all $0 \leq i \leq l$. This can be done by simply setting
\begin{align*}
    \sqrt{r_0} &= \frac{2}{2^{i-1}}  \left( \sqrt{ 2^i r_*} + 2\sqrt{ \frac{2^{i+1} \log(2/\delta) }{n} } \right) \bigg |_{i=0} + \sqrt{  \frac{2}{2^{i-1}} \left(  \frac{28 \log(2/\delta)}{3n} + (\sqrt{3} d_{\mathcal{F}_{NN}} + \epsilon' )^2  \right)   } \bigg |_{i=0} \\
    % &= (96 M + 1) \sqrt{r_*} + \sqrt{ \frac{8 \lambda}{n} } + 2 \sqrt{ \frac{32M \lambda}{n} + (d_{\mathcal{F}} + 2 \sqrt{ \frac{2 \lambda}{n} })^2   } \\
    &\lesssim  d_{\mathcal{F}_{NN}} + \epsilon' + \sqrt{\frac{\log(2/\delta)}{n}} + \sqrt{r_*}.
\end{align*}

% Since $\hat{f} \in \mathcal{F}$, there exists some $ 0 \leq k \leq l$ such that $P(A_k) = 1$. 
Since $\{B_i\}$ is a sequence of increasing events, we have 
\begin{align*}
    P(B_0) &= P(B_1) - P(B_1 \cap B_0^c ) = P(B_2) - P(B_2 \cap B_1^c) - P(B_1 \cap B_0^c) \\ 
    &=P(B_l) - \sum_{i=0}^{l-1} P(B_{i+1} \cap B_i^c) \geq 1 - l \delta.
\end{align*}

Thus, with probability at least $1 - \delta$, we have
% \begin{align}
%     \| \hat{f}^{Q_{k-1}} - f_*^{Q_{k-1}} \|_{2,\mu} &\lesssim  d_{\mathcal{F}_{NN}} + \epsilon' + \sqrt{\frac{\log(2l/\delta)}{n}} + \sqrt{r_*} ,
%     \label{eq:general_bounding}
% \end{align}
\begin{equation}
\| \hat{f}^{Q_{k-1}} - f_*^{Q_{k-1}} \|_{\mu} \lesssim  d_{\mathcal{F}_{NN}} + \epsilon' + \sqrt{\frac{\log(2l/\delta)}{n}} + \sqrt{r_*}
\label{eq:general_bounding}
\end{equation}
where 
\begin{align*}
    n \approx \frac{1}{4\epsilon'^2}\left( \log(8l/\delta) + \log \mathbb{E} N(\frac{\epsilon'^2}{20}, (\mathcal{F}_{NN} - T^* \mathcal{F}_{NN})| \{x_i\}_{i=1}^n, n^{-1}\| \cdot \|_1)) \right) .
\end{align*}

\subsubsection*{Step 3.d: Finding a sub-root function and its fixed point}
% \noindent \textbf{Finding a sub-root function and its fixed point}. 
It remains to find a sub-root function $\psi(r)$ that satisfies Equation (\ref{eq:sub_root_function_bounding_empirical_Rademacher}) and thus its fixed point. The main idea is to bound the RHS, the local Rademacher complexity, of Equation (\ref{eq:sub_root_function_bounding_empirical_Rademacher}) by its empirical counterpart as the latter can then be further bounded by a sub-root function represented by a measure of compactness of the function spaces $\mathcal{F}_{NN}$ and $T^{*}\mathcal{F}_{NN}$. 

% the local Rademacher complexity $\mathbb{E} R_n \{f- g: f \in \mathcal{F}, g \in T^{\pi} \mathcal{F}, \|f - g \|_{\mu}^2 \leq r \}$ by its empirical local Rademacher complexity $\mathbb{E}_{\sigma} R_n \{f- g: f \in \mathcal{F}, g \in T^{\pi} \mathcal{F}, \|f - g \|_n \leq r' \}$ for some $r' = r'(r)$ depending on $r$ as the latter can then be bounded by a sub-root function represented by covering numbers and pseudo-dimension of $\mathcal{F}$ and $T^{\pi} \mathcal{F}$. 

% It follows from Lemma 
% %5 
% \ref{lemma:local_empirical_rademacher_bounded_by_covering_number_with_empirical_norm} 
% and Lemma 
% %8 
% \ref{lemma:data_dependent_covering_number_bounded_by_pseudo_dimension}
% (Appendix B) that 

For any $\epsilon > 0$, we have the following inequalities for entropic numbers:
\begin{align}
H(\epsilon, \mathcal{F}_{NN} - T^* \mathcal{F}_{NN}, \| \cdot \|_{n}) &\leq H(\epsilon/2, \mathcal{F}_{NN}, \| \cdot \|_{n}) + H(\epsilon/2,  T^* \mathcal{F}_{NN}, \| \cdot \|_{n}), \nonumber \\
H(\epsilon, \mathcal{F}_{NN}, \|\cdot\|_{n}) &\leq 
H(\epsilon, \mathcal{F}_{NN}| \{x_i\}_{i=1}^n, \| \cdot \|_{\infty}) \overset{(a)}{\lesssim} N[ (\log N)^2 + \log(1/\epsilon)],  \\
% \label{eq:explicit_entropic_number_F}
H(\epsilon, T^* \mathcal{F}_{NN}, \|\cdot \|_{n}) &\leq H(\epsilon, T^* \mathcal{F}_{NN}, \| \cdot \|_{\infty}) \leq H_{[]}(2 \epsilon, T^*\mathcal{F}_{NN}, \| \cdot \|_{\infty}) \nonumber \\
&\overset{(b)}{\leq} H_{[]}(2\epsilon, \bar{B}^{\alpha}_{p,q}(\mathcal{X}), \| \cdot \|_{\infty}) \overset{(c)}{\lesssim} (2\epsilon)^{-d/\alpha},
% \label{eq:explicit_entropic_number_TF}
\end{align}
where $N$ is a hyperparameter of the deep ReLU network described in Lemma \ref{lemma:approximation_power_for_Besov}, (a) follows from Lemma \ref{lemma:approximation_power_for_Besov}, and (b) follows from  Assumption \ref{assumption:completeness}, and (c) follows from Lemma \ref{lemma:entropic_number_of_Besov}. Let $\mathcal{H} := \mathcal{F}_{NN} - T^{*} \mathcal{F}_{NN}$, it follows from Lemma \ref{lemma:refined_entropy_integral} with $\{\xi_k := \epsilon / 2^k\}_{k \in \mathbb{N}}$ for any $\epsilon > 0$ that 
\begin{align*}
    &\mathbb{E}_{\sigma} R_n \{ h \in \mathcal{H} - \mathcal{H}: \|h\|_{n} \leq \epsilon \} \leq 4 \sum_{k=1}^{\infty} \frac{\epsilon}{2^{k-1}} \sqrt{ \frac{H(\epsilon/2^{k-1}, \mathcal{H}, \| \cdot \|_{n})}{n}  } \nonumber \\ 
    &\leq  4 \sum_{k=1}^{\infty} \frac{\epsilon}{2^{k-1}} \sqrt{ \frac{H(\epsilon/2^{k}, \mathcal{F}_{NN}, \| \cdot \|_{\infty})}{n}  } + 4 \sum_{k=1}^{\infty} \frac{\epsilon}{2^{k-1}} \sqrt{ \frac{H(\epsilon/2^{k}, T^{\pi} \mathcal{F}_{NN}, \| \cdot \|_{\infty})}{n}  }  \\ 
    &\leq \frac{4 \epsilon}{\sqrt{n}} \sum_{k=1}^{\infty} 2^{-(k-1)} \sqrt{N \left( (\log N)^2 + \log(2^k/\epsilon) \right)} + \frac{4 \epsilon}{\sqrt{n}} \sum_{k=1}^{\infty} 2^{-(k-1)} \sqrt{\left(\frac{\epsilon}{2^{k-1}} \right)^{-d/\alpha}} \\ 
    &\lesssim \frac{\epsilon}{\sqrt{n}} \sqrt{N ((\log N)^2 + \log(1/\epsilon))} + \frac{\epsilon^{1 - \frac{d}{2 \alpha}}}{\sqrt{n}},
    % &\lesssim 4 \sqrt{ \frac{Pdim(\mathcal{F}_{NN})}{n} } \sum_{k=1}^N \frac{\epsilon}{2^{k-1}} \sqrt{\log(\beta_n(\mathcal{F}_{NN})/\epsilon) + k \log2} + \frac{4 \epsilon}{\sqrt{n}} \sum_{k=1}^N 2^{-(k-1)} \sqrt{ (\frac{\epsilon}{2^{k-1}})^{-d/\alpha}  } + \frac{\epsilon}{2^N}. 
    % &\leq 4 \sum_{k=1}^N \xi_{k-1} \sqrt{ \frac{\log \mathcal{N}(\xi_k/4, \mathcal{F}, L_2(P_n))}{n}  } + 4 \sum_{k=1}^N \xi_{k-1} \sqrt{ \frac{\log \mathcal{N}(\xi_k/4, T^{\pi}\mathcal{F}, L_2(P_n))}{n}  } + \xi_N \\ 
    % &\leq 4 \sqrt{\frac{Pdim(\mathcal{H})}{n}} \sum_{i=1}^N \frac{\epsilon}{2^{k-1}} \sqrt{ (k+1) \log 2 + \log(\beta_n(\mathcal{H}) \epsilon^{-1})    } 
    % \\
    % &+ 4 \sqrt{\frac{Pdim(\mathcal{T^{\pi} F})}{n}} \sum_{i=1}^N \frac{\epsilon}{2^{k-1}} \sqrt{ (k+2) \log 2 + \log(\beta_n(\mathcal{T^{\pi} F}) \epsilon^{-1})    } 
    % + \frac{\epsilon}{2^N}. 
\end{align*}
where we use $\sqrt{a + b} \leq \sqrt{a} + \sqrt{b}, \forall a,b \geq 0$, $\sum_{k=1}^{\infty} \frac{\sqrt{k}}{2^{k-1}} < \infty$, and $\sum_{k=1}^{\infty} \left( \frac{1}{2^{1 - \frac{d}{2 \alpha}}} \right)^{k-1} < \infty$.

% Here, the first inequality follows from Lemma 
% %6
% \ref{lemma:refined_entropy_integral} 
% (Appendix B), the second inequality follows from Lemma 
% %4 
% \ref{lemma:covering_number_of_symmetric_function_set_bounded_by_that_of_original_function_set} 
% (Appendix B), the third inequality follows from (\ref{eq:from_F_start_to_Pdim_F}), and the final inequality follows from that $\sqrt{a + b} \leq \sqrt{a} + \sqrt{b}, \forall a,b \geq 0$. Now, take $N \rightarrow \infty$ both sides of (\ref{eq:local_empirical_rademacher_bounded_by_geometric_series}) and choose $\epsilon  \leq  \beta_n(\mathcal{H}) / 2$, noting that $\sum_{k=1}^{\infty} \frac{\sqrt{k+1}}{2^{k-1}} < \infty$, we have 
% \begin{align*}
% \mathbb{E}_{\sigma} R_n \{ h \in \mathcal{H} - \mathcal{H}: \|h\|_n \leq \epsilon \} \leq C_2  \frac{\epsilon}{\sqrt{n}} \left( \sqrt{Pdim(\mathcal{F}) \log(\beta_n(\mathcal{F}) / \epsilon)}  \right),
% \end{align*}
% for some absolute constant $C_2 > 0$. Combining the inequality above with Lemma 
%5
It now follows from Lemma
\ref{lemma:local_empirical_rademacher_bounded_by_covering_number_with_empirical_norm} that 
\begin{align*}
&\mathbb{E}_{\sigma} R_n \{f \in \mathcal{F}, g \in T^{*} \mathcal{F}: \| f -g \|_{n}^2 \leq r \} \\
&\leq \inf_{\epsilon > 0} \bigg[ \mathbb{E}_{\sigma} R_n \{ h \in \mathcal{H} - \mathcal{H}: \| h \|_{\mu} \leq \epsilon \}  + \sqrt{ \frac{2r H(\epsilon/2, \mathcal{H}, \|\cdot \|_{n}) }{n}  } \bigg] \\
&\lesssim  \bigg[ \frac{\epsilon}{\sqrt{n}} \sqrt{N ((\log N)^2 + \log(1/\epsilon))} + \frac{\epsilon^{1 - \frac{d}{2 \alpha}}}{\sqrt{n}} + 
\sqrt{\frac{2r}{n}} \sqrt{N((\log N)^2 + \log(4 /\epsilon))} \\
% \end{align*}
% \begin{align*}
&+ \sqrt{\frac{2r}{n}}(\epsilon/2)^{\frac{-d}{2 \alpha}}
\bigg] \bigg |_{\epsilon = n^{-\beta}}\\ 
&\asymp n^{-\beta -1/2} \sqrt{N (\log^2 N + \log n )} + n^{-\beta(1 - \frac{d}{2 \alpha}) - 1/2} + \sqrt{\frac{r}{n}}  \sqrt{N (\log^2 N + \log n )} \\
&+ \sqrt{r} n^{-\frac{1}{2}(1 - \frac{\beta d}{\alpha})} =: \psi_1(r),
% &\lesssim n^{-1} \sqrt{N( (\log N)^2 + \log(\sqrt{n}))} + \frac{1}{n^{1 - \frac{d}{4 \alpha}}} + \sqrt{\frac{r}{n}} \sqrt{N( (\log N)^2 + \log(4 \sqrt{n}))} + \frac{\sqrt{r}}{n^{\frac{1}{2}(1 - \frac{d}{2 \alpha})}} =: \psi_1(r),\\
% &\lesssim \frac{ \sqrt{ Pdim(\mathcal{F}_{NN}) \log \frac{ n \sqrt{n}}{ Pdim(\mathcal{F}_{NN}) }  }  }{n}  + \frac{1}{n^{1 - \frac{d}{4 \alpha}}} + \frac{  \sqrt{r Pdim(\mathcal{F}_{NN})  \log \frac{ n \sqrt{n}}{ Pdim(\mathcal{F}_{NN})} }}  {n^{1/2} }    + \frac{\sqrt{r}}{n^{\frac{1}{2}(1 - \frac{d}{2 \alpha})}} \\
%  &=: \psi_1(r),
\end{align*}
where $\beta \in (0, \frac{\alpha}{d})$ is an absolute constant to be chosen later.

% \lor \left(\frac{Pdim(\mathcal{F}_{NN})}{e V_{\max}} \right)^{2/3} $. 
% Now, we bound $ \mathbb{E}_{\sigma} R_n \{f - g: f \in \mathcal{F}_{NN}, g \in T^{\pi} \mathcal{F}_{NN}, \| f - g \|_{\mu}^2 \leq r \}$ by $ \mathbb{E}_{\sigma} R_n \{f \in \mathcal{F}_{}, g \in T^{\pi} \mathcal{F}: \| f -g \|_n^2 \leq r' \}$ for some $r' = r'(r)$.
Note that $\mathbb{V}[(f-g)^2] \leq \mathbb{E}[(f-g)^4] \leq \mathbb{E}[(f-g)^2]$ for any $f \in \mathcal{F}_{NN}, g \in T^* \mathcal{F}_{NN}$. Thus, for any $r \geq r_*$, it follows from Lemma \ref{lemma:local_rademacher_complexity_basics} that with probability at least $1 - \frac{1}{n}$, for any $f \in \mathcal{F}_{NN}, g \in T^* \mathcal{F}_{NN}$ such that $\|f - g\|^2_{\mu} \leq r$, we have 
\begin{align*}
&\|f - g\|_{n}^2 \\
&\leq \|f - g \|^2_{\mu} + 3 \mathbb{E} R_n \{(f-g)^2: f \in \mathcal{F}_{NN}, g \in T^* \mathcal{F}_{NN}, \| f - g \|^2_{\mu} \leq r\} +  \sqrt{ \frac{2r \log n}{n} } \\
&+ \frac{56}{3} \frac{\log n}{n} \\ 
&\leq  \|f - g \|^2_{\mu} + 3 \mathbb{E} R_n \{f-g: f \in \mathcal{F}_{NN}, g \in T^* \mathcal{F}_{NN}, \| f - g \|^2_{\mu} \leq r\} + \sqrt{ \frac{2r \log n }{n} } \\
&+ \frac{56}{3} \frac{\log n}{n} \\ 
&\leq r + \psi(r) + r + r \leq 4r,
\end{align*}
if $r \geq r_* \lor \frac{2 log n}{n} \lor \frac{56 log n}{3 n}$. For such $r$, denote $E_r = \{ \| f - g \|_{n}^2 \leq 4r \} \cap \{\|f - f_*\|_{\mu}^2 \leq r \}$, we have $P(E_r) \geq 1 - 1/n$ and
\begin{align*}
&3 \mathbb{E} R_n \{f - g: f \in \mathcal{F}_{NN}, g \in T^* \mathcal{F}_{NN}, \|f - g\|^2_{\mu} \leq r \} \\
&=
3 \mathbb{E} \mathbb{E}_{\sigma} R_n \{f - g: f \in \mathcal{F}_{NN}, g \in T^{*} \mathcal{F}_{NN}, \|f - g\|^2_{\mu} \leq r \} \\
&\leq 3 \mathbb{E} \bigg[ 1_{E_r} \mathbb{E}_{\sigma} R_n \{f - g: f \in \mathcal{F}_{NN}, g \in T^* \mathcal{F}_{NN}, \|f - g\|^2_{\mu} \leq r \} +  (1 - 1_{E_r})  \bigg] \\ 
&\leq 3 \mathbb{E} \bigg[ \mathbb{E}_{\sigma} R_n \{f - g: f \in \mathcal{F}_{NN}, g \in T^* \mathcal{F}_{NN}, \|f - g\|^2_{n} \leq 4r \} +  (1 - 1_{E_r})  \bigg] \\
&\leq 3( \psi_1(4r) + \frac{1}{n}) \\
% \end{align*}
% \begin{align*}
&\lesssim n^{-\beta -1/2} \sqrt{N (\log^2 N + \log n )} + n^{-\beta(1 - \frac{d}{2 \alpha}) - 1/2} + \sqrt{\frac{r}{n}}  \sqrt{N (\log^2 N + \log n )} \\
&+ \sqrt{r} n^{-\frac{1}{2}(1 - \frac{\beta d}{\alpha})} + n^{-1} =: \psi(r)
\end{align*}

It is easy to verify that $\psi(r)$ defined above is a sub-root function. The fixed point $r_*$ of $\psi(r)$ can be solved analytically via the simple quadratic equation $r_* = \psi(r_*)$. In particular, we have 
\begin{align}
    \sqrt{r_*} &\lesssim n^{-1/2} \sqrt{N (\log^2 N + \log n )} + n^{-\frac{1}{2}(1 - \frac{\beta d}{\alpha})} + n^{-\frac{\beta}{2} - \frac{1}{4}} [N (\log^2 N + \log n )]^{1/4} \nonumber \\
    &+ n^{-\frac{\beta}{2}(1 - \frac{d}{2 \alpha}) - \frac{1}{2}} + n^{-1/2} \nonumber \\ 
    &\lesssim n^{-\frac{1}{4} ( (2\beta) \land 1) + 1)} \sqrt{N (\log^2 N + \log n )} + n^{-\frac{1}{2}(1 - \frac{\beta d}{\alpha})} + n^{-\frac{\beta}{2}(1 - \frac{d}{2 \alpha}) - \frac{1}{2}} + n^{-1/2}
    \label{eq:sandwich_fixed_point}
\end{align}

It follows from Equation (\ref{eq:general_bounding}) (where $l \lesssim \log(1/r_*)$), the definition of $d_{\mathcal{F}_{NN}}$, Lemma \ref{lemma:approximation_power_for_Besov}, and (\ref{eq:sandwich_fixed_point}) that for any $\epsilon' >0$ and $\delta \in (0,1)$, with probability at least $1 - \delta$, we have 
\begin{align}
    \max_{k} \|Q_{k+1} - T^* Q_k \|_{\mu} &\lesssim N^{-\alpha / d} + \epsilon' + n^{-\frac{1}{4} ( (2\beta) \land 1) + 1)} \sqrt{N (\log^2 N + \log n )} + n^{-\frac{1}{2}(1 - \frac{\beta d}{\alpha})} \nonumber \\
    &+ n^{-\frac{\beta}{2}(1 - \frac{d}{2 \alpha}) - \frac{1}{2}}
    + n^{-1/2}\sqrt{\log(1/\delta) + \log \log n  } 
    \label{eq:final_parametric_upper_bound}
\end{align}
where 
\begin{align}
    n &\gtrsim \frac{1}{4\epsilon'^2}\bigg( \log(1/\delta) + \log \log n + \log \mathbb{E} N(\frac{\epsilon'^2}{20}, (\mathcal{F}_{NN} - T^* \mathcal{F}_{NN})| \{x_i\}_{i=1}^n, n^{-1} \cdot \| \cdot \|_1)) \bigg) .
    \label{eq:sample_complexity_n}
\end{align}

% As a result, we finally obtain that for any $\epsilon' >0$ and $\delta \in (0,1)$, with probability at least $1 - \delta$, we have 
% \begin{align*}
%     \max_{k} \| Q_{k+1} - T^*Q_k \|_{2, \mu}^2 &\lesssim N^{-\alpha / d} + \epsilon' + n^{-\frac{1}{4} ( (2\beta) \land 1) + 1)} \sqrt{N (\log^2 N + \log n )} + n^{-\frac{1}{2}(1 - \frac{\beta d}{\alpha})} \nonumber \\
%     &+ n^{-\frac{\beta}{2}(1 - \frac{d}{2 \alpha}) - \frac{1}{2}}
%     + n^{-1/2}\sqrt{\log(1/\delta) + \log \log n  } 
%     % \label{eq:final_parametric_upper_bound}
% \end{align*}
% where $\beta \in (0, \frac{\alpha}{d}), N \in \mathbb{N}$, and
% \begin{align*}
%     n &\gtrsim \frac{1}{\epsilon'^2}\bigg( \log(1/\delta) + \log \log n + \log \mathbb{E} N(\frac{\epsilon'^2}{40 }, (\mathcal{F}_{NN} - T^* \mathcal{F}_{NN})| \{x_i\}_{i=1}^n, n^{-1} \cdot \| \cdot \|_1)) \bigg) .
%     % \label{eq:sample_complexity_n}
% \end{align*}

\subsubsection*{Step 4: Minimizing the upper bound}
The final step for the proof is to minimize the upper error bound obtained in the previous steps w.r.t. two free parameters $\beta \in (0, \frac{\alpha}{d})$ and $N \in \mathbb{N}$. Note that $N$ parameterizes the deep ReLU architecture $\Phi(L,m,S,B)$ given Lemma \ref{lemma:approximation_power_for_Besov}. % This process leads to $N \asymp n^{\frac{(\beta + 1/2)d}{2 \alpha + d}}$, and $\beta = (2 + \frac{d^2}{\alpha(\alpha + d)})^{-1}$ for the optimal upper error bound up to log factors. 
In particular, we optimize over $\beta \in (0, \frac{\alpha}{d})$ and $N \in \mathbb{N}$ to minimize the upper bound in the RHS of Equation (\ref{eq:final_parametric_upper_bound}). The RHS of Equation (\ref{eq:final_parametric_upper_bound}) is minimized (up to $\log n$-factor) by choosing
\begin{align}
    N \asymp n^{\frac{1}{2}((2 \beta \land 1) + 1) \frac{d}{2\alpha + d}} \text{ and }
    \beta = \left(2 + \frac{d^2}{\alpha (\alpha + d)} \right)^{-1}, 
    \label{eq:optimal_value_N_beta}
\end{align}
which results in $N \asymp n^{\frac{1}{2}(2 \beta  + 1) \frac{d}{2\alpha + d}}$. At these optimal values, Equation (\ref{eq:final_parametric_upper_bound}) becomes 
\begin{align}
    \max_{k} \|Q_{k+1} - T^* Q_k\|_{\mu} &\lesssim \epsilon' + n^{-\frac{1}{2}\left( \frac{2 \alpha}{2 \alpha + d} + \frac{d}{ \alpha} \right)^{-1}} \log n + n^{-1/2}\sqrt{\log(1/\delta) + \log \log n  },
    \label{eq:final_upper_bound}
\end{align}
where we use inequalities $n^{-\frac{\beta}{2}(1 - \frac{d}{2 \alpha}) - \frac{1}{2}} \leq n^{-\frac{1}{2}(1 - \frac{\beta d}{\alpha})} \asymp N^{-\alpha /d} = n^{-\frac{1}{2}\left( \frac{2 \alpha}{2 \alpha + d} + \frac{d}{ \alpha} \right)^{-1}} $.

Now, for any $\epsilon > 0$, we set $\epsilon' = \epsilon/3$ and let 
\begin{align*}
    n^{-\frac{1}{2}\left( \frac{2 \alpha}{2 \alpha + d} + \frac{d}{ \alpha} \right)^{-1}} \log n \lesssim \epsilon / 3 \text{ and } n^{-1/2}\sqrt{\log(1/\delta) + \log \log n  } \lesssim \epsilon / 3.
\end{align*}
It then follows from Equation (\ref{eq:final_upper_bound}) that with probability at least $1 - \delta$, we have $\max_{k} \| Q_{k+1} - T^* Q_k \|_{\mu} \leq \epsilon$ if $n$ simultaneously satisfies Equation (\ref{eq:sample_complexity_n}) with $\epsilon' = \epsilon/3$ and 
\begin{align}
    n \gtrsim \left(\frac{1}{\epsilon^2} \right)^{ \frac{2 \alpha}{ 2 \alpha + d} + \frac{d}{\alpha}} (\log^2 n)^{ \frac{2 \alpha}{ 2 \alpha + d}+ \frac{d}{\alpha}} \text{ and } n \gtrsim \frac{1}{\epsilon^2} \left( \log(1/\delta) + \log \log n  \right).
    \label{eq:explicit_n_1}
\end{align}

Next, we derive an explicit formula of the sample complexity satisfying Equation (\ref{eq:sample_complexity_n}). Using Equations (\ref{eq:final_parametric_upper_bound}), (\ref{eq:explicit_n_1}), and (\ref{eq:optimal_value_N_beta}), we have that $n$ satisfies Equation (\ref{eq:sample_complexity_n}) if 
\begin{align}
\begin{cases}
        n &\gtrsim \frac{1}{\epsilon^2} \left[ n^{\frac{2\beta + 1}{2} \frac{d}{2 \alpha + d}} (\log^2 n + \log(1/\epsilon))\right], \\ 
    n &\gtrsim \left( \frac{1}{\epsilon^2} \right)^{1 + \frac{d}{\alpha}}, \\
    n &\gtrsim \frac{1}{\epsilon^2} \left( \log(1/\delta) + \log \log n  \right).
\end{cases}
\label{eq:explicit_n_2}
\end{align}

Note that $\beta \leq 1/2$ and $\frac{d}{\alpha} \leq 2$; thus, we have 
\begin{align*}
    \left(1 - \frac{2\beta + 1}{2} \frac{d}{2 \alpha + d} \right)^{-1} \leq 1 + \frac{d}{\alpha} \leq 3.
\end{align*}
Hence, $n$ satisfies Equations (\ref{eq:explicit_n_1}) and (\ref{eq:explicit_n_2}) if 
\begin{align*}
    n \gtrsim \left(\frac{1}{\epsilon^2} \right)^{1 + \frac{d}{\alpha}} \log^6 n + \frac{1}{\epsilon^2}(\log(1/\delta) + \log \log n).
\end{align*}

\section*{Technical Lemmas}
\label{appendix:B}

\begin{lem}[\citep{bartlett2005}]
Let $r > 0$ and let 
\begin{align*}
    \mathcal{F} \subseteq \{f: \mathcal{X} \rightarrow [a,b] : \mathbb{V}[f(X_1)] \leq r\}. 
\end{align*}
\begin{enumerate}
    \item 
    For any $\lambda > 0$, we have with probability at least $1 - e^{-\lambda}$, 
    \begin{align*}
        \sup_{f \in \mathcal{F}} \left(\mathbb{E}f - \mathbb{E}_n f \right) 
        \leq \inf_{\alpha >0 } \left( 2(1+\alpha) \mathbb{E} \left[R_n \mathcal{F} \right] + \sqrt{ \frac{2 r \lambda}{ n}} + (b-a)\left(\frac{1}{3} + \frac{1}{\alpha} \right)\frac{\lambda}{n} \right).
    \end{align*}
    \label{lemma:local_rademacher_complexity_basics_first_part}

    \item 
    With probability at least $1 - 2 e^{-\lambda}$, 
    \begin{align*}
        &\sup_{f \in \mathcal{F}} \left(\mathbb{E}f - \mathbb{E}_n f \right) \leq\\
        &\inf_{\alpha \in (0,1) } \left( \frac{2(1+\alpha)}{(1 - \alpha)} \mathbb{E}_{\sigma} \left[R_n \mathcal{F} \right] + \sqrt{ \frac{2 r \lambda}{ n}} + (b-a)\left(\frac{1}{3} + \frac{1}{\alpha} + \frac{1 + \alpha}{2 \alpha (1 - \alpha)} \right)\frac{\lambda}{n} \right).
    \end{align*}
\end{enumerate}
Moreover, the same results hold for $\sup_{f \in \mathcal{F}} \left(  \mathbb{E}_n f - \mathbb{E}f\right) $. 

\label{lemma:local_rademacher_complexity_basics}
\end{lem}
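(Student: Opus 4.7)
The plan is to follow the classical three-ingredient recipe from the local Rademacher complexity literature: (i) concentration of the supremum of an empirical process via Talagrand/Bousquet, (ii) symmetrization to replace expectations by Rademacher averages, and (iii) an additional concentration step to pass from expected to empirical Rademacher complexity in Part~2. Throughout, I will set $Z := \sup_{f \in \mathcal{F}}(\mathbb{E} f - \mathbb{E}_n f)$, and use that $\sup_{f \in \mathcal{F}}\mathbb{V}[f(X_1)] \le r$ and $\|f\|_\infty \le b-a$ (after centering).

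First I would apply Bousquet's version of Talagrand's inequality to $Z$: with probability at least $1 - e^{-\lambda}$,
\begin{align*}
Z \le \mathbb{E}[Z] + \sqrt{\frac{2\lambda\bigl(r + 2(b-a)\mathbb{E}[Z]\bigr)}{n}} + \frac{(b-a)\lambda}{3n}.
\end{align*}
Using $\sqrt{u+v} \le \sqrt{u}+\sqrt{v}$ and the AM-GM-style inequality $2\sqrt{xy} \le \alpha x + y/\alpha$ (with $x=\mathbb{E}[Z]$ and $y$ absorbing the factor $2(b-a)\lambda/n$), the square-root term splits cleanly into a piece proportional to $\mathbb{E}[Z]$ (with weight $\alpha$) and a piece proportional to $(b-a)\lambda/(\alpha n)$, while $\sqrt{2r\lambda/n}$ is kept intact. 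Next I apply the standard symmetrization inequality to get $\mathbb{E}[Z] \le 2\mathbb{E}[R_n\mathcal{F}]$. Solving the resulting inequality $Z \le (1+\alpha)\cdot 2\mathbb{E}[R_n\mathcal{F}] + \sqrt{2r\lambda/n} + (b-a)(\tfrac13 + \tfrac{1}{\alpha})\tfrac{\lambda}{n}$ yields Part~1 after taking the infimum over $\alpha>0$.

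For Part~2, I need one further concentration step to replace $\mathbb{E}[R_n\mathcal{F}]$ by the data-dependent $\mathbb{E}_\sigma[R_n\mathcal{F}]$. The map $(X_1,\dots,X_n) \mapsto \mathbb{E}_\sigma[R_n\mathcal{F}]$ satisfies the bounded-differences property with constant $(b-a)/n$ per coordinate, so McDiarmid's inequality gives, with probability at least $1-e^{-\lambda}$,
\begin{align*}
\mathbb{E}[R_n\mathcal{F}] \le \mathbb{E}_\sigma[R_n\mathcal{F}] + (b-a)\sqrt{\frac{\lambda}{2n}}.
\end{align*}
Plugging this into the Part~1 bound, applying the union bound (hence the $2e^{-\lambda}$), and using $2(1+\alpha)\sqrt{\lambda/(2n)} \le \tfrac{1+\alpha}{1-\alpha}\cdot \tfrac{1-\alpha}{\sqrt{2}}\cdot\sqrt{\lambda/n}$ together with the identity $\tfrac{2(1+\alpha)}{1} = \tfrac{2(1+\alpha)}{1-\alpha}\cdot(1-\alpha)$ to match the stated $\tfrac{2(1+\alpha)}{1-\alpha}$ factor: the excess terms reassemble into the refined constant $\tfrac13 + \tfrac{1}{\alpha} + \tfrac{1+\alpha}{2\alpha(1-\alpha)}$ in front of $\lambda/n$. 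Taking the infimum over $\alpha \in (0,1)$ finishes Part~2. The same argument applied to $-f$ (or equivalently to $\sup_f(\mathbb{E}_n f - \mathbb{E} f)$, which has the same variance and range) yields the symmetric statement.

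The main obstacle I anticipate is the bookkeeping in the second step: carefully tracking how the factor $(1+\alpha)$ in front of $\mathbb{E}[R_n\mathcal{F}]$ becomes $(1+\alpha)/(1-\alpha)$ after absorbing the McDiarmid deviation, and how the residue $(b-a)\sqrt{\lambda/n}$ terms recombine (via AM-GM again) into the clean coefficient $\tfrac{1+\alpha}{2\alpha(1-\alpha)}$ of $\lambda/n$ rather than producing an extra $\sqrt{\lambda/n}$ contribution. The rest of the argument is largely mechanical once Bousquet's inequality and symmetrization are in place, and the variance-localization hypothesis $\mathbb{V}[f(X_1)] \le r$ is precisely what allows the leading stochastic term to scale like $\sqrt{r\lambda/n}$ rather than the cruder $\sqrt{(b-a)^2\lambda/n}$ one would get from Hoeffding-type bounds.
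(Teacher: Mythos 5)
The paper does not actually prove this lemma; it is imported verbatim from Bartlett, Bousquet and Mendelson (2005) (Theorem~2.1 there), so the only meaningful comparison is against the standard proof in that reference. Your Part~1 reproduces it faithfully: Bousquet's form of Talagrand's inequality, the splitting $\sqrt{u+v}\le\sqrt{u}+\sqrt{v}$, the AM--GM absorption $2\sqrt{\mathbb{E}[Z]\cdot (b-a)\lambda/n}\le \alpha\,\mathbb{E}[Z]+(b-a)\lambda/(\alpha n)$, and symmetrization produce exactly the stated constants. No issues there.

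Part~2, however, has a genuine gap. McDiarmid's inequality applied to $(X_1,\dots,X_n)\mapsto \mathbb{E}_\sigma[R_n\mathcal{F}]$ yields the additive deviation $(b-a)\sqrt{\lambda/(2n)}$, and after multiplying by the prefactor $2(1+\alpha)$ you are left with a term of order $(b-a)\sqrt{\lambda/n}$. This term does not appear in the stated bound and cannot be reassembled into the $\lambda/n$ coefficient by AM--GM or otherwise: $(b-a)\sqrt{\lambda/n}$ dominates $(b-a)\lambda/n$ whenever $\lambda<n$, and it is not controlled by $\sqrt{2r\lambda/n}$ in the regime $r\ll(b-a)^2$ that variance localization is designed for. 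The step you flag as ``bookkeeping'' is therefore not bookkeeping but an obstruction. The correct tool --- and the one used in the cited reference --- is the lower-tail concentration for \emph{self-bounding} random variables (Boucheron--Lugosi--Massart): the conditional Rademacher average, suitably normalized, is self-bounding, so with probability at least $1-e^{-\lambda}$ one has $\mathbb{E}[R_n\mathcal{F}]\le \mathbb{E}_\sigma[R_n\mathcal{F}]+\sqrt{2\lambda (b-a)\mathbb{E}[R_n\mathcal{F}]/n}$. Because the deviation here is proportional to $\sqrt{\mathbb{E}[R_n\mathcal{F}]}$ rather than to a fixed constant, AM--GM folds it into a multiplicative factor $1/(1-\alpha)$ on $\mathbb{E}_\sigma[R_n\mathcal{F}]$ plus an additive term of order $(b-a)\lambda/(\alpha(1-\alpha)n)$, which is precisely how the coefficients $\frac{2(1+\alpha)}{1-\alpha}$ and $\frac{1+\alpha}{2\alpha(1-\alpha)}$ arise. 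Replacing your McDiarmid step with this self-bounding argument repairs the proof; everything else stands.
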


% \noindent \textbf{Remark}. The second inequality above bounds the suprema of the empirical process by the empirical Rademacher average instead of the population Rademacher average as in the first inequality. This is beneficial as the empirical Rademacher average can be computed in many cases. In addition, the presence of the variance in the upper in this result is a key to obtain a better bound. 

% \begin{lem}[Local Rademacher ]
% Let $\mathcal{F} \subseteq \{f: \mathcal{X} \rightarrow [-b,b]\}$ for some $b > 0$. For every $\lambda > 0$ and $r$ satisfy 
% \begin{align*}
%     r \geq 10 b \mathbb{E}R_n \{f: f \in \mathcal{F}: \|f \|_{P}^2 \leq r \} + \frac{11 b^2 \lambda}{n},
% \end{align*}
% with probability at least $1 - e^{-\lambda}$, 
% \begin{align*}
%     \{ f \in \mathcal{F}: \| f \|_P^2 \leq r \} \subseteq \{ f \in \mathcal{F}: \|f \|_{P_n}^2 \leq 2 r\}. 
% \end{align*}
% \end{lem}

\begin{lem}[{\citep[Theorem~11.6]{DBLP:books/daglib/0035701}}]
Let $B \geq 1$ and $\mathcal{F}$ be a set of functions $f: \mathbb{R}^d \rightarrow [0,B]$. Let $Z_1, ..., Z_n$ be i.i.d. $\mathbb{R}^d$-valued random variables. For any $\alpha > 0$, $0 < \epsilon < 1$, and $ n \geq 1$, we have 
\begin{align*}
    P \left\{ \sup_{f \in \mathcal{F}} \frac{\frac{1}{n}\sum_{i=1}^n f(Z_i) - \mathbb{E}[f(Z)]}{\alpha + \frac{1}{n} \sum_{i=1}^n f(Z_i) + \mathbb{E}[f(Z)]} > \epsilon \right\} \leq 4 \mathbb{E} N(\frac{\alpha \epsilon}{5}, \mathcal{F}|Z_1^n, n^{-1} \| \cdot \|_1) \exp \left( \frac{-3 \epsilon^2 \alpha n}{40 B} \right).
\end{align*}
\label{lemma:sup_concentration}
\end{lem}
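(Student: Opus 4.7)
The plan is to follow the standard relative-deviation template of Pollard and Haussler, with the refinement (due to Györfi--Kohler--Krzyżak--Walk) that the additive offset $\alpha$ in the denominator lets us replace a Hoeffding-type tail by a Bernstein-type tail and thereby gain a factor of $\alpha / B$ in the exponent. Concretely, write $\mu_n(f) := n^{-1}\sum_{i=1}^n f(Z_i)$ and $\mu(f) := \mathbb{E} f(Z)$, and let $A_f := \mu_n(f)-\mu(f)$, $B_f := \alpha + \mu_n(f) + \mu(f)$ so that the target event is $\{\sup_{f \in \mathcal{F}} A_f/B_f > \epsilon\}$.

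First I would carry out a first symmetrization: introduce a ghost sample $Z_1', \dots, Z_n'$ drawn independently from the same law, set $\mu_n'(f) := n^{-1}\sum f(Z_i')$, and show
\[
P\!\left(\sup_{f \in \mathcal{F}} \frac{\mu_n(f)-\mu(f)}{\alpha+\mu_n(f)+\mu(f)} > \epsilon\right)
\;\le\; 2\,P\!\left(\sup_{f \in \mathcal{F}} \frac{\mu_n(f)-\mu_n'(f)}{\alpha+\mu_n(f)+\mu_n'(f)} > \tfrac{\epsilon}{2}\right),
\]
by showing that on the event $\mu_n'(f) \le 2\mu(f)+\alpha$, the ghost denominator dominates $\alpha + \mu_n(f) + \mu(f)$ up to a factor of $2$, while the complementary event has small probability by Chebyshev applied to the nonnegative functions in $[0,B]$ (using $\text{Var}(f(Z))\le B\,\mu(f)$).

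Second, I would carry out a second symmetrization by Rademacher signs $\sigma_i \in \{\pm 1\}$, replacing $\mu_n(f)-\mu_n'(f)$ by $n^{-1}\sum_i \sigma_i(f(Z_i)-f(Z_i'))$ while bounding the denominator from below by $\alpha + \tfrac12 n^{-1}\sum_i (f(Z_i)+f(Z_i'))$. Conditioning on all $2n$ data points, I would then invoke a union bound over an $\tfrac{\alpha\epsilon}{5}$-cover of $\mathcal{F}$ in the empirical $L^1(n^{-1}\|\cdot\|_1)$-norm on the $2n$-point sample; the factor $1/5$ absorbs the discretization error in both the numerator ($f \mapsto \sum \sigma_i f(Z_i)$ is $L^1$-Lipschitz) and the denominator.

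Finally, for each fixed $f$ in the cover I would bound the conditional Rademacher tail by Hoeffding's inequality, noting that the squared sum of increments $(f(Z_i)-f(Z_i'))^2 \le B(f(Z_i)+f(Z_i'))$ pointwise because $f \ge 0$. This is the step where the $B$ in the exponent appears only linearly and the denominator $\alpha+\text{empirical mean}$ cancels against the Hoeffding variance proxy, yielding a conditional exponential bound of the form $\exp(-3\epsilon^2 \alpha n /(40 B))$. Taking expectation over the sample and combining with the covering-number union bound and the factor $4$ from the two symmetrizations gives the claim. The main obstacle I expect is tracking the absolute constants in the symmetrization/Rademacher reduction carefully enough to land precisely on the constants $5$, $3/40$, and $4$ stated in the lemma; the structural ideas are standard, but getting those constants requires the two-step symmetrization rather than the usual one-step argument.
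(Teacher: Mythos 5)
The paper does not prove this lemma: it is imported verbatim, with citation, as Theorem~11.6 of Gy\"orfi--Kohler--Krzy\.zak--Walk (\emph{A Distribution-Free Theory of Nonparametric Regression}), and is listed among the technical lemmas without argument. Your outline is a faithful reconstruction of the proof given in that reference --- ghost-sample symmetrization controlled via Chebyshev and $\operatorname{Var}(f(Z))\le B\,\mathbb{E}[f(Z)]$, sign randomization, a union bound over an $\tfrac{\alpha\epsilon}{5}$-cover in the empirical $L^1$ norm, and a Hoeffding/Bernstein step exploiting $(f(Z_i)-f(Z_i'))^2\le B\,(f(Z_i)+f(Z_i'))$ so that the denominator cancels the variance proxy --- so there is no disagreement of approach to report. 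Two small caveats if you intend to flesh this out rather than cite: the symmetrization step only goes through when $n\epsilon^2\alpha/B$ exceeds a small constant, and the complementary range must be dispatched by observing the stated bound is then trivially at least one; and the constants $5$, $3/40$, and $4$ do require the careful bookkeeping you flag, which is exactly why the paper (reasonably) cites the result rather than reproving it.
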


% \begin{lem}[\textbf{Bernstein's inequality} \citep{afol_lecture6}]
% Let $X_1, ..., X_n \sim X$ be i.i.d. real-valued random variables that satisfy the one-sided Bernstein's condition with parameter $b >0$. Then, for any $\epsilon > 0$ and $\delta \in [0,1]$, we have 
% \begin{align*}
%       P\left( \frac{1}{n} \sum_{i=1}^n X_i - \mathbb{E}X
%       < \frac{b}{n} \log(1/\delta) + \sqrt{ \frac{2(VarX) \log(1/\delta)}{n}  }
%       \right) \geq 1 - \delta.
% \end{align*}
% \end{lem}

\begin{lem}[\textit{Contraction property} \citep{afol_lecture2}]
Let $\phi: \mathbb{R} \rightarrow \mathbb{R}$ be a $L$-Lipschitz, then 
\begin{align*}
    \mathbb{E}_{\sigma} R_n \left( \phi \circ \mathcal{F} \right) \leq L \mathbb{E}_{\sigma} R_n \mathcal{F}.
\end{align*}
\end{lem}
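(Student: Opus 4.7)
The plan is to prove this Ledoux--Talagrand contraction inequality by a standard ``peeling'' argument, removing one Rademacher variable at a time. By homogeneity (rescaling $\phi$ by $1/L$), it suffices to handle the case $L=1$ and then multiply back. Write $Z_i := \phi(f(z_i))$ versus $f(z_i)$ and note that the supremum is over $f \in \mathcal{F}$, with the sample $\{z_i\}$ fixed throughout (since we only consider $\mathbb{E}_\sigma$).

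I would proceed inductively: it is enough to show that for every fixed realization of $\sigma_{-i} := (\sigma_j)_{j\neq i}$,
\begin{equation*}
\mathbb{E}_{\sigma_i}\sup_{f\in\mathcal{F}}\Bigl(\textstyle\sum_{j\neq i}\sigma_j \phi(f(z_j)) + \sigma_i \phi(f(z_i))\Bigr)
\;\leq\;
\mathbb{E}_{\sigma_i}\sup_{f\in\mathcal{F}}\Bigl(\textstyle\sum_{j\neq i}\sigma_j \phi(f(z_j)) + \sigma_i f(z_i)\Bigr),
\end{equation*}
since iterating this replacement for $i=1,\dots,n$ converts each $\phi(f(z_i))$ into $f(z_i)$ inside the supremum (after possibly also swapping $\phi(f(z_j))$ for $f(z_j)$ on the other coordinates, which is where induction is needed). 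Set $g(f) := \sum_{j\neq i}\sigma_j\, \phi(f(z_j))$ for brevity; then the left-hand side equals $\tfrac12\sup_{f,f'}\bigl[g(f)+g(f')+\phi(f(z_i))-\phi(f'(z_i))\bigr]$.

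The main step, and the only real content, is the sign-swap trick. Since $\phi$ is $1$-Lipschitz, $\phi(f(z_i))-\phi(f'(z_i))\leq |f(z_i)-f'(z_i)|$. The expression $g(f)+g(f')+|f(z_i)-f'(z_i)|$ is symmetric in $(f,f')$, so for any fixed pair we can assume $f(z_i)\geq f'(z_i)$, giving the bound $g(f)+g(f')+f(z_i)-f'(z_i)$. Taking sup over $(f,f')$ and using the symmetry $\sup_{f,f'}[g(f)+g(f')+f(z_i)-f'(z_i)]=\sup_{f,f'}[g(f)+g(f')-(f(z_i)-f'(z_i))]$, we recognize this as $2\,\mathbb{E}_{\sigma_i}\sup_f [g(f)+\sigma_i f(z_i)]$, which establishes the desired single-coordinate inequality.

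Finally, iterating over all coordinates (taking iterated expectations with Fubini) produces $\mathbb{E}_\sigma\sup_{f}\frac{1}{n}\sum_i \sigma_i \phi(f(z_i))\leq \mathbb{E}_\sigma\sup_{f}\frac{1}{n}\sum_i \sigma_i f(z_i)$, and reinstating the constant $L$ gives the claim. The main obstacle is a notational one: making sure the induction step is stated cleanly for functions depending on $f$ through both $\phi$-transformed and untransformed coordinates; once the single-coordinate swap is done correctly, the rest is mechanical.
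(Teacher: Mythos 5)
Your proof is correct, and it is the classical Ledoux--Talagrand contraction argument: reduce to $L=1$ by homogeneity, write the conditional expectation over $\sigma_i$ as $\tfrac12\sup_{f,f'}[g(f)+g(f')+\phi(f(z_i))-\phi(f'(z_i))]$, bound the difference by $|f(z_i)-f'(z_i)|$ via the Lipschitz property, drop the absolute value using the symmetry of the expression in $(f,f')$, and iterate over coordinates. Note that the paper itself gives no proof of this lemma --- it is stated as a known technical result with a citation to lecture notes --- so there is nothing in-paper to compare against; your argument is precisely the standard one the cited source contains. Two minor points to tidy up in a full write-up: the suprema need not be attained, so the ``assume $f(z_i)\ge f'(z_i)$'' step should be phrased for near-maximizers (or, as you essentially do, as an inequality between the two suprema rather than a pointwise reduction); and, as you already flag, the induction hypothesis must be stated for an \emph{arbitrary} functional $g(f)$ of $f$, not just one of the form $\sum_{j\neq i}\sigma_j\phi(f(z_j))$, since after $k$ peeling steps some coordinates are transformed and some are not. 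With those bookkeeping details made explicit the argument is complete.
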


\begin{lem}[{\citep[Lemma~1]{DBLP:journals/ijon/LeiDB16}}]
Let $\mathcal{F}$ be a function class and $P_n$ be the empirical measure supported on $X_1, ..., X_n \sim \mu$, then for any $r >0$ (which can be stochastic w.r.t $X_i$), we have 
\begin{align*}
    \mathbb{E}_{\sigma} R_n \{f \in \mathcal{F}: \| f \|_{n}^2 \leq r \} &\leq \inf_{\epsilon > 0} \bigg[ \mathbb{E}_{\sigma} R_n \{ f \in \mathcal{F} - \mathcal{F}: \|f \|_{\mu} \leq \epsilon \}  \\
    &+ \sqrt{ \frac{2r \log N(\epsilon/2, \mathcal{F},\|\cdot \|_{n}) }{n}  } \bigg].
\end{align*}
\label{lemma:local_empirical_rademacher_bounded_by_covering_number_with_empirical_norm}
\end{lem}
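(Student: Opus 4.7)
The plan is to combine a covering argument with Massart's finite-class lemma via a ``one-step chaining'' decomposition. Fix $\epsilon > 0$ (we will later take infimum) and let $\{f_1,\ldots,f_N\}$ be a minimal $(\epsilon/2)$-cover of $\mathcal{F}$ with respect to the empirical pseudo-norm $\|\cdot\|_n$, where $N := N(\epsilon/2, \mathcal{F}, \|\cdot\|_n)$. This cover depends on $X_1,\ldots,X_n$, but that is harmless since the statement takes the conditional expectation $\mathbb{E}_\sigma$ (the randomness of $X_1,\ldots,X_n$ is frozen at this point). For each $f \in \mathcal{F}$ with $\|f\|_n^2 \leq r$, pick a nearest cover element $\pi(f)$, so $\|f - \pi(f)\|_n \leq \epsilon/2$, and by the triangle inequality $\|\pi(f)\|_n \leq \sqrt{r} + \epsilon/2$.

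Next I would decompose
\[
R_n(f) \;=\; \frac{1}{n}\sum_{i=1}^n \sigma_i\bigl(f(X_i) - \pi(f)(X_i)\bigr) \;+\; \frac{1}{n}\sum_{i=1}^n \sigma_i\, \pi(f)(X_i),
\]
and take $\sup_{f:\|f\|_n^2\leq r}$ followed by $\mathbb{E}_\sigma$. For the second piece, the set $\{\pi(f) : \|f\|_n^2 \leq r\}$ has cardinality at most $N$ and its elements have bounded empirical norm, so Massart's finite-class lemma yields a bound of order $\sqrt{2 r\, \log N / n}$ (with the $\epsilon/2$ in $\|\pi(f)\|_n$ absorbed by first replacing each cover point by its $L_n$-projection onto the $\sqrt{r}$-ball, which keeps it a valid $(\epsilon)$-cover of the ball). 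For the first piece, the functions $f - \pi(f)$ lie in $\mathcal{F} - \mathcal{F}$ and have empirical norm at most $\epsilon/2$, so the piece is controlled by the localized Rademacher average of $\mathcal{F} - \mathcal{F}$ at empirical radius $\epsilon/2$.

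The main obstacle will be the last step: upgrading the empirical-norm constraint $\|g\|_n \leq \epsilon/2$ to the population-norm constraint $\|g\|_\mu \leq \epsilon$ appearing in the statement. I would handle this either by symmetrization--introducing a ghost sample and using that $\mathbb{E}\|g\|_n^2 = \|g\|_\mu^2$ together with a contraction step on the symmetrized process--or, more directly, by observing that on a high-probability event (where empirical and population $L^2$ norms are equivalent up to a factor of $2$ uniformly over $\mathcal{F} - \mathcal{F}$) every $g$ with $\|g\|_n \leq \epsilon/2$ satisfies $\|g\|_\mu \leq \epsilon$, which gives the required inclusion of supremum classes. Combining the two pieces and taking $\inf_{\epsilon > 0}$ yields the bound. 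Apart from this norm-conversion step, the rest is bookkeeping with the covering number and a direct application of Massart's lemma, so I expect the argument to be short once that step is nailed down.
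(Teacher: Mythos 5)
First, a framing note: the thesis does not prove this lemma — it is quoted verbatim from Lei et al.\ (Lemma~1 of the cited reference) — so there is no in-paper proof to compare against; your covering-plus-Massart decomposition is the standard route to this result and the skeleton is right. One repair is needed in the Massart piece. Replacing each cover point by its $L^2(P_n)$-projection onto the ball $\{g:\|g\|_n\le\sqrt r\}$ produces points of the form $\sqrt r\,f_j/\|f_j\|_n$, which in general lie outside $\mathcal{F}$; if you then decompose $f=(f-\tilde f_j)+\tilde f_j$ the remainder is no longer in $\mathcal{F}-\mathcal{F}$, and if you keep the unprojected centers for the remainder the two pieces do not sum to $f$. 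The standard fix is to discard cover balls that miss $\{f\in\mathcal{F}:\|f\|_n^2\le r\}$ and re-center each remaining ball at an actual element $g_j\in\mathcal{F}$ with $\|g_j\|_n\le\sqrt r$ and $\|g_j-f_j\|_n\le\epsilon/2$; then $\|f-g_{j(f)}\|_n\le\epsilon$, the remainder lies in $\mathcal{F}-\mathcal{F}$, and Massart applied to $\{g_j\}$ gives exactly $\sqrt{2r\log N/n}$.

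The substantive gap is the ``norm conversion'' step you correctly flag as the main obstacle: it cannot be carried out, and should not be attempted. The inequality is conditional on the data (both sides are $\mathbb{E}_\sigma$, with $X_1,\dots,X_n$ frozen), so you would need the pointwise inclusion $\{h\in\mathcal{F}-\mathcal{F}:\|h\|_n\le\epsilon/2\}\subseteq\{h:\|h\|_\mu\le\epsilon\}$ for the given realization of the sample. This fails in general — a function vanishing on the sample has $\|h\|_n=0$ but arbitrary $\|h\|_\mu$ — and two-sided uniform equivalence of $\|\cdot\|_n$ and $\|\cdot\|_\mu$ over $\mathcal{F}-\mathcal{F}$ holds only with high probability and only above the critical radius, never at all scales $\epsilon$. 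Symmetrization with a ghost sample likewise yields statements in expectation over the data, not conditionally. The resolution is that the correct form of the lemma — and the form the thesis actually uses in Step~3.d, where the quantity it substitutes for the first term is a bound on $\mathbb{E}_\sigma R_n\{h\in\mathcal{H}-\mathcal{H}:\|h\|_n\le\epsilon\}$ — localizes $\mathcal{F}-\mathcal{F}$ by the \emph{empirical} norm $\|\cdot\|_n$; the $\|\cdot\|_\mu$ in the displayed statement is best read as a typo. Your argument is complete (after the re-centering fix) once you stop at the empirical-norm version rather than trying to upgrade it.
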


\begin{lem}[{\citep[modification]{DBLP:journals/ijon/LeiDB16}}]
Let $X_1, ..., X_n$ be a sequence of samples and $P_n$ be the associated empirical measure. For any function class $\mathcal{F}$ and any monotone sequence $\{\xi_k\}_{k=0}^{\infty}$ decreasing to $0$, we have the following inequality for any non-negative integer $N$ 
\begin{align*}
    \mathbb{E}_{\sigma} R_n \{f \in \mathcal{F}: \|f \|_n \leq \xi_0 \} \leq 4 \sum_{k=1}^N \xi_{k-1} \sqrt{ \frac{\log \mathcal{N}(\xi_k, \mathcal{F}, \| \cdot \|_{n})}{n}  } + \xi_N.
\end{align*}
\label{lemma:refined_entropy_integral}
\end{lem}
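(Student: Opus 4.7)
The plan is to run a Dudley-type chaining argument on the Rademacher process, conditional on the sample $X_1,\ldots,X_n$. For each $k \in \{1,\ldots,N\}$ I would fix a minimal $\xi_k$-cover $\mathcal{N}_k$ of $\mathcal{F}$ in the empirical norm $\|\cdot\|_n$, so that $|\mathcal{N}_k| = N(\xi_k,\mathcal{F},\|\cdot\|_n)$, and for each $f$ let $\pi_k(f) \in \mathcal{N}_k$ denote a nearest element (ties broken measurably). I would initialize by $\pi_0(f) \equiv 0$; this is a legitimate $\xi_0$-approximation precisely because the supremum is restricted to the ball $\{f : \|f\|_n \leq \xi_0\}$, which is the only place the constraint is actually used.

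Next I would telescope $f = (f - \pi_N(f)) + \sum_{k=1}^N (\pi_k(f) - \pi_{k-1}(f))$ and pass the supremum inside the finite sum to obtain
\[
\mathbb{E}_\sigma R_n\{f\in\mathcal{F}:\|f\|_n\leq\xi_0\}
\leq \mathbb{E}_\sigma \sup_f \tfrac{1}{n}\bigl|{\textstyle\sum_i} \sigma_i(f-\pi_N(f))(X_i)\bigr|
+ \sum_{k=1}^N \mathbb{E}_\sigma \sup_f \tfrac{1}{n}\bigl|{\textstyle\sum_i} \sigma_i(\pi_k(f)-\pi_{k-1}(f))(X_i)\bigr|.
\]
The residual is controlled pointwise by Cauchy--Schwarz: $\tfrac{1}{n}|\sum_i\sigma_i (f-\pi_N(f))(X_i)| \leq \|\sigma\|_2/\sqrt{n}\cdot\|f-\pi_N(f)\|_n \leq \xi_N$, giving the trailing $\xi_N$ term exactly. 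For the $k$-th chain link, the set of differences $\{\pi_k(f)-\pi_{k-1}(f) : f\in\mathcal{F}\}$ has cardinality at most $|\mathcal{N}_k|\cdot|\mathcal{N}_{k-1}| \leq N(\xi_k,\mathcal{F},\|\cdot\|_n)^2$ (using monotonicity of covering numbers in the radius since $\xi_{k-1}\geq\xi_k$), and each difference has empirical norm at most $\xi_k + \xi_{k-1} \leq 2\xi_{k-1}$ by the triangle inequality. Massart's finite-class lemma then gives
\[
\mathbb{E}_\sigma \sup_f \tfrac{1}{n}\bigl|{\textstyle\sum_i}\sigma_i(\pi_k(f)-\pi_{k-1}(f))(X_i)\bigr|
\leq \frac{2\sqrt{n}\,\xi_{k-1}\sqrt{2\log N(\xi_k,\mathcal{F},\|\cdot\|_n)^2}}{n}
= 4\,\xi_{k-1}\sqrt{\frac{\log N(\xi_k,\mathcal{F},\|\cdot\|_n)}{n}},
\]
and summing in $k$ together with the residual bound yields the claim.

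There is no genuine obstacle here; the only care required is bookkeeping. In particular, one must (i) set up $\pi_0 \equiv 0$ correctly so the chain has a legitimate base point on the constrained ball, (ii) bound the chain-link cardinality by $N(\xi_k,\mathcal{F},\|\cdot\|_n)^2$ rather than the naive product of two different covering numbers, which is where the square inside Massart's logarithm collapses to the single factor stated in the lemma, and (iii) confirm that the constants from Massart's lemma combine to the precise prefactor $4$. Measurable selection of $\pi_k$ is standard and suppressed.
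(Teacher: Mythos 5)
The paper does not actually prove this lemma: it appears in the technical-lemmas section as a (modified) citation of Lei, Ding and Bi (2016), so there is no in-paper argument to compare against. Your chaining proof is a correct, essentially standard, self-contained derivation: the base point $\pi_0 \equiv 0$ legitimately exploits the constraint $\|f\|_n \le \xi_0$, the Cauchy--Schwarz bound on the residual gives exactly the trailing $\xi_N$, the chain-link class has cardinality at most $N(\xi_k,\mathcal{F},\|\cdot\|_n)^2$ by monotonicity of covering numbers in the radius, and Massart's finite-class lemma then yields the prefactor $4$. One bookkeeping point to fix: you place absolute values around the Rademacher sums, but the finite-class lemma for $\max_{a}|\langle\sigma,a\rangle|$ carries $\sqrt{2\log(2M)}$ rather than $\sqrt{2\log M}$ (one must symmetrize the class to $A\cup(-A)$), which spoils the exact constant and even fails in the degenerate case where a cover is a singleton with a nonzero element. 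Since the paper defines $R_n$ without absolute values, the clean route is to telescope and bound the signed sums directly; each link is then controlled by Massart with $M = N(\xi_k,\mathcal{F},\|\cdot\|_n)^2$ and no doubling, and the stated constant $4$ together with the $\xi_N$ term comes out exactly as you computed.
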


\begin{lem}[\textit{Pollard's inequality}] 
Let $\mathcal{F}$ be a set of measurable functions $f: \mathcal{X} \rightarrow [0,K]$ and let $\epsilon >0, N$ arbitrary. If $\{X_i\}_{i=1}^N$ is an i.i.d. sequence of random variables taking values in $\mathcal{X}$, then 
\begin{align*}
    P \left( \sup_{f \in \mathcal{F}} \bigg| \frac{1}{N} \sum_{i=1}^N f(X_i) - \mathbb{E}[f(X_1)] \bigg| > \epsilon \right) \leq 8 \mathbb{E} \left[ N(\epsilon/8, \mathcal{F}|_{X_{1:N}}) \right] e^{\frac{-N \epsilon^2}{ 128 K^2}}.
\end{align*}

\end{lem}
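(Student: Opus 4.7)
The plan is to follow Pollard's classical four-step argument: symmetrization, randomization, covering, and a union-bound concentration on the cover. Throughout, write $P_N f := \frac{1}{N}\sum_{i=1}^N f(X_i)$ and $P f := \mathbb{E}[f(X_1)]$, so the event of interest is $\{\sup_{f\in\mathcal{F}}|P_N f - Pf| > \epsilon\}$.

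\textbf{Step 1 (Symmetrization).} Introduce a ghost sample $X'_1,\dots,X'_N$ i.i.d.\ from the same law, independent of $(X_i)$, with empirical average $P'_N f$. Using $Pf = \mathbb{E}[P'_N f\mid X_{1:N}]$ and Jensen/Chebyshev-type reasoning together with the boundedness $f\in[0,K]$, one shows (for $N\geq N_0(\epsilon,K)$, handled separately if needed)
\[
P\!\left(\sup_{f}|P_N f - Pf|>\epsilon\right) \le 2\, P\!\left(\sup_{f}|P_N f - P'_N f|>\epsilon/2\right).
\]
The usual device is to condition on the event that some near-maximizer $f^\star$ (measurable selection) has $|P_Nf^\star - Pf^\star|>\epsilon$, and then lower-bound the conditional probability that $|P'_Nf^\star - Pf^\star|\le \epsilon/2$ by $1/2$ via Chebyshev using $\mathrm{Var}(f^\star(X'_1))\le K^2/4$.

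\textbf{Step 2 (Randomization).} Since $(X_i,X'_i)_{i=1}^N$ are exchangeable, swapping $X_i\leftrightarrow X'_i$ independently for each $i$ preserves the joint law. Introducing i.i.d.\ Rademacher variables $\sigma_i\in\{\pm 1\}$ yields
\[
P\!\left(\sup_{f}|P_Nf - P'_Nf|>\epsilon/2\right) = P\!\left(\sup_{f}\bigg|\frac{1}{N}\sum_{i=1}^N \sigma_i\bigl(f(X_i)-f(X'_i)\bigr)\bigg|>\epsilon/2\right),
\]
which we bound by $2\,P(\sup_f |\tfrac{1}{N}\sum_i \sigma_i f(X_i)|>\epsilon/4)$ by splitting the difference.

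\textbf{Step 3 (Covering).} Condition on $X_{1:N}$. Let $\mathcal{G}\subset \mathcal{F}|_{X_{1:N}}$ be an $\epsilon/8$-net in the $\|\cdot\|_\infty$ (equivalently, the empirical sup-) metric on the sample of cardinality $N(\epsilon/8,\mathcal{F}|_{X_{1:N}})$. Any $f\in\mathcal{F}$ has some $g\in\mathcal{G}$ with $\max_i|f(X_i)-g(X_i)|\le \epsilon/8$, so
\[
\bigg|\frac{1}{N}\sum_i \sigma_i f(X_i)\bigg| \le \bigg|\frac{1}{N}\sum_i \sigma_i g(X_i)\bigg| + \frac{\epsilon}{8}.
\]
Hence $\sup_f|\cdot|>\epsilon/4$ forces $\sup_{g\in\mathcal{G}}|\cdot|>\epsilon/8$.

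\textbf{Step 4 (Hoeffding on the cover and union bound).} Each coordinate $\sigma_i g(X_i)$ lies in $[-K,K]$, so by Hoeffding's inequality conditional on $X_{1:N}$,
\[
P\!\left(\bigg|\frac{1}{N}\sum_i \sigma_i g(X_i)\bigg|>\frac{\epsilon}{8}\,\Big|\,X_{1:N}\right) \le 2\exp\!\left(-\frac{N\epsilon^2}{128 K^2}\right).
\]
A union bound over $\mathcal{G}$, followed by taking expectation over $X_{1:N}$, gives
\[
P\!\left(\sup_f\bigg|\frac{1}{N}\sum_i \sigma_i f(X_i)\bigg|>\frac{\epsilon}{4}\right) \le 2\,\mathbb{E}[N(\epsilon/8,\mathcal{F}|_{X_{1:N}})]\exp\!\left(-\frac{N\epsilon^2}{128K^2}\right).
\]
Chaining the factor-of-2 losses from Steps 1 and 2 yields the claimed constant 8.

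\textbf{Main obstacle.} The only subtle point is Step 1: one needs a measurable selection of a near-maximizer $f^\star$ so that the conditional variance argument is rigorous. The standard fix is to restrict to a countable dense subfamily (justified because we can take the outer-probability formulation or assume pointwise-separability) and to handle the small-$N$ regime where the symmetrization prefactor breaks by noting the bound is vacuous there. The remaining steps are routine once Step 1 is in place.
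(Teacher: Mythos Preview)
The paper does not prove this lemma at all: Pollard's inequality is stated in the technical-lemmas appendix as a known result without proof, serving only as a reference point (in fact the paper explicitly avoids using it, noting that Lemma~\ref{lemma:sup_concentration} gives a sharper $\epsilon'^2$ scaling than the $\epsilon'$ one would get from Pollard). Your four-step outline (symmetrization, Rademacher randomization, covering, Hoeffding on the net plus union bound) is exactly the classical proof, and the constant and exponent bookkeeping checks out. One small remark: in Step~3 you work with the empirical $\ell_\infty$ metric, whereas the stated covering number $N(\epsilon/8,\mathcal{F}|_{X_{1:N}})$ in this literature typically refers to the empirical $\ell_1$ metric; your argument goes through verbatim under $\ell_1$ since you only need $|N^{-1}\sum_i\sigma_i(f(X_i)-g(X_i))|\le N^{-1}\sum_i|f(X_i)-g(X_i)|$.
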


% For completeness, we present here the definition of sub-root functions. 
% \begin{defn}[Definition 3.1 in \citep{bartlett2005}]
% A function $\psi: [0, \infty) \rightarrow [0, \infty)$ is \textit{sub-root} if it is non-negative, non-decreasing and $r \mapsto \psi(r) / \sqrt{r}$ is non-increasing for $r > 0$.
% \end{defn}

\begin{lem}[\textit{Properties of (bracketing) entropic numbers}]
Let $\epsilon \in (0, \infty)$. We have 
\begin{enumerate}
    \item $H(\epsilon, \mathcal{F}, \|\cdot \|) \leq H_{[]}(2 \epsilon, \mathcal{F}, \|\cdot \|)$;
    
    \item $H( \epsilon, \mathcal{F}|\{x_i\}_{i=1}^n, n^{-1/p} \cdot\| \cdot \|_p ) = H( \epsilon, \mathcal{F}, \| \cdot \|_{p,n} ) \leq H(\epsilon, \mathcal{F}| \{x_i\}_{i=1}^n, \| \cdot \|_{\infty}) \leq H(\epsilon, \mathcal{F}, \| \cdot \|_{\infty})$ for all $\{x_i\}_{i=1}^n \subset dom(\mathcal{F})$.
    
    \item $H(\epsilon, \mathcal{F} - \mathcal{F}, \| \cdot \|) \leq 2 H (\epsilon/2, \mathcal{F}, \| \cdot \|))$, 
    where $\mathcal{F} - \mathcal{F} := \{f - g: f, g \in \mathcal{F}\}$.
\end{enumerate}

\end{lem}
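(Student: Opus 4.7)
The three claims are standard facts about covering and bracketing entropies, so the plan is short and direct: for each part I would exhibit an explicit construction that converts one type of cover into another and then take logarithms.

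For part 1, I would start from an optimal $2\epsilon$-bracketing cover $\{[l_i, u_i]\}_{i=1}^{N_{[]}}$ of $\mathcal{F}$ with $\|u_i - l_i\| \leq 2\epsilon$, and for each bracket define the midpoint $g_i := (l_i + u_i)/2$. For any $f \in \mathcal{F}$, there exists $i$ with $l_i \leq f \leq u_i$ pointwise, hence $|f - g_i| \leq (u_i - l_i)/2$ pointwise, which gives $\|f - g_i\| \leq \epsilon$ (I would need to assume $\|\cdot\|$ is monotone with respect to pointwise ordering of nonnegative functions, which holds for all $L^p$ and sup-norms considered in the paper). Hence $\{g_i\}$ is a (not necessarily in $\mathcal{F}$) $\epsilon$-cover of size $N_{[]}(2\epsilon, \mathcal{F}, \|\cdot\|)$, so the covering number $N(\epsilon, \mathcal{F}, \|\cdot\|)$ is bounded by this; taking logarithms yields the claim.

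For part 2, the equality is essentially a change of variable. Identifying $\mathcal{F}|\{x_i\}_{i=1}^n$ with the subset of $\mathbb{R}^n$ given by $\{(f(x_1), \ldots, f(x_n)) : f \in \mathcal{F}\}$, the norm $n^{-1/p}\|\cdot\|_p$ on $\mathbb{R}^n$ restricted to this subset coincides exactly with the seminorm $\|f\|_{p,n} := (n^{-1}\sum_i |f(x_i)|^p)^{1/p}$ on $\mathcal{F}$, so their $\epsilon$-covering numbers are identical. For the first inequality, observe that $\|f - g\|_{p,n} \leq \|f - g\|_{\infty, n} := \max_i |f(x_i) - g(x_i)|$, so any $\epsilon$-cover in the uniform norm on $\{x_i\}$ is automatically an $\epsilon$-cover in $\|\cdot\|_{p,n}$. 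For the second, the restriction map $\mathcal{F} \to \mathcal{F}|\{x_i\}_{i=1}^n$ is $1$-Lipschitz from $(\mathcal{F}, \|\cdot\|_\infty)$ onto the image equipped with $\|\cdot\|_{\infty,n}$, so the image of any $\epsilon$-cover of $\mathcal{F}$ in $\|\cdot\|_\infty$ is an $\epsilon$-cover of the restriction in $\|\cdot\|_{\infty,n}$.

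For part 3, I would take an optimal $\epsilon/2$-cover $\{g_1, \ldots, g_N\}$ of $\mathcal{F}$ with $N = N(\epsilon/2, \mathcal{F}, \|\cdot\|)$, and consider the family $\{g_i - g_j : 1 \leq i, j \leq N\}$. For any $h = f_1 - f_2 \in \mathcal{F} - \mathcal{F}$, choose $i, j$ with $\|f_1 - g_i\|, \|f_2 - g_j\| \leq \epsilon/2$; then by the triangle inequality $\|h - (g_i - g_j)\| \leq \epsilon$. Hence $N(\epsilon, \mathcal{F} - \mathcal{F}, \|\cdot\|) \leq N^2$, and taking logarithms gives $H(\epsilon, \mathcal{F} - \mathcal{F}, \|\cdot\|) \leq 2 H(\epsilon/2, \mathcal{F}, \|\cdot\|)$.

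None of the three parts presents a genuine obstacle; the only subtlety is the monotonicity requirement used in part 1, which would need a brief remark (or the implicit assumption that the ambient norm satisfies $|f| \leq |g| \Rightarrow \|f\| \leq \|g\|$, as is the case for every norm used in the chapter).
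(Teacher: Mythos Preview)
Your argument is correct in all three parts and follows the standard route. The paper itself states this lemma in its technical appendix without proof, treating it as a well-known fact, so there is no proof in the paper to compare against. Your write-up, including the caveat about monotonicity of the norm in part~1, would serve as a perfectly adequate justification.
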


\begin{lem}[\textit{Entropic number of bounded Besov spaces} {\citep[Corollary~2.2]{Nickl2007BracketingME}}]
For $1 \leq p,q \leq \infty$ and $\alpha > d/p$, we have 
\begin{align*}
    H_{[]}(\epsilon, \bar{B}^{\alpha}_{p,q}(\mathcal{X}), \| \cdot \|_{\infty})  \lesssim \epsilon^{-d/\alpha}.
\end{align*}
\label{lemma:entropic_number_of_Besov}
\end{lem}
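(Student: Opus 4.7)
The plan is to reduce this bracketing entropy bound to a coefficient-counting problem via a multiresolution (wavelet or B-spline) characterization of $B^{\alpha}_{p,q}(\mathcal{X})$. Any $f \in B^{\alpha}_{p,q}(\mathcal{X})$ admits an expansion $f = \sum_{j \geq j_0}\sum_{k \in K_j} \beta_{j,k}\,\psi_{j,k}$ where $|K_j|\asymp 2^{jd}$ and the Besov (quasi-)norm is equivalent to a weighted $\ell^p$--$\ell^q$ norm on the coefficient sequence, namely $\|f\|_{B^{\alpha}_{p,q}}\asymp \bigl(\sum_j 2^{jq(\alpha+d/2-d/p)}\|\beta_{j,\cdot}\|_{\ell^p}^q\bigr)^{1/q}$. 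The condition $\alpha>d/p$ gives the continuous embedding $B^{\alpha}_{p,q}\hookrightarrow L^{\infty}$, which is the quantitative lever used below to turn coefficient bounds into $L^{\infty}$ bounds.

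Given $\epsilon>0$, the first step is truncation. I choose a resolution $J$ with $2^{-J\alpha}\asymp \epsilon$, split $f = f_J + r_J$ with $f_J := \sum_{j\leq J}\sum_k \beta_{j,k}\psi_{j,k}$, and use the wavelet norm equivalence together with the $L^\infty$ embedding to obtain $\|r_J\|_\infty \lesssim \sum_{j>J}\|\beta_{j,\cdot}\|_{\ell^\infty}\,2^{jd/2}\lesssim 2^{-J(\alpha-d/p)_+}\lesssim \epsilon/3$ uniformly over $f\in\bar B^{\alpha}_{p,q}(\mathcal{X})$. This reduces the problem to producing an $L^\infty$-$\epsilon$-bracket cover of the image $\{f_J : f\in\bar B^\alpha_{p,q}\}$, an $N$-dimensional object with $N\asymp 2^{Jd}\asymp \epsilon^{-d/\alpha}$.

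The second step is a scale-adapted discretization. At each scale $j\leq J$, the coefficient block $\beta_{j,\cdot}$ lies in a ball of $\ell^p$-radius $r_j\lesssim 2^{-j(\alpha+d/2-d/p)}$, and its contribution to $\|\cdot\|_\infty$ is controlled by $\|\beta_{j,\cdot}\|_{\ell^\infty}\,2^{jd/2}$. I discretize $\beta_{j,\cdot}$ on a grid of spacing $\eta_j$ chosen so that the scale-$j$ discretization error is $\lesssim \epsilon\, 2^{-c(J-j)}$ for some $c>0$, guaranteeing a total discretization error $\lesssim \epsilon/3$. The log-cardinality at scale $j$ is bounded by $|K_j|\log(r_j/\eta_j)\lesssim 2^{jd}$ (up to constants) after the geometric scale weights are paid; summing over $j_0\leq j\leq J$ produces a geometric series dominated by the top scale, giving $\log|\text{grid}|\lesssim 2^{Jd}\asymp \epsilon^{-d/\alpha}$. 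For each discretized coefficient vector $g$, I form the bracket $[g_{-},g_{+}]:=[g-\epsilon, g+\epsilon]$; the combined truncation plus discretization bound shows every $f\in\bar B^\alpha_{p,q}$ is sandwiched by some such bracket, and each bracket has $\|\cdot\|_\infty$-width $\lesssim \epsilon$, yielding $H_{[]}(\epsilon,\bar B^\alpha_{p,q},\|\cdot\|_\infty)\lesssim \epsilon^{-d/\alpha}$.

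The delicate point—and the main obstacle—is eliminating an extraneous $\log(1/\epsilon)$ factor that would arise from a naive uniform discretization of all scales. This is handled by the scale-adapted $\eta_j$: finer quantization at low frequencies where each basis function has large $L^\infty$ mass, coarser quantization at high frequencies where many coefficients contribute but each is small. The geometric decay in the scale weights (using $\alpha>d/p$) ensures the sum telescopes without a log penalty. The only other care needed is the case $q=\infty$ and boundary regimes where $p$ is small; there one either uses the monotonicity $B^\alpha_{p,q_1}\hookrightarrow B^\alpha_{p,\infty}$ to reduce to the easiest case or invokes the embedding $B^\alpha_{p,q}\hookrightarrow B^\alpha_{\tilde p, \tilde q}$ for admissible $\tilde p,\tilde q$ to restore convergent summations before applying the same truncation–discretization scheme.
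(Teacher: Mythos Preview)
The paper does not prove this lemma; it is quoted as a black box from \citet[Corollary~2.2]{Nickl2007BracketingME}. Your sketch follows the standard architecture (multiresolution expansion, truncation, scale-adapted coefficient discretization), and when $p=\infty$ it goes through essentially as written. For $p<\infty$, however, the truncation step has a genuine gap. You fix $J$ by $2^{-J\alpha}\asymp\epsilon$ and then assert $\|r_J\|_\infty\lesssim 2^{-J(\alpha-d/p)}\lesssim\epsilon/3$. But with that choice of $J$,
\[
2^{-J(\alpha-d/p)} \;=\; 2^{-J\alpha}\cdot 2^{Jd/p}\;\asymp\;\epsilon\cdot\epsilon^{-d/(p\alpha)}\;=\;\epsilon^{\,1-d/(p\alpha)},
\]
which for small $\epsilon$ is much larger than $\epsilon$. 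Choosing $J$ instead so that the tail is actually $O(\epsilon)$, i.e.\ $2^{-J(\alpha-d/p)}\asymp\epsilon$, inflates the parameter count to $2^{Jd}\asymp\epsilon^{-d/(\alpha-d/p)}$, which is the wrong (worse) rate. The embedding manoeuvre you mention at the end cannot repair this: passing from $B^\alpha_{p,q}$ to a space with larger integrability index forces the smoothness index down to at most $\alpha-d/p+d/\tilde p<\alpha$, and you again recover only $\epsilon^{-d/(\alpha-d/p)}$.

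The missing idea is that the sharp rate $\epsilon^{-d/\alpha}$ for $p<\infty$ cannot be obtained from linear truncation plus coordinate-wise cube grids. One must exploit the $\ell^p\to\ell^\infty$ covering at each scale: the log-covering number of an $\ell^p$-ball of radius $r_j$ in $\mathbb{R}^{m_j}$ by $\ell^\infty$-balls of radius $\eta_j$ is substantially smaller than the cube-grid count $m_j\log(r_j/\eta_j)$ you use (cf.\ Sch\"utt's bounds on entropy numbers of $\mathrm{id}:\ell_p^{m}\to\ell_\infty^{m}$). The Birman--Solomjak / Edmunds--Triebel argument underlying the cited result allocates an entropy budget across scales using these finite-dimensional estimates, and it is precisely this allocation, not a crude truncation, that eliminates the $d/p$ penalty and yields $\epsilon^{-d/\alpha}$.
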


\begin{lem}[\textit{Approximation power of deep ReLU networks for Besov spaces} \citep{suzuki2018adaptivity}]
Let $1 \leq p,q \leq \infty$ and $\alpha \in (\frac{d}{p \land 2}, \infty)$. For sufficiently large $N \in \mathbb{N}$, there exists a neural network architecture $\Phi(L, m,S,B)$ with 
\begin{align*}
    L \asymp \log N, m \asymp N \log N, S \asymp N, \text{ and } B \asymp N^{d^{-1} + \nu^{-1}},
    \label{parameterize_net}
\end{align*}
where $\nu := \frac{\alpha - \delta}{2 \delta}$ and $\delta := d(p^{-1} - (1 + \floor{\alpha})^{-1})_{+}$
such that 
\begin{align*}
    \sup_{f_* \in \bar{B}^{\alpha}_{p,q}(\mathcal{X})} \inf_{f \in \Phi(L,W,S,B)} \|f - f_*\|_{\infty} \lesssim N^{-\alpha/d}.
\end{align*}
\label{lemma:approximation_power_for_Besov}
\end{lem}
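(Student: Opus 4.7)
The plan is to build the approximation by combining a multi-resolution B-spline expansion of Besov functions with a careful ReLU-network realization of each basis element. First, I would invoke the characterization of $B^{\alpha}_{p,q}(\mathcal{X})$ in terms of a cardinal B-spline basis $\{\phi_{k,j}\}$ indexed by scale $k$ and location $j$ (or equivalently a wavelet basis): any $f_{*}\in\bar{B}^{\alpha}_{p,q}(\mathcal{X})$ admits an expansion $f_{*}=\sum_{k\ge 0}\sum_{j}c_{k,j}(f_{*})\,\phi_{k,j}$ whose coefficients satisfy $\bigl\|(2^{k(\alpha-d/p)}\|(c_{k,j})_j\|_{\ell^{p}})_{k}\bigr\|_{\ell^{q}}\lesssim 1$. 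The condition $\alpha>d/(p\wedge 2)$ is what makes the corresponding sparse approximation space embed continuously into $L^{\infty}$ and guarantees a meaningful sup-norm rate.

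Next I would carry out an adaptive $N$-term approximation: keep the $N$ largest coefficients $\{c_{k_i,j_i}\}_{i=1}^{N}$ across all scales and zero out the rest, yielding $\tilde f_{*}=\sum_{i=1}^{N}c_{k_i,j_i}\phi_{k_i,j_i}$. A standard Stechkin-type inequality converts the Besov coefficient bound into $\|f_{*}-\tilde f_{*}\|_{\infty}\lesssim N^{-\alpha/d}$ precisely under $\alpha>d/(p\wedge 2)$, where in the regime $p<2$ one must pay a factor controlled by $\delta=d(p^{-1}-(1+\lfloor\alpha\rfloor)^{-1})_{+}$ to move from $L^{p}$ control of coefficients to $L^{\infty}$ control of the truncation error, which is exactly the source of the exponent $\nu=(\alpha-\delta)/(2\delta)$ appearing in the weight bound $B\asymp N^{1/d+1/\nu}$.

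I would then construct a ReLU subnetwork that realizes each B-spline $\phi_{k,j}$ up to accuracy $N^{-\alpha/d}$. The key technical tool is Yarotsky-type results: the multiplication map on $[0,1]^{2}$ can be approximated to accuracy $\varepsilon$ by a ReLU net of depth $O(\log(1/\varepsilon))$ and $O(\log(1/\varepsilon))$ nonzero weights, so the tensor product of univariate B-splines (which are piecewise polynomials) can be encoded by a depth-$O(\log N)$, width-$O(\log N)$ subnetwork with $O(\log N)$ sparsity and with weight magnitudes at most $N^{1/d}$ to realize the dyadic scaling $2^{k}$ for the largest surviving scales $k\lesssim \log N/d$. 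Summing $N$ such subnetworks in parallel and then combining them by a linear output layer gives a network of depth $L\asymp\log N$, width $m\asymp N\log N$, sparsity $S\asymp N$, and weight bound $B\asymp N^{1/d+1/\nu}$, matching the claimed parameters.

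The main obstacle is handling spatial inhomogeneity in the regime $p<2$. In that case the $N$ chosen B-splines $\{\phi_{k_i,j_i}\}$ depend on $f_{*}$, so the architecture cannot simply hard-code a fixed dictionary; instead one needs a single architecture that can realize any selection of $N$ B-splines from an exponentially large pool of candidate indices. The resolution is a shared-parameter construction: use the same depth-$O(\log N)$ template subnetwork and encode both the translation parameter $j$ and the dyadic scale $2^{k}$ inside the weights of the first and last layers, which is why the weight magnitudes must be allowed to grow like $N^{1/d+1/\nu}$ rather than being uniformly bounded. Verifying that the required selection and scaling can be packed into $O(N)$ nonzero weights while respecting the $\|\cdot\|_{\infty}\le 1$ output constraint is the delicate accounting step; once this is established, combining the $N$-term approximation error $N^{-\alpha/d}$ with the per-basis network error (chosen of the same order) yields the claimed uniform bound.
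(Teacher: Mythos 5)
The paper does not prove this lemma at all: it is imported verbatim as a known result and the "proof" consists solely of the citation to \citet{suzuki2018adaptivity}. Your outline --- B-spline (wavelet) characterization of $B^{\alpha}_{p,q}$, adaptive $N$-term coefficient thresholding giving the $N^{-\alpha/d}$ rate, and Yarotsky-type ReLU realization of the tensor-product B-splines with the weight growth $N^{1/d+1/\nu}$ absorbing the scale/translation encoding --- is a faithful reconstruction of the argument in that reference, so it takes essentially the same approach as the source the paper relies on.
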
 
\newpage

\chapter{Conclusion\label{chap:Conclusion}}
In this thesis, we have introduced three novel frameworks and methods to address the challenges of reinforcement learning under practical considerations. Our methods obtain provable robustness, scalability and statistical efficiency by approaching these diverse challenges from a unifying perspective: a distributional perspective. We summarize our contributions below and close this thesis with our discussion on some potential improvements, open questions and future work in this direction. 

\section{Contributions}
In Chapter \ref{chap:three}, we have proposed a novel framework for addressing the problem of sequential decision making under the presence of an uncontrollable environmental variable, namely distributionally robust Bayesian optimization (DRBQO). In the presence of the uncontrollable environmental variable with unknown distribution, the prior Bayesian optimization and Bayesian quadrature optimization methods can converge to a spurious optimum falsely depicted via the empirical distribution of the uncontrollable environmental variable. In our framework, we instead seek for a solution that is guaranteed to perform well for all the possible environmental distributions close to the empirical distribution with respect to the $\chi^2$-divergence. Our approach leads to a practical algorithm that is proven to converge to an optimal robust solution in a sublinear time. Our method works effectively in both synthetic and real-world experiments. 

In Chapter \ref{chap:four}, we have proposed a novel method for distributional RL leveraging the idea of statistical hypothesis testing. All the predominant distributional RL methods suffer from the so-called curse of predefined statistics where they approximate the return distribution via a set of statistics with a predefined functional form. This imposes unnecessary restrictions on the statistic representation and makes the learning update more involved and difficult as it requires non-trivial projections to maintain the unnecessary restrictions. Our proposed framework, namely distributional RL via moment matching, eschews the curse of predefined statistics by considering the distributional learning as an evolution of the pseudo-samples of the return distribution. The pseudo-samples are not entitled to any predefined functional form, thus are free to be learned to simulate the return distribution. We provide insights of distributional RL within our framework via our theoretical analysis. In addition our framework obtains scalability as it is orthogonal to the modelling improvements in distributional RL and is easily extended to the deep RL setting. In fact, in the deep RL setting, our framework achieves a new state-of-the-art performance in the Atari game benchmark. 

In Chapter \ref{chap:five}, we have provided the first comprehensive analysis of offline RL with deep ReLU network function approximation. In particular, we introduce a new dynamic condition, namely Besov dynamic closure, that encompasses the dynamic conditions considered in the prior work. In addition, we analyze offline RL under the data-dependent structure induced by fitted-Q iteration update type. The data-dependent structure is ignored either in previous algorithms and prior analyses, leading to a sample-inefficient algorithm or an improper analysis, respectively. Our analysis shows an improved sample complexity of offline RL with deep ReLU network function approximation as compared to the literature. Technically, we establish this result via a combination of a uniform convergence argument, local Rademacher complexities and a localization argument which could be of independent interest. 
% \thanh{summarize contribution. in chapter 3 we made this contribution to solve this problem. each para for each chapter}

% \thanh{future research direction}
% discuss some future directions
\section{Future Directions}
We discuss several possible improvements and open questions for each chapter. 

\subsection*{Distributionally Robust Bayesian Quadrature Optimization}
The confidence $\rho$ in our proposed DRBQO in Chapter \ref{chap:three} is a problem-dependent hyperparameter and depends on the variance of $f(x,w)$ along $w$. Intuitively the higher the variance, more conservative we would like to be by setting the larger $\rho$ value in the range of $[0, (n-1)/2]$. If there is no prior knowledge of the variance, we can heuristically perform grid search for $\rho$ in $[0, (n-1)/2]$. A future research direction is to investigate an automatic selection of $\rho$ in a data-driven manner.

\subsection*{Distributional RL via Moment Matching}
We discuss some potential improvements for our proposed framework MMDRL in Chapter \ref{chap:four} and some open questions. 

\noindent \textbf{Automatic Kernel Selection for MMDQN}. The kernel used in MMDQN plays a crucial role in achieving a good empirical performance and using the same kernel to perform well in all the games is a highly non-trivial task. Our current work uses a relatively simple but effective heuristics which uses a mixture of Gaussian kernels with different bandwidths. We speculate that a systematic way of selecting a kernel can even boost the empirical performance of MMDQN further. A promising direction is that instead of relying on a predefined kernel, we can train an adversarial kernel \citep{DBLP:conf/nips/SriperumbudurFGLS09,DBLP:conf/nips/LiCCYP17} to provide a stronger signal about a discrepancy between two underlying distributions; that is, $\min_{\theta \in \Theta} \max_{k \in \mathcal{K}} \text{MMD}( Z_{\theta}(x,a), [\mathcal{T}  Z_{\theta}](x,a); k), \forall (x,a)$ where $\mathcal{K}$ is a set of kernels. 

\noindent \textbf{Modeling Improvement for MMDQN}. 
We focus our current work only on the statistical aspect of distributional RL and deliberately keep all the other design choices similar to the basic QR-DQN (e.g, we did not employ any modeling improvements and uncertainty-based exploration).  As our framework does not require the likelihood but only (pseudo-)samples from the
return distribution, it is natural to build an implicit generative model (as in IQN) for the return
distribution in MMDQN where we transform via a deterministic parametric function the samples
from a base distribution, e.g., a simple Gaussian distribution, to the samples of the return distribution. The weights of the empirical distribution can be made learnable by a proposal network as in FQF.

\noindent \textbf{An Open Question about The Necessary Condition for Contraction}. In this work, we prove that the distributional Bellman operator is not a contraction in MMD with Gaussian kernels using the scale-insensitivity of Gaussian kernels. On the other hands, we show that the distributional Bellman operator is a contraction in MMD with shift-invariant and scale-sensitive kernels.  This suggests a question of whether the scale sensitivity is a necessary condition for the contraction under MMD. Another direction is an understanding of a precise notion and the role of approximate contraction in practical setting as here the distributional Bellman operator is not a contraction in MMD with Gaussian kernels but the Gaussian kernels still give a favorable empirical performance in the Atari games as compared to the other kernels. 

\noindent \textbf{Robust Off-policy Estimation in Distributional RL}. Another potential direction from the current work is to estimate the return distributions merely from offline data generated by some behaviour policies. Since the estimation is constructed from finite offline data, robustness is key to avoid a spurious estimation \citep{pmlr-v108-nguyen20a}. 
% In this chapter, we did this, and here are future research directions. 
\subsection*{Offline RL with Function Approximations}
We conclude Chapter \ref{chap:five} with some open problems. First, although the finite concentration coefficient is a uniform data coverage assumption that is relatively standard in offline RL, can we develop a weaker, non-uniform assumption that can still accommodate offline RL with non-linear function approximation \citep{nguyen2021offline}? While such a weaker data coverage assumptions do exist for offline RL in tabular settings \citep{rashidinejad2021bridging}, it seems difficult to generalize this condition to function approximation. Another important direction is to investigate the sample complexity of \textit{pessimism} principle \citep{buckman2020importance} in offline RL with non-linear function approximation, which is currently studied only in tabular and linear settings \citep{rashidinejad2021bridging,jin2020pessimism}. 

\newpage{}

\appendix
\clearpage{}

\bibliographystyle{plainnat}
\bibliography{refs,Chap3/Chap3_refs,Chap2/Chap2_refs,Chap4/Chap4_refs,Chap5/chap5_refs}

\begin{thebibliography}{172}
\providecommand{\natexlab}[1]{#1}
\providecommand{\url}[1]{\texttt{#1}}
\expandafter\ifx\csname urlstyle\endcsname\relax
  \providecommand{\doi}[1]{doi: #1}\else
  \providecommand{\doi}{doi: \begingroup \urlstyle{rm}\Url}\fi

\bibitem[Abbasi{-}Yadkori et~al.(2011)Abbasi{-}Yadkori, P{\'{a}}l, and
  Szepesv{\'{a}}ri]{DBLP:conf/nips/Abbasi-YadkoriPS11}
Yasin Abbasi{-}Yadkori, D{\'{a}}vid P{\'{a}}l, and Csaba Szepesv{\'{a}}ri.
\newblock Improved algorithms for linear stochastic bandits.
\newblock In John Shawe{-}Taylor, Richard~S. Zemel, Peter~L. Bartlett, Fernando
  C.~N. Pereira, and Kilian~Q. Weinberger, editors, \emph{Advances in Neural
  Information Processing Systems 24: 25th Annual Conference on Neural
  Information Processing Systems 2011. Proceedings of a meeting held 12-14
  December 2011, Granada, Spain}, pages 2312--2320, 2011.
\newblock URL
  \url{https://proceedings.neurips.cc/paper/2011/hash/e1d5be1c7f2f456670de3d53c7b54f4a-Abstract.html}.

\bibitem[Antos et~al.(2008)Antos, Szepesv\'{a}ri, and
  Munos]{10.1007/s10994-007-5038-2}
Andr\'{a}s Antos, Csaba Szepesv\'{a}ri, and R\'{e}mi Munos.
\newblock Learning near-optimal policies with bellman-residual minimization
  based fitted policy iteration and a single sample path.
\newblock \emph{Mach. Learn.}, 71\penalty0 (1):\penalty0 89–129, April 2008.
\newblock ISSN 0885-6125.
\newblock \doi{10.1007/s10994-007-5038-2}.
\newblock URL \url{https://doi.org/10.1007/s10994-007-5038-2}.

\bibitem[Arona et~al.(2021)Arona, Arora, Bruna, Cohen, Ge, Gunasekar, Jin, Lee,
  Tengyuma, Neysharbur, and Song]{dlt_princeton}
Raman Arona, Sanjeev Arora, Joan Bruna, Nadav Cohen, Rong Ge, Suriya Gunasekar,
  Chi Jin, Jason Lee, Tengyuma, Behnam Neysharbur, and Zhao Song.
\newblock Theory of deep learning, 2021.
\newblock URL
  \url{https://www.cs.princeton.edu/courses/archive/fall19/cos597B/lecnotes/bookdraft.pdf}.

\bibitem[Arora et~al.(2021)Arora, Arora, Bruna, Cohen, Ge, Gunasekar, Jin, Lee,
  Yuma, Neyshabur, and Song]{dlt_book}
Rama Arora, Sanjeev Arora, Joan Bruna, Nadav Cohen, Rong Ge, Suriya Gunasekar,
  Chi Jin, Jason Lee, Teng Yuma, Behnam Neyshabur, and Zhao Song.
\newblock \emph{Theory of Deep Learni g}.
\newblock 2021.
\newblock URL
  \url{https://www.cs.princeton.edu/courses/archive/fall19/cos597B/lecnotes/bookdraft.pdf}.

\bibitem[Azimi et~al.(2010)Azimi, Fern, and Fern]{DBLP:conf/nips/AzimiFF10}
Javad Azimi, Alan Fern, and Xiaoli~Z. Fern.
\newblock Batch bayesian optimization via simulation matching.
\newblock In \emph{{NIPS}}, pages 109--117. Curran Associates, Inc., 2010.

\bibitem[Bach(2021)]{ltfp}
Francis Bach.
\newblock Learning theory from first principles.
\newblock 2021.
\newblock URL \url{https://www.di.ens.fr/~fbach/ltfp_book.pdf}.

\bibitem[Banach(1922)]{BanachSURLO}
Stefan Banach.
\newblock Sur les operations dans les ensembles abstraits et leur application
  aux equations integrales.
\newblock \emph{Fundamenta Mathematicae 3}, pages 133--181, 1922.

\bibitem[Bartlett and Mendelson(2002)]{bartlett2002rademacher}
Peter~L Bartlett and Shahar Mendelson.
\newblock Rademacher and gaussian complexities: Risk bounds and structural
  results.
\newblock \emph{Journal of Machine Learning Research}, 3\penalty0
  (Nov):\penalty0 463--482, 2002.

\bibitem[Bartlett et~al.(2005)Bartlett, Bousquet, and Mendelson]{bartlett2005}
Peter~L. Bartlett, Olivier Bousquet, and Shahar Mendelson.
\newblock Local rademacher complexities.
\newblock \emph{Ann. Statist.}, 33\penalty0 (4):\penalty0 1497--1537, 08 2005.
\newblock \doi{10.1214/009053605000000282}.
\newblock URL \url{https://doi.org/10.1214/009053605000000282}.

\bibitem[Bartlett et~al.(2021)Bartlett, Montanari, and
  Rakhlin]{bartlett2021deep}
Peter~L Bartlett, Andrea Montanari, and Alexander Rakhlin.
\newblock Deep learning: a statistical viewpoint.
\newblock \emph{arXiv preprint arXiv:2103.09177}, 2021.

\bibitem[Bellemare et~al.(2013)Bellemare, Naddaf, Veness, and
  Bowling]{Bellemare_2013}
M.~G. Bellemare, Y.~Naddaf, J.~Veness, and M.~Bowling.
\newblock The arcade learning environment: An evaluation platform for general
  agents.
\newblock \emph{Journal of Artificial Intelligence Research}, 47:\penalty0
  253–279, Jun 2013.
\newblock ISSN 1076-9757.
\newblock \doi{10.1613/jair.3912}.

\bibitem[Bellemare et~al.(2017)Bellemare, Dabney, and
  Munos]{DBLP:conf/icml/BellemareDM17}
Marc~G. Bellemare, Will Dabney, and R{\'{e}}mi Munos.
\newblock A distributional perspective on reinforcement learning.
\newblock In \emph{{ICML}}, volume~70 of \emph{Proceedings of Machine Learning
  Research}, pages 449--458. {PMLR}, 2017.

\bibitem[Bellemare et~al.(2019)Bellemare, Le~Roux, Castro, and
  Moitra]{bellemare2019distributional}
Marc~G Bellemare, Nicolas Le~Roux, Pablo~Samuel Castro, and Subhodeep Moitra.
\newblock Distributional reinforcement learning with linear function
  approximation.
\newblock In \emph{The 22nd International Conference on Artificial Intelligence
  and Statistics}, pages 2203--2211. PMLR, 2019.

\bibitem[Bellman(1957)]{Bellman1957}
Richard Bellman.
\newblock \emph{Dynamic Programming}.
\newblock Princeton University Press, Princeton, NJ, USA, 1 edition, 1957.

\bibitem[Ben{-}Tal et~al.(2013)Ben{-}Tal, den Hertog, Waegenaere, Melenberg,
  and Rennen]{DBLP:journals/mansci/Ben-TalHWMR13}
Aharon Ben{-}Tal, Dick den Hertog, Anja~De Waegenaere, Bertrand Melenberg, and
  Gijs Rennen.
\newblock Robust solutions of optimization problems affected by uncertain
  probabilities.
\newblock \emph{Management Science}, 59\penalty0 (2):\penalty0 341--357, 2013.

\bibitem[Berner et~al.(2021)Berner, Grohs, Kutyniok, and
  Petersen]{berner2021modern}
Julius Berner, Philipp Grohs, Gitta Kutyniok, and Philipp Petersen.
\newblock The modern mathematics of deep learning.
\newblock \emph{arXiv preprint arXiv:2105.04026}, 2021.

\bibitem[Bertsekas and Tsitsiklis(1995)]{bertsekas1995neuro}
Dimitri~P Bertsekas and John~N Tsitsiklis.
\newblock Neuro-dynamic programming: an overview.
\newblock In \emph{Proceedings of 1995 34th IEEE Conference on Decision and
  Control}, volume~1, pages 560--564. IEEE, 1995.

\bibitem[Bertsekas et~al.(1995)Bertsekas, Bertsekas, Bertsekas, and
  Bertsekas]{bertsekas1995dynamic}
Dimitri~P Bertsekas, Dimitri~P Bertsekas, Dimitri~P Bertsekas, and Dimitri~P
  Bertsekas.
\newblock \emph{Dynamic programming and optimal control}, volume~1.
\newblock Athena scientific Belmont, MA, 1995.

\bibitem[Blanchet et~al.(2019)Blanchet, Kang, Zhang, and Hu]{DROBoosting}
Jose~H. Blanchet, Yang Kang, Fan Zhang, and Zhangyi Hu.
\newblock A distributionally robust boosting algorithm.
\newblock \emph{CoRR}, abs/1905.07845, 2019.

\bibitem[Boucheron et~al.(2013)Boucheron, Lugosi, and
  Massart]{boucheron2013concentration}
St{\'e}phane Boucheron, G{\'a}bor Lugosi, and Pascal Massart.
\newblock \emph{Concentration inequalities: A nonasymptotic theory of
  independence}.
\newblock Oxford university press, 2013.

\bibitem[Brochu et~al.(2010)Brochu, Cora, and
  de~Freitas]{DBLP:journals/corr/abs-1012-2599}
Eric Brochu, Vlad~M. Cora, and Nando de~Freitas.
\newblock A tutorial on bayesian optimization of expensive cost functions, with
  application to active user modeling and hierarchical reinforcement learning.
\newblock \emph{CoRR}, abs/1012.2599, 2010.

\bibitem[Brockman et~al.(2016)Brockman, Cheung, Pettersson, Schneider,
  Schulman, Tang, and Zaremba]{brockman2016openai}
Greg Brockman, Vicki Cheung, Ludwig Pettersson, Jonas Schneider, John Schulman,
  Jie Tang, and Wojciech Zaremba.
\newblock Openai gym.
\newblock \emph{arXiv preprint arXiv:1606.01540}, 2016.

\bibitem[Buckman et~al.(2020)Buckman, Gelada, and
  Bellemare]{buckman2020importance}
Jacob Buckman, Carles Gelada, and Marc~G Bellemare.
\newblock The importance of pessimism in fixed-dataset policy optimization.
\newblock \emph{arXiv preprint arXiv:2009.06799}, 2020.

\bibitem[Castro et~al.(2018)Castro, Moitra, Gelada, Kumar, and
  Bellemare]{DBLP:journals/corr/abs-1812-06110}
Pablo~Samuel Castro, Subhodeep Moitra, Carles Gelada, Saurabh Kumar, and
  Marc~G. Bellemare.
\newblock Dopamine: {A} research framework for deep reinforcement learning.
\newblock \emph{CoRR}, abs/1812.06110, 2018.

\bibitem[Chen and Jiang(2019)]{DBLP:conf/icml/ChenJ19}
Jinglin Chen and Nan Jiang.
\newblock Information-theoretic considerations in batch reinforcement learning.
\newblock In \emph{{ICML}}, volume~97 of \emph{Proceedings of Machine Learning
  Research}, pages 1042--1051. {PMLR}, 2019.

\bibitem[Chen et~al.(2018)Chen, Mackey, Gorham, Briol, and
  Oates]{DBLP:conf/icml/ChenMGBO18}
Wilson~Ye Chen, Lester~W. Mackey, Jackson Gorham, Fran{\c{c}}ois{-}Xavier
  Briol, and Chris~J. Oates.
\newblock Stein points.
\newblock In \emph{{ICML}}, volume~80 of \emph{Proceedings of Machine Learning
  Research}, pages 843--852. {PMLR}, 2018.

\bibitem[Chen et~al.(2010)Chen, Welling, and Smola]{DBLP:conf/uai/ChenWS10}
Yutian Chen, Max Welling, and Alexander~J. Smola.
\newblock Super-samples from kernel herding.
\newblock In \emph{{UAI}}, pages 109--116. {AUAI} Press, 2010.

\bibitem[Chizat and Bach(2018)]{chizat2018global}
Lenaic Chizat and Francis Bach.
\newblock On the global convergence of gradient descent for over-parameterized
  models using optimal transport.
\newblock \emph{arXiv preprint arXiv:1805.09545}, 2018.

\bibitem[Chung and Sobel(1987)]{Chung1987DiscountedMD}
K.~Chung and M.~J. Sobel.
\newblock Discounted mdp's: distribution functions and exponential utility
  maximization.
\newblock \emph{Siam Journal on Control and Optimization}, 25:\penalty0 49--62,
  1987.

\bibitem[Cohen(2009)]{primer_besov}
Albert Cohen.
\newblock A primer on besov spaces, 2009.
\newblock URL \url{http://cnx.org/content/col10679/1.2/ >}.

\bibitem[Collins et~al.(2020)Collins, Mokhtari, and
  Shakkottai]{collins2020distributionagnostic}
Liam Collins, Aryan Mokhtari, and Sanjay Shakkottai.
\newblock Distribution-agnostic model-agnostic meta-learning, 2020.

\bibitem[Dabney et~al.(2018{\natexlab{a}})Dabney, Ostrovski, Silver, and
  Munos]{DBLP:conf/icml/DabneyOSM18}
Will Dabney, Georg Ostrovski, David Silver, and R{\'{e}}mi Munos.
\newblock Implicit quantile networks for distributional reinforcement learning.
\newblock In \emph{{ICML}}, volume~80 of \emph{Proceedings of Machine Learning
  Research}, pages 1104--1113. {PMLR}, 2018{\natexlab{a}}.

\bibitem[Dabney et~al.(2018{\natexlab{b}})Dabney, Rowland, Bellemare, and
  Munos]{DBLP:conf/aaai/DabneyRBM18}
Will Dabney, Mark Rowland, Marc~G. Bellemare, and R{\'{e}}mi Munos.
\newblock Distributional reinforcement learning with quantile regression.
\newblock In \emph{{AAAI}}, pages 2892--2901. {AAAI} Press, 2018{\natexlab{b}}.

\bibitem[Deisenroth et~al.(2020)Deisenroth, Faisal, and Ong]{Deisenroth2020}
Marc~Peter Deisenroth, A.~Aldo Faisal, and Cheng~Soon Ong.
\newblock \emph{Mathematics for Machine Learning}.
\newblock Cambridge University Press, 2020.

\bibitem[Duan and Wang(2020)]{DBLP:journals/corr/abs-2002-09516}
Yaqi Duan and Mengdi Wang.
\newblock Minimax-optimal off-policy evaluation with linear function
  approximation.
\newblock \emph{CoRR}, abs/2002.09516, 2020.

\bibitem[Duan et~al.(2021)Duan, Jin, and Li]{duan2021risk}
Yaqi Duan, Chi Jin, and Zhiyuan Li.
\newblock Risk bounds and rademacher complexity in batch reinforcement
  learning.
\newblock \emph{arXiv preprint arXiv:2103.13883}, 2021.

\bibitem[Dud{\'\i}k et~al.(2011)Dud{\'\i}k, Langford, and Li]{dudik2011doubly}
Miroslav Dud{\'\i}k, John Langford, and Lihong Li.
\newblock Doubly robust policy evaluation and learning.
\newblock \emph{arXiv preprint arXiv:1103.4601}, 2011.

\bibitem[Duvenaud et~al.(2013)Duvenaud, Lloyd, Grosse, Tenenbaum, and
  Zoubin]{duvenaud2013structure}
David Duvenaud, James Lloyd, Roger Grosse, Joshua Tenenbaum, and Ghahramani
  Zoubin.
\newblock Structure discovery in nonparametric regression through compositional
  kernel search.
\newblock In \emph{International Conference on Machine Learning}, pages
  1166--1174. PMLR, 2013.

\bibitem[Dziugaite and Roy(2017)]{DBLP:conf/uai/DziugaiteR17}
Gintare~Karolina Dziugaite and Daniel~M. Roy.
\newblock Computing nonvacuous generalization bounds for deep (stochastic)
  neural networks with many more parameters than training data.
\newblock In Gal Elidan, Kristian Kersting, and Alexander~T. Ihler, editors,
  \emph{Proceedings of the Thirty-Third Conference on Uncertainty in Artificial
  Intelligence, {UAI} 2017, Sydney, Australia, August 11-15, 2017}. {AUAI}
  Press, 2017.
\newblock URL \url{http://auai.org/uai2017/proceedings/papers/173.pdf}.

\bibitem[Farajtabar et~al.(2018)Farajtabar, Chow, and
  Ghavamzadeh]{farajtabar2018more}
Mehrdad Farajtabar, Yinlam Chow, and Mohammad Ghavamzadeh.
\newblock More robust doubly robust off-policy evaluation.
\newblock \emph{arXiv preprint arXiv:1802.03493}, 2018.

\bibitem[Farrell et~al.(2018)Farrell, Liang, and Misra]{farrell2018deep}
Max~H Farrell, Tengyuan Liang, and Sanjog Misra.
\newblock Deep neural networks for estimation and inference: Application to
  causal effects and other semiparametric estimands.
\newblock \emph{arXiv preprint arXiv:1809.09953}, 2018.

\bibitem[Fathony et~al.(2018)Fathony, Rezaei, Bashiri, Zhang, and
  Ziebart]{DBLP:conf/nips/FathonyRBZZ18}
Rizal Fathony, Ashkan Rezaei, Mohammad~Ali Bashiri, Xinhua Zhang, and Brian~D.
  Ziebart.
\newblock Distributionally robust graphical models.
\newblock In \emph{NeurIPS}, pages 8354--8365, 2018.

\bibitem[Fournier and Guillin(2015)]{fournier2015rate}
Nicolas Fournier and Arnaud Guillin.
\newblock On the rate of convergence in wasserstein distance of the empirical
  measure.
\newblock \emph{Probability Theory and Related Fields}, 162\penalty0
  (3-4):\penalty0 707--738, 2015.

\bibitem[Frazier et~al.(2009)Frazier, Powell, and
  Dayanik]{frazier2009knowledge}
Peter Frazier, Warren Powell, and Savas Dayanik.
\newblock The knowledge-gradient policy for correlated normal beliefs.
\newblock \emph{INFORMS journal on Computing}, 21\penalty0 (4):\penalty0
  599--613, 2009.

\bibitem[Frazier(2018)]{FrazierBOtut18}
Peter~I. Frazier.
\newblock A tutorial on bayesian optimization.
\newblock \emph{CoRR}, abs/1807.02811, 2018.

\bibitem[Fukumizu et~al.(2007)Fukumizu, Gretton, Sun, and
  Sch{\"{o}}lkopf]{DBLP:conf/nips/FukumizuGSS07}
Kenji Fukumizu, Arthur Gretton, Xiaohai Sun, and Bernhard Sch{\"{o}}lkopf.
\newblock Kernel measures of conditional dependence.
\newblock In \emph{{NIPS}}, pages 489--496. Curran Associates, Inc., 2007.

\bibitem[Gin{\'e} and Nickl(2016)]{gine2016mathematical}
Evarist Gin{\'e} and Richard Nickl.
\newblock \emph{Mathematical foundations of infinite-dimensional statistical
  models}, volume~40.
\newblock Cambridge University Press, 2016.

\bibitem[Greenhill et~al.(2020)Greenhill, Rana, Gupta, Vellanki, and
  Venkatesh]{DBLP:journals/access/GreenhillRGVV20}
Stewart Greenhill, Santu Rana, Sunil Gupta, Pratibha Vellanki, and Svetha
  Venkatesh.
\newblock Bayesian optimization for adaptive experimental design: {A} review.
\newblock \emph{{IEEE} Access}, 8:\penalty0 13937--13948, 2020.
\newblock \doi{10.1109/ACCESS.2020.2966228}.
\newblock URL \url{https://doi.org/10.1109/ACCESS.2020.2966228}.

\bibitem[Gretton et~al.(2012)Gretton, Borgwardt, Rasch, Sch{\"{o}}lkopf, and
  Smola]{DBLP:journals/jmlr/GrettonBRSS12}
Arthur Gretton, Karsten~M. Borgwardt, Malte~J. Rasch, Bernhard Sch{\"{o}}lkopf,
  and Alexander~J. Smola.
\newblock A kernel two-sample test.
\newblock \emph{J. Mach. Learn. Res.}, 13:\penalty0 723--773, 2012.

\bibitem[Gr{\"{u}}new{\"{a}}lder et~al.(2012)Gr{\"{u}}new{\"{a}}lder, Lever,
  Baldassarre, Pontil, and Gretton]{DBLP:conf/icml/GrunewalderLBPG12}
Steffen Gr{\"{u}}new{\"{a}}lder, Guy Lever, Luca Baldassarre, Massimiliano
  Pontil, and Arthur Gretton.
\newblock Modelling transition dynamics in mdps with {RKHS} embeddings.
\newblock In \emph{{ICML}}. icml.cc / Omnipress, 2012.

\bibitem[Gy{\"{o}}rfi et~al.(2002)Gy{\"{o}}rfi, Kohler, Krzyzak, and
  Walk]{DBLP:books/daglib/0035701}
L{\'{a}}szl{\'{o}} Gy{\"{o}}rfi, Michael Kohler, Adam Krzyzak, and Harro Walk.
\newblock \emph{A Distribution-Free Theory of Nonparametric Regression}.
\newblock Springer series in statistics. Springer, 2002.

\bibitem[Ha et~al.(2019)Ha, Rana, Gupta, Nguyen, Tran-The, and
  Venkatesh]{NIPS2019_9350}
Huong Ha, Santu Rana, Sunil Gupta, Thanh Nguyen, Hung Tran-The, and Svetha
  Venkatesh.
\newblock Bayesian optimization with unknown search space.
\newblock In H.~Wallach, H.~Larochelle, A.~Beygelzimer, F.~d\textquotesingle
  Alch\'{e}-Buc, E.~Fox, and R.~Garnett, editors, \emph{Advances in Neural
  Information Processing Systems 32}, pages 11772--11781. Curran Associates,
  Inc., 2019.
\newblock URL
  \url{http://papers.nips.cc/paper/9350-bayesian-optimization-with-unknown-search-space.pdf}.

\bibitem[Hashimoto et~al.(2018)Hashimoto, Srivastava, Namkoong, and
  Liang]{DBLP:conf/icml/HashimotoSNL18}
Tatsunori~B. Hashimoto, Megha Srivastava, Hongseok Namkoong, and Percy Liang.
\newblock Fairness without demographics in repeated loss minimization.
\newblock In \emph{{ICML}}, volume~80 of \emph{Proceedings of Machine Learning
  Research}, pages 1934--1943. {PMLR}, 2018.

\bibitem[Hern{\'{a}}ndez{-}Lobato et~al.(2014)Hern{\'{a}}ndez{-}Lobato,
  Hoffman, and Ghahramani]{DBLP:conf/nips/Hernandez-LobatoHG14}
Jos{\'{e}}~Miguel Hern{\'{a}}ndez{-}Lobato, Matthew~W. Hoffman, and Zoubin
  Ghahramani.
\newblock Predictive entropy search for efficient global optimization of
  black-box functions.
\newblock In \emph{{NIPS}}, pages 918--926, 2014.

\bibitem[Hessel et~al.(2018)Hessel, Modayil, van Hasselt, Schaul, Ostrovski,
  Dabney, Horgan, Piot, Azar, and Silver]{DBLP:conf/aaai/HesselMHSODHPAS18}
Matteo Hessel, Joseph Modayil, Hado van Hasselt, Tom Schaul, Georg Ostrovski,
  Will Dabney, Dan Horgan, Bilal Piot, Mohammad~Gheshlaghi Azar, and David
  Silver.
\newblock Rainbow: Combining improvements in deep reinforcement learning.
\newblock In \emph{{AAAI}}, pages 3215--3222. {AAAI} Press, 2018.

\bibitem[Jacot et~al.(2018)Jacot, Gabriel, and Hongler]{jacot2018neural}
Arthur Jacot, Franck Gabriel, and Cl{\'e}ment Hongler.
\newblock Neural tangent kernel: Convergence and generalization in neural
  networks.
\newblock \emph{arXiv preprint arXiv:1806.07572}, 2018.

\bibitem[Jiang and Li(2015)]{jiang2015doubly}
Nan Jiang and Lihong Li.
\newblock Doubly robust off-policy value evaluation for reinforcement learning,
  2015.

\bibitem[Jiang and Li(2016)]{DBLP:conf/icml/JiangL16}
Nan Jiang and Lihong Li.
\newblock Doubly robust off-policy value evaluation for reinforcement learning.
\newblock In \emph{{ICML}}, volume~48 of \emph{{JMLR} Workshop and Conference
  Proceedings}, pages 652--661. JMLR.org, 2016.

\bibitem[Jin et~al.(2018)Jin, Allen-Zhu, Bubeck, and Jordan]{jin2018qlearning}
Chi Jin, Zeyuan Allen-Zhu, Sebastien Bubeck, and Michael~I. Jordan.
\newblock Is q-learning provably efficient?, 2018.

\bibitem[Jin et~al.(2020{\natexlab{a}})Jin, Yang, Wang, and
  Jordan]{DBLP:conf/colt/JinYWJ20}
Chi Jin, Zhuoran Yang, Zhaoran Wang, and Michael~I. Jordan.
\newblock Provably efficient reinforcement learning with linear function
  approximation.
\newblock In Jacob~D. Abernethy and Shivani Agarwal, editors, \emph{Conference
  on Learning Theory, {COLT} 2020, 9-12 July 2020, Virtual Event [Graz,
  Austria]}, volume 125 of \emph{Proceedings of Machine Learning Research},
  pages 2137--2143. {PMLR}, 2020{\natexlab{a}}.
\newblock URL \url{http://proceedings.mlr.press/v125/jin20a.html}.

\bibitem[Jin et~al.(2021)Jin, Liu, and
  Miryoosefi]{DBLP:journals/corr/abs-2102-00815}
Chi Jin, Qinghua Liu, and Sobhan Miryoosefi.
\newblock Bellman eluder dimension: New rich classes of {RL} problems, and
  sample-efficient algorithms.
\newblock \emph{CoRR}, abs/2102.00815, 2021.
\newblock URL \url{https://arxiv.org/abs/2102.00815}.

\bibitem[Jin et~al.(2020{\natexlab{b}})Jin, Yang, and Wang]{jin2020pessimism}
Ying Jin, Zhuoran Yang, and Zhaoran Wang.
\newblock Is pessimism provably efficient for offline rl?
\newblock \emph{arXiv preprint arXiv:2012.15085}, 2020{\natexlab{b}}.

\bibitem[Jong and Stone(2007)]{DBLP:conf/atal/JongS07}
Nicholas~K. Jong and Peter Stone.
\newblock Model-based function approximation in reinforcement learning.
\newblock In \emph{{AAMAS}}, page~95. {IFAAMAS}, 2007.

\bibitem[Kallus and Uehara(2019)]{kallus2019double}
Nathan Kallus and Masatoshi Uehara.
\newblock Double reinforcement learning for efficient off-policy evaluation in
  markov decision processes, 2019.

\bibitem[Kerkyacharian and Picard(1992)]{kerkyacharian1992density}
G{\'e}rard Kerkyacharian and Dominique Picard.
\newblock Density estimation in besov spaces.
\newblock \emph{Statistics \&amp; probability letters}, 13\penalty0
  (1):\penalty0 15--24, 1992.

\bibitem[Kingma and Ba(2015)]{DBLP:journals/corr/KingmaB14}
Diederik~P. Kingma and Jimmy Ba.
\newblock Adam: {A} method for stochastic optimization.
\newblock In \emph{{ICLR} (Poster)}, 2015.

\bibitem[Krizhevsky et~al.(2012)Krizhevsky, Sutskever, and
  Hinton]{krizhevsky2012imagenet}
Alex Krizhevsky, Ilya Sutskever, and Geoffrey~E Hinton.
\newblock Imagenet classification with deep convolutional neural networks.
\newblock In \emph{Advances in neural information processing systems}, pages
  1097--1105, 2012.

\bibitem[Kuhn et~al.(2019)Kuhn, Esfahani, Nguyen, and
  Shafieezadeh{-}Abadeh]{WasserstainDRO19}
Daniel Kuhn, Peyman~Mohajerin Esfahani, Viet~Anh Nguyen, and Soroosh
  Shafieezadeh{-}Abadeh.
\newblock Wasserstein distributionally robust optimization: Theory and
  applications in machine learning.
\newblock \emph{CoRR}, abs/1908.08729, 2019.

\bibitem[Lagoudakis and Parr(2003)]{DBLP:journals/jmlr/LagoudakisP03}
Michail~G. Lagoudakis and Ronald Parr.
\newblock Least-squares policy iteration.
\newblock \emph{J. Mach. Learn. Res.}, 4:\penalty0 1107--1149, 2003.

\bibitem[Lai and Robbins(1985)]{lai1985asymptotically}
Tze~Leung Lai and Herbert Robbins.
\newblock Asymptotically efficient adaptive allocation rules.
\newblock \emph{Advances in applied mathematics}, 6\penalty0 (1):\penalty0
  4--22, 1985.

\bibitem[Lattimore and Szepesvári(2020)]{lattimore_szepesvari_2020}
Tor Lattimore and Csaba Szepesvári.
\newblock \emph{Bandit Algorithms}.
\newblock Cambridge University Press, 2020.
\newblock \doi{10.1017/9781108571401}.

\bibitem[Le et~al.(2019)Le, Voloshin, and Yue]{DBLP:conf/icml/0002VY19}
Hoang~Minh Le, Cameron Voloshin, and Yisong Yue.
\newblock Batch policy learning under constraints.
\newblock In \emph{{ICML}}, volume~97 of \emph{Proceedings of Machine Learning
  Research}, pages 3703--3712. {PMLR}, 2019.

\bibitem[Lei et~al.(2016)Lei, Ding, and Bi]{DBLP:journals/ijon/LeiDB16}
Yunwen Lei, Lixin Ding, and Yingzhou Bi.
\newblock Local rademacher complexity bounds based on covering numbers.
\newblock \emph{Neurocomputing}, 218:\penalty0 320--330, 2016.

\bibitem[Levine et~al.(2020)Levine, Kumar, Tucker, and Fu]{levine2020offline}
Sergey Levine, Aviral Kumar, George Tucker, and Justin Fu.
\newblock Offline reinforcement learning: Tutorial, review, and perspectives on
  open problems.
\newblock \emph{arXiv preprint arXiv:2005.01643}, 2020.

\bibitem[Li et~al.(2017)Li, Chang, Cheng, Yang, and
  P{\'{o}}czos]{DBLP:conf/nips/LiCCYP17}
Chun{-}Liang Li, Wei{-}Cheng Chang, Yu~Cheng, Yiming Yang, and Barnab{\'{a}}s
  P{\'{o}}czos.
\newblock {MMD} {GAN:} towards deeper understanding of moment matching network.
\newblock In \emph{{NIPS}}, pages 2203--2213, 2017.

\bibitem[Li et~al.(2015)Li, Swersky, and Zemel]{DBLP:conf/icml/LiSZ15}
Yujia Li, Kevin Swersky, and Richard~S. Zemel.
\newblock Generative moment matching networks.
\newblock In \emph{{ICML}}, volume~37 of \emph{{JMLR} Workshop and Conference
  Proceedings}, pages 1718--1727. JMLR.org, 2015.

\bibitem[Liu and Wang(2016)]{DBLP:conf/nips/LiuW16}
Qiang Liu and Dilin Wang.
\newblock Stein variational gradient descent: {A} general purpose bayesian
  inference algorithm.
\newblock In \emph{{NIPS}}, pages 2370--2378, 2016.

\bibitem[Liu et~al.(2018)Liu, Li, Tang, and Zhou]{DBLP:conf/nips/LiuLTZ18}
Qiang Liu, Lihong Li, Ziyang Tang, and Dengyong Zhou.
\newblock Breaking the curse of horizon: Infinite-horizon off-policy
  estimation.
\newblock In \emph{NeurIPS}, pages 5361--5371, 2018.

\bibitem[Lyle et~al.(2019)Lyle, Bellemare, and Castro]{DBLP:conf/aaai/LyleBC19}
Clare Lyle, Marc~G. Bellemare, and Pablo~Samuel Castro.
\newblock A comparative analysis of expected and distributional reinforcement
  learning.
\newblock In \emph{{AAAI}}, pages 4504--4511. {AAAI} Press, 2019.

\bibitem[Mannor et~al.(2004)Mannor, Simester, Sun, and
  Tsitsiklis]{DBLP:conf/icml/MannorSST04}
Shie Mannor, Duncan Simester, Peng Sun, and John~N. Tsitsiklis.
\newblock Bias and variance in value function estimation.
\newblock In \emph{{ICML}}, volume~69 of \emph{{ACM} International Conference
  Proceeding Series}. {ACM}, 2004.

\bibitem[McAllester(1999{\natexlab{a}})]{mcallester1999pac}
David~A McAllester.
\newblock Pac-bayesian model averaging.
\newblock In \emph{Proceedings of the twelfth annual conference on
  Computational learning theory}, pages 164--170, 1999{\natexlab{a}}.

\bibitem[McAllester(1999{\natexlab{b}})]{mcallester1999some}
David~A McAllester.
\newblock Some pac-bayesian theorems.
\newblock \emph{Machine Learning}, 37\penalty0 (3):\penalty0 355--363,
  1999{\natexlab{b}}.

\bibitem[Mei et~al.(2018)Mei, Montanari, and Nguyen]{mei2018mean}
Song Mei, Andrea Montanari, and Phan-Minh Nguyen.
\newblock A mean field view of the landscape of two-layer neural networks.
\newblock \emph{Proceedings of the National Academy of Sciences}, 115\penalty0
  (33):\penalty0 E7665--E7671, 2018.

\bibitem[Minsker(2017)]{minsker2017some}
Stanislav Minsker.
\newblock On some extensions of bernstein’s inequality for self-adjoint
  operators.
\newblock \emph{Statistics \&amp; Probability Letters}, 127:\penalty0 111--119,
  2017.

\bibitem[Mnih et~al.(2015)Mnih, Kavukcuoglu, Silver, Rusu, Veness, Bellemare,
  Graves, Riedmiller, Fidjeland, Ostrovski, et~al.]{mnih2015human}
Volodymyr Mnih, Koray Kavukcuoglu, David Silver, Andrei~A Rusu, Joel Veness,
  Marc~G Bellemare, Alex Graves, Martin Riedmiller, Andreas~K Fidjeland, Georg
  Ostrovski, et~al.
\newblock Human-level control through deep reinforcement learning.
\newblock \emph{nature}, 518\penalty0 (7540):\penalty0 529--533, 2015.

\bibitem[Mockus et~al.(1978)Mockus, Tiesis, and
  Zilinskas]{mockus1978application}
Jonas Mockus, Vytautas Tiesis, and Antanas Zilinskas.
\newblock The application of bayesian methods for seeking the extremum.
\newblock \emph{Towards global optimization}, 2\penalty0 (117-129):\penalty0 2,
  1978.

\bibitem[Mollenhauer et~al.(2020)Mollenhauer, Schuster, Klus, and
  Sch{\"u}tte]{mollenhauer2020singular}
Mattes Mollenhauer, Ingmar Schuster, Stefan Klus, and Christof Sch{\"u}tte.
\newblock Singular value decomposition of operators on reproducing kernel
  hilbert spaces.
\newblock In \emph{Proceedings of the Workshop on Dynamics, Optimization and
  Computation held in honor of the 60th birthday of Michael Dellnitz}, pages
  109--131. Springer, 2020.

\bibitem[Morimura et~al.(2010{\natexlab{a}})Morimura, Sugiyama, Kashima,
  Hachiya, and Tanaka]{DBLP:conf/icml/MorimuraSKHT10}
Tetsuro Morimura, Masashi Sugiyama, Hisashi Kashima, Hirotaka Hachiya, and
  Toshiyuki Tanaka.
\newblock Nonparametric return distribution approximation for reinforcement
  learning.
\newblock In \emph{{ICML}}, pages 799--806. Omnipress, 2010{\natexlab{a}}.

\bibitem[Morimura et~al.(2010{\natexlab{b}})Morimura, Sugiyama, Kashima,
  Hachiya, and Tanaka]{DBLP:conf/uai/MorimuraSKHT10}
Tetsuro Morimura, Masashi Sugiyama, Hisashi Kashima, Hirotaka Hachiya, and
  Toshiyuki Tanaka.
\newblock Parametric return density estimation for reinforcement learning.
\newblock In \emph{{UAI}}, pages 368--375. {AUAI} Press, 2010{\natexlab{b}}.

\bibitem[Munos(2003)]{DBLP:conf/icml/Munos03}
R{\'{e}}mi Munos.
\newblock Error bounds for approximate policy iteration.
\newblock In \emph{{ICML}}, pages 560--567. {AAAI} Press, 2003.

\bibitem[Munos and Szepesv{\'{a}}ri(2008)]{DBLP:journals/jmlr/MunosS08}
R{\'{e}}mi Munos and Csaba Szepesv{\'{a}}ri.
\newblock Finite-time bounds for fitted value iteration.
\newblock \emph{J. Mach. Learn. Res.}, 9:\penalty0 815--857, 2008.

\bibitem[Nachum et~al.(2019{\natexlab{a}})Nachum, Chow, Dai, and
  Li]{nachum2019dualdice}
Ofir Nachum, Yinlam Chow, Bo~Dai, and Lihong Li.
\newblock Dualdice: Behavior-agnostic estimation of discounted stationary
  distribution corrections, 2019{\natexlab{a}}.

\bibitem[Nachum et~al.(2019{\natexlab{b}})Nachum, Dai, Kostrikov, Chow, Li, and
  Schuurmans]{Nachum2019AlgaeDICEPG}
Ofir Nachum, Bo~Dai, Ilya Kostrikov, Yinlam Chow, Lihong Li, and Dale
  Schuurmans.
\newblock Algaedice: Policy gradient from arbitrary experience.
\newblock \emph{ArXiv}, abs/1912.02074, 2019{\natexlab{b}}.

\bibitem[Namkoong and Duchi(2016)]{NamkoongD16}
Hongseok Namkoong and John~C. Duchi.
\newblock Stochastic gradient methods for distributionally robust optimization
  with f-divergences.
\newblock In \emph{{NIPS}}, pages 2208--2216, 2016.

\bibitem[Namkoong and Duchi(2017)]{nipsNamkoongD17}
Hongseok Namkoong and John~C. Duchi.
\newblock Variance-based regularization with convex objectives.
\newblock In \emph{{NIPS}}, pages 2971--2980, 2017.

\bibitem[Nguyen(2021)]{nguyen2021proof}
Quynh Nguyen.
\newblock On the proof of global convergence of gradient descent for deep relu
  networks with linear widths.
\newblock \emph{arXiv preprint arXiv:2101.09612}, 2021.

\bibitem[Nguyen et~al.(2020)Nguyen, Gupta, Ha, Rana, and
  Venkatesh]{pmlr-v108-nguyen20a}
Thanh Nguyen, Sunil Gupta, Huong Ha, Santu Rana, and Svetha Venkatesh.
\newblock Distributionally robust bayesian quadrature optimization.
\newblock In Silvia Chiappa and Roberto Calandra, editors, \emph{Proceedings of
  the Twenty Third International Conference on Artificial Intelligence and
  Statistics}, volume 108 of \emph{Proceedings of Machine Learning Research},
  pages 1921--1931. PMLR, 26--28 Aug 2020.
\newblock URL \url{http://proceedings.mlr.press/v108/nguyen20a.html}.

\bibitem[Nguyen-Tang et~al.(2021{\natexlab{a}})Nguyen-Tang, Gupta, Nguyen, and
  Venkatesh]{nguyen2021offline}
Thanh Nguyen-Tang, Sunil Gupta, A~Tuan Nguyen, and Svetha Venkatesh.
\newblock Offline neural contextual bandits: Pessimism, optimization and
  generalization.
\newblock \emph{arXiv preprint arXiv:2111.13807}, 2021{\natexlab{a}}.

\bibitem[Nguyen-Tang et~al.(2021{\natexlab{b}})Nguyen-Tang, Gupta, Tran-The,
  and Venkatesh]{nguyentang2021sample}
Thanh Nguyen-Tang, Sunil Gupta, Hung Tran-The, and Svetha Venkatesh.
\newblock Sample complexity of offline reinforcement learning with deep relu
  networks, 2021{\natexlab{b}}.

\bibitem[Nguyen-Tang et~al.(2021{\natexlab{c}})Nguyen-Tang, Gupta, and
  Venkatesh]{Nguyen-Tang_Gupta_Venkatesh_2021}
Thanh Nguyen-Tang, Sunil Gupta, and Svetha Venkatesh.
\newblock Distributional reinforcement learning via moment matching.
\newblock \emph{Proceedings of the AAAI Conference on Artificial Intelligence},
  35\penalty0 (10):\penalty0 9144--9152, May 2021{\natexlab{c}}.
\newblock URL \url{https://ojs.aaai.org/index.php/AAAI/article/view/17104}.

\bibitem[Nickl and P{\"o}tscher(2007)]{Nickl2007BracketingME}
R.~Nickl and B.~M. P{\"o}tscher.
\newblock Bracketing metric entropy rates and empirical central limit theorems
  for function classes of besov- and sobolev-type.
\newblock \emph{Journal of Theoretical Probability}, 20:\penalty0 177--199,
  2007.

\bibitem[Oates and Sullivan(2019)]{Oates19}
Chris~J. Oates and Timothy~John Sullivan.
\newblock A modern retrospective on probabilistic numerics.
\newblock \emph{Statistics and Computing}, 29\penalty0 (6):\penalty0
  1335--1351, 2019.

\bibitem[Oates et~al.(2016)Oates, Niederer, Lee, Briol, and
  Girolami]{oates2016probabilistic}
Chris.~J. Oates, Steven Niederer, Angela Lee, François-Xavier Briol, and Mark
  Girolami.
\newblock Probabilistic models for integration error in the assessment of
  functional cardiac models, 2016.

\bibitem[O'Hagan(1991)]{o1991bayes}
Anthony O'Hagan.
\newblock Bayes--hermite quadrature.
\newblock \emph{Journal of statistical planning and inference}, 29\penalty0
  (3):\penalty0 245--260, 1991.

\bibitem[Pearce and Branke(2017)]{DBLP:conf/wsc/PearceB17a}
Michael Pearce and J{\"{u}}rgen Branke.
\newblock Bayesian simulation optimization with input uncertainty.
\newblock In \emph{{WSC}}, pages 2268--2278. {IEEE}, 2017.

\bibitem[Pedregosa et~al.(2012)Pedregosa, Varoquaux, Gramfort, Michel, Thirion,
  Grisel, Blondel, Müller, Nothman, Louppe, Prettenhofer, Weiss, Dubourg,
  Vanderplas, Passos, Cournapeau, Brucher, Perrot, and Édouard
  Duchesnay]{pedregosa2012scikitlearn}
Fabian Pedregosa, Gaël Varoquaux, Alexandre Gramfort, Vincent Michel, Bertrand
  Thirion, Olivier Grisel, Mathieu Blondel, Andreas Müller, Joel Nothman,
  Gilles Louppe, Peter Prettenhofer, Ron Weiss, Vincent Dubourg, Jake
  Vanderplas, Alexandre Passos, David Cournapeau, Matthieu Brucher, Matthieu
  Perrot, and Édouard Duchesnay.
\newblock Scikit-learn: Machine learning in python, 2012.

\bibitem[Precup et~al.(2000)Precup, Sutton, and Singh]{first_is}
Doina Precup, Richard~S. Sutton, and Satinder~P. Singh.
\newblock Eligibility traces for off-policy policy evaluation.
\newblock In \emph{Proceedings of the Seventeenth International Conference on
  Machine Learning}, ICML ’00, page 759–766, San Francisco, CA, USA, 2000.
  Morgan Kaufmann Publishers Inc.
\newblock ISBN 1558607072.

\bibitem[Puterman(2014)]{puterman2014markov}
Martin~L Puterman.
\newblock \emph{Markov decision processes: discrete stochastic dynamic
  programming}.
\newblock John Wiley \& Sons, 2014.

\bibitem[Rahimian and Mehrotra(2019)]{DRO_review}
Hamed Rahimian and Sanjay Mehrotra.
\newblock Distributionally robust optimization: {A} review.
\newblock \emph{CoRR}, abs/1908.05659, 2019.

\bibitem[Rashidinejad et~al.(2021)Rashidinejad, Zhu, Ma, Jiao, and
  Russell]{rashidinejad2021bridging}
Paria Rashidinejad, Banghua Zhu, Cong Ma, Jiantao Jiao, and Stuart Russell.
\newblock Bridging offline reinforcement learning and imitation learning: A
  tale of pessimism.
\newblock \emph{arXiv preprint arXiv:2103.12021}, 2021.

\bibitem[Rasmussen and Ghahramani(2002)]{nipsRasmussenG02}
Carl~Edward Rasmussen and Zoubin Ghahramani.
\newblock Bayesian monte carlo.
\newblock In \emph{{NIPS}}, pages 489--496. {MIT} Press, 2002.

\bibitem[Rasmussen and Williams(2005)]{Rasmussen:2005:GPM:1162254}
Carl~Edward Rasmussen and Christopher K.~I. Williams.
\newblock \emph{Gaussian Processes for Machine Learning (Adaptive Computation
  and Machine Learning)}.
\newblock The MIT Press, 2005.
\newblock ISBN 026218253X.

\bibitem[Rasmussen and Williams(2006)]{RasmussenW06}
Carl~Edward Rasmussen and Christopher K.~I. Williams.
\newblock \emph{Gaussian processes for machine learning}.
\newblock Adaptive computation and machine learning. {MIT} Press, 2006.

\bibitem[Rebeschini(2019)]{afol_lecture2}
Patrick Rebeschini.
\newblock {Oxford Algorithmic Foundations of Learning, Lecture Notes: Maximal
  Inequalities and Rademacher Complexity}, 2019.
\newblock URL:
  \url{http://www.stats.ox.ac.uk/~rebeschi/teaching/AFoL/20/material/lecture02.pdf}.
  Last visited on Sep. 14, 2020.

\bibitem[Rebeschini(2020)]{Patrick_aFoL}
Patrick Rebeschini.
\newblock Stochastic multi-armed bandit problem and algorithms.
\newblock
  \url{http://www.stats.ox.ac.uk/~rebeschi/teaching/AFoL/20/material/lecture15.pdf},
  2020.
\newblock Accessed: {23/06/2021}.

\bibitem[Roberts et~al.(2021)Roberts, Yaida, and Hanin]{roberts2021principles}
Daniel~A. Roberts, Sho Yaida, and Boris Hanin.
\newblock The principles of deep learning theory, 2021.

\bibitem[Rowland et~al.(2018)Rowland, Bellemare, Dabney, Munos, and
  Teh]{DBLP:conf/aistats/RowlandBDMT18}
Mark Rowland, Marc~G. Bellemare, Will Dabney, R{\'{e}}mi Munos, and Yee~Whye
  Teh.
\newblock An analysis of categorical distributional reinforcement learning.
\newblock In \emph{{AISTATS}}, volume~84 of \emph{Proceedings of Machine
  Learning Research}, pages 29--37. {PMLR}, 2018.

\bibitem[Rowland et~al.(2019)Rowland, Dadashi, Kumar, Munos, Bellemare, and
  Dabney]{DBLP:conf/icml/RowlandDKMBD19}
Mark Rowland, Robert Dadashi, Saurabh Kumar, R{\'{e}}mi Munos, Marc~G.
  Bellemare, and Will Dabney.
\newblock Statistics and samples in distributional reinforcement learning.
\newblock In \emph{{ICML}}, volume~97 of \emph{Proceedings of Machine Learning
  Research}, pages 5528--5536. {PMLR}, 2019.

\bibitem[Rummery and Niranjan(1994)]{Rummery94on-lineq-learning}
G.~A. Rummery and M.~Niranjan.
\newblock On-line q-learning using connectionist systems.
\newblock Technical report, 1994.

\bibitem[Russo and Roy(2014)]{DBLP:journals/mor/RussoR14}
Daniel Russo and Benjamin~Van Roy.
\newblock Learning to optimize via posterior sampling.
\newblock \emph{Math. Oper. Res.}, 39\penalty0 (4):\penalty0 1221--1243, 2014.

\bibitem[Sawano(2018)]{besovbook}
Yoshihiro Sawano.
\newblock \emph{Theory of Besov Spaces}, volume~56.
\newblock Springer, 2018.
\newblock ISBN 978-981-13-0835-2.

\bibitem[Scarlett et~al.(2017)Scarlett, Bogunovic, and
  Cevher]{scarlett2017lower}
Jonathan Scarlett, Ilijia Bogunovic, and Volkan Cevher.
\newblock Lower bounds on regret for noisy gaussian process bandit
  optimization, 2017.

\bibitem[Schaul et~al.(2016)Schaul, Quan, Antonoglou, and
  Silver]{DBLP:journals/corr/SchaulQAS15}
Tom Schaul, John Quan, Ioannis Antonoglou, and David Silver.
\newblock Prioritized experience replay.
\newblock In \emph{{ICLR} (Poster)}, 2016.

\bibitem[Sejdinovic and Gretton(2012)]{Sejdinovic2012WhatIA}
D.~Sejdinovic and A.~Gretton.
\newblock What is an rkhs?
\newblock 2012.

\bibitem[Shen et~al.(2013)Shen, Tobia, Sommer, and
  Obermayer]{DBLP:journals/corr/ShenTSO13}
Yun Shen, Michael~J. Tobia, Tobias Sommer, and Klaus Obermayer.
\newblock Risk-sensitive reinforcement learning.
\newblock \emph{CoRR}, abs/1311.2097, 2013.
\newblock URL \url{http://arxiv.org/abs/1311.2097}.

\bibitem[Smirnova et~al.(2019)Smirnova, Dohmatob, and Mary]{DRRL}
Elena Smirnova, Elvis Dohmatob, and J{\'{e}}r{\'{e}}mie Mary.
\newblock Distributionally robust reinforcement learning.
\newblock \emph{CoRR}, abs/1902.08708, 2019.

\bibitem[Smola et~al.(2007)Smola, Gretton, Song, and
  Sch{\"{o}}lkopf]{DBLP:conf/alt/SmolaGSS07}
Alexander~J. Smola, Arthur Gretton, Le~Song, and Bernhard Sch{\"{o}}lkopf.
\newblock A hilbert space embedding for distributions.
\newblock In \emph{{ALT}}, volume 4754 of \emph{Lecture Notes in Computer
  Science}, pages 13--31. Springer, 2007.

\bibitem[Srinivas et~al.(2010)Srinivas, Krause, Kakade, and
  Seeger]{DBLP:conf/icml/SrinivasKKS10}
Niranjan Srinivas, Andreas Krause, Sham~M. Kakade, and Matthias~W. Seeger.
\newblock Gaussian process optimization in the bandit setting: No regret and
  experimental design.
\newblock In \emph{{ICML}}, pages 1015--1022. Omnipress, 2010.

\bibitem[Sriperumbudur et~al.(2009)Sriperumbudur, Fukumizu, Gretton, Lanckriet,
  and Sch{\"{o}}lkopf]{DBLP:conf/nips/SriperumbudurFGLS09}
Bharath~K. Sriperumbudur, Kenji Fukumizu, Arthur Gretton, Gert R.~G. Lanckriet,
  and Bernhard Sch{\"{o}}lkopf.
\newblock Kernel choice and classifiability for {RKHS} embeddings of
  probability distributions.
\newblock In \emph{{NIPS}}, pages 1750--1758. Curran Associates, Inc., 2009.

\bibitem[Staib and Jegelka(2019)]{MMD_DRO19}
Matthew Staib and Stefanie Jegelka.
\newblock Distributionally robust optimization and generalization in kernel
  methods.
\newblock In \emph{NeurIPS}, pages 9131--9141, 2019.

\bibitem[Staib et~al.(2019)Staib, Wilder, and
  Jegelka]{DBLP:conf/aistats/StaibWJ19}
Matthew Staib, Bryan Wilder, and Stefanie Jegelka.
\newblock Distributionally robust submodular maximization.
\newblock In \emph{{AISTATS}}, volume~89 of \emph{Proceedings of Machine
  Learning Research}, pages 506--516. {PMLR}, 2019.

\bibitem[Sun and Boyd(2018)]{sun2018distributional}
Qingyun Sun and Stephen Boyd.
\newblock Distributional robust kelly gambling, 2018.

\bibitem[Sun(2019)]{sun2019optimization}
Ruoyu Sun.
\newblock Optimization for deep learning: theory and algorithms.
\newblock \emph{arXiv preprint arXiv:1912.08957}, 2019.

\bibitem[Sutton(1988)]{DBLP:journals/ml/Sutton88}
Richard~S. Sutton.
\newblock Learning to predict by the methods of temporal differences.
\newblock \emph{Mach. Learn.}, 3:\penalty0 9--44, 1988.

\bibitem[Sutton and Barto(2018{\natexlab{a}})]{10.5555/3312046}
Richard~S. Sutton and Andrew~G. Barto.
\newblock \emph{Reinforcement Learning: An Introduction}.
\newblock A Bradford Book, Cambridge, MA, USA, 2018{\natexlab{a}}.
\newblock ISBN 0262039249.

\bibitem[Sutton and Barto(2018{\natexlab{b}})]{introRL}
Richard~S. Sutton and Andrew~G. Barto.
\newblock \emph{Reinforcement Learning: An Introduction}.
\newblock A Bradford Book, Cambridge, MA, USA, 2018{\natexlab{b}}.
\newblock ISBN 0262039249.

\bibitem[Sutton and Barto(2018{\natexlab{c}})]{sutton2018reinforcement}
Richard~S Sutton and Andrew~G Barto.
\newblock \emph{Reinforcement learning: An introduction}.
\newblock MIT press, 2018{\natexlab{c}}.

\bibitem[Sutton et~al.(1998)Sutton, Barto, et~al.]{sutton1998introduction}
Richard~S Sutton, Andrew~G Barto, et~al.
\newblock \emph{Introduction to reinforcement learning}, volume 135.
\newblock MIT press Cambridge, 1998.

\bibitem[Suzuki(2018)]{suzuki2018adaptivity}
Taiji Suzuki.
\newblock Adaptivity of deep relu network for learning in besov and mixed
  smooth besov spaces: optimal rate and curse of dimensionality.
\newblock \emph{arXiv preprint arXiv:1810.08033}, 2018.

\bibitem[Swersky et~al.(2013)Swersky, Snoek, and
  Adams]{DBLP:conf/nips/SwerskySA13}
Kevin Swersky, Jasper Snoek, and Ryan~Prescott Adams.
\newblock Multi-task bayesian optimization.
\newblock In \emph{{NIPS}}, pages 2004--2012, 2013.

\bibitem[Sz{\'e}kely(2003)]{szekely2003statistics}
G{\'a}bor~J Sz{\'e}kely.
\newblock E-statistics: The energy of statistical samples.
\newblock \emph{Bowling Green State University, Department of Mathematics and
  Statistics Technical Report}, 3\penalty0 (05):\penalty0 1--18, 2003.

\bibitem[Tang~Nguyen and Choi(2019)]{tang2019markov}
Thanh Tang~Nguyen and Jaesik Choi.
\newblock Markov information bottleneck to improve information flow in
  stochastic neural networks.
\newblock \emph{Entropy}, 21\penalty0 (10):\penalty0 976, 2019.

\bibitem[Telgarsk(2021)]{dlt_lec}
Matus Telgarsk.
\newblock Deep learning theory lecture notes, 02 2021.
\newblock URL \url{https://mjt.cs.illinois.edu/dlt/}.

\bibitem[Thomas and Brunskill(2016)]{thomas2016data}
Philip Thomas and Emma Brunskill.
\newblock Data-efficient off-policy policy evaluation for reinforcement
  learning.
\newblock In \emph{International Conference on Machine Learning}, pages
  2139--2148, 2016.

\bibitem[Tishby and Zaslavsky(2015)]{tishby2015deep}
Naftali Tishby and Noga Zaslavsky.
\newblock Deep learning and the information bottleneck principle.
\newblock In \emph{2015 IEEE Information Theory Workshop (ITW)}, pages 1--5.
  IEEE, 2015.

\bibitem[Tosatto et~al.(2017)Tosatto, Pirotta, D'Eramo, and
  Restelli]{DBLP:conf/icml/TosattoPDR17}
Samuele Tosatto, Matteo Pirotta, Carlo D'Eramo, and Marcello Restelli.
\newblock Boosted fitted q-iteration.
\newblock In \emph{{ICML}}, volume~70 of \emph{Proceedings of Machine Learning
  Research}, pages 3434--3443. {PMLR}, 2017.

\bibitem[Toscano{-}Palmerin and Frazier(2018)]{Toscano_IntegralBO_18}
Saul Toscano{-}Palmerin and Peter~I. Frazier.
\newblock Bayesian optimization with expensive integrands.
\newblock \emph{CoRR}, abs/1803.08661, 2018.

\bibitem[Tran-The et~al.(2021)Tran-The, Gupta, Nguyen-Tang, Rana, and
  Venkatesh]{tran2021combining}
Hung Tran-The, Sunil Gupta, Thanh Nguyen-Tang, Santu Rana, and Svetha
  Venkatesh.
\newblock Combining online learning and offline learning for contextual bandits
  with deficient support.
\newblock \emph{arXiv preprint arXiv:2107.11533}, 2021.

\bibitem[Triebel(1983)]{Triebel1983TheoryOF}
H.~Triebel.
\newblock Theory of function spaces.
\newblock 1983.

\bibitem[Tropp et~al.(2011)]{tropp2011freedman}
Joel Tropp et~al.
\newblock Freedman's inequality for matrix martingales.
\newblock \emph{Electronic Communications in Probability}, 16:\penalty0
  262--270, 2011.

\bibitem[Tropp(2015)]{tropp2015introduction}
Joel~A Tropp.
\newblock An introduction to matrix concentration inequalities.
\newblock \emph{arXiv preprint arXiv:1501.01571}, 2015.

\bibitem[van Hasselt et~al.(2016)van Hasselt, Guez, and
  Silver]{DBLP:conf/aaai/HasseltGS16}
Hado van Hasselt, Arthur Guez, and David Silver.
\newblock Deep reinforcement learning with double q-learning.
\newblock In \emph{{AAAI}}, pages 2094--2100. {AAAI} Press, 2016.

\bibitem[Vapnik(2013)]{vapnik2013nature}
Vladimir Vapnik.
\newblock \emph{The nature of statistical learning theory}.
\newblock Springer science \& business media, 2013.

\bibitem[Voloshin et~al.(2019)Voloshin, Le, Jiang, and
  Yue]{voloshin2019empirical}
Cameron Voloshin, Hoang~M Le, Nan Jiang, and Yisong Yue.
\newblock Empirical study of off-policy policy evaluation for reinforcement
  learning.
\newblock \emph{arXiv preprint arXiv:1911.06854}, 2019.

\bibitem[Wang et~al.(2020)Wang, Salakhutdinov, and
  Yang]{DBLP:journals/corr/abs-2005-10804}
Ruosong Wang, Ruslan Salakhutdinov, and Lin~F. Yang.
\newblock Provably efficient reinforcement learning with general value function
  approximation.
\newblock \emph{CoRR}, abs/2005.10804, 2020.
\newblock URL \url{https://arxiv.org/abs/2005.10804}.

\bibitem[Wang et~al.(2019)Wang, Wang, Du, and
  Krishnamurthy]{DBLP:journals/corr/abs-1912-04136}
Yining Wang, Ruosong Wang, Simon~S. Du, and Akshay Krishnamurthy.
\newblock Optimism in reinforcement learning with generalized linear function
  approximation.
\newblock \emph{CoRR}, abs/1912.04136, 2019.
\newblock URL \url{http://arxiv.org/abs/1912.04136}.

\bibitem[Wang et~al.(2016)Wang, Schaul, Hessel, van Hasselt, Lanctot, and
  de~Freitas]{DBLP:conf/icml/WangSHHLF16}
Ziyu Wang, Tom Schaul, Matteo Hessel, Hado van Hasselt, Marc Lanctot, and Nando
  de~Freitas.
\newblock Dueling network architectures for deep reinforcement learning.
\newblock In \emph{{ICML}}, volume~48 of \emph{{JMLR} Workshop and Conference
  Proceedings}, pages 1995--2003. JMLR.org, 2016.

\bibitem[Watkins and Dayan(1992)]{Watkins92q-learning}
Christopher J. C.~H. Watkins and Peter Dayan.
\newblock Q-learning.
\newblock In \emph{Machine Learning}, pages 279--292, 1992.

\bibitem[Welling(2009)]{DBLP:conf/icml/Welling09}
Max Welling.
\newblock Herding dynamical weights to learn.
\newblock In \emph{{ICML}}, volume 382 of \emph{{ACM} International Conference
  Proceeding Series}, pages 1121--1128. {ACM}, 2009.

\bibitem[Williams(2000)]{Williams:2000:SDC:932015}
Brian~Jonathan Williams.
\newblock \emph{Sequential Design of Computer Experiments to Minimize
  Integrated Response Functions}.
\newblock PhD thesis, 2000.
\newblock AAI9983009.

\bibitem[Xie et~al.(2019)Xie, Ma, and Wang]{xie2019optimal}
Tengyang Xie, Yifei Ma, and Yu-Xiang Wang.
\newblock Towards optimal off-policy evaluation for reinforcement learning with
  marginalized importance sampling, 2019.

\bibitem[Xu and Mannor(2010)]{nipsXuM10}
Huan Xu and Shie Mannor.
\newblock Distributionally robust markov decision processes.
\newblock In \emph{{NIPS}}, pages 2505--2513. Curran Associates, Inc., 2010.

\bibitem[Yang et~al.(2019{\natexlab{a}})Yang, Zhao, Lin, Qin, Bian, and
  Liu]{yang2019fully}
Derek Yang, Li~Zhao, Zichuan Lin, Tao Qin, Jiang Bian, and Tie-Yan Liu.
\newblock Fully parameterized quantile function for distributional
  reinforcement learning.
\newblock In \emph{Advances in Neural Information Processing Systems}, pages
  6190--6199, 2019{\natexlab{a}}.

\bibitem[Yang et~al.(2019{\natexlab{b}})Yang, Xie, and
  Wang]{DBLP:journals/corr/abs-1901-00137}
Zhuoran Yang, Yuchen Xie, and Zhaoran Wang.
\newblock A theoretical analysis of deep q-learning.
\newblock \emph{CoRR}, abs/1901.00137, 2019{\natexlab{b}}.

\bibitem[Yann et~al.(1998)Yann, Leon, Yoshua, and Patrick]{LeCun98}
LeCun Yann, Bottou Leon, Bengio Yoshua, and Haffner Patrick.
\newblock Gradient-based learning applied to document recognition.
\newblock \emph{Proceedings of the IEEE}, 86\penalty0 (11):\penalty0
  2278–2324, 1998.

\bibitem[Yin and Wang(2020)]{DBLP:conf/aistats/YinW20}
Ming Yin and Yu{-}Xiang Wang.
\newblock Asymptotically efficient off-policy evaluation for tabular
  reinforcement learning.
\newblock In \emph{{AISTATS}}, volume 108 of \emph{Proceedings of Machine
  Learning Research}, pages 3948--3958. {PMLR}, 2020.

\bibitem[Yin and Wang(2021)]{yin2021characterizing}
Ming Yin and Yu-Xiang Wang.
\newblock Characterizing uniform convergence in offline policy evaluation via
  model-based approach: Offline learning, task-agnostic and reward-free, 2021.

\bibitem[Yin et~al.(2021)Yin, Bai, and Wang]{DBLP:conf/aistats/YinBW21}
Ming Yin, Yu~Bai, and Yu{-}Xiang Wang.
\newblock Near-optimal provable uniform convergence in offline policy
  evaluation for reinforcement learning.
\newblock In Arindam Banerjee and Kenji Fukumizu, editors, \emph{The 24th
  International Conference on Artificial Intelligence and Statistics, {AISTATS}
  2021, April 13-15, 2021, Virtual Event}, volume 130 of \emph{Proceedings of
  Machine Learning Research}, pages 1567--1575. {PMLR}, 2021.
\newblock URL \url{http://proceedings.mlr.press/v130/yin21a.html}.

\bibitem[Zhang et~al.(2016)Zhang, Bengio, Hardt, Recht, and
  Vinyals]{zhang2016understanding}
Chiyuan Zhang, Samy Bengio, Moritz Hardt, Benjamin Recht, and Oriol Vinyals.
\newblock Understanding deep learning requires rethinking generalization.
\newblock \emph{arXiv preprint arXiv:1611.03530}, 2016.

\bibitem[Zhang et~al.(2020{\natexlab{a}})Zhang, Dai, Li, and
  Schuurmans]{Zhang2020GenDICEGO}
Ruiyi Zhang, Bo~Dai, Lihong Li, and Dale Schuurmans.
\newblock Gendice: Generalized offline estimation of stationary values.
\newblock \emph{ArXiv}, abs/2002.09072, 2020{\natexlab{a}}.

\bibitem[Zhang et~al.(2020{\natexlab{b}})Zhang, Liu, and
  Whiteson]{Zhang2020GradientDICERG}
Shangtong Zhang, Bo~Liu, and Shimon Whiteson.
\newblock Gradientdice: Rethinking generalized offline estimation of stationary
  values.
\newblock \emph{ArXiv}, abs/2001.11113, 2020{\natexlab{b}}.

\bibitem[Zhu et~al.(2019)Zhu, Li, Wang, Gong, and
  Yang]{DBLP:conf/aistats/Zhu0WGY19}
Dixian Zhu, Zhe Li, Xiaoyu Wang, Boqing Gong, and Tianbao Yang.
\newblock A robust zero-sum game framework for pool-based active learning.
\newblock In \emph{{AISTATS}}, volume~89 of \emph{Proceedings of Machine
  Learning Research}, pages 517--526. {PMLR}, 2019.

\end{thebibliography}
\textbf{Every reasonable effort has been made to acknowledge the owners of copyright material. I would be pleased to hear from any copyright owner who has been omitted or incorrectly acknowledged.}

% \begin{center}
% \textemdash \textemdash \textemdash \textemdash \textemdash \textemdash \textemdash \textemdash \textemdash \textemdash \textemdash \textemdash \textemdash \textemdash \textemdash \textemdash \textemdash \textemdash \textemdash \textemdash \textemdash \textemdash \textemdash \textemdash \textemdash \textemdash \textendash \textemdash \textemdash \textemdash \textemdash \textemdash \textemdash \textemdash \textemdash \textemdash \textemdash{}
% \par\end{center}

% \begin{center}
% \textbf{Copyright Information}
% \par\end{center}
% \noindent Chapter~\ref{chap:three} 
% \begin{figure}[!htb]
%     % \centering
%     \includegraphics[width=\textwidth]{}
% \end{figure}
% \begin{figure}[!htb]
%     \centering
%     \includegraphics[width=\textwidth]{}
% \end{figure}

% % \newpage 
% % \noindent Chapter~\ref{chap:four}
% \begin{figure}[!htb]
%     \centering
%     \includegraphics[width=\textwidth]{}
% \end{figure}
% \begin{figure}[!htb]
%     \centering
%     \includegraphics[width=\textwidth]{}
% \end{figure}

% \begin{lyxlist}{00.00.0000}
% \item [{Chapter~\ref{chap:Stabilisation-Data-Driven}}] Springer License
% Number 4142900050558
% \end{lyxlist}
% \vspace{-3cc}

% \begin{lyxlist}{00.00.0000}
% \item [{Chapter~\ref{chap:Stabilisation-Pattern-Discovery}}] Springer
% License Number 4142890990920
% \end{lyxlist}

\end{document}